\documentclass{article}

\usepackage{microtype}
\usepackage{graphicx}
\usepackage{subfigure}
\usepackage{booktabs} %
\usepackage{subcaption} %

\usepackage{xurl}
\usepackage{hyperref}
\usepackage{multirow}
\usepackage{float}
\usepackage[dvipsnames]{xcolor}

\newcommand{\promptlength}{n}

\usepackage[accepted]{icml2026}

\usepackage{amsmath}
\usepackage{amssymb}
\usepackage{mathtools}
\usepackage{amsthm}

\usepackage[capitalize,noabbrev]{cleveref}

\theoremstyle{plain}
\newtheorem{theorem}{Theorem}[section]

\theoremstyle{definition}

\theoremstyle{remark}

\usepackage{rotating}

\icmltitlerunning{How Few-Shot Examples Add Up}
\usepackage{tikz}
\usetikzlibrary{calc}
\begin{document}

\twocolumn[
\icmltitle{How Few-Shot Examples Add Up: A Causal Decomposition of Function Vectors in In-Context Learning}

\begin{icmlauthorlist}
\icmlauthor{Entang Wang}{yyy}
\icmlauthor{Yiwei Wang}{yyy}
\icmlauthor{Aleksandra Bakalova}{yyy}
\icmlauthor{Michael Hahn}{yyy}
\end{icmlauthorlist}

\icmlaffiliation{yyy}{Saarland Informatics Campus, Saarland Univesity, Saarbr{\"u}cken, Germany}
\icmlcorrespondingauthor{Entang Wang}{ewang@lst.uni-saarland.de}
\icmlcorrespondingauthor{Michael Hahn}{mhahn@lst.uni-saarland.de}

\icmlkeywords{Machine Learning, ICML}

\vskip 0.3in
]

\printAffiliationsAndNotice{}  %

\begin{abstract}
In-context learning (ICL) excels at new tasks from minimal examples, yet we still lack a mechanistic explanation of how few-shot prompts shape a model’s function vector (FV)--a causal activation direction that drives task behavior on the ICL query. Across tasks and models, an $n$-shot FV is well-approximated by a linear combination of example-level sub-FVs, suggesting additive and composable contributions from individual demonstrations. Beyond additivity, we show that models contextualize individual examples' representations based on prior examples to adaptively reweight which demonstrations dominate the FV: attention shifts toward examples that are more informative and less ambiguous under the context. Finally, a causal decomposition separates Query–Key routing from Value updates, finding that contextualization’s most consistent contributions to FV quality arise from Query–Key alignment--particularly in ambiguous settings--while Value-mediated effects are more heterogeneous. Together, these results unify additive superposition with context-dependent attention reweighting into a mechanistic, testable account of how few-shot prompts implement tasks.
\end{abstract}

\begin{figure*}[t!]
    \centering
    \includegraphics[width=1.0\linewidth]{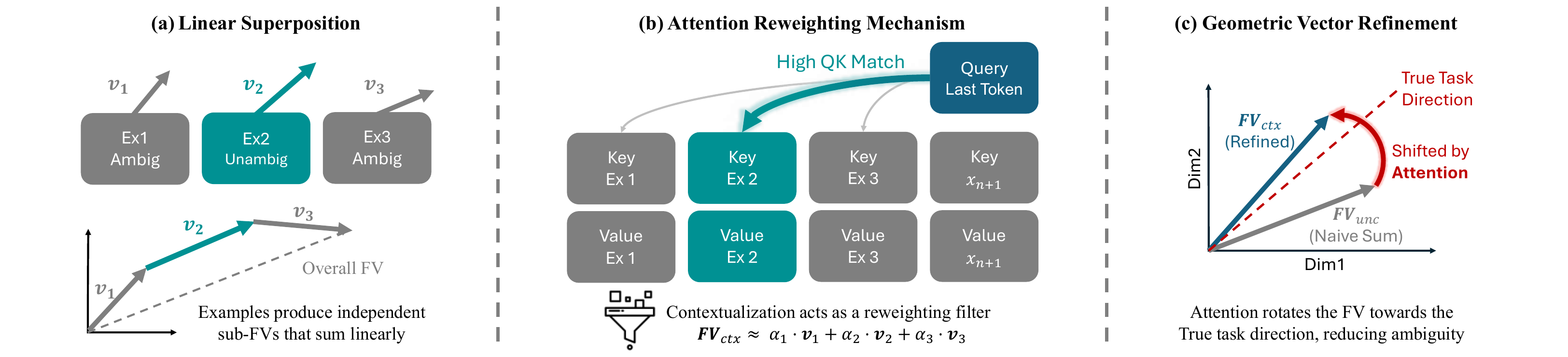}
    \caption{\textbf{Overview of FV formation in in-context learning.}
    (a) \textbf{Linear superposition:} the \(n\)-shot function vector (FV) is approximated as a weighted sum of example-level sub-FVs.
    (b) \textbf{Attention reweighting:} contextualization modulates attention over demonstrations, changing the weights assigned to different sub-FVs.
    (c) \textbf{Geometric refinement:} contextualization further improves FV quality by altering the FV direction via pathway-specific updates, most consistently through Query--Key routing.}
    \label{fig:main_theme}
\end{figure*}

\section{Introduction}
As one of the most remarkable capabilities of Large Language Models (LLMs), in-context learning (ICL) \cite{brown2020language,wei2022emergent} enables models to adapt to and perform new tasks at inference time using only a few demonstrations: given a small number of demonstrations, an LLM can generalize to novel test inputs, 
a capability that has been leveraged in diverse applications such as reasoning  \cite{zhou2023least},
translation \cite{agrawal2023context}, 
and self-correction  \cite{pourreza2023sql}. This paradigm, which obviates the need for parameter updates or task-specific fine-tuning, offers exceptional flexibility and efficiency. Consequently, understanding the underlying mechanisms of ICL and enhancing its effectiveness has become a central focus of research in LLM.

Recent work has established the \emph{function vector} (FV) \cite{todd2024function, hendel2023incontext} as a causal mechanism underlying ICL: when performing ICL, models carry an explicit, compact representation of the input-output mapping as a direction in activation space representing the task. FVs are operationalized as the mean activations of a set of \textbf{Function Vector heads}, localized by causal mediation and patching \cite{hendel2023incontext, todd2024function, yin2025which, davidson2025do,jiang2025unlocking}. When activations from these heads are erased, in-context learning collapses;  when the aggregation of these activations is injected into a model's residual stream, it can cause a next-token prediction in the task's output space. 
Importantly, the FV remains effective on new contexts and novel test inputs, showing that it constitutes a causal, task-specific representation.

Recent circuit-level work \citep{cho2025revisiting, kharlapenko2025scaling, bakalova2025contextualize} has begun to unpack how a model forms the task representation from the individual few-shot examples in a prompt. 
In particular, \citet{bakalova2025contextualize} shows that the process can be viewed as a two-stage strategy:
Lower layers enrich the representations of individual few-shot examples with information from preceding examples (\textbf{contextualization}).
Middle layers then aggregate the representations of individual examples into a single representation (\textbf{aggregation}) driving the prediction for the query. However, the details of the aggregation process, and the function of contextualization, remain unclear.
Overall, despite a lot of recent interest, it is still unknown how the FV is composed from the individual few-shot examples.

In this work, we present a prompt-level, causal account of how few-shot examples form and refine an FV during in-context learning. Concretely, our main contributions are:
\begin{enumerate}
    \item An $n$-shot FV can be approximated, both representationally and causally, as a linear superposition of per-example sub-FVs mediated by attention of the FV heads. This establishes a prompt-level account of how few-shot examples combine to form a task representation. (Section~\ref{sec:linear-superposition})

    \item FV heads allocate more attention to examples that unambiguously reveal task identity. This attention pattern is further amplified by \emph{contextualization}, that is, few-shot examples obtain information about prior examples via attention. (Section \ref{section4})

    \item We causally decompose the effects of both attention mechanism pathways QK\&V and verify that this enhancement of attention is causal to improving FV quality (i.e., injection accuracy). (Section \ref{section5})

    \item In ambiguous contexts, FV quality is primarily driven by information from the ICL examples, which improves the alignment between the Query of the last token and the Keys of unambiguous examples and thereby increases selective attention toward unambiguous examples. (Section \ref{section6})
\end{enumerate}

\section{Background and Preliminaries}
\subsection{In-Context Learning}
We study a family of tasks $\mathcal{T}$. Each task $t\in\mathcal{T}$ specifies %
a mapping $g_t:\mathcal{X}\to\mathcal{Y}$,
where $\mathcal{X},\mathcal{Y}\subseteq\mathcal{V}$ and $\mathcal{V}$ is the model vocabulary.
An $n$-shot in-context prompt for task $t$ concatenates $n$ demonstrations followed by an \emph{ICL query}:
\[
p_n^{(t)}=\big((x_1,y_1),\ldots,(x_n,y_n),x_{n+1}\big)
\]
where $y_i = g_t(x_i)$.
Following standard practice, we format prompts with separators; specifically, as in \citet{bakalova2025contextualize}, we use \textbackslash t between $x_i$ and $y_i$ and \textbackslash n between examples:
\begin{equation*}
    \underbrace{x_1\ \texttt{\textbackslash t}\ y_1\ \texttt{\textbackslash n}}_{Ex_1}\ \dots\ \underbrace{x_n\ \texttt{\textbackslash t}\ y_n\ \texttt{\textbackslash n}}_{Ex_n}\ \underbrace{x_{n+1}}_{Q} \underbrace{\texttt{\textbackslash t}}_{t_{n+1}}
\end{equation*}
where $\{Ex_1, \dots, Ex_n, Q, t_{n+1}\}$ are the \emph{components} of the prompt.
We use \emph{0-shot} to denote prompts containing only the query and the separator \textbackslash t, i.e., $p_0=(x_1\  \backslash t)$.
Let $f_\theta$ be a pretrained language model that maps a prompt to a next-token distribution. The predicted answer is
\[
\hat{y}_{n+1}=\arg\max_{y\in\mathcal{V}} f_\theta(y\mid p_n^{(t)}).
\]
with correct answer $y_{n+1}= g_t(x_{n+1})$.

\subsection{Function Vectors and FV Heads}
For each model and task, we localize a set $\mathcal{A}_{\mathrm{FV}}$ of \emph{function-vector (FV) heads} using causal mediation closely following prior work \cite{todd2024function, hendel2023incontext, yin2025which} (Appendix~\ref{appendix: method_fv}).
Unlike prior work that defines the FV by averaging across many prompts, we consider
\textbf{per-prompt} function vectors $v_{\mathrm{FV}}(p)$.
Given a prompt $p_n^{(t)}$, we define its FV as the sum of FV-head activations at the last token:
\[
v_{\mathrm{FV}}(p_n^{(t)})=\sum_{a\in\mathcal{A}_{\mathrm{FV}}} a\big(p_n^{(t)}, t_{n+1}\big),
\]
where $a\big(p_n^{(t)}, t_{n+1}\big)$ denotes the output of head $a$ at the final separator position $t_{n+1}$ following $x_{n+1}$ on the prompt $p_n^{(t)}$.

\paragraph{FV injection and evaluation}
To test whether $v_{\mathrm{FV}}(p_n^{(t)})$ reinstates the task on a 0-shot prompt $\tilde{p}^{(t)}_0$, we inject it into the
residual stream at FV-head output positions:
\[
h^{(\ell)}_{n+1} \leftarrow h^{(\ell)}_{n+1} + \alpha\, v_{\mathrm{FV}}(p_n^{(t)}),
\]
where $\ell$ is an injection layer and $\alpha$ is a scaling factor.
We evaluate the injected model $f_\theta^{+}$ by accuracy on a 0-shot prompt set $\tilde{P}_t$, with
\[
\hat{y}^{+}_{n+1,i}
=
\arg\max_{y\in\mathcal{V}}
f^{+}_{\theta}\big(y\mid \tilde{p}^{(t)}_{0,i}\big),
\]
and
\[
\mathrm{Acc}^{(\ell,\alpha)}(v_{\mathrm{FV}})
=
\frac{1}{|\tilde{P}_t|}
\sum_{i}
\mathbf{1}\!\left[
\hat{y}^{+}_{n+1,i}
=
\tilde{y}_{n+1,i}
\right].
\]

We quantify the quality of the FV using a single scalar quality score obtained via a layer/scale sweep:
\[
\mathrm{Acc}_{\max}(v_{\mathrm{FV}})
=\max_{\ell\le L',\,\alpha\in\mathcal{A}} \mathrm{Acc}^{(\ell,\alpha)}(v_{\mathrm{FV}}),
\]
where $L'$ and $\mathcal{A}$ follow a model-specific sweep protocol (Appendix~\ref{appendix: method_inj_eva}).

\subsection{Contextualized and Uncontextualized Model}
\label{section2.3}
\citet{bakalova2025contextualize} show that two kinds of information flow are causally important in ICL: between examples (\emph{contextualization}) and to the final position $t_{n+1}$ (\emph{aggregation}).
In particular, the FV heads, moving information from the prompt to $t_{n+1}$, are involved in the \emph{aggregation} step.
However, the function of the \emph{contextualization} step, and how it helps FV composition, remains underspecified.
In order to cleanly isolate the role of contextualization, we introduce an \emph{uncontextualized} ablation via attention \emph{edge ablation} (Appendix~\ref{appendix: method_edge-ablation}).

In the \textbf{uncontextualized} setting, we (i) keep attention within prompt components, and
(ii) preserve attention into the last token, but (iii) zero out attention between tokens belonging to different prompt components.
An intuitive masking construction process is provided in Appendix~\ref{appendix: method_edge-ablation} Fig.~\ref{fig:edge_ablation_diagram}. Correspondingly, the intact model with full cross-component interactions is referred to as \textbf{contextualized}.
This intervention provides a counterfactual baseline that isolates the causal contribution of cross-example interactions: it eliminates cross-example interaction, so differences between contextualized and uncontextualized runs can be
attributed to contextualization rather than to changes in per-example encoding or the aggregation step.
Intuitively, \emph{uncontextualized} forces prompt components to be informationally isolated, while \emph{contextualized} allows examples to incorporate information from earlier examples through attention.

\subsection{Experiment Setup}
We evaluate across a series of \textbf{tasks} (Appendix~\ref{appendix:ICL_tasks}) and \textbf{n-shot} (3/5/10) settings on different \textbf{model families} (Appendix~\ref{appendix:model_used}). Unless otherwise specified, all results presented in the main text are based on the \texttt{gemma-2-2b} model, accompanied by pointers to Appendix sections providing corresponding results for other models.

\section{An n-shot FV can be approximated as a linear superposition of sub-FVs of individual examples}\label{sec:linear-superposition}

Our first finding is that the FV of an $n$-shot prompt decomposes approximately into a linear superposition of per-example contributions, \emph{with weights independent of the prompt}. 
To extract a per-example sub-FV, we mask attention so that  $t_{n+1}$ can only attend to the $i$-th example and to the ICL query $x_{n+1}$. The FV extracted under this restricted attention graph is denoted
$v_i$. We then fit the full prompt FV $v_{\mathrm{FV}}$ using \textbf{ordinary least squares (OLS)}:
\[
v_{\mathrm{FV}} \approx \sum_{i=1}^{n} w_i v_i + \varepsilon,
\]
Crucially, $w_i$ is fitted globally \textbf{across a batch of prompts}. It thus describes the aggregate contribution of each position across prompts. %
As shown in \cref{fig:sec3_cosine_r2}, the OLS-predicted FVs match the observed FVs closely in both contextualized (\textbf{ctx}) and uncontextualized (\textbf{unc}) settings (mean cosine $\ge 0.925$, mean $R^2 \ge 0.875$). 
This finding demonstrates that each example contributes an independent task representation, combining linearly into the global task direction.
We confirm that this additive structure is non-trivial via a null hypothesis test in Appendix~\ref{appendix: linear_superposition}.

We further test whether the same linear reconstruction preserves the causal task effect of the full FV. Specifically, we inject the OLS-reconstructed FV $\hat{v}_{\mathrm{FV}}=\sum_i w_i v_i$ into 0-shot prompts and compare its injection accuracy with that of the true full-prompt FV. \Cref{fig:causal_reconstruction} reports the ratio $\mathrm{Acc}_{\max}(\hat{v}_{\mathrm{FV}}) / \mathrm{Acc}_{\max}(v_{\mathrm{FV}})$ across tasks. The reconstructed FV recovers most of the full FV's causal effect in the main contextualized setting, and achieves substantial recovery in the uncontextualized setting as well. Thus, the linear approximation is not only geometrically accurate, but also captures much of the causal task-relevant effect of the full FV. To probe robustness on many-shot prompts, we conduct a preliminary \textbf{20-shot} check on two representative tasks (\textbf{CC} and \textbf{PP-A}) using \texttt{gemma} models, finding that the linear properties remain clearly evident both at the representational and causal levels (Appendix~\ref{appendix: linear_superposition} Table~\ref{tab:many_shot_linear_superposition}).

Importantly, while the additive form remains stable, we observe that the fitted mixture weights $\{w_i\}$ (Appendix Tab.~\ref{tab:linear_superposition_weights}) can shift substantially between the uncontextualized and contextualized settings, indicating that contextualization modulates \emph{how much} each example contributes. Since FV formation at the last token is implemented through attention-based aggregation at FV heads, a natural hypothesis is that these
weight shifts are reflected in--and potentially driven by--changes in FV-head attention allocation over examples. In the next section, we therefore study how FV-head attention is distributed across examples under uncontextualized versus contextualized computation, and what mechanisms control this reweighting.

\begin{figure}[h!]
    \centering
    \begin{tikzpicture}
        \node[anchor=south west, inner sep=0,xshift=0] (cos) 
            {\includegraphics[width=0.25\textwidth]{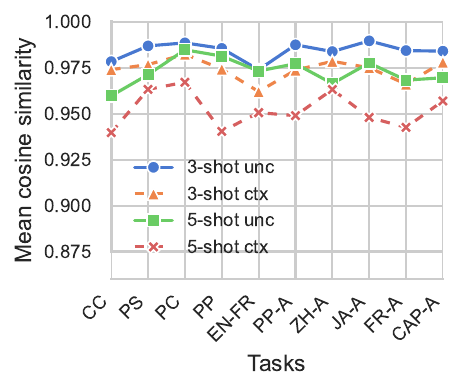}};

        \node[anchor=south west, inner sep=0, xshift=0.25\textwidth] (r2)
            {\includegraphics[width=0.25\textwidth]{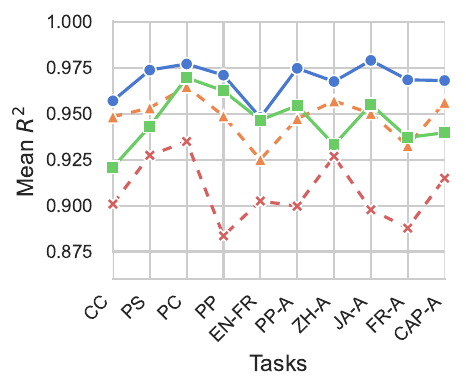}};
    \end{tikzpicture}
    \vspace{-2em}
    \caption{
        \textbf{Left:} mean cosine similarity between the observed FV and the OLS reconstruction with/without contextualization (\textbf{ctx}/\textbf{unc}).
\textbf{Right:} mean $R^2$ of the same fit. See Appendix \ref{appendix: linear_superposition} Fig.~\ref{fig:sec3_full_results} for the complete results across all evaluated models.}
    \label{fig:sec3_cosine_r2}
\end{figure}

\begin{figure}[h!]
    \centering
    \includegraphics[width=0.8\linewidth]{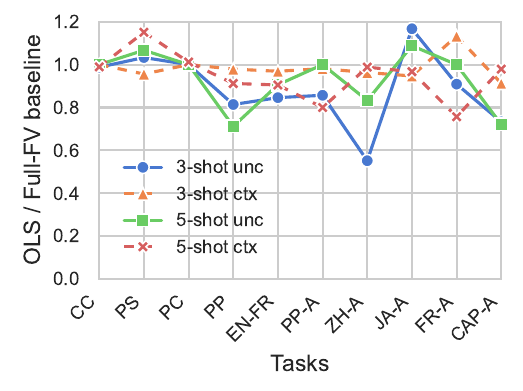}
    \vspace{-1em}
    \caption{\textbf{Causal reconstruction of linear superposition.}
We inject OLS-reconstructed FVs, $\hat{v}_{\mathrm{FV}}=\sum_i w_i v_i$, into 0-shot prompts and report their maximum injection accuracy relative to that of the true full-prompt FV. Values close to 1 indicate that the linear reconstruction recovers most of the full FV's causal effect. Results are shown across tasks for 3/5-shot prompts under contextualized (\textbf{ctx}) and uncontextualized (\textbf{unc}) settings.}
    \label{fig:causal_reconstruction}
\end{figure}

\section{Contextualization amplifies Query–Key alignment toward unambiguous examples}
\label{section4}

Since FV formation is implemented through attention-based aggregation at FV heads, we now ask how attention is allocated across examples, and whether contextualization changes this allocation through the
Query--Key (QK) interface. 

Attention on FV heads is often dominated by stable structural effects such as recency and positional bias (Fig.~\ref{fig:figure3ab}a), making content-dependent variation difficult to observe.

\paragraph{Ambiguous Prompt Datasets}
To expose whether \emph{semantic} factors can
systematically reshape attention beyond recency, we introduce \textbf{ambiguous prompts} that contain both
\emph{unambiguous} examples--which uniquely identify the underlying input--output mapping--and \emph{ambiguous} examples
which are compatible with multiple candidate mappings (Appendix~\ref{appendix:ICL_tasks} Tab.~\ref{tab:ICL_tasks}). This setting serves as a stress test, probing whether FV heads can resolve information conflict and selectively attend to the most informative demonstrations when forming the task representation, thereby amplifying attention differences that are otherwise subtle in clean prompts. To mitigate positional bias, for each ambiguous dataset at each $n$-shot setting we apply two positional settings by permuting the placement of ambiguous and unambiguous examples. Within ambiguous prompts, we fix the ambiguous: unambiguous example ratio at $2:1$. This ratio places the task in an identifiable but non-trivial regime: there are enough ambiguous demonstrations to induce genuine information conflict (avoiding a near-clean prompt), while retaining a moderate number of unambiguous examples so that FV injection accuracy is neither saturated nor collapsed (reducing ceiling/floor effects). Empirically, this also makes selective attention shifts toward the unambiguous examples more pronounced and hence easier to measure.

\begin{figure}[h!]
    \centering
    \includegraphics[width=1.0\linewidth]{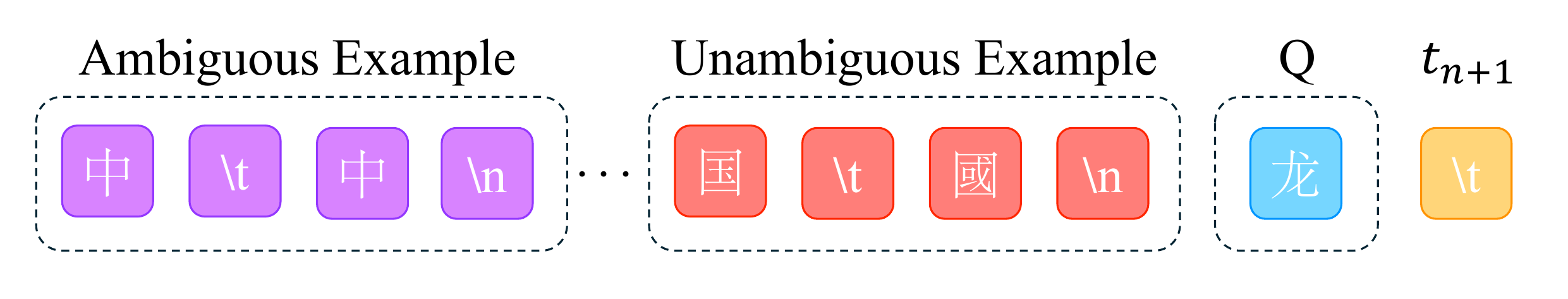}
    \caption{A demonstration of an ambiguous prompt. The ambiguous example is compatible with both character transformation and identity mapping.}
    \label{fig:ambiguous_prompt}
\end{figure}

\subsection{Unambiguous examples are intrinsically more attractive}

Attention to examples is governed by Query--Key similarity: for a fixed last-token Query, higher attention to
examples implies stronger QK alignment with their Keys. To isolate the role of per-example Keys, we first study the \emph{uncontextualized} ablation (Sec~\ref{section2.3}), where cross-example attention is removed so that examples cannot contextualize one another and retain prompt-independent representations.

On normal datasets (i.e., datasets without ambiguous examples), FV-head attention is largely explained by recency bias: examples are treated as approximately exchangeable, and later examples receive higher weight (Fig.~\ref{fig:figure3ab}a). In contrast, on ambiguous datasets, the attention distribution becomes content-sensitive: FV heads systematically assign higher attention to \emph{unambiguous} examples than to ambiguous ones, suggesting that unambiguous examples are intrinsically more attractive at the level of their Keys (Fig.~\ref{fig:figure3ab}b). We confirm this via a symmetric \emph{Key patching} intervention (Appendix Fig.~\ref{fig:k_patching_diagram}).

\paragraph{Q/K/V Patching}
\label{section2.4}
We use Q/K/V patching to isolate how attention \textbf{Query}, \textbf{Key}, and \textbf{Value} channels affect FV formation. We first run a \emph{source} (possibly contextualization ablated) dataset to record $(\mathbf{q}^{\mathcal{H}}_{\text{src}},\mathbf{k}^{\mathcal{H}}_{\text{src}},\mathbf{v}^{\mathcal{H}}_{\text{src}})$ of target token positions on \emph{all attention heads},
then run the \emph{target} prompt and replace some of $\mathbf{q}$, $\mathbf{k}$, and $\mathbf{v}$ with the recorded values:
\[
f_\theta\!\left(p_n^{(t)} \mid
\mathbf{q}^{\mathcal{H}},\mathbf{k}^{\mathcal{H}},\mathbf{v}^{\mathcal{H}}
:=\mathbf{q}^{\mathcal{H}}_{\text{src}},\mathbf{k}^{\mathcal{H}}_{\text{src}},\mathbf{v}^{\mathcal{H}}_{\text{src}}
\right)\ \rightarrow\ v_{\mathrm{FV}}.
\]
Unless stated otherwise, Q patching replaces only the last token's query vector, while K/V patching replaces only the keys/values
used in the last token's attention update; full operational details are deferred to Appendix~\ref{appendix: method_qkv-patching}.

 In this subsection, we instantiate this framework in a Keys-only form to test whether the preference for unambiguous examples is already encoded in the example-side Keys themselves. We keep the prompt and in particular the last token's Query fixed, while modifying only example-side Keys on all attention heads. We run two directional variants:
(i) replace ambiguous examples' Keys with Keys taken from unambiguous examples drawn from other prompts of the same task; and (ii) replace unambiguous examples' Keys with ambiguous Keys. Replacing ambiguous Keys removes the unambiguous preference and restores a largely position-driven allocation; conversely, overwriting unambiguous Keys yields an analogous effect (Fig.~\ref{fig: k_intrinsic_attraction}).

\begin{figure*}
    \centering
    \includegraphics[width=1.0\linewidth]{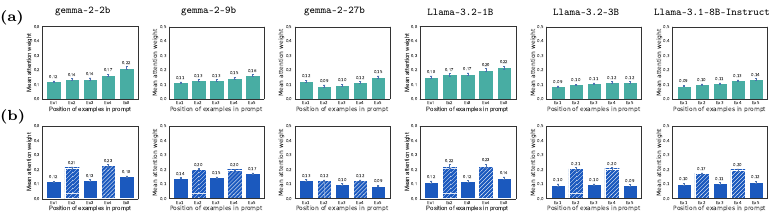}

    \vspace{-1em}
    \caption{\textbf{(a)} Mean attention weight of FV heads on a normal task (\textsc{Country--Capital}), attention is largely explained by recency bias.
    \textbf{(b)} Mean attention weight of FV heads on an ambiguous task (\textsc{Chinese Ambiguous}), FV heads consistently upweight the \emph{unambiguous} demonstrations (here fixed at \textbf{Ex2} and \textbf{Ex4}, marked by \textbf{diagonal hatching}) relative to ambiguous ones. Each experiment setting is averaged over 100 5-shot prompts. Error bars: 95\% CIs from 1000 bootstrap resamples.
    Results are in the \emph{uncontextualized} setup to isolate intrinsic effects of examples. Attention spikes on unambiguous examples are even larger in the full, \emph{contextualized} setup (Appendix Fig.~\ref{fig:Gemma-2-2b_ambiguous_qk_alignment}).
    See Appendix Fig.~\ref{fig:normal_tasks_gemma-2-2b}-\ref{fig:ambiguous_tasks_compact_Llama-3.1-8B-Instruct} for the complete results.
    }
    \label{fig:figure3ab}
\end{figure*}

\subsection{Contextualization sharpens QK alignment and amplifies attention concentration}

Next, we study how \emph{contextualization} reshapes Query--Key alignment %
by comparing the \emph{contextualized} model against the \emph{uncontextualized} model.

On normal datasets, we find that contextualization primarily acts as a mechanism for positional re-balancing that scales with the context length. While the inherent recency bias remains dominant in low-shot regimes (e.g., 3-shot) where we often observe negligible backward FV attention mass center shifts, the forward-shifting effect consistently emerges and intensifies as the number of demonstrations increases. By the 10-shot setting, a robust negative shift in the FV attention mass center ($\Delta \mathcal{C} \approx -0.4$ to $-0.6$) is observed across nearly all normal tasks (Appendix Tab.~\ref{tab:attention_metrics}), indicating a systematic reallocation of attention mass toward earlier examples. To quantify the smoothness of attention across $n$-shot examples, we define the \textbf{normalized attention entropy} as $\hat{H} = \frac{-\sum p_i \log p_i}{\log n} \in [0, 1]$, where $p_i$ is the normalized FV attention mass on the $i$-th example. Crucially, this redistribution occurs without sacrificing distributional smoothness: the normalized entropy remains high ($\hat{H}>0.95, \Delta \hat{H} \approx 0$), confirming that contextualization mitigates recency bias by flattening the attention profile rather than inducing sharp, example-specific peaks (Fig.~\ref{fig:qk_alignment}a).

In contrast to the smoothing observed in normal datasets, ambiguous tasks exhibit a distinct qualitative divergence in attention dynamics. We find that contextualization in these settings induces a consistent and significant entropy reduction (averaging $-0.08$ to $-0.15$ from 3-shot to 10-shot, Appendix Tab.~\ref{tab:attention_metrics}), suggesting it \textbf{sharpens} the attention allocation by concentrating attention on unambiguous examples while down-weighting ambiguous ones (Fig.~\ref{fig:qk_alignment}b). This pattern effectively amplifies the model's selection of the most informative demonstrations under conflict, turning contextualization into a \emph{selection} mechanism rather than merely redistributing mass across positions. The Shared Query--Key PCA on the Top-1 FV head gives an intuitive visualization of the underlying shift, showing a clear separation between ambiguous and unambiguous Keys after contextualization (Appendix~\ref{appendix:qk_alignment} Fig.~\ref{fig:qk_geometry}).

\begin{figure}[b!]
    \centering
    \includegraphics[width=1.0\linewidth]{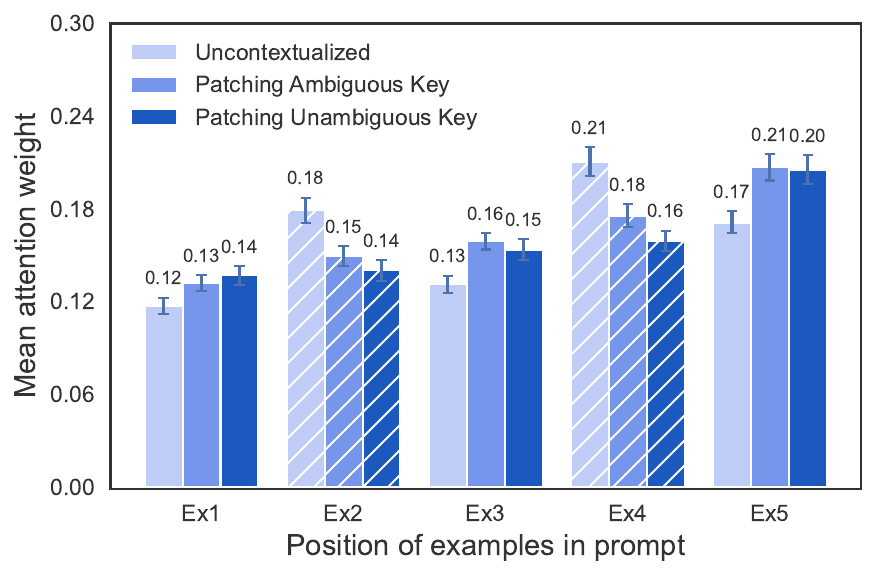}
    \caption{Mean attention weight of FV heads under symmetric \emph{Key patching} on all attention heads on \textsc{Present--Past Ambiguous} dataset. Unambiguous examples are fixed at \textbf{Ex2} and \textbf{Ex4}, marked by \textbf{diagonal hatching}. Each experiment setting is averaged over 100 5-shot prompts. Error bars: 95\% CIs from 1000 bootstrap resamples. See Appendix~\ref{appendix:k_attraction} Fig.~\ref{fig:k_patching_all_results} for all results.}
    \label{fig: k_intrinsic_attraction}
\end{figure}

\begin{figure}[t]
    \centering
    \begin{tikzpicture}
        \node[anchor=south west, inner sep=0] (a)
            {\includegraphics[width=1.0\linewidth]{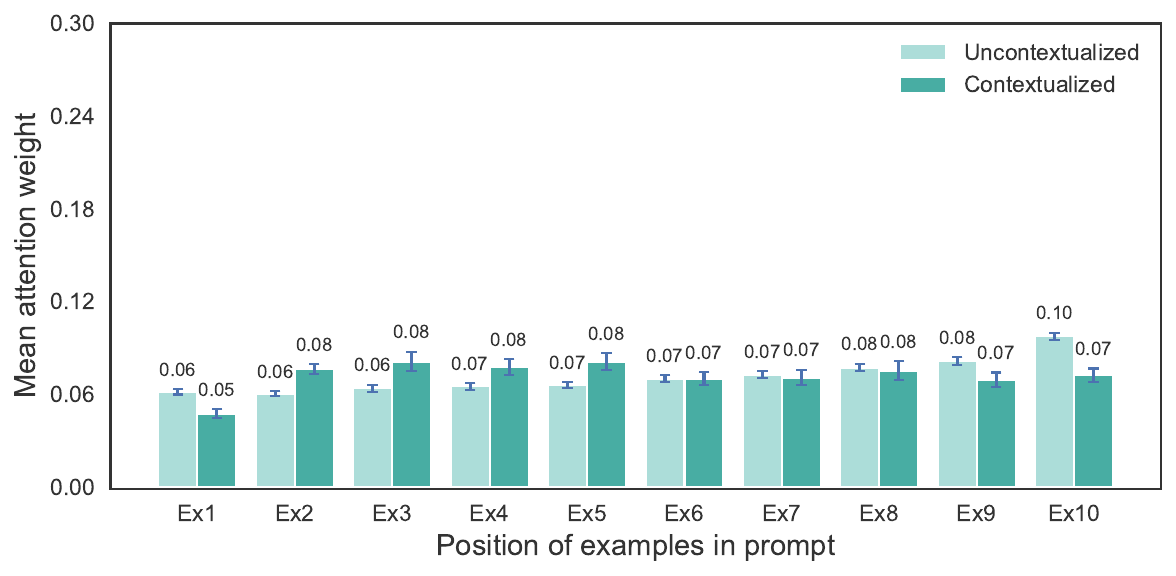}};
        \node[anchor=north west, xshift=0pt, yshift=12pt] at (a.north west) {\small (a)};
    \end{tikzpicture}

    \begin{tikzpicture}
        \node[anchor=south west, inner sep=0] (b)
            {\includegraphics[width=1.0\linewidth]{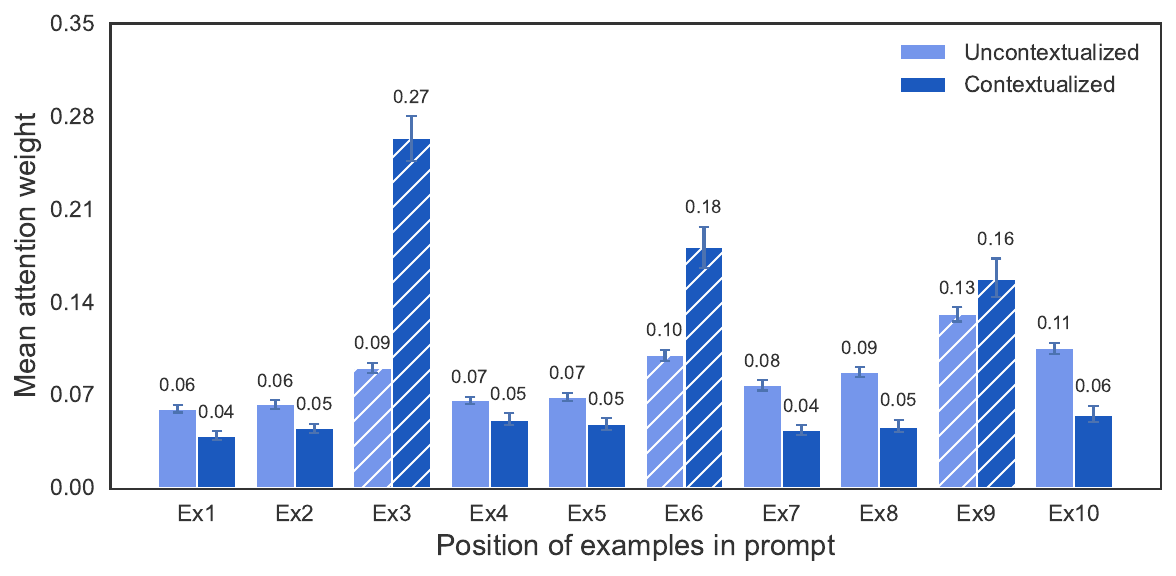}};
        \node[anchor=north west, xshift=0pt, yshift=12pt] at (b.north west) {\small (b)};
    \end{tikzpicture}

    \caption{Mean attention weight of FV heads in datasets under uncontextualized and contextualized settings. \textbf{(a)} On a normal task (\textsc{Park--Country}), contextualization primarily mitigates recency bias: attention becomes less back-heavy and the centroid shifts toward earlier demonstrations, without inducing strong example-specific peaks.\textbf{(b)} On an ambiguous task (\textsc{Present--Past Ambiguous}), contextualization instead \emph{sharpens} the allocation, increasing attention concentration on the unambiguous demonstrations (here fixed at \textbf{Ex3}, \textbf{Ex6}, and \textbf{Ex9}, marked by \textbf{diagonal hatching}) After contextualization, unambiguous examples share attention increases from 32\% (unc) to 61\% (ctx). Both experiment settings are averaged over 100 10-shot prompts. Error bars: 95\% CIs from 1000 bootstrap resamples. See Appendix \ref{appendix:qk_alignment} Fig.~\ref{fig:Gemma-2-2b_normal_qk_alignment}-\ref{fig:Llama-3.1-8B-Instruct_ambiguous_qk_alignment} for the complete results across all evaluated models.}
    \label{fig:qk_alignment}
\end{figure}

\section{How does contextualization improve FV quality? A causal decomposition into Query–Key and Value pathways}
\label{section5}

Prior work shows that contextualization can improve ICL performance, especially under an ambiguous context
\cite{bakalova2025contextualize}, but left open \emph{how} contextualization benefits ICL. In the former sections, we established that contextualization sharpens FV-head attention, concentrating mass on unambiguous demonstrations. This suggests an intuitive causal hypothesis: contextualization improves FV quality by improving QK-mediated routing (i.e., attention allocation) toward informative examples. However, contextualization could also improve FV quality by changing the representational content aggregated at FV heads by modifying the per-example Values. To disentangle these two direct causal conduits on FV heads, we design a controlled two-factor intervention that independently toggles contextualization in the QK and V pathways (Appendix Fig.~\ref{fig:v_patching_diagram}).

\paragraph{$2\times 2$ factorial intervention over QK and V}
We treat maximum FV injection accuracy as a function $F(QK,V)$, where $QK\in\{\mathrm{unc},\mathrm{ctx}\}$ denotes whether the
Query and Key activations are taken from the uncontextualized ablation model or the contextualized intact  model, and
$V\in\{\mathrm{unc},\mathrm{ctx}\}$ does the same for Value activations. Writing $\mathrm{unc}=0$ and $\mathrm{ctx}=1$, we evaluate four configurations:
\begin{enumerate}
    \item \textbf{Uncontextualized} (QK$_{\mathrm{unc}}$ + V$_{\mathrm{unc}}$): extract FVs from the uncontextualized model,
    yielding $F(0,0)$.
    \item \textbf{Uncontextualized QK + Contextualized V} (QK$_{\mathrm{unc}}$ + V$_{\mathrm{ctx}}$): keep QK from uncontextualized run but patch in contextualized Values, yielding $F(0,1)$.
    \item \textbf{Contextualized QK + Uncontextualized V} (QK$_{\mathrm{ctx}}$ + V$_{\mathrm{unc}}$): keep QK from contextualized run but patch in 
     uncontextualized Values, yielding $F(1,0)$.
    \item \textbf{Contextualized} (QK$_{\mathrm{ctx}}$ + V$_{\mathrm{ctx}}$): extract FVs from the intact model, yielding $F(1,1)$.
\end{enumerate}
The overall contextualization gain is $G = F(1,1)-F(0,0)$. The contrasts $F(1,0)-F(0,0)$ and $F(1,1)-F(0,1)$ capture the effect of contextualizing QK under two V settings, while $F(0,1)-F(0,0)$ and $F(1,1)-F(1,0)$ analogously capture the effect of contextualizing V under two QK settings (Appendix~\ref{appendix:qkv_regimes}).

\paragraph{Shapley value decomposition}
To attribute $G$ to QK and V on equal footing (avoiding an arbitrary choice of intervention order), we use a symmetric
two-factor Shapley-style decomposition \citep{rozemberczki2022shapley}:
\begin{align*}
\phi_{\mathrm{QK}}
&= \tfrac12\big(F(1,0)-F(0,0)\big)
 + \tfrac12\big(F(1,1)-F(0,1)\big),\\
\phi_{\mathrm{V}}
&= \tfrac12\big(F(0,1)-F(0,0)\big)
 + \tfrac12\big(F(1,1)-F(1,0)\big),
\end{align*}
so that $\phi_{\mathrm{QK}}+\phi_{\mathrm{V}}=G$ when effects are additive. Intuitively, $\phi_{\mathrm{QK}}$ (resp.\
$\phi_{\mathrm{V}}$) is the average marginal improvement from contextualizing QK (resp.\ V) across both possible orders in
which the two factors could be applied.

Fig.~\ref{fig:shapley_value} summarizes Shapley contributions averaged across experimental settings (3/5/10-shot; and two positional controls for ambiguous datasets), showing that $\phi_{\mathrm{QK}}$ is typically larger and
more consistently positive than $\phi_{\mathrm{V}}$. We further investigate the behavior across configurations in Appendix~\ref{appendix:qkv_regimes}.
Detailed breakdowns by model, dataset, and setting are reported in
Appendix~\ref{appendix:qkv_regimes} Tab.~\ref{tab:qkv_regimes}. Overall, these results provide causal evidence that contextualization improves FV quality
primarily by reshaping QK-mediated attention routing toward informative examples, with Value contextualization acting as a
selective, sometimes complementary, modulator.

\begin{figure*}[h!]
    \centering
    \begin{tikzpicture}
        \node[inner sep=0pt] (img1) {\includegraphics[width=0.3\linewidth]{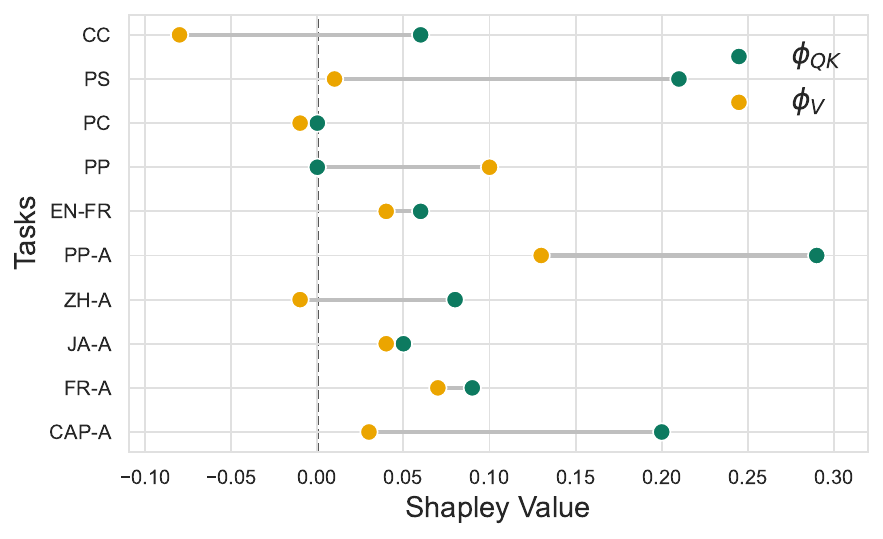}};
        \node[above=2pt, font=\ttfamily] at (img1.north) {gemma-2-2b};
    \end{tikzpicture}
    \hfill
    \begin{tikzpicture}
        \node[inner sep=0pt] (img2) {\includegraphics[width=0.3\linewidth]{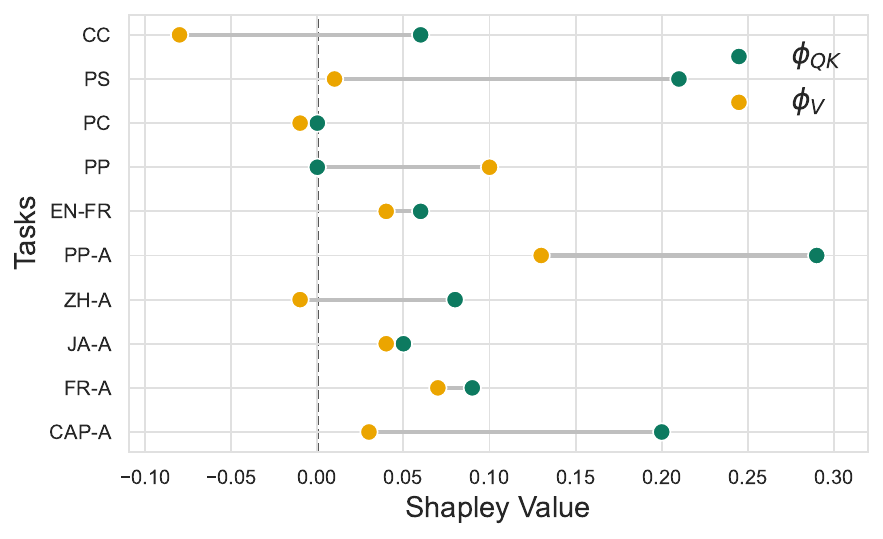}};
        \node[above=2pt, font=\ttfamily] at (img2.north) {gemma-2-9b};
    \end{tikzpicture}
    \hfill
    \begin{tikzpicture}
        \node[inner sep=0pt] (img3) {\includegraphics[width=0.3\linewidth]{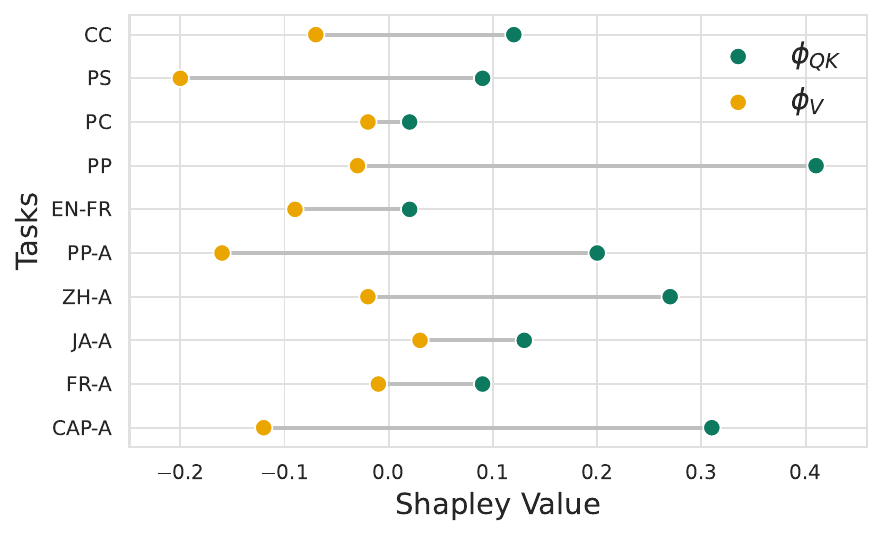}};
        \node[above=2pt, font=\ttfamily] at (img3.north) {gemma-2-27b};
    \end{tikzpicture}

    \begin{tikzpicture}
        \node[inner sep=0pt] (img4) {\includegraphics[width=0.3\linewidth]{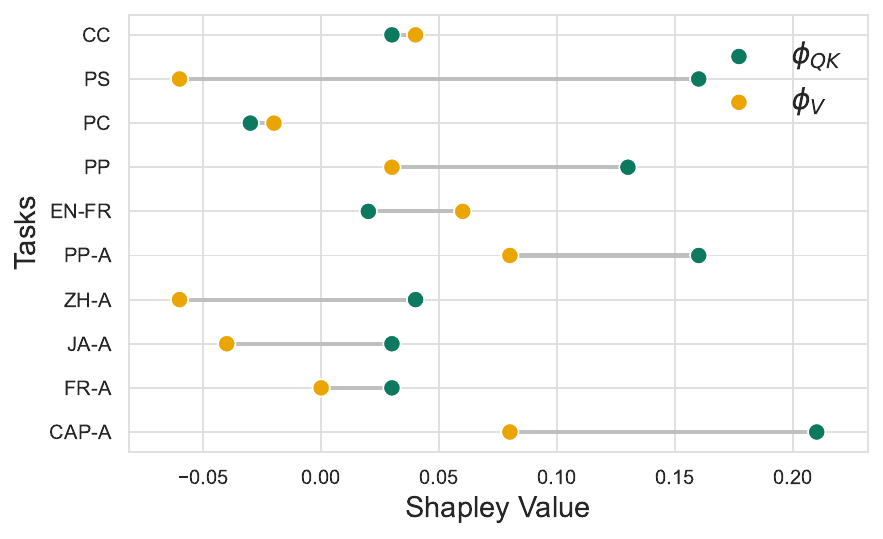}};
        \node[above=2pt, font=\ttfamily] at (img4.north) {Llama-3.2-1B};
    \end{tikzpicture}
    \hfill
    \begin{tikzpicture}
        \node[inner sep=0pt] (img5) {\includegraphics[width=0.3\linewidth]{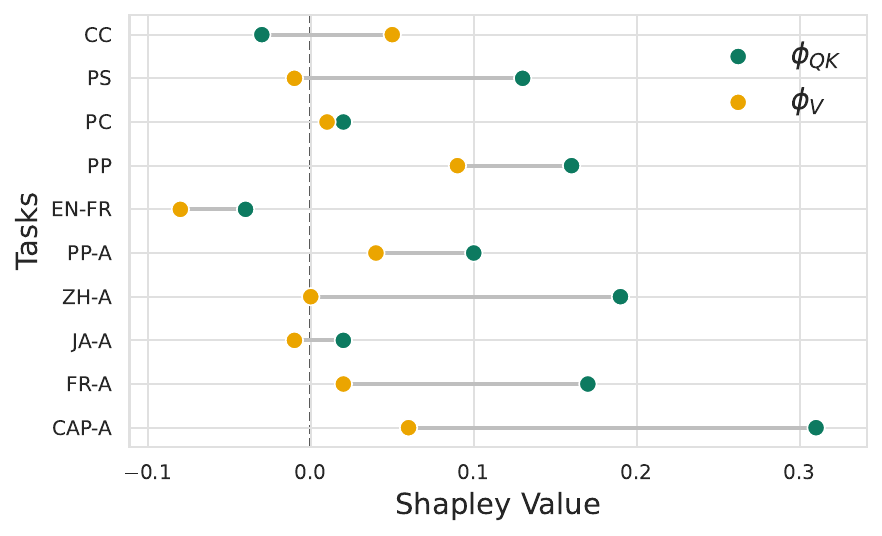}};
        \node[above=2pt, font=\ttfamily] at (img5.north) {Llama-3.2-3B};
    \end{tikzpicture}
    \hfill
    \begin{tikzpicture}
        \node[inner sep=0pt] (img6) {\includegraphics[width=0.3\linewidth]{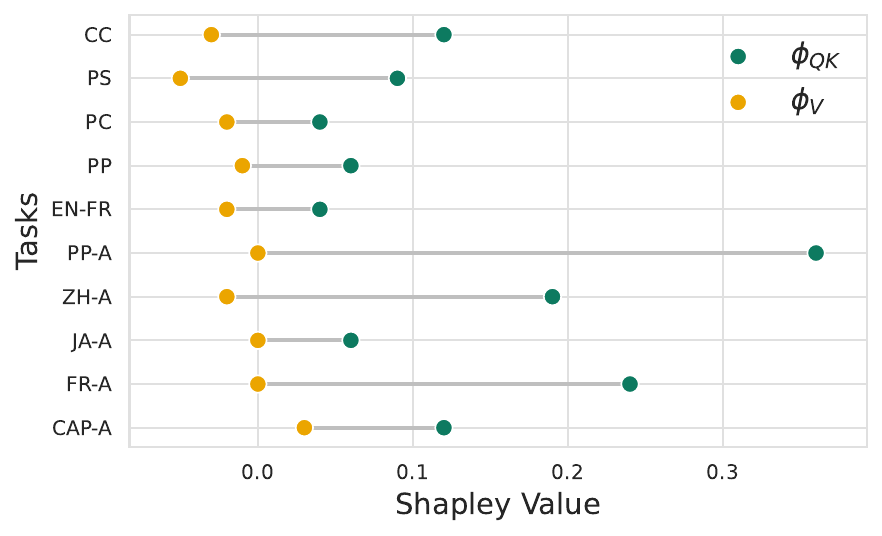}};
        \node[above=2pt, font=\ttfamily] at (img6.north) {Llama-3.1-8B-Instruct};
    \end{tikzpicture}

    \vspace{-0.5em}
    \caption{Shapley value decomposition ($\phi$) averaged over experiment configurations ($n$-shots, positional controls) on \texttt{gemma-2} and \texttt{Llama-3} model families. Contextualizing QK usually has a stronger positive effect on FV quality, compared to contextualizing V. See Appendix~\ref{appendix:qkv_regimes} Fig.~\ref{fig:Gemma-2-2b_qkv_acc_normal}-\ref{fig:Llama-3.1-8B-Instruct_qkv_acc_ambiguous} for details.}
    \label{fig:shapley_value}
\end{figure*}

\section{Dissecting the origins of Query–Key alignment within ICL prompt components}
\label{section6}

Having established in Sec.~\ref{section5} that contextualization gains are primarily mediated by the Query--Key (QK) pathway, we next ask \emph{where} this QK alignment signal comes from within the ICL prompt: the examples, or the ICL query $x_{n+1}$? 
Some prior work suggests the identity of the ICL query $x_{n+1}$ is important in shaping attention in ICL \citep{yu2024how}, whereas others suggest it is not the main driver \citep{bakalova2025contextualize}.
To disentangle these, we perform \emph{Q patching} on the \emph{contextualized} model with token-replacement corruption,
constructing two controlled variants for each task: (i) \emph{examples-only} corruption, where all demonstration
segments are replaced by demonstrations from an unrelated task while keeping $x_{n+1}$ fixed; and (ii)
\emph{$x_{n+1}$-only} corruption, where $x_{n+1}$ is replaced while keeping the demonstrations fixed. In both cases, we
preserve the attention graph and quantify the effect on FV-head routing (attention statistics) and downstream FV injection accuracy.

On normal datasets, we do not observe a single universal alignment driver.
Depending on the configuration, we observe situations where corrupting either the examples or $x_{n+1}$ does not induce a clear drop in FV injection accuracy or overall attention on examples, suggesting that either component can provide sufficient information to sustain QK alignment -- but also situations where the model asymmetrically relies on one component (Appendix~\ref{appendix:q_information} Fig.~\ref{fig:Gemma-2-2b_q_info_normal}-\ref{fig:Llama-3.1-8B-Instruct_q_info_normal}).

On ambiguous datasets, as Fig.~\ref{fig:q_composition}a, \ref{fig:q_composition}b shows, corrupting $x_{n+1}$ and corrupting the examples have qualitatively different effects on FV-head
routing. $x_{n+1}$-only corruption primarily affects the \emph{total} attention mass ($T$) allocated to examples: it typically
reduces how much attention the model pays to examples overall. In contrast, examples-only corruption primarily
disrupts \emph{which} examples are preferred: the attention proportion shifts away from unambiguous examples toward
ambiguous ones, even when the total mass changes less dramatically. On most ambiguous tasks, FV injection accuracy is much more
sensitive to examples-only corruption than to $x_{n+1}$-only corruption. The main exception
is the \textsc{Capitalization--Ambiguous} family, where the roles flip: $x_{n+1}$-only corruption more strongly distorts
the unambiguous/ambiguous proportion and induces larger FV-accuracy losses (Fig.~\ref{fig:q_composition}c, \ref{fig:q_composition}d), indicating that in this family the query
itself carries a decisive disambiguating signal for selective routing. Meanwhile, this exception refuses this natural concern: examples-only corruption modifies many more tokens than $x_{n+1}$-only corruption, so that it might have larger effects by introducing more noise.

In summary, this ablation reveals a sharp contrast: Query--Key alignment in normal tasks is largely task-specific, whereas in most ambiguous settings (excluding capitalization), it is predominantly \textbf{example-led}. This suggests that robust in-context learning under ambiguity is driven not merely by query-based retrieval, but by an example-driven reweighting of attention. In Appendix \ref{appendix:q_elements}, we provide an exploratory observation and preliminary discussion on the effects of example content.

\begin{figure}[h!]
    \centering
    \begin{tikzpicture}
        \node[anchor=south west, inner sep=0] (a)
            {\includegraphics[width=0.5\linewidth]{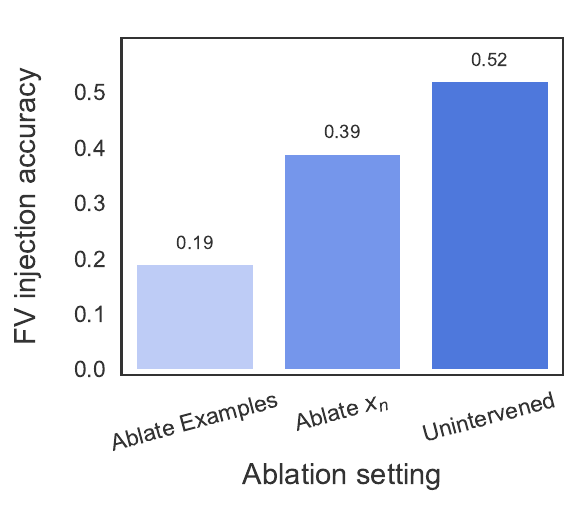}};
        \node[anchor=north west, xshift=-2pt, yshift=8pt] at (a.north west) {\small (a)};
        \node[anchor=west, inner sep=0] at (a.east) (b)
            {\includegraphics[width=0.5\linewidth]{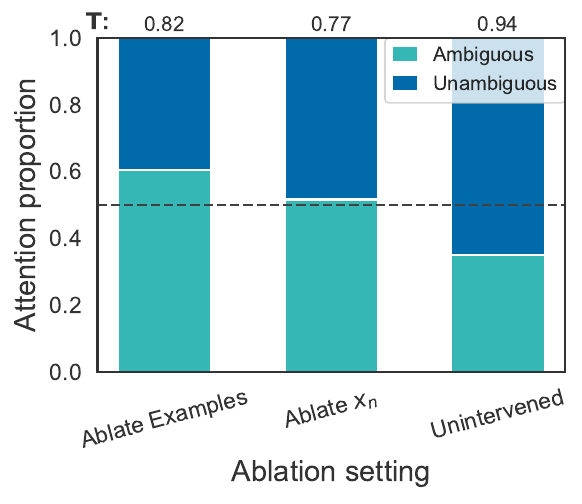}};
            
        \node[anchor=north west, xshift=-2pt, yshift=8pt] at (b.north west) {\small (b)};

        \node[anchor=north west, xshift=-2pt, yshift=3pt] (cap1) at (a.south) {\textsc{Present--Past Ambiguous}};

                \node[anchor=north, inner sep=0, yshift=-15pt] at (a.south) (c)
            {\includegraphics[width=0.5\linewidth]{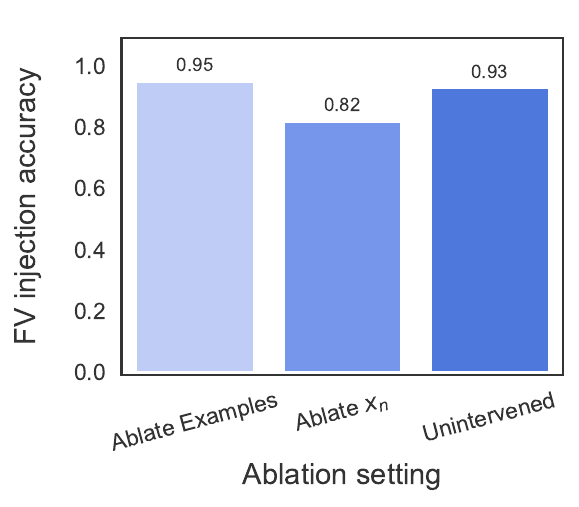}};
        \node[anchor=north west, xshift=-2pt, yshift=8pt] at (c.north west) {\small (c)};
        \node[anchor=west, inner sep=0] at (c.east) (d)
            {\includegraphics[width=0.5\linewidth]{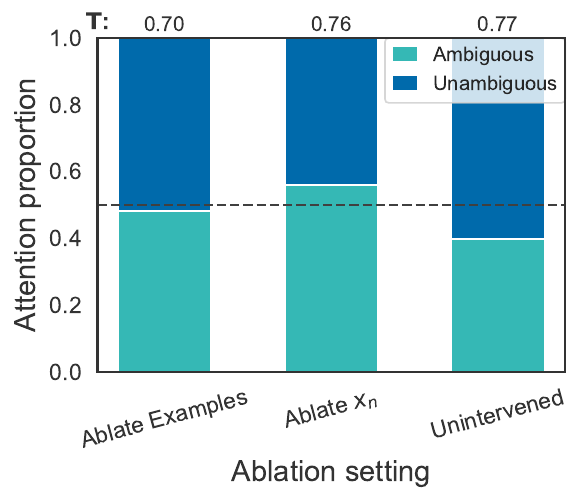}};
        \node[anchor=north west, xshift=-2pt, yshift=8pt] at (d.north west) {\small (d)};

        \node[anchor=north west, xshift=-2pt, yshift=3pt] (cap2) at (c.south) {\textsc{Capitalization Ambiguous}};
        
    \end{tikzpicture}
    \vspace{-1em}
\caption{Query composition and its effect on QK alignment for ambiguous tasks. Panels \textbf{(a, c)} report the FV injection accuracy, while \textbf{(b, d)} show the attention proportion on unambiguous vs. ambiguous examples. The value \textbf{T} above each bar indicates the \textbf{total attention mass} allocated to all example segments. Results are averaged over 100 trials using 10-shot prompts. See Appendix \ref{appendix:q_information} Fig.\ref{fig:Gemma-2-2b_q_info_attn_ambiguous}-\ref{fig:Llama-3.1-8B-Instruct_q_info_attn_ambiguous} for the complete results.}
    \label{fig:q_composition}
\end{figure}

\section{Discussion}

\paragraph{Implications for Theory of ICL}

A recent literature has developed theoretical models of transformers performing in-context learning, especially on linear regression tasks, with by now very well developed theoretical understanding \citep[e.g.][inter alia]{mahankali2024one, von2023transformers, max2024linear, akyurek2022learning,zhang2025training}.
Here, the task is defined in terms of a latent vector $w \in \mathbb{R}^d$, and the task maps $x \in \mathbb{R}^d$ to the output $y = w^T x \in \mathbb{R}$.
Transformers have been shown to be able to learn to perform ICL for this task family in both theory and experiment.
A particularly popular idea is to model linear attention, where a single layer and single head can -- in the simplest case -- implement the prediction
\begin{equation}\label{eq:one-step-gd-rule}
    \hat{y}_{\promptlength+1} = \eta \sum_{i=1}^\promptlength y_i x_i^T x_{\promptlength+1}
\end{equation}
where the (linear) attention head has, for a few-shot example $(x_i, y_i)$ the \emph{key} $x_i$ and the \emph{value} $y_i$; and, at the ICL query $x_{\promptlength+1}$, the \emph{query} $x_{\promptlength+1}$ 
This rule can be interpreted as a single step of gradient descent, and is indeed optimal for a single layer of linear attention performing this task \citep[e.g.][]{mahankali2024one}.
The rule (\ref{eq:one-step-gd-rule}) can also be interpreted in terms of \emph{similarity-based retrieval} of outputs $y_i$ weighted on the basis of the similarity (quantified by the inner product) between inputs $x_i$ and the ICL query $x_{\promptlength+1}$.
Relatedly, \citet{dragutinovic2025softmax} study \emph{classification tasks} with softmax attention, and find a similar similarity-based (kernel-like) retrieval based on similarity between $x_i$ and the ICL query $x_{\promptlength+1}$.

Perhaps surprisingly, contact between this rich theoretical literature considering idealized domains, and mechanistic studies of ICL in real-world LLMs has so far been limited.
As one example, whereas theoretical constructions for ICL often assume that attention favors examples \emph{based on match of $x_i$ to the ICL query $x_{\promptlength+1}$}, this view contrasts with the finding that LLM FV heads assign attention \emph{based on how much an example $(x_i, y_i)$ disambiguates task identity}.
Theoretical studies often target continuous parametric tasks such as linear regression, quite different from the discrete tasks mostly studied in the mechanistic literature on LLMs.
In Appendix~\ref{sec:theory-results}, we show that a simple theoretical model that (i) accounts for nonlinear MLPs and softmax attention found in real LMs, rather than linearized transformers, and (ii) assumes a discrete space of tasks operating on discrete input spaces, accounts for some of our key mechanistic findings about LLMs: Function vectors are a linear superposition of per-example vectors, independent of $x_{n+1}$, but assigning higher weight to examples that are informative about task identity (see Theorem~\ref{theory-result}).

\paragraph{Unifying prior mechanistic views of ICL}
Our finding that prompt-level FVs are well approximated by additive combinations of example-level sub-FVs, with contribution weights modulated by attention alignment, suggests a potential unifying perspective on several previously distinct mechanistic views of ICL.  \textbf{(1) Reconciling additive superposition and attention reweighting} First, our results help reconcile the \emph{additive superposition} perspective, often associated with induction circuit mechanisms \cite{olsson2022context}, with the \emph{context-dependent attention reweighting} perspective found in retrieval-based accounts \cite{min2022rethinking, abernethy2024mechanism, yu2024how}. These two views need not be mutually exclusive within the FV-formation pathway: FV-head attention can selectively route information through Query--Key alignment, while the resulting prompt-level FV can still be well approximated as a linear combination of example-level sub-FVs. In this sense, attention reweighting controls which sub-FVs dominate the composition, while additive superposition describes how these routed signals combine into the extracted FV. \textbf{(2) Relating to superposition phenomena} Second, our framework offers a plausible mechanistic perspective on recently observed phenomena such as \emph{task superposition} \cite{xiong2025everything} and \emph{instruction-demonstration complementarity} \cite{davidson2025do}. Within the identified FV heads, example-level sub-FVs are aggregated into the prompt-level FV by a strong linear approximation, rather than by an obviously winner-take-all process at the FV-formation stage. This does not rule out nonlinear interactions or competition elsewhere in the model, including downstream execution, residual-stream interactions, or final readout. Instead, it suggests that FV-forming heads may provide a relatively linear interface through which multiple demonstration-derived task signals can be composed before later layers transform them into task behavior. \textbf{(3) Interpreting saturation and noise robustness} Third, the stability of the additive approximation across contextualized settings offers a possible interpretation of the \textbf{demonstration saturation} effect \cite{rishabh2024manyshot, gu2025scaling, amanda2025incontextlearning}. Once a sufficiently strong task direction is formed from informative demonstrations, additional examples may contribute signals that are partially redundant with the existing FV, leading to diminishing marginal gains. Similarly, our findings provide a possible FV-level account of the \textbf{label noise robustness} reported by \citet{min2022rethinking}: contextualization can reduce the effective contribution of less informative or ambiguous examples through QK-mediated reweighting, while the subsequent FV aggregation may average residual noise in the task representation. We emphasize that these interpretations are suggestive connections to prior phenomena rather than complete explanations of the full ICL computation.

Together, these results point toward FV superposition as a useful bridge between circuit-level and representational accounts of ICL. Within the extracted FV representation, few-shot examples can be understood as contributing approximately additive task-relevant directions, whose effective weights are modulated by context-sensitive attention routing. This provides a bounded, testable account of FV formation, while leaving open how the resulting task representation interacts with downstream nonlinear computation and broader mechanisms of ICL.

\section{Conclusion}
We presented a prompt-level, intervention-based account of how few-shot demonstrations form function vectors (FVs). Unlike prior aggregate analyses, we isolate the \emph{within-prompt} mechanism, characterizing how example-level signals compose and how contextualization modulates this process. Our findings support three primary conclusions. First, $n$-shot FVs are well-approximated by a near-linear superposition of example-level sub-FVs. This additive structure remains robust before and after contextualization. Second, contextualization acts as an adaptive reweighting mechanism: attention routing shifts to emphasize informative demonstrations that resolve task identity, while preserving the underlying additive template. Third, causal decomposition reveals a functional divergence: Query--Key alignment consistently improves FV quality, especially under ambiguity, whereas Value-mediated refinement is heterogeneous--varying from beneficial to detrimental despite inducing coherent geometric shifts. 
We bridge retrieval-style accounts
(selective routing) with representation-based views (additive
superposition) by showing they are complementary facets
of the same task-representational circuit.

 \paragraph{Limitations and future directions}
     (1) Our conclusions are bounded by the FV framework and by the FV-head localization procedure used to extract task vectors.
     (2) Our study focuses on discrete mapping toy tasks; extending this framework to long-horizon or compositional reasoning remains a key challenge.
     (3) Value effects are still unclear; characterizing when Value updates help versus harm, and whether training-time objectives can regularize this behavior, is an important direction, on which we provide some initial exploratory observations in Appendix~\ref{appendix:v_transformation}.
     (4) While we operationalize ``ambiguity'' through controlled task constructions, future work could benefit from introducing formal scalar metrics to enhance cross-task comparability.
\newpage

\section*{Impact Statement}

This paper presents work whose goal is to advance the field of machine learning. There are many potential societal consequences of our work, none of which we feel must be specifically highlighted here.

\section*{Acknowledgments}
Funded by the Deutsche Forschungsgemeinschaft (DFG, German Research Foundation) -- GRK 2853/1 “Neuroexplicit Models of Language, Vision, and Action” - project number 471607914.

\bibliography{reference}

@inproceedings{brown2020language,
 author = {Brown, Tom and Mann, Benjamin and Ryder, Nick and Subbiah, Melanie and Kaplan, Jared D and Dhariwal, Prafulla and Neelakantan, Arvind and Shyam, Pranav and Sastry, Girish and Askell, Amanda and Agarwal, Sandhini and Herbert-Voss, Ariel and Krueger, Gretchen and Henighan, Tom and Child, Rewon and Ramesh, Aditya and Ziegler, Daniel and Wu, Jeffrey and Winter, Clemens and Hesse, Chris and Chen, Mark and Sigler, Eric and Litwin, Mateusz and Gray, Scott and Chess, Benjamin and Clark, Jack and Berner, Christopher and McCandlish, Sam and Radford, Alec and Sutskever, Ilya and Amodei, Dario},
 booktitle = {Advances in Neural Information Processing Systems},
 editor = {H. Larochelle and M. Ranzato and R. Hadsell and M.F. Balcan and H. Lin},
 pages = {1877--1901},
 publisher = {Curran Associates, Inc.},
 title = {Language Models are Few-Shot Learners},
 url = {https://proceedings.neurips.cc/paper_files/paper/2020/file/1457c0d6bfcb4967418bfb8ac142f64a-Paper.pdf},
 volume = {33},
 year = {2020}
}

@inproceedings{todd2024function,
    title={Function Vectors in Large Language Models}, 
    author={Eric Todd and Millicent L. Li and Arnab Sen Sharma and Aaron Mueller and Byron C. Wallace and David Bau},
    booktitle={Proceedings of the 2024 International Conference on Learning Representations},
    url={https://openreview.net/forum?id=AwyxtyMwaG},
    year={2024},
}

@inproceedings{hendel2023incontext,
    title = "In-Context Learning Creates Task Vectors",
    author = "Hendel, Roee  and
      Geva, Mor  and
      Globerson, Amir",
    booktitle = "Findings of the Association for Computational Linguistics: EMNLP 2023",
    url = "https://aclanthology.org/2023.findings-emnlp.624/",
    doi = "10.18653/v1/2023.findings-emnlp.624",
    pages = "9318--9333",
    year={2023}
}

@article{olsson2022context,
  title={In-context learning and induction heads},
  author={Olsson, Catherine and Elhage, Nelson and Nanda, Neel and Joseph, Nicholas and DasSarma, Nova and Henighan, Tom and Mann, Ben and Askell, Amanda and Bai, Yuntao and Chen, Anna and others},
  journal={arXiv preprint arXiv:2209.11895},
  url={https://transformer-circuits.pub/2022/in-context-learning-and-induction-heads/index.html},
  year={2022}
}

@article{elhage2021mathematical,
  title={A mathematical framework for transformer circuits},
  author={Elhage, Nelson and Nanda, Neel and Olsson, Catherine and Henighan, Tom and Joseph, Nicholas and Mann, Ben and Askell, Amanda and Bai, Yuntao and Chen, Anna and Conerly, Tom and others},
  journal={Transformer Circuits Thread},
  url={https://transformer-circuits.pub/2021/framework/index.html},
  year={2021}
}

@inproceedings{akyurek2022learning,
  author={Ekin Aky{\"{u}}rek and
                  Dale Schuurmans and
                  Jacob Andreas and
                  Tengyu Ma and
                  Denny Zhou},
  title={What learning algorithm is in-context learning? Investigations with
                  linear models},
  booktitle={The Eleventh International Conference on Learning Representations,
                  {ICLR} 2023, Kigali, Rwanda, May 1-5, 2023},
 url={https://openreview.net/pdf?id=0g0X4H8yN4I},
  year={2023},
}

@misc{wei2023larger,
      title={Larger language models do in-context learning differently}, 
      author={Jerry Wei and Jason Wei and Yi Tay and Dustin Tran and Albert Webson and Yifeng Lu and Xinyun Chen and Hanxiao Liu and Da Huang and Denny Zhou and Tengyu Ma},
      year={2023},
      eprint={2303.03846},
      archivePrefix={arXiv},
      primaryClass={cs.CL},
      url={https://arxiv.org/abs/2303.03846}, 
}

@inproceedings{zhenmei2024why,
  author       = {Zhenmei Shi and
                  Junyi Wei and
                  Zhuoyan Xu and
                  Yingyu Liang},
  title        = {Why Larger Language Models Do In-context Learning Differently?},
  booktitle    = {Forty-first International Conference on Machine Learning, {ICML} 2024,
                  Vienna, Austria, July 21-27, 2024},
url={https://openreview.net/pdf?id=2J8xnFLMgF},
  year         = {2024}
}

@inproceedings{min2022rethinking,
  author       = {Sewon Min and
                  Xinxi Lyu and
                  Ari Holtzman and
                  Mikel Artetxe and
                  Mike Lewis and
                  Hannaneh Hajishirzi and
                  Luke Zettlemoyer},
  title        = {Rethinking the Role of Demonstrations: What Makes In-Context Learning
                  Work?},
  booktitle    = {Proceedings of the 2022 Conference on Empirical Methods in Natural
                  Language Processing, {EMNLP} 2022, Abu Dhabi, United Arab Emirates,
                  December 7-11, 2022},
url={https://aclanthology.org/2022.emnlp-main.759/},
  year         = {2022}
}

@inproceedings{liu2022what,
  author       = {Jiachang Liu and
                  Dinghan Shen and
                  Yizhe Zhang and
                  Bill Dolan and
                  Lawrence Carin and
                  Weizhu Chen},
  title        = {What Makes Good In-Context Examples for GPT-3?},
  booktitle    = {Proceedings of Deep Learning Inside Out: The 3rd Workshop on Knowledge
                  Extraction and Integration for Deep Learning Architectures, DeeLIO@ACL
                  2022, Dublin, Ireland and Online, May 27, 2022},
  url={https://aclanthology.org/2022.deelio-1.10/},
  year         = {2022}
}

@inproceedings{abernethy2024mechanism,
  title={A mechanism for sample-efficient in-context learning for sparse retrieval tasks},
  author={Abernethy, Jacob and Agarwal, Alekh and Marinov, Teodor Vanislavov and Warmuth, Manfred K},
  booktitle={International Conference on Algorithmic Learning Theory},
  pages={3--46},
  year={2024},
  organization={PMLR},
  url={https://proceedings.mlr.press/v237/abernethy24a/abernethy24a.pdf}
}

@inproceedings{yu2024how,
    title = "How do Large Language Models Learn In-Context? Query and Key Matrices of In-Context Heads are Two Towers for Metric Learning",
    author = "Yu, Zeping  and
      Ananiadou, Sophia",
    editor = "Al-Onaizan, Yaser  and
      Bansal, Mohit  and
      Chen, Yun-Nung",
    booktitle = "Proceedings of the 2024 Conference on Empirical Methods in Natural Language Processing",
    month = nov,
    year = "2024",
    address = "Miami, Florida, USA",
    publisher = "Association for Computational Linguistics",
    url = "https://aclanthology.org/2024.emnlp-main.192/",
    doi = "10.18653/v1/2024.emnlp-main.192",
    pages = "3281--3292",
}

@inproceedings{
xiong2025everything,
title={Everything Everywhere All at Once: {LLM}s can In-Context Learn Multiple Tasks in Superposition},
author={Zheyang Xiong and Ziyang Cai and John Cooper and Albert Ge and Vasilis Papageorgiou and Zack Sifakis and Angeliki Giannou and Ziqian Lin and Liu Yang and Saurabh Agarwal and Grigorios Chrysos and Samet Oymak and Kangwook Lee and Dimitris Papailiopoulos},
booktitle={Forty-second International Conference on Machine Learning},
year={2025},
url={https://openreview.net/forum?id=5hZCK4Wbex}
}

@article{
lu2025asymptotic,
author = {Yue M. Lu  and Mary Letey  and Jacob A. Zavatone-Veth  and Anindita Maiti  and Cengiz Pehlevan },
title = {Asymptotic theory of in-context learning by linear attention},
journal = {Proceedings of the National Academy of Sciences},
year = {2025},
doi = {10.1073/pnas.2502599122},
URL = {https://www.pnas.org/doi/abs/10.1073/pnas.2502599122}}

@inproceedings{max2024linear,
  author       = {Max Vladymyrov and
                  Johannes von Oswald and
                  Mark Sandler and
                  Rong Ge},
  title        = {Linear Transformers are Versatile In-Context Learners},
  booktitle    = {Advances in Neural Information Processing Systems: 38th Annual Conference
                  on Neural Information Processing Systems 2024, NeurIPS 2024, Vancouver,
                  BC, Canada, December 10 - 15, 2024},
  year         = {2024},
  url={https://proceedings.neurips.cc/paper_files/paper/2024/file/57a3c602f0a1c8980cc5ed07e49d9490-Paper-Conference.pdf}
}

@inproceedings{
yin2025which,
title={Which Attention Heads Matter for In-Context Learning?},
author={Kayo Yin and Jacob Steinhardt},
booktitle={Forty-second International Conference on Machine Learning},
year={2025},
url={https://openreview.net/forum?id=C7XmEByCFv}
}

@inproceedings{jiang2025unlocking,
  author       = {Gangwei Jiang and
                  Caigao Jiang and
                  Zhaoyi Li and
                  Siqiao Xue and
                  Jun Zhou and
                  Linqi Song and
                  Defu Lian and
                  Ying Wei},
  title        = {Unlocking the Power of Function Vectors for Characterizing and Mitigating
                  Catastrophic Forgetting in Continual Instruction Tuning},
  booktitle    = {The Thirteenth International Conference on Learning Representations,
                  {ICLR} 2025, Singapore, April 24-28, 2025},
  year         = {2025},
  url={https://proceedings.iclr.cc/paper_files/paper/2025/file/74fc5575632191d96881d8015f79dde3-Paper-Conference.pdf}
}

@inproceedings{
dong2025understanding,
title={Understanding Task Vectors in In-Context Learning: Emergence, Functionality, and Limitations},
author={Yuxin Dong and Jiachen Jiang and Zhihui Zhu and Xia Ning},
booktitle={The Fourteenth International Conference on Learning Representations},
year={2026},
url={https://openreview.net/forum?id=CLBVilFk7N}
}

@inproceedings{
cho2025revisiting,
title={Revisiting In-context Learning Inference Circuit in Large Language Models},
author={Hakaze Cho and Mariko Kato and Yoshihiro Sakai and Naoya Inoue},
booktitle={The Thirteenth International Conference on Learning Representations},
year={2025},
url={https://openreview.net/forum?id=xizpnYNvQq}
}

@article{
wei2022emergent,
title={Emergent Abilities of Large Language Models},
author={Jason Wei and Yi Tay and Rishi Bommasani and Colin Raffel and Barret Zoph and Sebastian Borgeaud and Dani Yogatama and Maarten Bosma and Denny Zhou and Donald Metzler and Ed H. Chi and Tatsunori Hashimoto and Oriol Vinyals and Percy Liang and Jeff Dean and William Fedus},
journal={Transactions on Machine Learning Research},
issn={2835-8856},
year={2022},
url={https://openreview.net/forum?id=yzkSU5zdwD},
note={Survey Certification}
}

@article{dragutinovic2025softmax,
  title={Softmax {$\geq$} Linear: {Transformers} May Learn to Classify In-Context by Kernel Gradient Descent},
  author={Dragutinovi{\'c}, Sara and Saxe, Andrew M and Singh, Aaditya K},
  journal={arXiv preprint arXiv:2510.10425},
  url={https://arxiv.org/abs/2510.10425},
  year={2025}
}

@inproceedings{zhang2025training,
  title={Training Dynamics of In-Context Learning in Linear Attention},
  author={Zhang, Yedi and Singh, Aaditya K and Latham, Peter E and Saxe, Andrew M},
year={2025},
  booktitle={Forty-second International Conference on Machine Learning},
  url={https://openreview.net/pdf?id=aFNq67ilos}
}

@inproceedings{zhou2023least,
title={Least-to-Most Prompting Enables Complex Reasoning in Large Language Models},
author={Denny Zhou and Nathanael Sch{\"a}rli and Le Hou and Jason Wei and Nathan Scales and Xuezhi Wang and Dale Schuurmans and Claire Cui and Olivier Bousquet and Quoc V Le and Ed H. Chi},
booktitle={The Eleventh International Conference on Learning Representations },
year={2023},
url={https://openreview.net/forum?id=WZH7099tgfM}
}

@inproceedings{zhou2024algorithms,
  title={What Algorithms can Transformers Learn? A Study in Length Generalization},
  author={Zhou, Hattie and Bradley, Arwen and Littwin, Etai and Razin, Noam and Saremi, Omid and Susskind, Joshua M and Bengio, Samy and Nakkiran, Preetum},
  booktitle={The Twelfth International Conference on Learning Representations},
year={2024},
url={https://proceedings.iclr.cc/paper_files/paper/2024/file/45ed1a72597594c097152ef9cc187762-Paper-Conference.pdf}
}

@inproceedings{agrawal2023context,
  author       = {Sweta Agrawal and
                  Chunting Zhou and
                  Mike Lewis and
                  Luke Zettlemoyer and
                  Marjan Ghazvininejad},
  title        = {In-context Examples Selection for Machine Translation},
  booktitle    = {Findings of the Association for Computational Linguistics: {ACL} 2023,
                  Toronto, Canada, July 9-14, 2023},
  year         = {2023},
  url={https://aclanthology.org/2023.findings-acl.564.pdf}
}

@inproceedings{pourreza2023sql,
  author       = {Mohammadreza Pourreza and
                  Davood Rafiei},
  title        = {{DIN-SQL:} Decomposed In-Context Learning of Text-to-SQL with Self-Correction},
  booktitle    = {Advances in Neural Information Processing Systems 36: Annual Conference
                  on Neural Information Processing Systems 2023, NeurIPS 2023, New Orleans,
                  LA, USA, December 10 - 16, 2023},
  year         = {2023},
  url={https://openreview.net/forum?id=p53QDxSIc5}
}

@inproceedings{
davidson2025do,
title={Do different prompting methods yield a common task representation in language models?},
author={Guy Davidson and Todd M. Gureckis and Brenden Lake and Adina Williams},
booktitle={The Thirty-ninth Annual Conference on Neural Information Processing Systems},
year={2025},
url={https://openreview.net/forum?id=fy5InEg0OL}
}

@inproceedings{
bakalova2025contextualize,
title={Contextualize-then-Aggregate: Circuits for In-Context Learning in Gemma-2 2B},
author={Aleksandra Bakalova and Yana Veitsman and Xinting Huang and Michael Hahn},
booktitle={Second Conference on Language Modeling},
year={2025},
url={https://openreview.net/forum?id=rGNAyHReSg}
}

@inproceedings{rozemberczki2022shapley,
  title={The shapley value in machine learning},
  author={Rozemberczki, Benedek and Watson, Lauren and Bayer, P{\'e}ter and Yang, Hao-Tsung and Kiss, Oliv{\'e}r and Nilsson, Sebastian and Sarkar, Rik},
  booktitle={The 31st International Joint Conference on Artificial Intelligence and the 25th European Conference on Artificial Intelligence},
  pages={5572--5579},
  year={2022},
  organization={International Joint Conferences on Artificial Intelligence Organization},
  url={https://www.ijcai.org/proceedings/2022/0778.pdf}
}

@inproceedings{kharlapenko2025scaling,
  title={Scaling sparse feature circuits for studying in-context learning},
  author={Kharlapenko, Dmitrii and Shabalin, Stepan and Conmy, Arthur and Nanda, Neel},
  booktitle={Forty-second International Conference on Machine Learning},
  year={2025},
  url={https://openreview.net/forum?id=Mp8Og8Rpop}
}

@inproceedings{mahankali2024one,
  title={One Step of Gradient Descent is Provably the Optimal In-Context Learner with One Layer of Linear Self-Attention},
  author={Mahankali, Arvind V and Hashimoto, Tatsunori and Ma, Tengyu},
  booktitle={The Twelfth International Conference on Learning Representations},
  year={2024},
  url={https://openreview.net/pdf?id=8p3fu56lKc}
}

@inproceedings{von2023transformers,
  title={Transformers learn in-context by gradient descent},
  author={Von Oswald, Johannes and Niklasson, Eyvind and Randazzo, Ettore and Sacramento, Jo{\~a}o and Mordvintsev, Alexander and Zhmoginov, Andrey and Vladymyrov, Max},
  booktitle={International Conference on Machine Learning},
  pages={35151--35174},
  year={2023},
  organization={PMLR},
  url={https://proceedings.mlr.press/v202/von-oswald23a/von-oswald23a.pdf}
}

@inproceedings{hernandez2024linearity,
  author       = {Evan Hernandez and
                  Arnab Sen Sharma and
                  Tal Haklay and
                  Kevin Meng and
                  Martin Wattenberg and
                  Jacob Andreas and
                  Yonatan Belinkov and
                  David Bau},
  title        = {Linearity of Relation Decoding in Transformer Language Models},
  booktitle    = {The Twelfth International Conference on Learning Representations,
                  {ICLR} 2024, Vienna, Austria, May 7-11, 2024},
  year         = {2024},
  url={https://openreview.net/forum?id=w7LU2s14kE}
}

@inproceedings{conneau2018word,
  author       = {Alexis Conneau and Guillaume Lample and
                  Marc'Aurelio Ranzato and
                  Ludovic Denoyer and
                  Herv{\'{e}} J{\'{e}}gou},
  title        = {Word translation without parallel data},
  booktitle    = {{ICLR} (Poster)},
  publisher    = {OpenReview.net},
  year         = {2018},
  url={https://openreview.net/forum?id=H196sainb}
}

@inproceedings{distinguishing2017nguyen,
  author       = {Kim Anh Nguyen and
                  Sabine Schulte im Walde and
                  Ngoc Thang Vu},
  title        = {Distinguishing Antonyms and Synonyms in a Pattern-based Neural Network},
  booktitle    = {{EACL} {(1)}},
  pages        = {76--85},
  publisher    = {Association for Computational Linguistics},
  year         = {2017},
  url={https://aclanthology.org/E17-1008/}
}

@inproceedings{amanda2025incontextlearning,
  author       = {Bertsch, Amanda  and
      Ivgi, Maor  and
      Xiao, Emily  and
      Alon, Uri  and
      Berant, Jonathan  and
      Gormley, Matthew R.  and
      Neubig, Graham},
  title        = {In-Context Learning with Long-Context Models: An In-Depth Exploration},
  booktitle    = {{NAACL} (Long Papers)},
  pages        = {12119--12149},
  publisher    = {Association for Computational Linguistics},
  year         = {2025},
  url={https://aclanthology.org/2025.naacl-long.605/}
}

@inproceedings{rishabh2024manyshot,
  author       = {Rishabh Agarwal and
                  Avi Singh and
                  Lei Zhang and
                  Bernd Bohnet and
                  Luis Rosias and
                  Stephanie C. Y. Chan and
                  Biao Zhang and
                  Ankesh Anand and
                  Zaheer Abbas and
                  Azade Nova and
                  John D. Co{-}Reyes and
                  Eric Chu and
                  Feryal M. P. Behbahani and
                  Aleksandra Faust and
                  Hugo Larochelle},
  title        = {Many-Shot In-Context Learning},
  booktitle    = {NeurIPS},
  year         = {2024},
  url={https://openreview.net/forum?id=goi7DFHlqS}
}

@inproceedings{sang2022implicitbayesian,
  author       = {Sang Michael Xie and
                  Aditi Raghunathan and
                  Percy Liang and
                  Tengyu Ma},
  title        = {An Explanation of In-context Learning as Implicit Bayesian Inference},
  booktitle    = {{ICLR}},
  publisher    = {OpenReview.net},
  year         = {2022},
  url={https://openreview.net/pdf?id=RdJVFCHjUMI}
}

@inproceedings{
gu2025scaling,
title={Scaling Laws for Many-Shot In-Context Learning with Self-Generated Annotations},
author={Zhengyao Gu and Henry Peng Zou and Aiwei Liu and Yankai Chen and Weizhi Zhang and Philip S. Yu},
booktitle={ICML 2025 Workshop on Long-Context Foundation Models},
year={2025},
url={https://openreview.net/forum?id=U25OPNKfEj}
}
\bibliographystyle{icml2026}

\newpage
\appendix
\onecolumn

\section{FAQ}

\begin{enumerate}
\item \textit{Why an uncontextualized ablation?}

The \emph{uncontextualized} ablation provides a counterfactual baseline that isolates the causal role of \emph{cross-example interactions}. By masking cross-component attention edges, differences to the contextualized setting can be attributed to contextualization (information flow across components) rather than to local encoding or readout changes. Because within-example computation is preserved, each example keeps its intrinsic representation, while the aggregation mechanism remains comparable. This also enables clean sub-FV extraction and additivity tests (Sec.~\ref{sec:linear-superposition}) and supports pathway attributions separating QK \emph{reweighting} from V \emph{representation changes} (Sec.~\ref{section5}). Intuitively, \emph{uncontextualized} isolates components; \emph{contextualized} allows earlier examples to shape later ones via attention.

\item \textit{The function vector is the output of attention, which by definition is a linear combination. Why are the findings about linear superposition in Section~\ref{sec:linear-superposition} then nontrivial?}

Conditioned on fixed attention weights, each head output is linear in $V$. But the weights themselves are \emph{prompt-dependent} nonlinear functions via $\mathrm{softmax}(QK^\top)$, and $Q$ is produced by multi-layer contextual processing (with residual paths and nonlinearities). Thus, ablating demonstrations can change $Q/K$ geometry and globally reconfigure attention, so additivity is not guaranteed. Our test is further nontrivial because we fit a \emph{global} shared ridge-regression weight vector across many prompts (not per-prompt fitting). Thus, a linear fit is nontrivial because the attention weights from $\mathrm{softmax}(QK^\top)$ are in general prompt-dependent.

We further compare against null baselines (mismatched dictionaries, orthogonalized sub-FVs) that preserve marginals and the geometry while breaking directionality; the real-vs-null gap indicates structured example-level superposition rather than a degenerate linear artifact.

\item \textit{Why do we need a symmetric Key patching?}

Bidirectional (symmetric) Key swaps rule out a trivial \emph{content-mismatch} account where attention differs only because unpatched examples happen to better match the last-token Query. Swapping Keys in both directions (unambiguous $\leftrightarrow$ ambiguous) while holding other factors fixed isolates Keys as the causal driver: unambiguous Keys yield higher similarity to the last-token Query and attract attention even without contextualization, while disrupting this Key structure restores generic recency/positional bias. Symmetry ensures the effect tracks Key structure, not example identity or an asymmetric perturbation.

\item \textit{The task suite is restricted to classification-style mappings, failing to represent complex behaviors like creative generation or complex algorithmic tasks.}

We follow relevant prior work to focus on short-horizon, discrete mappings because their computational cost is desirable, and ambiguity can be introduced in a controlled way. We agree that open-ended generation and long-horizon algorithmic tasks are closer to real applications, but we can no longer evaluate on merely the last token; they add a multi-step attention transition when doing autoregressive generation. We therefore view richer tasks as a key extension, and present current results as mechanistic foundations established under controlled conditions.

\item \textit{What does ``on normal datasets, we do not observe a single universal alignment driver" in Section \ref{section6} mean?}

On normal tasks, QK-alignment does not consistently trace to a single prompt component (examples vs query) across tasks/models. Some tasks show heterogeneous patterns (e.g., \textsc{Person--Sport}, \textsc{Park--Country}), while others exhibit \emph{preferential} regimes: \textsc{Country--Capital} is more query-driven (with $x_{n+1}$-only corruption more reliably reducing example attention and FV injection accuracy), whereas \textsc{Present--Past} is more example-driven. Overall, normal tasks reflect redundancy or task-/model-specific preference rather than a universal driver.
\end{enumerate}

\section{Further Discussion of Related Work}
A growing body of work proposes complementary mechanisms for ICL. \textbf{(1) Induction circuits} A mechanistic stream attributes few-shot generalization to \emph{induction heads} \cite{olsson2022context, elhage2021mathematical}, in which specific attention heads copy patterns from earlier examples to the query and stitch them together across layers. \textbf{(2) Retrieval} Retrieval explanation emphasizes query–key geometry: the query attends to example keys with high semantic alignment, and this selective routing--rather than value rewriting--drives performance gains \cite{min2022rethinking, liu2022what, abernethy2024mechanism, yu2024how}. 
\textbf{(3) Implicit Bayesian and Optimization} Algorithmic interpretations view ICL as implicit Bayesian updating or as an in-network optimizer that approximates gradient descent over a hypothesized linear model implemented by attention \cite{wei2023larger, zhenmei2024why, sang2022implicitbayesian}. \textbf{(4) Composition and superposition} Compositionality and superposition perspectives argue that ICL behaviors can combine up, explaining why multi-example prompts can act like additive control signals \cite{dong2025understanding,xiong2025everything, lu2025asymptotic, max2024linear}.
\textbf{(5) Contextualization and Aggregation}
This emerging perspective dives into how individual examples are integrated and calibrated into a unified task representation. \citet{bakalova2025contextualize} shows that the ICL process can be viewed as a two-stage strategy:
Lower layers enrich the representations of individual few-shot examples with information from preceding examples (contextualization).
Middle layers then aggregate the representations of individual examples into a single representation (aggregation), driving the prediction for the query.  \citet{cho2025revisiting} proposes a structured ``three-stage circuit"—comprising Input Encode, Semantics Merge, and Feature Retrieval—to explain how the model aggregates disparate demonstration features into a calibrated prediction. 

Despite these advances, no single account fully explains the breadth of ICL phenomena observed in practice. Induction circuits illuminate copy-and-continue behaviors yet underpredict cases where examples differ substantially from the query; retrieval-style accounts clarify query–key alignment and format sensitivity but do not directly identify where a task is represented; and composition/ superposition perspectives explain additive trends and multi-task coexistence but abstract away prompt-conditioned causal loci inside the network. Against this backdrop, the function vector view is promising but still nascent. Most FV studies estimate task representations by averaging activations across many prompts and validating them via causal injection. However, this dataset-level approach leaves open the natural and common setting of how a single few-shot prompt forms its own FV, how example and query specific contributions superpose or interact, and how FV relates mechanistically to induction and retrieval pathways. Very recent evidence begins to separate FV heads from induction heads and tries to explain how FV heads form during the training phase. However, the mechanism by which FV heads play a role in single-prompt-level ICL remains poorly understood.

\section{ICL tasks}
\label{appendix:ICL_tasks}

In this study, we define the In-Context Learning (ICL) process as a mapping function $\mathcal{F}: x \to y$ derived from a few-shot demonstration context. We categorize the tasks into two groups based on the clarity of this mapping:

\textbf{Normal Tasks} are characterized by a deterministic and globally consistent transformation rule. In these tasks, the semantic or syntactic relationship between the input $x$ and the output $y$ remains invariant across all examples, providing a clear signal for the model to retrieve specific factual or linguistic knowledge (e.g., mapping a country to its capital).

\textbf{Ambiguous Tasks} are designed such that the demonstration context contains a latent mapping space where a single input could plausibly satisfy multiple transformation hypotheses. The specific sources of ambiguity for each task are detailed: \textbf{Present-Past Ambiguous (PP-A):} The ambiguity arises from \textit{morphological invariance} in certain English irregular verbs. For verbs such as ``let'' or ``set,'' the past tense form is orthographically identical to the present tense. This creates a functional conflict between a temporal transformation (tense shifting) and a null transformation, making the intended rule indistinguishable based on such samples. \textbf{Chinese Ambiguous (ZH-A):} The ambiguity stems from the \textit{partial orthographic overlap} between Simplified and Traditional Chinese character sets. While the task objective is script conversion (Simplified $\to$ Traditional), many characters are ``inherited characters'' that remain unchanged in both systems. This forces the model to resolve the competition between a script-based transition and literal character preservation. \textbf{Japanese Ambiguous (JA-A):} The ambiguity here originates from \textit{cross-lingual task interference}. In this setup, the model is prompted with English-Japanese pairs. However, some Japanese translations are also correct in Chinese. Hence, the ambiguous examples consist of English-Japanese pairs that would also be correct English-Chinese pairs. %
\textbf{French Ambiguous (FR-A):} This task exploits \textit{cross-lingual lexical cognates}--words with identical spelling and meaning across languages. In an English-to-French translation context, tokens such as ``menu'' exist in both vocabularies. The ambiguity emerges from the competition between a cross-lingual translation mapping and a mono-lingual persistence mapping within the shared embedding space. \textbf{Capitalize Ambiguous (CAP-A):} The source of ambiguity is the \textit{pre-training priors} associated with proper nouns. When the task is to perform a lowercase-to-capitalized transformation, words that are inherently capitalized in the pre-training corpus (e.g., ``Jupiter'') satisfy the output requirement without undergoing any change. This creates a conflict between the task's formatting rule and the tokens' natural distribution in the training data.

\begin{table}[h]
    \centering
    \begin{tabular}{ccccc}
    \toprule
        Task Name &  Task Abbreviation & Example & Task Source\\
        \midrule
        \textbf{Normal Task} &&& \\
        \midrule
    \textsc{Country-Capital}& \textbf{CC} & China:  Beijing & \citet{todd2024function}\\
    \textsc{Person-Sport} & \textbf{PS} & Lionel Messi: soccer& \citet{hernandez2024linearity} \\
    \textsc{Park-Country}& \textbf{PC} & Talampaya National Park: Argentina & \citet{todd2024function}\\
    \textsc{Present-Past} & \textbf{PP} & swim: swam & \citet{todd2024function}\\
    \textsc{English-French}& \textbf{EN-FR} & west: ouest & \citet{conneau2018word}\\
    \midrule
    \textbf{Ambiguous Task} &&&\\
    \midrule
    \textsc{Present-Past Ambiguous}& \textbf{PP-A} & let: let; make: made & \citet{todd2024function}\\
    \textsc{Chinese Ambiguous} &\textbf{ZH-A} & \raisebox{-0.1\height}{\includegraphics[height=2.5ex]{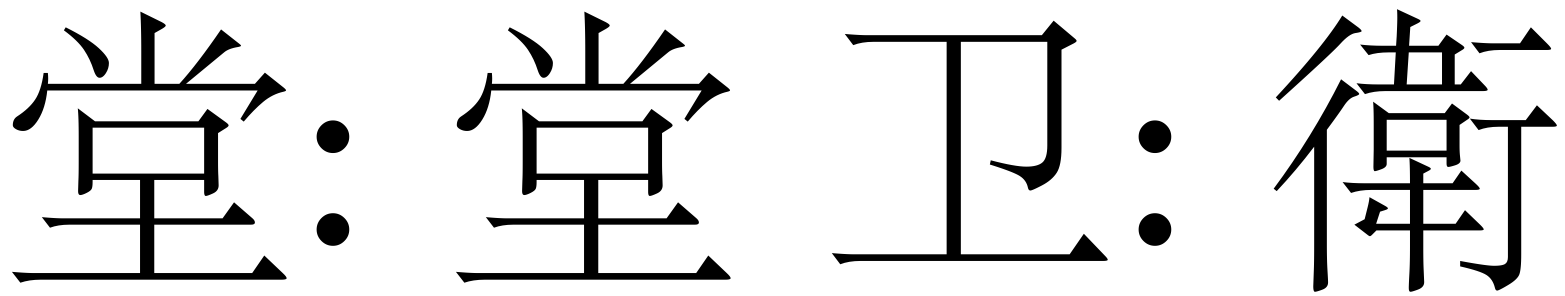}} & new task\\
    \textsc{Japanese Ambiguous}& \textbf{JA-A} & \raisebox{-0.1\height}{\includegraphics[height=3ex]{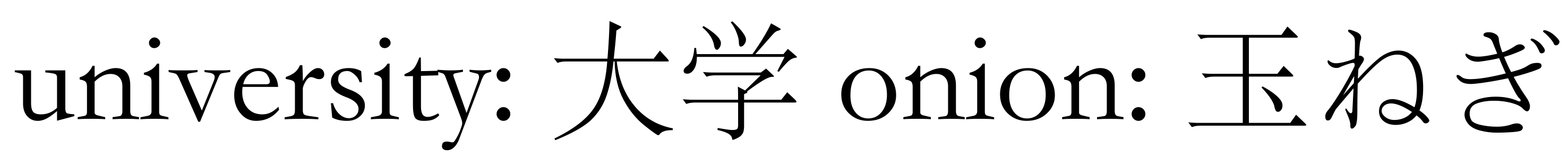}} & new task\\
    \textsc{French Ambiguous} & \textbf{FR-A} & menu: menu; soldier: soldat& \citet{conneau2018word}\\
    \textsc{Capitalize Ambiguous}& \textbf{CAP-A} &Jupiter: Jupiter; indigo: Indigo & \citet{distinguishing2017nguyen}\\
    \bottomrule
    \end{tabular}
    \caption{Summary of ICL tasks. \textbf{Normal Tasks} follow a consistent transformation rule. \textbf{Ambiguous Tasks} introduce mapping uncertainty through different mechanisms: \textit{morphological invariance} (PP-A), \textit{orthographic overlap} in scripts (ZH-A) or cognates (FR-A), \textit{pre-training priors} on proper nouns (CAP-A), and \textit{cross-lingual task interference} (JA-A) where the target output is valid in multiple language contexts. For each ambiguous task, we show both an \textit{ambiguous} (e.g., let: let in the PP-A task) and an \textit{unambiguous} (e.g., make: made in the PP-A task) example. See Appendix~\ref{appendix:ICL_tasks} for definitions.}
    \label{tab:ICL_tasks}
\end{table}

\newpage

\section{Models used}
\label{appendix:model_used}
\begin{table}[h]
    \centering
    \begin{tabular}{ccccc}
    \toprule
    Model (Huggingface ID)& $|L|$ & $|a|$ & $d_{model}$ \\
    \midrule
     \texttt{google/gemma-2-2b} & 26 & 8&  2304 \\ 
     \texttt{google/gemma-2-9b} & 42& 16 & 3584 \\
    \texttt{google/gemma-2-27b} & 46 & 32 & 4608 \\
    \texttt{meta-llama/Llama-3.2-1B} & 16 & 32 & 2048 \\
    \texttt{meta-llama/Llama-3.2-3B} & 28 & 24 & 3072 \\
    \texttt{meta-llama/Llama-3.1-8B-Instruct} & 32 & 32 & 4096 \\
    \bottomrule
    \end{tabular}
    \caption{Models studied in this work. We use Huggingface implementations for all models. We report the number of layers $|L|$, the number of attention heads per layer $|a|$, and the dimension of hidden states $d_{model}$ for each model.}
    \label{tab:models_used}
\end{table}

\section{Methodology details}
\label{appendix:methodology}

\subsection{In-Context Learning}

In this work, we study a family of tasks $\mathcal{T}$. Each task $t \in \mathcal{T}$ defines a distribution $\mathcal{D}_t$ over input–output pairs $(x, y) \in \mathcal{X} \times \mathcal{Y}$ and a mapping $g_t : \mathcal{X} \to \mathcal{Y}$. 
Let $\mathcal{V}$ denote the vocabulary of an LLM, with $\mathcal{X}, \mathcal{Y} \subseteq \mathcal{V}$.

An $n$-shot ICL prompt for task $t$ is a concatenation of $n$ example segments followed by a single \emph{ICL query}:

\[p^{(t)}_n = \big((x_1,y_1),\ldots,(x_n,y_n),x_{n+1}\big)\]
where $(x_i, y_i) {\sim} \mathcal{D}_t$ are the example pairs. 
The \emph{ICL query} is $x_{n+1}$.
Specifically, we use the term \emph{0-shot} to refer to prompts that contain only the ICL query, i.e., $p_0 = (x_1)$. In practice, the input $x_i$ and output $y_i$ are also followed by an input/output separator $t_i, n_i$.

Let $f_\theta$ denote a pretrained model with parameters $\theta$ that maps a prompt to a distribution over the next tokens. 
Given $p^{(t)}_n$, the model is expected to produce the output $y_{n+1} = g_t(x_{n+1})$ consistent with the task specified by the examples. 
\[
\hat{y}_{n+1} = \arg\max_{y \in \mathcal{V}} f_\theta(y \mid p^{(t)}_n)
\]

\subsection{Function Vector}
\label{appendix: method_fv}
\paragraph{Localization of FV heads}  
We localize FV heads using the \emph{causal mediation analysis} procedure introduced by \citet{todd2024function}. For each task \(t \in \mathcal{T}\) with prompt set \(P_t = \{p^{(t)}_i\}\), we compute each attention head’s task-conditioned mean activation \(a_{\ell j}\) at the last token \(t_{n+1}\):
\[
\bar{a}^{\,t}_{\ell j} = \frac{1}{|P_t|}\sum_{p^{(t)}_i\in P_t} a_{\ell j}(p^{(t)}_i).
\]
Then we create corrupted prompts \(\tilde{p}^{(t)}_i\) by randomly shuffling output labels \(\tilde y_i\) while keeping inputs \(x_i\) fixed. Running the model on these uninformative prompts, we replace \(a_{\ell j}\) with \(\bar{a}^{\,t}_{\ell j}\) and measure its \emph{causal indirect effect} (CIE) toward producing the correct answer \(y_{iq}\):
\[
\mathrm{CIE}(a_{\ell j} \mid \tilde{p}^{(t)}_i)
= f_\theta\big(\tilde{p}^{(t)}_i \mid a_{\ell j}:=\bar{a}^{\,t}_{\ell j}\big)[y_{iq}]
- f_\theta\big(\tilde{p}^{(t)}_i\big)[y_{iq}].
\]
A larger \(\mathrm{CIE}\) implies that substituting the “correct” mean activation increases the probability of the target answer under the corrupted prompt, indicating that the head contributes causally to ICL.  
Each head’s average indirect effect (AIE) is obtained by averaging CIE across all corrupted prompts in a specific task:
\[
\mathrm{AIE}(a_{\ell j})
= \frac{1}{|\tilde P_t|} \sum_{\tilde{p}^{(t)}_i\in \tilde P_t}
  \mathrm{CIE}(a_{\ell j} \mid \tilde{p}^{(t)}_i).
\]
Heads with consistently high AIE values across tasks are identified as \emph{FV heads}. The quantity of FV heads is an empirical value which is positively correlated with the overall number of attention heads in the model (Table.~\ref{tab:FV_settings}).

\paragraph{Extracting per-prompt function vectors}  
Whereas previous FV work typically averaged activations across many prompts to obtain a dataset-level task representation, we extract \textbf{per-prompt} FVs--capturing how a single few-shot prompt forms its own task vector. For a specific prompt \(p^{(t)}_n\), the FV is the sum of all identified FV heads’ activations:
\[
v_{\mathrm{FV}}(p^{(t)}_n)
= \sum_{a\in\mathcal{A}_{\mathrm{FV}}} a\big(p^{(t)}_n,\, t_{n+1}\big),
\]
where \(\mathcal{A}_{\mathrm{FV}}\) is the set of FV heads localized via AIE. This per-prompt vector enables us to analyze how example-level contributions superpose linearly and how contextualization alters the final task direction.

\paragraph{Injection and Evaluation}
\label{appendix: method_inj_eva}
To test whether a per-prompt FV is sufficient to re-induce the task, we perform \emph{function vector injection}. Given a 0-shot prompt \(\tilde{p}^{(t)}_0\), we add \(v_{\mathrm{FV}}(p^{(t)}_n)\) into the residual stream at the FV heads’ output positions:
\[
h^{(\ell)}_{n+1} \leftarrow h^{(\ell)}_{n+1} + \alpha\, v_{\mathrm{FV}}(p^{(t)}_n),
\]
where \(h^{(\ell)}_{n+1}\) is the hidden state of the last token \(t_{n+1}\) at layer \(\ell\), and \(\alpha\) is a scaling factor controlling the injection strength. After injection, the correct answer \(y_{n+1}\) should receive the \emph{highest probability mass} in the output distribution \(f_\theta^{+}(y_{n+1} \mid \tilde{p}^{(t)}_0)\), demonstrating that the injected FV successfully reinstates the task function \(g_t\) within an otherwise uninformative context.  
To evaluate the quality of an FV, we follow a systematic injection protocol across layers and scaling factors. First, we inject the same FV sequentially from the input layer (\(\ell=0\)) up to approximately one-third of the model’s total layers (\(L/3\)). As shown in \citet{todd2024function}, the injection accuracy remains smooth and stable in early layers, reaching a peak around \(L/3\), beyond which performance drops sharply as deeper layers encode task-specific decoding states.  
Second, for each layer \(\ell\) and scaling factor \(\alpha \in \mathcal{A}\), we measure the accuracy \(\mathrm{Acc}^{(\ell,\alpha)}\) using the zero-shot prompt set \(\tilde{P}_t\) that randomly samples the ICL query $x_{n+1}$ from the input space. We then compute the \textbf{maximum injection accuracy} across all combinations of \((\ell,\alpha)\):
\begin{align*}
\mathrm{Acc}^{(\ell,\alpha)}(v_{\mathrm{FV}})
&= \frac1M \sum_{i=1}^M \mathbf{1}\!\Big[
\arg\max_{y\in\mathcal{V}} f^{+}_{\theta}\big(y \mid p^{(t)}_{0,i}\big)
= \tilde y_{\,n+1,i}\Big]\\
\mathrm{Acc}_{\mathrm{max}}(v_{\mathrm{FV}})
&= \max_{\ell\le L',\,\alpha\in\mathcal{A}}
  \mathrm{Acc}^{(\ell,\alpha)}(v_{\mathrm{FV}})
\end{align*}
where \(\tilde y_{\,n+1,i}\) denotes the answer of the \(i\)-th prompt in the test dataset and $L$ denotes the upper layer limit for stable injection. Higher \(\mathrm{Acc}_{\mathrm{max}}\) indicates stronger task reinstatement and higher-quality FVs. This layer- and scale-sweep evaluation provides a robust measure of how well a per-prompt FV captures the underlying task function across the model’s representational hierarchy.

\begin{table}[h]
    \centering
    \begin{tabular}{cccc}
    \toprule
         Model Name&  FV Head Number & Scaled Factor Range & Injection Range\\
        \midrule
         \texttt{gemma-2-2b} & 20 & [1.0, 25.0] & Layer 0: Layer 9\\
         \texttt{gemma-2-9b} & 20& [1.0, 30.0]& Layer 0: Layer 13 \\
         \texttt{gemma-2-27b} & 70& [1.0, 30.0]&  Layer 0: Layer 15\\
         \midrule
         \texttt{Llama-3.2-1B} & 20 & [1.0, 4.0] & Layer 0: Layer 5\\
         \texttt{Llama-3.2-3B} & 50 & [0.5, 4.0] &  Layer 0: Layer 19\\
         \texttt{Llama-3.1-8B-Instruct} & 50 & [1.0, 24.0]& Layer 0: Layer 22 \\
         \bottomrule
    \end{tabular}
    \caption{Detailed hyperparameters used in FV extraction and injection.}
    \label{tab:FV_settings}
\end{table}

\subsection{Contextualization and Aggregation}
\label{appendix:method_cxt}

\paragraph{Edge ablation}
\label{appendix: method_edge-ablation}
The goal of edge ablation is to isolate how each individual example or the ICL query contributes to the overall prompt’s task representation. By severing specific communication channels between components (e.g., examples \(\to\) examples or examples \(\to\) ICL query), we can measure the causal effect of each component on performance and thus infer its functional role.  
Concretely, we construct a token-position graph of the prompt where nodes correspond to tokens and directed edges represent attention flows. Following the method of \citet{bakalova2025contextualize}, we keep intact all edges within each example component \((x_i, y_i)\) and all edges from prompt components to the last token $t_{n+1}$ but apply an attention ablation mask to zero-out all attention weights between any token in component \(i\) and any token in component \(j\) (for \(i\neq j\)), including the ICL query component. We use the term \textbf{uncontextualized} (Fig.~\ref{fig:edge_ablation_diagram}) to refer to this ablation setting.

Let the prompt consist of components (examples and the ICL query) indexed by 
\[
\mathcal{C} = \{Ex_1,Ex_2,\dots,Ex_n,\text{\,Q}\},
\]
where component \(c_i\in\{Ex_1,\dots,Ex_n\}\) corresponds to example \((x_i,\,y_i)\), and $c_i = \text{Q}$ corresponds to the ICL query input \(x_{n+1}\). Define an attention graph \(G = (V,E)\) at a given attention head \(\mathcal{H}\), where each token position is a node 
\(v\in V\), and 
\[
E = \{(v\to w) : \alpha^{\mathcal{H}, (v\to w)}_{attn} \neq 0\ \text{for attention head } \mathcal{H} \}
\]
represents attention edges. We keep all intra‐component edges:
\[
E_{\text{intra}} = \big\{ (v\to w)\ \mid\ v,w\in c_i\big\}
\] We then ablate (zero‐out) all edges between distinct components, including ICL query by defining:
\[
E_{\text{ablated}} = \big\{(v\to w)\ \mid\ v\in c_i,\ w\in c_j,\ i\neq j\big\}
\]
The ablated model replaces attention score $\alpha_{attn}$:
\[
\alpha^{\mathcal{H}, (v\to w)}_{attn} \leftarrow 0,\quad \forall\,(v\to w)\in E_{\text{ablated}}
\]

\begin{figure}[t]
    \centering
    \includegraphics[width=1.0\linewidth]{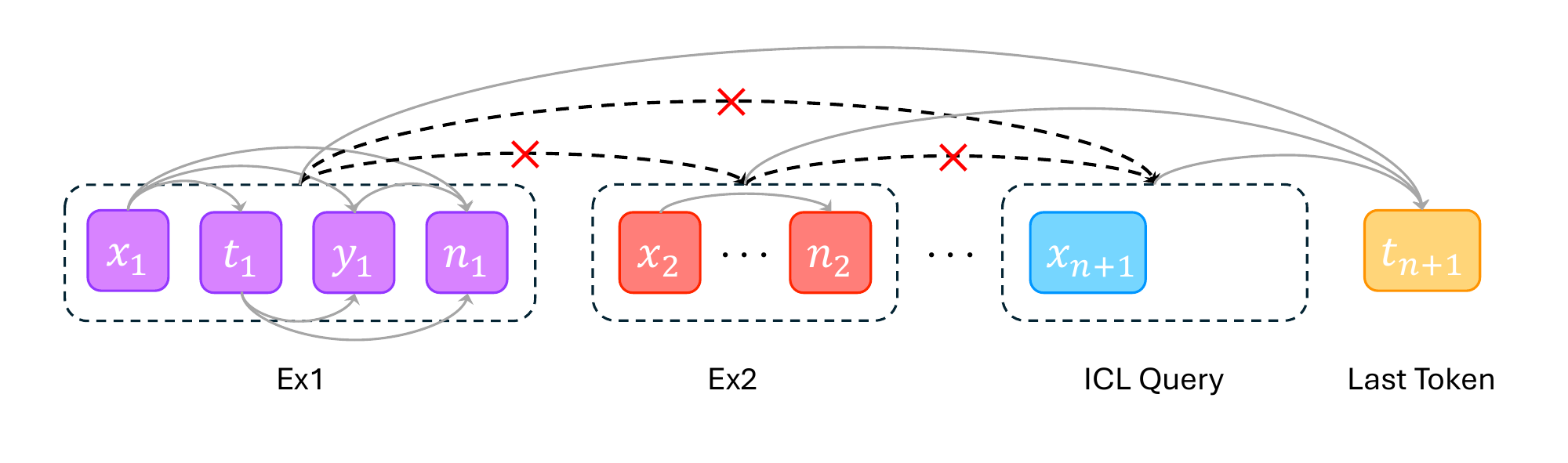}
    \vspace{-3em}
    \caption{A diagram of \textbf{uncontextualized} ablation. This diagram illustrates the intervention used to isolate ICL component-level contributions by severing specific attention pathways. We keep all edges within each example and all edges from prompt components to the last token $t_{n+1}$ while zeroing out the rest of the edges.
    }
    \label{fig:edge_ablation_diagram}
\end{figure}

\paragraph{Q/K/V patching}
\label{appendix: method_qkv-patching}
We perform Q/K/V patching to study how the individual components of the attention mechanism contribute to the formation of the function vector and to bridge this with prior findings on linear additive and attention-weight reweighting. Specifically, we first run a “patching dataset” (which may be ablated or intact) to collect the query, key, and value vectors for target token positions. We then run the original prompt dataset, and replace the \(\mathbf{q}^{\mathcal{H}}\), \(\mathbf{k}^{\mathcal{H}}\), or \(\mathbf{v}^{\mathcal{H}}\) vectors with those recorded from the patching dataset on \textbf{all attention heads}. By selectively substituting Q, K or V activations, we dissect how each component affects the downstream FV formation. \footnote{In all Q patching experiments we only replace the Query vector at the last token position. For K/V patching we only replace the Key/Value vectors used in the attention update of the last
token (i.e., the keys and values that the last token attends to), while keeping the keys and values used for all other query positions unchanged.}
\begin{align*}
    f_\theta(p_n^{(t)} &|\mathbf{q}^{\mathcal{H}},\mathbf{k}^{\mathcal{H}}, \mathbf{v}^{\mathcal{H}} := \mathbf{q}_{\text{patched}}^{\mathcal{H}}, \mathbf{k}_{\text{patched}}^{\mathcal{H}}, \mathbf{v}_{\text{patched}}^{\mathcal{H}}) \rightarrow  v_{\mathrm{FV}}
\end{align*}

\section{Linear superposition}
\label{appendix: linear_superposition}

To quantify the extent to which an $n$-shot FV can be explained as a linear superposition of per-example sub-FVs, we perform a \emph{global} ordinary least squares (OLS) regression with weights shared across a batch of $B$ prompts. Let $\{v_i^{(b)} \in \mathbb{R}^D\}_{i=1}^n$ denote the sub-FVs extracted for the $b$-th prompt instance, and let $v_{\mathrm{FV}}^{(b)} \in \mathbb{R}^D$ be the corresponding full FV. Rather than fitting prompt-specific coefficients, we solve a single ridge-regularized regression,
\[
\min_{w \in \mathbb{R}^n}\;\sum_{b=1}^B \left\|\sum_{i=1}^n w_i\, v_i^{(b)} - v_{\mathrm{FV}}^{(b)}\right\|_2^2 \;+\; \lambda \|w\|_2^2,
\]
which yields a \emph{global optimal weight vector} $w^\ast$ shared across all $B$ prompts. This formulation enforces that the same linear combination of sub-FVs must simultaneously explain the full FV across diverse prompt realizations, making the test substantially stricter than per-prompt fitting. We solve the associated normal equations $\big(\sum_b X_b^\top X_b + \lambda I\big)w = \sum_b X_b^\top y_b$, and evaluate reconstruction quality using per-prompt cosine similarity and $R^2$.

\paragraph{Null hypothesis} To demonstrate that the observed linear superposition is non-trivial, we compare the real fit against two null baselines designed to rule out distinct degenerate explanations. (\textbf{Null-1: mismatched dictionary}) We randomly permute the batch dimension of all sub-FVs, breaking the prompt-level correspondence between sub-FVs and the target FV while preserving their marginal distributions. Poor performance under this null indicates that reconstruction depends on the correct prompt-specific alignment between sub-FVs and the corresponding target FV, rather than on fitting to the overall distribution of sub-FVs. (\textbf{Null-2: orthogonalized sub-FVs}) We apply a random orthogonal transformation (dimension permutation with random sign flips) to all sub-FVs, preserving norms and intra-set geometry while destroying alignment with the target direction. Failure here shows that the result is not explained by generic low-rank structure or scale effects in hidden states, but instead relies on meaningful directional alignment.

Across tasks and models, the real fits achieve substantially higher cosine similarity and $R^2$ than both nulls. In particular, the orthogonal null collapses to near-zero cosine and near-zero (often negative) $R^2$, indicating that the observed superposition crucially depends on semantic alignment between sub-FVs and the full FV rather than on distributional or geometric invariants alone. The mismatched-dictionary null remains well below the real fit, yet still attains a non-trivial mean cosine similarity (often $\approx 0.80$--$0.85$) and $R^2$ (often $\approx 0.70$--$0.80$). This residual performance suggests that sub-FVs share some cross-prompt structure--e.g., common task-level representation or a partially shared subspace--so that a global linear map can recover part of the target even without the correct prompt-level pairing; however, the large gap to the real fit confirms that accurate reconstruction ultimately relies on the correct association between examples and prompts.

Together, these tests establish that the high-quality linear reconstruction is not a tautological consequence of the linearity of attention, but reflects a robust and non-trivial additive structure in how example-level information is aggregated into the prompt-level FV.

\begin{figure}[t]
    \centering
\begin{tikzpicture}
        \newcommand{\figwidth}{0.22\textwidth} 
        \newcommand{\colgap}{0.46\textwidth}
        \newcommand{\innergap}{0.225\textwidth}
        \newcommand{\vpitch}{3.6cm}
        
        \tikzset{
            imgnode/.style={anchor=north, inner sep=0},
            lblnode/.style={anchor=south, font=\ttfamily\footnotesize, yshift=1pt}
        }

        \newcommand{\modelblock}[3]{
            \node[lblnode] (title) at (#1) {#3};
            \node[imgnode] (leftimg) at ($(title.south) + (-0.5*\innergap, 0)$) 
                {\includegraphics[width=\figwidth]{figures/#2/linear_superposition_cos.pdf}};
            \node[imgnode] (rightimg) at ($(title.south) + (0.5*\innergap, 0)$) 
                {\includegraphics[width=\figwidth]{figures/#2/linear_superposition_r2.pdf}};
        }

        \modelblock{0,0}{gemma-2-2b}{gemma-2-2b}
        \modelblock{\colgap,0}{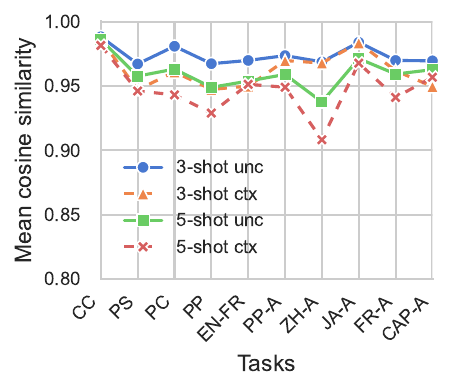}{Llama-3.2-1B}

        \modelblock{0,-\vpitch}{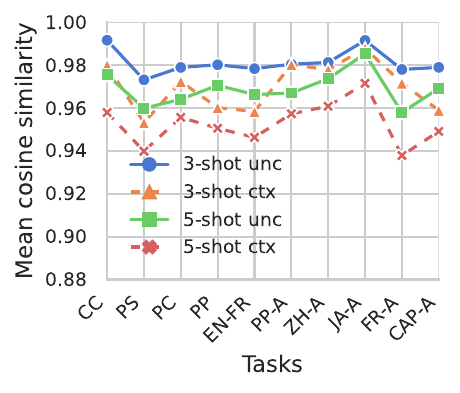}{gemma-2-9b}
        \modelblock{\colgap,-\vpitch}{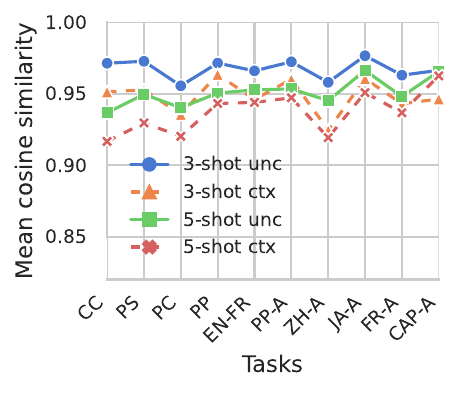}{Llama-3.2-3B}

        \modelblock{0,-2*\vpitch}{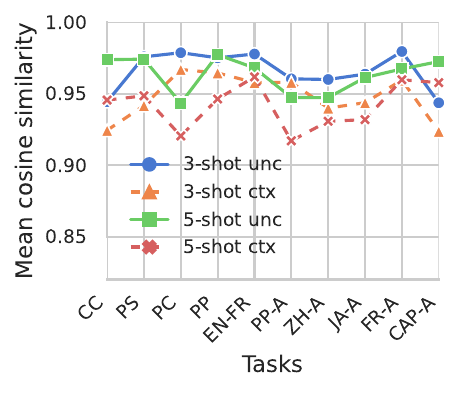}{gemma-2-27b}
        \modelblock{\colgap,-2*\vpitch}{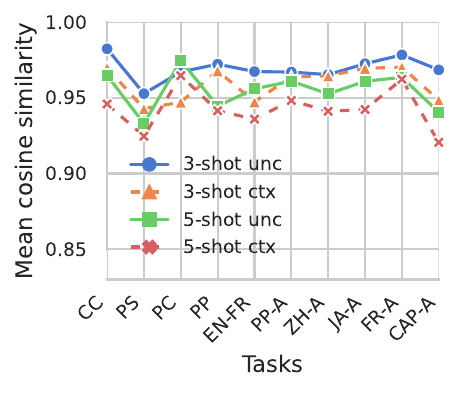}{Llama-3.1-8B-Instruct}
    \end{tikzpicture}
    \vspace{-1em}
    \caption{
        \textbf{Full Linear Superposition Results.} For each model, we display: 
        (Left) the mean cosine similarity between the observed Function Vector (FV) and the OLS reconstruction; 
        (Right) the mean $R^2$ of the fit across all layers. This is an extension of Main Paper Fig.~\ref{fig:sec3_cosine_r2}.
    }
    \label{fig:sec3_full_results}
\end{figure}

\begin{table}[h]
    \centering
    \small
    \setlength{\tabcolsep}{4pt}
    \begin{tabular}{llccc}
        \toprule
        \textbf{Model} & \textbf{Task} & \textbf{Cosine} & $\boldsymbol{R^2}$ & \textbf{OLS / Full-FV} \\
        \midrule
        \multirow{4}{*}{\texttt{gemma-2-9b}}
        & \textsc{Country--Capital} (ctx) & 0.846 & 0.694 & 1.116 \\
        & \textsc{Country--Capital} (unc) & 0.923 & 0.830 & 1.000 \\
        & \textsc{Present--Past Ambiguous} (ctx) & 0.910 & 0.812 & 0.818 \\
        & \textsc{Present--Past Ambiguous} (unc) & 0.936 & 0.826 & 1.000 \\
        \midrule
        \multirow{4}{*}{\texttt{gemma-2-2b}}
        & \textsc{Country--Capital} (ctx) & 0.918 & 0.838 & 0.936 \\
        & \textsc{Country--Capital} (unc) & 0.866 & 0.733 & 0.975 \\
        & \textsc{Present--Past Ambiguous} (ctx) & 0.874 & 0.756 & 0.884 \\
        & \textsc{Present--Past Ambiguous} (unc) & 0.965 & 0.899 & 0.923 \\
        \bottomrule
    \end{tabular}
    \caption{
    \textbf{Preliminary 20-shot validation of linear superposition.}
    We reconstruct full-prompt FVs from example-level sub-FVs in 20-shot prompts and report both representation-level reconstruction quality and causal recovery under FV injection. \textbf{OLS / Full-FV} denotes the ratio between the injection accuracy of the OLS-reconstructed FV and that of the true full-prompt FV.
    }
    \label{tab:many_shot_linear_superposition}
\end{table}

\begin{table}[h]
    \centering
    \small
    \setlength{\tabcolsep}{4pt} 
    
    \begin{tabular}{lcc}
    \toprule
    \textbf{Task Name (5-shot)} & \textbf{unc} & \textbf{ctx} \\
    \midrule
    \textsc{Country-Capital}   & 0.1565, 0.1973, 0.2143, 0.2538, \textbf{0.3118} & 0.1732, \textbf{0.3590}, 0.2267, 0.1868, 0.1839 \\
    \textsc{Person-Sport}      & 0.1416, 0.1699, 0.2144, 0.2340, \textbf{0.3203} & 0.1296, \textbf{0.4070}, 0.2808, 0.1508, 0.1341 \\
    \textsc{Park-Country}      & 0.1302, \textbf{0.4176}, 0.2754, 0.1842, 0.1066 & 0.1819, 0.1923, 0.1992, 0.2778, \textbf{0.3428} \\
    \textsc{Present-Past}      & 0.1209, 0.1598, 0.1965, 0.2489, \textbf{0.4201} & 0.1631, \textbf{0.3238}, 0.2367, 0.2796, 0.2394 \\
    \textsc{English-French}    & 0.1819, 0.1923, 0.1992, 0.2778, \textbf{0.3428} & 0.1995, \textbf{0.2876}, 0.2332, 0.2743, 0.2634 \\
    \addlinespace[0.3em]
    \textsc{Present-Past Ambiguous}    & 0.0989, 0.2090, 0.1637, 0.3152, \textbf{0.3501} & 0.2091, 0.3947, 0.0656, \textbf{0.4971}, -0.0113 \\
    \textsc{Chinese Ambiguous}      & 0.0425, 0.2689, 0.0616, 0.3816, \textbf{0.4063} & 0.1835, \textbf{0.4628}, -0.0501, 0.4132, 0.1829 \\
    \textsc{Japanese Ambiguous}     & 0.0864, 0.2033, 0.1591, 0.3292, \textbf{0.3848} & 0.1575, \textbf{0.4154}, 0.1683, 0.3679, 0.0251 \\
    \textsc{French Ambiguous}       & 0.0997, 0.3154, 0.1540, \textbf{0.3767}, 0.2767 & 0.2138, 0.4389, 0.0542, \textbf{0.4488}, 0.1245 \\
    \textsc{Capitalize Ambiguous}   & 0.1426, 0.2978, 0.1785, \textbf{0.3088}, 0.2192 & 0.1789, \textbf{0.5102}, -0.0012, 0.4849, -0.0874 \\
    \bottomrule
    \end{tabular}

    \caption{A demonstration of relative contribution of examples to the overall FV. We report the \textbf{normalized global optimal} Ordinary Least Squares (OLS) weights obtained by decomposing the overall FV into individual per-example sub-FVs on \texttt{gemma-2-2b}. Uncontextualzied ($\textbf{unc}$) weights correspond to sub-FVs extracted from examples processed in isolation. Contextualized ($\textbf{ctx}$) weights represent the influence of examples processed within the full ICL sequence. Bold values indicate the dominant example within each sequence. In this case, the unambiguous examples for ambiguous tasks are fixed at the positions of \textbf{Ex2}, \textbf{Ex4}.}
    \label{tab:linear_superposition_weights}
\end{table}

\clearpage

\section{Supplemental Evidence for Section 4.1: Intrinsic Attractiveness of Key}
\label{appendix:k_attraction}

We implement \emph{Key patching} as a controlled intervention on the attention computation at FV heads. To construct corrupted prompts for Key patching, we ensure that the tokenization structure of the prompt remains unchanged. Specifically, for each ambiguous dataset, we prepare two aligned prompt pools drawn from the \emph{same task}: one pool of ambiguous examples and one pool of unambiguous examples. When replacing examples, we substitute each example segment with another example from the same task that has the \emph{same number of tokens after tokenization}, including both input and output sub-tokens. This constraint guarantees that the overall prompt length, token positions, and positional encodings remain identical to the original prompt, preventing confounds due to changes in sequence length or token alignment. The ICL query $x_{n+1}$ and the overall prompt template are kept unchanged.

Key patching is implemented using a two-stage forward procedure.
In the first forward pass, we run the \emph{uncontextualized} model on the prompt and cache the Query activations at the last token position $t_{n+1}$ for all FV heads and all layers.
This cached Query is treated as fixed throughout the intervention.
In the second forward pass, we run the uncontextualized model on the same prompt and intervene at the attention computation by replacing the Key activations at the target example token positions with the cached Keys from the corresponding example type and keeping the Query of the last token unchanged.

\begin{figure}[h]
    \centering
    \begin{tikzpicture}
        \node[anchor=south west,inner sep=0] (image) at (0,0) {\includegraphics[width=1.0\linewidth]{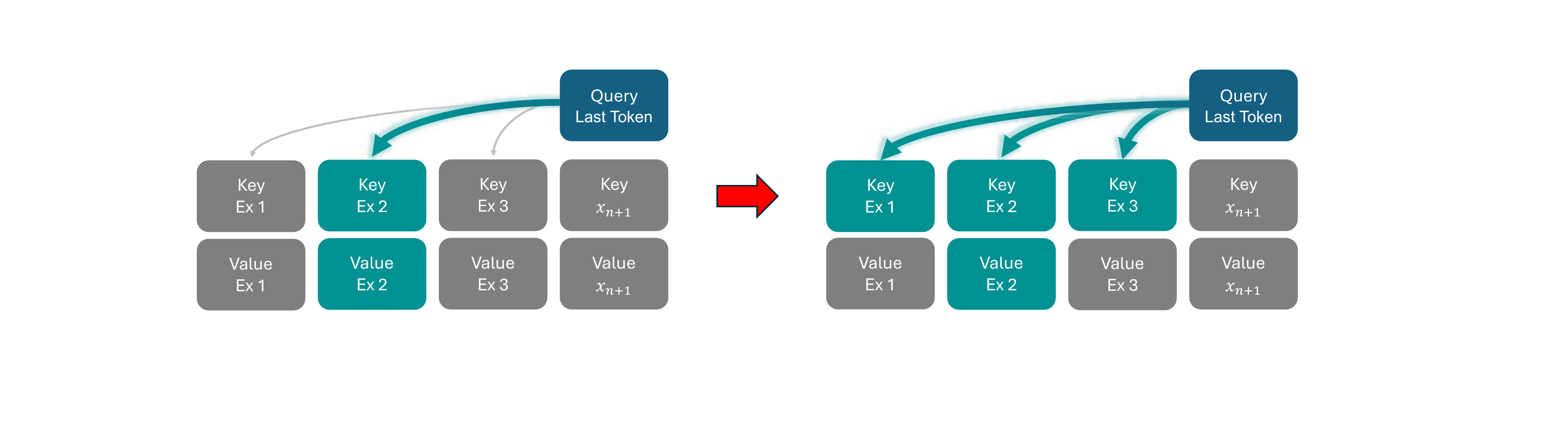}};
        \node[anchor=north west] at (image.north west) {(a)};
    \end{tikzpicture}

    \vspace{-2em}
    \begin{tikzpicture}
        \node[anchor=south west,inner sep=0] (image) at (0,0) {\includegraphics[width=1.0\linewidth]{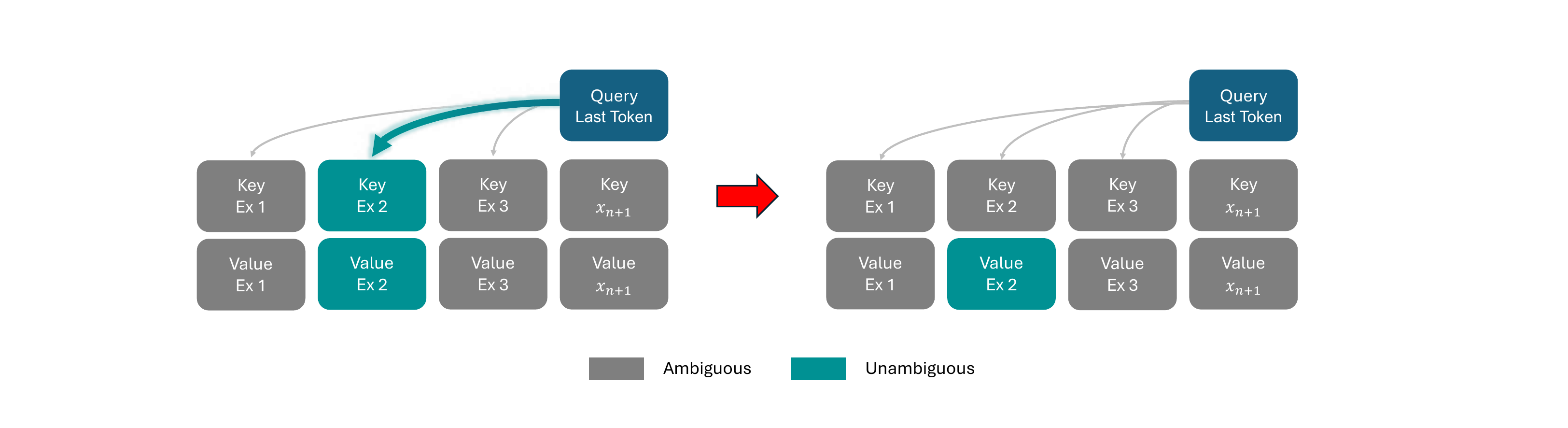}};
        \node[anchor=north west] at (image.north west) {(b)};
    \end{tikzpicture}
    \caption{Diagram of the Key patching process in Section 4.1. (a) Patching ambiguous Key to unambiguous Key. (b) Patching unambiguous Key to ambiguous Key.}
    \label{fig:k_patching_diagram}
\end{figure}

\begin{figure}[h!]
    \centering
    \small

    \newcommand{\imgw}{0.14\linewidth} 
    \newcommand{\colgap}{\hspace{4pt}} 
    \newcommand{\rowgap}{0.3ex}       

    \newcommand{\tasklabel}[1]{%
        \begin{tikzpicture}[baseline=(node.center)]
            \node[anchor=west, text width=2.6cm, font=\scriptsize\scshape\bfseries, inner sep=0pt] (node) {#1};
        \end{tikzpicture}%
    }
    \newcommand{\centerimg}[1]{%
        $\vcenter{\hbox{\includegraphics[width=\imgw]{#1}}}$%
    }
    \newcommand{\shotgroup}[2]{%
        \centerimg{figures/gemma-2-2b/k_attraction/attn_fully_clean_#1-#2shot.pdf}%
    }
    \newcommand{\colhead}[1]{%
        \makebox[\imgw][c]{\scriptsize\bfseries #1}%
    }

    \tasklabel{} \colgap
    \colhead{3-Shot} \colgap
    \colhead{5-Shot} \colgap
    \colhead{10-Shot} \\[1.0mm]

    \tasklabel{Country--Capital} \colgap
    \shotgroup{country-capital}{3} \colgap
    \shotgroup{country-capital}{5} \colgap
    \shotgroup{country-capital}{10} \\[\rowgap]
    
    \tasklabel{Person--Sport} \colgap
    \shotgroup{person-sport}{3} \colgap
    \shotgroup{person-sport}{5} \colgap
    \shotgroup{person-sport}{10} \\[\rowgap]
    
    \tasklabel{Park--Country} \colgap
    \shotgroup{park-country}{3} \colgap
    \shotgroup{park-country}{5} \colgap
    \shotgroup{park-country}{10} \\[\rowgap]
    
    \tasklabel{Present--Past} \colgap
    \shotgroup{present-past}{3} \colgap
    \shotgroup{present-past}{5} \colgap
    \shotgroup{present-past}{10} \\[\rowgap]

    \tasklabel{English--French} \colgap
    \shotgroup{english-french}{3} \colgap
    \shotgroup{english-french}{5} \colgap
    \shotgroup{english-french}{10}

    \vspace{-2mm}
    \caption{\textbf{Normal Tasks:} \texttt{gemma-2-2b} mean attention weights of individual examples on FV heads across 3/5/10-shot settings. This is an extension of Main Paper Fig.~\ref{fig:figure3ab}}
    \label{fig:normal_tasks_gemma-2-2b}
\end{figure}

\begin{figure}[h!]
    \centering
    \small

    \newcommand{\imgw}{0.14\linewidth} 
    \newcommand{\colgap}{\hspace{4pt}} 
    \newcommand{\rowgap}{0.3ex}       

    \newcommand{\tasklabel}[1]{%
        \begin{tikzpicture}[baseline=(node.center)]
            \node[anchor=west, text width=2.6cm, font=\scriptsize\scshape\bfseries, inner sep=0pt] (node) {#1};
        \end{tikzpicture}%
    }
    \newcommand{\centerimg}[1]{%
        $\vcenter{\hbox{\includegraphics[width=\imgw]{#1}}}$%
    }
    \newcommand{\shotgroup}[2]{%
        \centerimg{figures/gemma-2-9b/k_attraction/attn_fully_clean_#1-#2shot.pdf}%
    }
    \newcommand{\colhead}[1]{%
        \makebox[\imgw][c]{\scriptsize\bfseries #1}%
    }

    \tasklabel{} \colgap
    \colhead{3-Shot} \colgap
    \colhead{5-Shot} \colgap
    \colhead{10-Shot} \\[1.0mm]

    \tasklabel{Country--Capital} \colgap
    \shotgroup{country-capital}{3} \colgap
    \shotgroup{country-capital}{5} \colgap
    \shotgroup{country-capital}{10} \\[\rowgap]
    
    \tasklabel{Person--Sport} \colgap
    \shotgroup{person-sport}{3} \colgap
    \shotgroup{person-sport}{5} \colgap
    \shotgroup{person-sport}{10} \\[\rowgap]
    
    \tasklabel{Park--Country} \colgap
    \shotgroup{park-country}{3} \colgap
    \shotgroup{park-country}{5} \colgap
    \shotgroup{park-country}{10} \\[\rowgap]
    
    \tasklabel{Present--Past} \colgap
    \shotgroup{present-past}{3} \colgap
    \shotgroup{present-past}{5} \colgap
    \shotgroup{present-past}{10} \\[\rowgap]

    \tasklabel{English--French} \colgap
    \shotgroup{english-french}{3} \colgap
    \shotgroup{english-french}{5} \colgap
    \shotgroup{english-french}{10}

    \vspace{-2mm}
    \caption{\textbf{Normal Tasks:} \texttt{gemma-2-9b} mean attention weights of individual examples on FV heads across 3/5/10-shot settings.}
    \label{fig:normal_tasks}
\end{figure}

\begin{figure}[h!]
    \centering
    \small

    \newcommand{\imgw}{0.14\linewidth} 
    \newcommand{\colgap}{\hspace{4pt}} 
    \newcommand{\rowgap}{0.3ex}       

    \newcommand{\tasklabel}[1]{%
        \begin{tikzpicture}[baseline=(node.center)]
            \node[anchor=west, text width=2.6cm, font=\scriptsize\scshape\bfseries, inner sep=0pt] (node) {#1};
        \end{tikzpicture}%
    }
    \newcommand{\centerimg}[1]{%
        $\vcenter{\hbox{\includegraphics[width=\imgw]{#1}}}$%
    }
    \newcommand{\shotgroup}[2]{%
        \centerimg{figures/gemma-2-27b/k_attraction/attn_fully_clean_#1-#2shot.pdf}%
    }
    \newcommand{\colhead}[1]{%
        \makebox[\imgw][c]{\scriptsize\bfseries #1}%
    }

    \tasklabel{} \colgap
    \colhead{3-Shot} \colgap
    \colhead{5-Shot} \colgap
    \colhead{10-Shot} \\[1.0mm]

    \tasklabel{Country--Capital} \colgap
    \shotgroup{country-capital}{3} \colgap
    \shotgroup{country-capital}{5} \colgap
    \shotgroup{country-capital}{10} \\[\rowgap]
    
    \tasklabel{Person--Sport} \colgap
    \shotgroup{person-sport}{3} \colgap
    \shotgroup{person-sport}{5} \colgap
    \shotgroup{person-sport}{10} \\[\rowgap]
    
    \tasklabel{Park--Country} \colgap
    \shotgroup{park-country}{3} \colgap
    \shotgroup{park-country}{5} \colgap
    \shotgroup{park-country}{10} \\[\rowgap]
    
    \tasklabel{Present--Past} \colgap
    \shotgroup{present-past}{3} \colgap
    \shotgroup{present-past}{5} \colgap
    \shotgroup{present-past}{10} \\[\rowgap]

    \tasklabel{English--French} \colgap
    \shotgroup{english-french}{3} \colgap
    \shotgroup{english-french}{5} \colgap
    \shotgroup{english-french}{10}

    \vspace{-2mm}
    \caption{\textbf{Normal Tasks:} \texttt{gemma-2-27b} mean attention weights of individual examples on FV heads across 3/5/10-shot settings.}
    \label{fig:normal_tasks}
\end{figure}

\begin{figure}[h!]
    \centering
    \small

    \newcommand{\imgw}{0.14\linewidth} 
    \newcommand{\colgap}{\hspace{4pt}} 
    \newcommand{\rowgap}{0.3ex}       

    \newcommand{\tasklabel}[1]{%
        \begin{tikzpicture}[baseline=(node.center)]
            \node[anchor=west, text width=2.6cm, font=\scriptsize\scshape\bfseries, inner sep=0pt] (node) {#1};
        \end{tikzpicture}%
    }
    \newcommand{\centerimg}[1]{%
        $\vcenter{\hbox{\includegraphics[width=\imgw]{#1}}}$%
    }
    \newcommand{\shotgroup}[2]{%
        \centerimg{figures/Llama-3.2-1B/k_attraction/attn_fully_clean_#1-#2shot.pdf}%
    }
    \newcommand{\colhead}[1]{%
        \makebox[\imgw][c]{\scriptsize\bfseries #1}%
    }

    \tasklabel{} \colgap
    \colhead{3-Shot} \colgap
    \colhead{5-Shot} \colgap
    \colhead{10-Shot} \\[1.0mm]

    \tasklabel{Country--Capital} \colgap
    \shotgroup{country-capital}{3} \colgap
    \shotgroup{country-capital}{5} \colgap
    \shotgroup{country-capital}{10} \\[\rowgap]
    
    \tasklabel{Person--Sport} \colgap
    \shotgroup{person-sport}{3} \colgap
    \shotgroup{person-sport}{5} \colgap
    \shotgroup{person-sport}{10} \\[\rowgap]
    
    \tasklabel{Park--Country} \colgap
    \shotgroup{park-country}{3} \colgap
    \shotgroup{park-country}{5} \colgap
    \shotgroup{park-country}{10} \\[\rowgap]
    
    \tasklabel{Present--Past} \colgap
    \shotgroup{present-past}{3} \colgap
    \shotgroup{present-past}{5} \colgap
    \shotgroup{present-past}{10} \\[\rowgap]

    \tasklabel{English--French} \colgap
    \shotgroup{english-french}{3} \colgap
    \shotgroup{english-french}{5} \colgap
    \shotgroup{english-french}{10}

    \vspace{-2mm}
    \caption{\textbf{Normal Tasks:} \texttt{Llama-3.2-1B} mean attention weights of individual examples on FV heads across 3/5/10-shot settings.}
    \label{fig:normal_tasks}
\end{figure}

\begin{figure}[h!]
    \centering
    \small

    \newcommand{\imgw}{0.14\linewidth} 
    \newcommand{\colgap}{\hspace{4pt}} 
    \newcommand{\rowgap}{0.3ex}       

    \newcommand{\tasklabel}[1]{%
        \begin{tikzpicture}[baseline=(node.center)]
            \node[anchor=west, text width=2.6cm, font=\scriptsize\scshape\bfseries, inner sep=0pt] (node) {#1};
        \end{tikzpicture}%
    }
    \newcommand{\centerimg}[1]{%
        $\vcenter{\hbox{\includegraphics[width=\imgw]{#1}}}$%
    }
    \newcommand{\shotgroup}[2]{%
        \centerimg{figures/Llama-3.2-3B/k_attraction/attn_fully_clean_#1-#2shot.pdf}%
    }
    \newcommand{\colhead}[1]{%
        \makebox[\imgw][c]{\scriptsize\bfseries #1}%
    }

    \tasklabel{} \colgap
    \colhead{3-Shot} \colgap
    \colhead{5-Shot} \colgap
    \colhead{10-Shot} \\[1.0mm]

    \tasklabel{Country--Capital} \colgap
    \shotgroup{country-capital}{3} \colgap
    \shotgroup{country-capital}{5} \colgap
    \shotgroup{country-capital}{10} \\[\rowgap]
    
    \tasklabel{Person--Sport} \colgap
    \shotgroup{person-sport}{3} \colgap
    \shotgroup{person-sport}{5} \colgap
    \shotgroup{person-sport}{10} \\[\rowgap]
    
    \tasklabel{Park--Country} \colgap
    \shotgroup{park-country}{3} \colgap
    \shotgroup{park-country}{5} \colgap
    \shotgroup{park-country}{10} \\[\rowgap]
    
    \tasklabel{Present--Past} \colgap
    \shotgroup{present-past}{3} \colgap
    \shotgroup{present-past}{5} \colgap
    \shotgroup{present-past}{10} \\[\rowgap]

    \tasklabel{English--French} \colgap
    \shotgroup{english-french}{3} \colgap
    \shotgroup{english-french}{5} \colgap
    \shotgroup{english-french}{10}

    \vspace{-2mm}
    \caption{\textbf{Normal Tasks:} \texttt{Llama-3.2-3B} mean attention weights of individual examples on FV heads across 3/5/10-shot settings.}
    \label{fig:normal_tasks}
\end{figure}

\begin{figure}[h!]
    \centering
    \small

    \newcommand{\imgw}{0.14\linewidth} 
    \newcommand{\colgap}{\hspace{4pt}} 
    \newcommand{\rowgap}{0.3ex}       

    \newcommand{\tasklabel}[1]{%
        \begin{tikzpicture}[baseline=(node.center)]
            \node[anchor=west, text width=2.6cm, font=\scriptsize\scshape\bfseries, inner sep=0pt] (node) {#1};
        \end{tikzpicture}%
    }
    \newcommand{\centerimg}[1]{%
        $\vcenter{\hbox{\includegraphics[width=\imgw]{#1}}}$%
    }
    \newcommand{\shotgroup}[2]{%
        \centerimg{figures/Llama-3.1-8B-Instruct/k_attraction/attn_fully_clean_#1-#2shot.pdf}%
    }
    \newcommand{\colhead}[1]{%
        \makebox[\imgw][c]{\scriptsize\bfseries #1}%
    }

    \tasklabel{} \colgap
    \colhead{3-Shot} \colgap
    \colhead{5-Shot} \colgap
    \colhead{10-Shot} \\[1.0mm]

    \tasklabel{Country--Capital} \colgap
    \shotgroup{country-capital}{3} \colgap
    \shotgroup{country-capital}{5} \colgap
    \shotgroup{country-capital}{10} \\[\rowgap]
    
    \tasklabel{Person--Sport} \colgap
    \shotgroup{person-sport}{3} \colgap
    \shotgroup{person-sport}{5} \colgap
    \shotgroup{person-sport}{10} \\[\rowgap]
    
    \tasklabel{Park--Country} \colgap
    \shotgroup{park-country}{3} \colgap
    \shotgroup{park-country}{5} \colgap
    \shotgroup{park-country}{10} \\[\rowgap]
    
    \tasklabel{Present--Past} \colgap
    \shotgroup{present-past}{3} \colgap
    \shotgroup{present-past}{5} \colgap
    \shotgroup{present-past}{10} \\[\rowgap]

    \tasklabel{English--French} \colgap
    \shotgroup{english-french}{3} \colgap
    \shotgroup{english-french}{5} \colgap
    \shotgroup{english-french}{10}

    \vspace{-2mm}
    \caption{\textbf{Normal Tasks:} \texttt{Llama-3.1-8B-Instruct} mean attention weights of individual examples on FV heads across 3/5/10-shot settings.}
    \label{fig:normal_tasks_Llama-3.1-8B-Instruct}
\end{figure}
\clearpage
\begin{figure}[h!]
    \centering
    \footnotesize

    \newcommand{\tasklabel}[1]{%
        \begin{tikzpicture}[baseline=(node.center)]
            \node[anchor=west, text width=3.8cm, font=\small\scshape\bfseries, inner sep=0pt] (node) {#1};
        \end{tikzpicture}%
    }

    \newcommand{\centerimg}[1]{%
        $\vcenter{\hbox{\includegraphics[width=0.165\linewidth]{#1}}}$%
    }

    \newcommand{\hgap}{\hspace{10mm}}

    \tasklabel{} \hgap
    \makebox[0.165\linewidth][c]{\textbf{3-Shot}} \hgap
    \makebox[0.165\linewidth][c]{\textbf{5-Shot}} \hgap
    \makebox[0.165\linewidth][c]{\textbf{10-Shot}} \vspace{1mm} \\

    \tasklabel{Present-Past Ambiguous} \hgap
    \centerimg{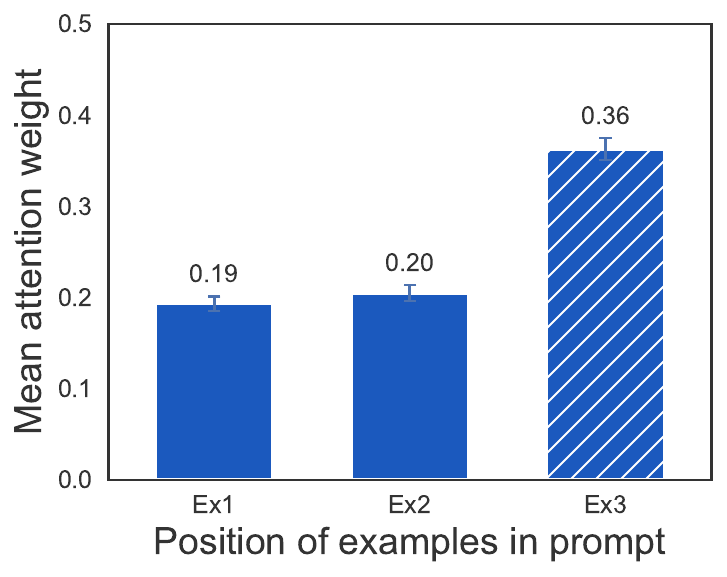} \hgap
    \centerimg{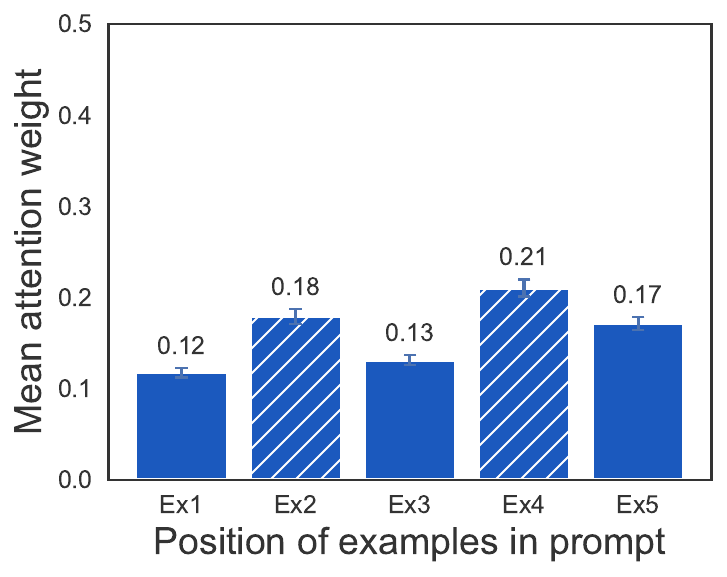} \hgap
    \centerimg{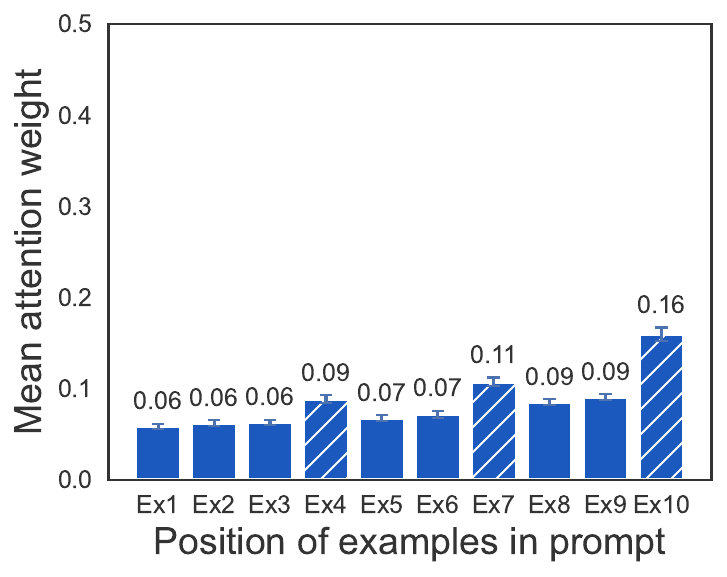} \\[0.2ex]

    \tasklabel{Present-Past Ambiguous-2} \hgap
    \centerimg{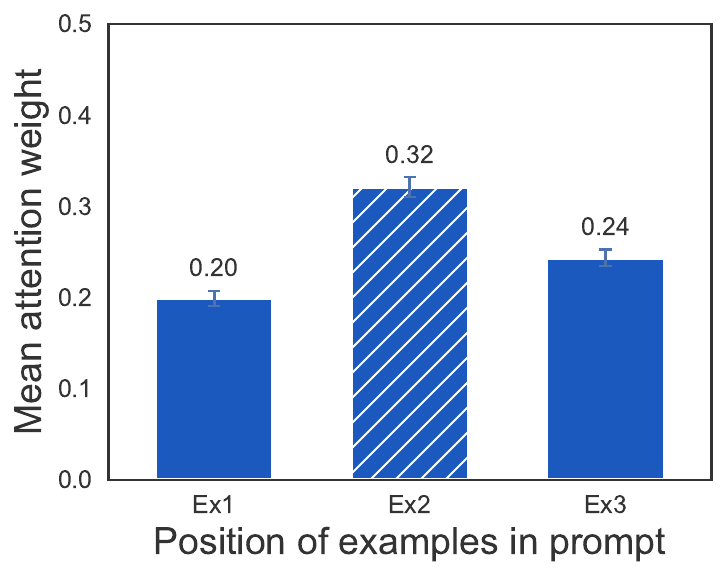} \hgap
    \centerimg{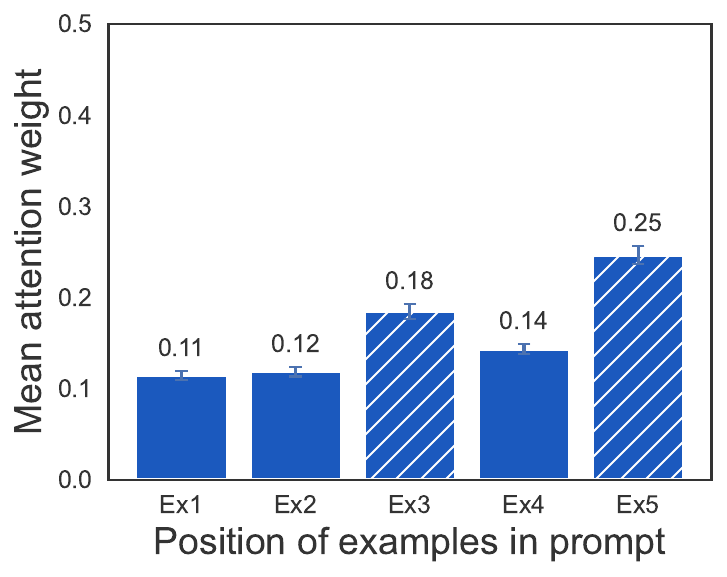} \hgap
    \centerimg{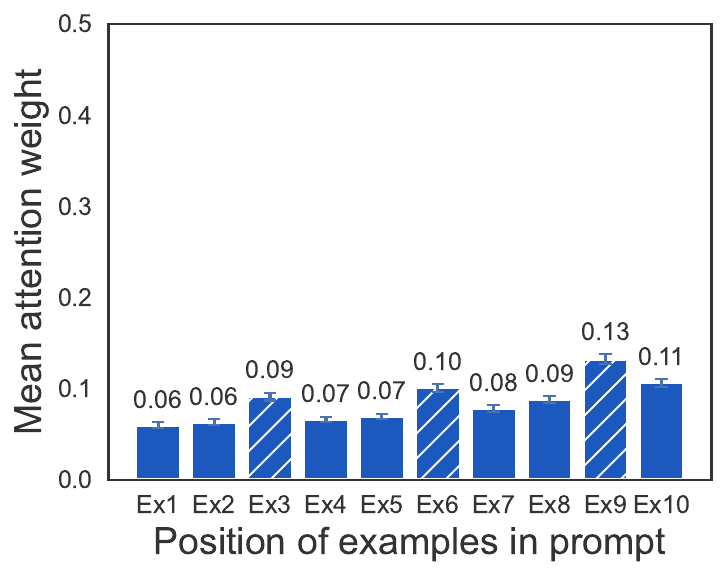} \\[0.2ex]

    \tasklabel{Chinese Ambiguous} \hgap
    \centerimg{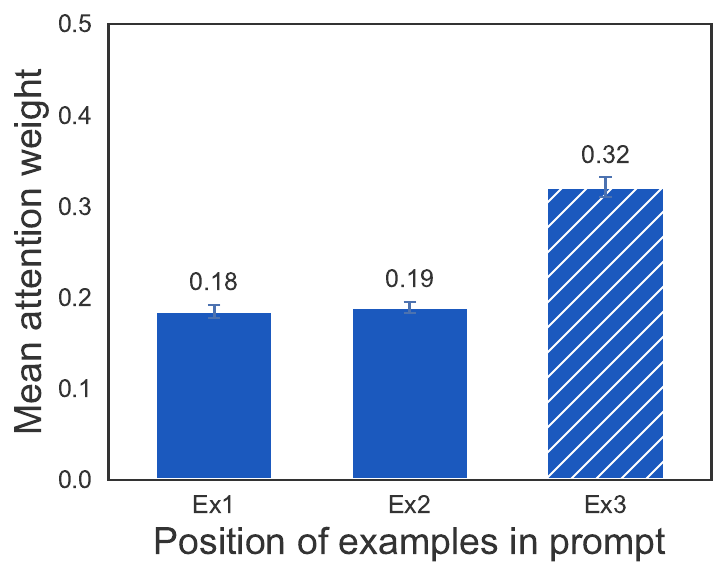} \hgap
    \centerimg{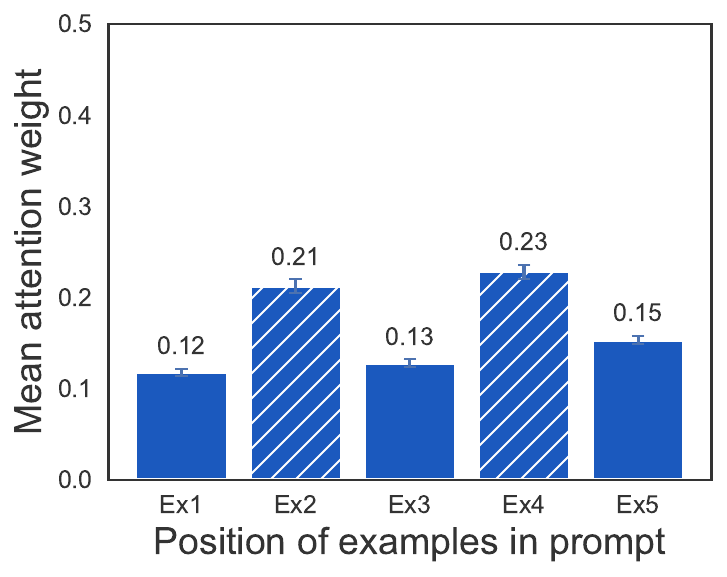} \hgap
    \centerimg{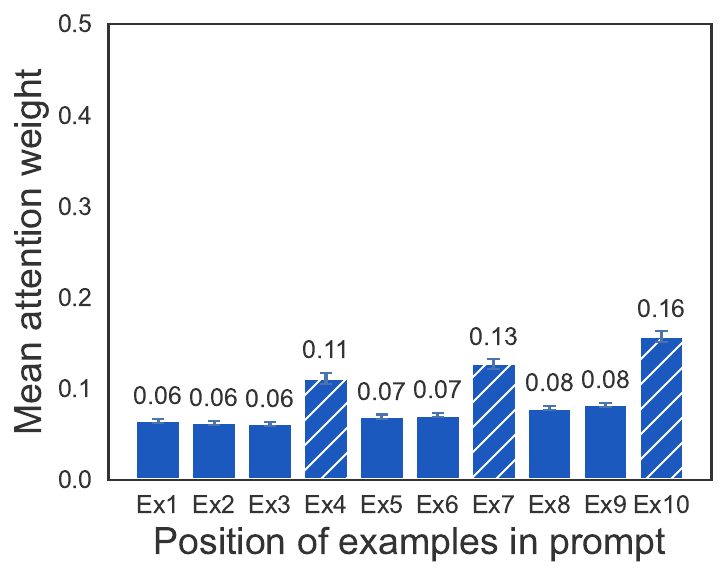} \\[0.2ex]

    \tasklabel{Chinese Ambiguous-2} \hgap
    \centerimg{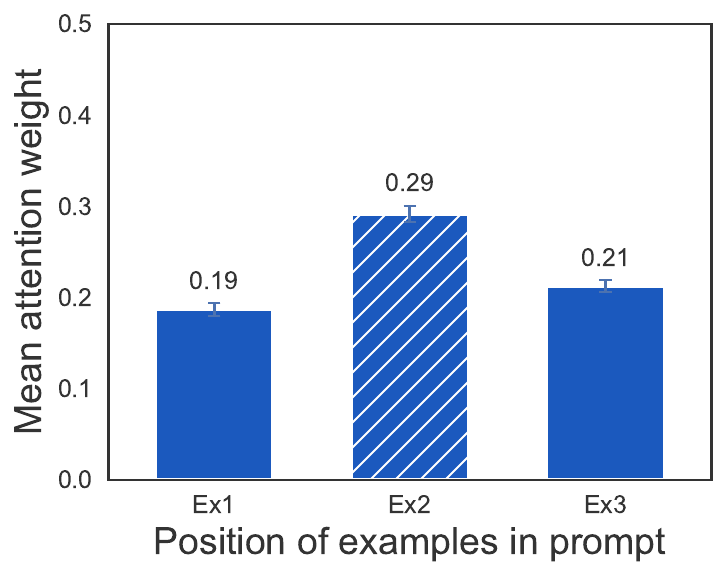} \hgap
    \centerimg{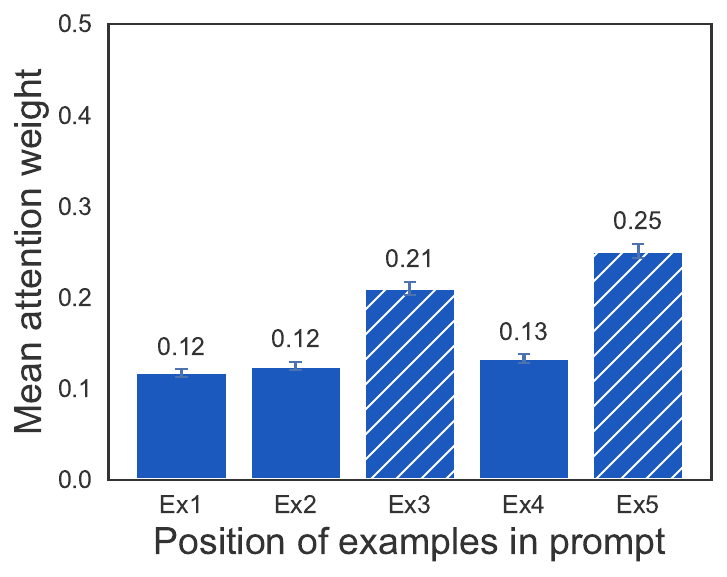} \hgap
    \centerimg{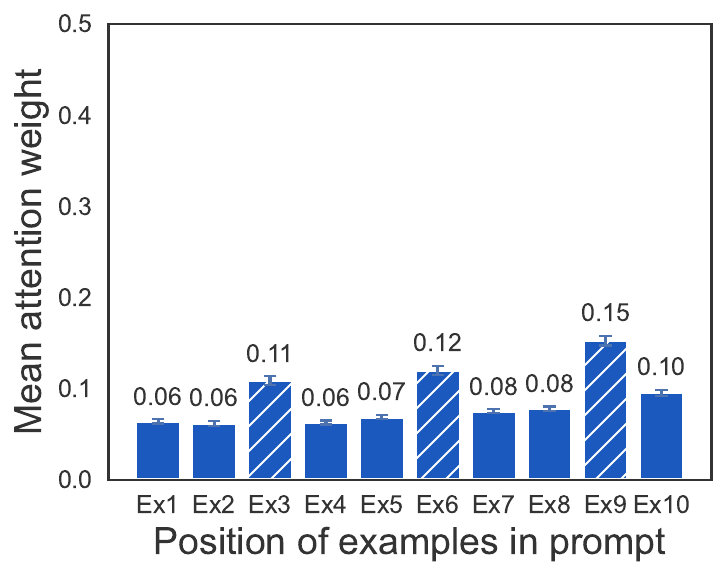} \\[0.2ex]

    \tasklabel{Japanese Ambiguous} \hgap
    \centerimg{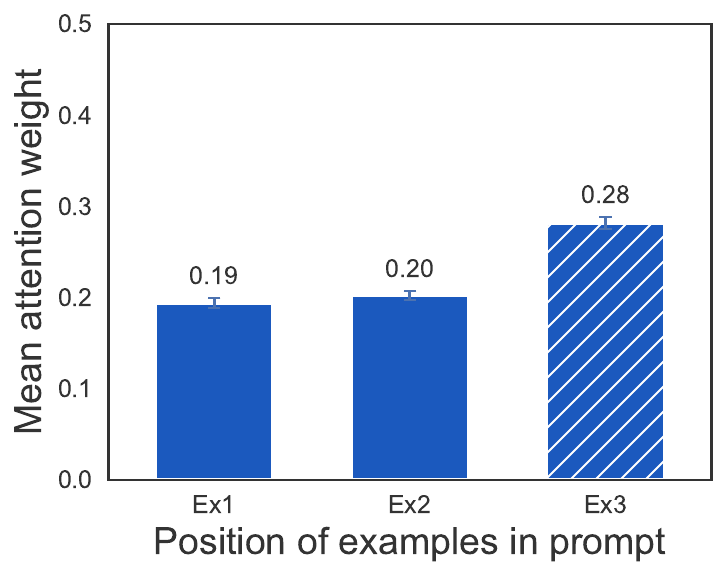} \hgap
    \centerimg{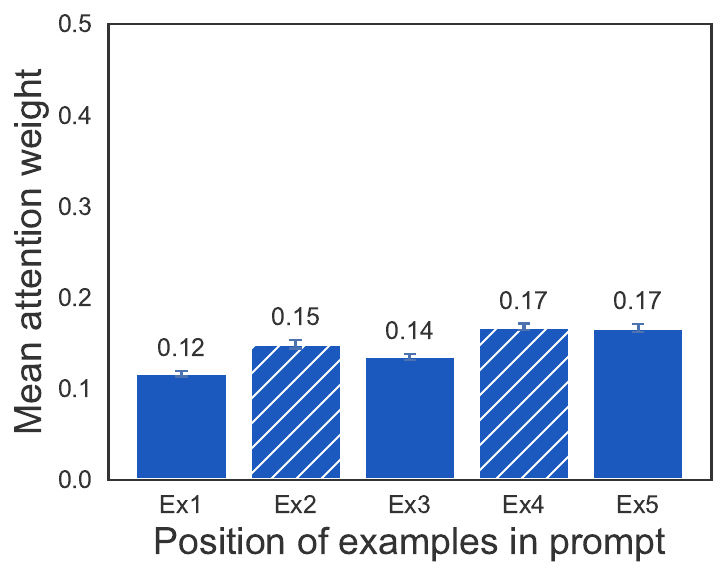} \hgap
    \centerimg{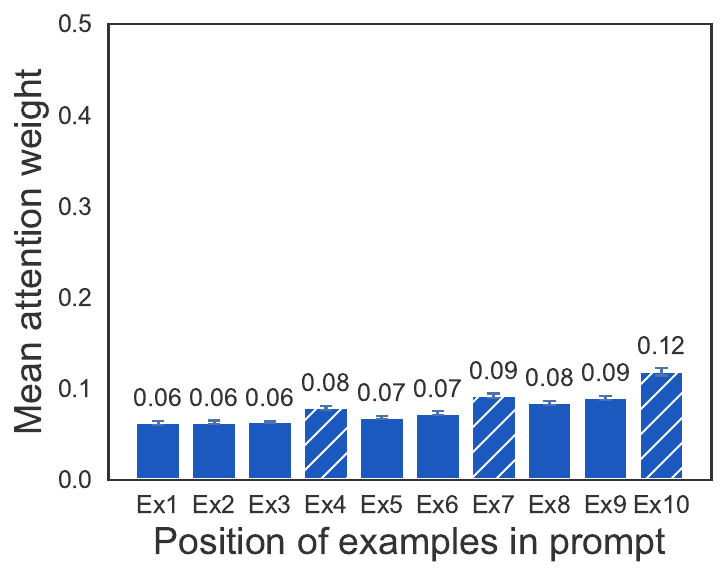} \\[0.2ex]

    \tasklabel{Japanese Ambiguous-2} \hgap
    \centerimg{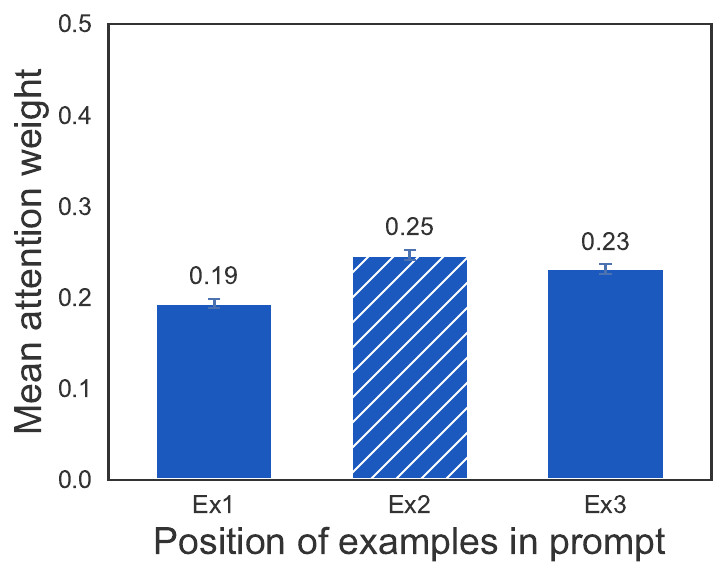} \hgap
    \centerimg{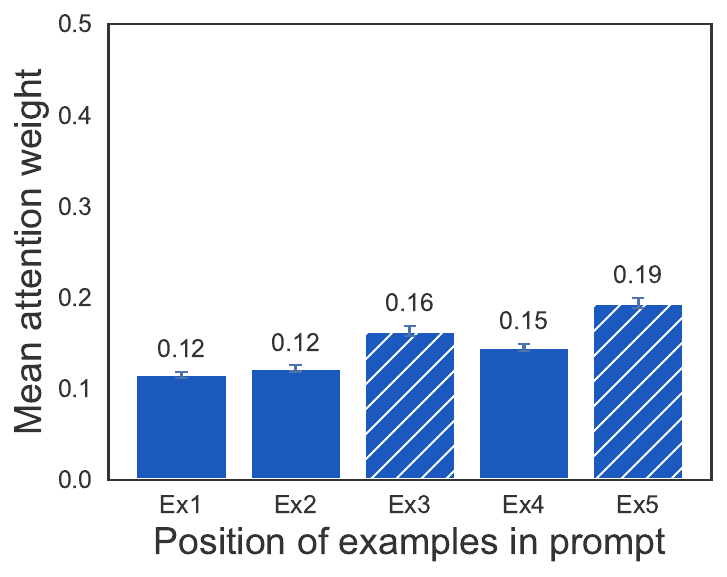} \hgap
    \centerimg{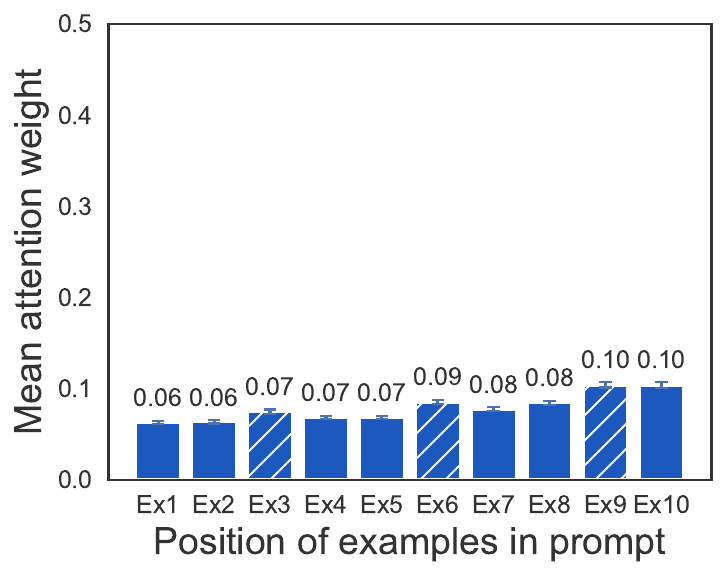} \\[0.2ex]

    \tasklabel{French Ambiguous} \hgap
    \centerimg{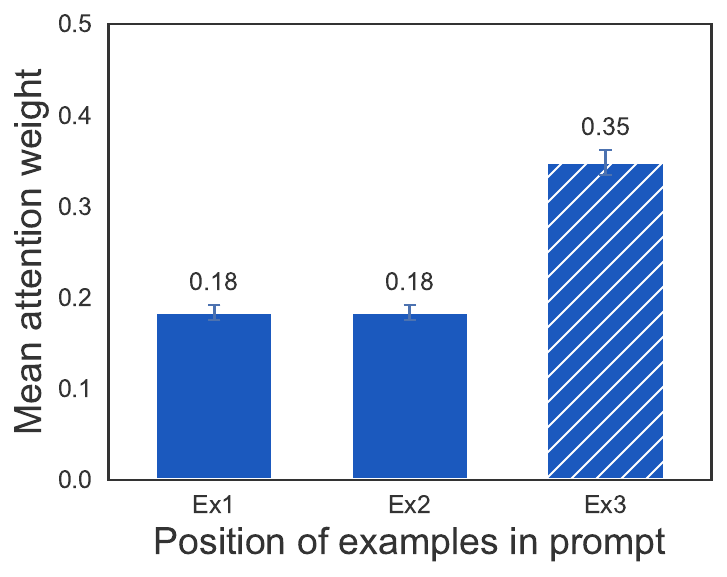} \hgap
    \centerimg{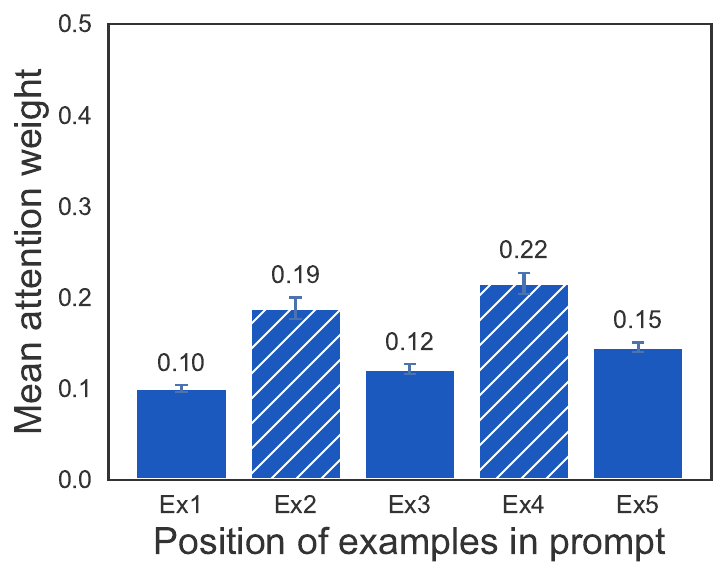} \hgap
    \centerimg{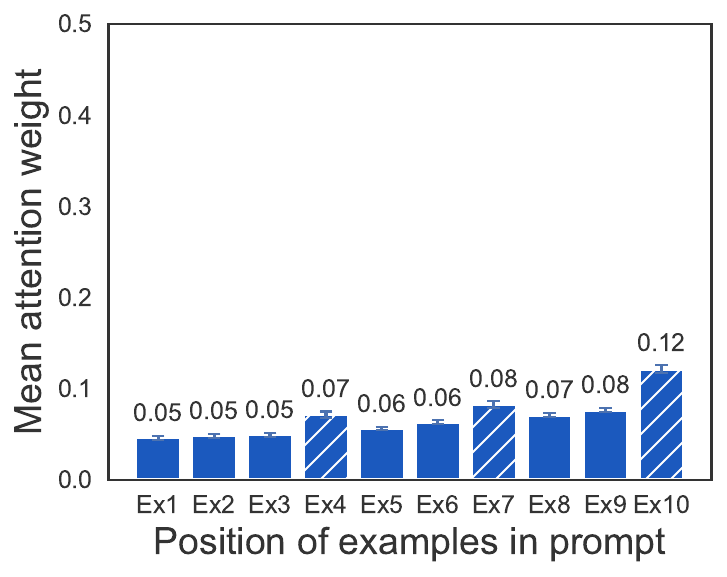} \\[0.2ex]

    \tasklabel{French Ambiguous-2} \hgap
    \centerimg{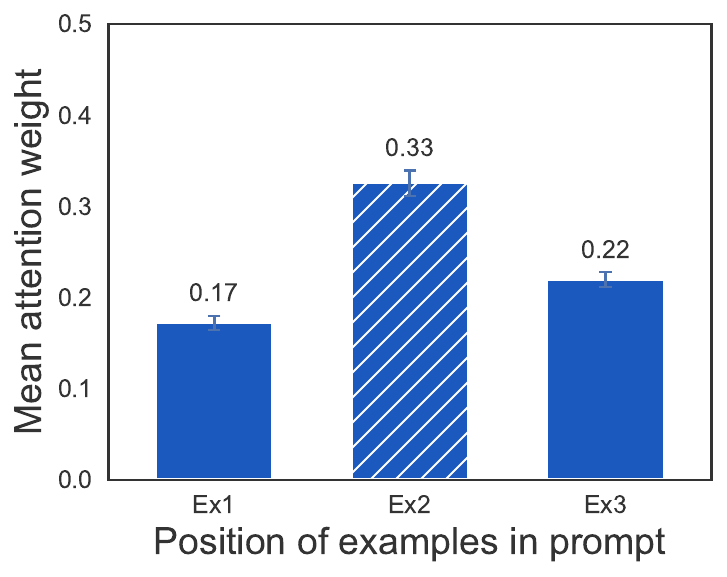} \hgap
    \centerimg{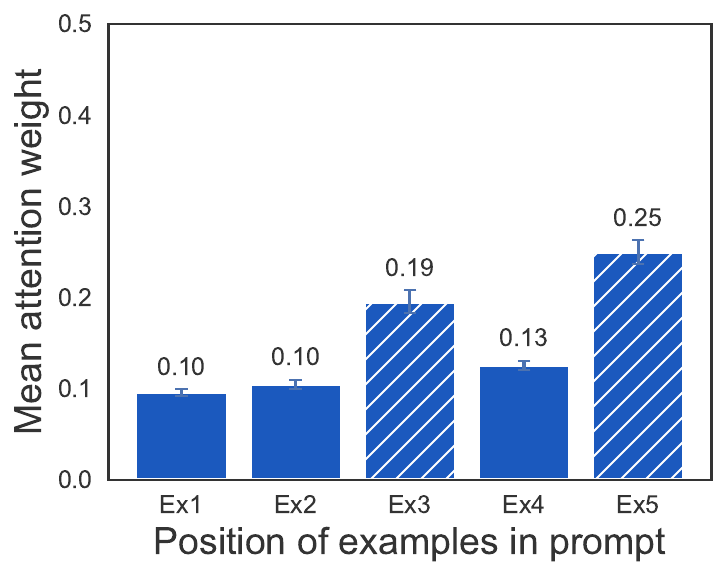} \hgap
    \centerimg{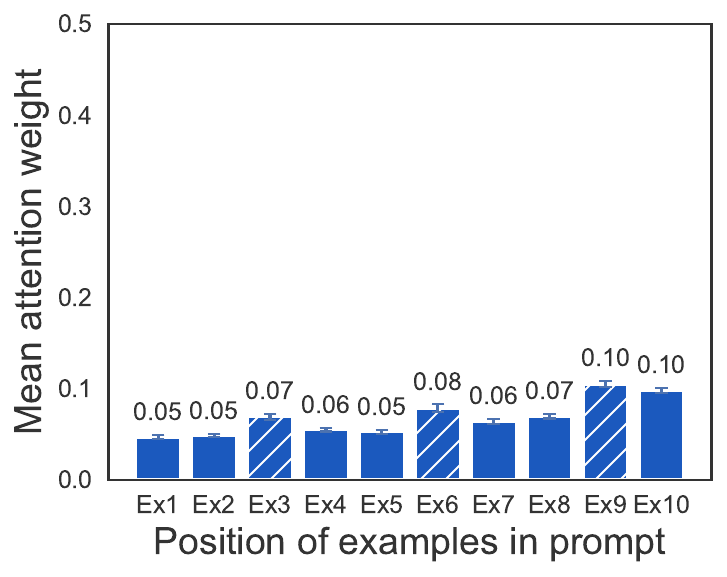} \\[0.2ex]

    \tasklabel{Capitalize Ambiguous} \hgap
    \centerimg{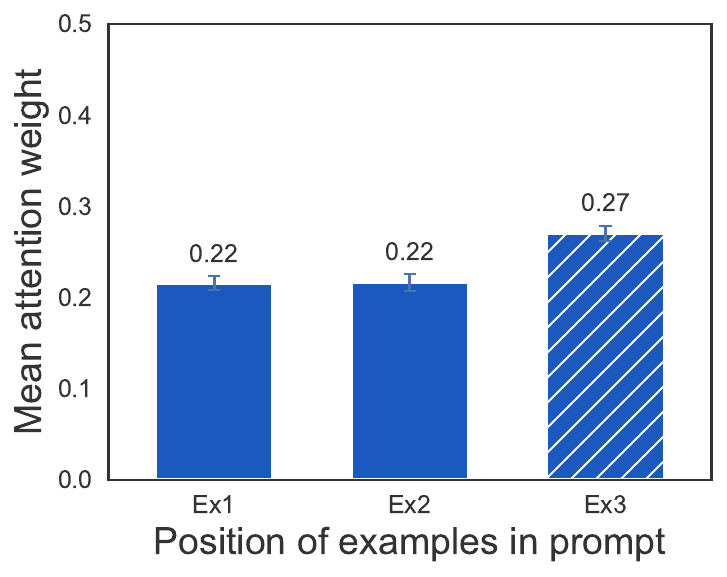} \hgap
    \centerimg{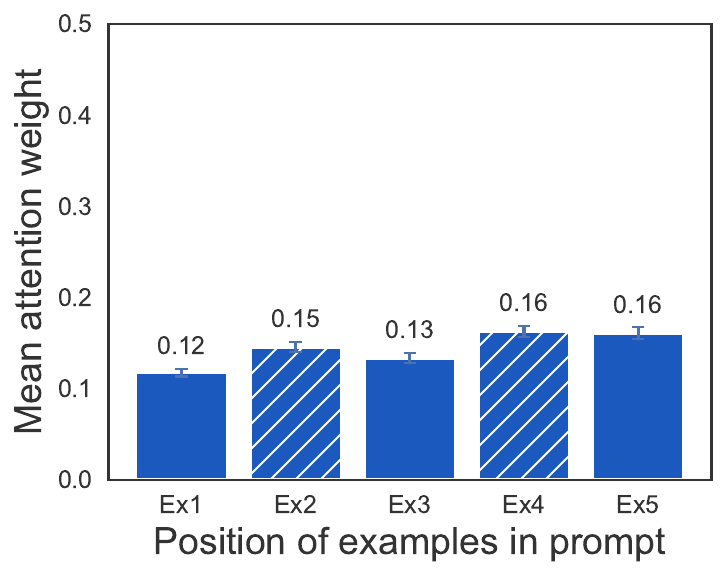} \hgap
    \centerimg{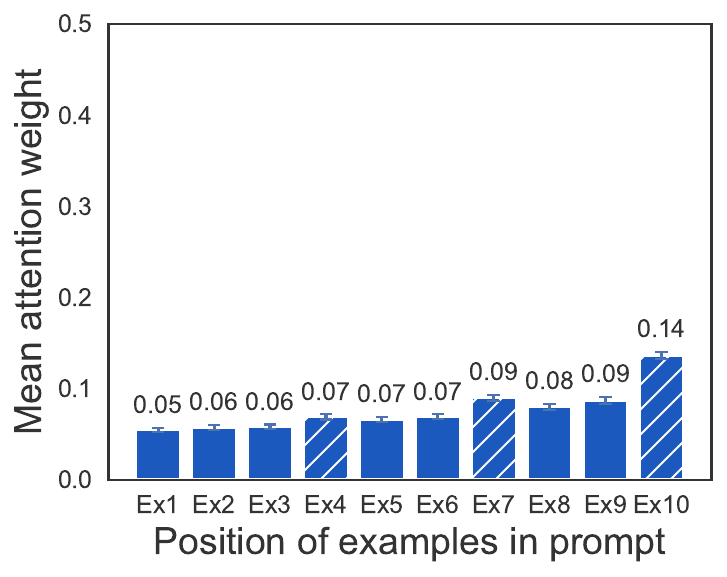} \\[0.2ex]

    \tasklabel{Capitalize Ambiguous-2} \hgap
    \centerimg{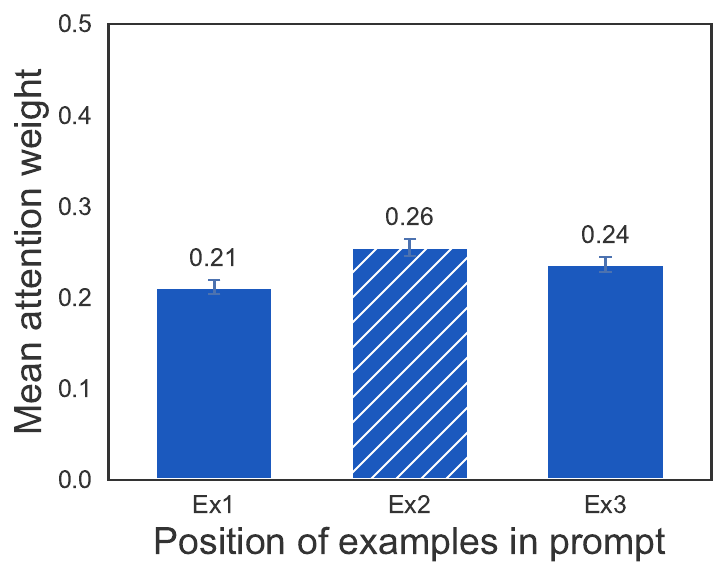} \hgap
    \centerimg{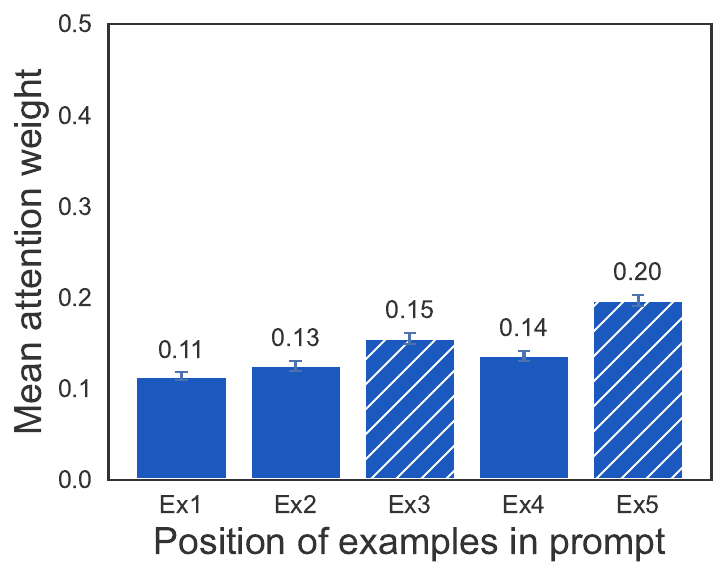} \hgap
    \centerimg{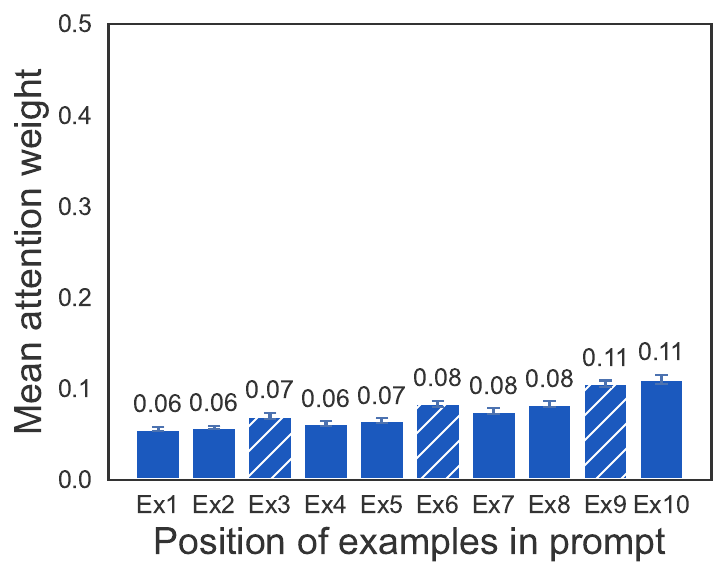}

    \caption{\textbf{Ambiguous Tasks:} \texttt{gemma-2-2b} mean attention weights of individual examples on FV heads across 3/5/10-shot settings. This is an extension of Main Paper Fig.~\ref{fig:figure3ab}.}
    \label{fig:ambiguous_tasks_compact_gemma-2-2b}
\end{figure}

\begin{figure}[h!]
    \centering
    \footnotesize

    \newcommand{\tasklabel}[1]{%
        \begin{tikzpicture}[baseline=(node.center)]
            \node[anchor=west, text width=3.8cm, font=\small\scshape\bfseries, inner sep=0pt] (node) {#1};
        \end{tikzpicture}%
    }

    \newcommand{\centerimg}[1]{%
        $\vcenter{\hbox{\includegraphics[width=0.165\linewidth]{#1}}}$%
    }

    \newcommand{\hgap}{\hspace{10mm}}

    \tasklabel{} \hgap
    \makebox[0.165\linewidth][c]{\textbf{3-Shot}} \hgap
    \makebox[0.165\linewidth][c]{\textbf{5-Shot}} \hgap
    \makebox[0.165\linewidth][c]{\textbf{10-Shot}} \vspace{1mm} \\

    \tasklabel{Present-Past Ambiguous} \hgap
    \centerimg{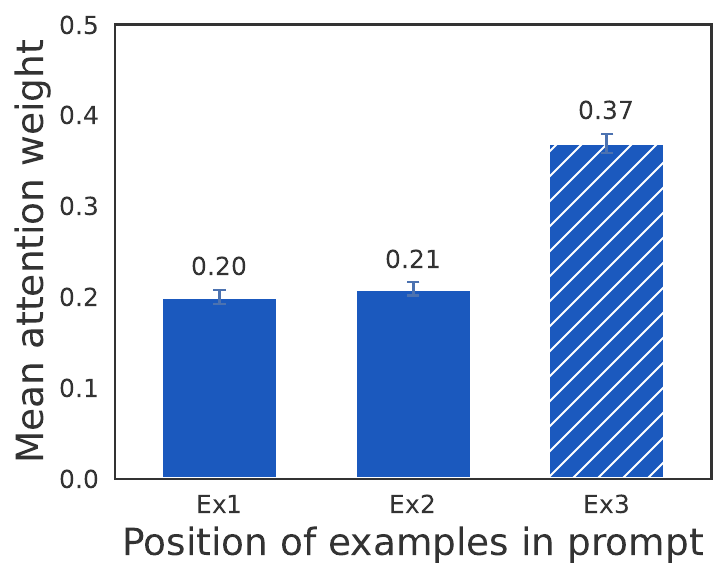} \hgap
    \centerimg{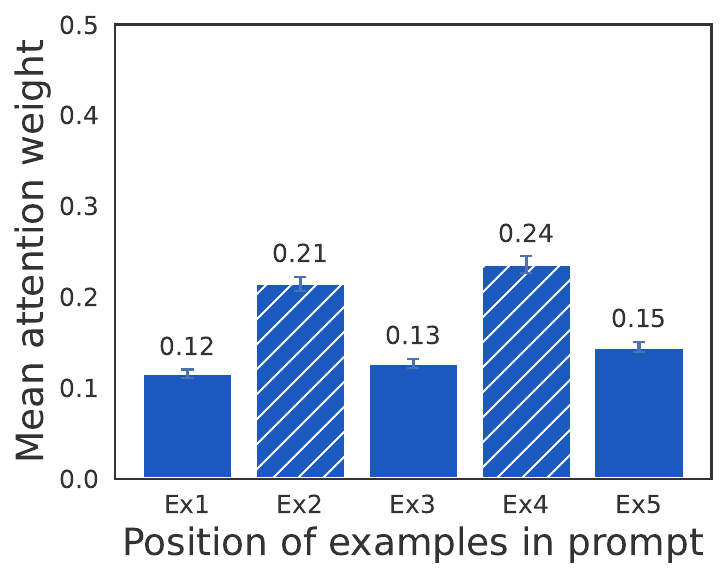} \hgap
    \centerimg{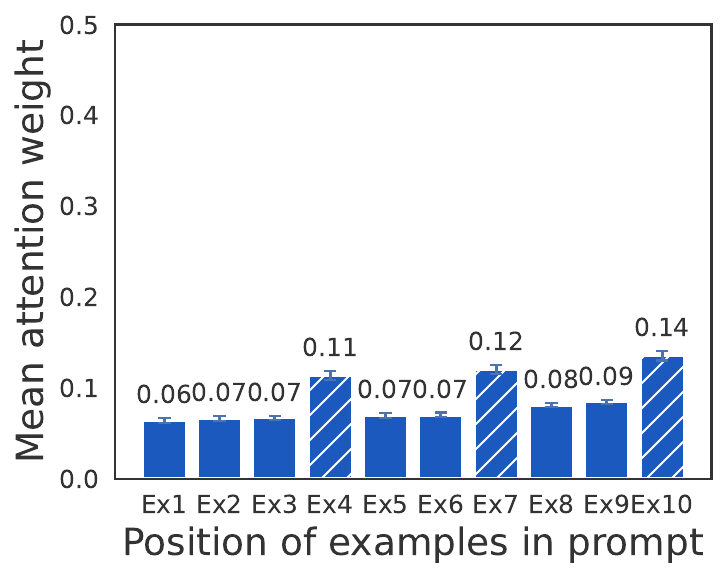} \\[0.2ex]

    \tasklabel{Present-Past Ambiguous-2} \hgap
    \centerimg{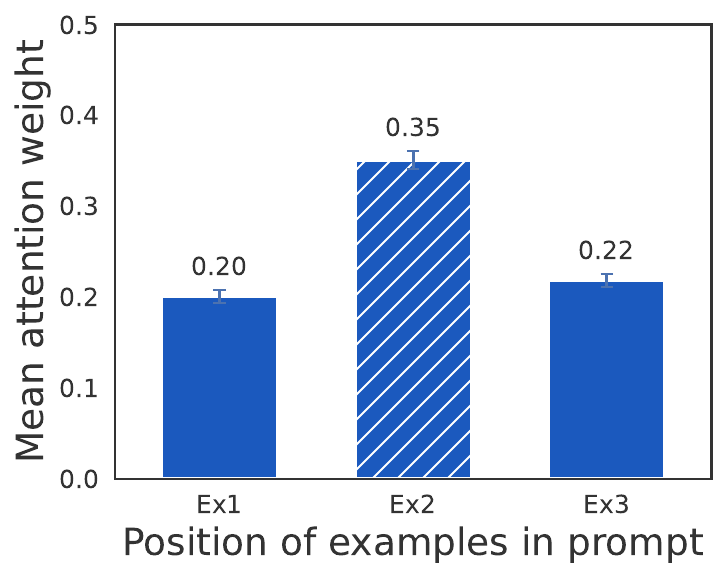} \hgap
    \centerimg{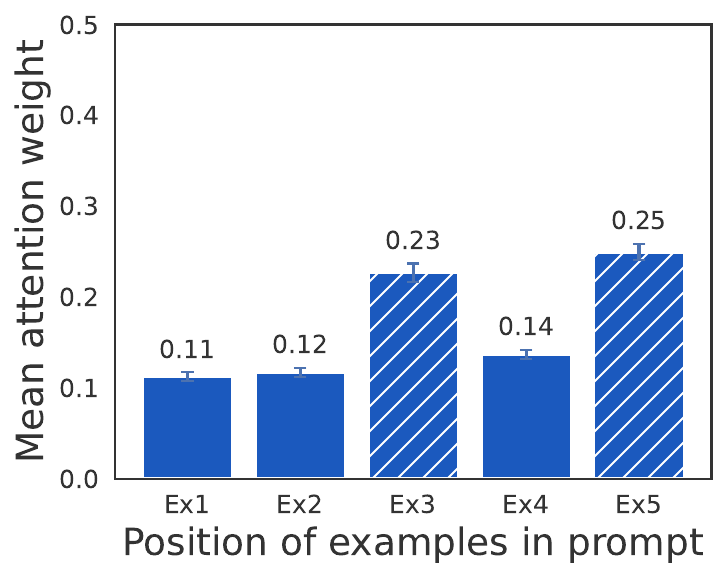} \hgap
    \centerimg{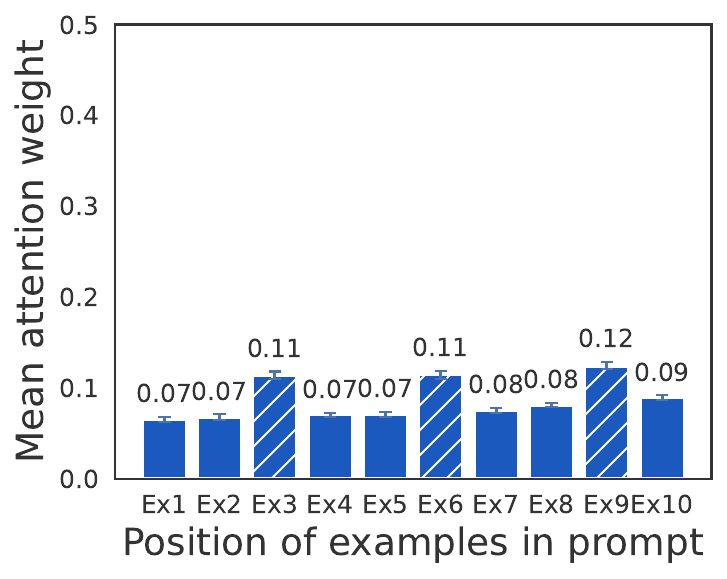} \\[0.2ex]

    \tasklabel{Chinese Ambiguous} \hgap
    \centerimg{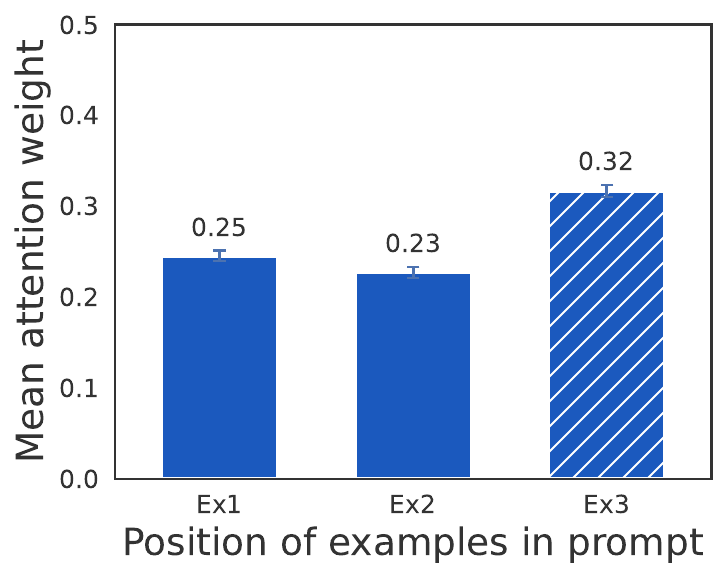} \hgap
    \centerimg{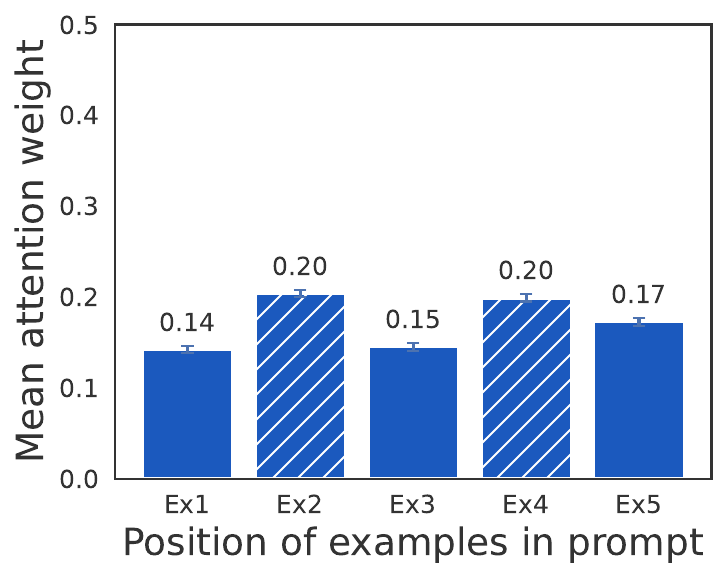} \hgap
    \centerimg{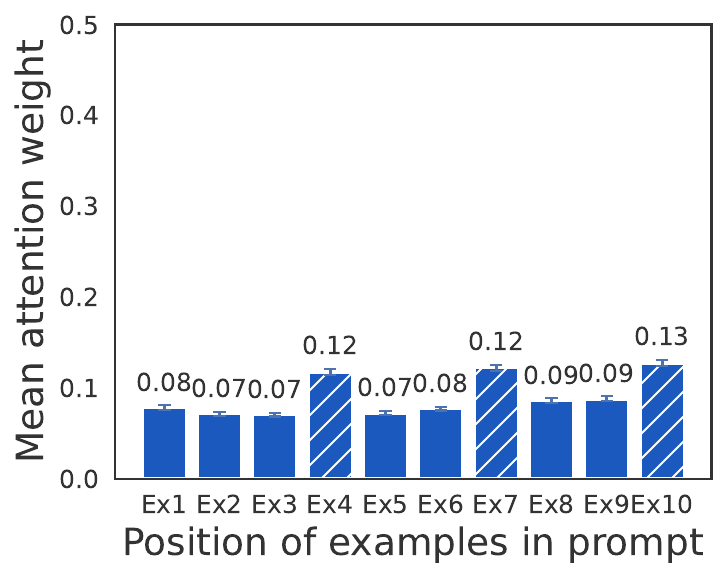} \\[0.2ex]

    \tasklabel{Chinese Ambiguous-2} \hgap
    \centerimg{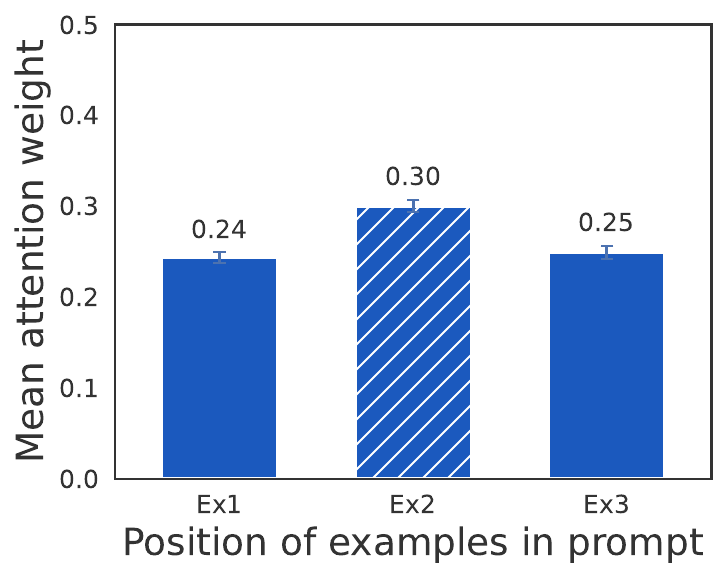} \hgap
    \centerimg{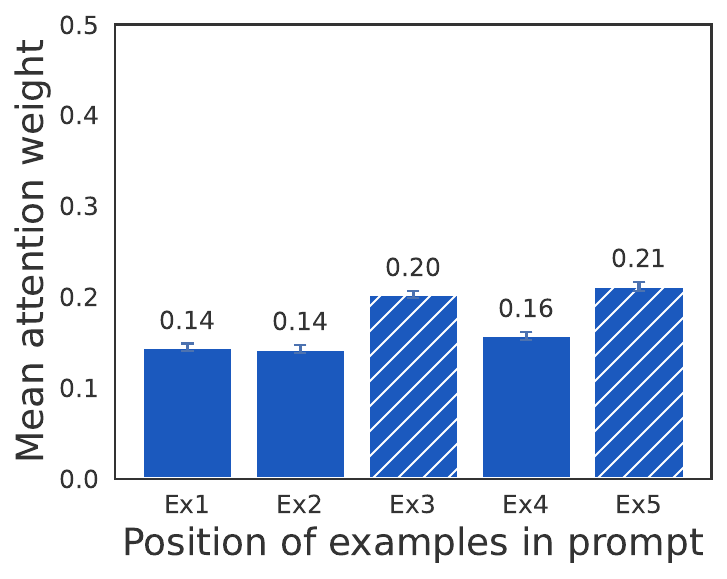} \hgap
    \centerimg{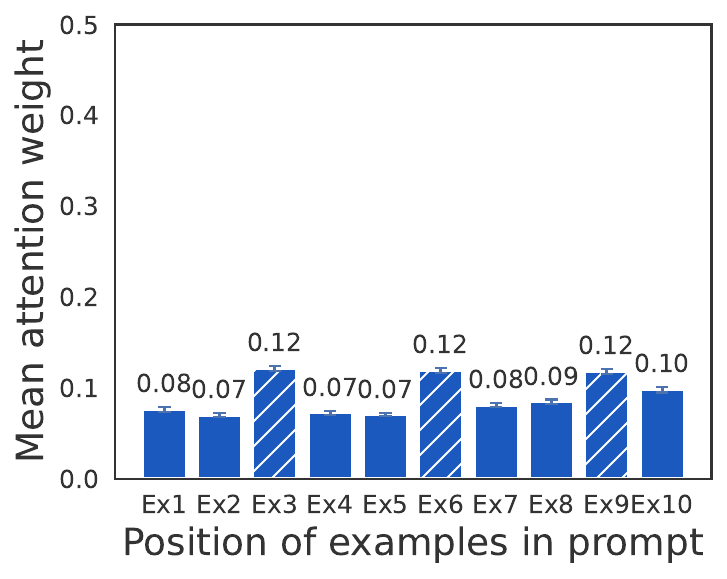} \\[0.2ex]

    \tasklabel{Japanese Ambiguous} \hgap
    \centerimg{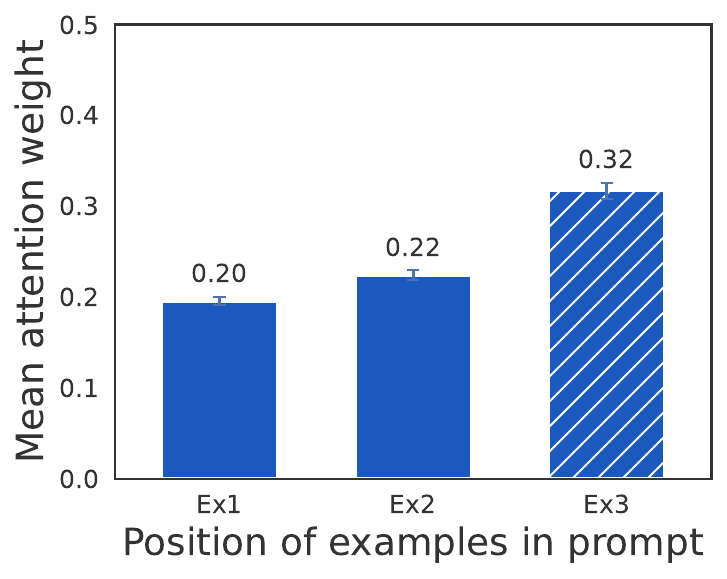} \hgap
    \centerimg{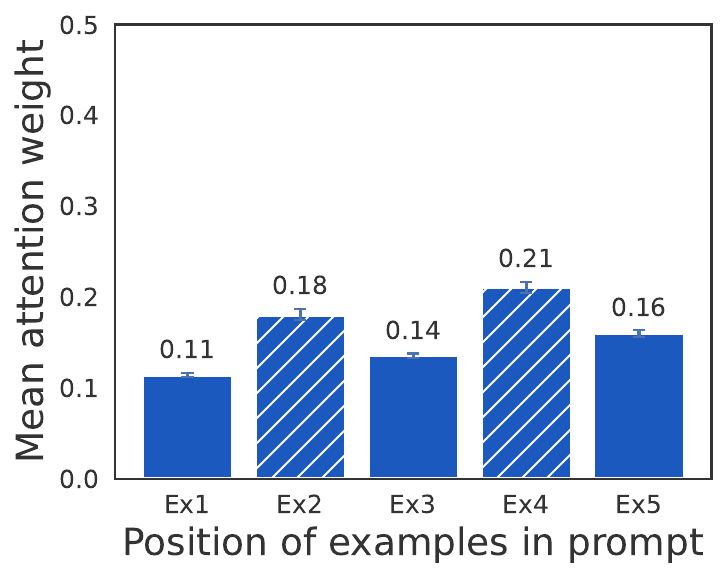} \hgap
    \centerimg{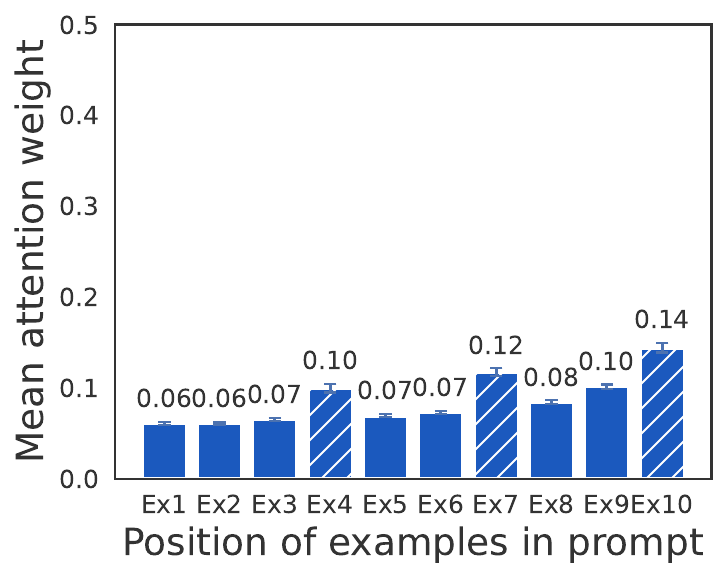} \\[0.2ex]

    \tasklabel{Japanese Ambiguous-2} \hgap
    \centerimg{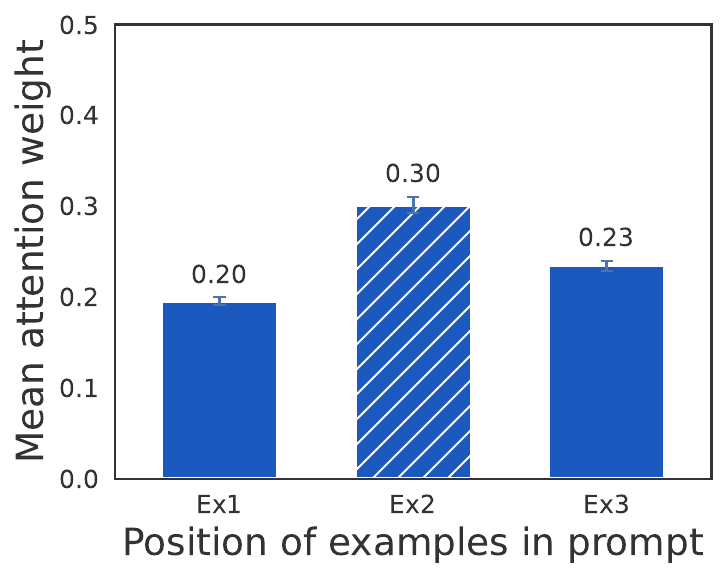} \hgap
    \centerimg{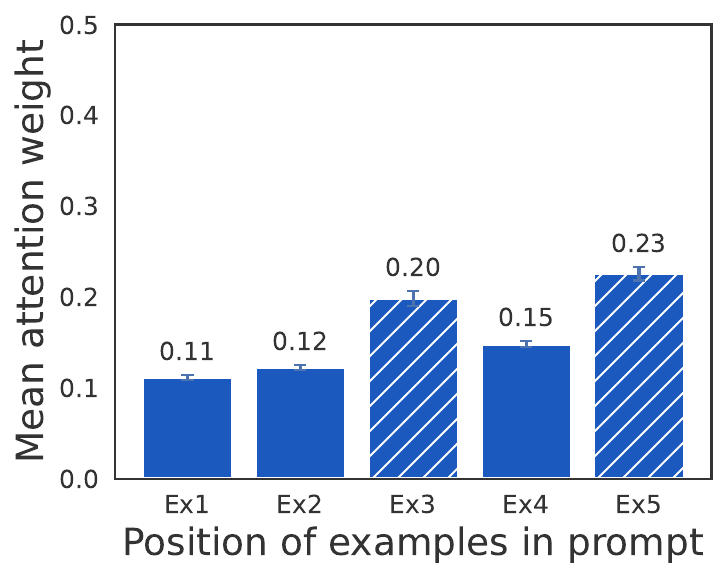} \hgap
    \centerimg{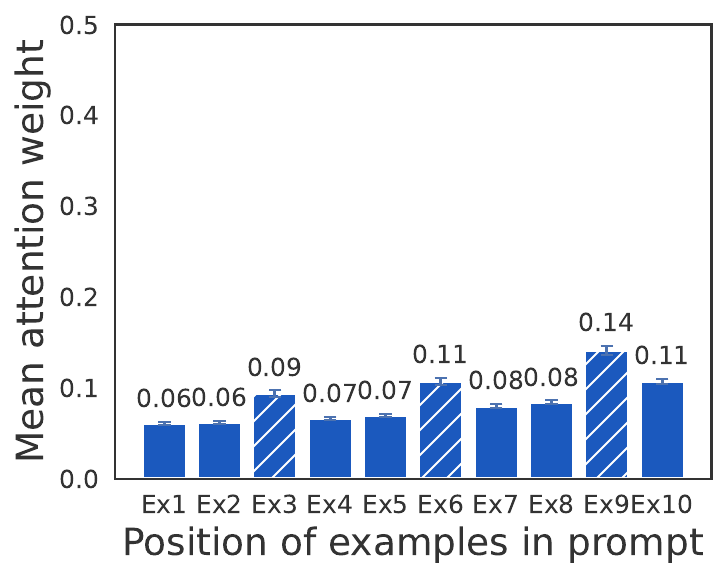} \\[0.2ex]

    \tasklabel{French Ambiguous} \hgap
    \centerimg{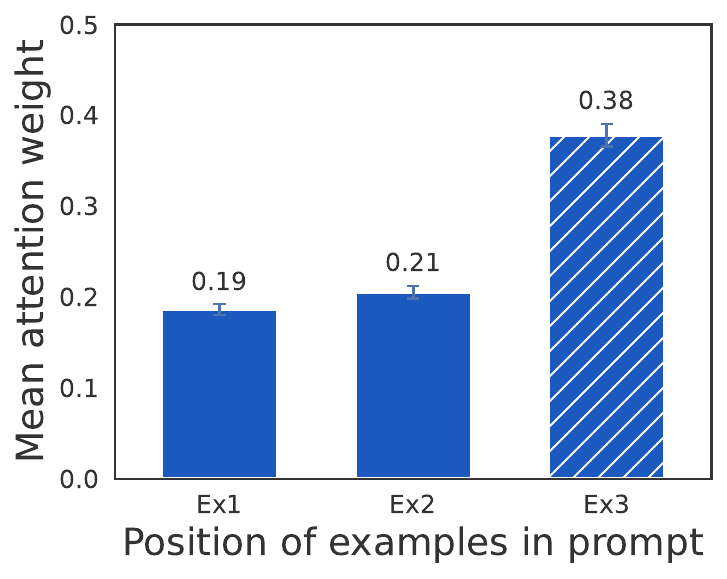} \hgap
    \centerimg{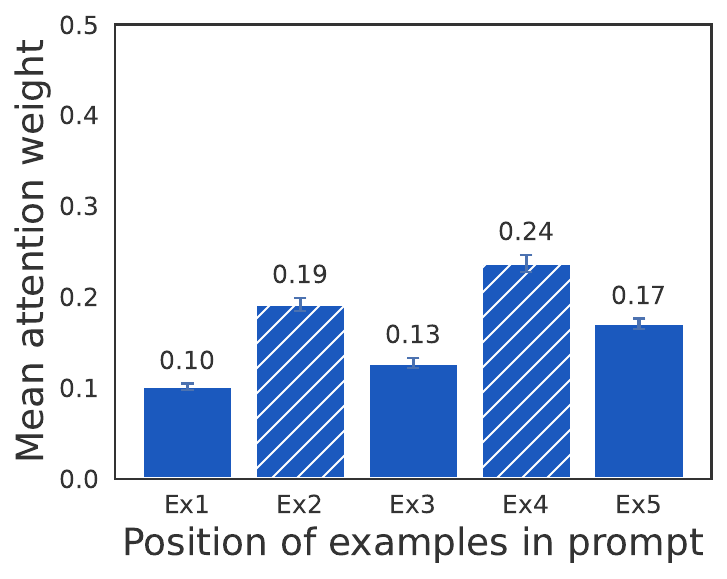} \hgap
    \centerimg{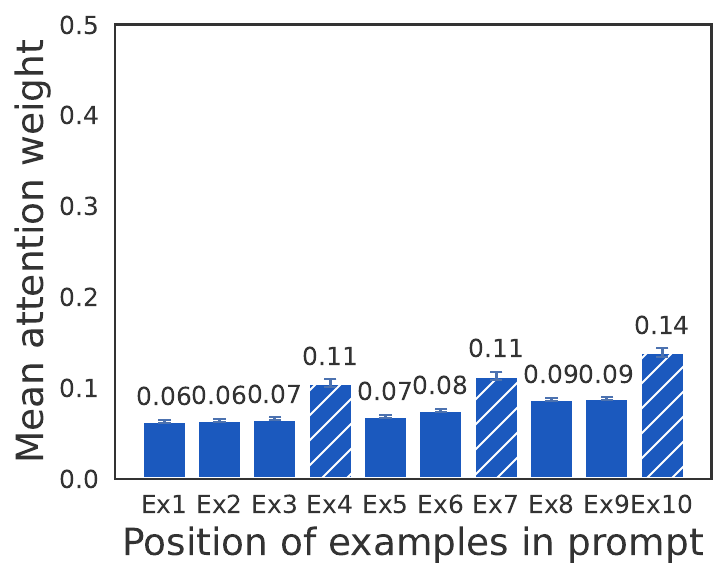} \\[0.2ex]

    \tasklabel{French Ambiguous-2} \hgap
    \centerimg{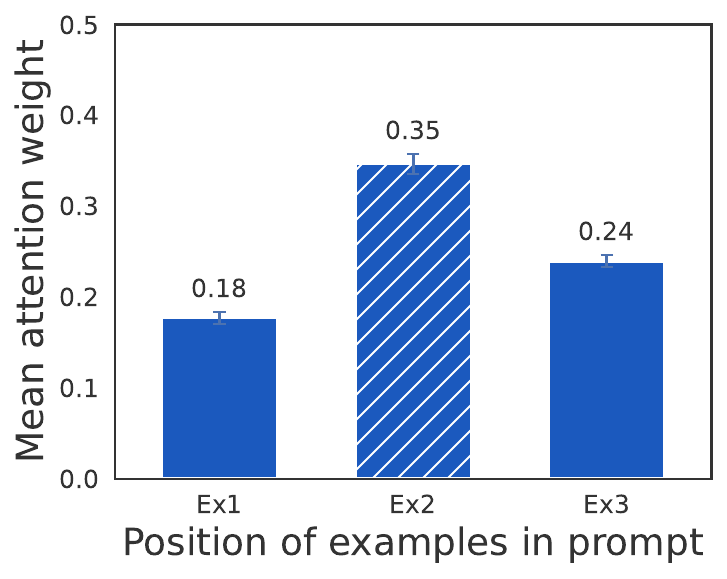} \hgap
    \centerimg{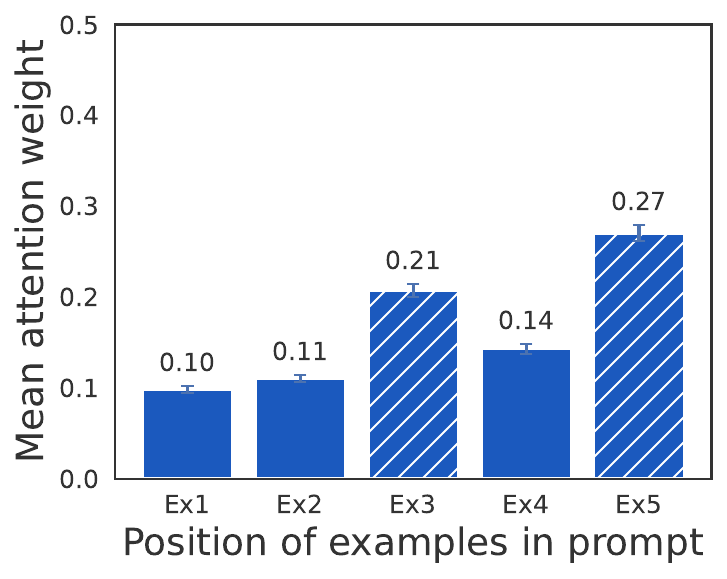} \hgap
    \centerimg{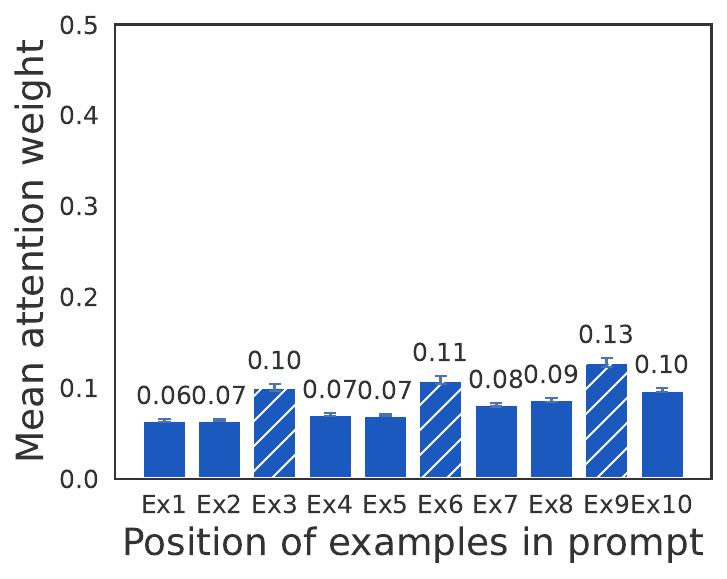} \\[0.2ex]

    \tasklabel{Capitalize Ambiguous} \hgap
    \centerimg{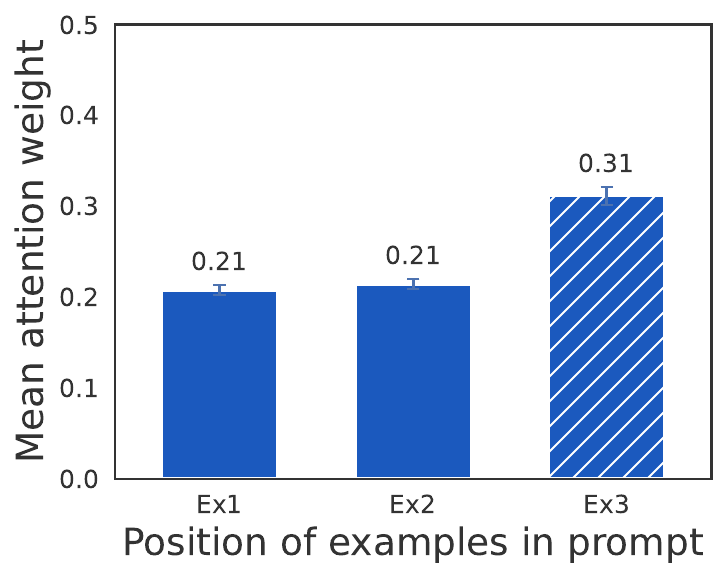} \hgap
    \centerimg{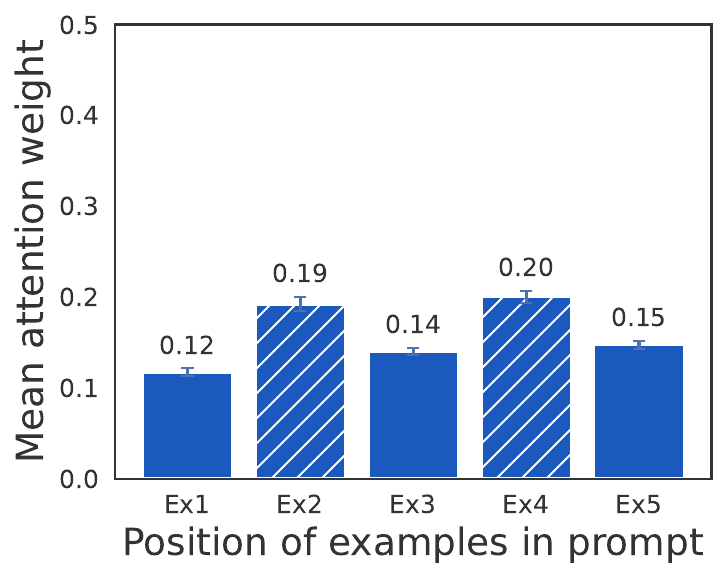} \hgap
    \centerimg{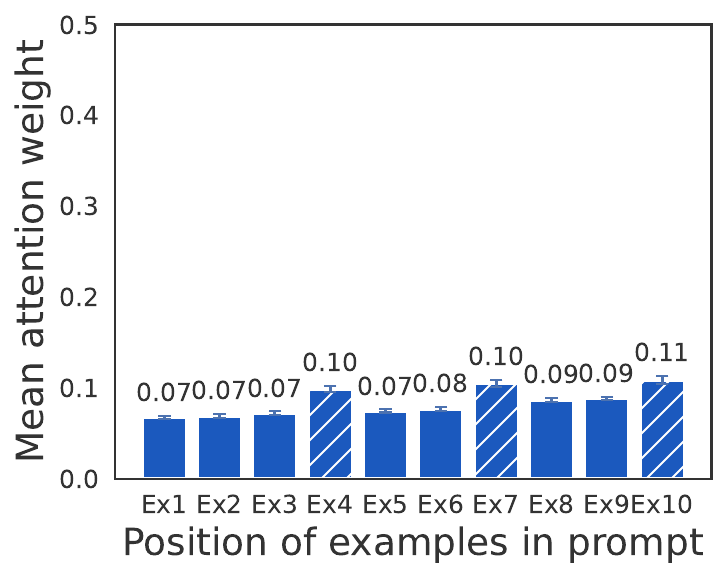} \\[0.2ex]

    \tasklabel{Capitalize Ambiguous-2} \hgap
    \centerimg{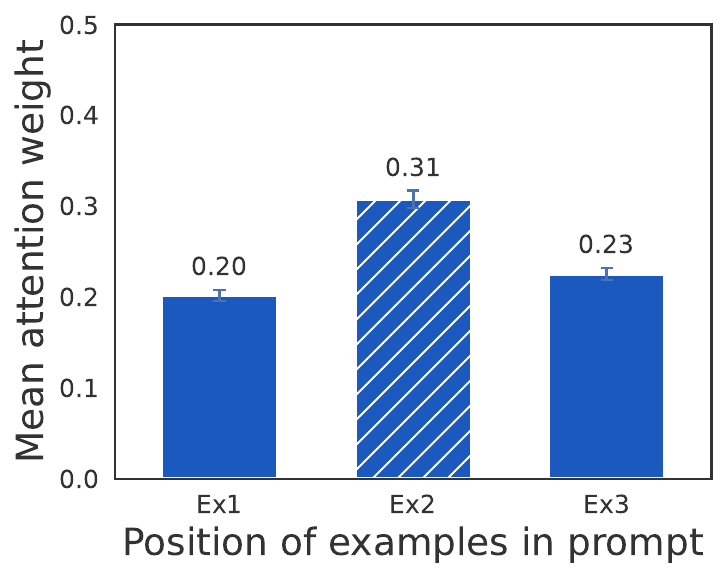} \hgap
    \centerimg{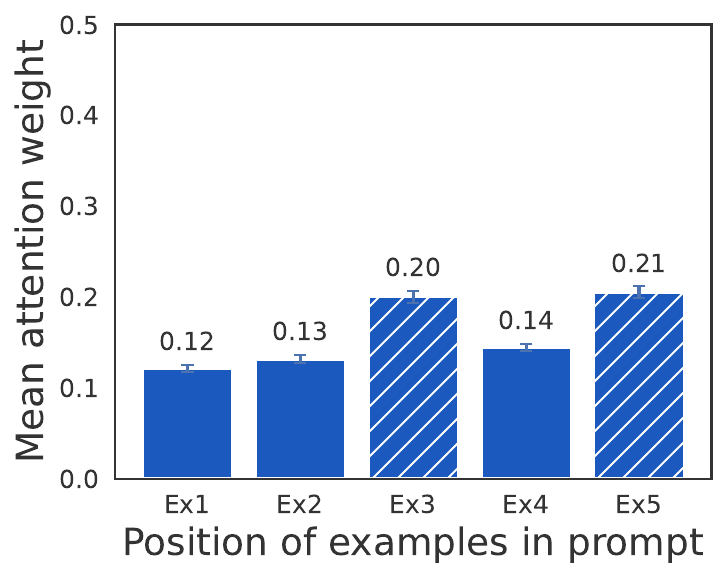} \hgap
    \centerimg{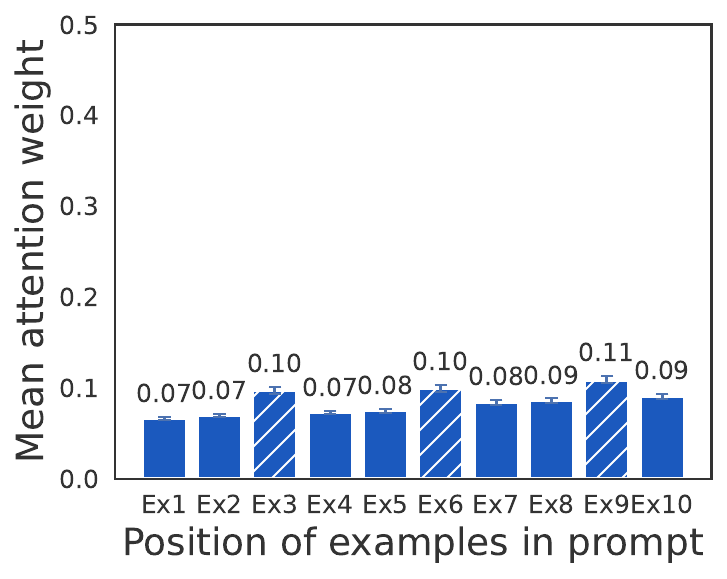}

    \caption{\textbf{Ambiguous Tasks:} \texttt{gemma-2-9b} mean attention weights of individual examples on FV heads across 3/5/10-shot settings.}
    \label{fig:ambiguous_tasks_compact}
\end{figure}

\begin{figure}[h!]
    \centering
    \footnotesize

    \newcommand{\tasklabel}[1]{%
        \begin{tikzpicture}[baseline=(node.center)]
            \node[anchor=west, text width=3.8cm, font=\small\scshape\bfseries, inner sep=0pt] (node) {#1};
        \end{tikzpicture}%
    }

    \newcommand{\centerimg}[1]{%
        $\vcenter{\hbox{\includegraphics[width=0.165\linewidth]{#1}}}$%
    }

    \newcommand{\hgap}{\hspace{10mm}}

    \tasklabel{} \hgap
    \makebox[0.165\linewidth][c]{\textbf{3-Shot}} \hgap
    \makebox[0.165\linewidth][c]{\textbf{5-Shot}} \hgap
    \makebox[0.165\linewidth][c]{\textbf{10-Shot}} \vspace{1mm} \\

    \tasklabel{Present-Past Ambiguous} \hgap
    \centerimg{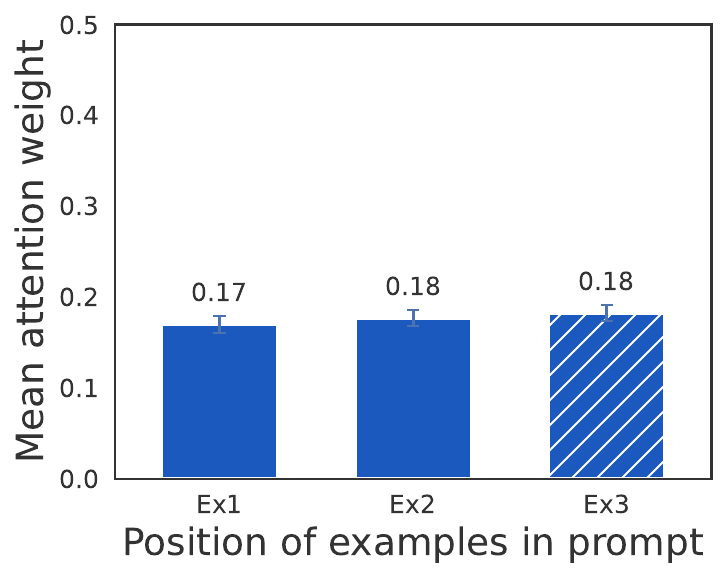} \hgap
    \centerimg{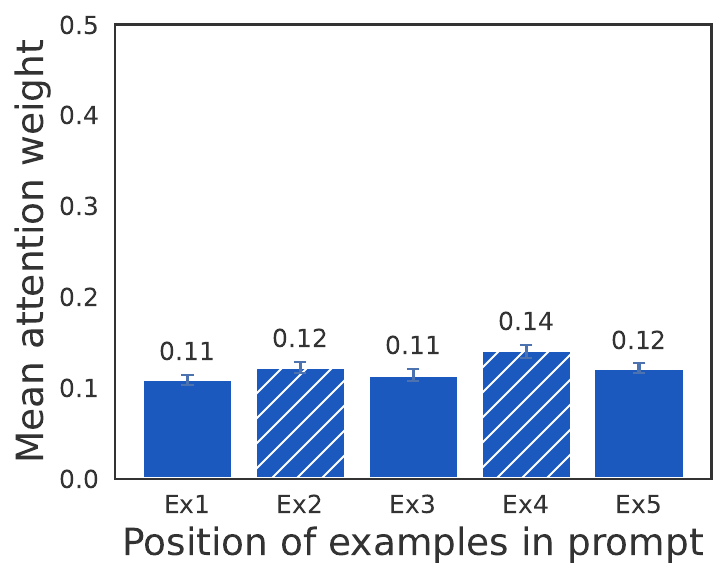} \hgap
    \centerimg{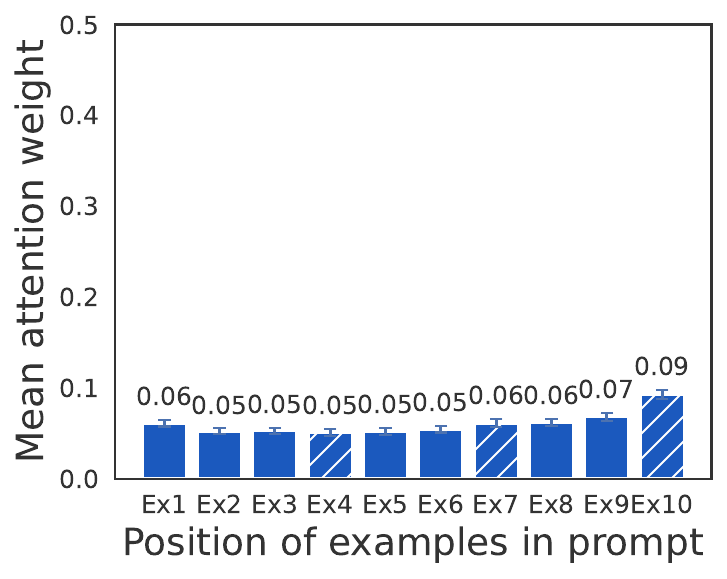} \\[0.2ex]

    \tasklabel{Present-Past Ambiguous-2} \hgap
    \centerimg{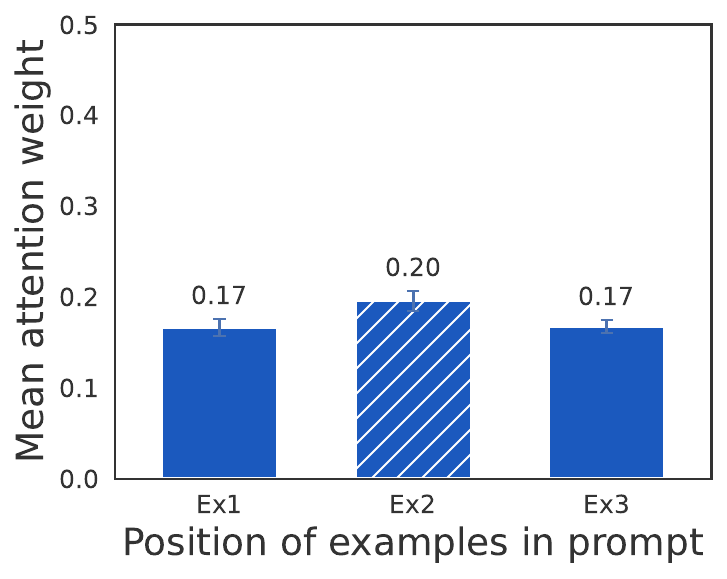} \hgap
    \centerimg{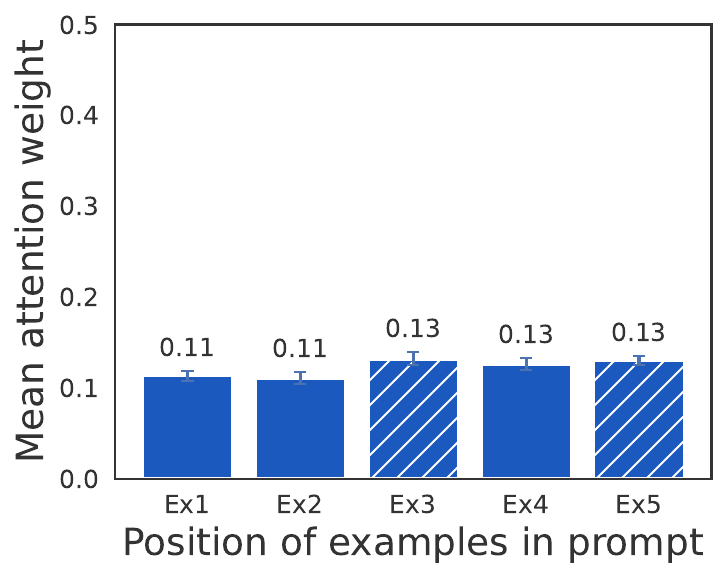} \hgap
    \centerimg{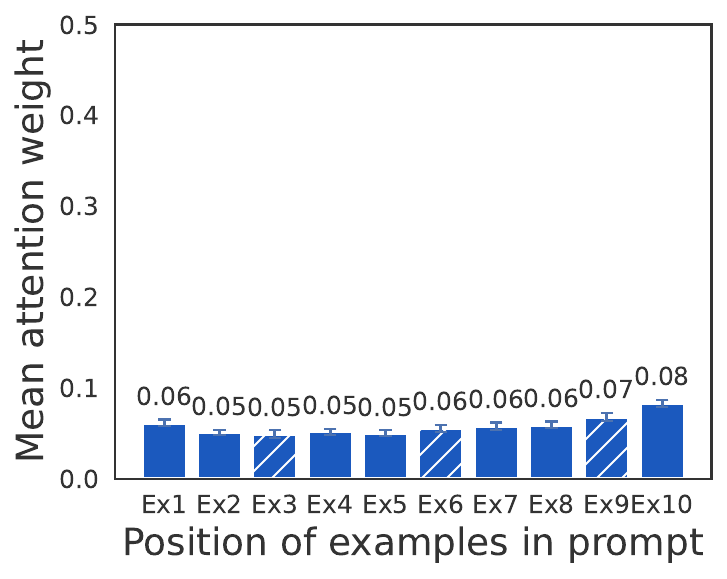} \\[0.2ex]

    \tasklabel{Chinese Ambiguous} \hgap
    \centerimg{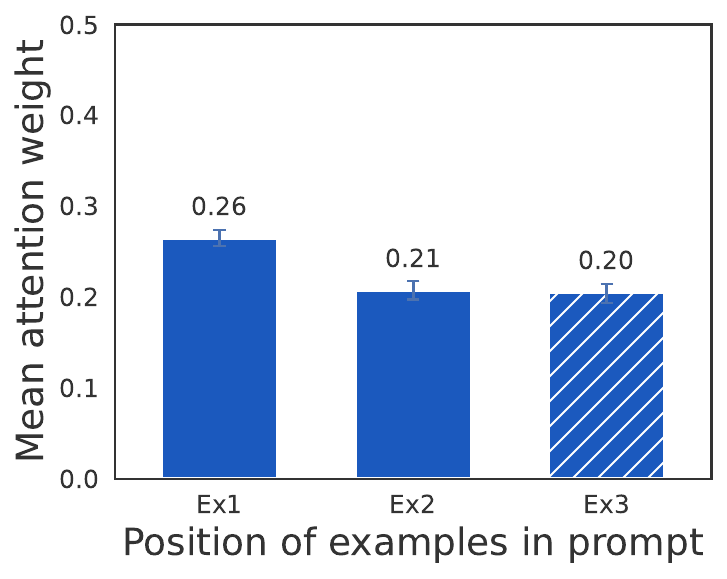} \hgap
    \centerimg{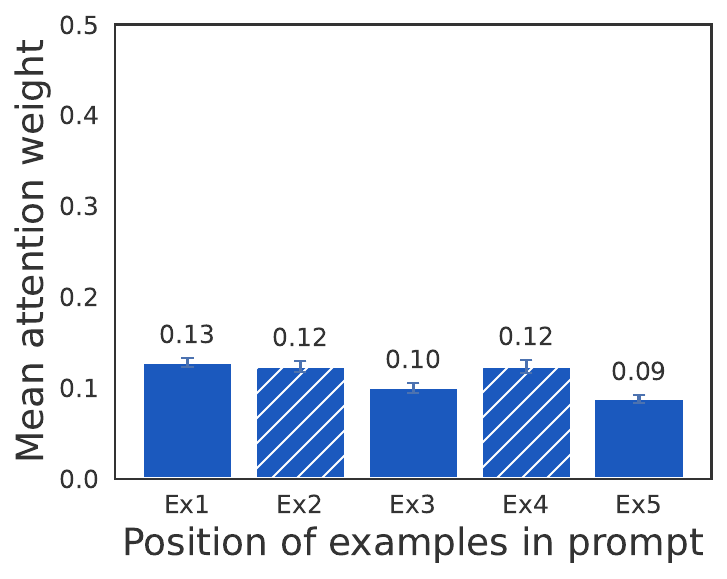} \hgap
    \centerimg{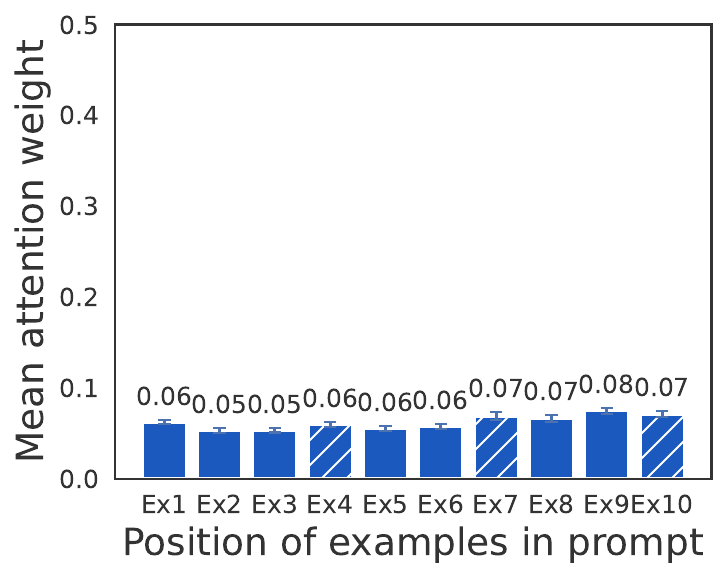} \\[0.2ex]

    \tasklabel{Chinese Ambiguous-2} \hgap
    \centerimg{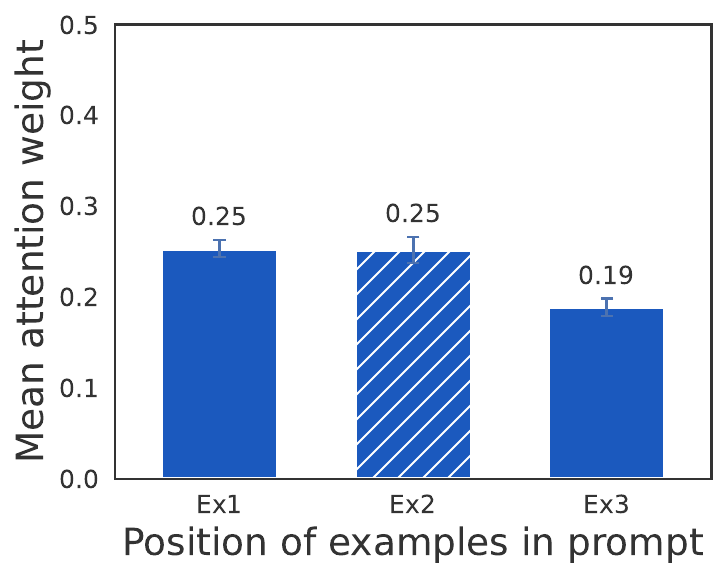} \hgap
    \centerimg{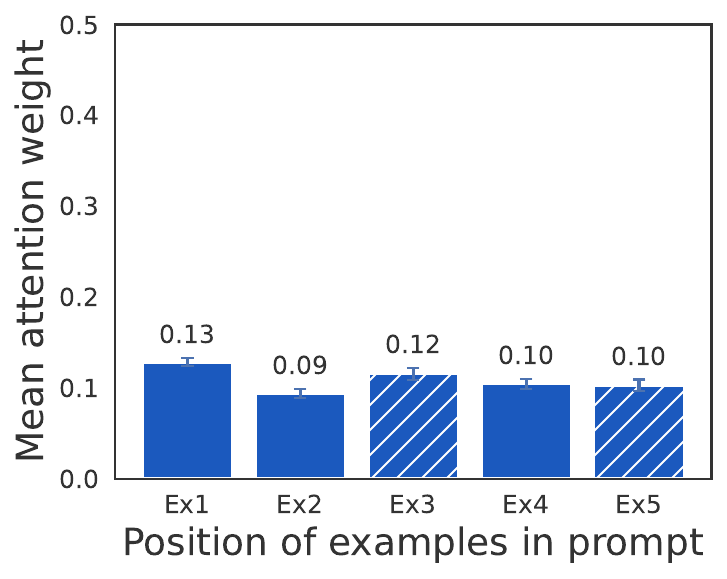} \hgap
    \centerimg{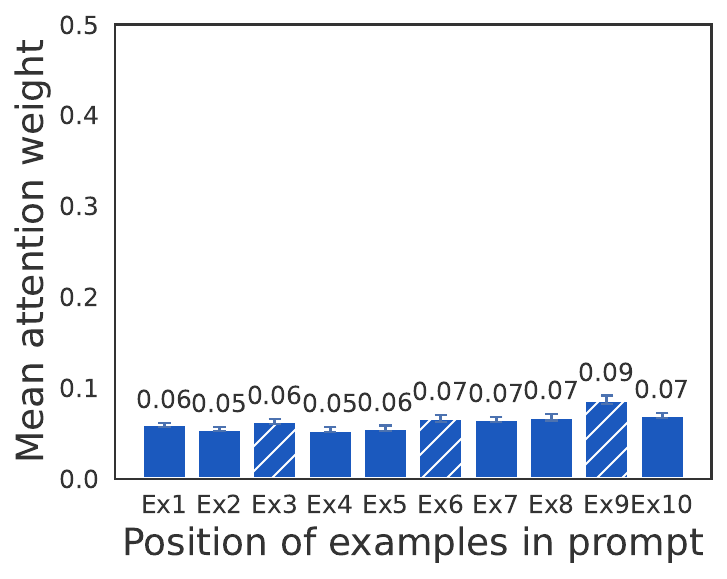} \\[0.2ex]

    \tasklabel{Japanese Ambiguous} \hgap
    \centerimg{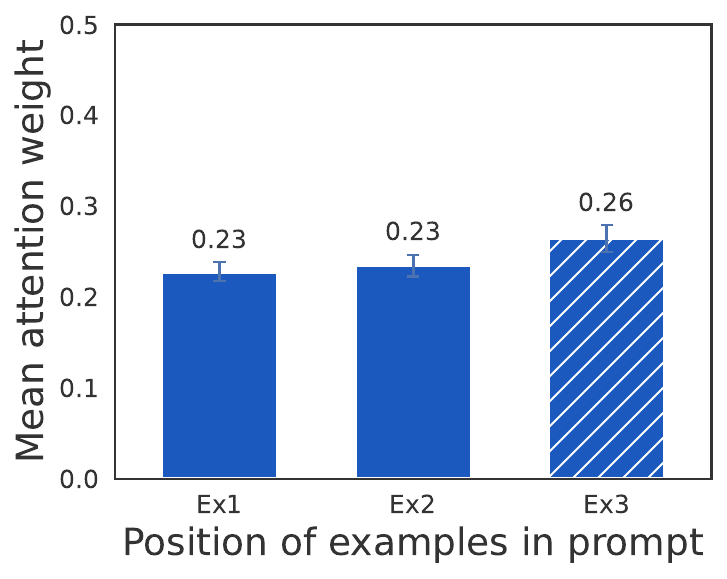} \hgap
    \centerimg{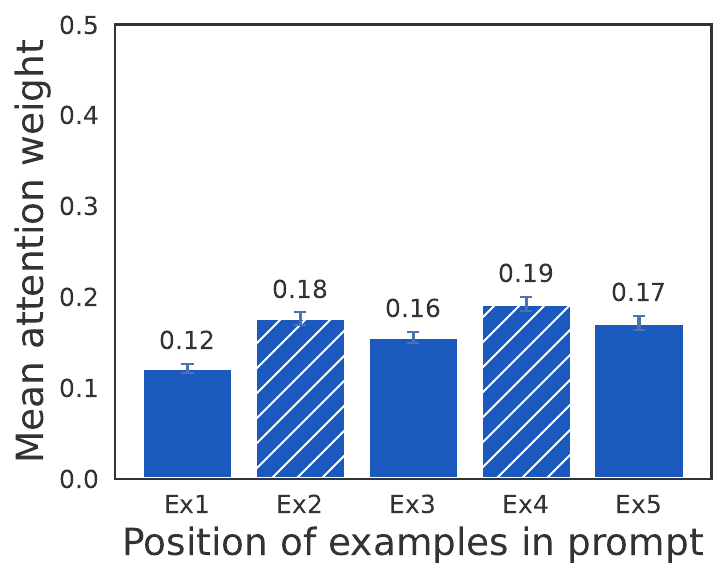} \hgap
    \centerimg{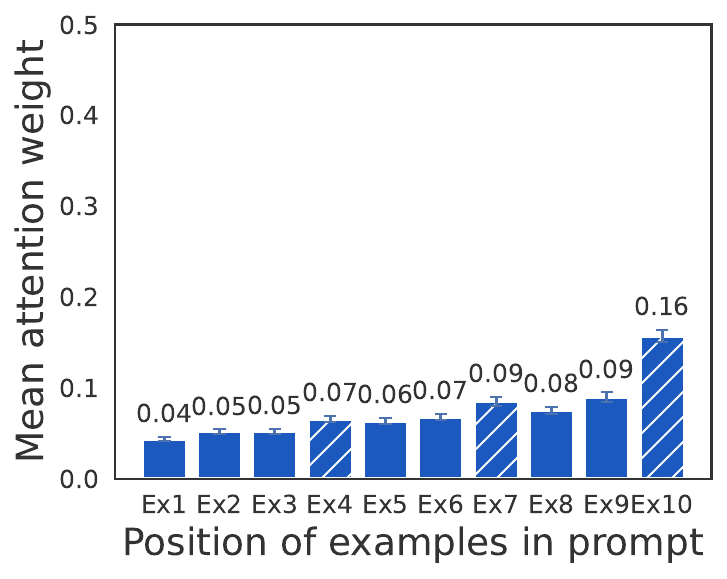} \\[0.2ex]

    \tasklabel{Japanese Ambiguous-2} \hgap
    \centerimg{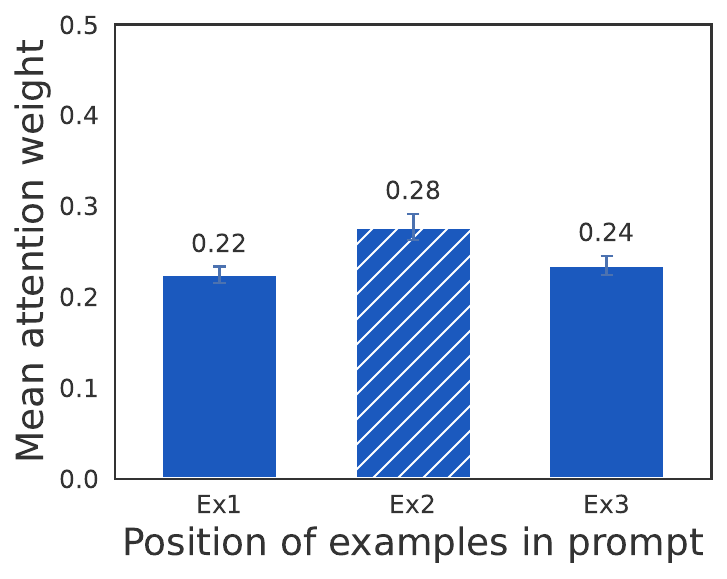} \hgap
    \centerimg{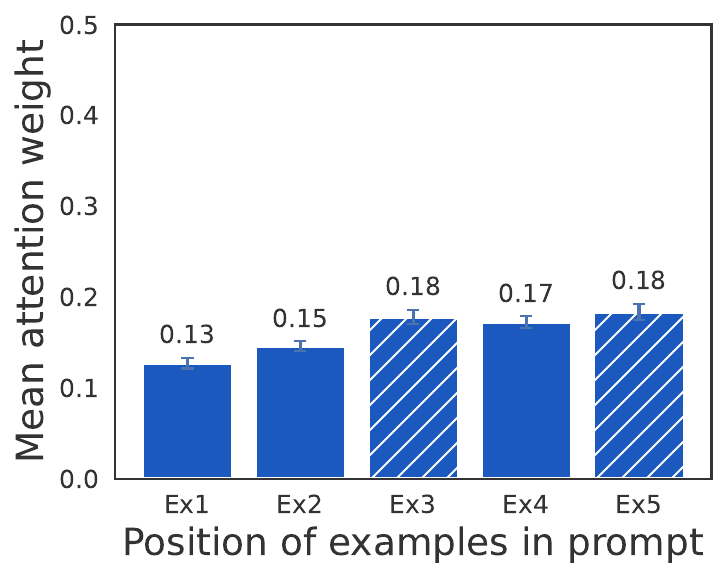} \hgap
    \centerimg{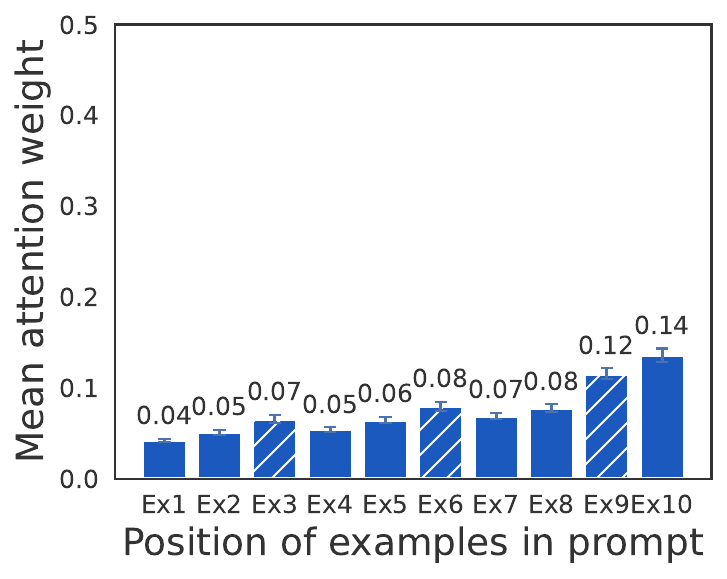} \\[0.2ex]

    \tasklabel{French Ambiguous} \hgap
    \centerimg{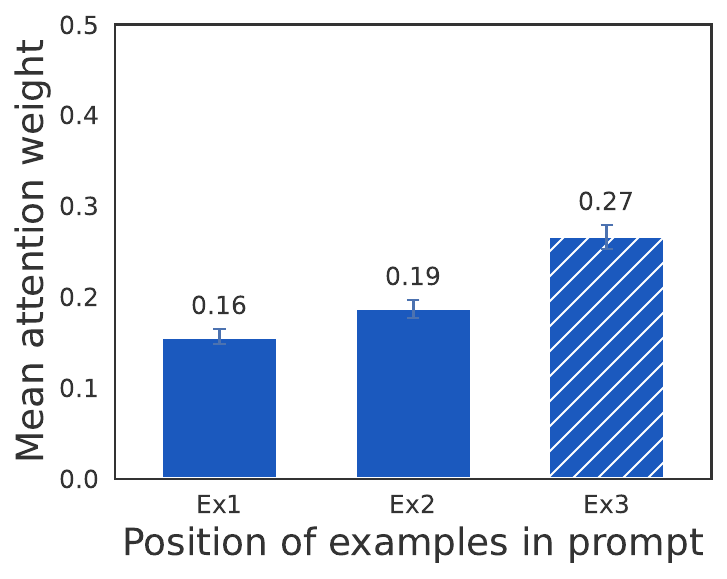} \hgap
    \centerimg{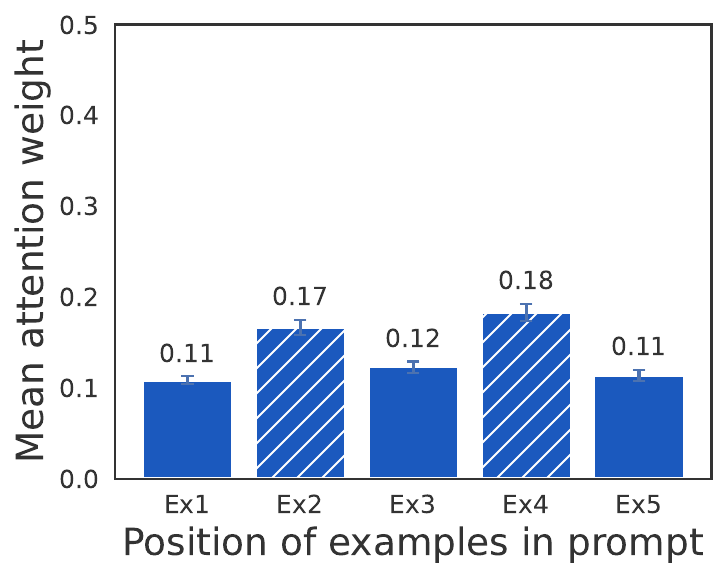} \hgap
    \centerimg{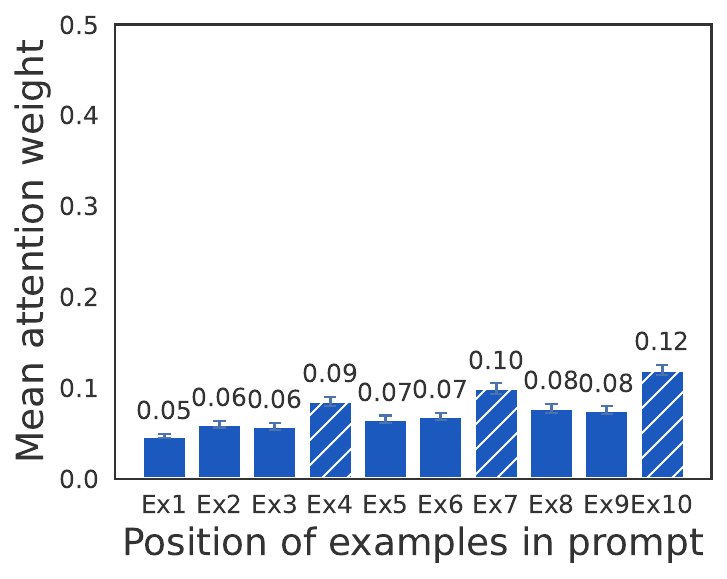} \\[0.2ex]

    \tasklabel{French Ambiguous-2} \hgap
    \centerimg{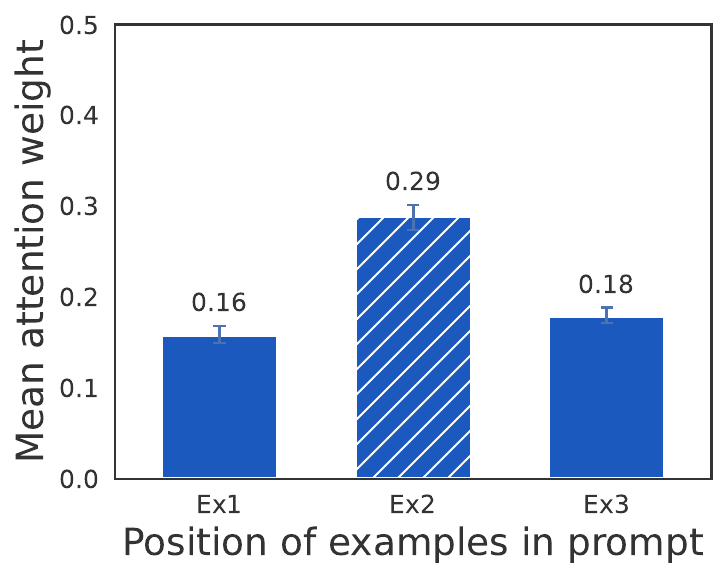} \hgap
    \centerimg{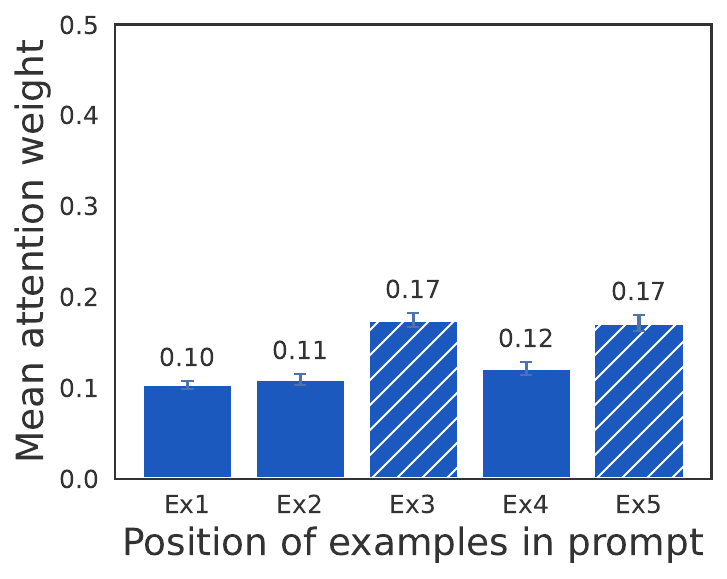} \hgap
    \centerimg{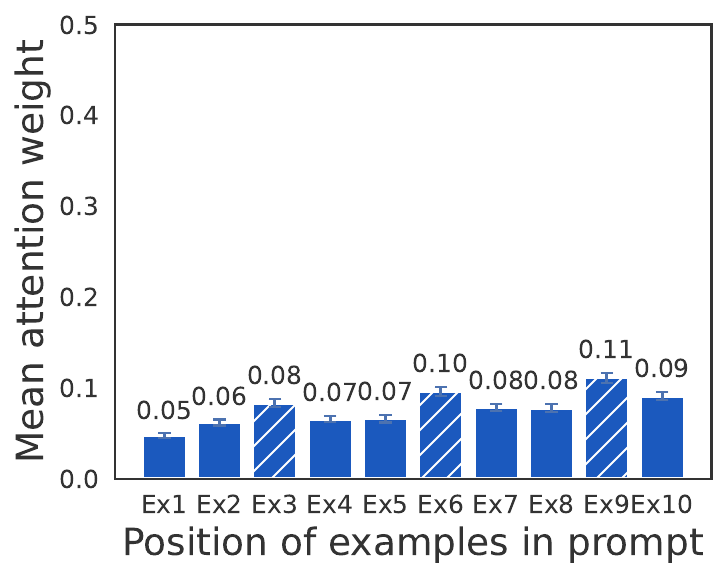} \\[0.2ex]

    \tasklabel{Capitalize Ambiguous} \hgap
    \centerimg{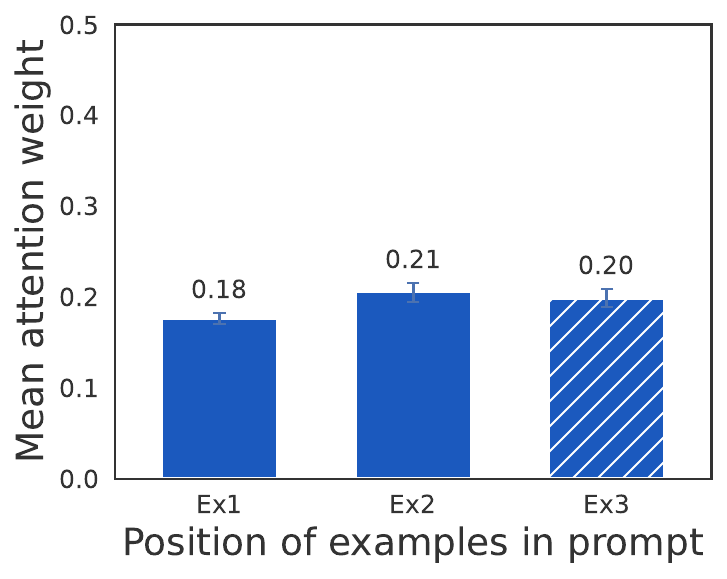} \hgap
    \centerimg{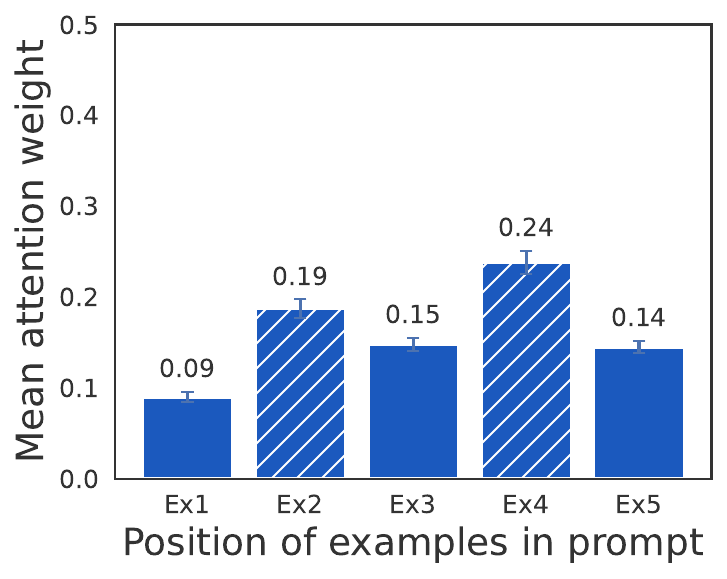} \hgap
    \centerimg{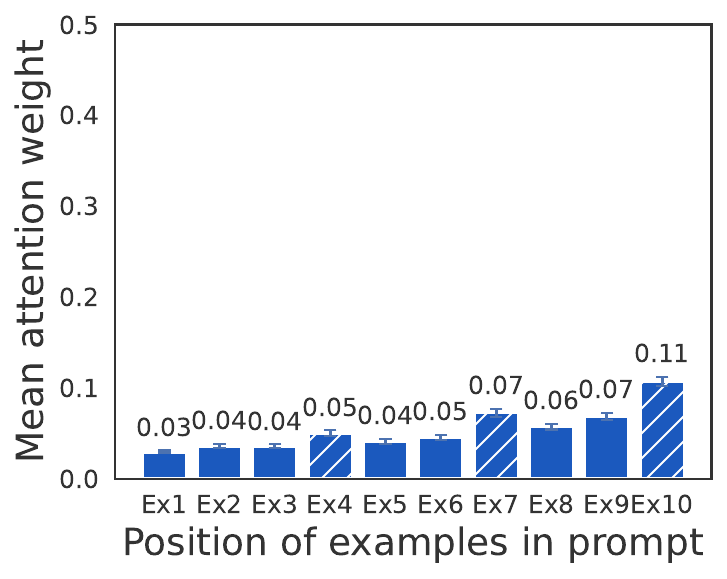} \\[0.2ex]

    \tasklabel{Capitalize Ambiguous-2} \hgap
    \centerimg{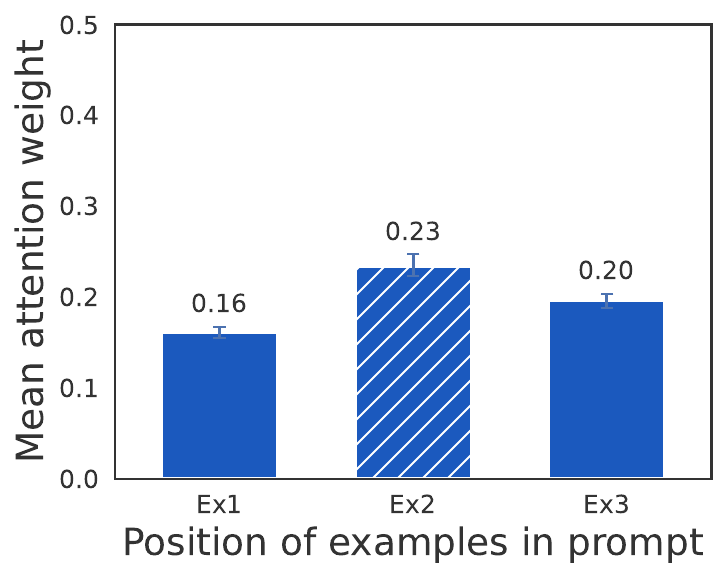} \hgap
    \centerimg{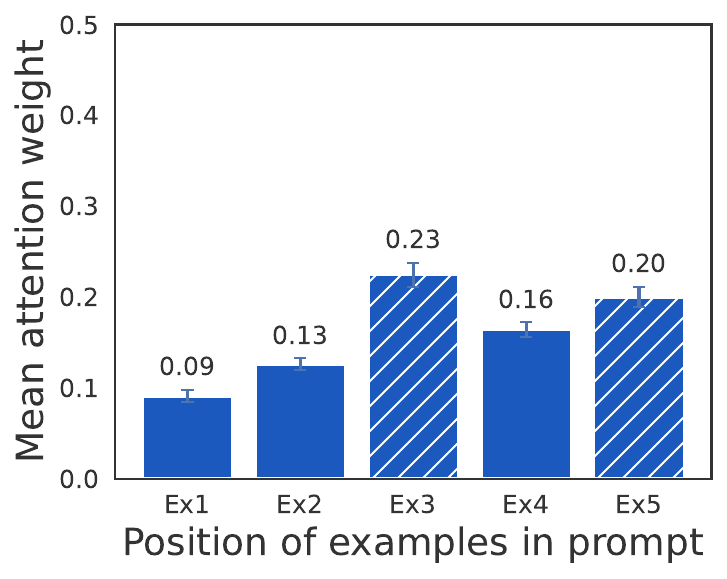} \hgap
    \centerimg{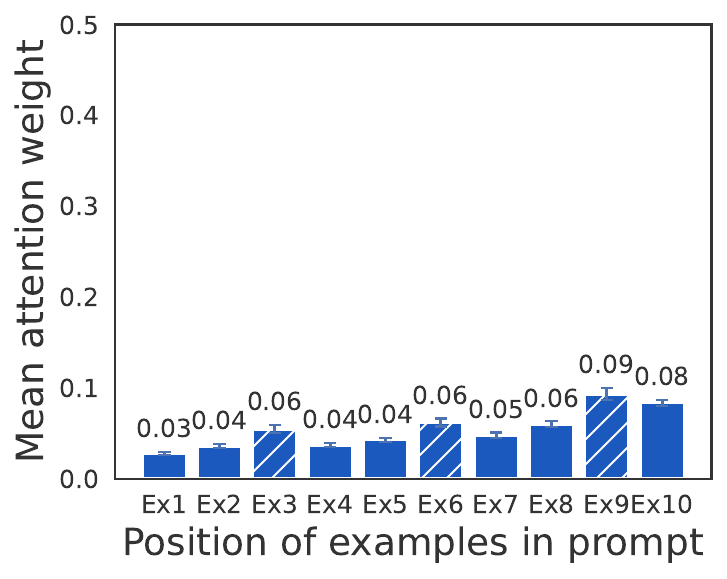}

    \caption{\textbf{Ambiguous Tasks:} \texttt{gemma-2-27b} mean attention weights of individual examples on FV heads across 3/5/10-shot settings.}
    \label{fig:ambiguous_tasks_compact}
\end{figure}

\begin{figure}[h!]
    \centering
    \footnotesize

    \newcommand{\tasklabel}[1]{%
        \begin{tikzpicture}[baseline=(node.center)]
            \node[anchor=west, text width=3.8cm, font=\small\scshape\bfseries, inner sep=0pt] (node) {#1};
        \end{tikzpicture}%
    }

    \newcommand{\centerimg}[1]{%
        $\vcenter{\hbox{\includegraphics[width=0.165\linewidth]{#1}}}$%
    }

    \newcommand{\hgap}{\hspace{10mm}}

    \tasklabel{} \hgap
    \makebox[0.165\linewidth][c]{\textbf{3-Shot}} \hgap
    \makebox[0.165\linewidth][c]{\textbf{5-Shot}} \hgap
    \makebox[0.165\linewidth][c]{\textbf{10-Shot}} \vspace{1mm} \\

    \tasklabel{Present-Past Ambiguous} \hgap
    \centerimg{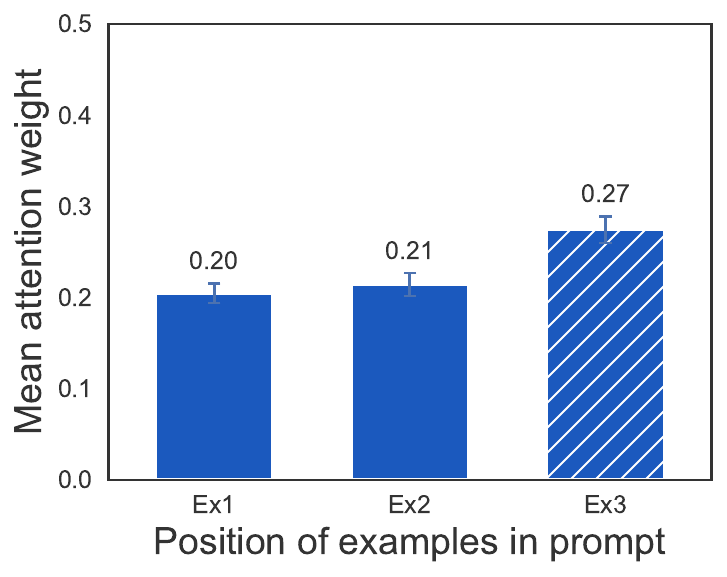} \hgap
    \centerimg{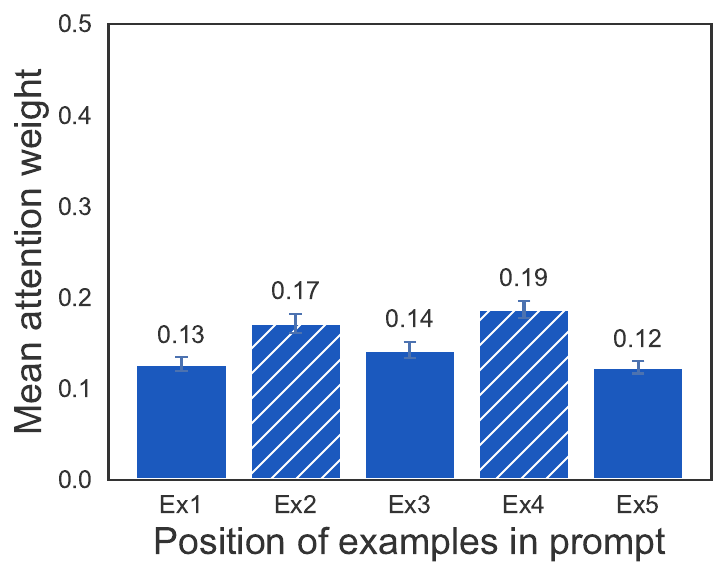} \hgap
    \centerimg{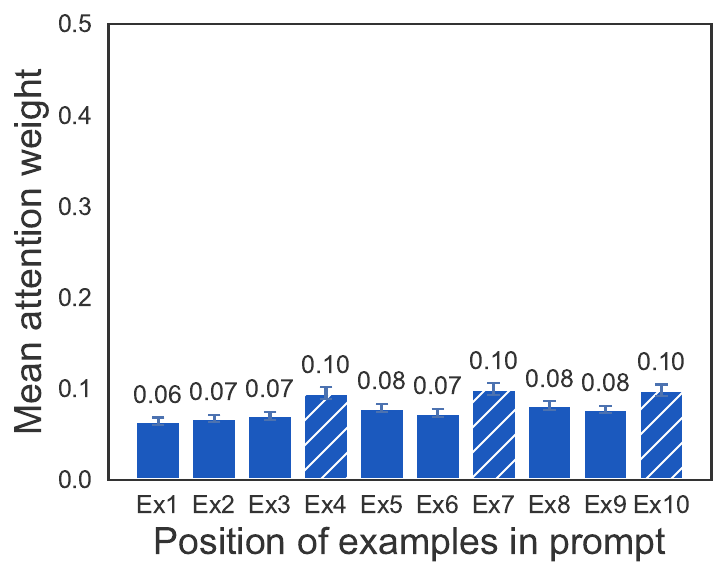} \\[0.2ex]

    \tasklabel{Present-Past Ambiguous-2} \hgap
    \centerimg{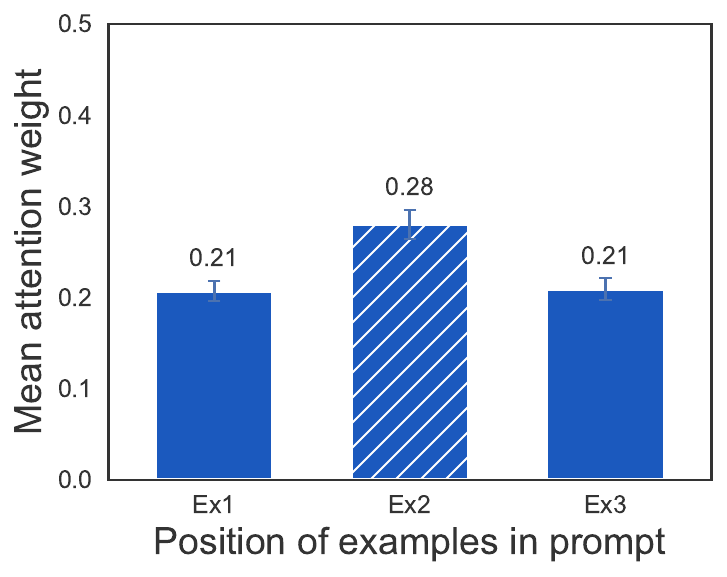} \hgap
    \centerimg{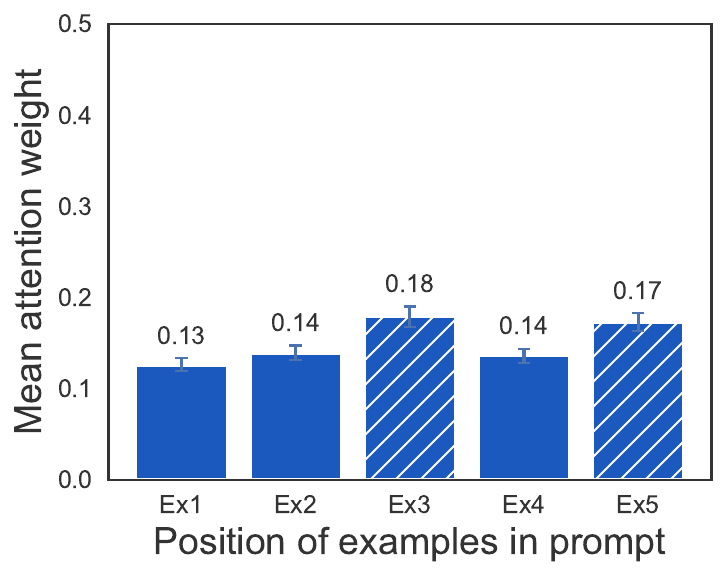} \hgap
    \centerimg{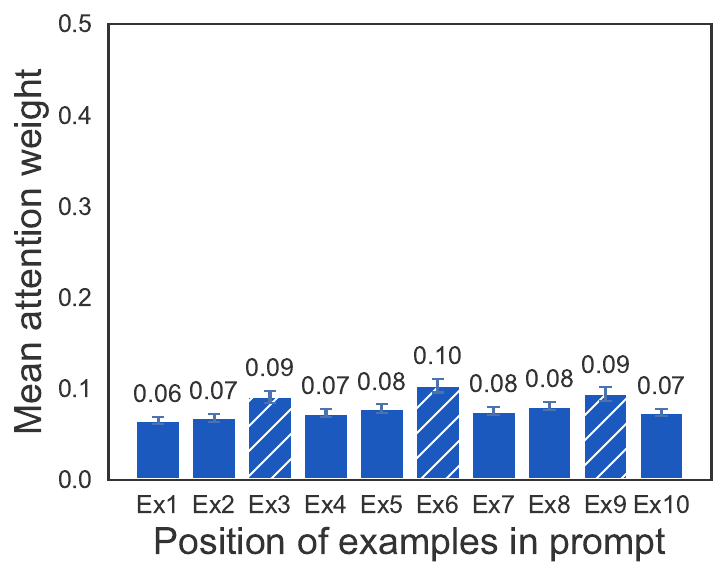} \\[0.2ex]

    \tasklabel{Chinese Ambiguous} \hgap
    \centerimg{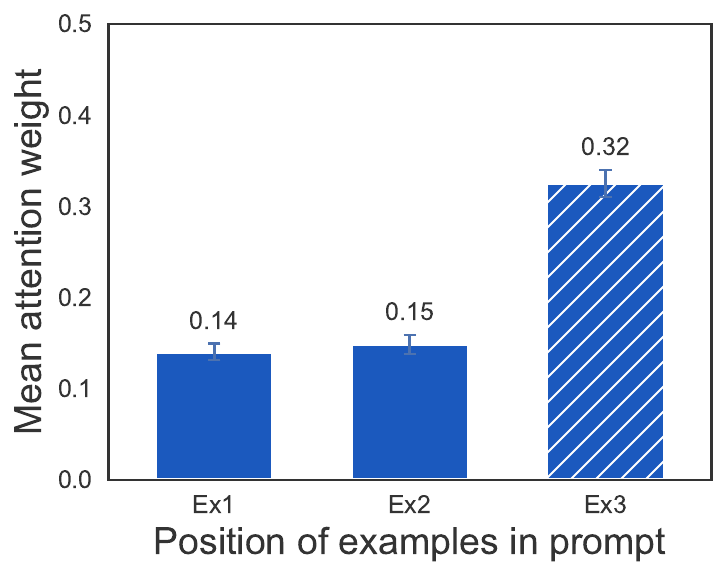} \hgap
    \centerimg{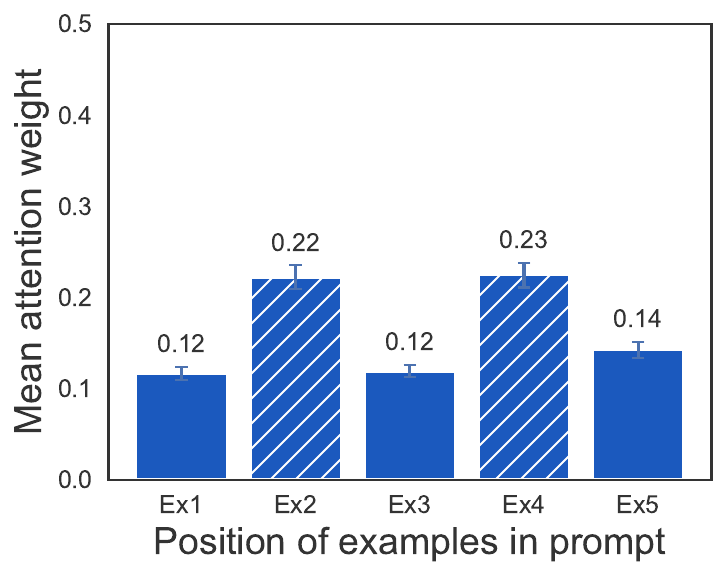} \hgap
    \centerimg{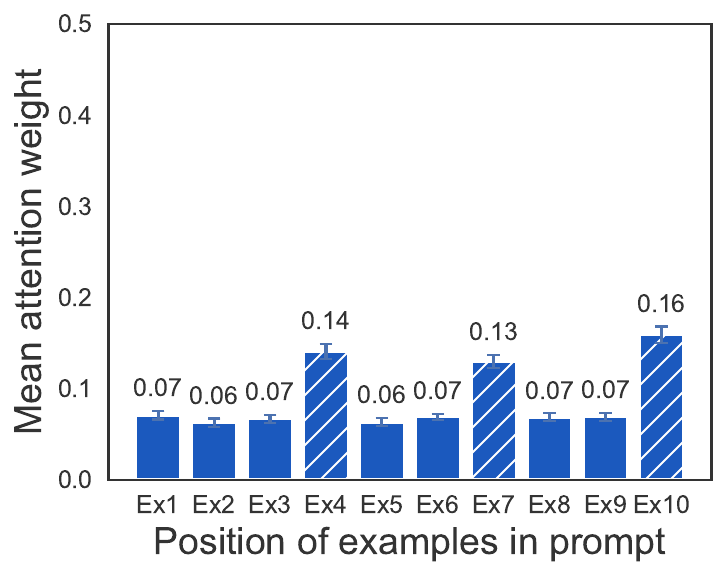} \\[0.2ex]

    \tasklabel{Chinese Ambiguous-2} \hgap
    \centerimg{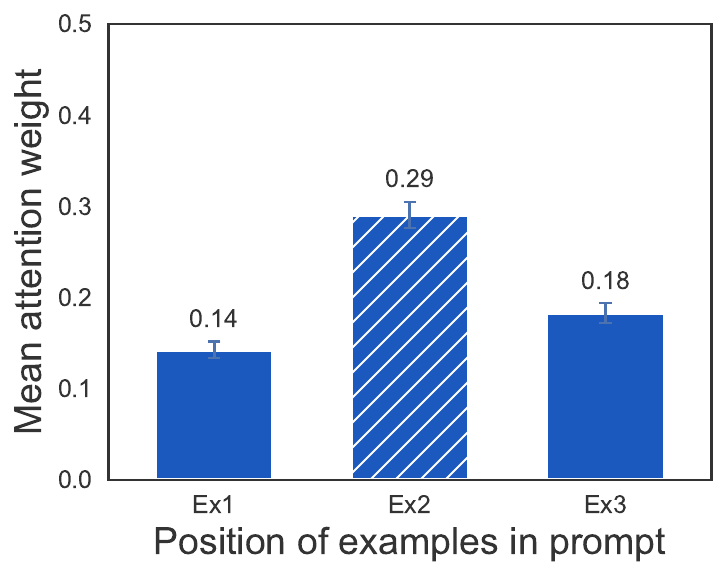} \hgap
    \centerimg{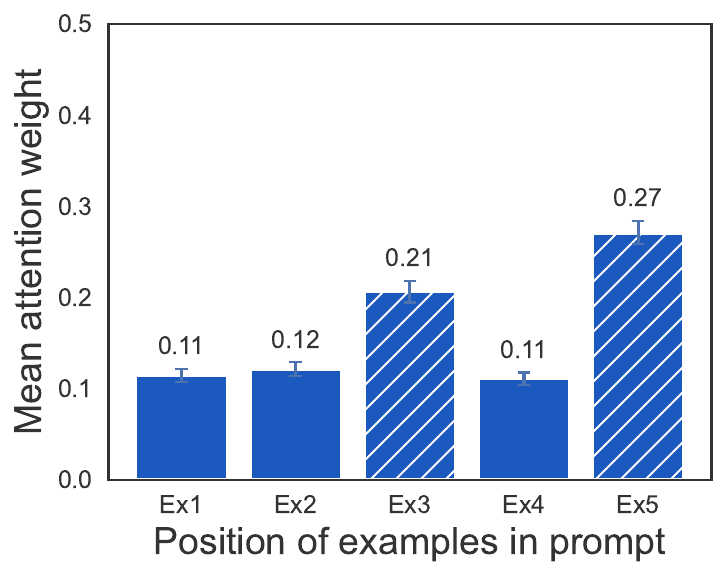} \hgap
    \centerimg{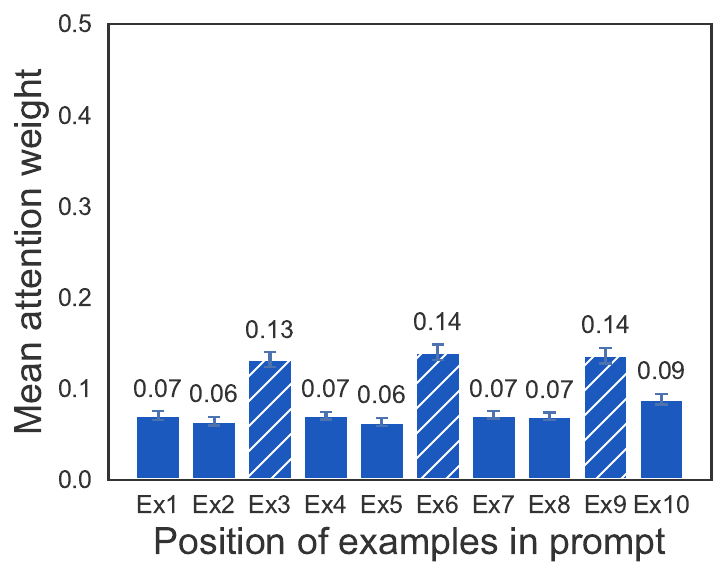} \\[0.2ex]

    \tasklabel{Japanese Ambiguous} \hgap
    \centerimg{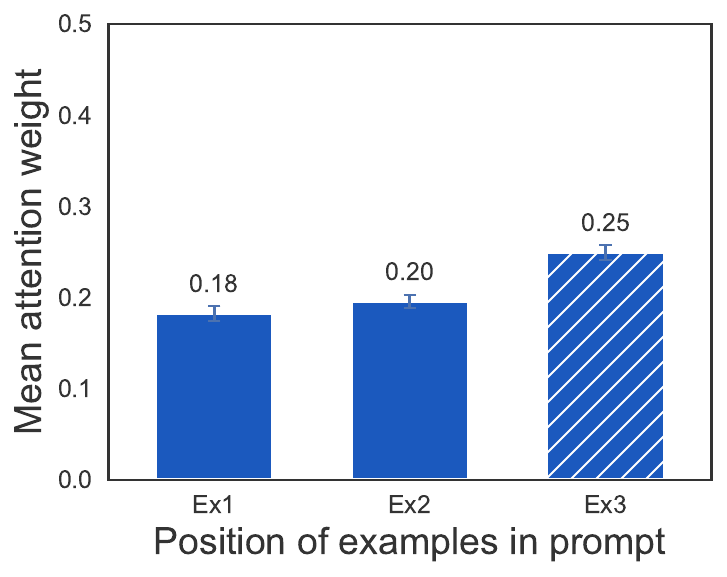} \hgap
    \centerimg{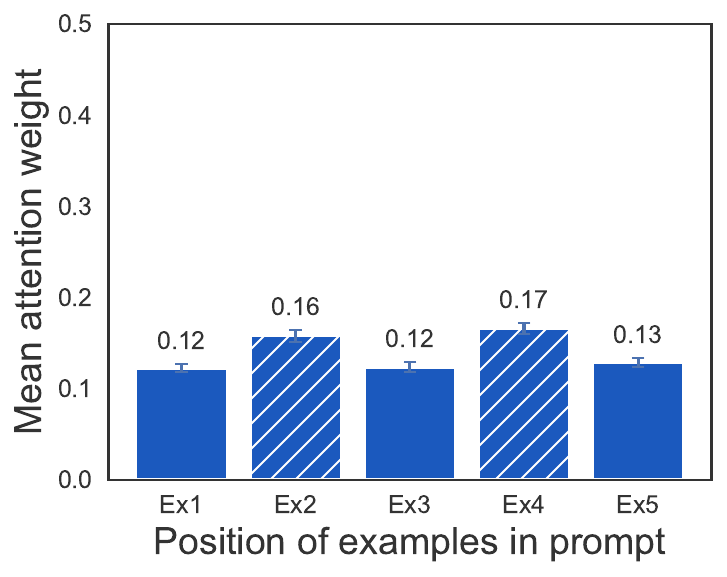} \hgap
    \centerimg{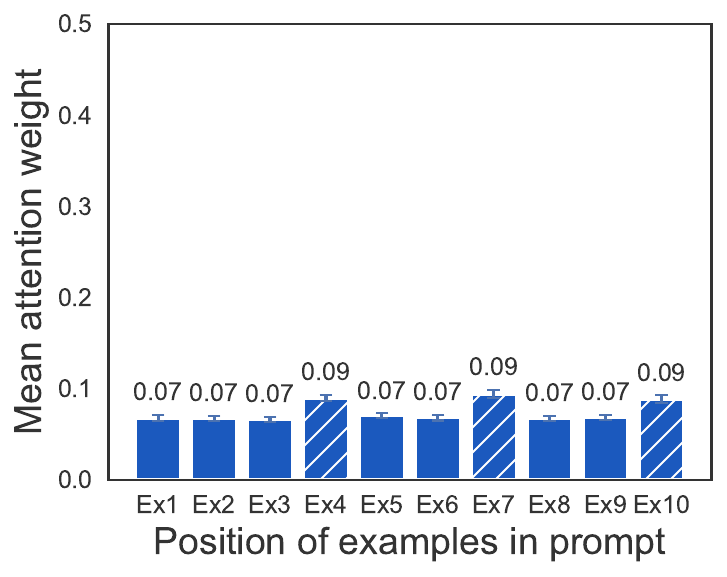} \\[0.2ex]

    \tasklabel{Japanese Ambiguous-2} \hgap
    \centerimg{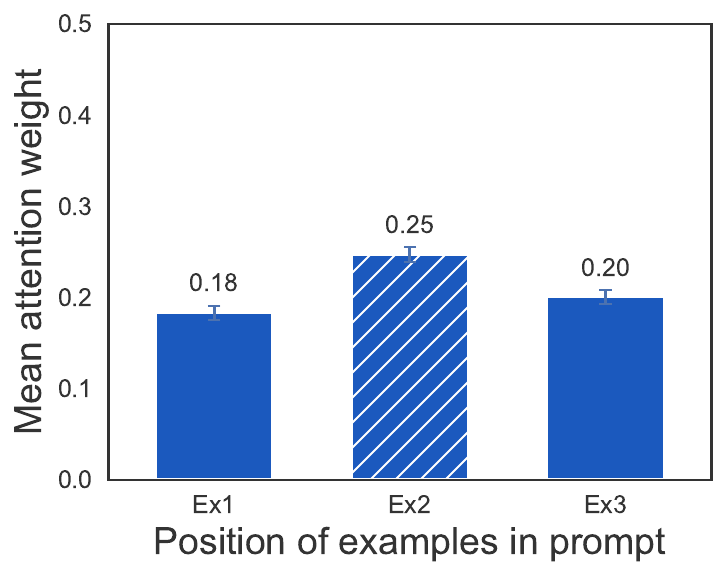} \hgap
    \centerimg{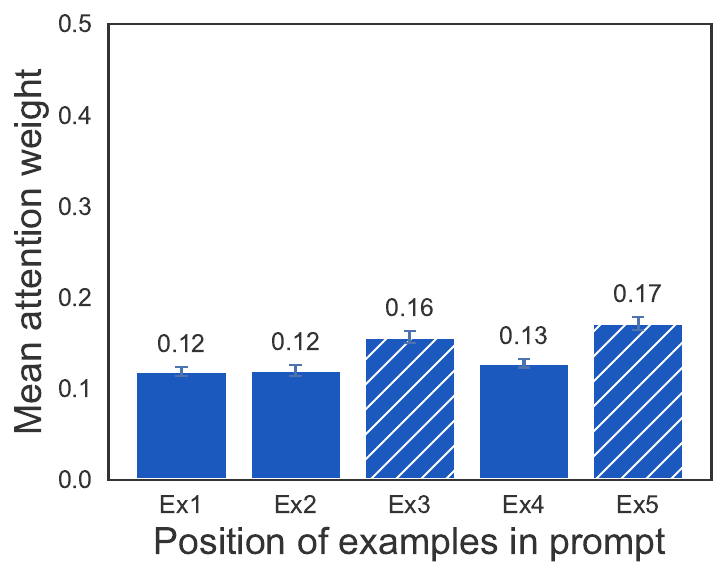} \hgap
    \centerimg{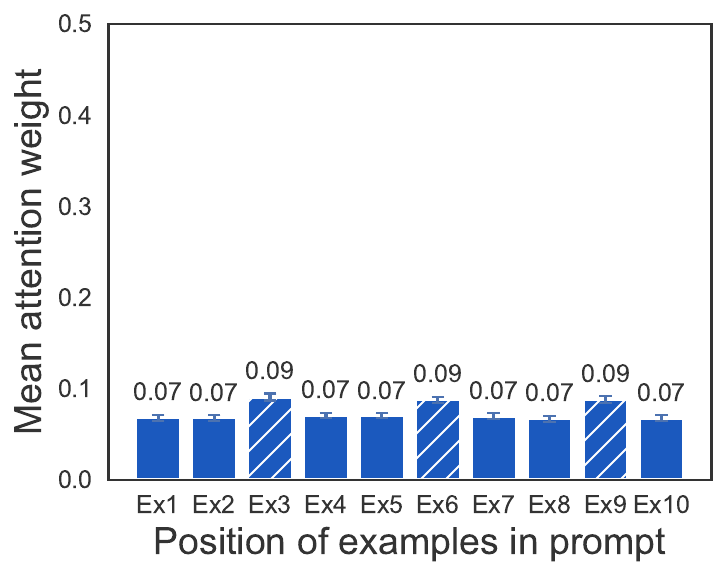} \\[0.2ex]

    \tasklabel{French Ambiguous} \hgap
    \centerimg{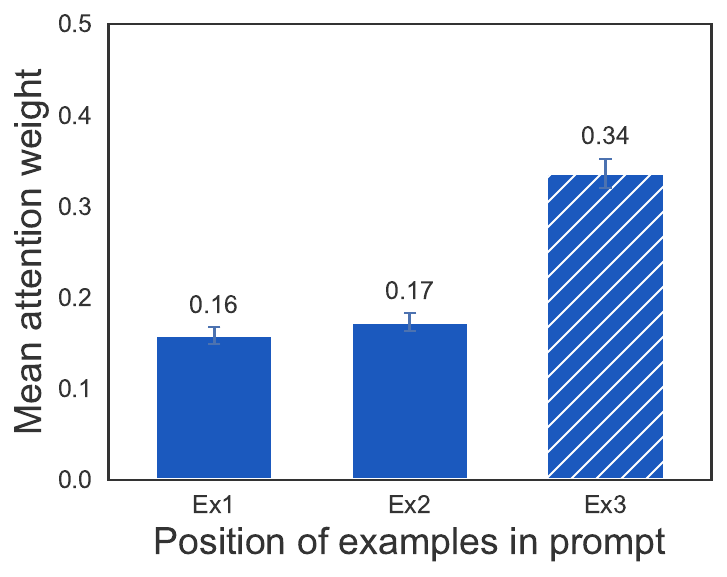} \hgap
    \centerimg{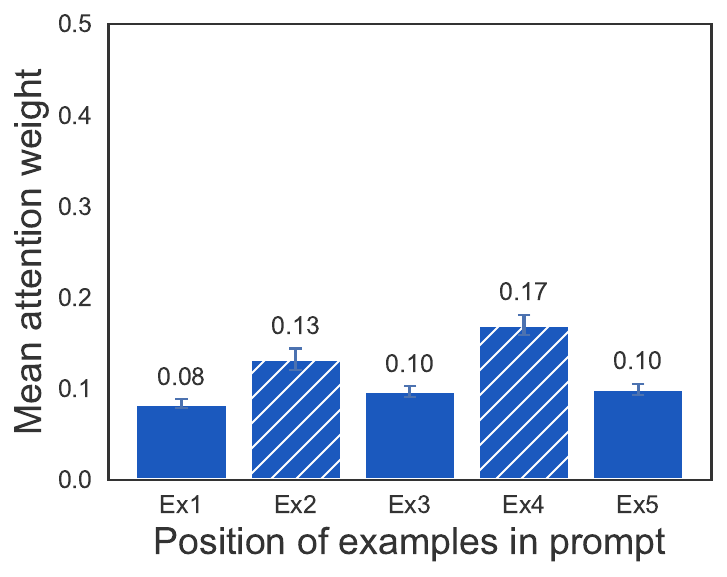} \hgap
    \centerimg{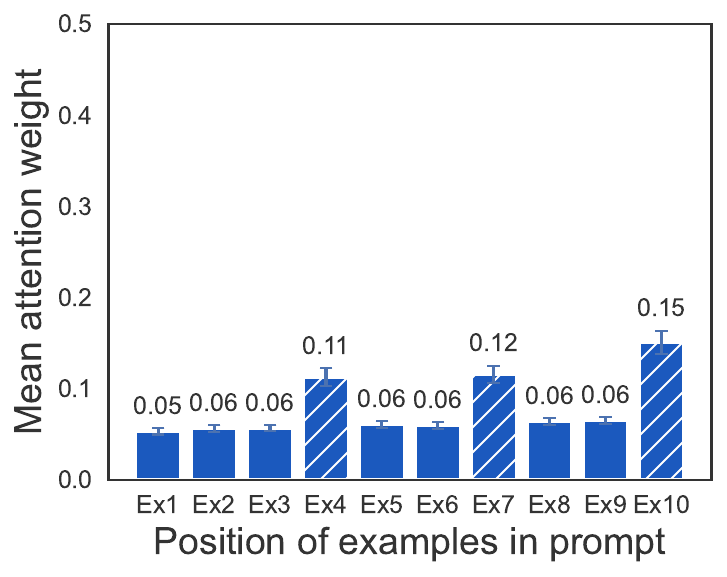} \\[0.2ex]

    \tasklabel{French Ambiguous-2} \hgap
    \centerimg{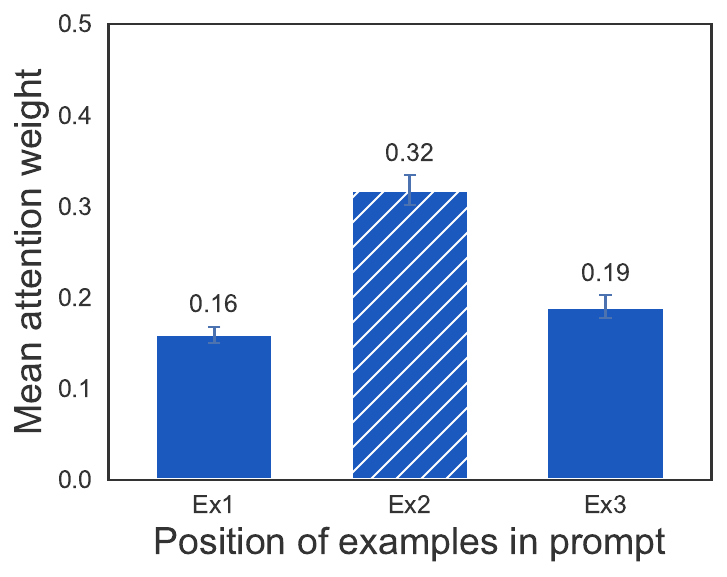} \hgap
    \centerimg{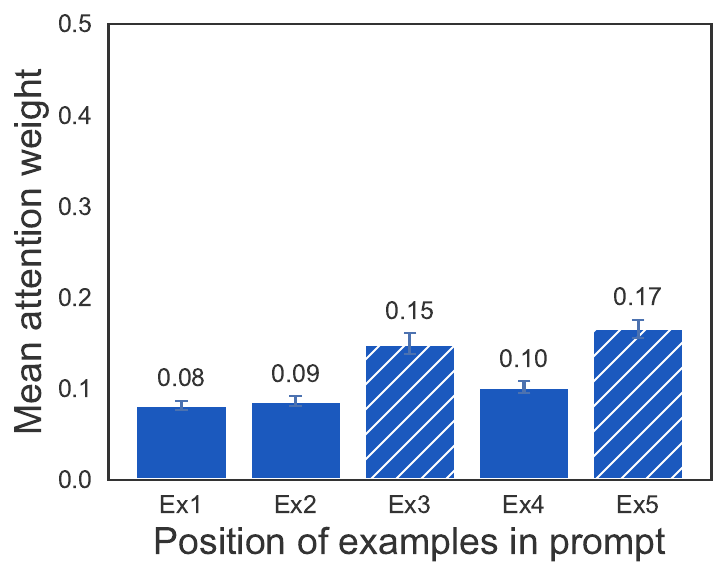} \hgap
    \centerimg{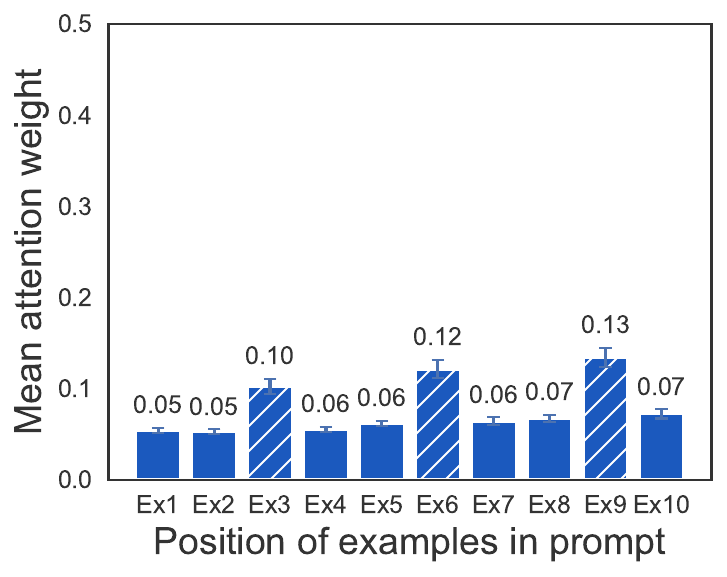} \\[0.2ex]

    \tasklabel{Capitalize Ambiguous} \hgap
    \centerimg{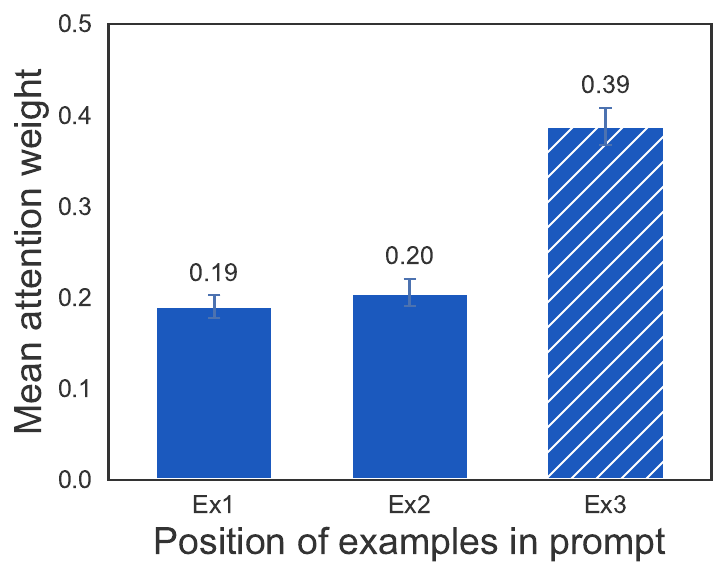} \hgap
    \centerimg{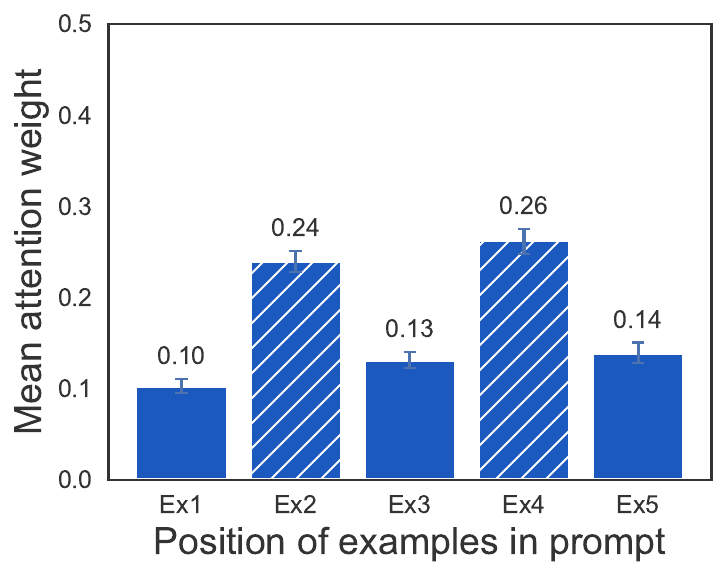} \hgap
    \centerimg{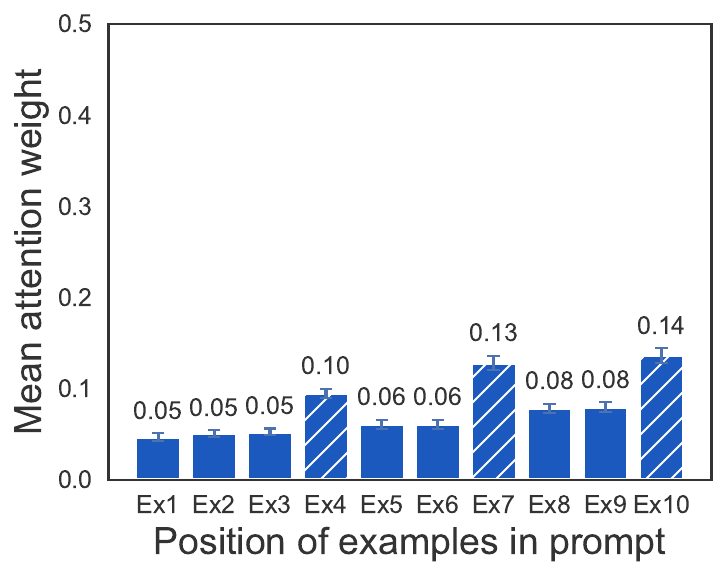} \\[0.2ex]

    \tasklabel{Capitalize Ambiguous-2} \hgap
    \centerimg{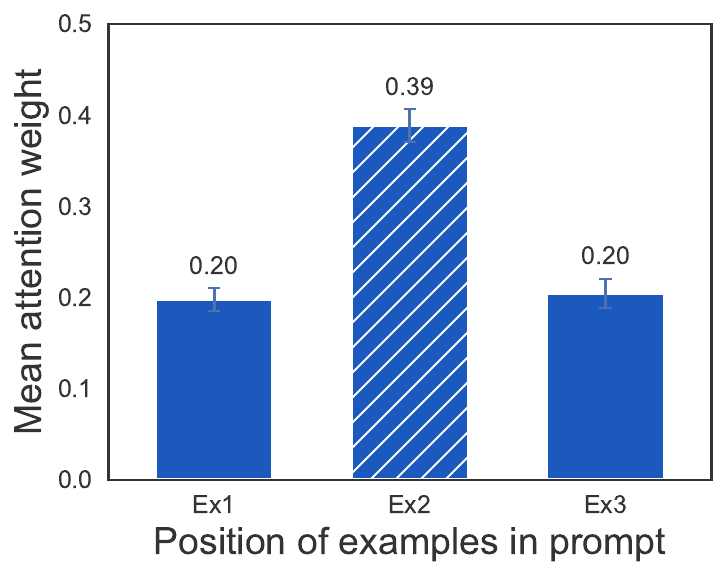} \hgap
    \centerimg{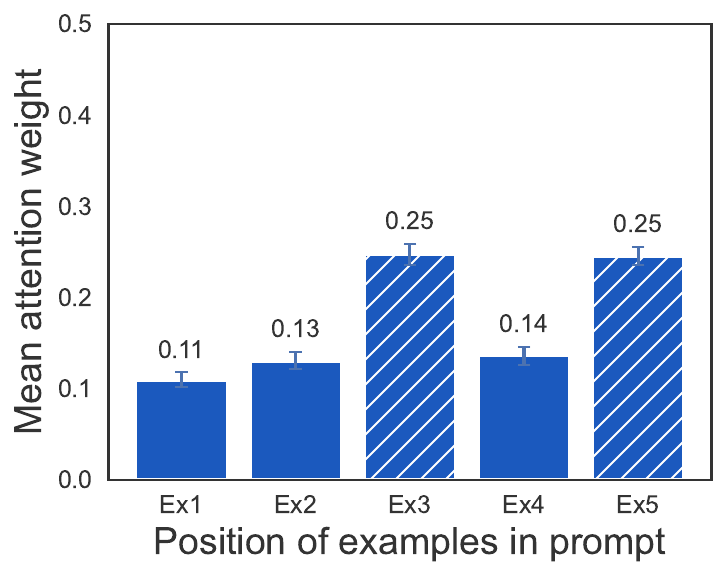} \hgap
    \centerimg{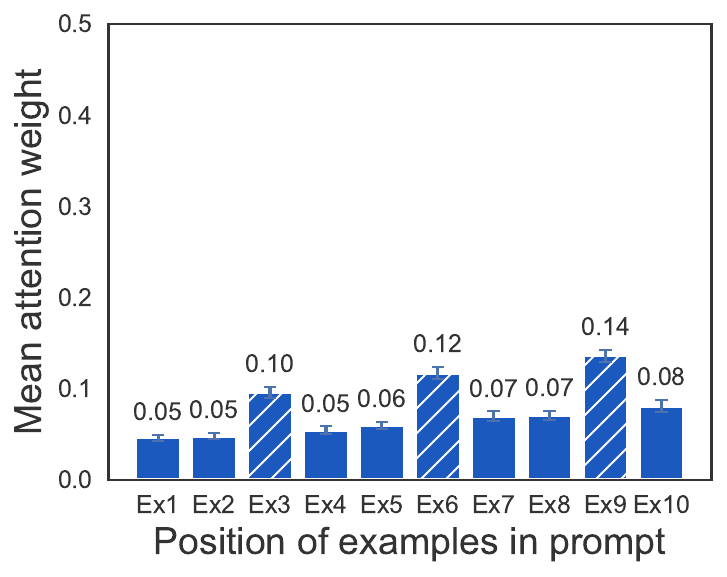}

    \caption{\textbf{Ambiguous Tasks:} \texttt{Llama-3.2-1B} mean attention weights of individual examples on FV heads across 3/5/10-shot settings.}
    \label{fig:ambiguous_tasks_compact}
\end{figure}

\begin{figure}[h!]
    \centering
    \footnotesize

    \newcommand{\tasklabel}[1]{%
        \begin{tikzpicture}[baseline=(node.center)]
            \node[anchor=west, text width=3.8cm, font=\small\scshape\bfseries, inner sep=0pt] (node) {#1};
        \end{tikzpicture}%
    }

    \newcommand{\centerimg}[1]{%
        $\vcenter{\hbox{\includegraphics[width=0.165\linewidth]{#1}}}$%
    }

    \newcommand{\hgap}{\hspace{10mm}}

    \tasklabel{} \hgap
    \makebox[0.165\linewidth][c]{\textbf{3-Shot}} \hgap
    \makebox[0.165\linewidth][c]{\textbf{5-Shot}} \hgap
    \makebox[0.165\linewidth][c]{\textbf{10-Shot}} \vspace{1mm} \\

    \tasklabel{Present-Past Ambiguous} \hgap
    \centerimg{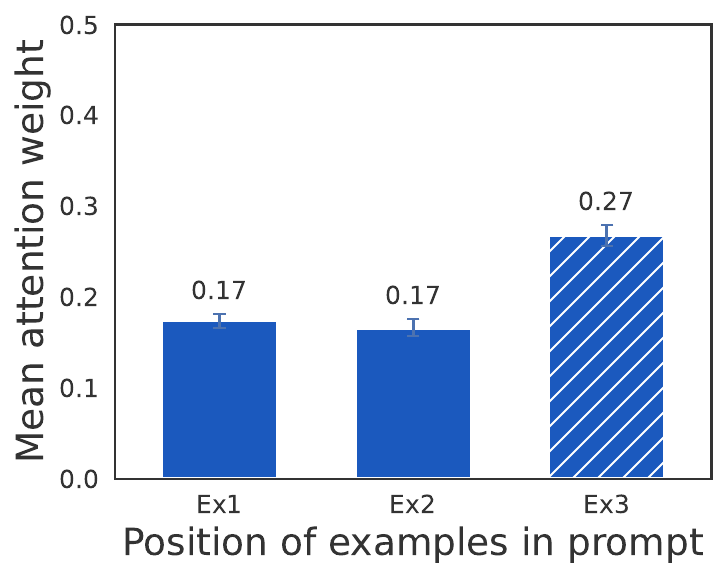} \hgap
    \centerimg{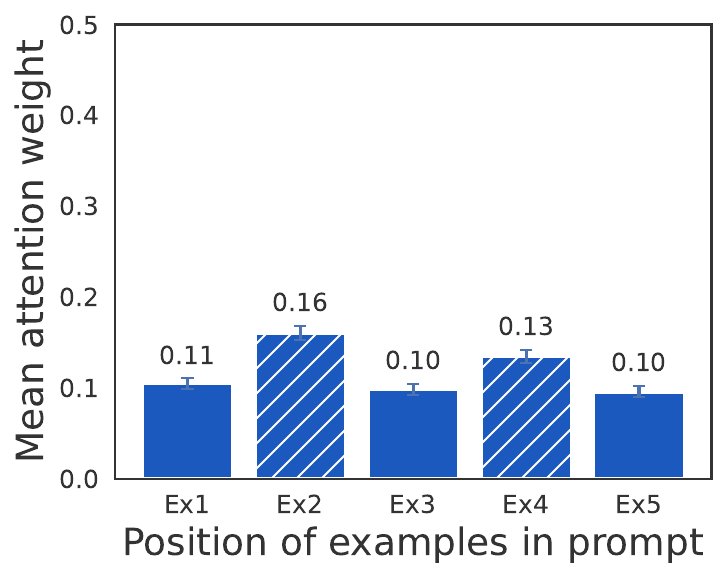} \hgap
    \centerimg{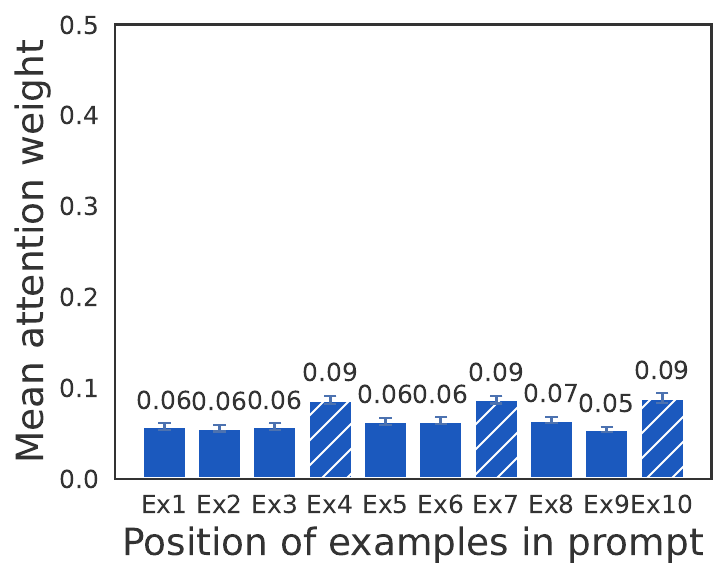} \\[0.2ex]

    \tasklabel{Present-Past Ambiguous-2} \hgap
    \centerimg{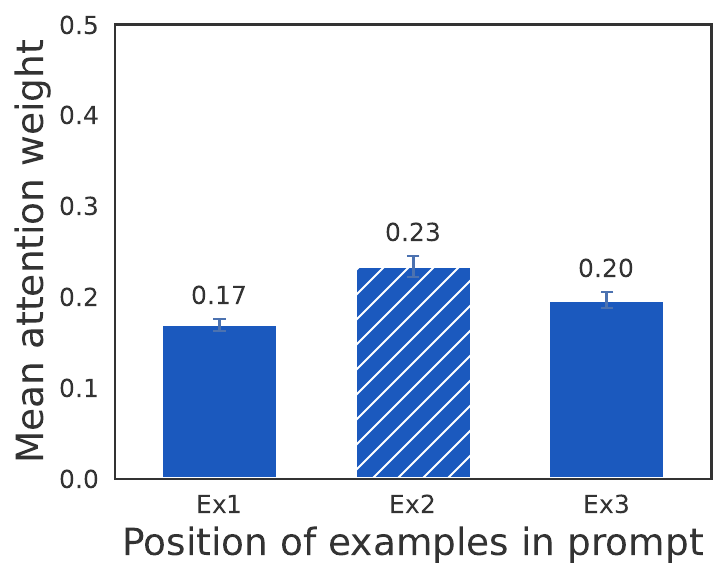} \hgap
    \centerimg{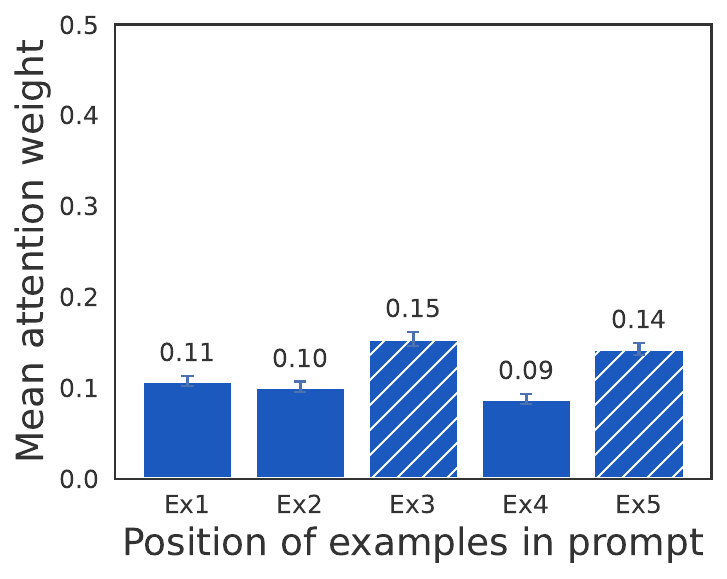} \hgap
    \centerimg{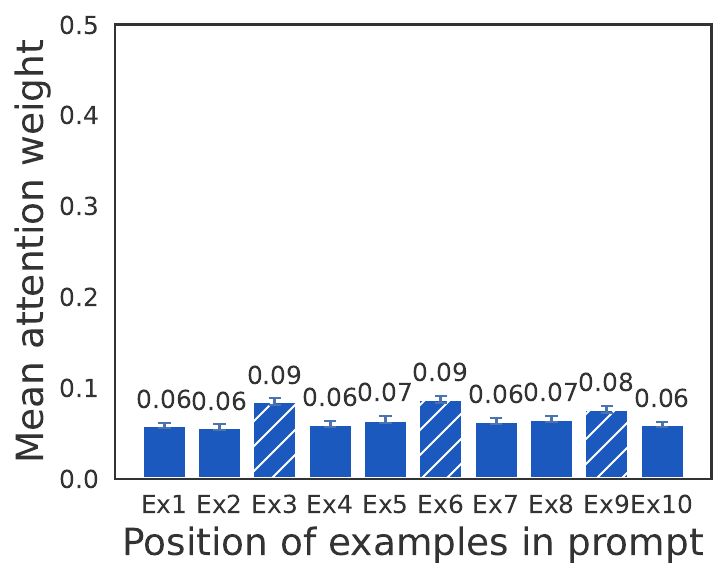} \\[0.2ex]

    \tasklabel{Chinese Ambiguous} \hgap
    \centerimg{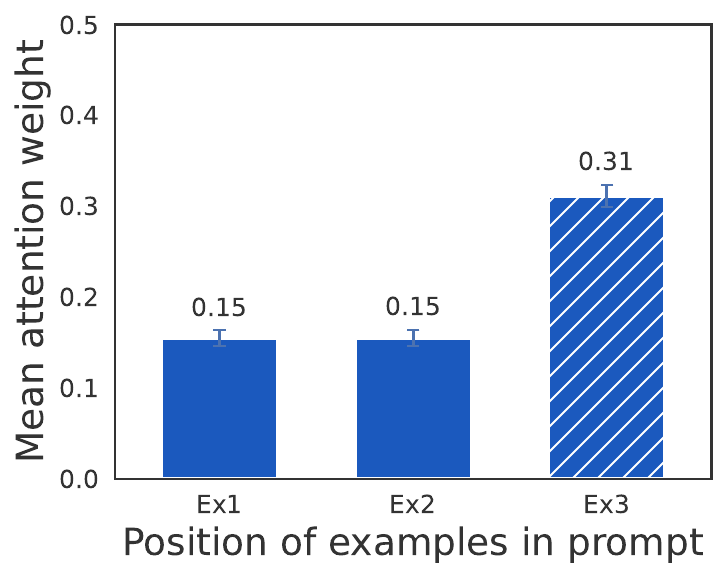} \hgap
    \centerimg{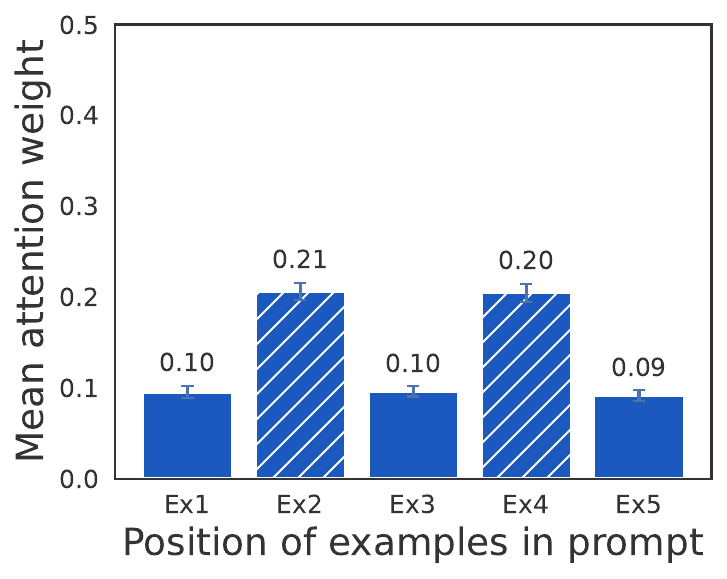} \hgap
    \centerimg{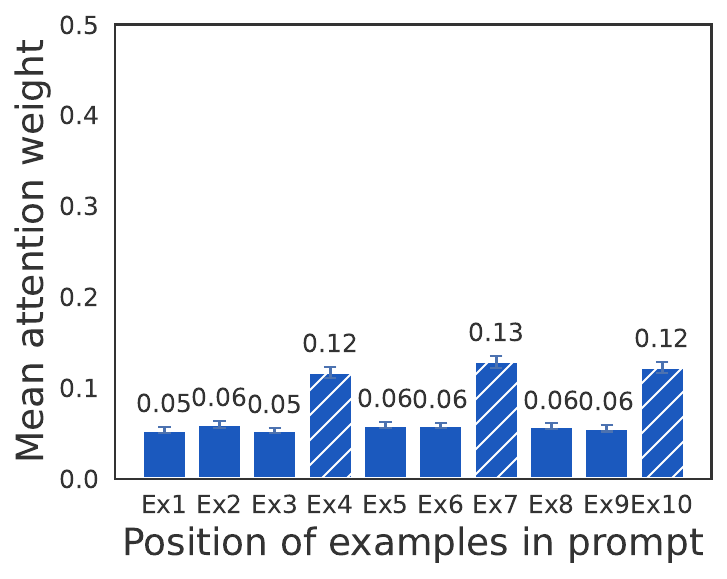} \\[0.2ex]

    \tasklabel{Chinese Ambiguous-2} \hgap
    \centerimg{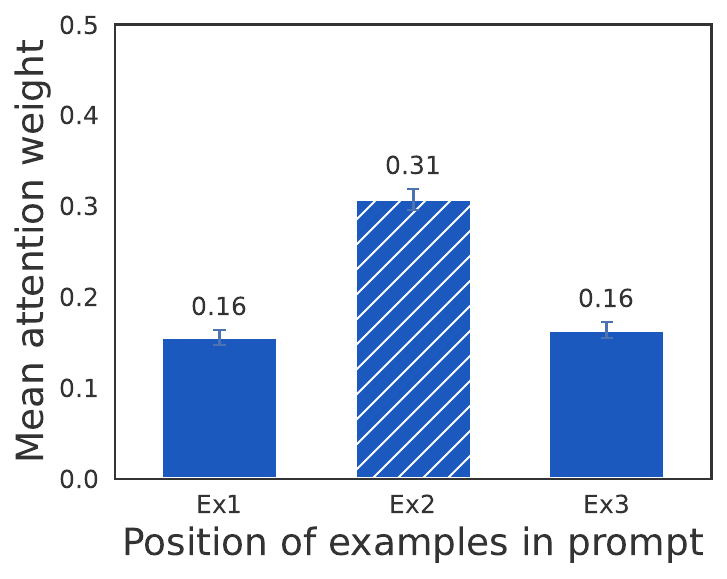} \hgap
    \centerimg{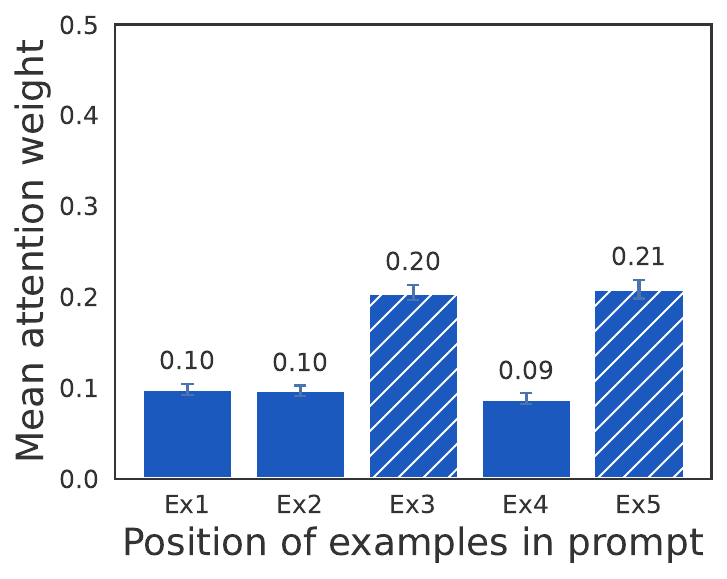} \hgap
    \centerimg{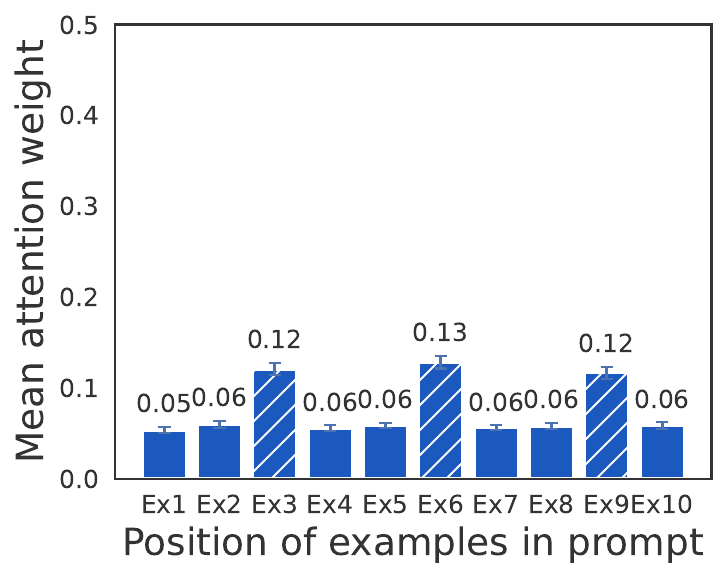} \\[0.2ex]

    \tasklabel{Japanese Ambiguous} \hgap
    \centerimg{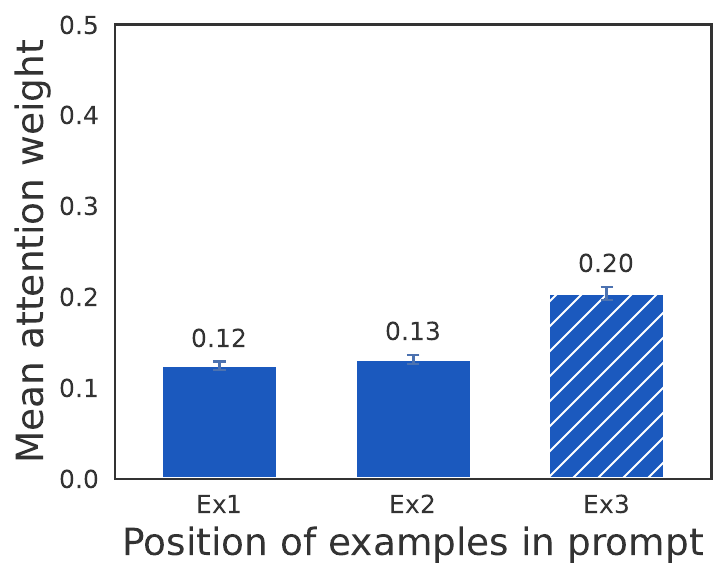} \hgap
    \centerimg{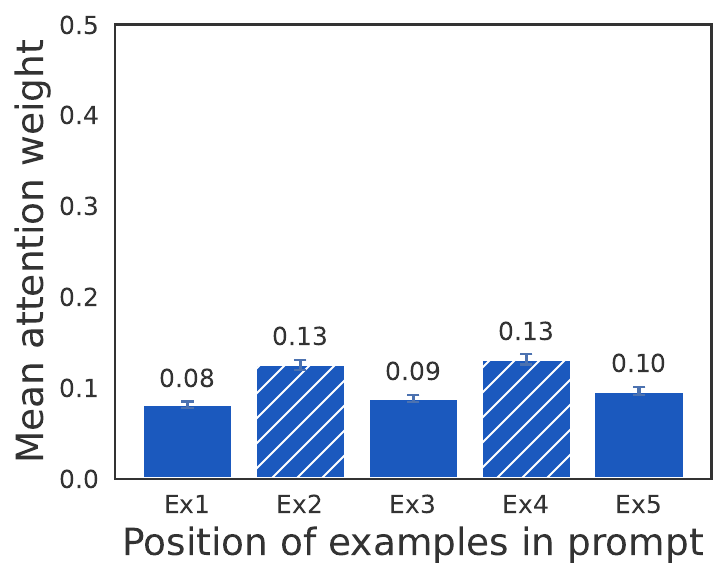} \hgap
    \centerimg{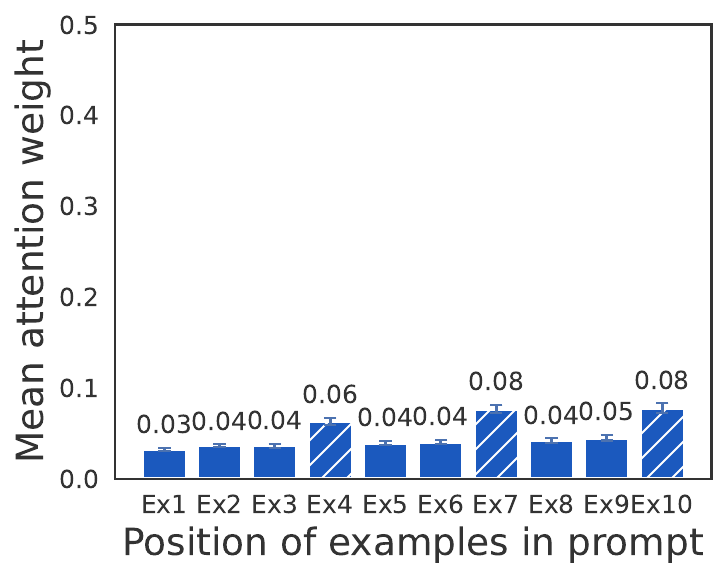} \\[0.2ex]

    \tasklabel{Japanese Ambiguous-2} \hgap
    \centerimg{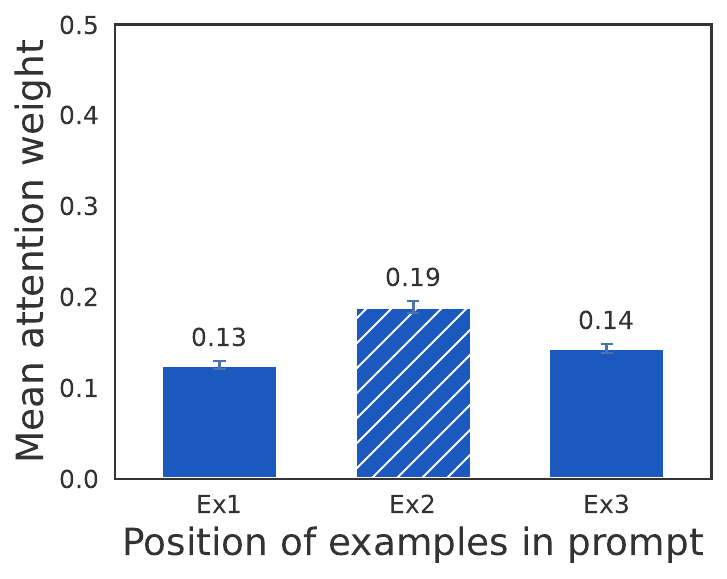} \hgap
    \centerimg{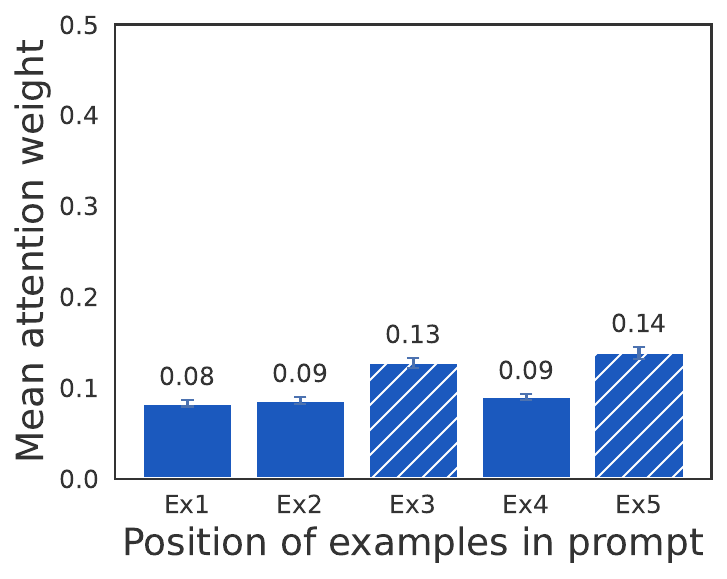} \hgap
    \centerimg{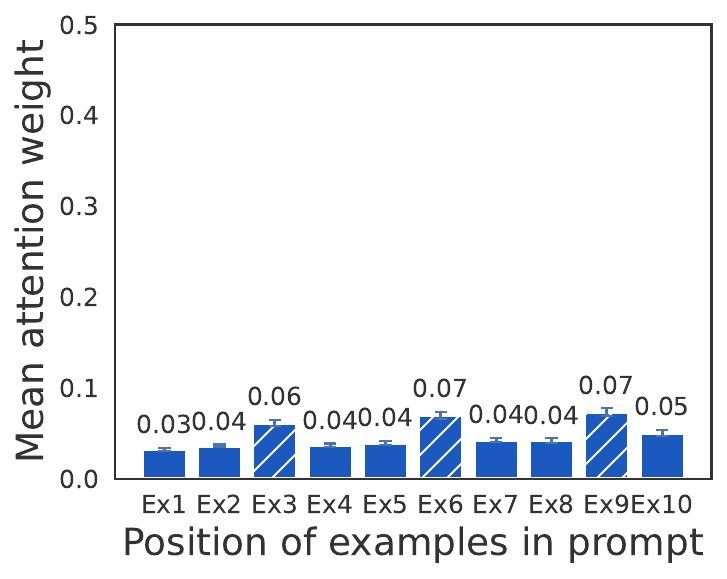} \\[0.2ex]

    \tasklabel{French Ambiguous} \hgap
    \centerimg{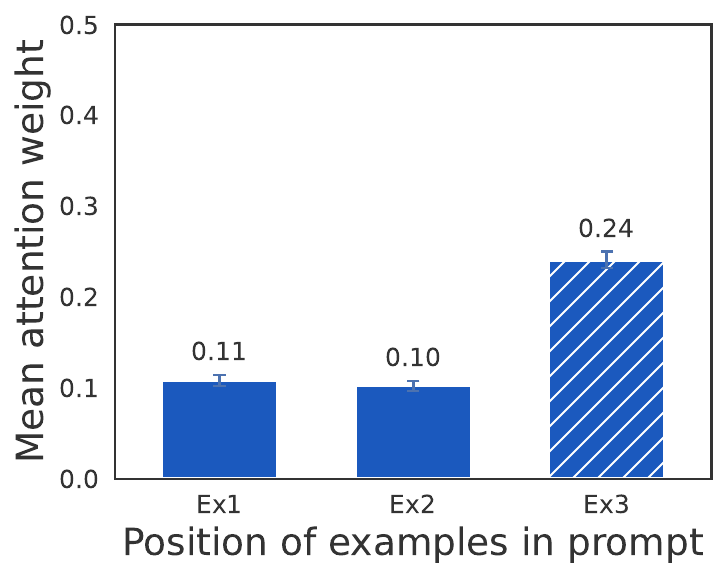} \hgap
    \centerimg{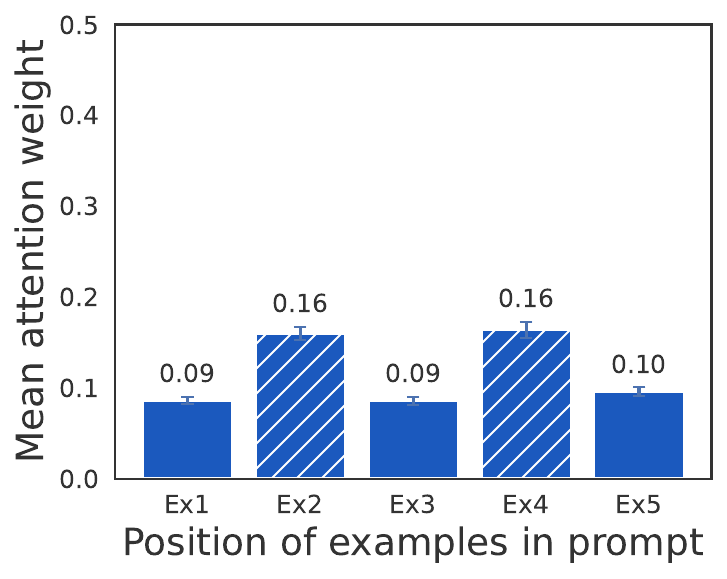} \hgap
    \centerimg{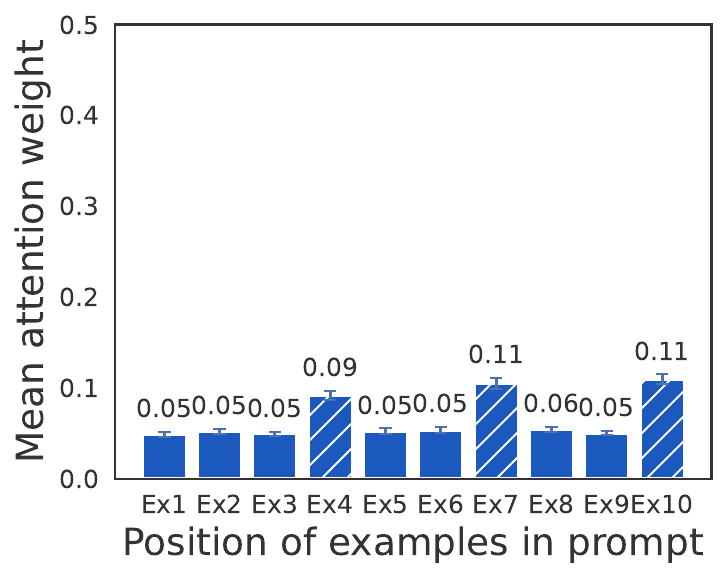} \\[0.2ex]

    \tasklabel{French Ambiguous-2} \hgap
    \centerimg{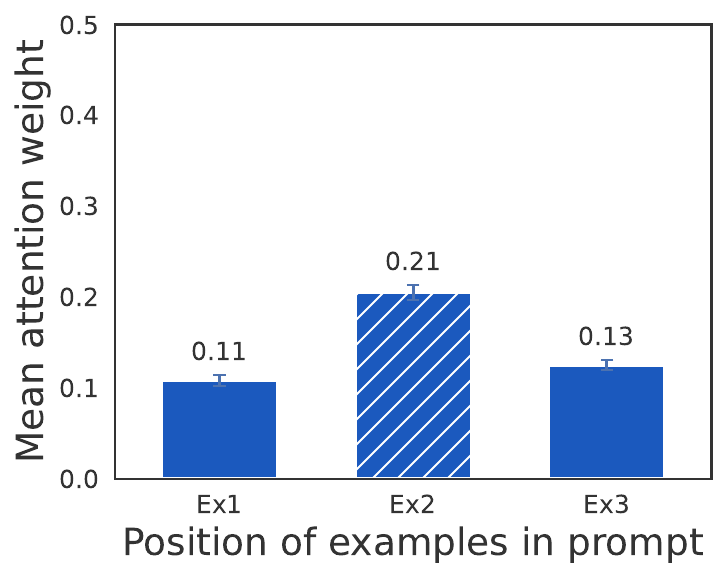} \hgap
    \centerimg{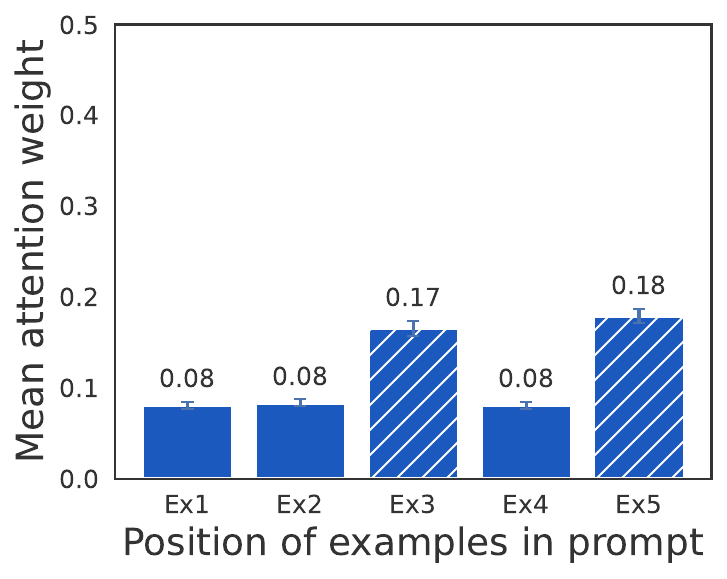} \hgap
    \centerimg{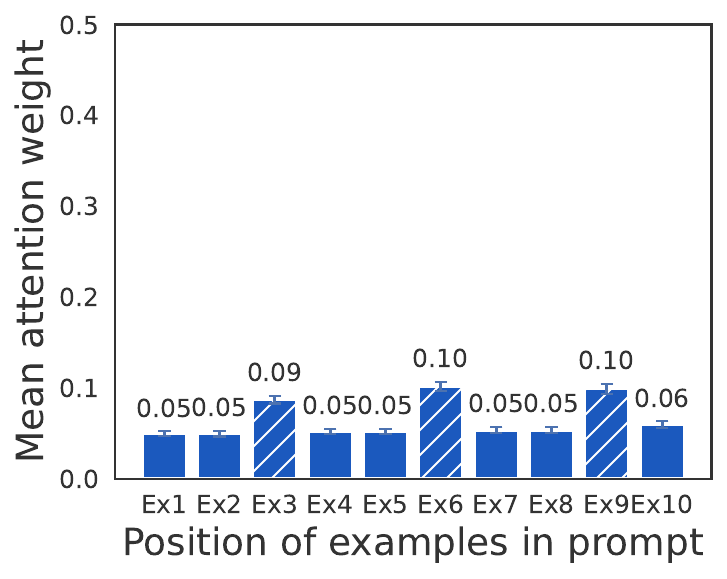} \\[0.2ex]

    \tasklabel{Capitalize Ambiguous} \hgap
    \centerimg{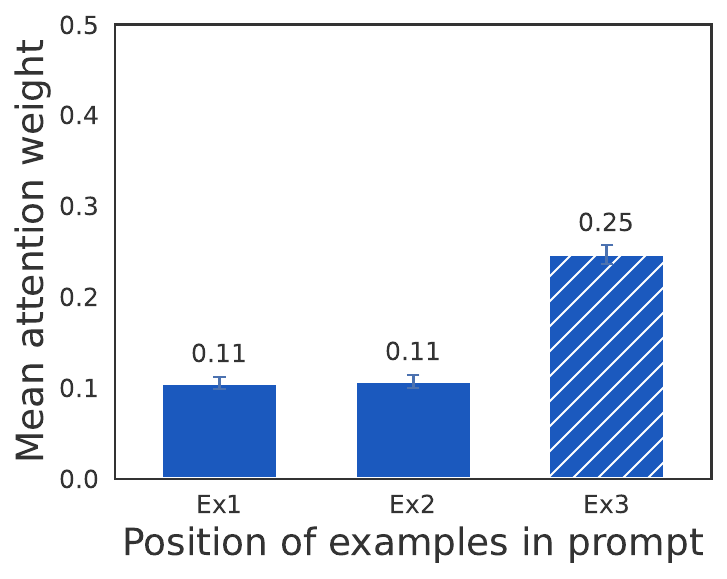} \hgap
    \centerimg{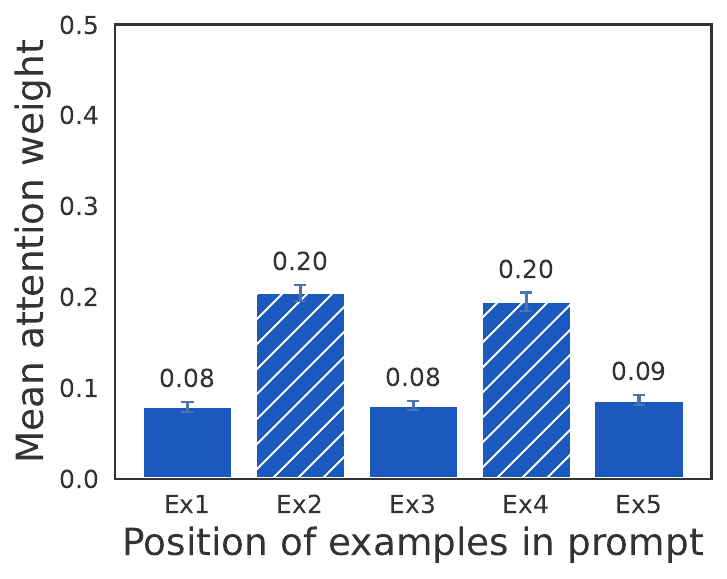} \hgap
    \centerimg{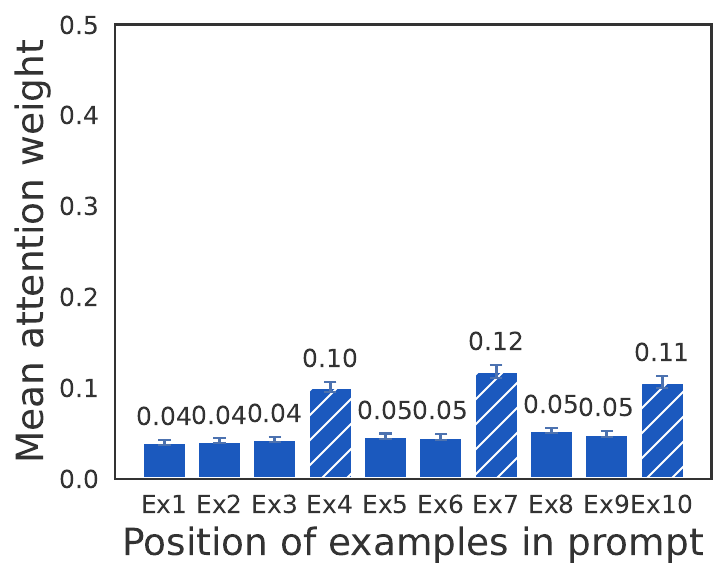} \\[0.2ex]

    \tasklabel{Capitalize Ambiguous-2} \hgap
    \centerimg{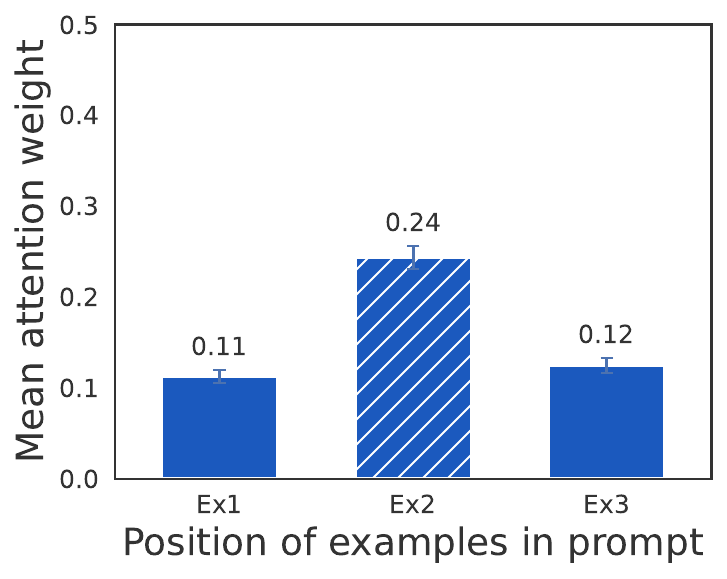} \hgap
    \centerimg{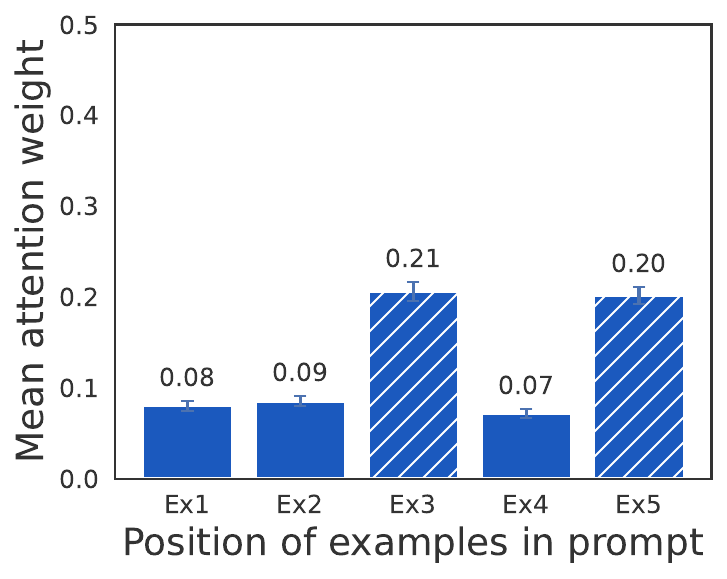} \hgap
    \centerimg{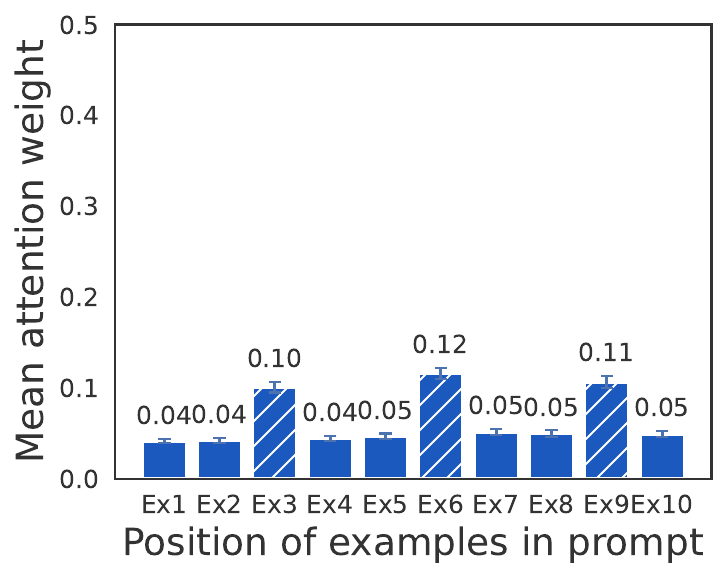}

    \caption{\textbf{Ambiguous Tasks:} \texttt{Llama-3.2-3B} mean attention weights of individual examples on FV heads across 3/5/10-shot settings.}
    \label{fig:ambiguous_tasks_compact}
\end{figure}

\begin{figure}[h!]
    \centering
    \footnotesize

    \newcommand{\tasklabel}[1]{%
        \begin{tikzpicture}[baseline=(node.center)]
            \node[anchor=west, text width=3.8cm, font=\small\scshape\bfseries, inner sep=0pt] (node) {#1};
        \end{tikzpicture}%
    }

    \newcommand{\centerimg}[1]{%
        $\vcenter{\hbox{\includegraphics[width=0.165\linewidth]{#1}}}$%
    }

    \newcommand{\hgap}{\hspace{10mm}}

    \tasklabel{} \hgap
    \makebox[0.165\linewidth][c]{\textbf{3-Shot}} \hgap
    \makebox[0.165\linewidth][c]{\textbf{5-Shot}} \hgap
    \makebox[0.165\linewidth][c]{\textbf{10-Shot}} \vspace{1mm} \\

    \tasklabel{Present-Past Ambiguous} \hgap
    \centerimg{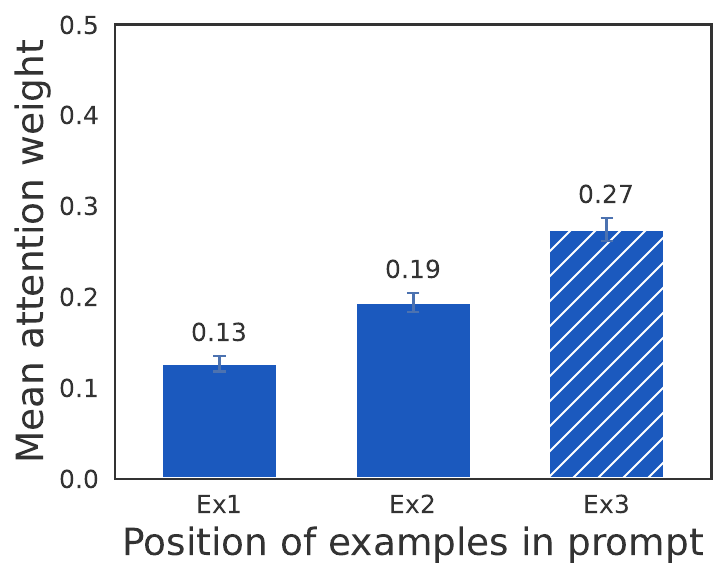} \hgap
    \centerimg{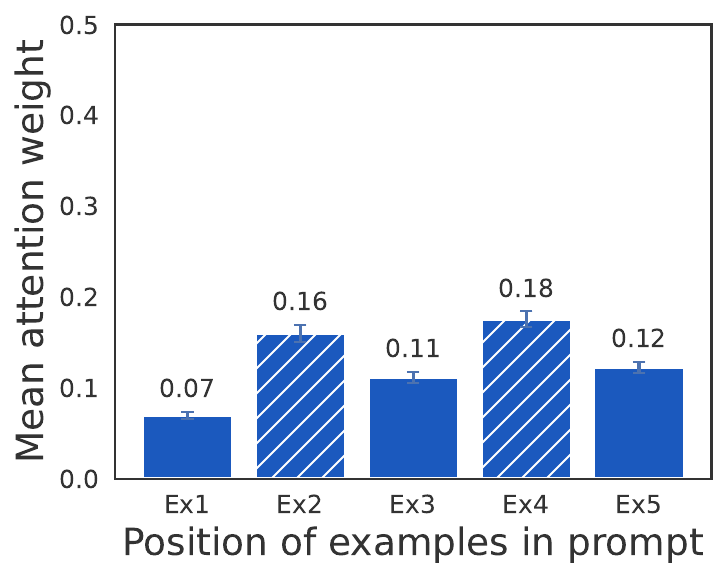} \hgap
    \centerimg{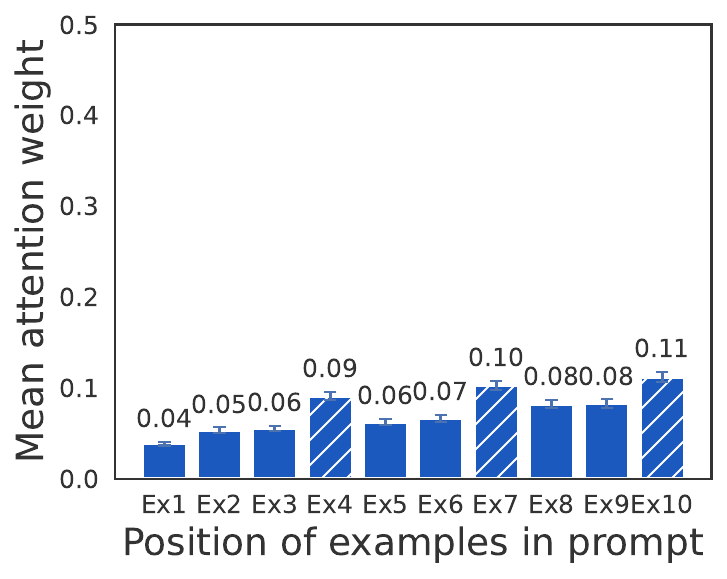} \\[0.2ex]

    \tasklabel{Present-Past Ambiguous-2} \hgap
    \centerimg{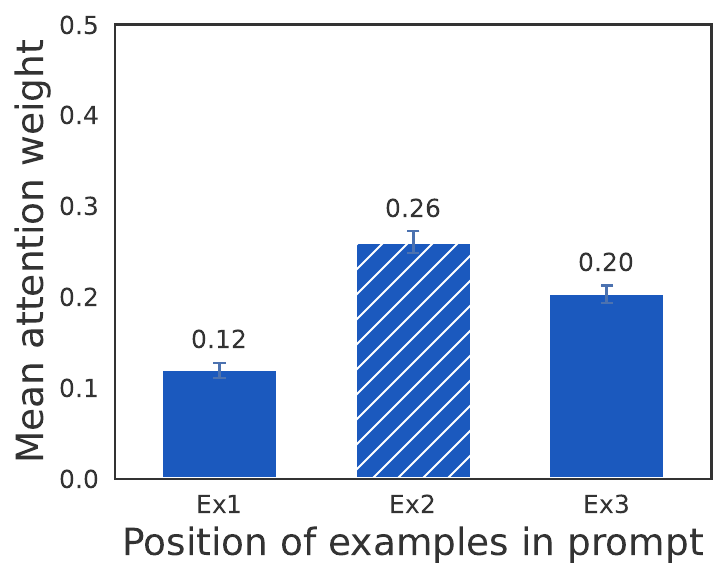} \hgap
    \centerimg{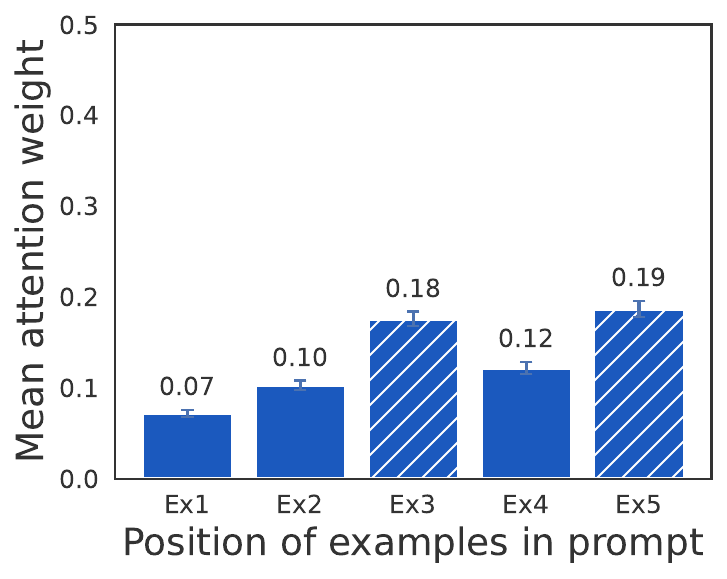} \hgap
    \centerimg{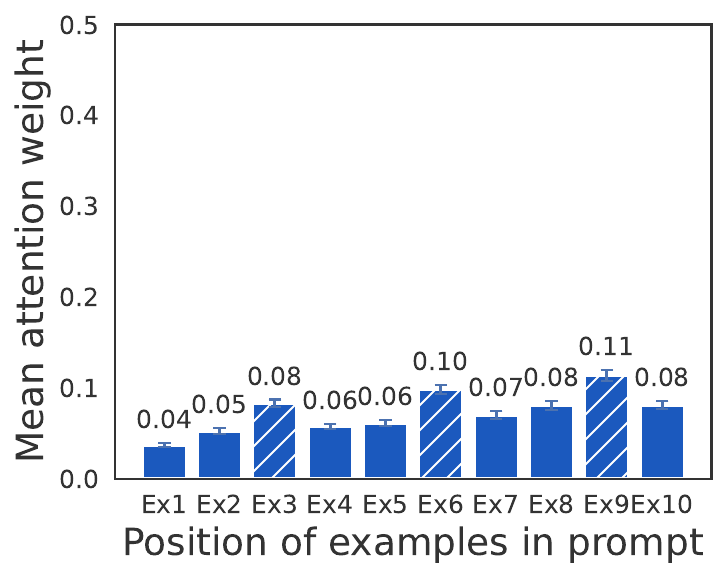} \\[0.2ex]

    \tasklabel{Chinese Ambiguous} \hgap
    \centerimg{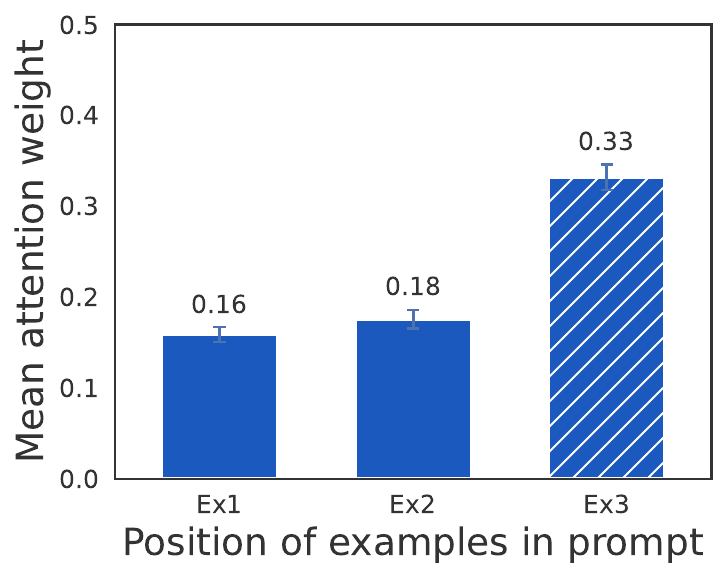} \hgap
    \centerimg{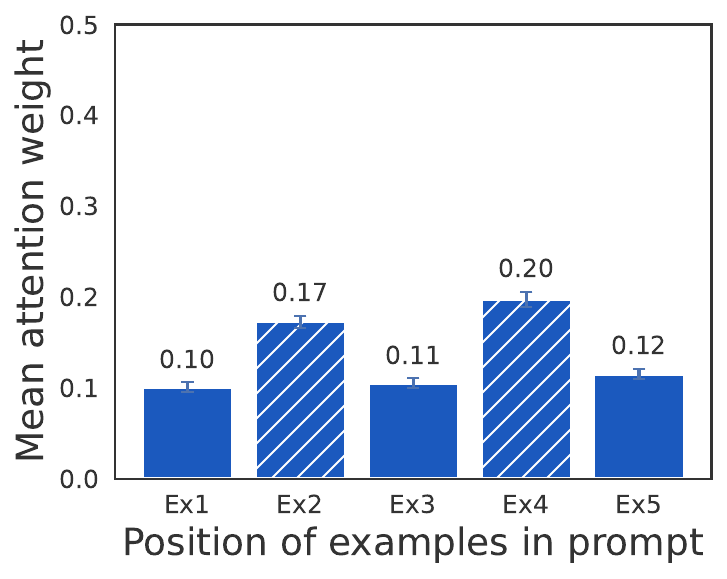} \hgap
    \centerimg{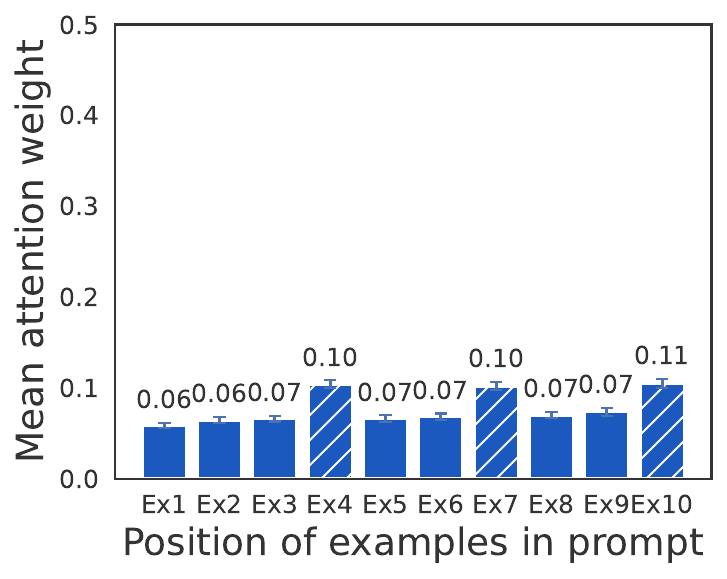} \\[0.2ex]

    \tasklabel{Chinese Ambiguous-2} \hgap
    \centerimg{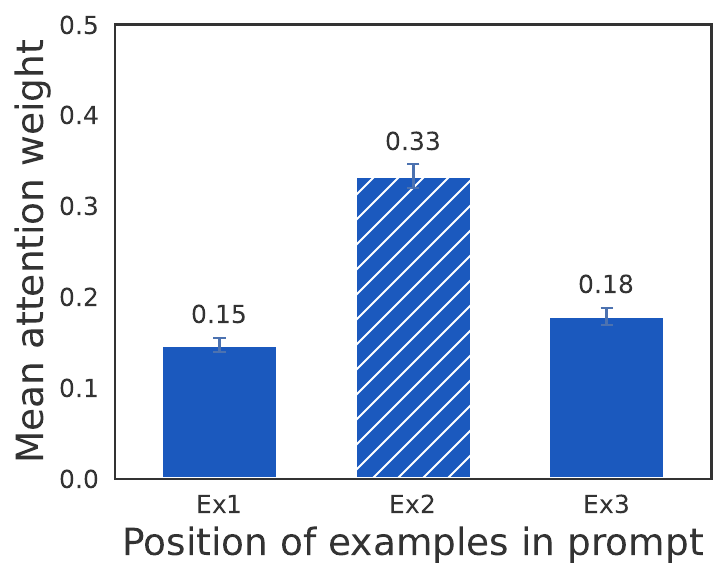} \hgap
    \centerimg{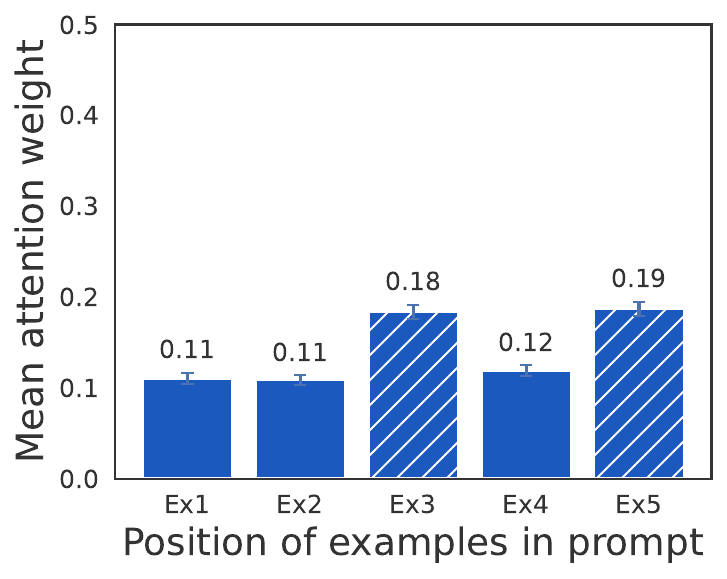} \hgap
    \centerimg{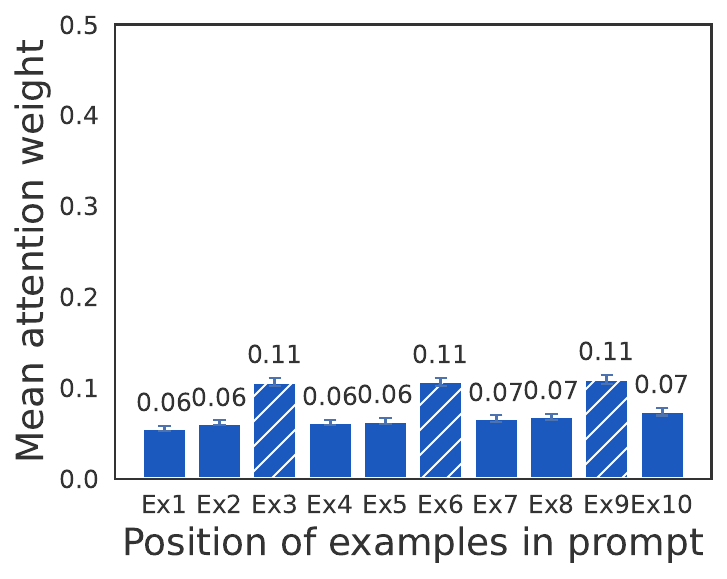} \\[0.2ex]

    \tasklabel{Japanese Ambiguous} \hgap
    \centerimg{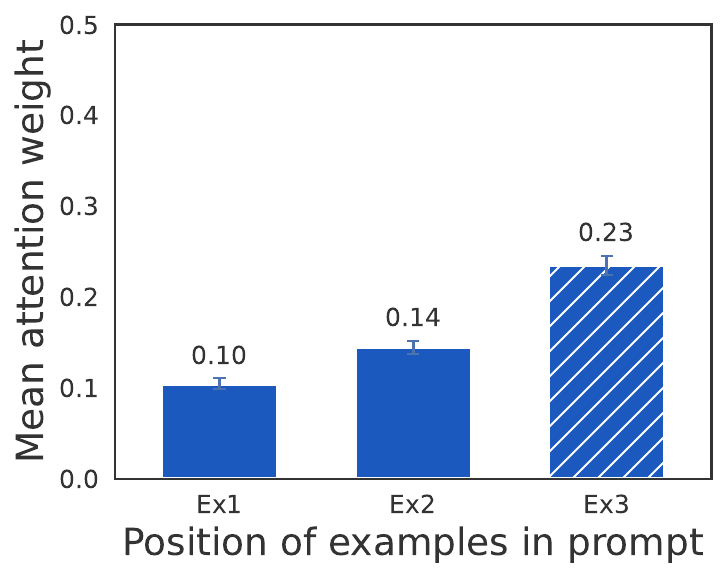} \hgap
    \centerimg{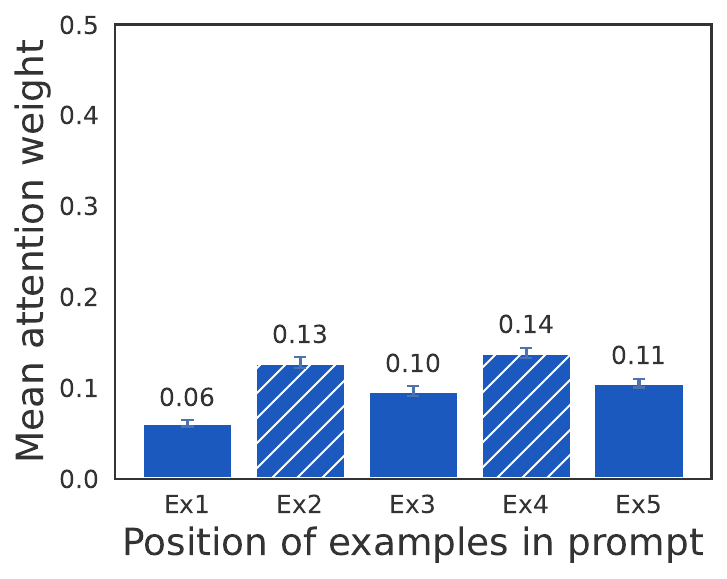} \hgap
    \centerimg{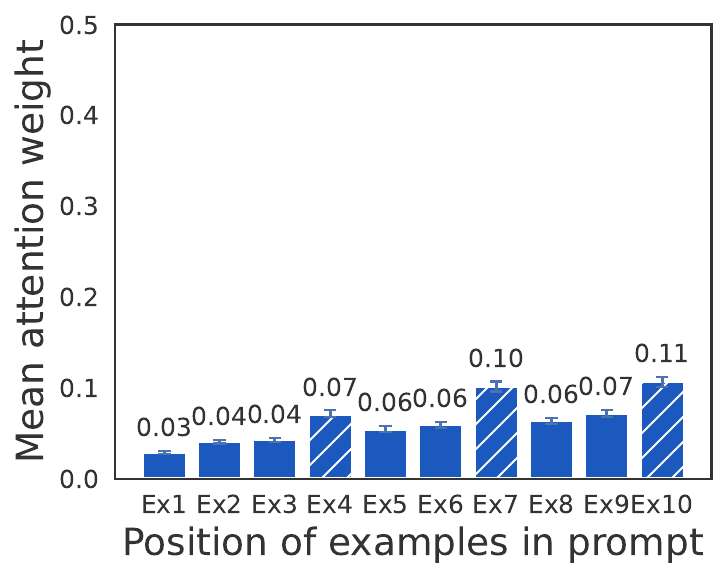} \\[0.2ex]

    \tasklabel{Japanese Ambiguous-2} \hgap
    \centerimg{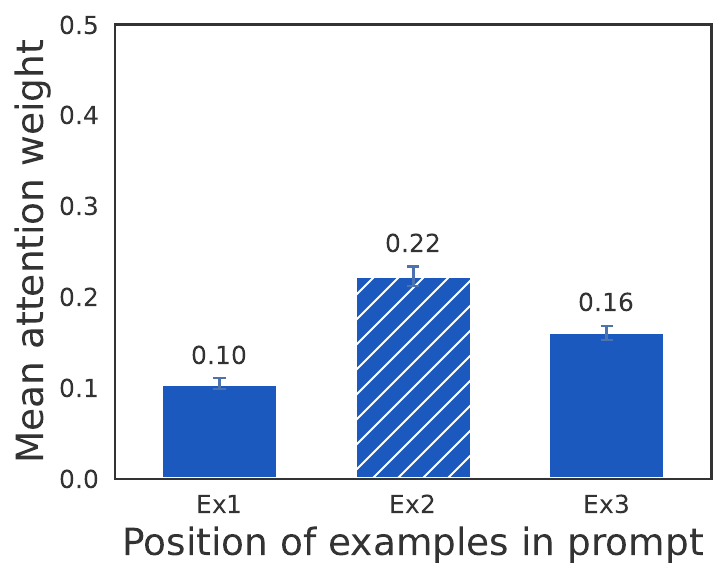} \hgap
    \centerimg{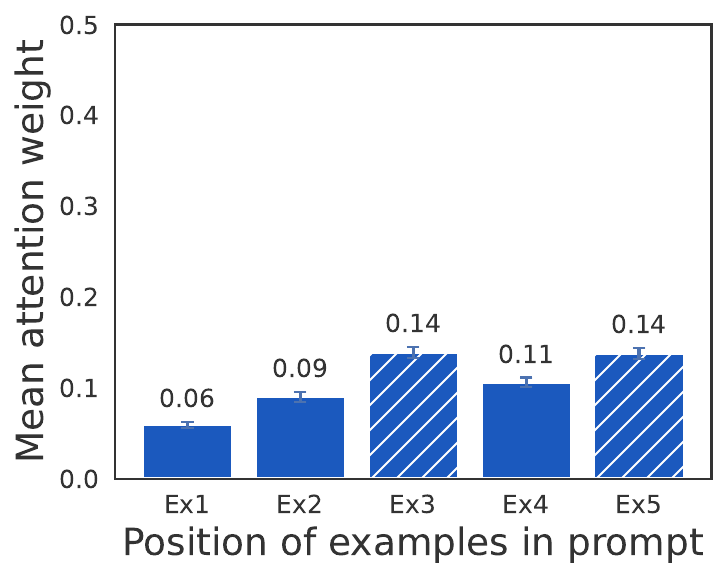} \hgap
    \centerimg{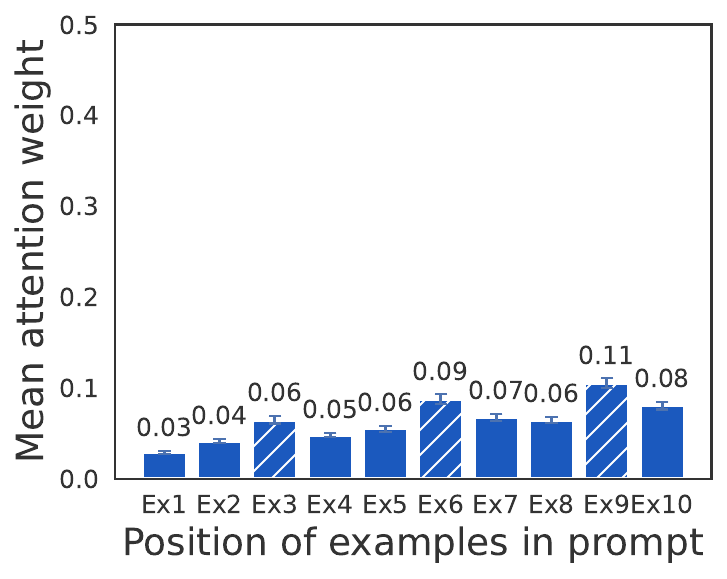} \\[0.2ex]

    \tasklabel{French Ambiguous} \hgap
    \centerimg{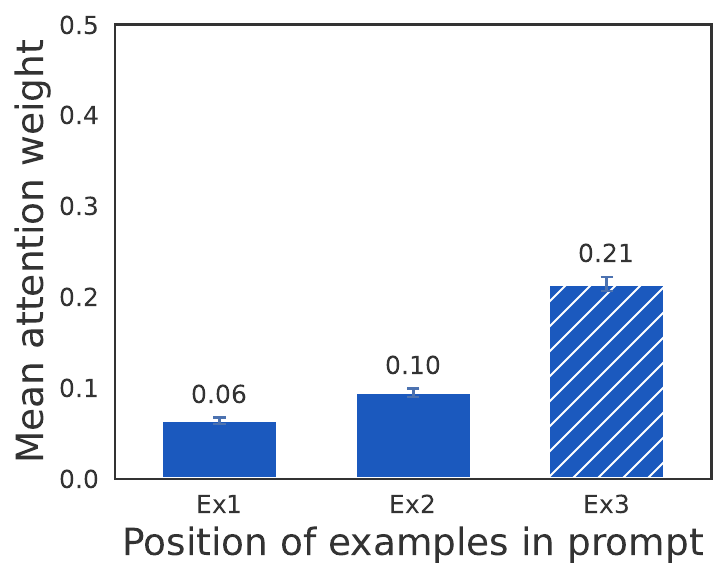} \hgap
    \centerimg{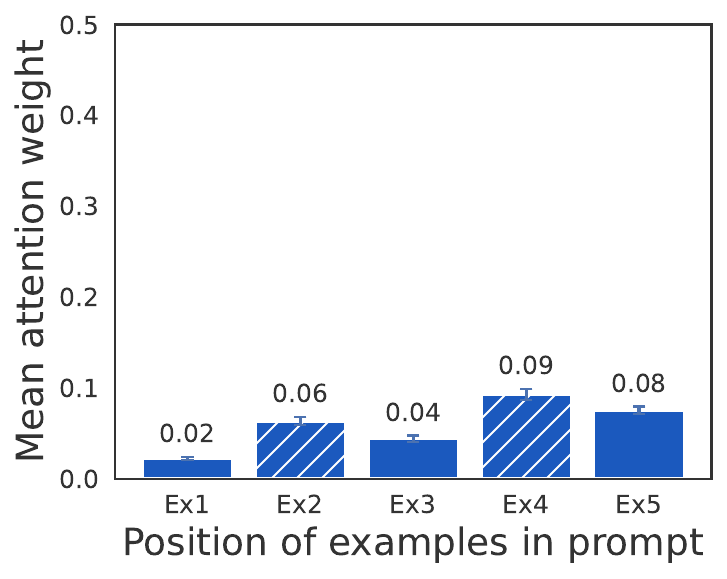} \hgap
    \centerimg{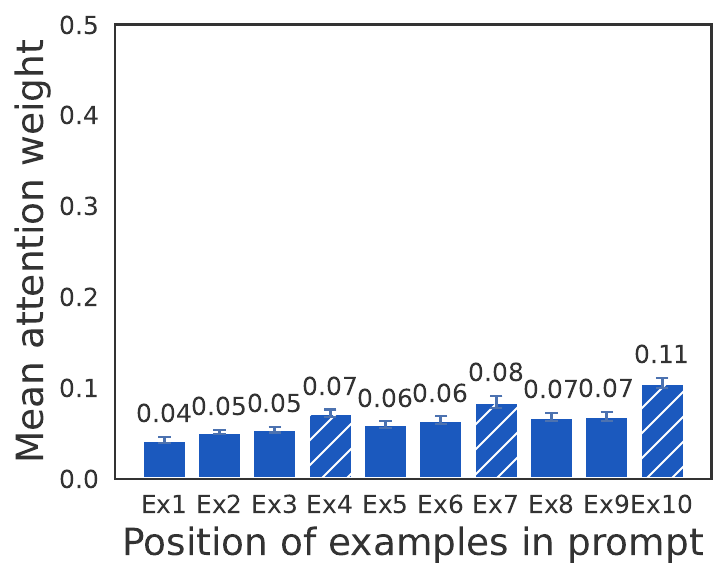} \\[0.2ex]

    \tasklabel{French Ambiguous-2} \hgap
    \centerimg{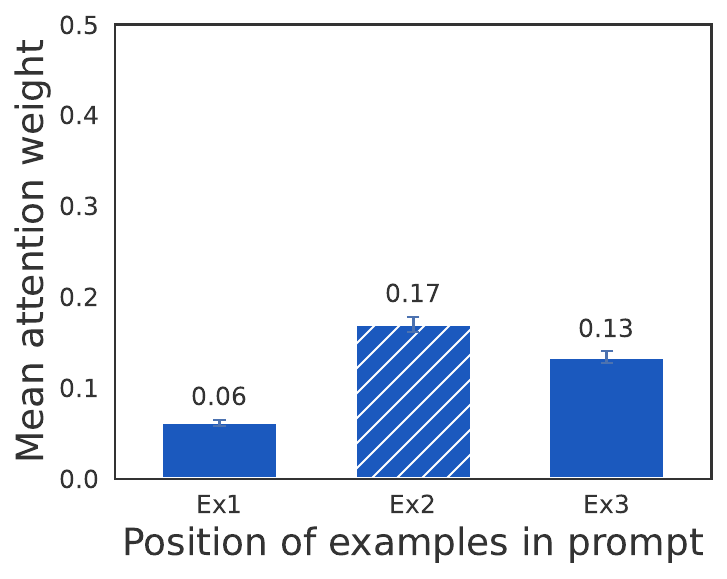} \hgap
    \centerimg{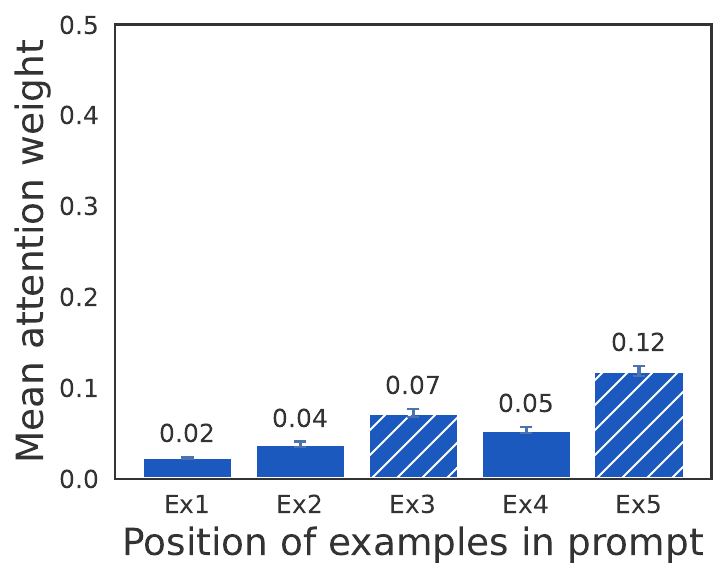} \hgap
    \centerimg{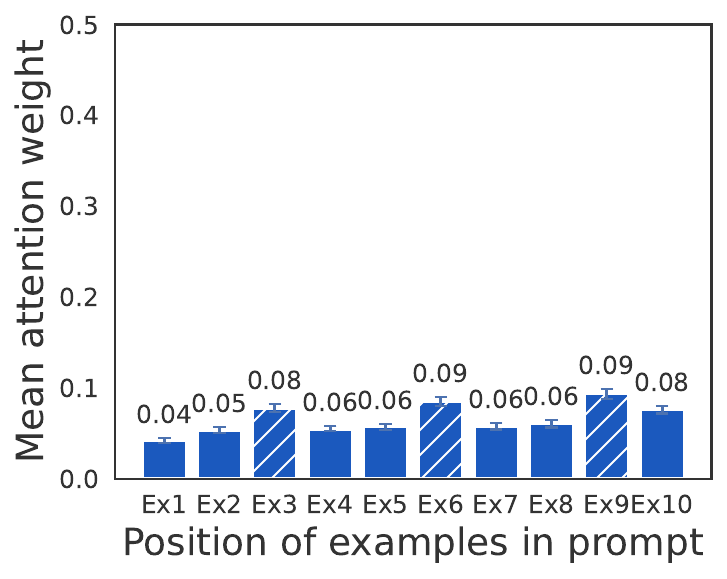} \\[0.2ex]

    \tasklabel{Capitalize Ambiguous} \hgap
    \centerimg{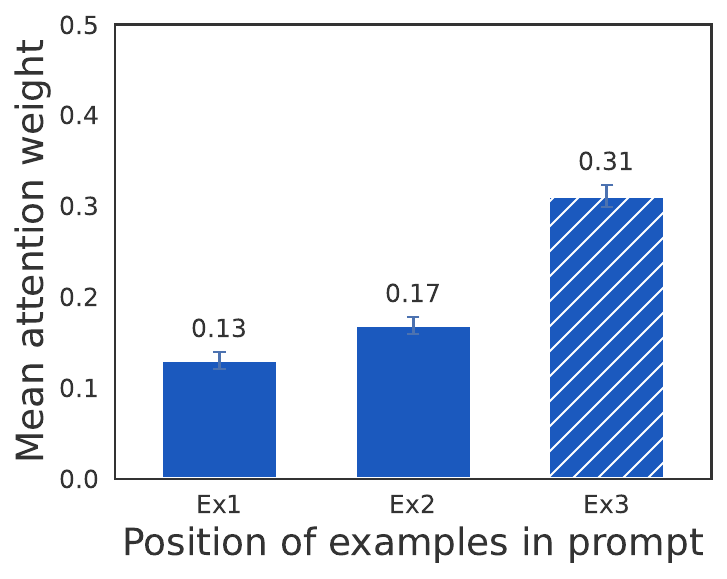} \hgap
    \centerimg{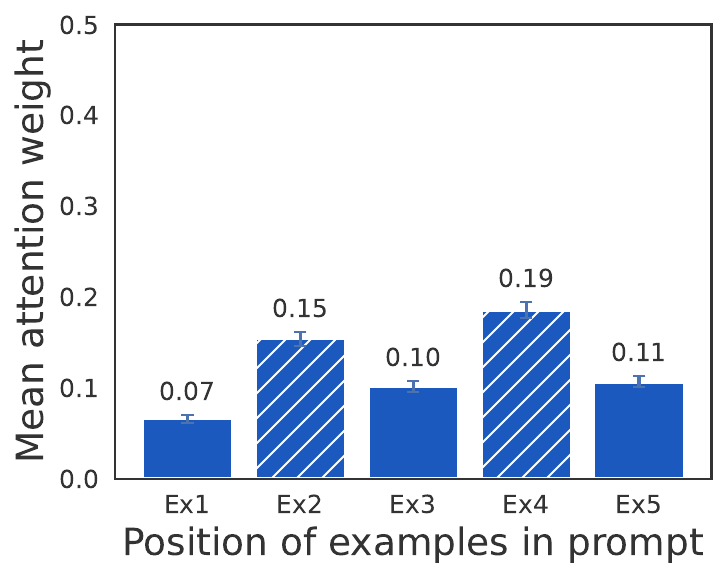} \hgap
    \centerimg{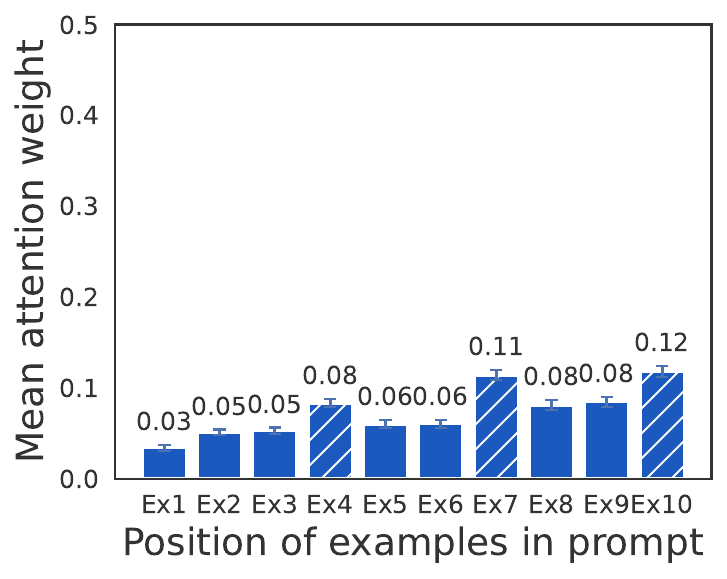} \\[0.2ex]

    \tasklabel{Capitalize Ambiguous-2} \hgap
    \centerimg{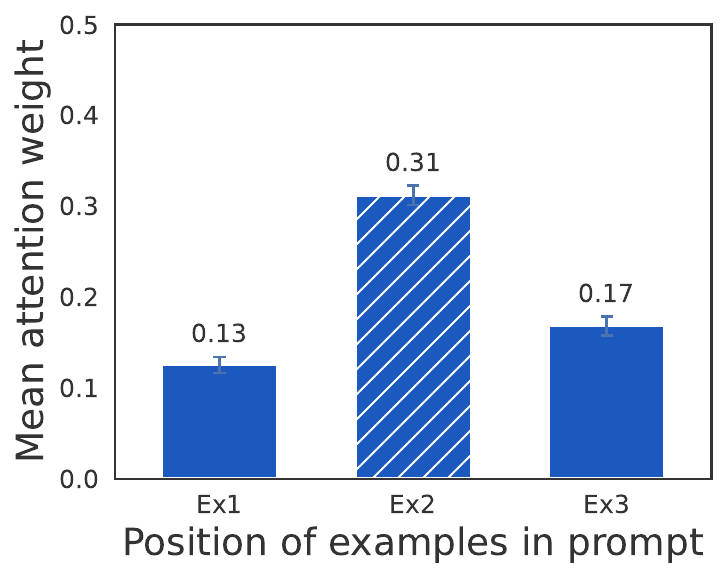} \hgap
    \centerimg{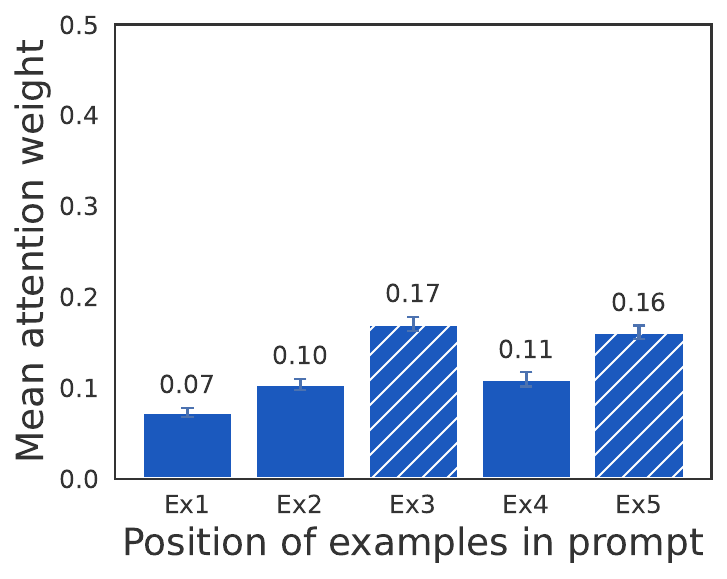} \hgap
    \centerimg{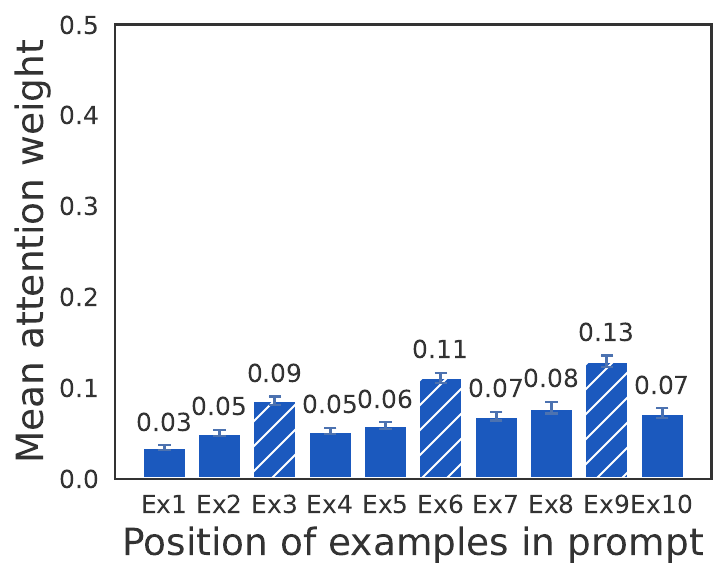}

    \caption{\textbf{Ambiguous Tasks:} \texttt{Llama-3.1-8B-Instruct} mean attention weights of individual examples on FV heads across 3/5/10-shot settings.}
    \label{fig:ambiguous_tasks_compact_Llama-3.1-8B-Instruct}
\end{figure}

\begin{figure}[h!]
    \centering
    \footnotesize

    \newcommand{\tasklabel}[1]{%
        \begin{tikzpicture}[baseline=(node.center)]
            \node[anchor=west, text width=3.8cm, font=\small\scshape\bfseries, inner sep=0pt] (node) {#1};
        \end{tikzpicture}%
    }

    \newcommand{\centerimg}[1]{%
        $\vcenter{\hbox{\includegraphics[width=0.165\linewidth]{#1}}}$%
    }

    \newcommand{\hgap}{\hspace{10mm}}

    \tasklabel{} \hgap
    \makebox[0.165\linewidth][c]{\textbf{3-Shot}} \hgap
    \makebox[0.165\linewidth][c]{\textbf{5-Shot}} \hgap
    \makebox[0.165\linewidth][c]{\textbf{10-Shot}} \vspace{1mm} \\

    \tasklabel{Present-Past Ambiguous} \hgap
    \centerimg{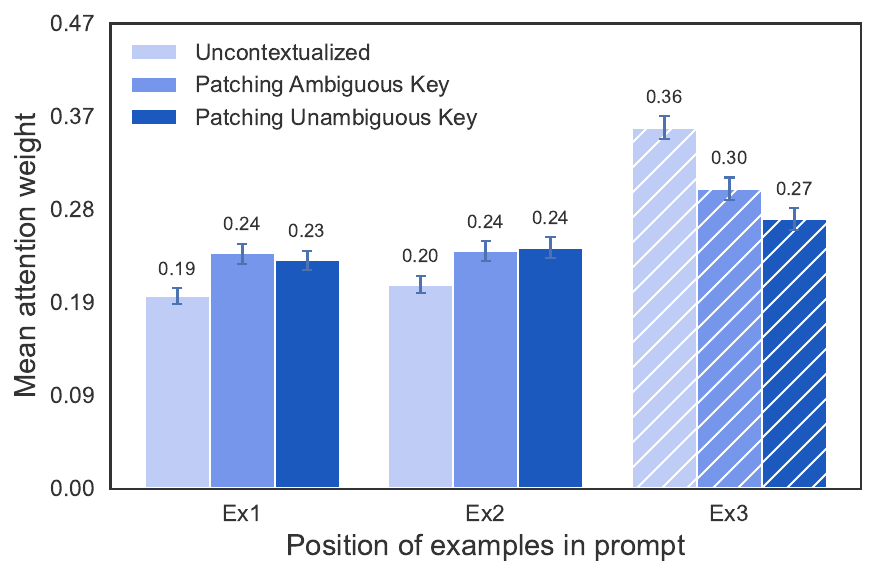} \hgap
    \centerimg{figures/gemma-2-2b/k_attraction/k_patching_present-past-ambiguous-5shot.pdf} \hgap
    \centerimg{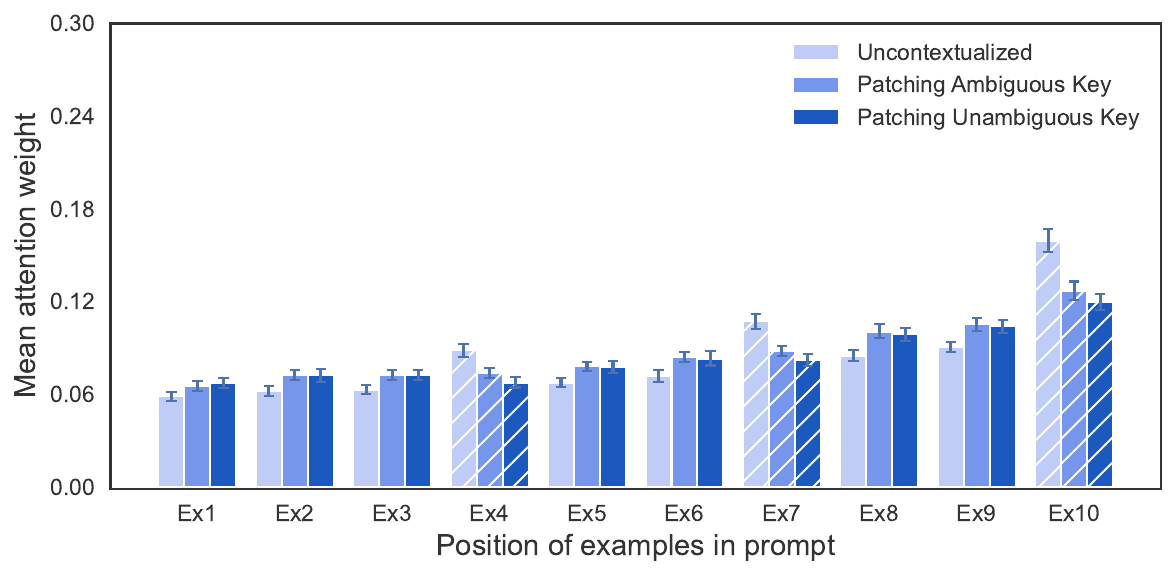} \\[0.2ex]

    \tasklabel{Present-Past Ambiguous-2} \hgap
    \centerimg{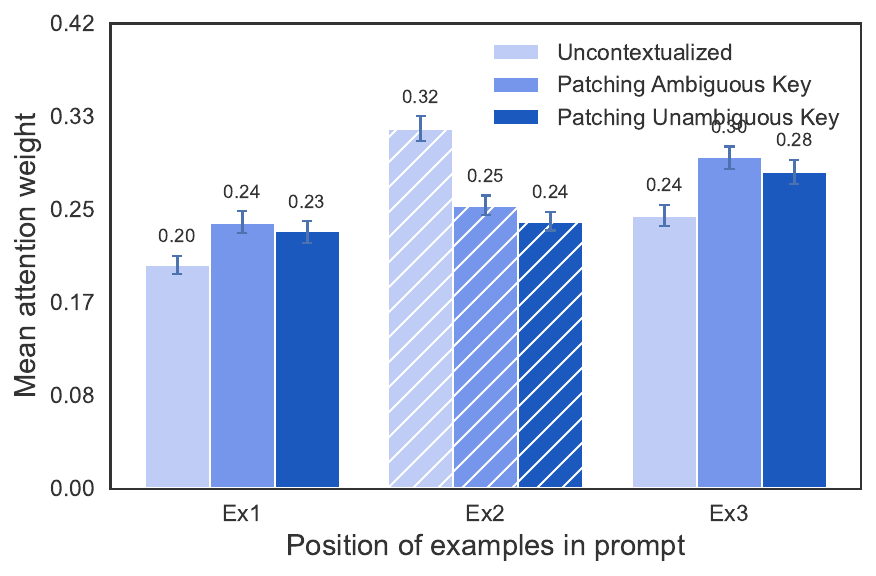} \hgap
    \centerimg{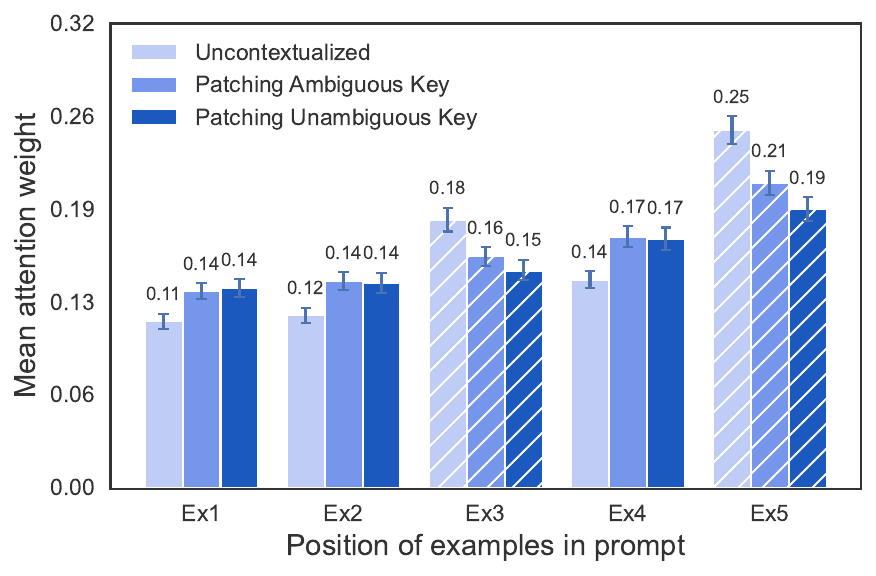} \hgap
    \centerimg{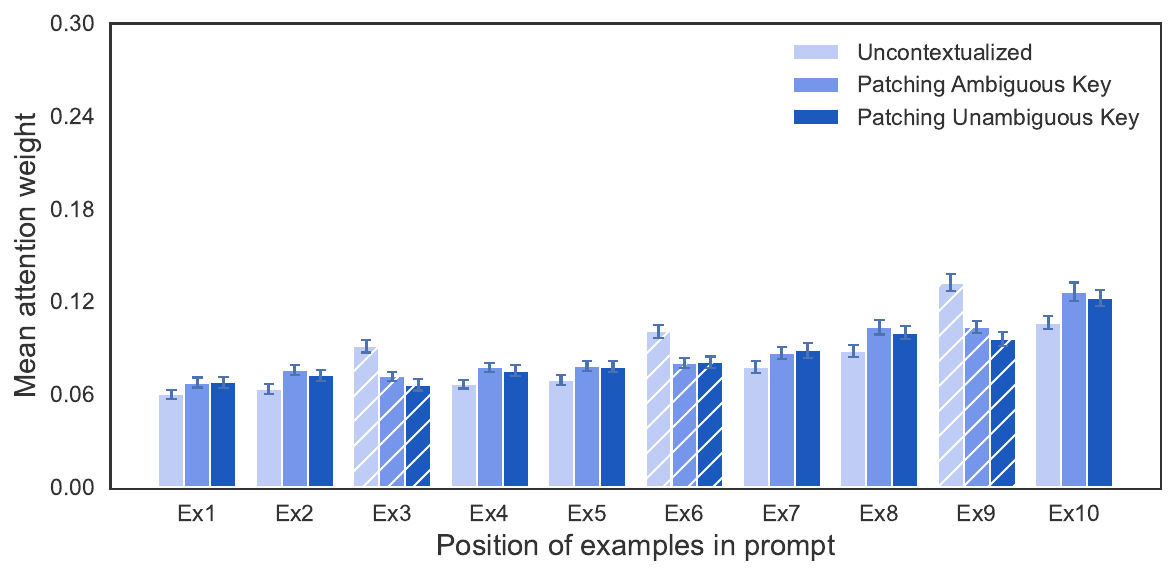} \\[0.2ex]

    \tasklabel{Chinese Ambiguous} \hgap
    \centerimg{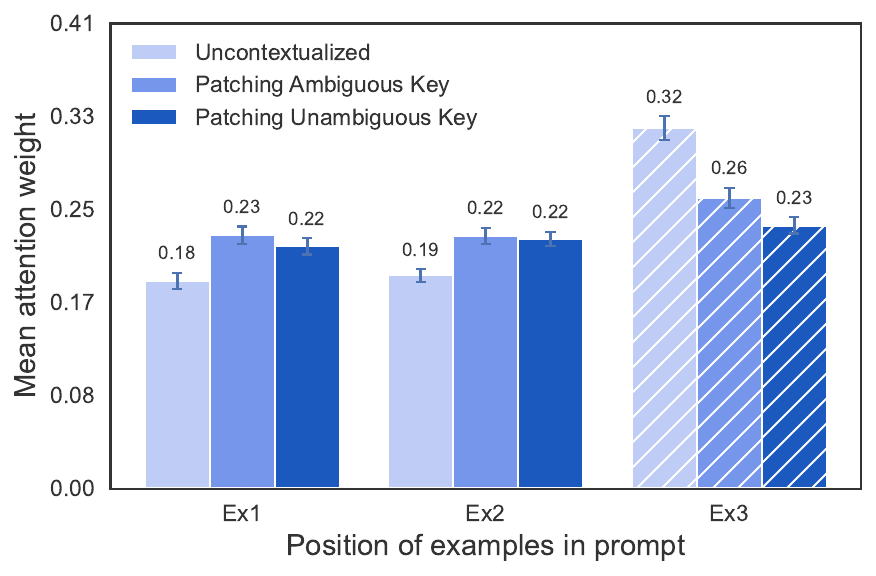} \hgap
    \centerimg{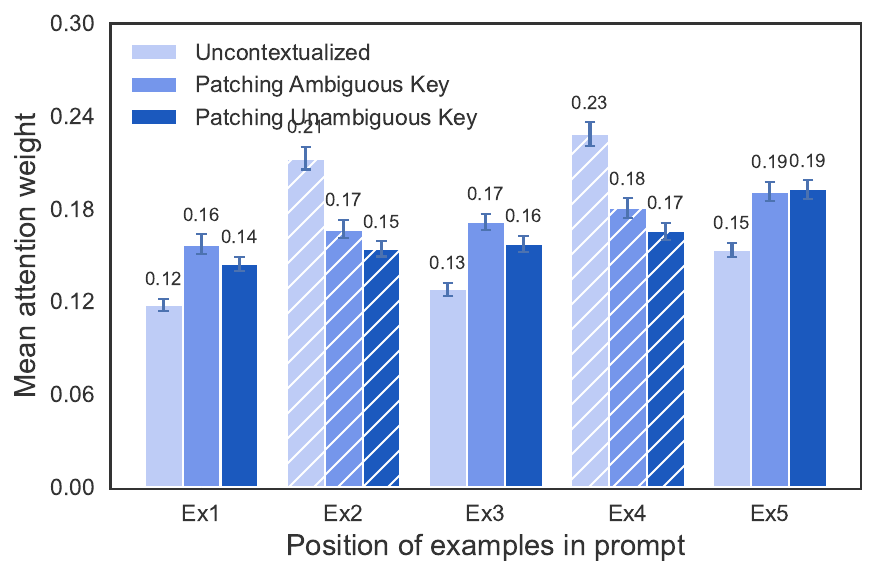} \hgap
    \centerimg{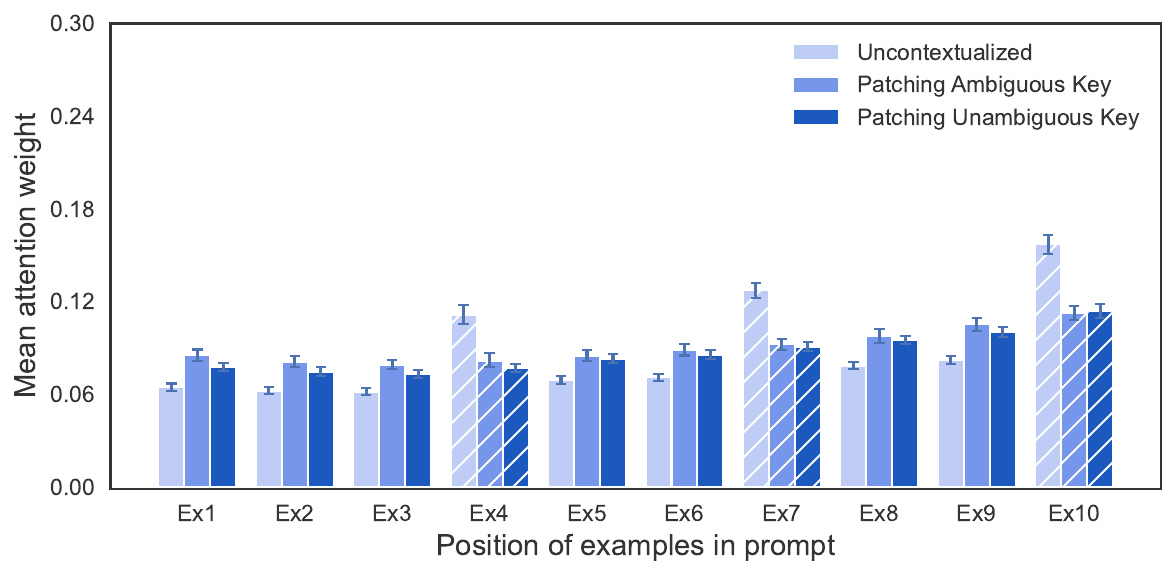} \\[0.2ex]

    \tasklabel{Chinese Ambiguous-2} \hgap
    \centerimg{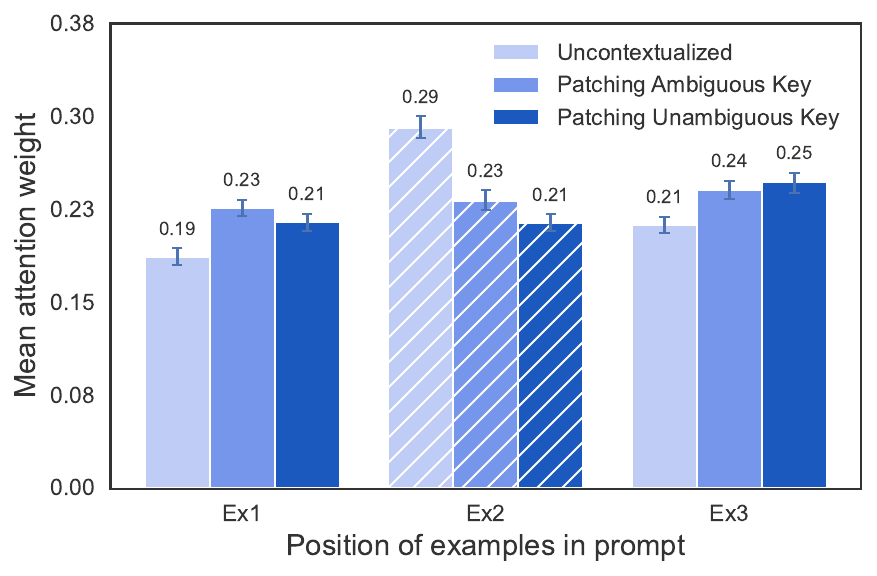} \hgap
    \centerimg{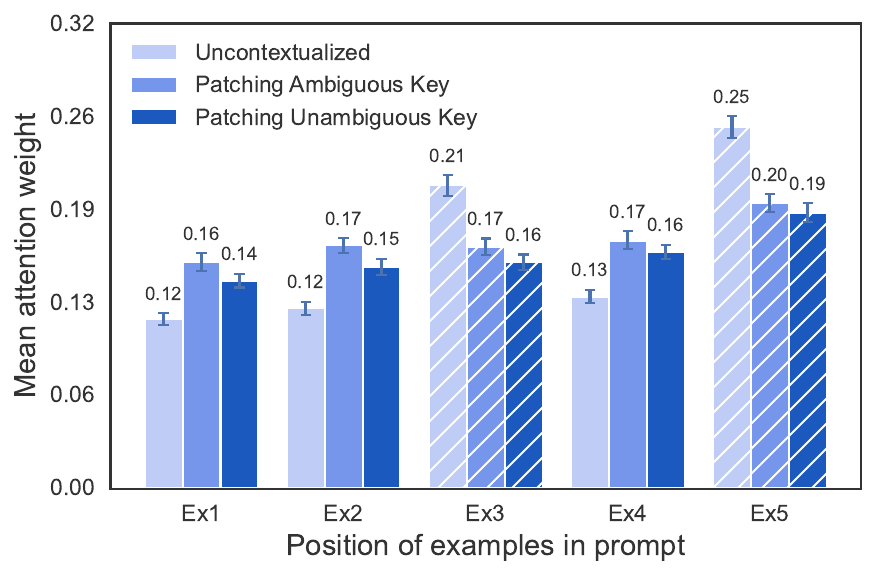} \hgap
    \centerimg{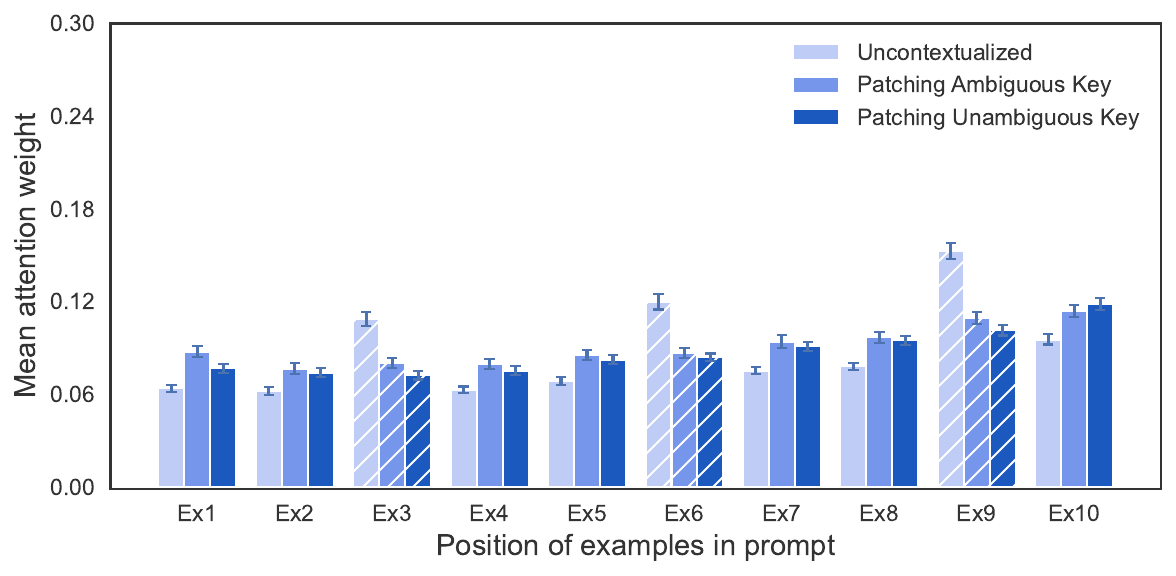} \\[0.2ex]

    \tasklabel{Japanese Ambiguous} \hgap
    \centerimg{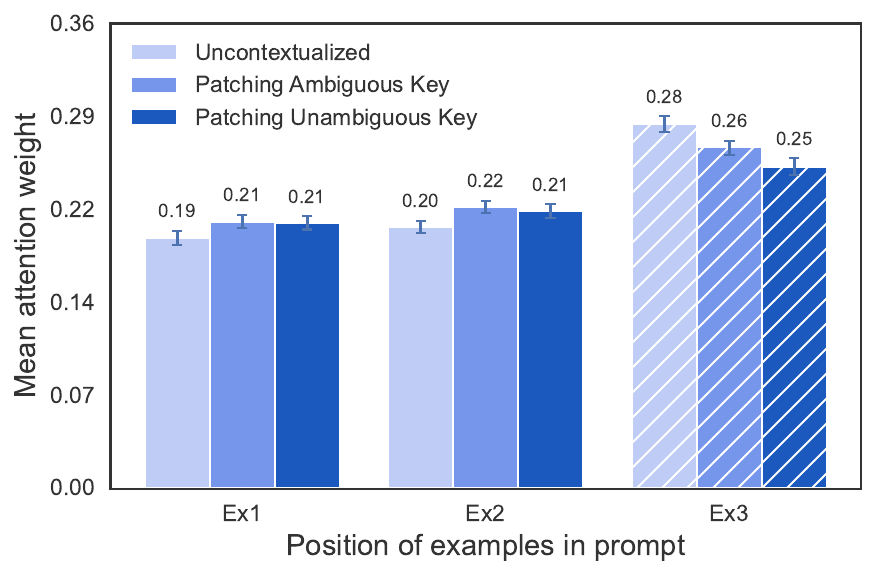} \hgap
    \centerimg{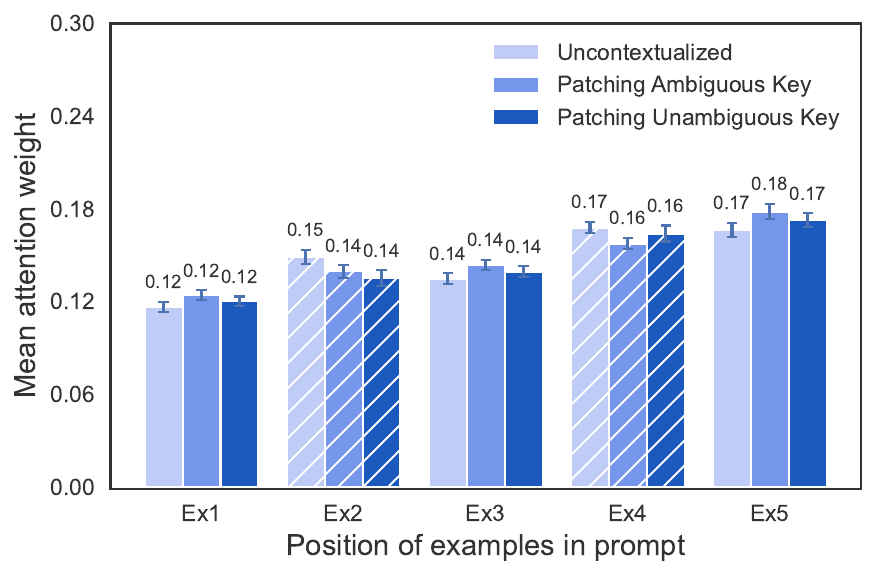} \hgap
    \centerimg{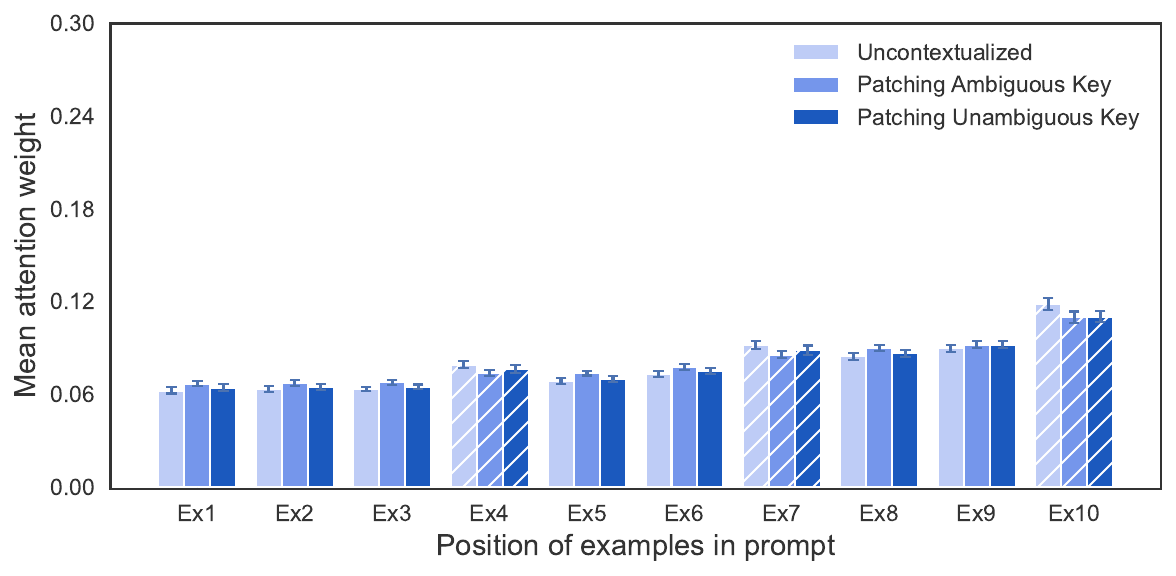} \\[0.2ex]

    \tasklabel{Japanese Ambiguous-2} \hgap
    \centerimg{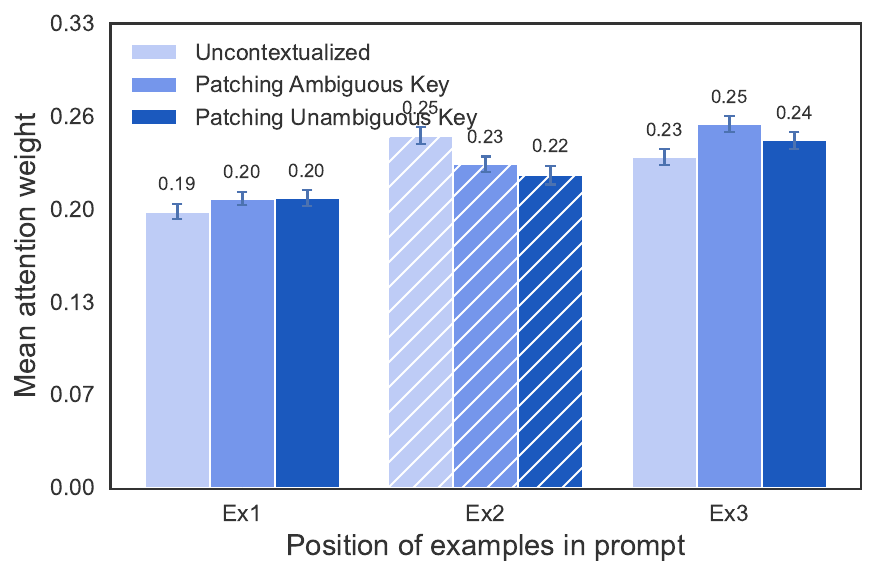} \hgap
    \centerimg{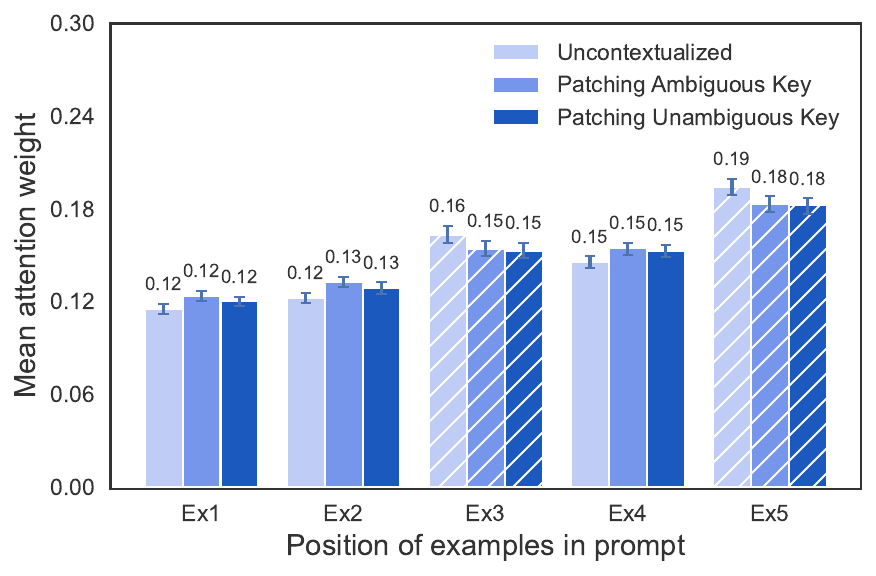} \hgap
    \centerimg{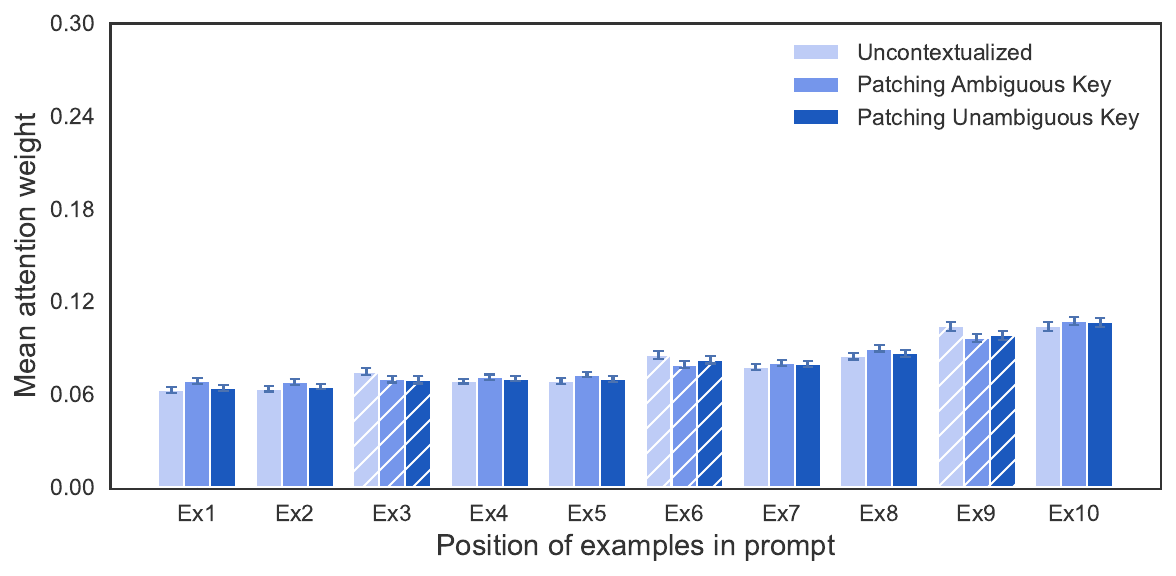} \\[0.2ex]

    \tasklabel{French Ambiguous} \hgap
    \centerimg{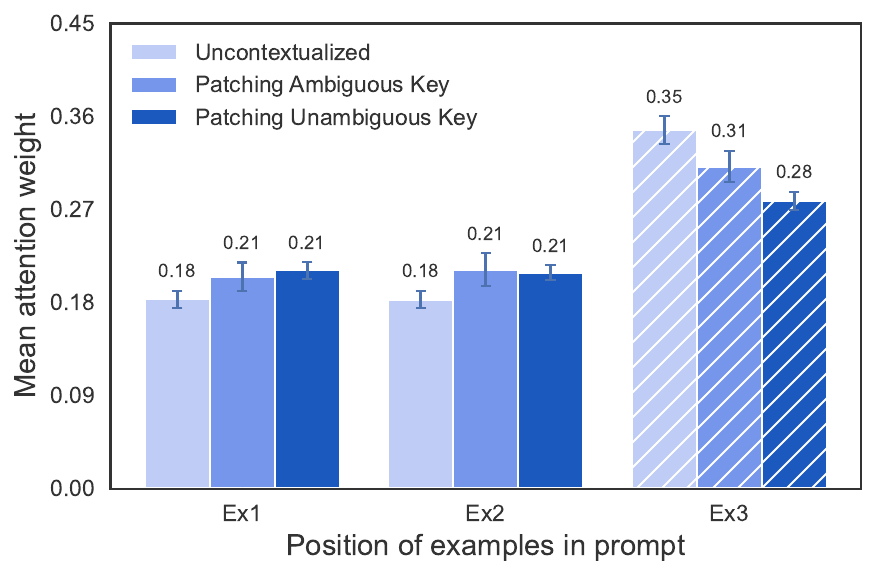} \hgap
    \centerimg{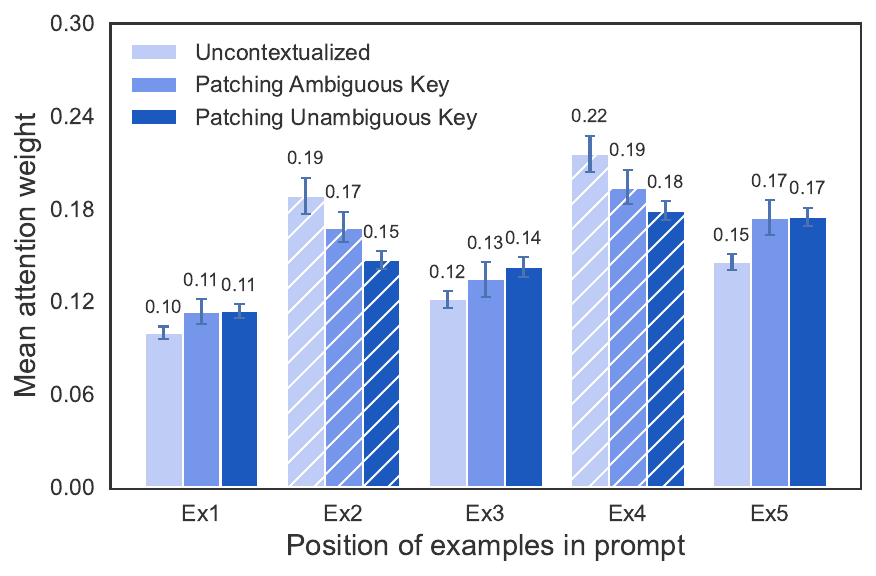} \hgap
    \centerimg{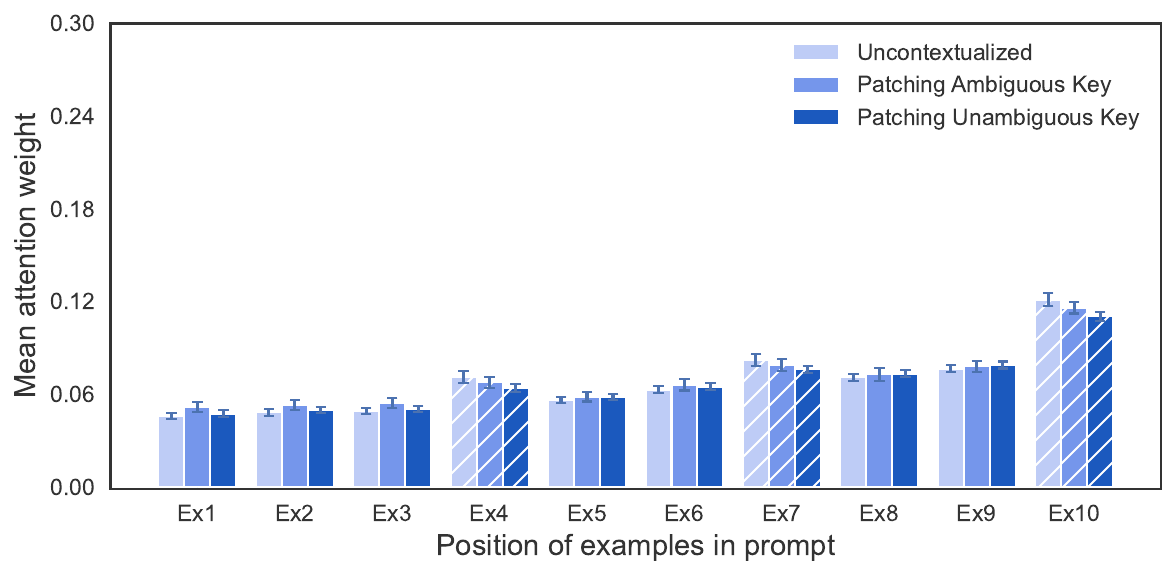} \\[0.2ex]

    \tasklabel{French Ambiguous-2} \hgap
    \centerimg{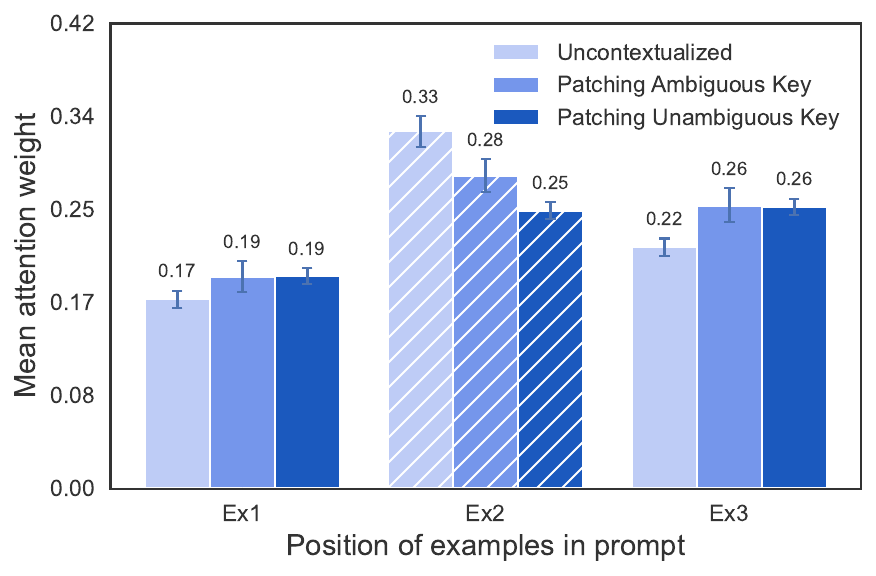} \hgap
    \centerimg{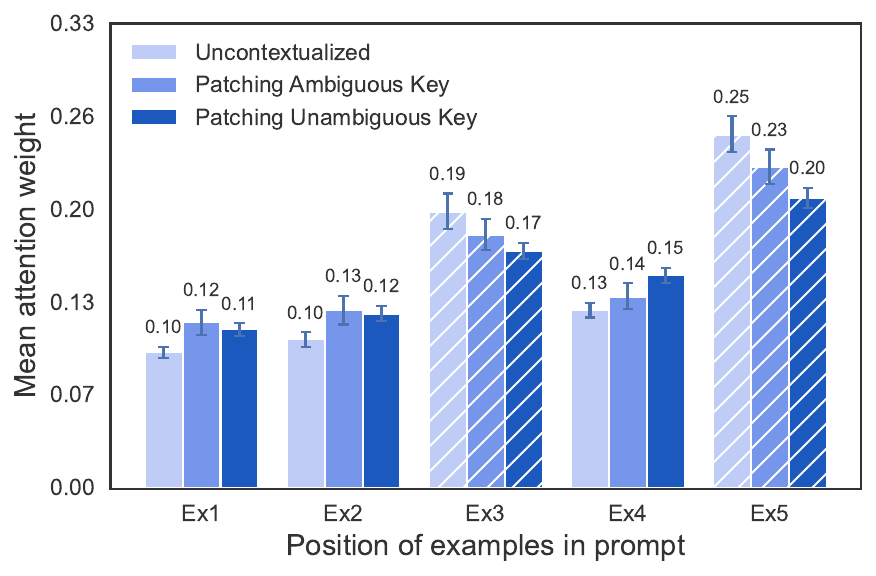} \hgap
    \centerimg{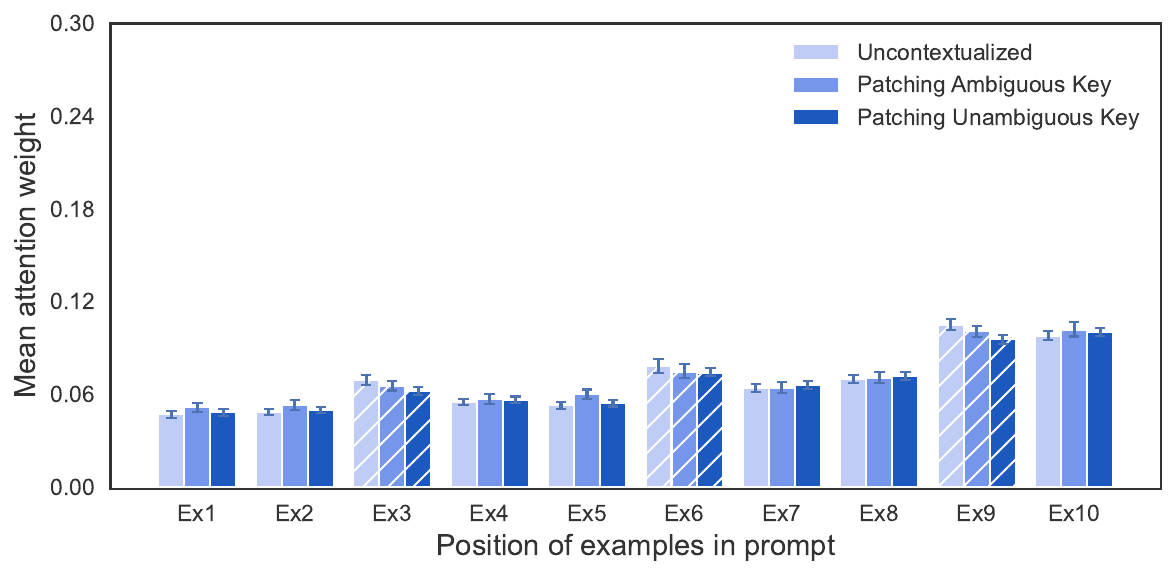} \\[0.2ex]

    \tasklabel{Capitalize Ambiguous} \hgap
    \centerimg{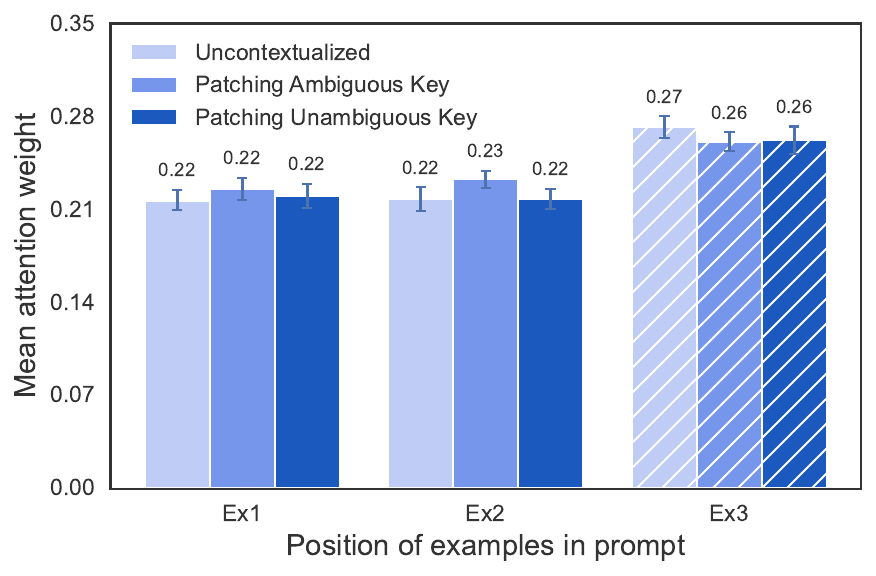} \hgap
    \centerimg{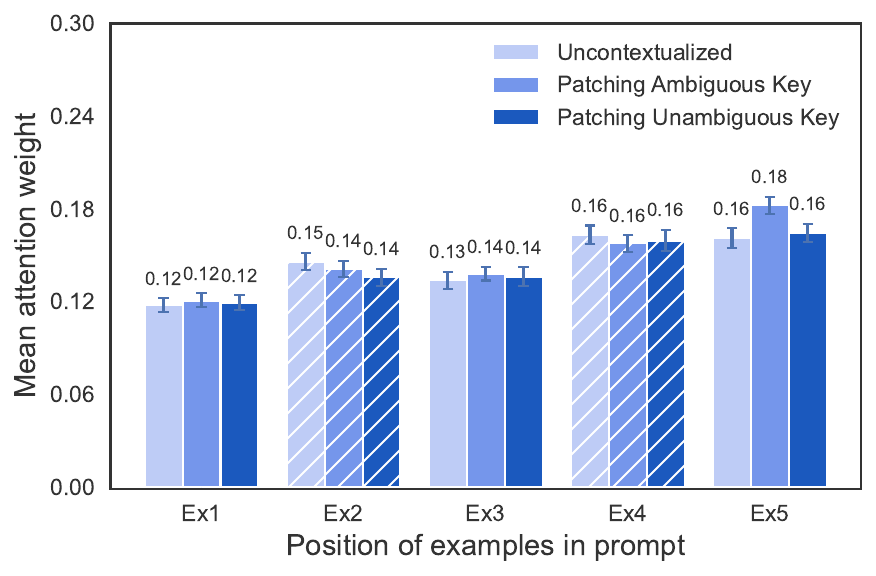} \hgap
    \centerimg{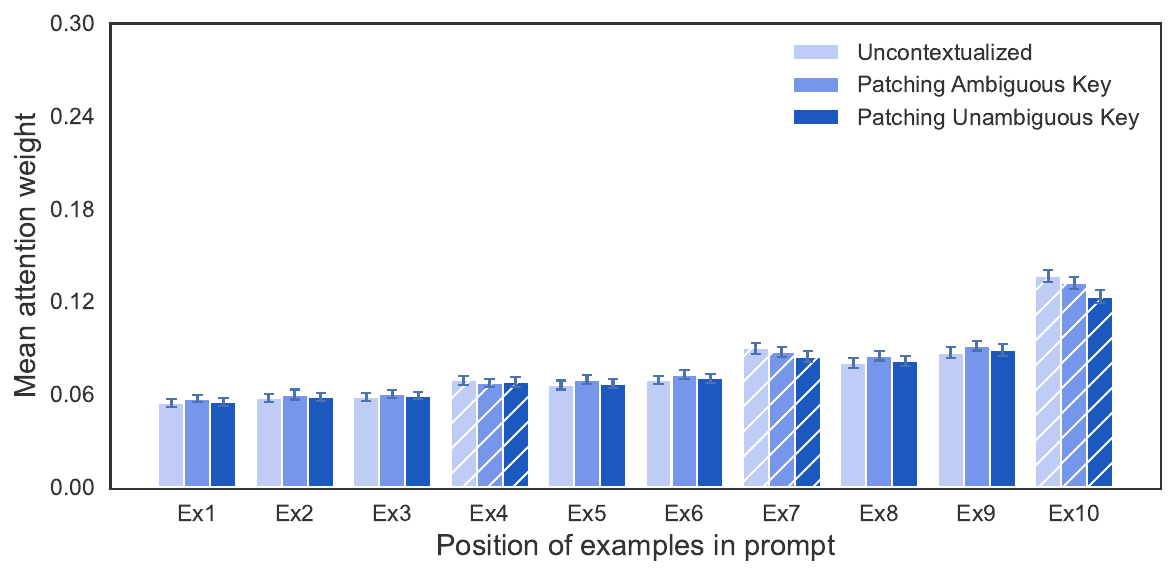} \\[0.2ex]

    \tasklabel{Capitalize Ambiguous-2} \hgap
    \centerimg{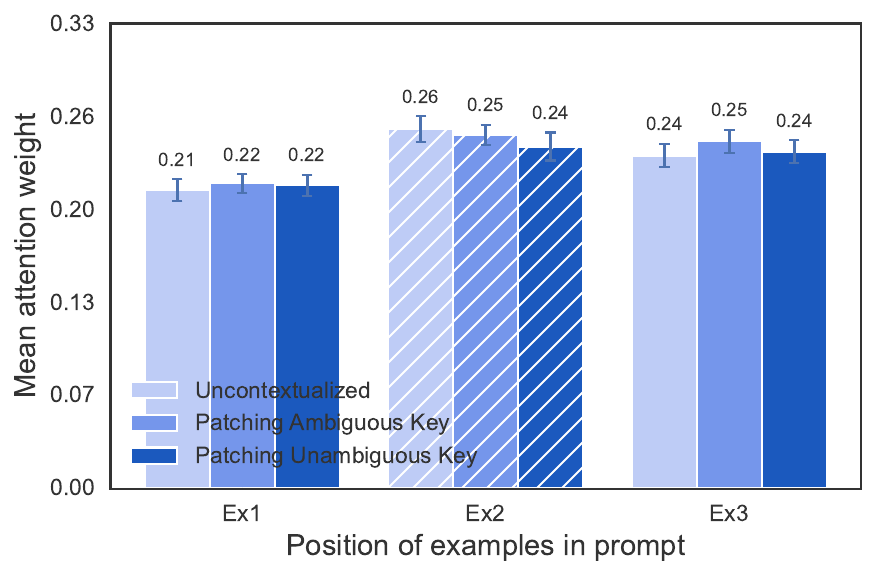} \hgap
    \centerimg{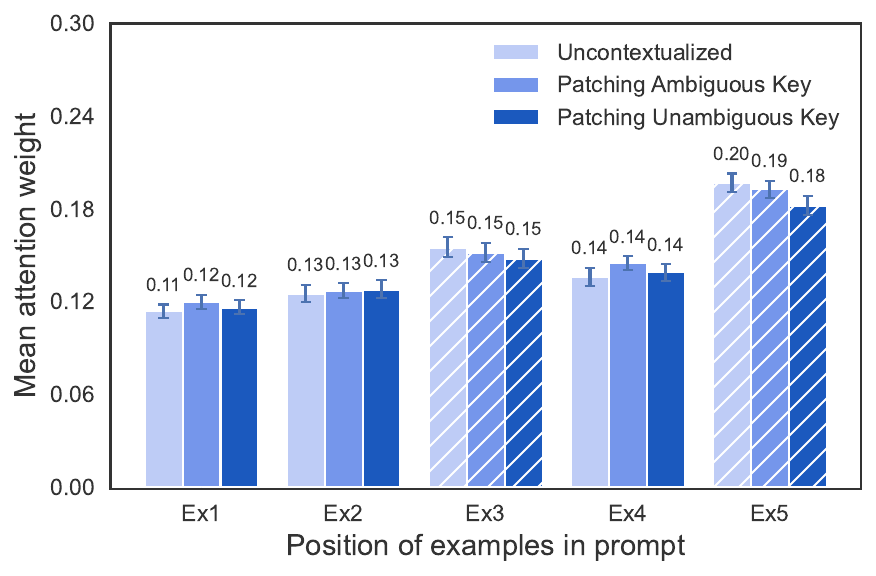} \hgap
    \centerimg{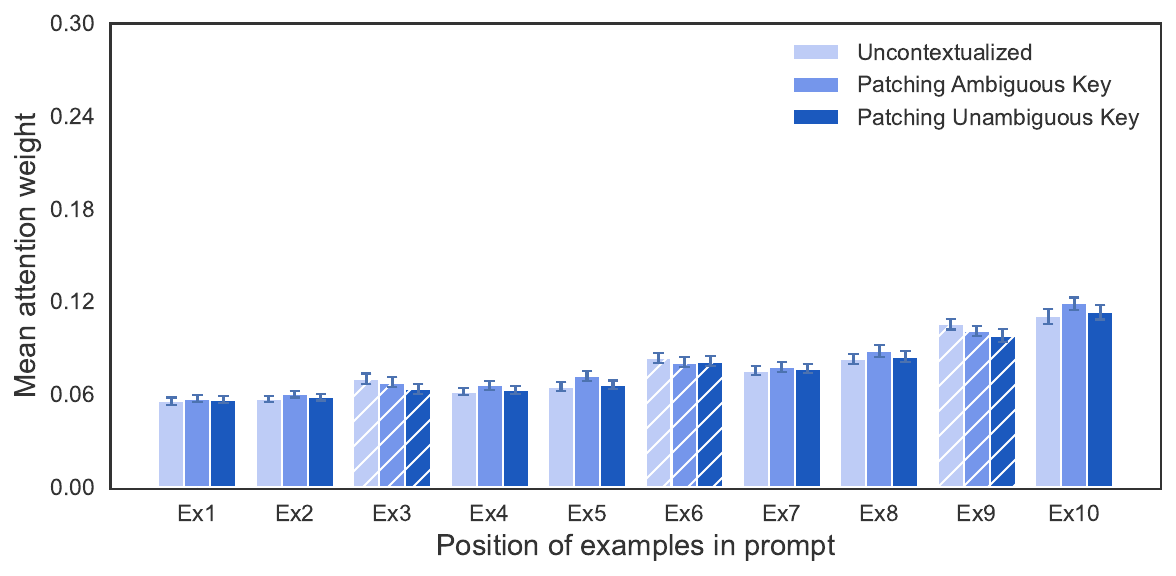}

    \caption{Key patching experiment results of all ambiguous tasks on \texttt{gemma-2-2b}. This is an extension of Main Paper Fig.~\ref{fig: k_intrinsic_attraction}.}
    \label{fig:k_patching_all_results}
\end{figure}

\clearpage

\section{Supplemental Evidence for Section 4.2: Query-Key alignment on contextualization}
\label{appendix:qk_alignment}

\begin{table}[h!]
\begin{center}
\begin{small}
\begin{sc}
\addtolength{\tabcolsep}{-2pt}
\begin{tabular}{lcccccc}
\toprule
& \multicolumn{2}{c}{3-shot} & \multicolumn{2}{c}{5-shot} & \multicolumn{2}{c}{10-shot} \\
\cmidrule(r){2-3} \cmidrule(r){4-5} \cmidrule(r){6-7}
Task Name & $\mathcal{C}$ & $\hat{H}$ & $\mathcal{C}$ & $\hat{H}$ & $\mathcal{C}$ & $\hat{H}$ \\
\midrule
Country-Capital & $\mathbf{2.16}$ {\scriptsize (2.18)} & $\mathbf{.977}$ {\scriptsize (.978)} & $\mathbf{3.08}$ {\scriptsize (3.28)} & $\mathbf{.991}$ {\scriptsize (.987)} & $\mathbf{5.44}$ {\scriptsize (5.98)} & $\mathbf{.994}$ {\scriptsize (.993)} \\
Person-Sport    & $\mathbf{2.12}$ {\scriptsize (2.03)} & $\mathbf{.968}$ {\scriptsize (.999)} & $\mathbf{3.07}$ {\scriptsize (3.13)} & $\mathbf{.986}$ {\scriptsize (.997)} & $\mathbf{5.39}$ {\scriptsize (6.05)} & $\mathbf{.994}$ {\scriptsize (.992)} \\
Park-Country    & $\mathbf{2.11}$ {\scriptsize (2.05)} & $\mathbf{.969}$ {\scriptsize (.999)} & $\mathbf{3.12}$ {\scriptsize (3.10)} & $\mathbf{.986}$ {\scriptsize (.999)} & $\mathbf{5.58}$ {\scriptsize (5.89)} & $\mathbf{.996}$ {\scriptsize (.995)} \\
Present-Past    & $\mathbf{2.20}$ {\scriptsize (2.10)} & $\mathbf{.968}$ {\scriptsize (.993)} & $\mathbf{3.15}$ {\scriptsize (3.21)} & $\mathbf{.987}$ {\scriptsize (.993)} & $\mathbf{5.47}$ {\scriptsize (5.99)} & $\mathbf{.994}$ {\scriptsize (.993)} \\
English-French  & $\mathbf{2.21}$ {\scriptsize (2.09)} & $\mathbf{.960}$ {\scriptsize (.994)} & $\mathbf{3.30}$ {\scriptsize (3.19)} & $\mathbf{.980}$ {\scriptsize (.994)} & $\mathbf{5.91}$ {\scriptsize (6.04)} & $\mathbf{.991}$ {\scriptsize (.990)} \\
\midrule
Present-Past Ambiguous & --- & $\mathbf{.848}$ {\scriptsize (.971)} & --- & $\mathbf{.893}$ {\scriptsize (.980)} & --- & $\mathbf{.881}$ {\scriptsize (.983)} \\
Chinese Ambiguous      & --- & $\mathbf{.815}$ {\scriptsize (.976)} & --- & $\mathbf{.877}$ {\scriptsize (.974)} & --- & $\mathbf{.899}$ {\scriptsize (.978)} \\
Japanese Ambiguous     & --- & $\mathbf{.940}$ {\scriptsize (.991)} & --- & $\mathbf{.967}$ {\scriptsize (.992)} & --- & $\mathbf{.976}$ {\scriptsize (.992)} \\
French Ambiguous       & --- & $\mathbf{.824}$ {\scriptsize (.961)} & --- & $\mathbf{.889}$ {\scriptsize (.966)} & --- & $\mathbf{.917}$ {\scriptsize (.983)} \\
Capitalize Ambiguous   & --- & $\mathbf{.855}$ {\scriptsize (.996)} & --- & $\mathbf{.901}$ {\scriptsize (.992)} & --- & $\mathbf{.910}$ {\scriptsize (.986)} \\
\bottomrule
\end{tabular}
\end{sc}
\end{small}
\end{center}
\caption{Attention metrics \textbf{after (before)} contextualization on \texttt{gemma-2-2b}. $\mathcal{C}$ and $\hat{H}$ denote the center of FV attention mass and normalized FV attention Entropy, respectively. For ambiguous tasks, entropy is averaged across both position settings. Bold values represent the results after contextualization, with original values in parentheses.}
\label{tab:attention_metrics}
\end{table}

\begin{figure}[h!]
    \centering
    \footnotesize

    \newcommand{\figroot}{figures/gemma-2-2b/qk_alignment}
    \newcommand{\figprefix}{qk_alignment_}   %
    \newcommand{\wsmall}{0.13}              %
    \newcommand{\wlarge}{0.18}              %
    \newcommand{\wheadsmall}{0.13\linewidth}%
    \newcommand{\wheadlarge}{0.18\linewidth}%
    \newcommand{\rowgap}{0.4ex}             %
    \newcommand{\hgap}{\hspace{3pt}}              %

    \newcommand{\tasklabel}[1]{%
        \begin{tikzpicture}[baseline=(node.center)]
            \node[anchor=west, text width=2.9cm, font=\small\scshape\bfseries, inner sep=0pt] (node) {#1};
        \end{tikzpicture}%
    }

    \newcommand{\centerimg}[2]{%
        $\vcenter{\hbox{\includegraphics[width=#1\linewidth]{#2}}}$%
    }

    \newcommand{\shotgroup}[1]{%
        \centerimg{\wsmall}{\figroot/\figprefix#1-3shot.pdf} \hgap
        \centerimg{\wsmall}{\figroot/\figprefix#1-5shot.pdf} \hgap
        \centerimg{\wlarge}{\figroot/\figprefix#1-10shot.pdf}%
    }

    \tasklabel{} \hgap
    \makebox[\wheadsmall][c]{\textbf{3-Shot}} \hgap
    \makebox[\wheadsmall][c]{\textbf{5-Shot}} \hgap
    \makebox[\wheadlarge][c]{\textbf{10-Shot}} \\[1.5mm]

    \tasklabel{Country-Capital} \hgap \shotgroup{country-capital} \\[\rowgap]
    \tasklabel{Person-Sport}    \hgap \shotgroup{person-sport}    \\[\rowgap]
    \tasklabel{Park-Country}    \hgap \shotgroup{park-country}    \\[\rowgap]
    \tasklabel{Present-Past}    \hgap \shotgroup{present-past}    \\[\rowgap]
    \tasklabel{English-French}  \hgap \shotgroup{english-french}

    \caption{\textbf{Normal Tasks:} \texttt{gemma-2-2b} Influence of contextualization on mean attention weights of individual examples on FV heads. Each row displays the mean attention for a normal task across 3-shot, 5-shot, and 10-shot settings before and after contextualization.
    This is an extension of Main Paper Fig.~\ref{fig:qk_alignment}.}
    \label{fig:Gemma-2-2b_normal_qk_alignment}
\end{figure}

\begin{figure}[h!]
    \centering
    \footnotesize

    \newcommand{\figroot}{figures/gemma-2-9b/qk_alignment}
    \newcommand{\figprefix}{qk_alignment_}   %
    \newcommand{\wsmall}{0.13}              %
    \newcommand{\wlarge}{0.18}              %
    \newcommand{\wheadsmall}{0.13\linewidth}%
    \newcommand{\wheadlarge}{0.18\linewidth}%
    \newcommand{\rowgap}{0.4ex}             %
    \newcommand{\hgap}{\hspace{3pt}}              %

    \newcommand{\tasklabel}[1]{%
        \begin{tikzpicture}[baseline=(node.center)]
            \node[anchor=west, text width=2.9cm, font=\small\scshape\bfseries, inner sep=0pt] (node) {#1};
        \end{tikzpicture}%
    }

    \newcommand{\centerimg}[2]{%
        $\vcenter{\hbox{\includegraphics[width=#1\linewidth]{#2}}}$%
    }

    \newcommand{\shotgroup}[1]{%
        \centerimg{\wsmall}{\figroot/\figprefix#1-3shot.pdf} \hgap
        \centerimg{\wsmall}{\figroot/\figprefix#1-5shot.pdf} \hgap
        \centerimg{\wlarge}{\figroot/\figprefix#1-10shot.pdf}%
    }

    \tasklabel{} \hgap
    \makebox[\wheadsmall][c]{\textbf{3-Shot}} \hgap
    \makebox[\wheadsmall][c]{\textbf{5-Shot}} \hgap
    \makebox[\wheadlarge][c]{\textbf{10-Shot}} \\[1.5mm]

    \tasklabel{Country-Capital} \hgap \shotgroup{country-capital} \\[\rowgap]
    \tasklabel{Person-Sport}    \hgap \shotgroup{person-sport}    \\[\rowgap]
    \tasklabel{Park-Country}    \hgap \shotgroup{park-country}    \\[\rowgap]
    \tasklabel{Present-Past}    \hgap \shotgroup{present-past}    \\[\rowgap]
    \tasklabel{English-French}  \hgap \shotgroup{english-french}

    \caption{\textbf{Normal Tasks:} \texttt{gemma-2-9b} Influence of contextualization on mean attention weights of individual examples on FV heads. Each row displays the mean attention for a normal task across 3-shot, 5-shot, and 10-shot settings before and after contextualization. This is an extension of Main Paper Fig.~\ref{fig:qk_alignment}.}
    \label{fig:Gemma-2-9b_normal_qk_alignment}
\end{figure}

\begin{figure}[h!]
    \centering
    \footnotesize

    \newcommand{\figroot}{figures/gemma-2-27b/qk_alignment}
    \newcommand{\figprefix}{qk_alignment_}   %
    \newcommand{\wsmall}{0.13}              %
    \newcommand{\wlarge}{0.18}              %
    \newcommand{\wheadsmall}{0.13\linewidth}%
    \newcommand{\wheadlarge}{0.18\linewidth}%
    \newcommand{\rowgap}{0.4ex}             %
    \newcommand{\hgap}{\hspace{3pt}}              %

    \newcommand{\tasklabel}[1]{%
        \begin{tikzpicture}[baseline=(node.center)]
            \node[anchor=west, text width=2.9cm, font=\small\scshape\bfseries, inner sep=0pt] (node) {#1};
        \end{tikzpicture}%
    }

    \newcommand{\centerimg}[2]{%
        $\vcenter{\hbox{\includegraphics[width=#1\linewidth]{#2}}}$%
    }

    \newcommand{\shotgroup}[1]{%
        \centerimg{\wsmall}{\figroot/\figprefix#1-3shot.pdf} \hgap
        \centerimg{\wsmall}{\figroot/\figprefix#1-5shot.pdf} \hgap
        \centerimg{\wlarge}{\figroot/\figprefix#1-10shot.pdf}%
    }

    \tasklabel{} \hgap
    \makebox[\wheadsmall][c]{\textbf{3-Shot}} \hgap
    \makebox[\wheadsmall][c]{\textbf{5-Shot}} \hgap
    \makebox[\wheadlarge][c]{\textbf{10-Shot}} \\[1.5mm]

    \tasklabel{Country-Capital} \hgap \shotgroup{country-capital} \\[\rowgap]
    \tasklabel{Person-Sport}    \hgap \shotgroup{person-sport}    \\[\rowgap]
    \tasklabel{Park-Country}    \hgap \shotgroup{park-country}    \\[\rowgap]
    \tasklabel{Present-Past}    \hgap \shotgroup{present-past}    \\[\rowgap]
    \tasklabel{English-French}  \hgap \shotgroup{english-french}

    \caption{\textbf{Normal Tasks:} \texttt{gemma-2-27b} Influence of contextualization on mean attention weights of individual examples on FV heads. Each row displays the mean attention for a normal task across 3-shot, 5-shot, and 10-shot settings before and after contextualization. This is an extension of Main Paper Fig.~\ref{fig:qk_alignment}.}
    \label{fig:Gemma-2-27b_normal_qk_alignment}
\end{figure}

\begin{figure}[h!]
    \centering
    \footnotesize

    \newcommand{\figroot}{figures/Llama-3.2-1B/qk_alignment}
    \newcommand{\figprefix}{qk_alignment_}   %
    \newcommand{\wsmall}{0.13}              %
    \newcommand{\wlarge}{0.18}              %
    \newcommand{\wheadsmall}{0.13\linewidth}%
    \newcommand{\wheadlarge}{0.18\linewidth}%
    \newcommand{\rowgap}{0.4ex}             %
    \newcommand{\hgap}{\hspace{3pt}}              %

    \newcommand{\tasklabel}[1]{%
        \begin{tikzpicture}[baseline=(node.center)]
            \node[anchor=west, text width=2.9cm, font=\small\scshape\bfseries, inner sep=0pt] (node) {#1};
        \end{tikzpicture}%
    }

    \newcommand{\centerimg}[2]{%
        $\vcenter{\hbox{\includegraphics[width=#1\linewidth]{#2}}}$%
    }

    \newcommand{\shotgroup}[1]{%
        \centerimg{\wsmall}{\figroot/\figprefix#1-3shot.pdf} \hgap
        \centerimg{\wsmall}{\figroot/\figprefix#1-5shot.pdf} \hgap
        \centerimg{\wlarge}{\figroot/\figprefix#1-10shot.pdf}%
    }

    \tasklabel{} \hgap
    \makebox[\wheadsmall][c]{\textbf{3-Shot}} \hgap
    \makebox[\wheadsmall][c]{\textbf{5-Shot}} \hgap
    \makebox[\wheadlarge][c]{\textbf{10-Shot}} \\[1.5mm]

    \tasklabel{Country-Capital} \hgap \shotgroup{country-capital} \\[\rowgap]
    \tasklabel{Person-Sport}    \hgap \shotgroup{person-sport}    \\[\rowgap]
    \tasklabel{Park-Country}    \hgap \shotgroup{park-country}    \\[\rowgap]
    \tasklabel{Present-Past}    \hgap \shotgroup{present-past}    \\[\rowgap]
    \tasklabel{English-French}  \hgap \shotgroup{english-french}

    \caption{\textbf{Normal Tasks:} \texttt{Llama-3.2-1B} Influence of contextualization on mean attention weights of individual examples on FV heads. Each row displays the mean attention for a normal task across 3-shot, 5-shot, and 10-shot settings before and after contextualization. This is an extension of Main Paper Fig.~\ref{fig:qk_alignment}.}
    \label{fig:Llama-3.2-1B_normal_qk_alignment}
\end{figure}
\clearpage
\begin{figure}[h!]
    \centering
    \footnotesize

    \newcommand{\figroot}{figures/Llama-3.2-3B/qk_alignment}
    \newcommand{\figprefix}{qk_alignment_}   %
    \newcommand{\wsmall}{0.13}              %
    \newcommand{\wlarge}{0.18}              %
    \newcommand{\wheadsmall}{0.13\linewidth}%
    \newcommand{\wheadlarge}{0.18\linewidth}%
    \newcommand{\rowgap}{0.4ex}             %
    \newcommand{\hgap}{\hspace{3pt}}              %

    \newcommand{\tasklabel}[1]{%
        \begin{tikzpicture}[baseline=(node.center)]
            \node[anchor=west, text width=2.9cm, font=\small\scshape\bfseries, inner sep=0pt] (node) {#1};
        \end{tikzpicture}%
    }

    \newcommand{\centerimg}[2]{%
        $\vcenter{\hbox{\includegraphics[width=#1\linewidth]{#2}}}$%
    }

    \newcommand{\shotgroup}[1]{%
        \centerimg{\wsmall}{\figroot/\figprefix#1-3shot.pdf} \hgap
        \centerimg{\wsmall}{\figroot/\figprefix#1-5shot.pdf} \hgap
        \centerimg{\wlarge}{\figroot/\figprefix#1-10shot.pdf}%
    }

    \tasklabel{} \hgap
    \makebox[\wheadsmall][c]{\textbf{3-Shot}} \hgap
    \makebox[\wheadsmall][c]{\textbf{5-Shot}} \hgap
    \makebox[\wheadlarge][c]{\textbf{10-Shot}} \\[1.5mm]

    \tasklabel{Country-Capital} \hgap \shotgroup{country-capital} \\[\rowgap]
    \tasklabel{Person-Sport}    \hgap \shotgroup{person-sport}    \\[\rowgap]
    \tasklabel{Park-Country}    \hgap \shotgroup{park-country}    \\[\rowgap]
    \tasklabel{Present-Past}    \hgap \shotgroup{present-past}    \\[\rowgap]
    \tasklabel{English-French}  \hgap \shotgroup{english-french}

    \caption{\textbf{Normal Tasks:} \texttt{Llama-3.2-3B} Influence of contextualization on mean attention weights of individual examples on FV heads. Each row displays the mean attention for a normal task across 3-shot, 5-shot, and 10-shot settings before and after contextualization. This is an extension of Main Paper Fig.~\ref{fig:qk_alignment}.}
    \label{fig:Llama-3.2-3B_normal_qk_alignment}
\end{figure}

\begin{figure}[h!]
    \centering
    \footnotesize

    \newcommand{\figroot}{figures/Llama-3.1-8B-Instruct/qk_alignment}
    \newcommand{\figprefix}{qk_alignment_}   %
    \newcommand{\wsmall}{0.13}              %
    \newcommand{\wlarge}{0.18}              %
    \newcommand{\wheadsmall}{0.13\linewidth}%
    \newcommand{\wheadlarge}{0.18\linewidth}%
    \newcommand{\rowgap}{0.4ex}             %
    \newcommand{\hgap}{\hspace{3pt}}              %

    \newcommand{\tasklabel}[1]{%
        \begin{tikzpicture}[baseline=(node.center)]
            \node[anchor=west, text width=2.9cm, font=\small\scshape\bfseries, inner sep=0pt] (node) {#1};
        \end{tikzpicture}%
    }

    \newcommand{\centerimg}[2]{%
        $\vcenter{\hbox{\includegraphics[width=#1\linewidth]{#2}}}$%
    }

    \newcommand{\shotgroup}[1]{%
        \centerimg{\wsmall}{\figroot/\figprefix#1-3shot.pdf} \hgap
        \centerimg{\wsmall}{\figroot/\figprefix#1-5shot.pdf} \hgap
        \centerimg{\wlarge}{\figroot/\figprefix#1-10shot.pdf}%
    }

    \tasklabel{} \hgap
    \makebox[\wheadsmall][c]{\textbf{3-Shot}} \hgap
    \makebox[\wheadsmall][c]{\textbf{5-Shot}} \hgap
    \makebox[\wheadlarge][c]{\textbf{10-Shot}} \\[1.5mm]

    \tasklabel{Country-Capital} \hgap \shotgroup{country-capital} \\[\rowgap]
    \tasklabel{Person-Sport}    \hgap \shotgroup{person-sport}    \\[\rowgap]
    \tasklabel{Park-Country}    \hgap \shotgroup{park-country}    \\[\rowgap]
    \tasklabel{Present-Past}    \hgap \shotgroup{present-past}    \\[\rowgap]
    \tasklabel{English-French}  \hgap \shotgroup{english-french}

    \caption{\textbf{Normal Tasks:} \texttt{Llama-3.1-8B-Instruct} Influence of contextualization on mean attention weights of individual examples on FV heads. Each row displays the mean attention for a normal task across 3-shot, 5-shot, and 10-shot settings before and after contextualization. This is an extension of Main Paper Fig.~\ref{fig:qk_alignment}.}
    \label{fig:Llama-3.1-8B-Instruct_normal_qk_alignment}
\end{figure}
\clearpage
\begin{figure}[h!]
    \centering
    \footnotesize

    \newcommand{\tasklabel}[1]{%
        \begin{tikzpicture}[baseline=(node.center)]
            \node[anchor=west, text width=3.7cm, font=\small\scshape\bfseries, inner sep=0pt] (node) {#1};
        \end{tikzpicture}%
    }
    \newcommand{\centerimg}[2]{%
        $\vcenter{\hbox{\includegraphics[width=#1\linewidth]{#2}}}$%
    }
    \newcommand{\hgap}{\hfill}

    \tasklabel{} \hgap
    \makebox[0.17\linewidth][c]{\textbf{3-Shot}} \hgap
    \makebox[0.17\linewidth][c]{\textbf{5-Shot}} \hgap
    \makebox[0.25\linewidth][c]{\textbf{10-Shot}} \vspace{1.5mm} \\

    \tasklabel{Present-Past Ambiguous} \hgap
    \centerimg{0.17}{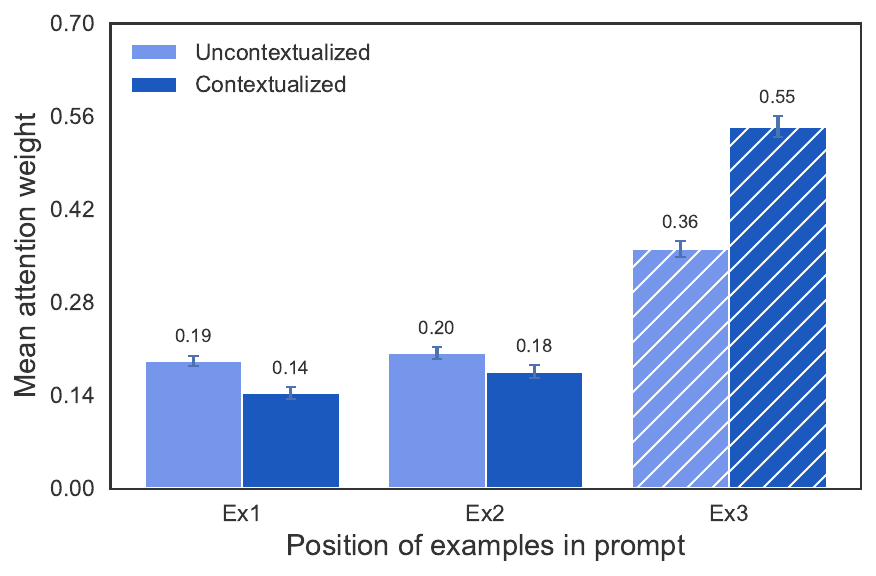} \hgap
    \centerimg{0.17}{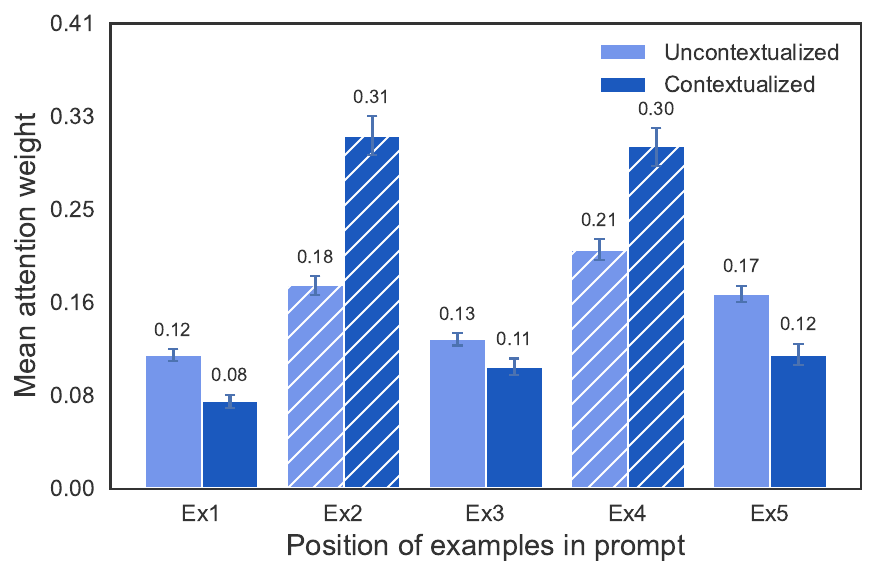} \hgap
    \centerimg{0.25}{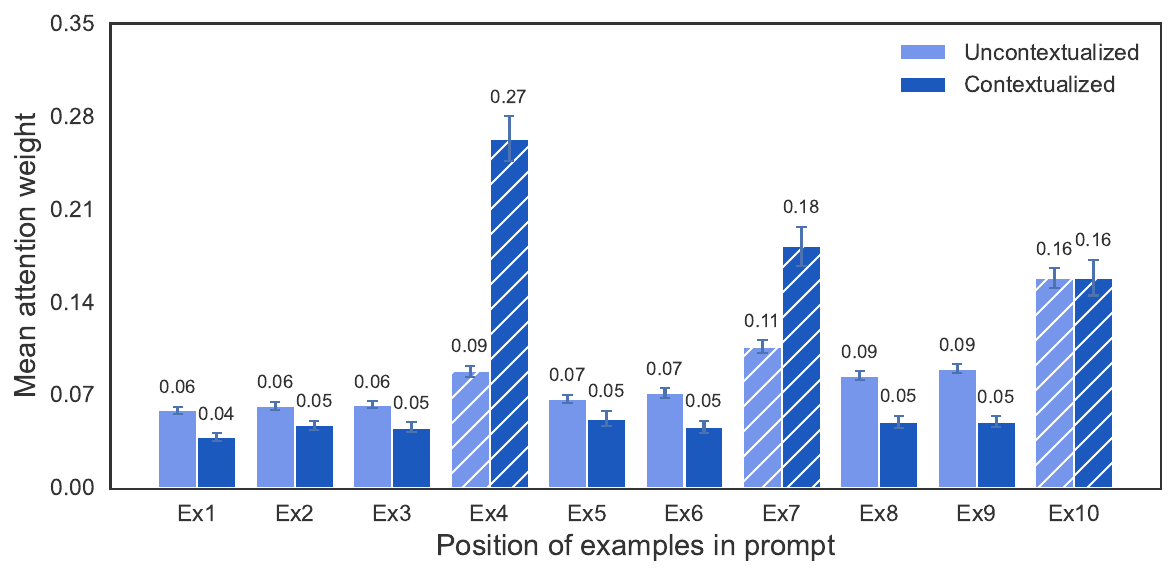} \\[0.3ex]

    \tasklabel{Present-Past Ambiguous-2} \hgap
    \centerimg{0.17}{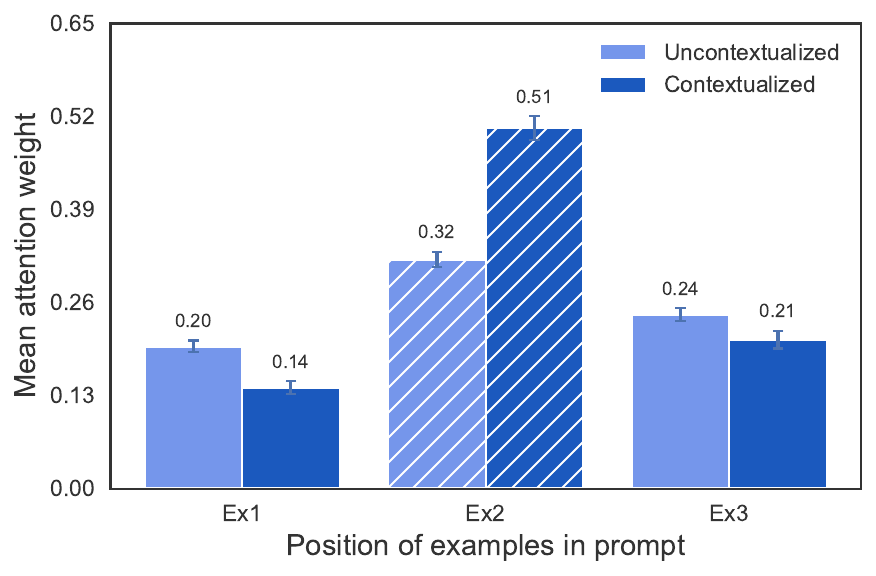} \hgap
    \centerimg{0.17}{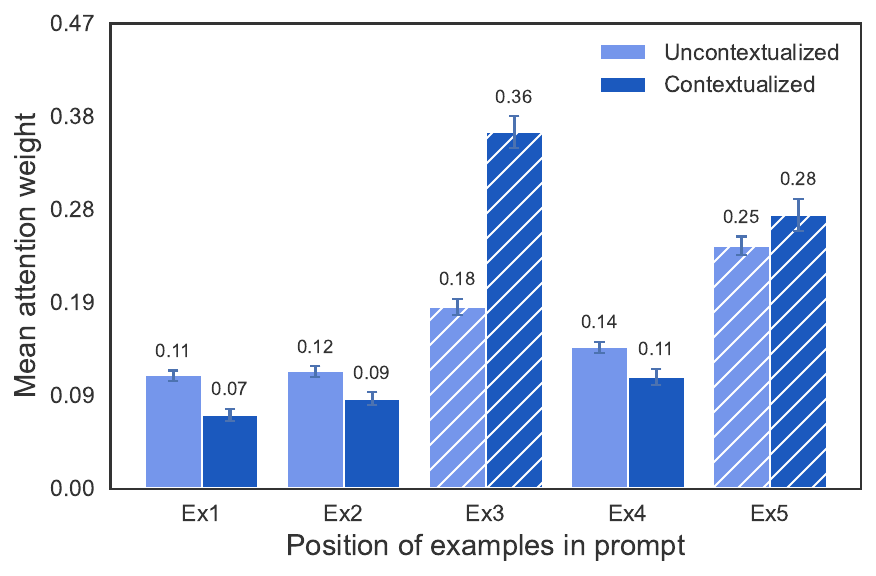} \hgap
    \centerimg{0.25}{figures/gemma-2-2b/qk_alignment/qk_alignment_present-past-ambiguous-2-10shot.pdf} \\[0.3ex]

    \tasklabel{Chinese Ambiguous} \hgap
    \centerimg{0.17}{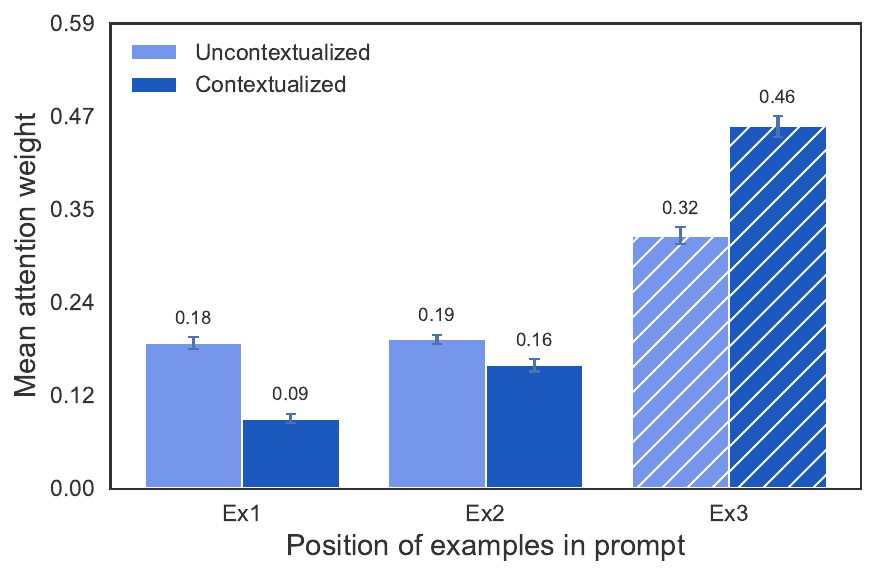} \hgap
    \centerimg{0.17}{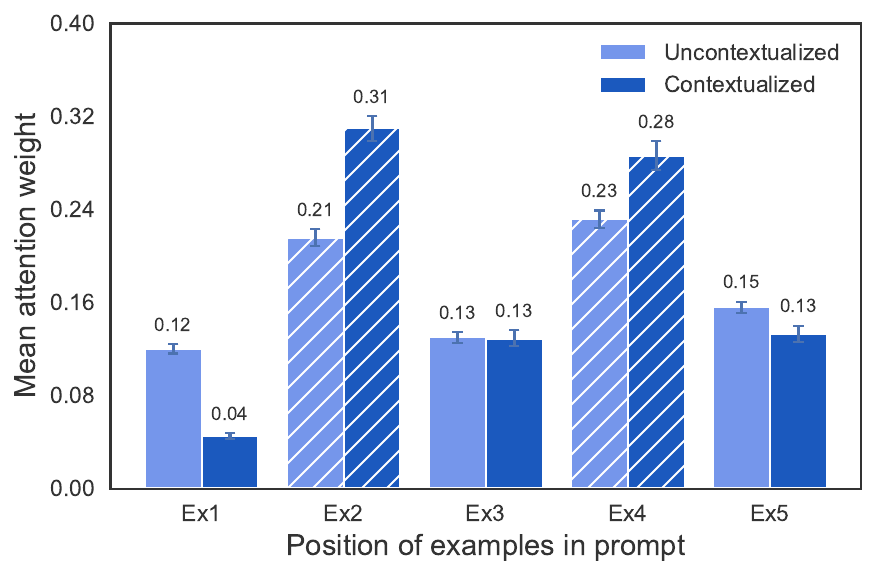} \hgap
    \centerimg{0.25}{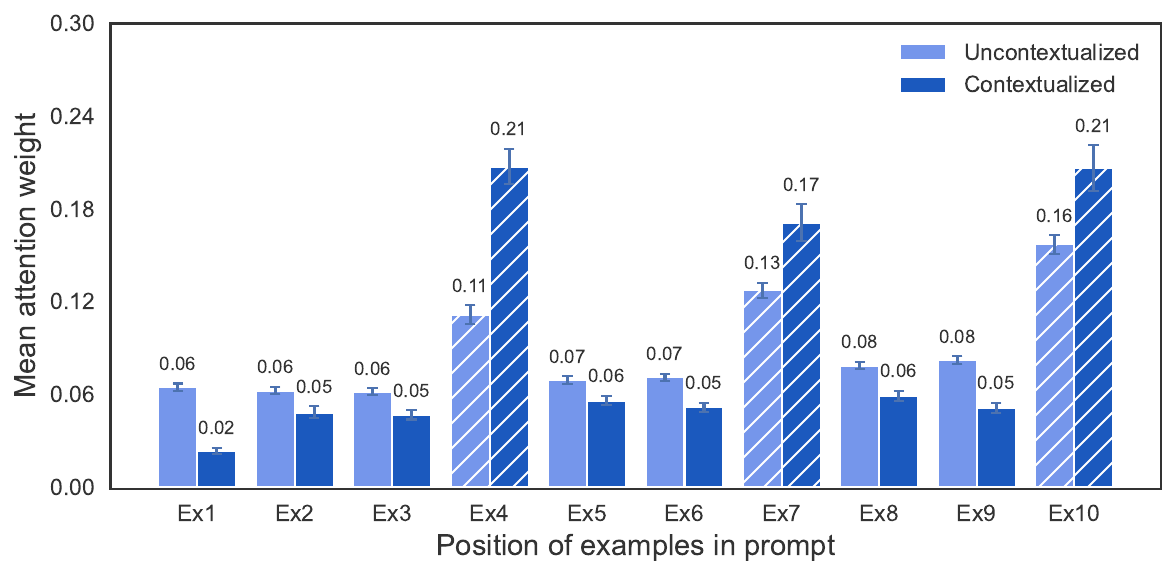} \\[0.3ex]

    \tasklabel{Chinese Ambiguous-2} \hgap
    \centerimg{0.17}{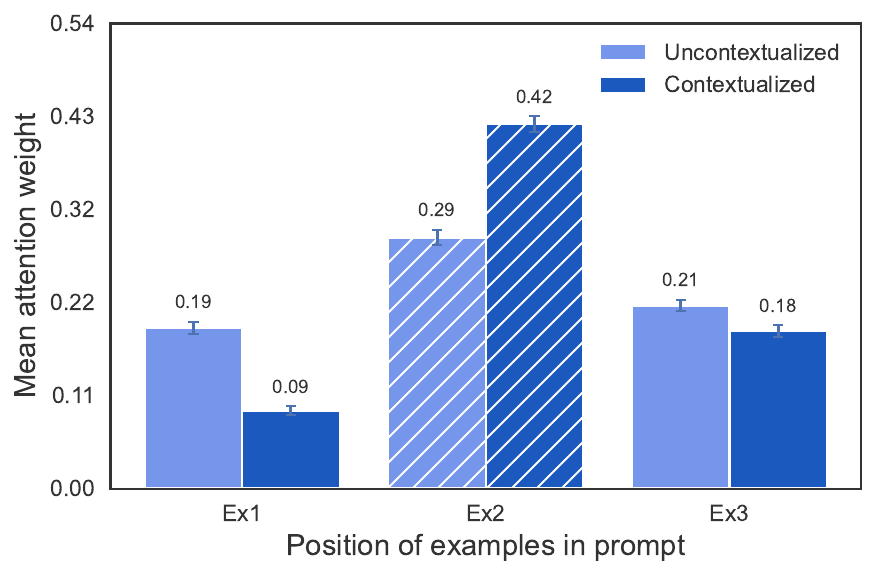} \hgap
    \centerimg{0.17}{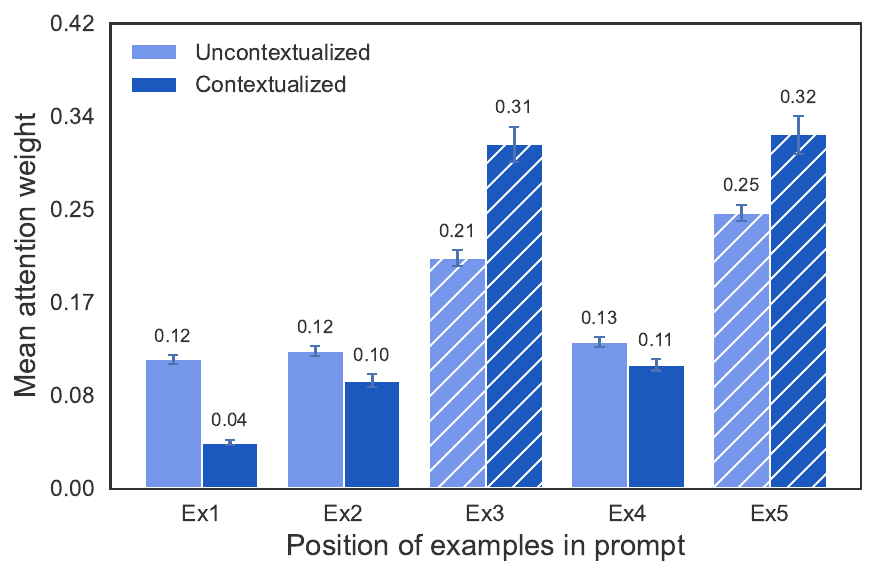} \hgap
    \centerimg{0.25}{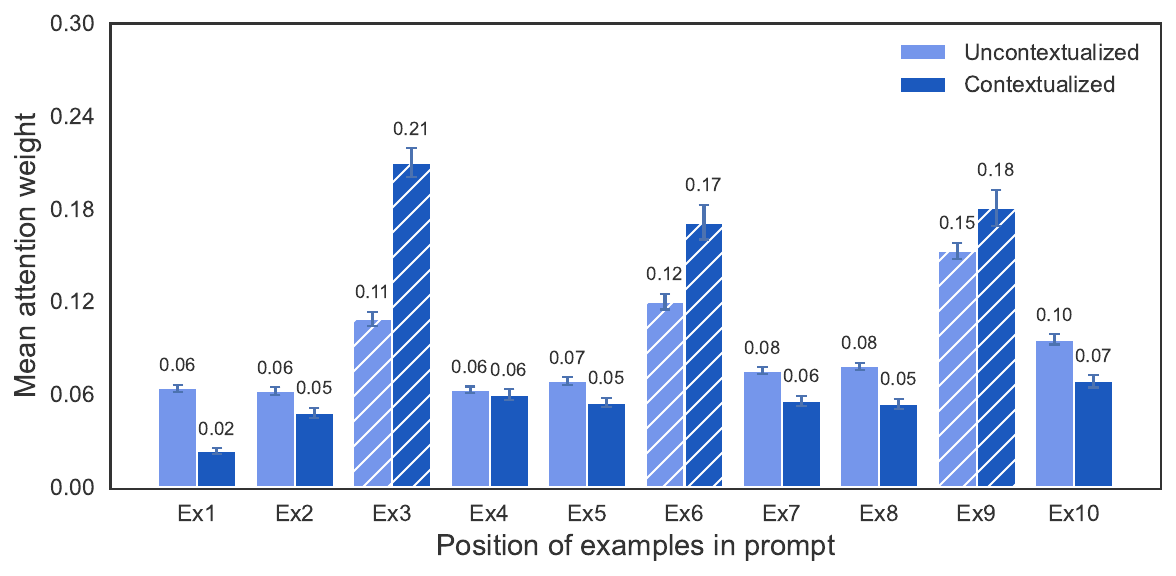} \\[0.3ex]

    \tasklabel{Japanese Ambiguous} \hgap
    \centerimg{0.17}{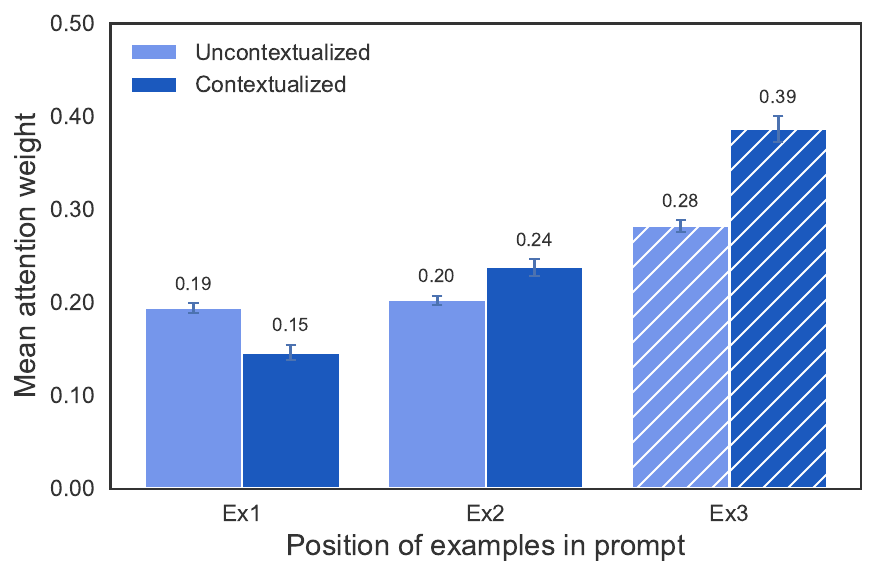} \hgap
    \centerimg{0.17}{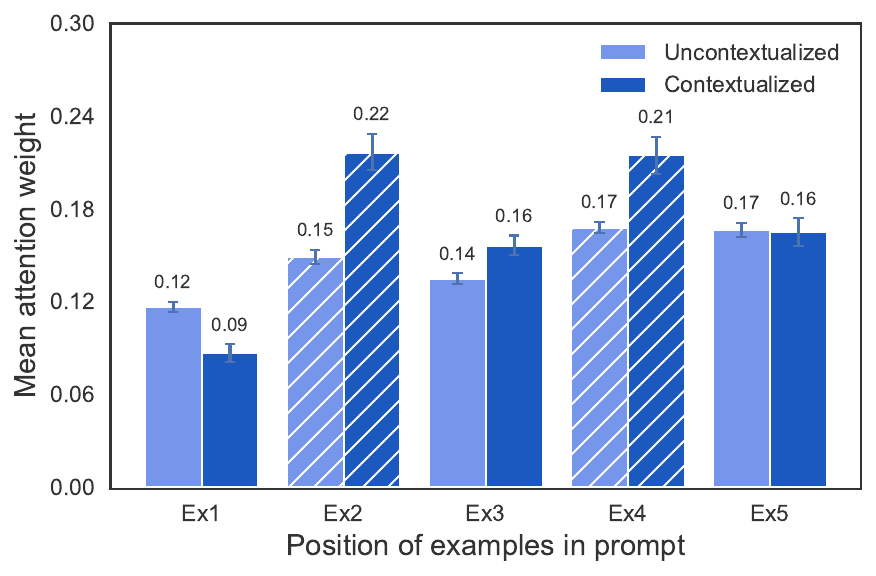} \hgap
    \centerimg{0.25}{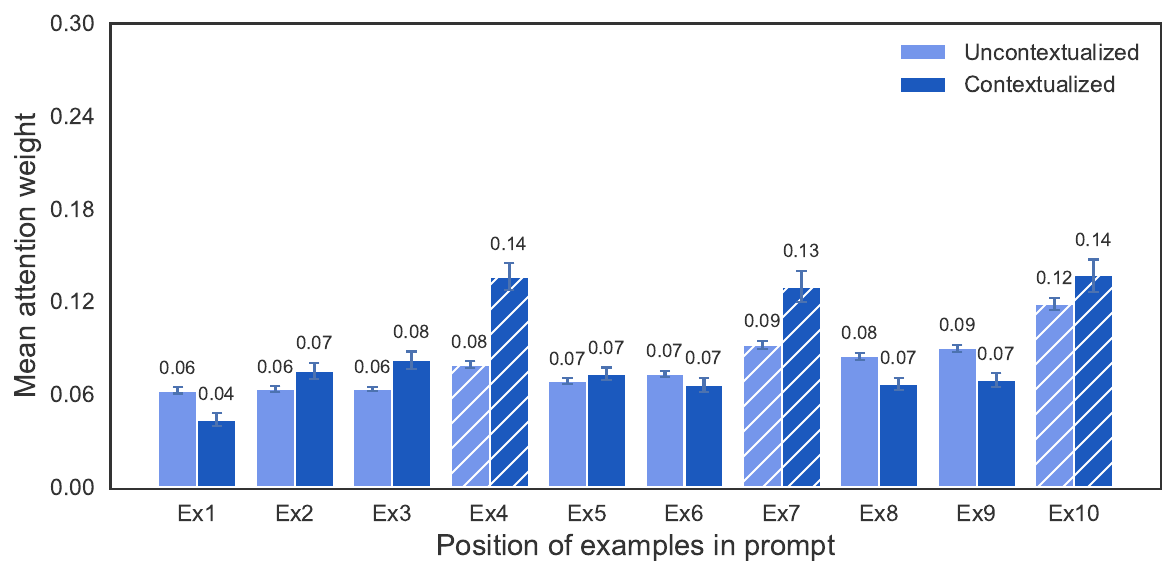} \\[0.3ex]

    \tasklabel{Japanese Ambiguous-2} \hgap
    \centerimg{0.17}{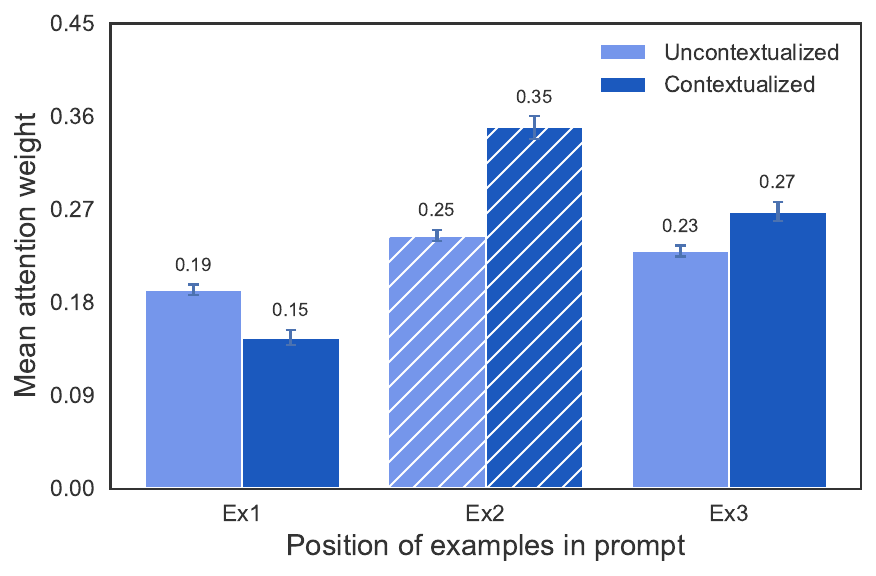} \hgap
    \centerimg{0.17}{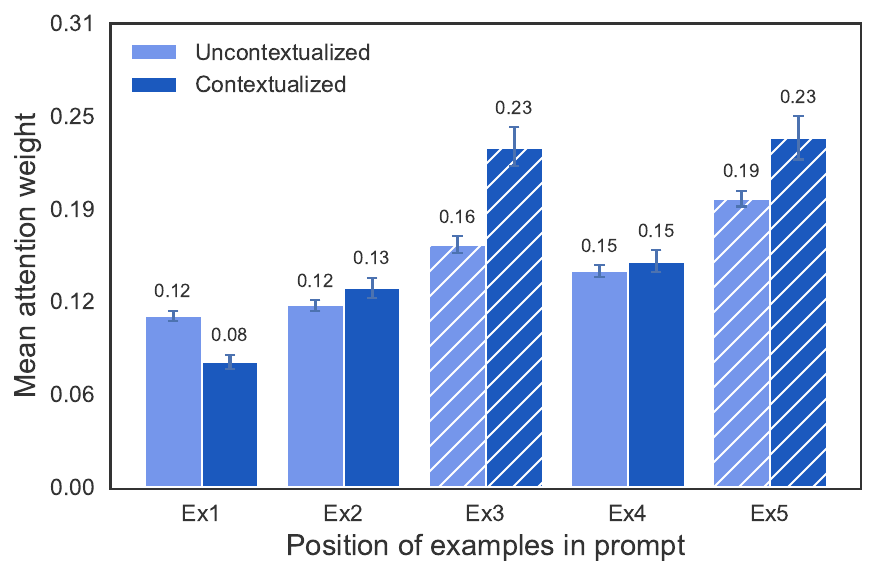} \hgap
    \centerimg{0.25}{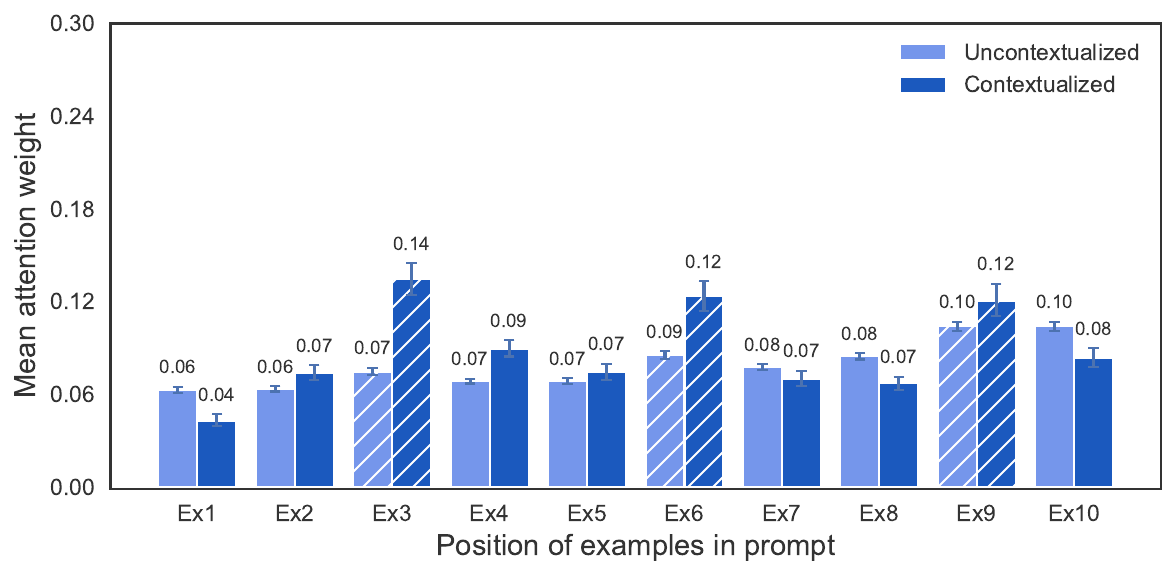} \\[0.3ex]

    \tasklabel{French Ambiguous} \hgap
    \centerimg{0.17}{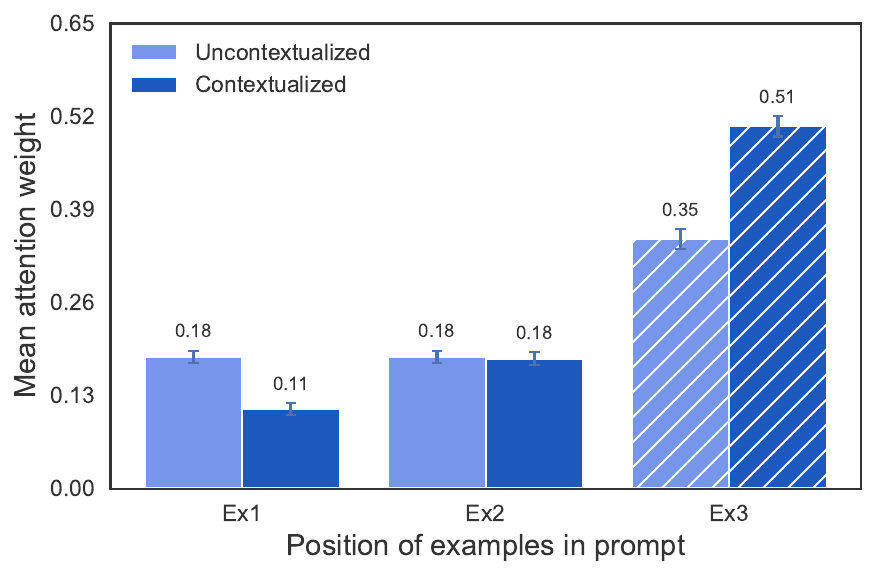} \hgap
    \centerimg{0.17}{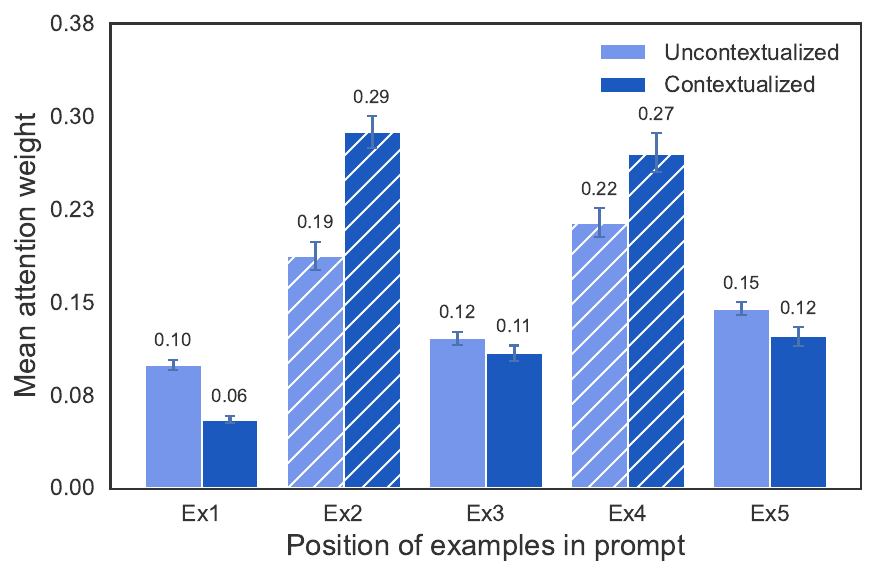} \hgap
    \centerimg{0.25}{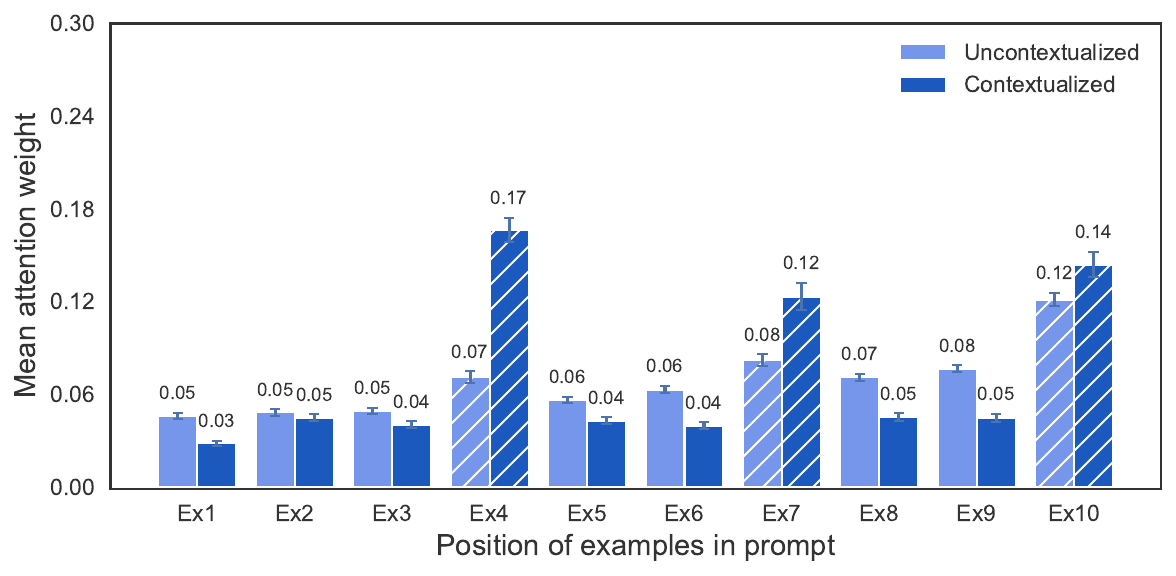} \\[0.3ex]

    \tasklabel{French Ambiguous-2} \hgap
    \centerimg{0.17}{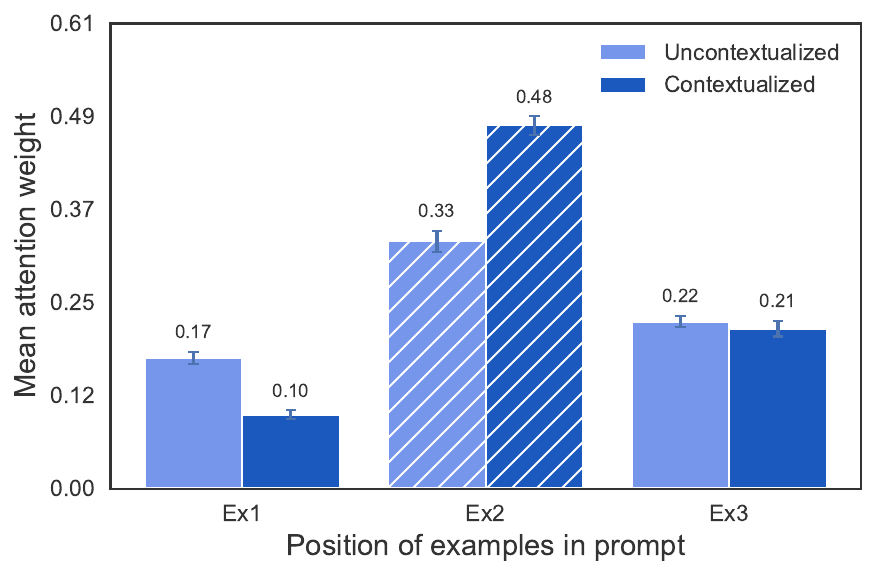} \hgap
    \centerimg{0.17}{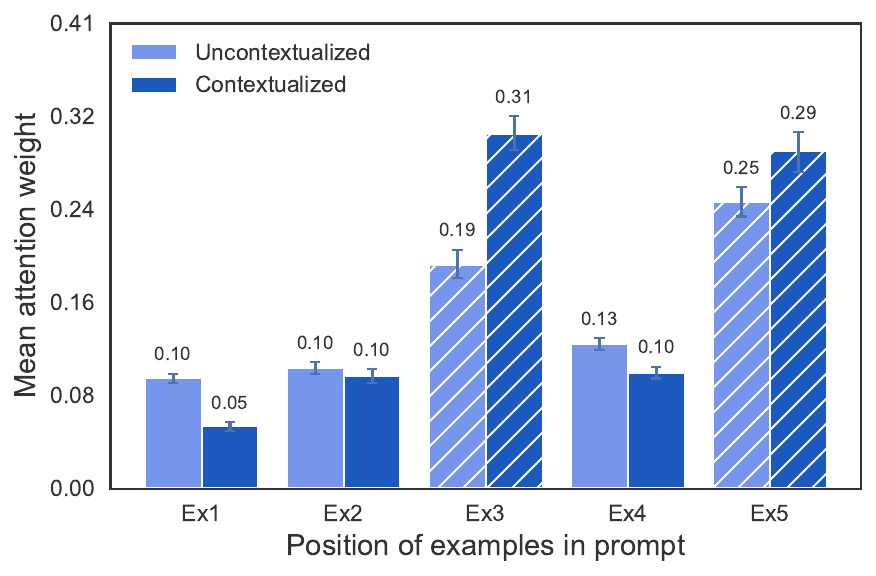} \hgap
    \centerimg{0.25}{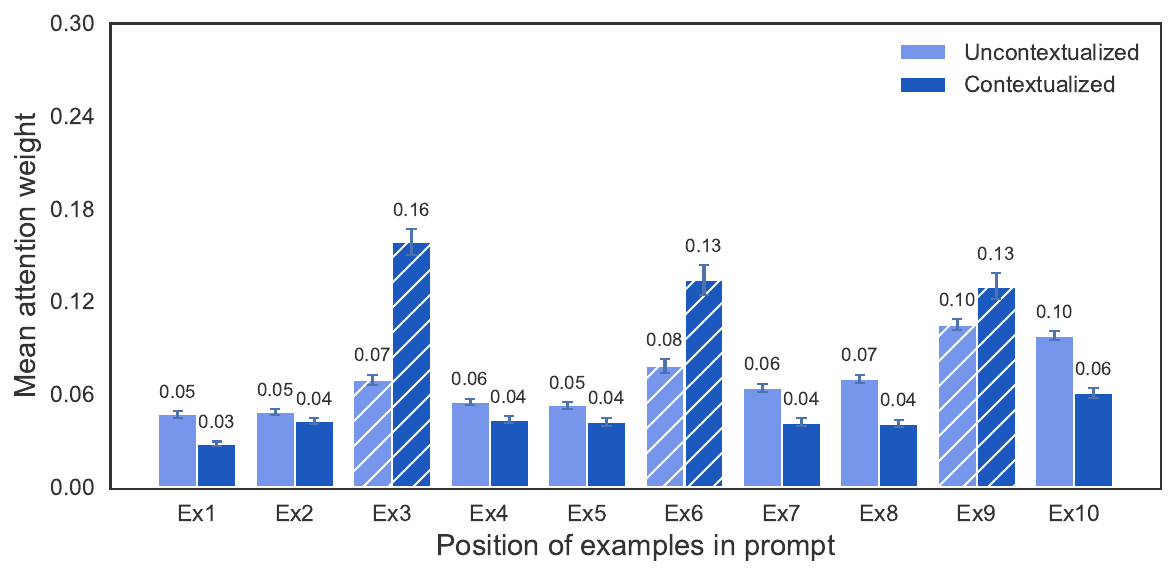} \\[0.3ex]

    \tasklabel{Capitalize Ambiguous} \hgap
    \centerimg{0.17}{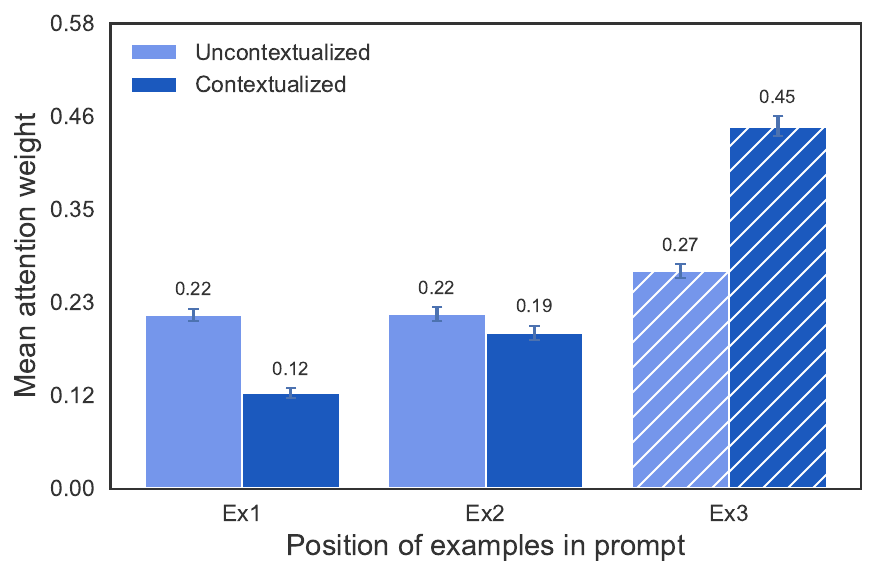} \hgap
    \centerimg{0.17}{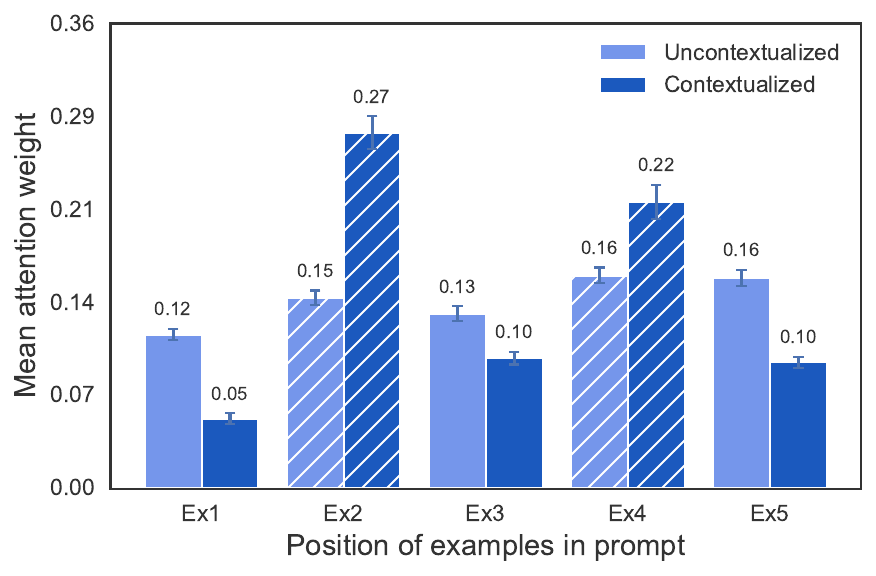} \hgap
    \centerimg{0.25}{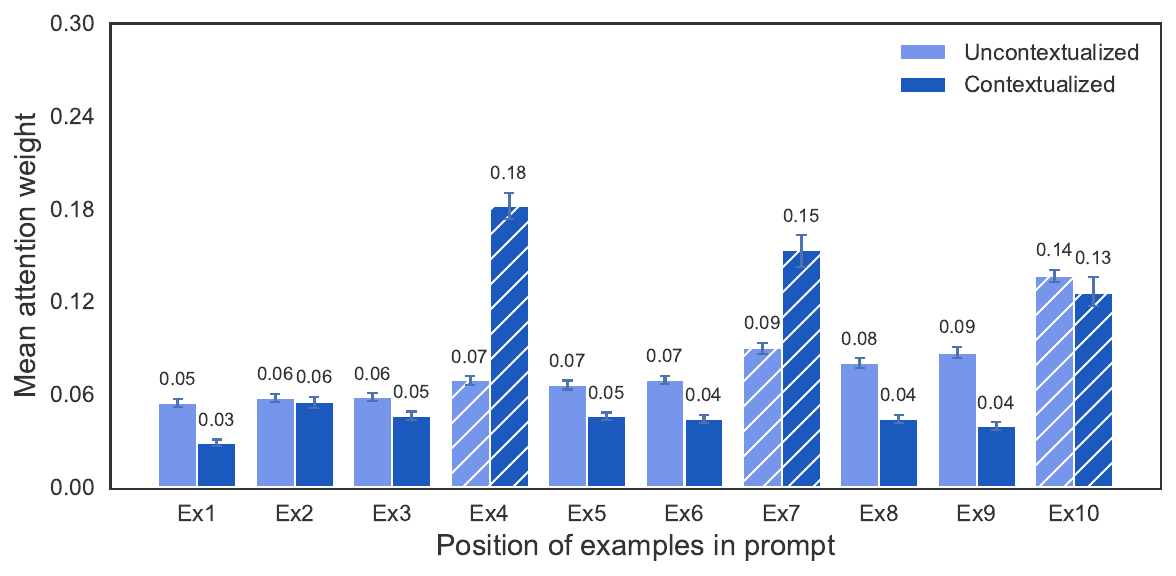} \\[0.3ex]

    \tasklabel{Capitalize Ambiguous-2} \hgap
    \centerimg{0.17}{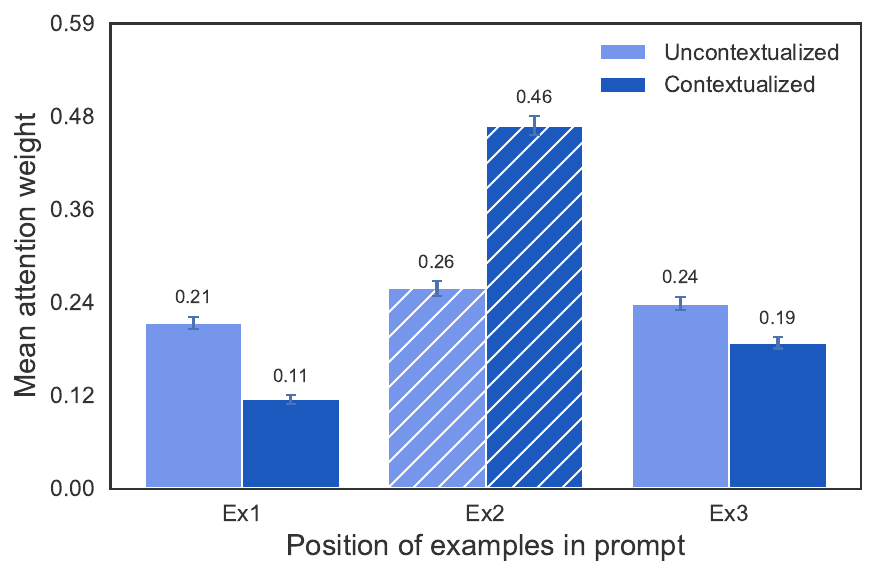} \hgap
    \centerimg{0.17}{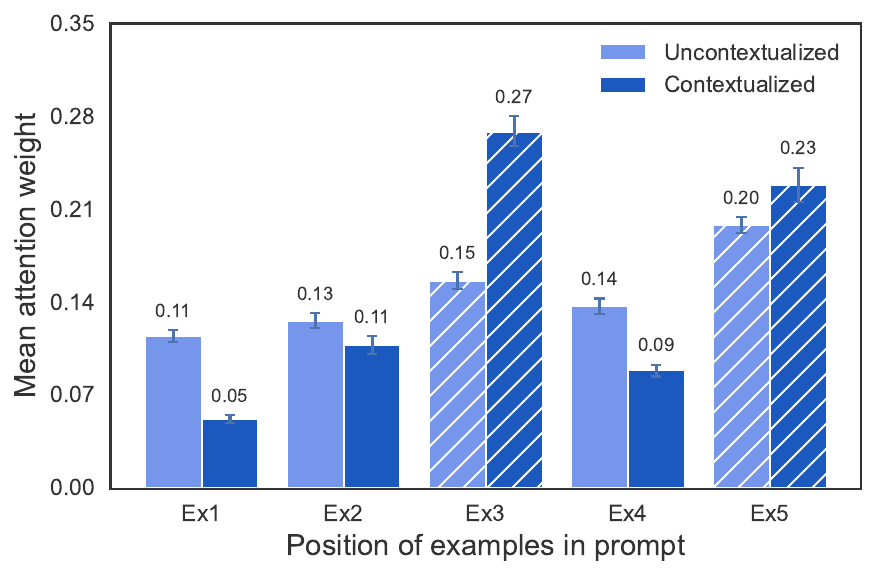} \hgap
    \centerimg{0.25}{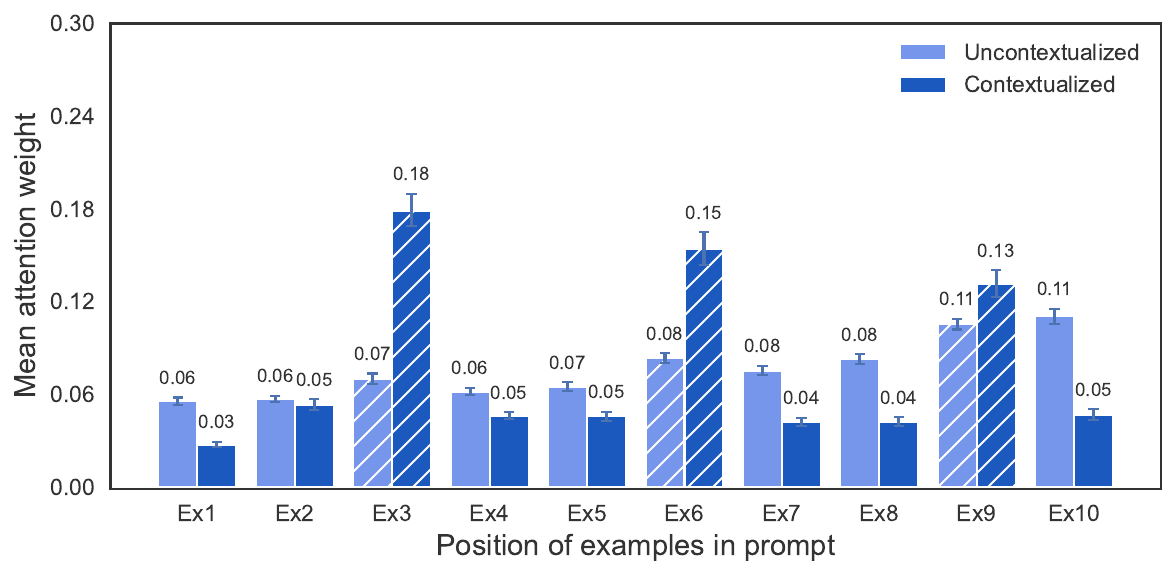}

    \caption{\textbf{Ambiguous Tasks:} \texttt{gemma-2-2b} Influence of contextualization on mean attention weights of individual examples on FV heads. Each row displays the mean attention for a normal task across 3-shot, 5-shot, and 10-shot settings before and after contextualization. This is an extension of Main Paper Fig.~\ref{fig:qk_alignment}.}
    \label{fig:Gemma-2-2b_ambiguous_qk_alignment}
\end{figure}

\begin{figure}[h!]
    \centering
    \footnotesize

    \newcommand{\tasklabel}[1]{%
        \begin{tikzpicture}[baseline=(node.center)]
            \node[anchor=west, text width=3.7cm, font=\small\scshape\bfseries, inner sep=0pt] (node) {#1};
        \end{tikzpicture}%
    }
    \newcommand{\centerimg}[2]{%
        $\vcenter{\hbox{\includegraphics[width=#1\linewidth]{#2}}}$%
    }
    \newcommand{\hgap}{\hfill}

    \tasklabel{} \hgap
    \makebox[0.17\linewidth][c]{\textbf{3-Shot}} \hgap
    \makebox[0.17\linewidth][c]{\textbf{5-Shot}} \hgap
    \makebox[0.25\linewidth][c]{\textbf{10-Shot}} \vspace{1.5mm} \\

    \tasklabel{Present-Past Ambiguous} \hgap
    \centerimg{0.17}{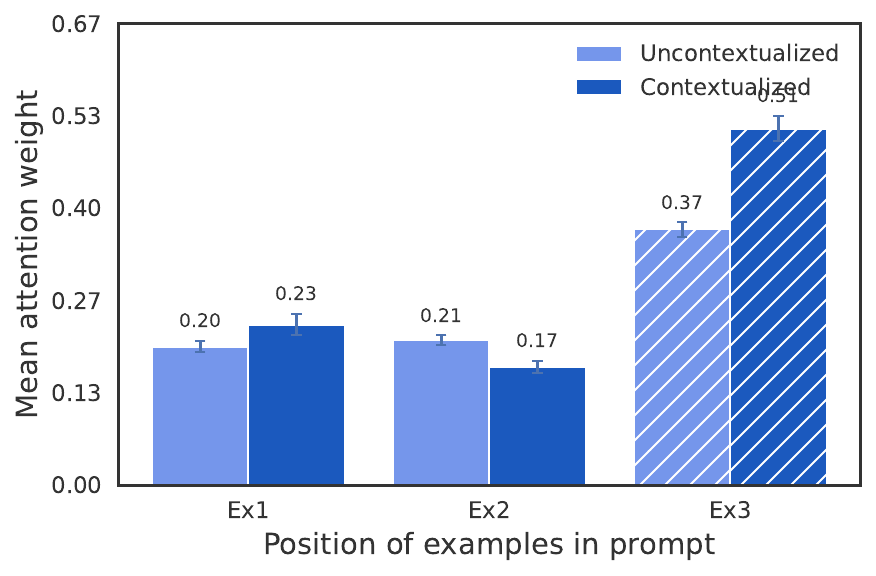} \hgap
    \centerimg{0.17}{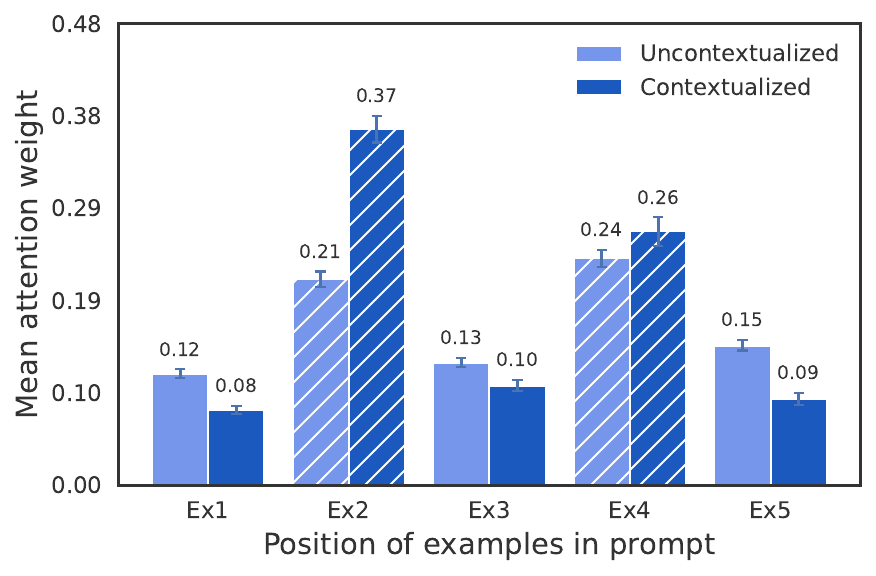} \hgap
    \centerimg{0.25}{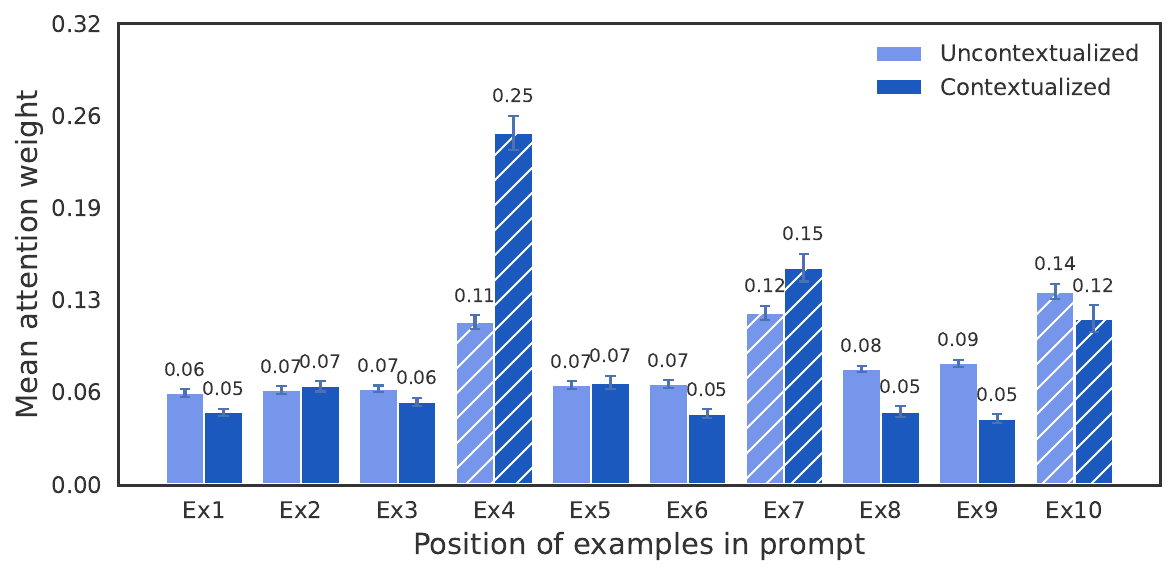} \\[0.3ex]

    \tasklabel{Present-Past Ambiguous-2} \hgap
    \centerimg{0.17}{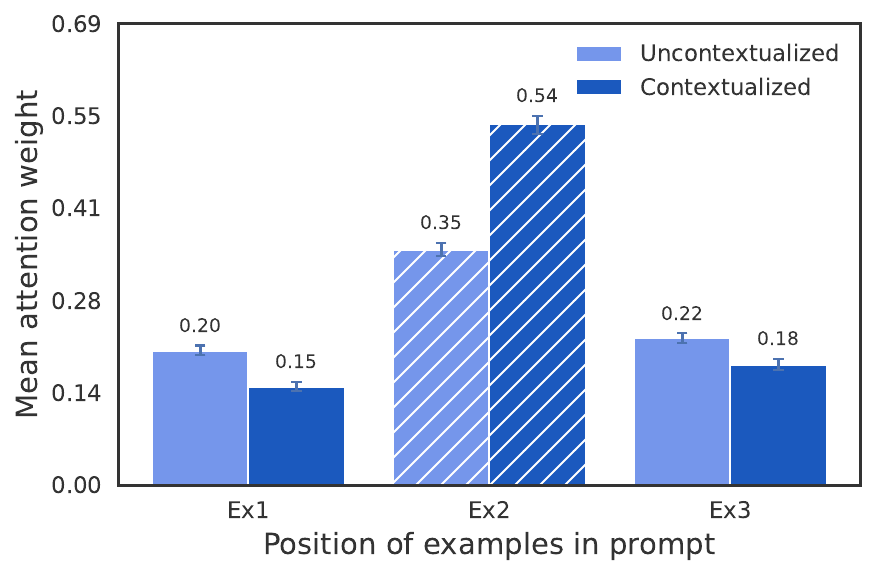} \hgap
    \centerimg{0.17}{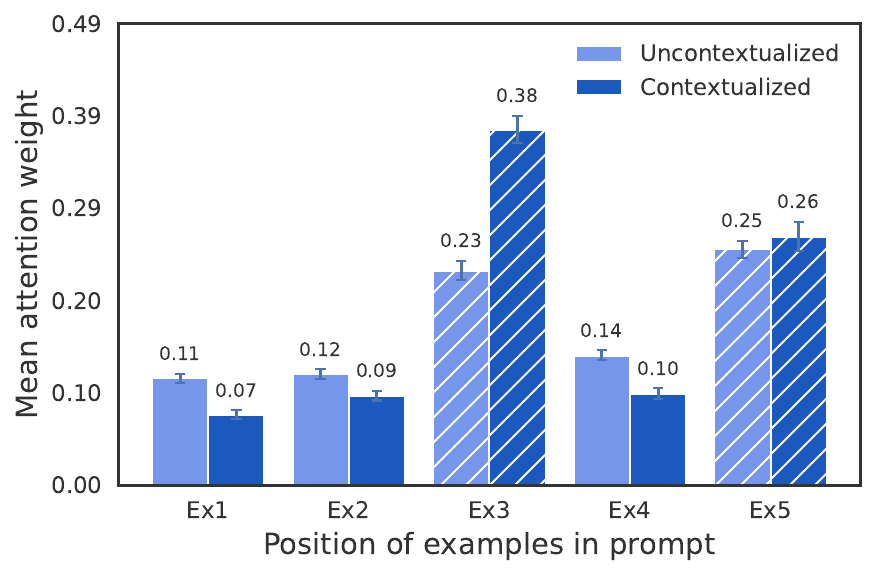} \hgap
    \centerimg{0.25}{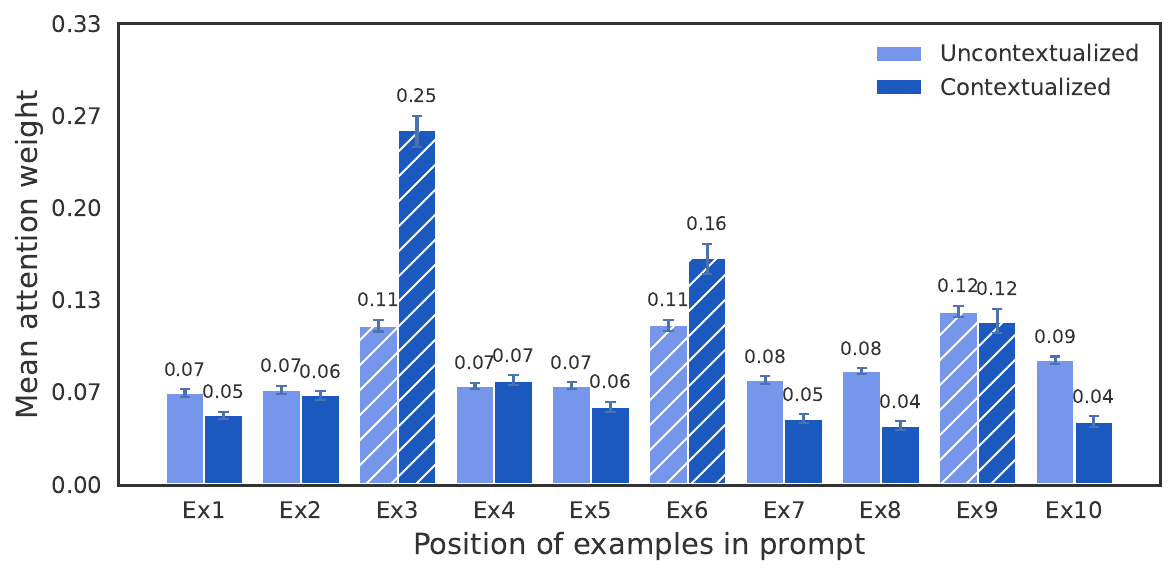} \\[0.3ex]

    \tasklabel{Chinese Ambiguous} \hgap
    \centerimg{0.17}{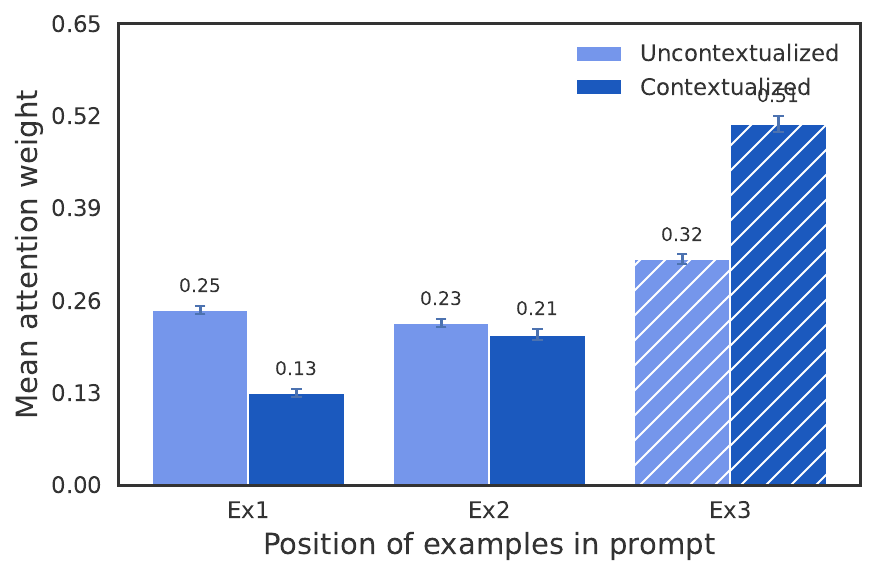} \hgap
    \centerimg{0.17}{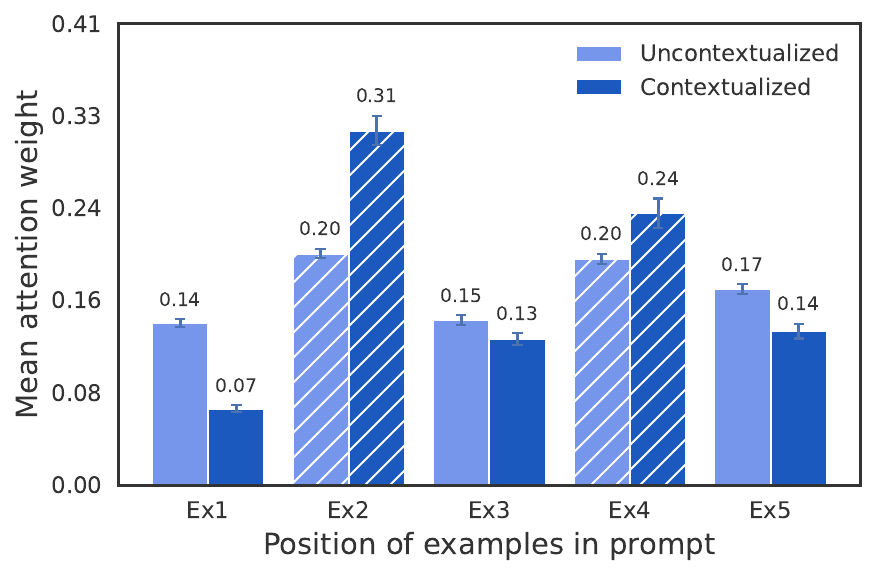} \hgap
    \centerimg{0.25}{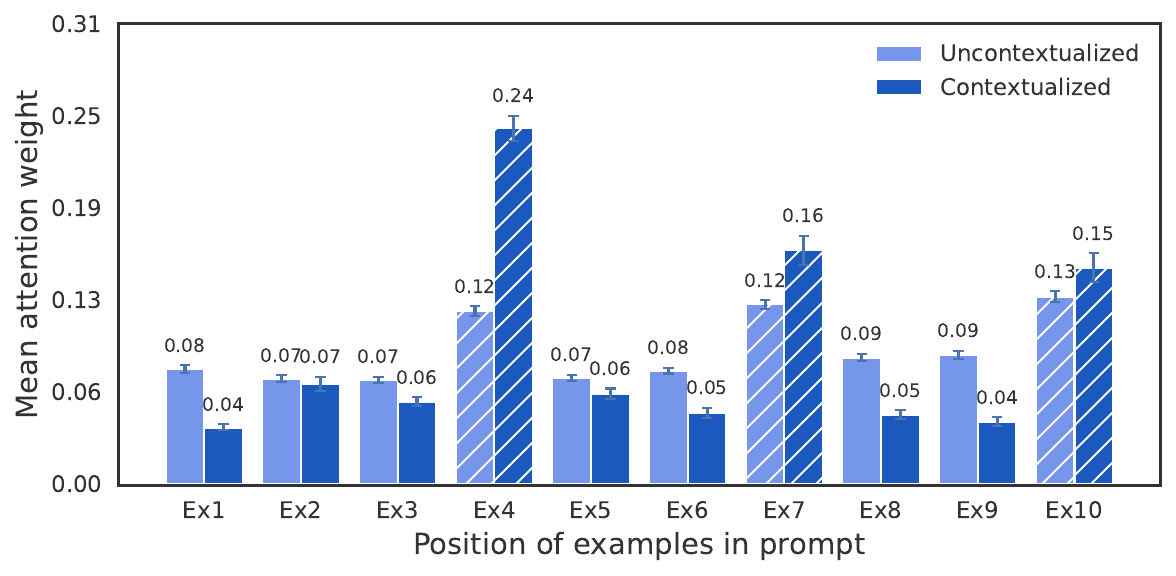} \\[0.3ex]

    \tasklabel{Chinese Ambiguous-2} \hgap
    \centerimg{0.17}{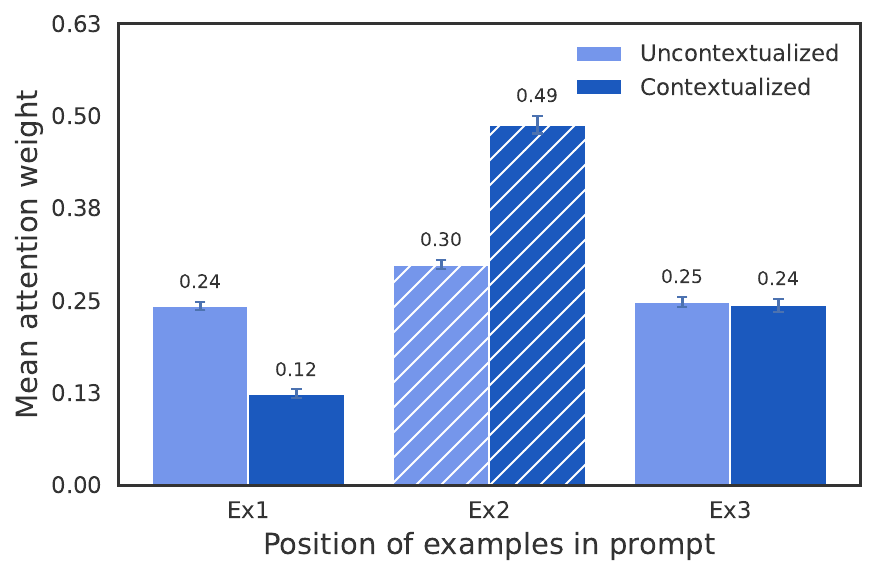} \hgap
    \centerimg{0.17}{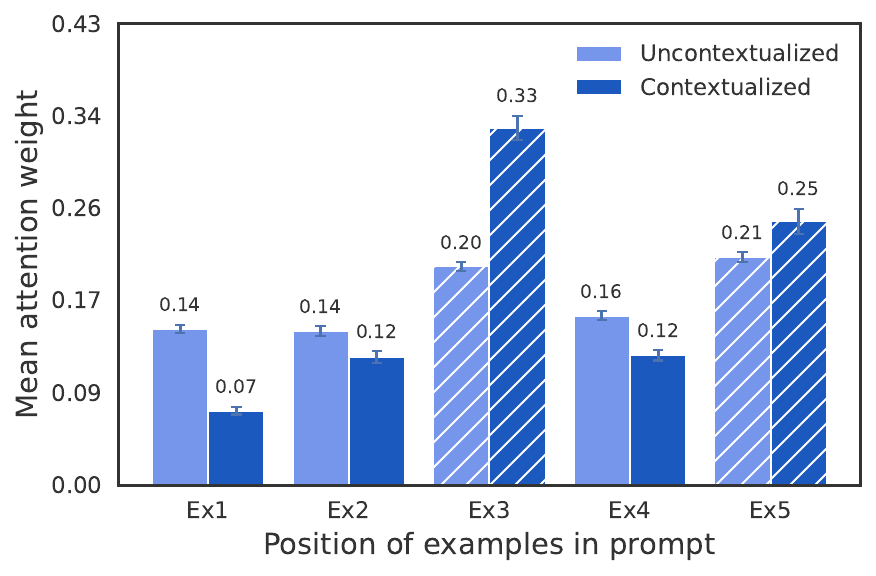} \hgap
    \centerimg{0.25}{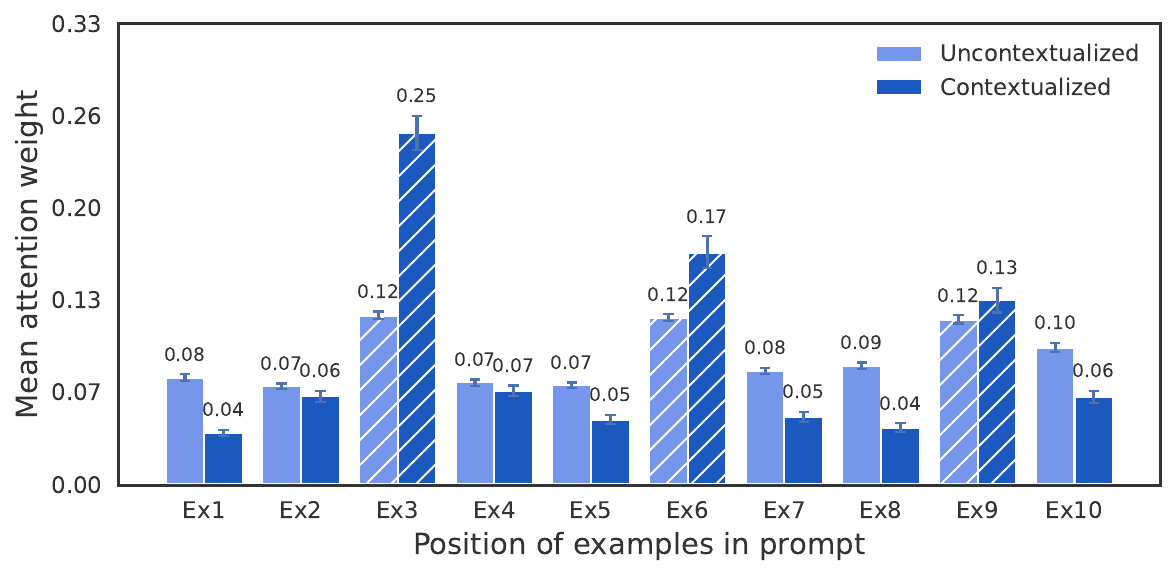} \\[0.3ex]

    \tasklabel{Japanese Ambiguous} \hgap
    \centerimg{0.17}{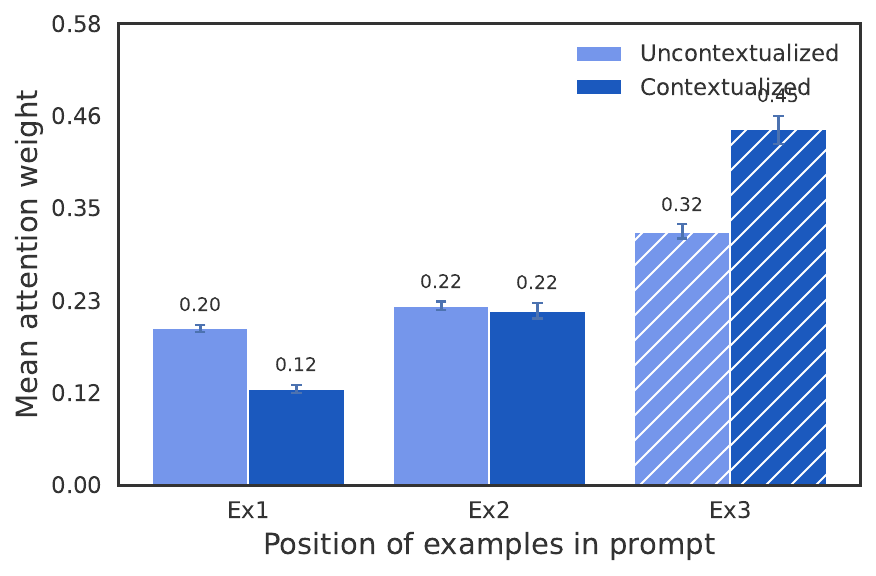} \hgap
    \centerimg{0.17}{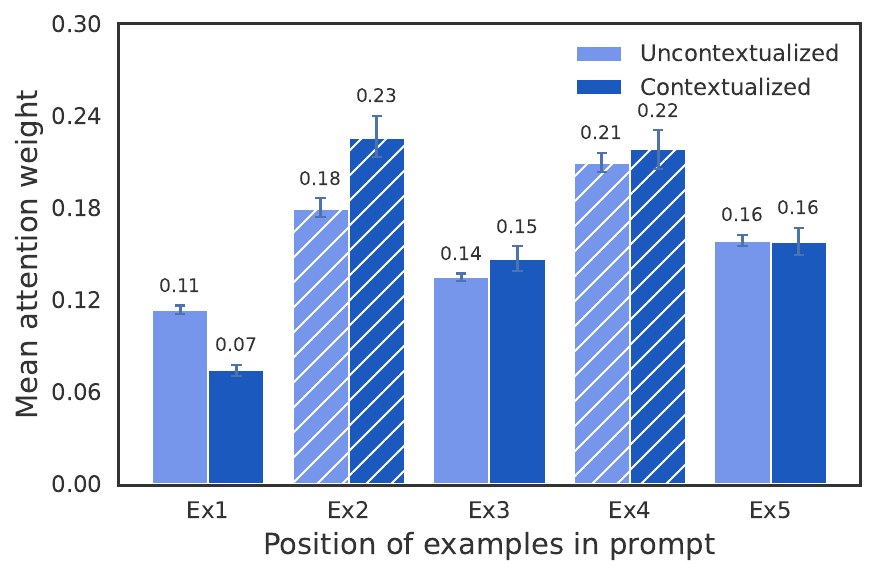} \hgap
    \centerimg{0.25}{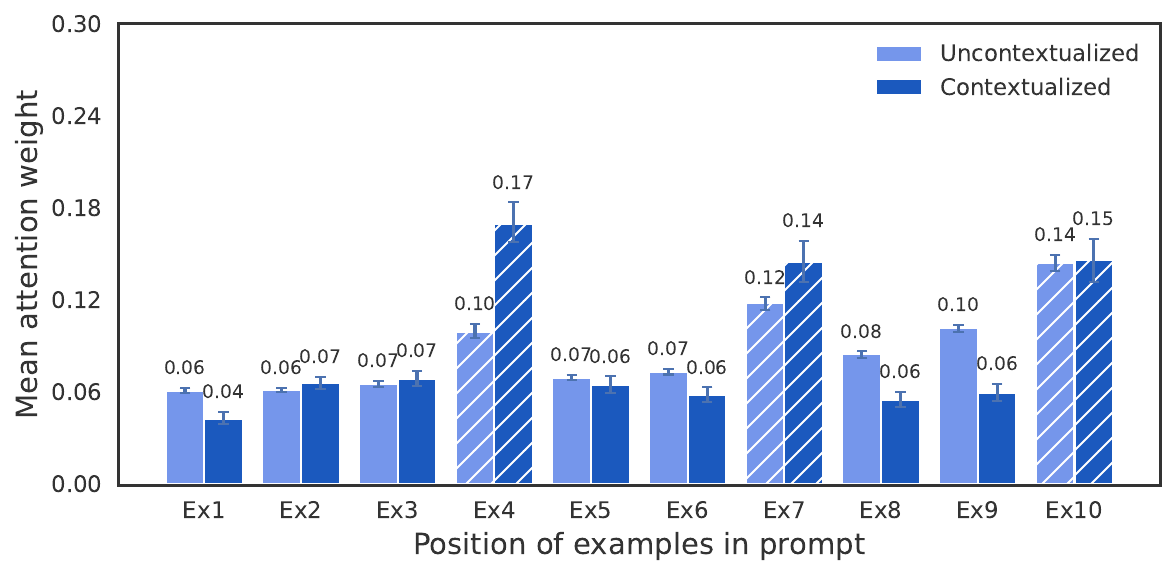} \\[0.3ex]

    \tasklabel{Japanese Ambiguous-2} \hgap
    \centerimg{0.17}{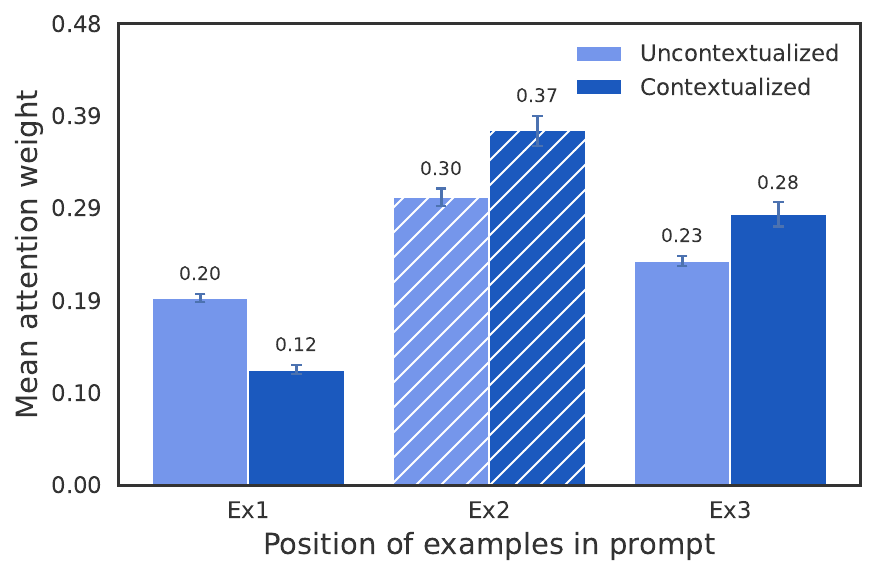} \hgap
    \centerimg{0.17}{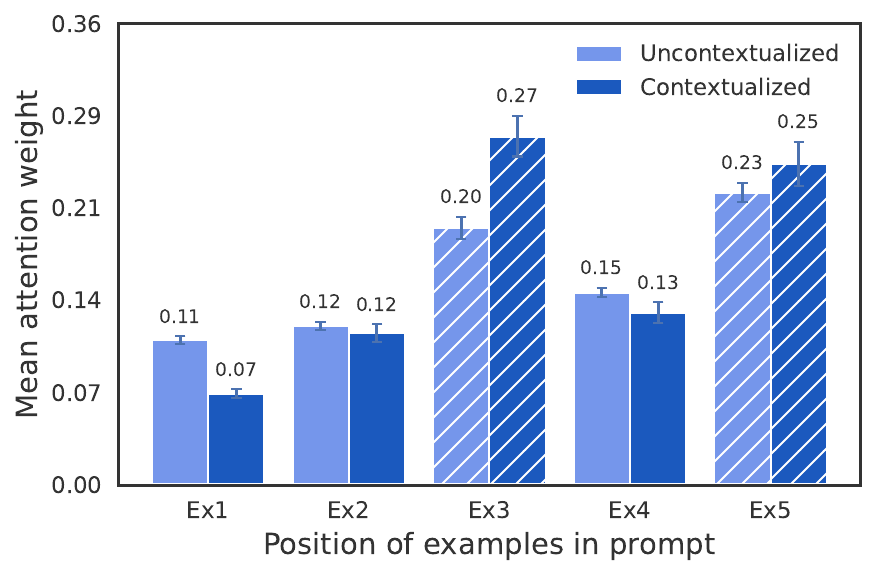} \hgap
    \centerimg{0.25}{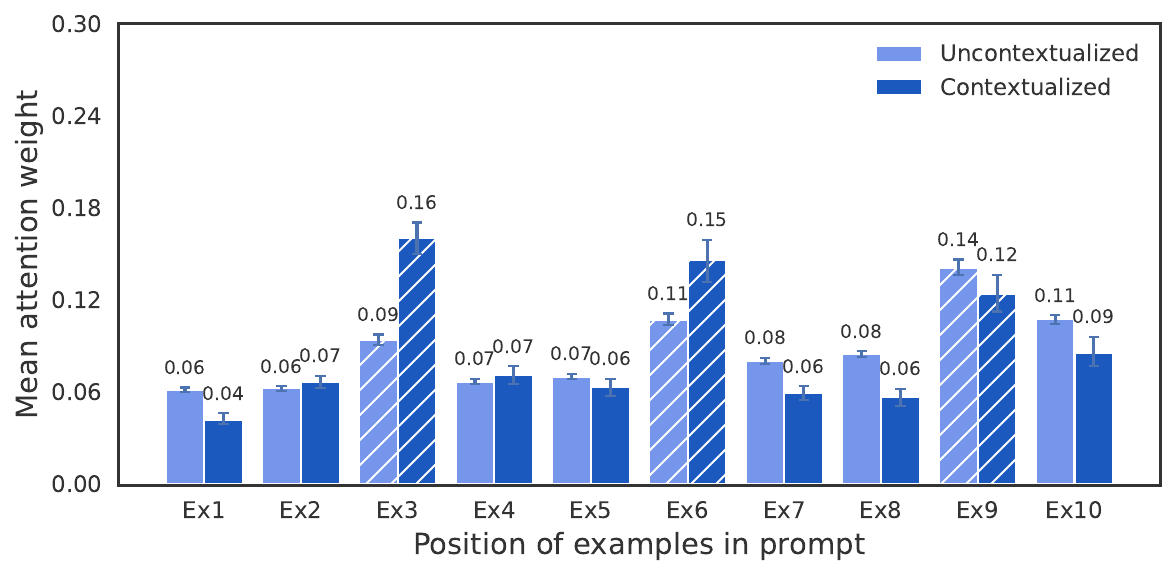} \\[0.3ex]

    \tasklabel{French Ambiguous} \hgap
    \centerimg{0.17}{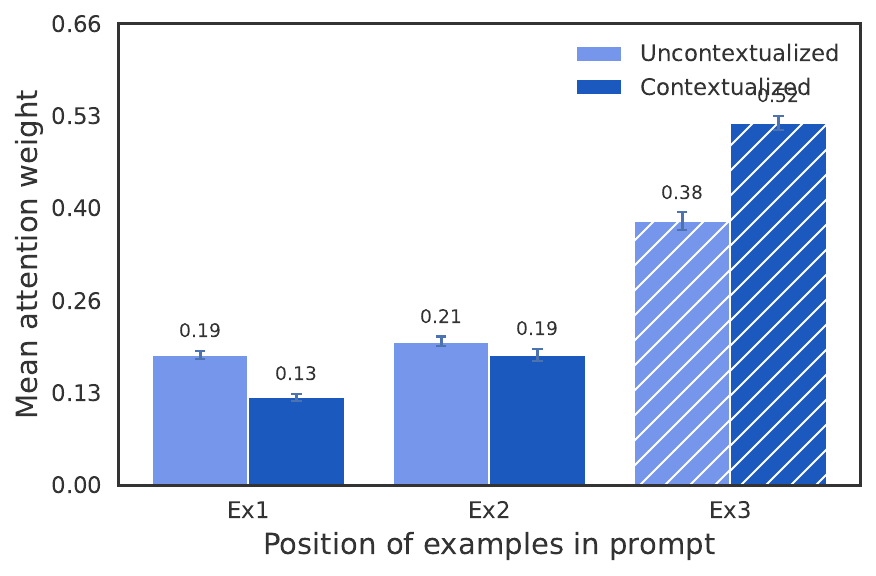} \hgap
    \centerimg{0.17}{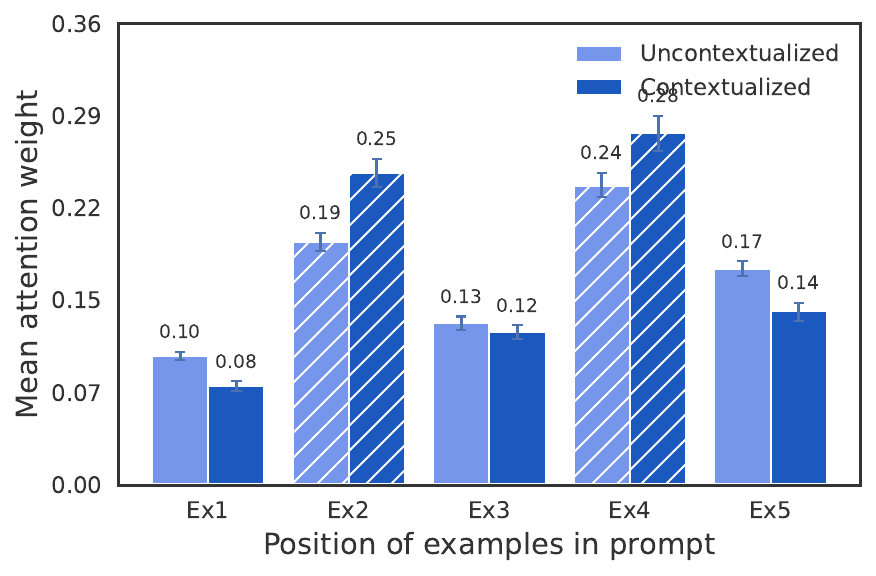} \hgap
    \centerimg{0.25}{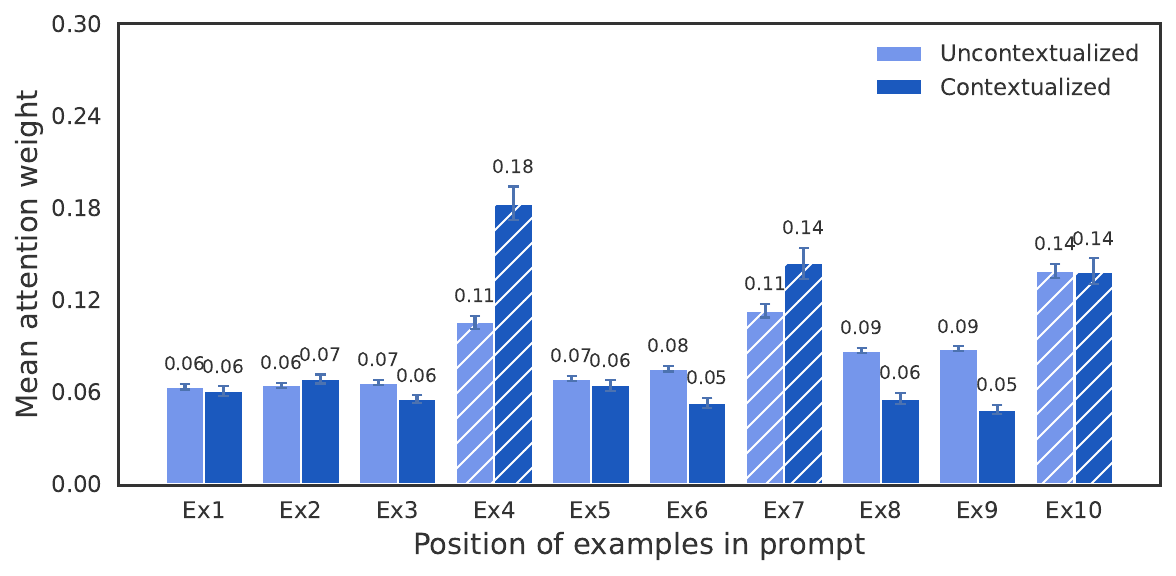} \\[0.3ex]

    \tasklabel{French Ambiguous-2} \hgap
    \centerimg{0.17}{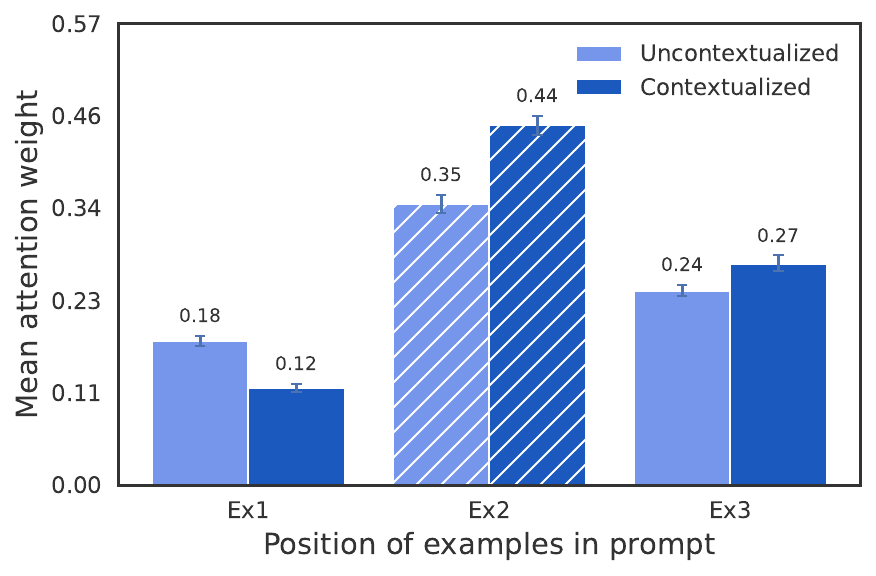} \hgap
    \centerimg{0.17}{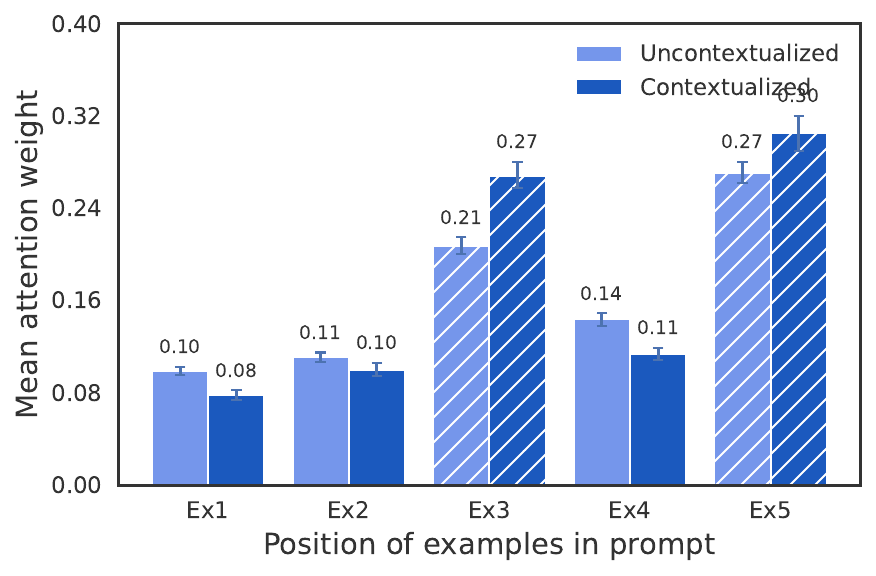} \hgap
    \centerimg{0.25}{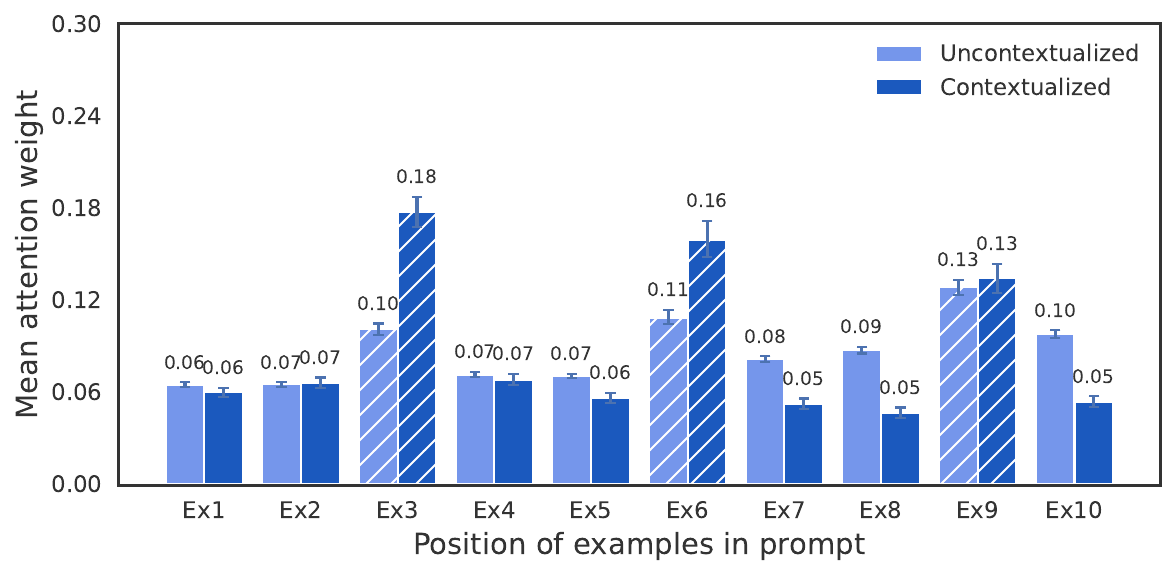} \\[0.3ex]

    \tasklabel{Capitalize Ambiguous} \hgap
    \centerimg{0.17}{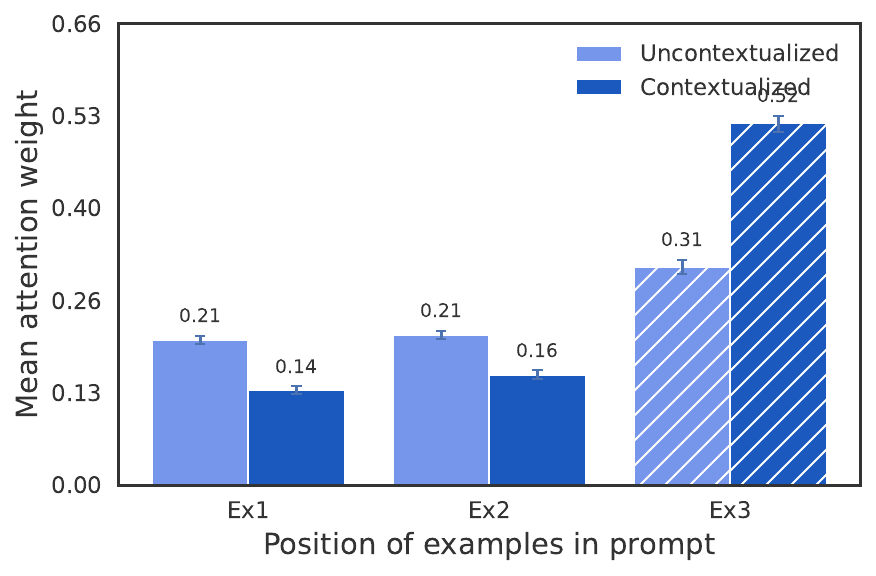} \hgap
    \centerimg{0.17}{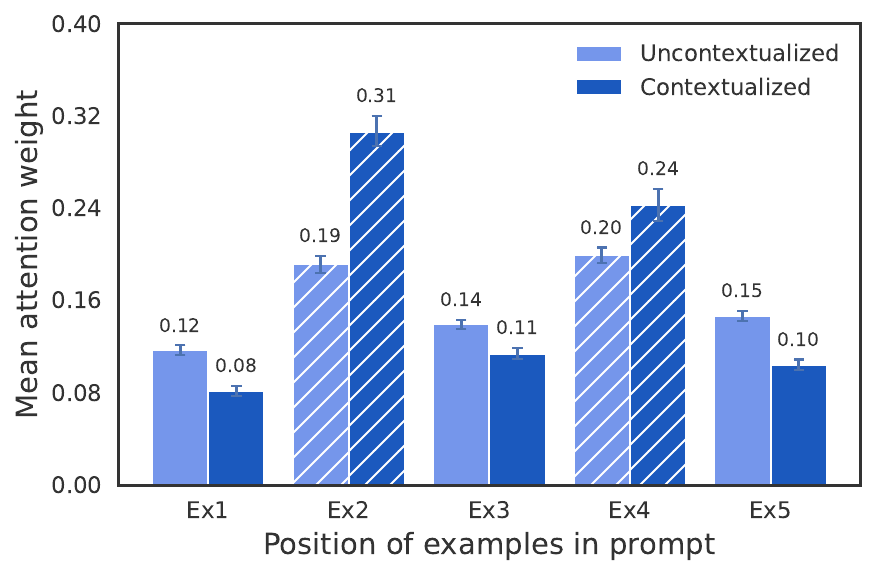} \hgap
    \centerimg{0.25}{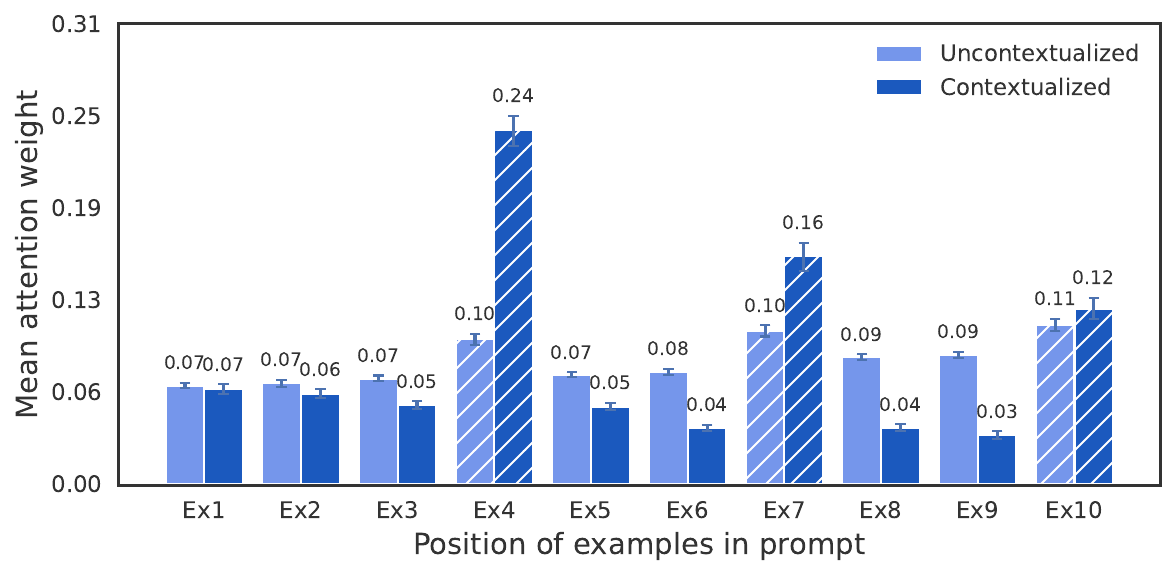} \\[0.3ex]

    \tasklabel{Capitalize Ambiguous-2} \hgap
    \centerimg{0.17}{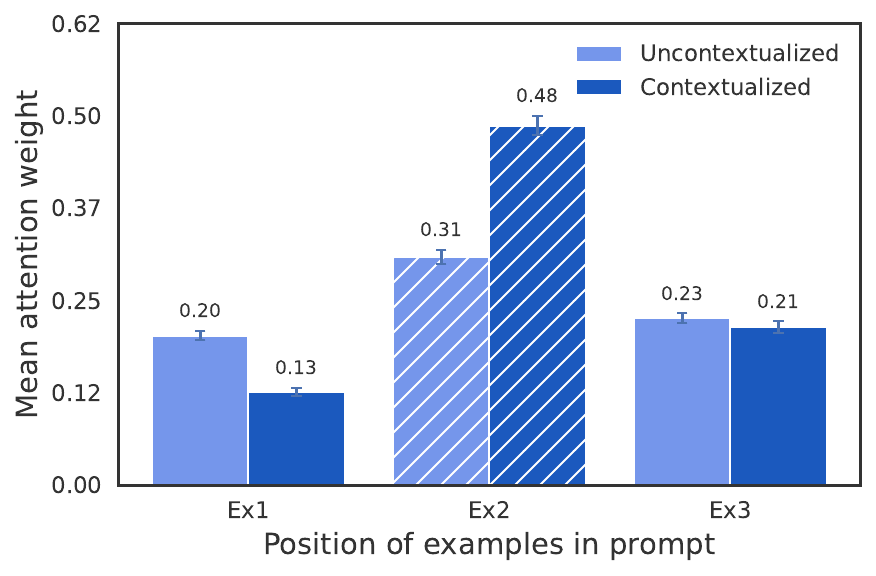} \hgap
    \centerimg{0.17}{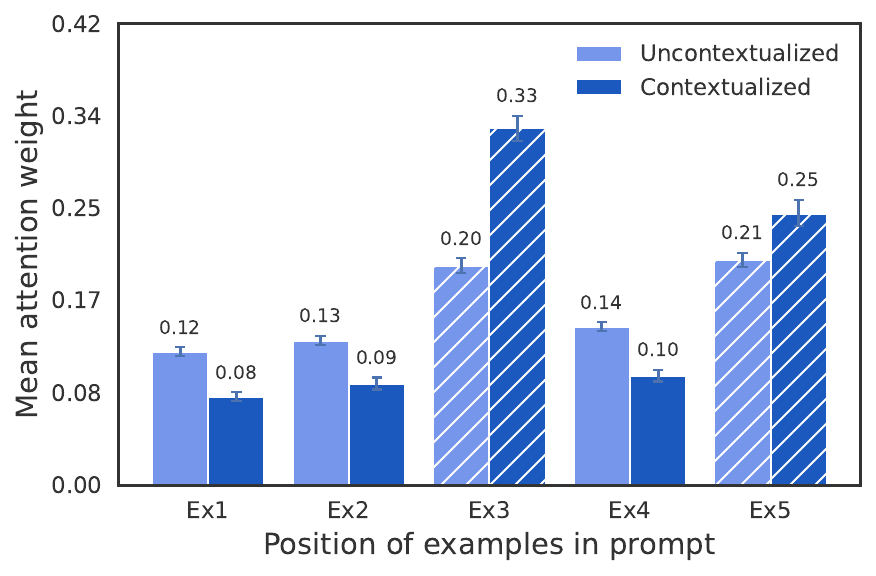} \hgap
    \centerimg{0.25}{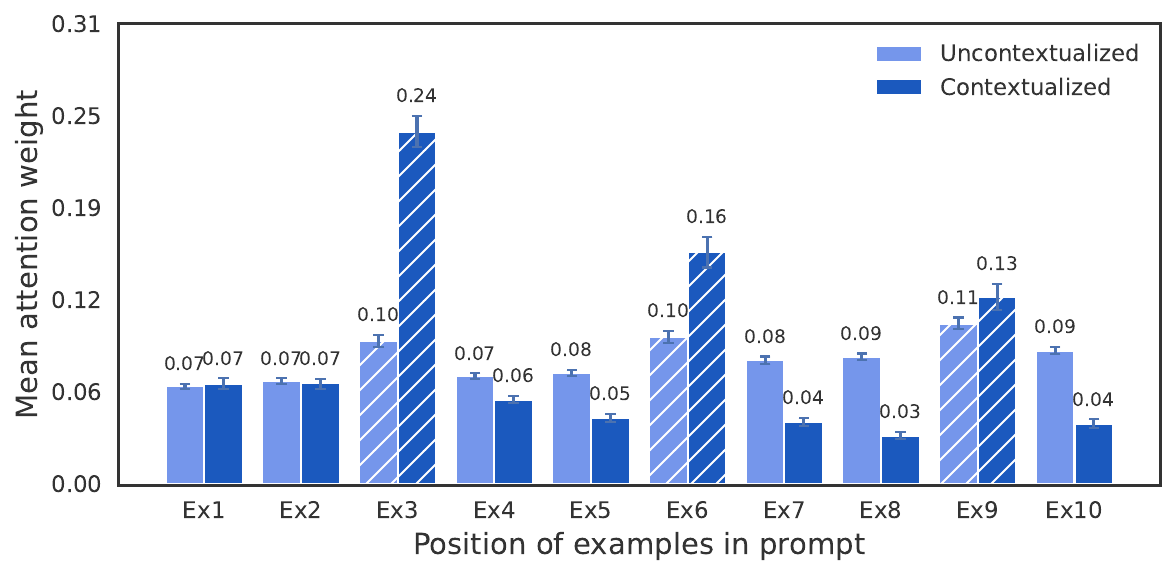}

    \caption{\textbf{Ambiguous Tasks:} \texttt{gemma-2-9b} Influence of contextualization on mean attention weights of individual examples on FV heads. Each row displays the mean attention for a normal task across 3-shot, 5-shot, and 10-shot settings before and after contextualization. This is an extension of Main Paper Fig.~\ref{fig:qk_alignment}.}
    \label{fig:Gemma-2-9b_ambiguous_qk_alignment}
\end{figure}

\begin{figure}[h!]
    \centering
    \footnotesize

    \newcommand{\tasklabel}[1]{%
        \begin{tikzpicture}[baseline=(node.center)]
            \node[anchor=west, text width=3.7cm, font=\small\scshape\bfseries, inner sep=0pt] (node) {#1};
        \end{tikzpicture}%
    }
    \newcommand{\centerimg}[2]{%
        $\vcenter{\hbox{\includegraphics[width=#1\linewidth]{#2}}}$%
    }
    \newcommand{\hgap}{\hfill}

    \tasklabel{} \hgap
    \makebox[0.17\linewidth][c]{\textbf{3-Shot}} \hgap
    \makebox[0.17\linewidth][c]{\textbf{5-Shot}} \hgap
    \makebox[0.25\linewidth][c]{\textbf{10-Shot}} \vspace{1.5mm} \\

    \tasklabel{Present-Past Ambiguous} \hgap
    \centerimg{0.17}{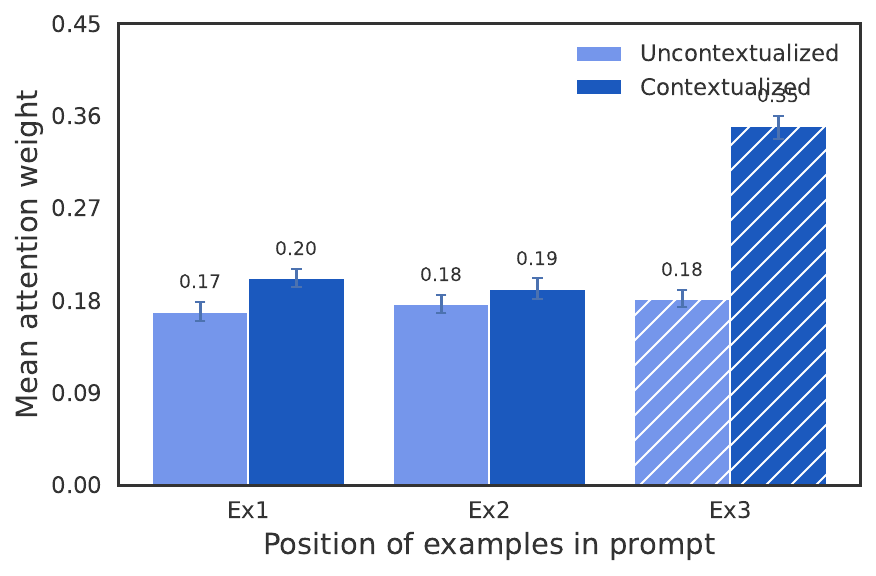} \hgap
    \centerimg{0.17}{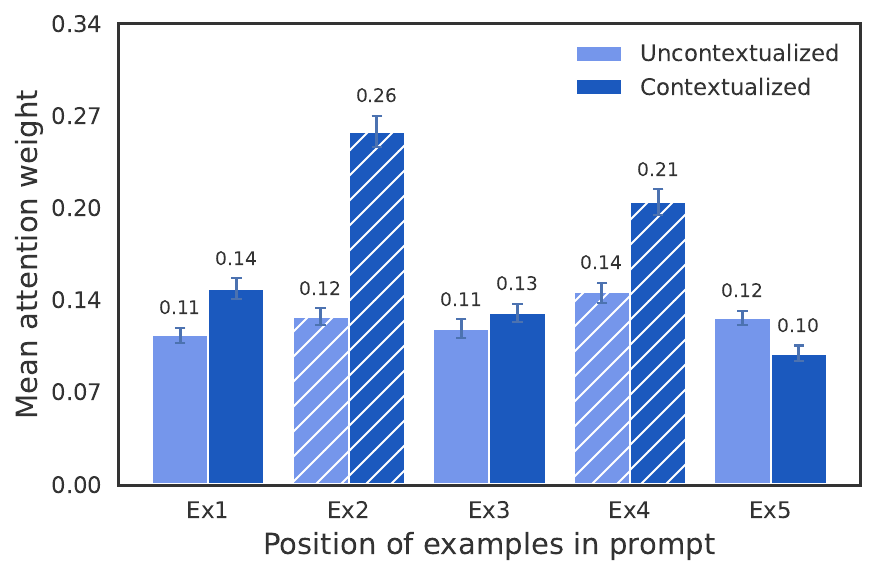} \hgap
    \centerimg{0.25}{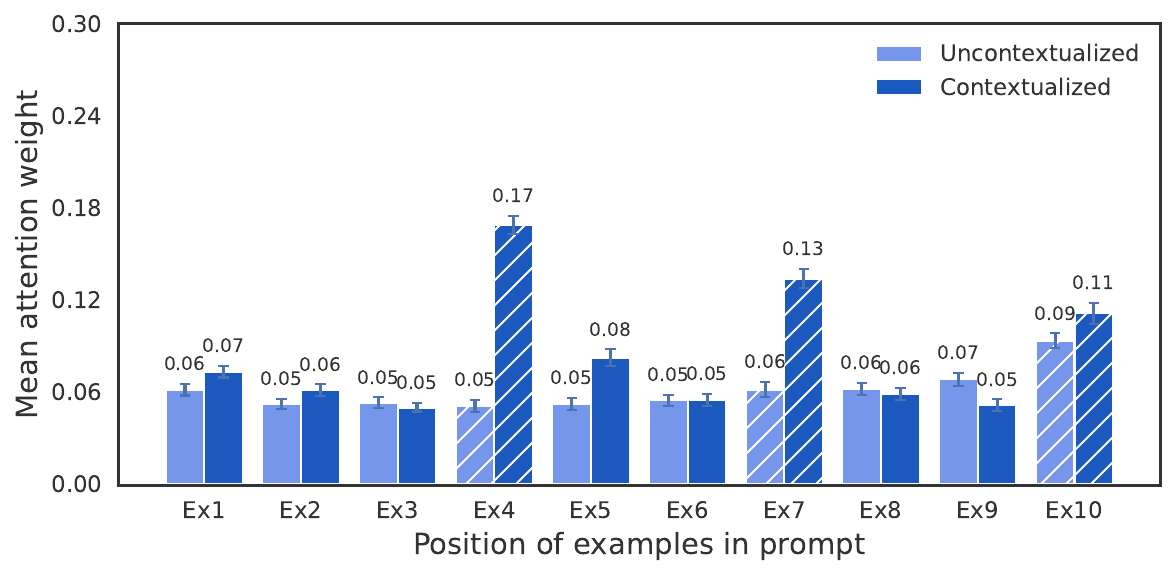} \\[0.3ex]

    \tasklabel{Present-Past Ambiguous-2} \hgap
    \centerimg{0.17}{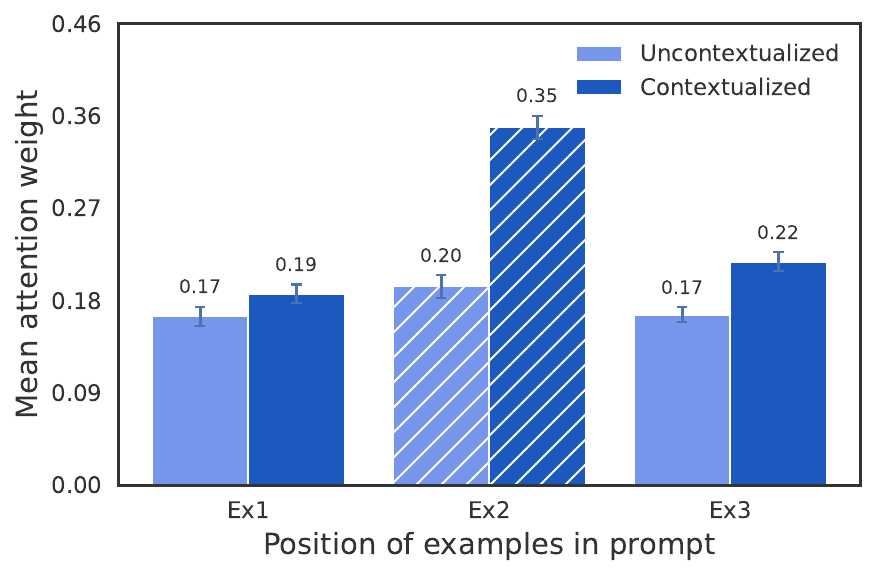} \hgap
    \centerimg{0.17}{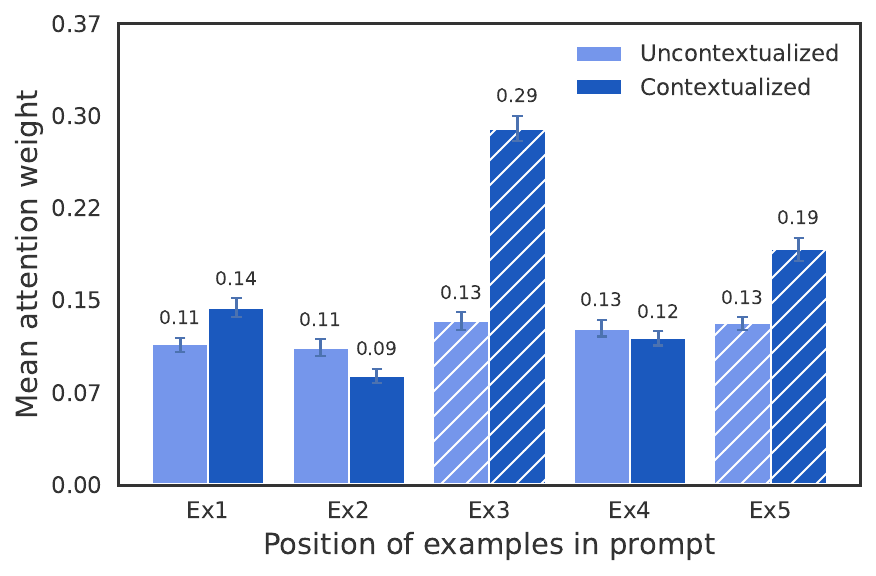} \hgap
    \centerimg{0.25}{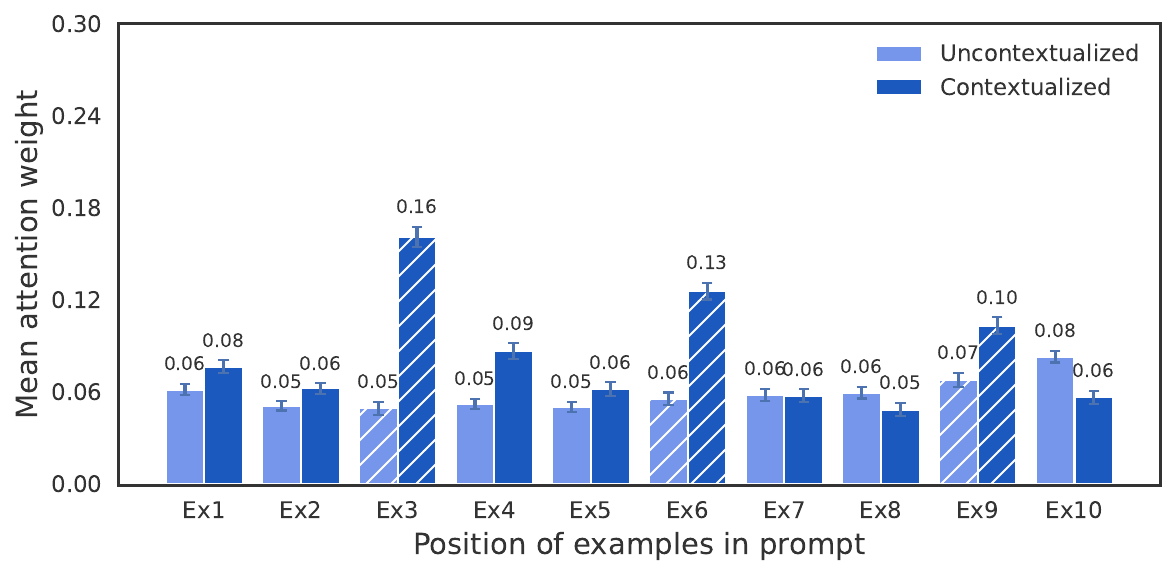} \\[0.3ex]

    \tasklabel{Chinese Ambiguous} \hgap
    \centerimg{0.17}{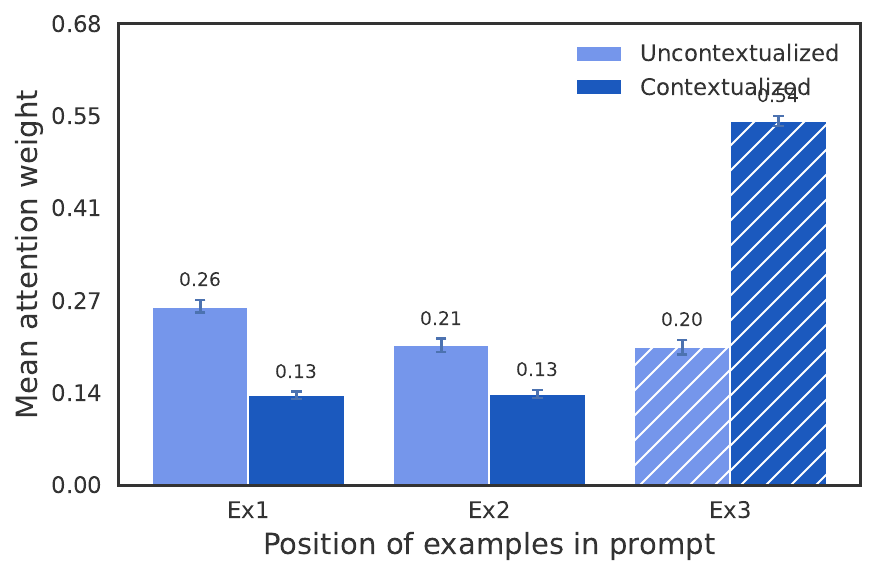} \hgap
    \centerimg{0.17}{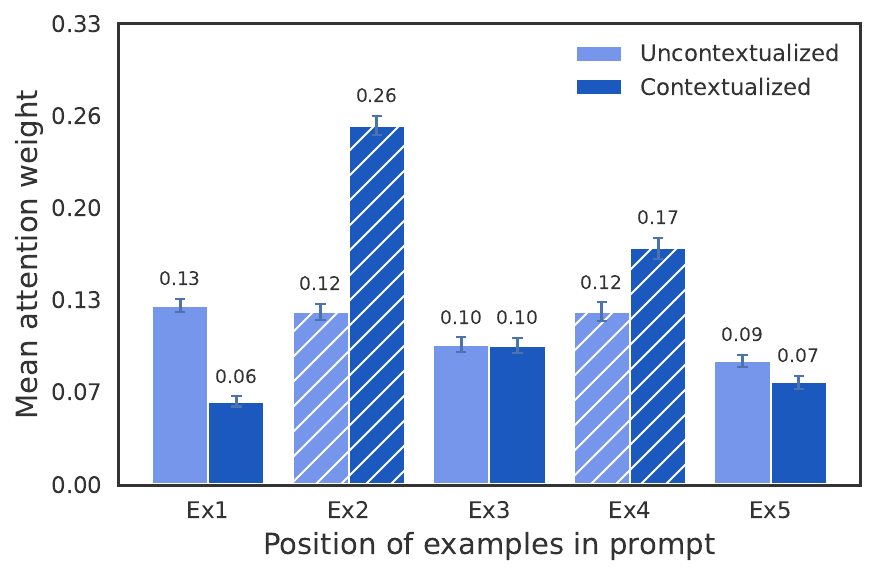} \hgap
    \centerimg{0.25}{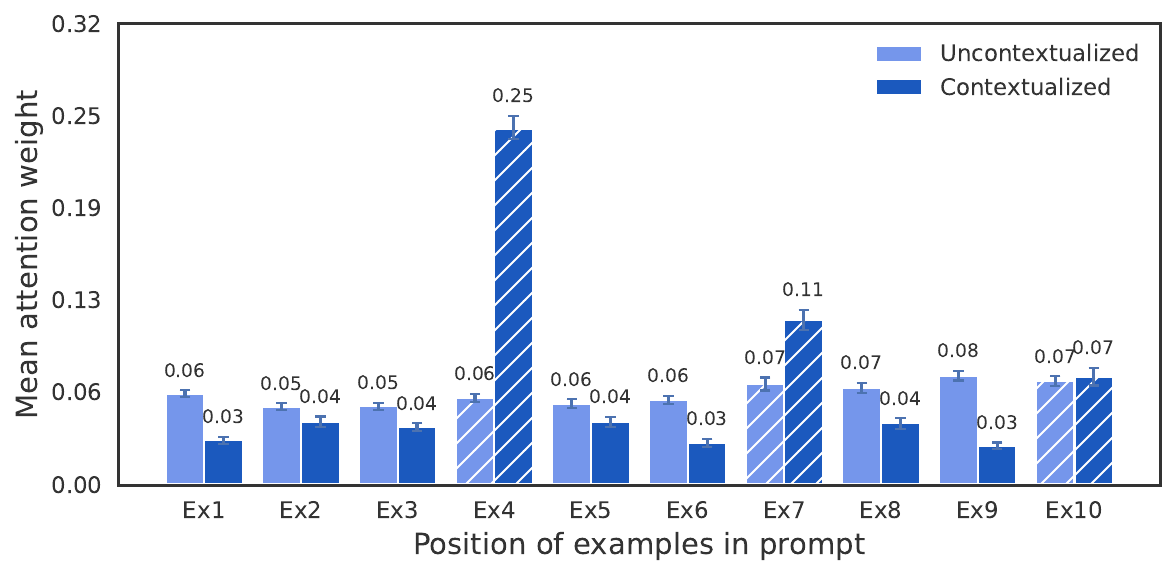} \\[0.3ex]

    \tasklabel{Chinese Ambiguous-2} \hgap
    \centerimg{0.17}{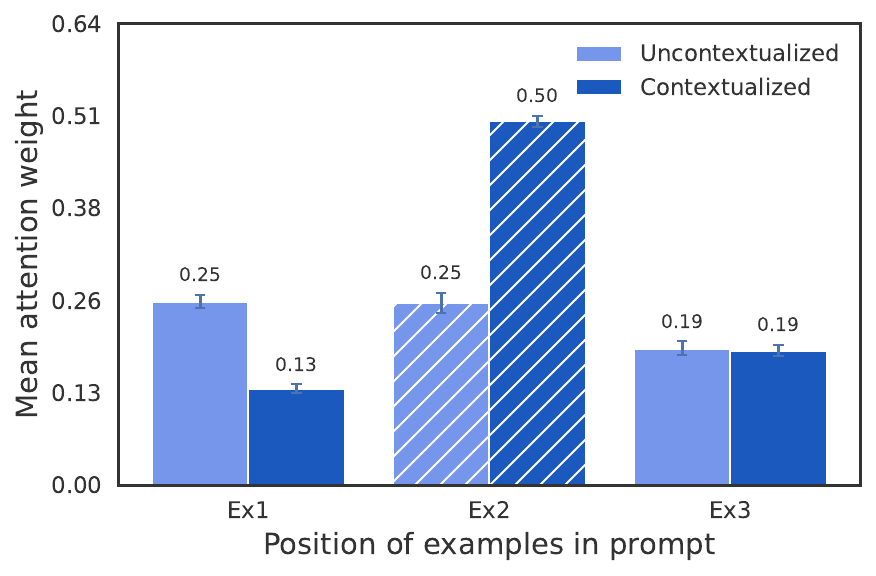} \hgap
    \centerimg{0.17}{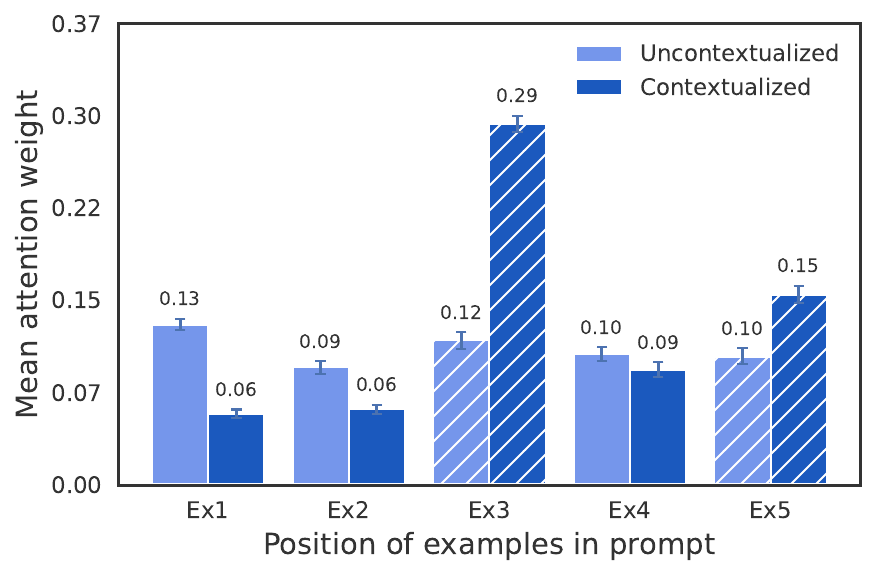} \hgap
    \centerimg{0.25}{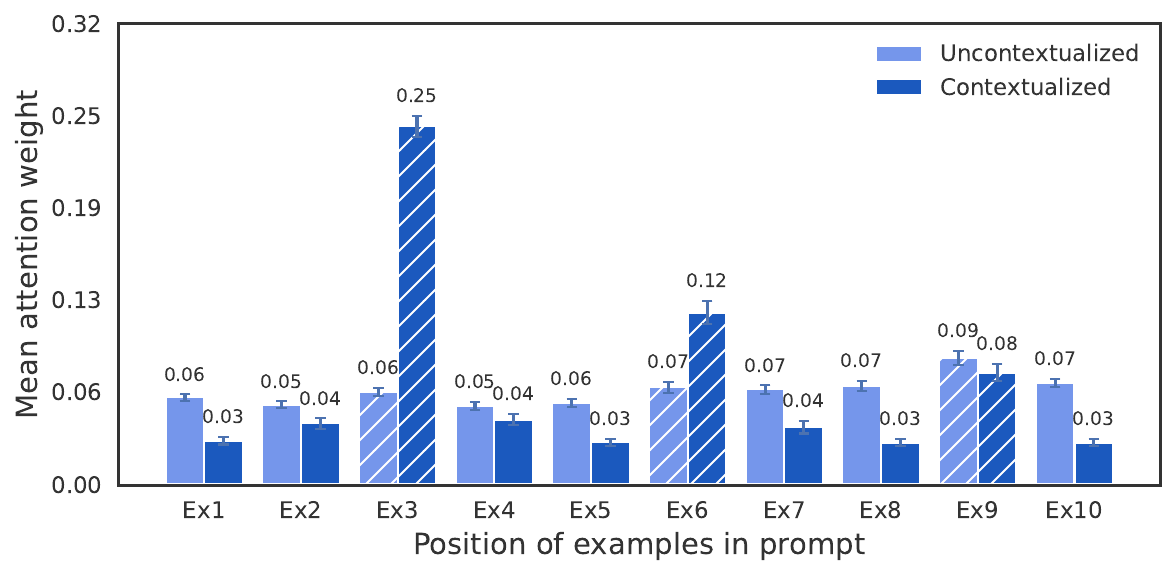} \\[0.3ex]

    \tasklabel{Japanese Ambiguous} \hgap
    \centerimg{0.17}{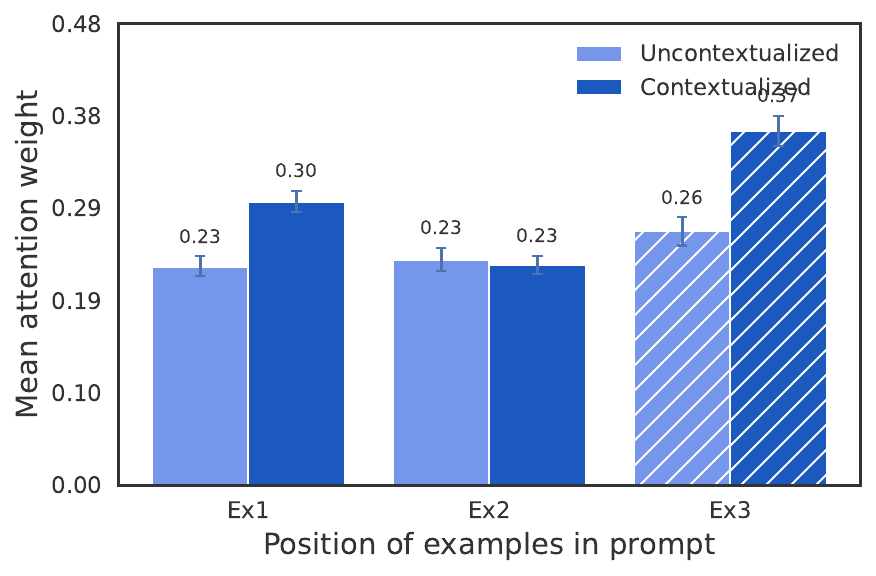} \hgap
    \centerimg{0.17}{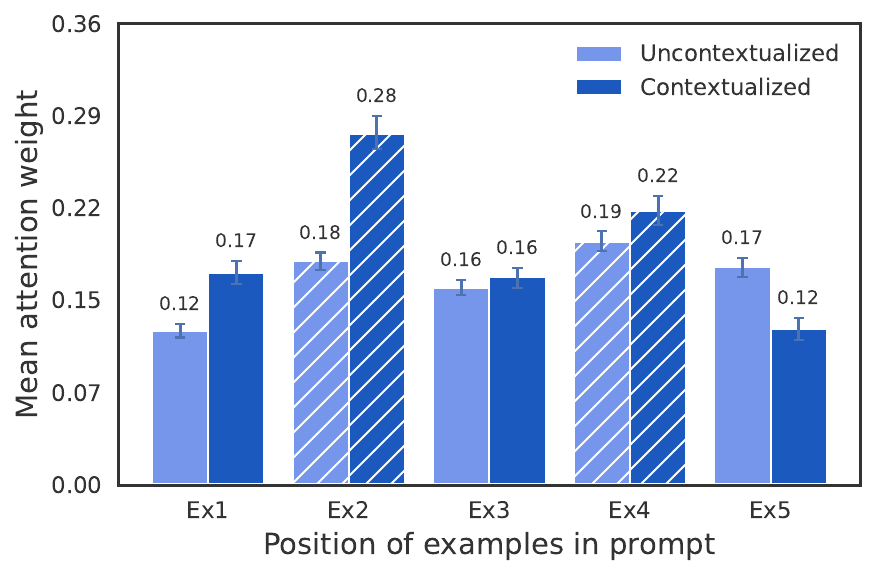} \hgap
    \centerimg{0.25}{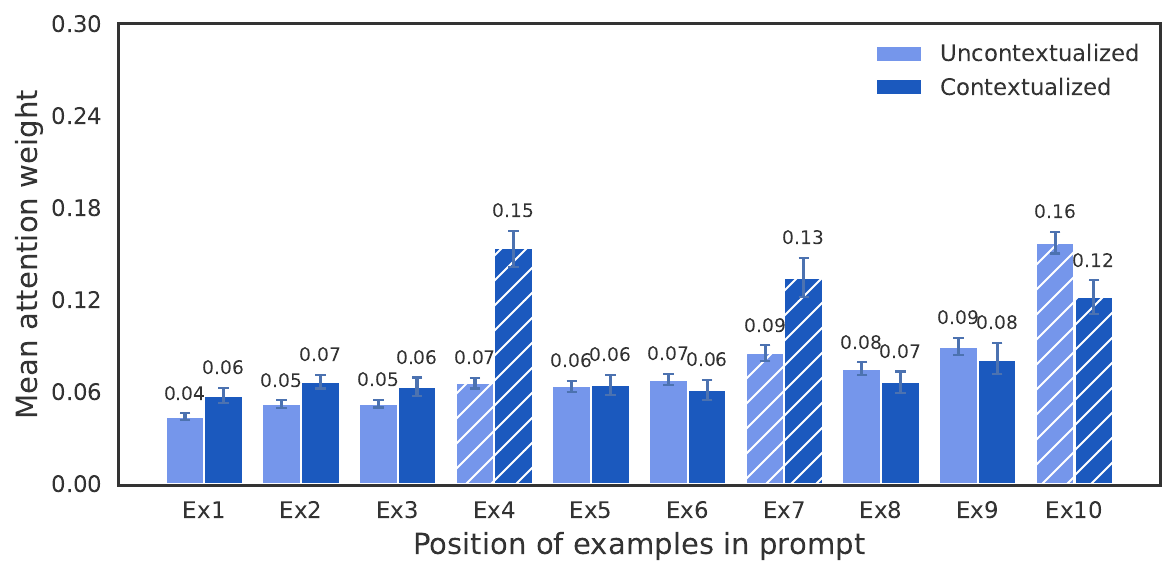} \\[0.3ex]

    \tasklabel{Japanese Ambiguous-2} \hgap
    \centerimg{0.17}{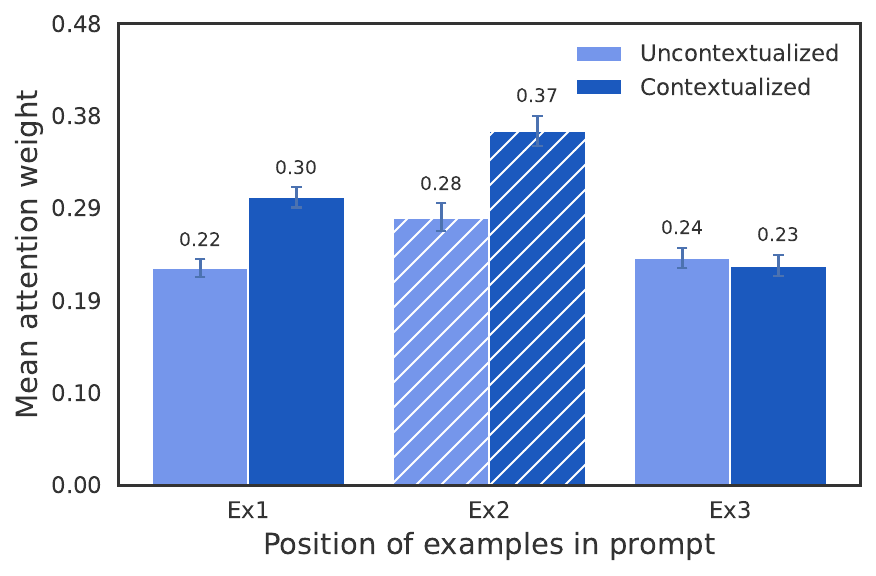} \hgap
    \centerimg{0.17}{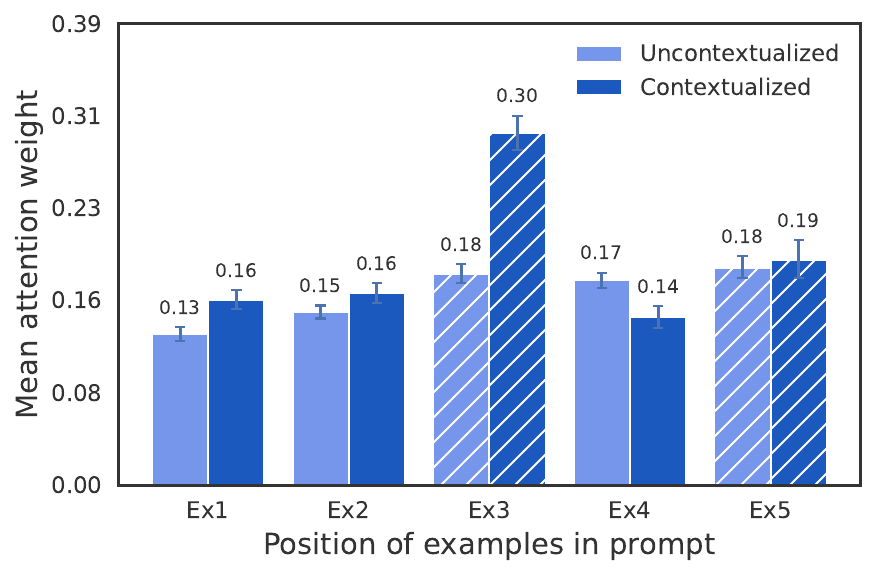} \hgap
    \centerimg{0.25}{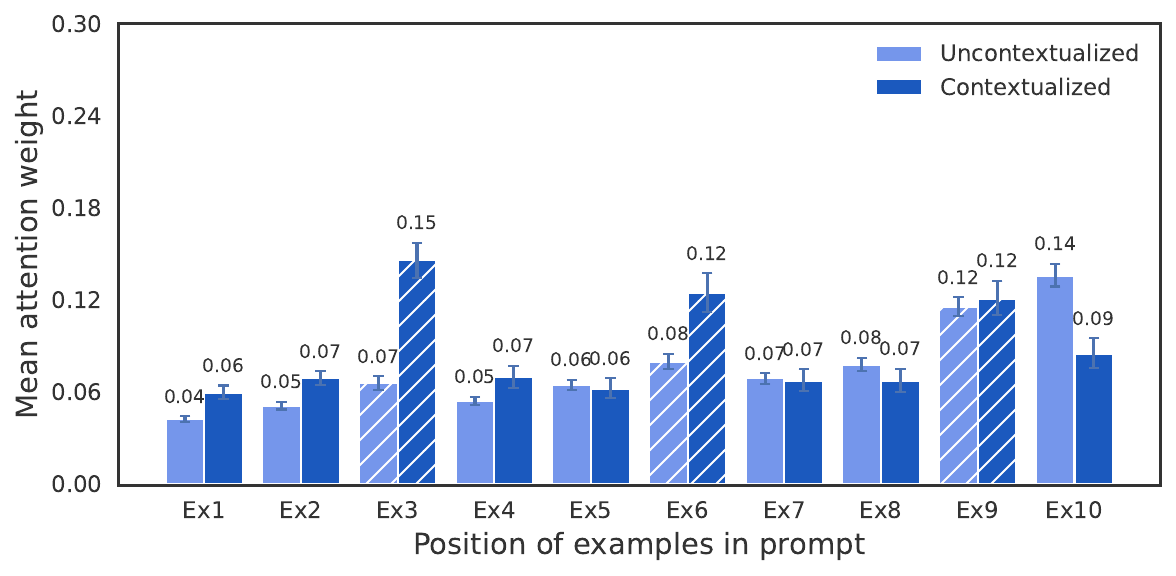} \\[0.3ex]

    \tasklabel{French Ambiguous} \hgap
    \centerimg{0.17}{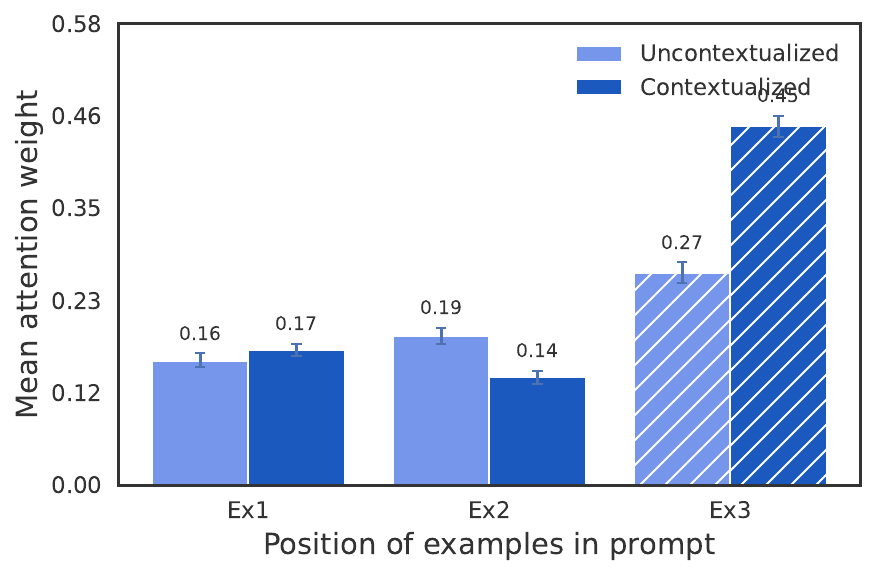} \hgap
    \centerimg{0.17}{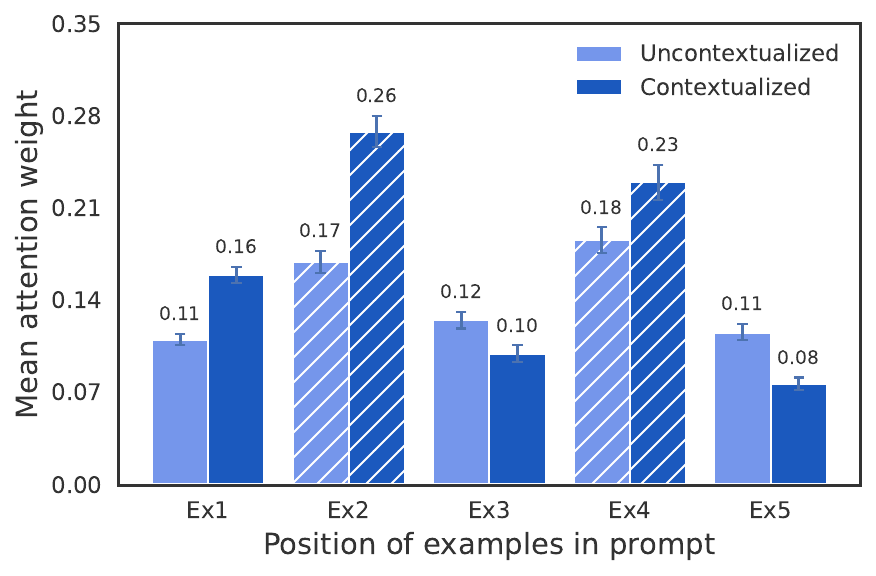} \hgap
    \centerimg{0.25}{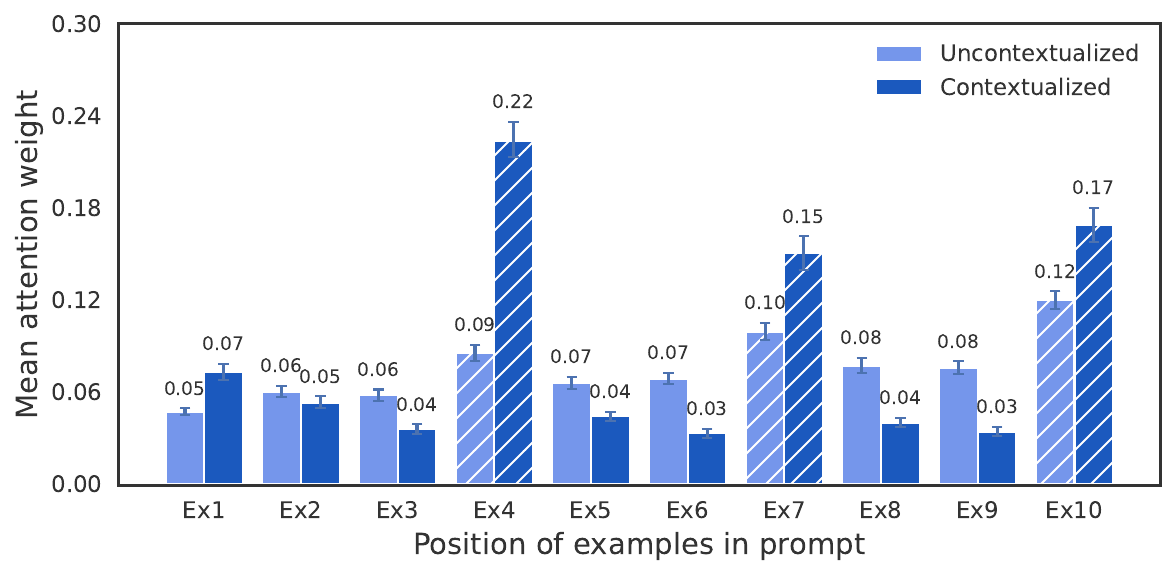} \\[0.3ex]

    \tasklabel{French Ambiguous-2} \hgap
    \centerimg{0.17}{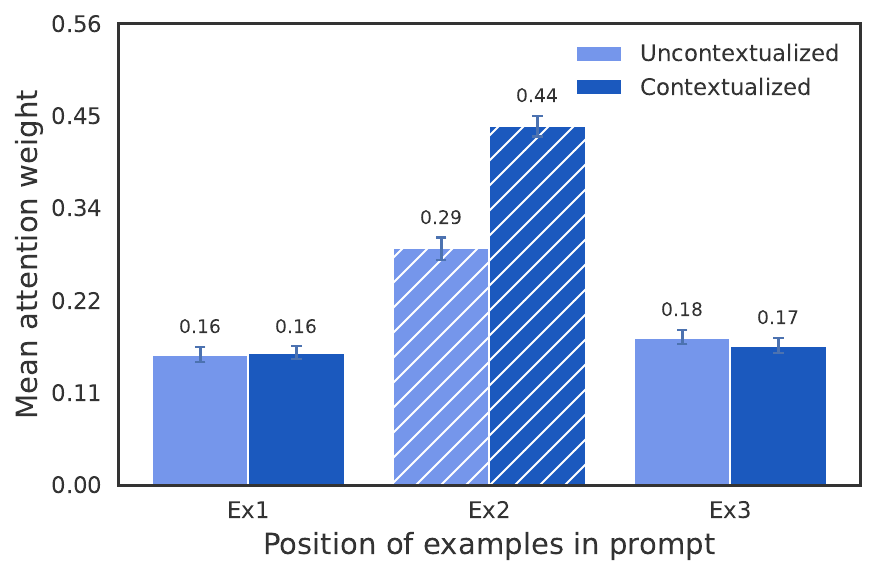} \hgap
    \centerimg{0.17}{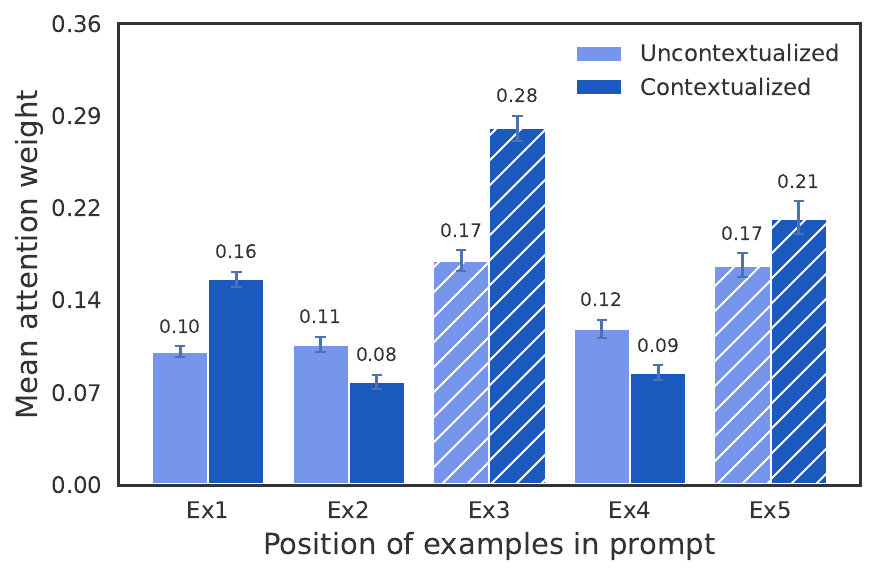} \hgap
    \centerimg{0.25}{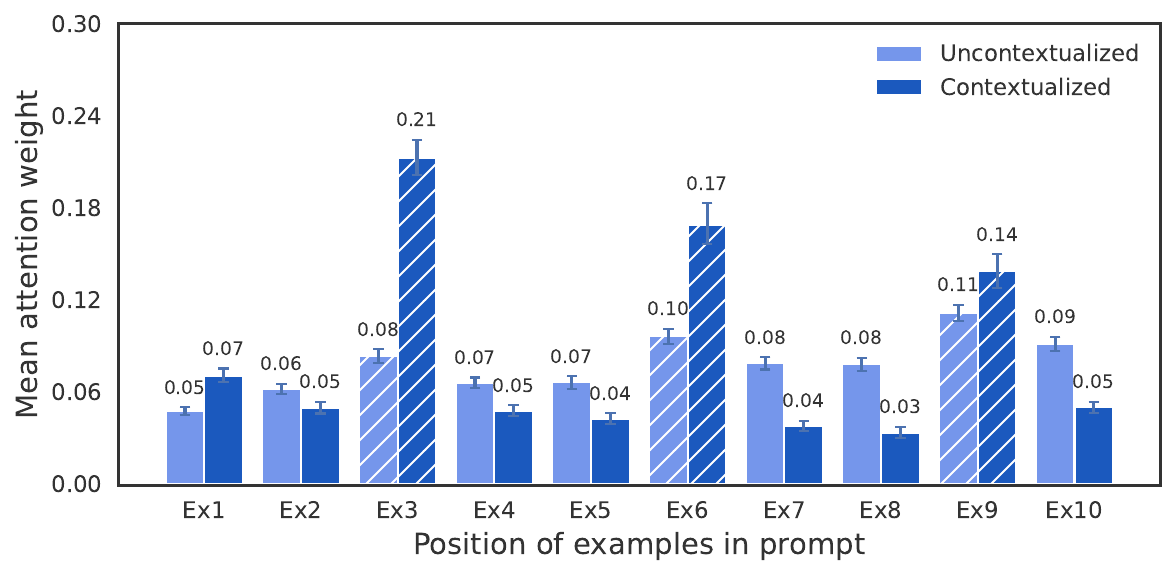} \\[0.3ex]

    \tasklabel{Capitalize Ambiguous} \hgap
    \centerimg{0.17}{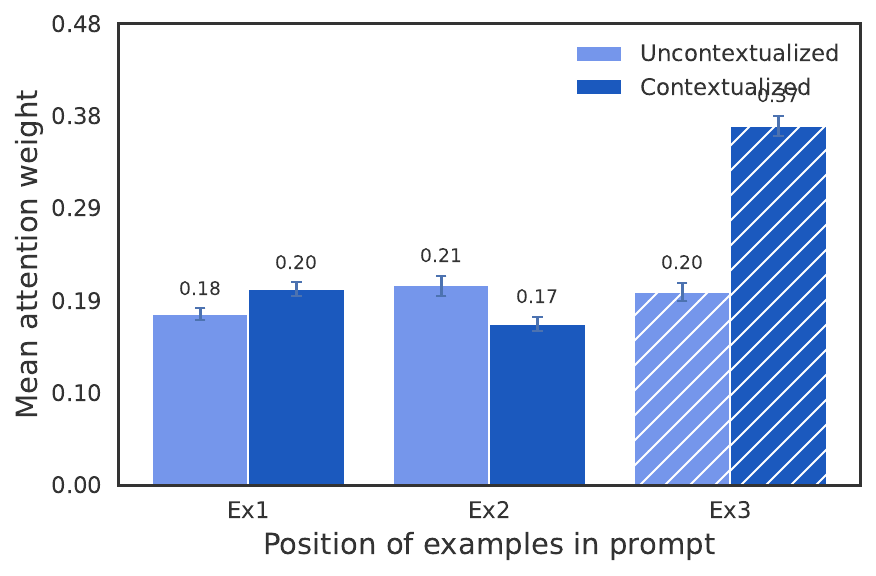} \hgap
    \centerimg{0.17}{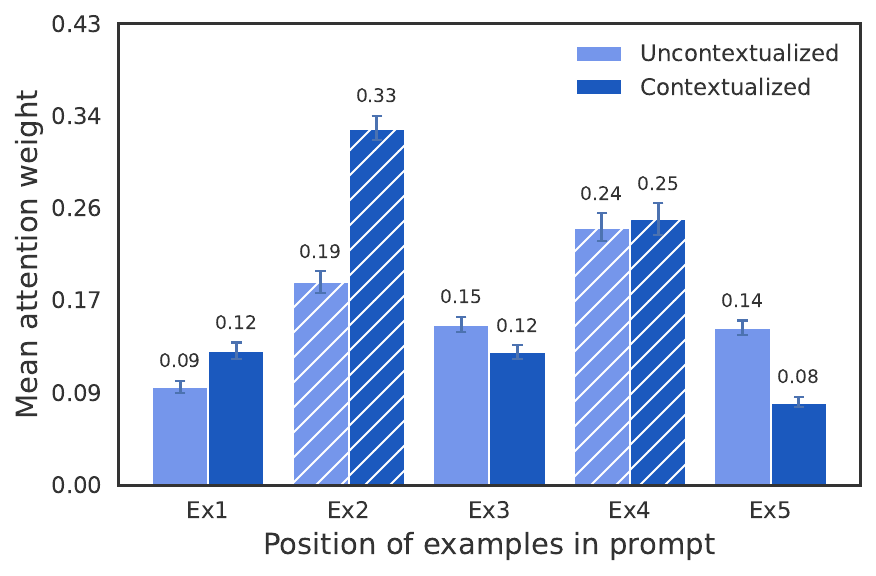} \hgap
    \centerimg{0.25}{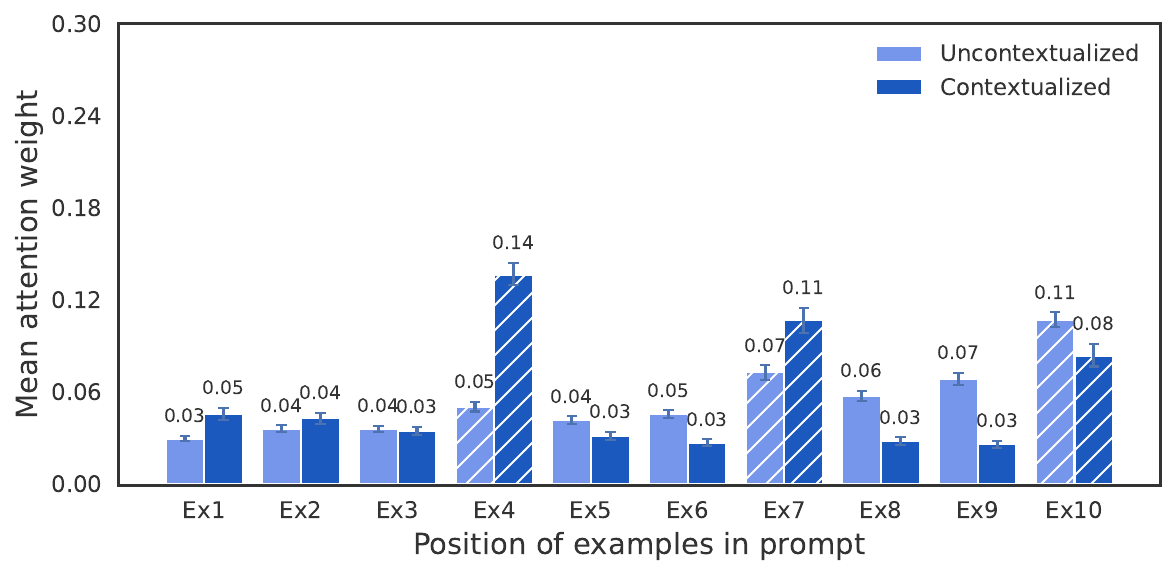} \\[0.3ex]

    \tasklabel{Capitalize Ambiguous-2} \hgap
    \centerimg{0.17}{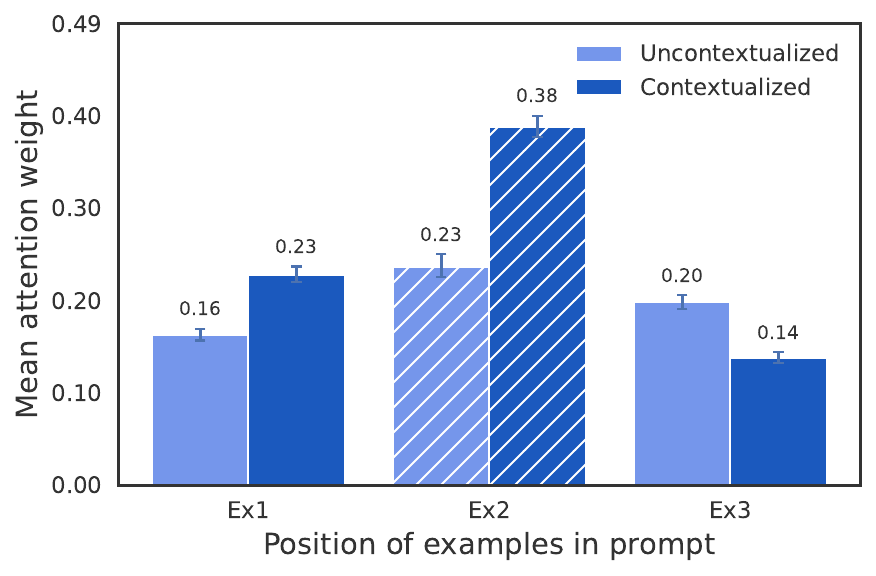} \hgap
    \centerimg{0.17}{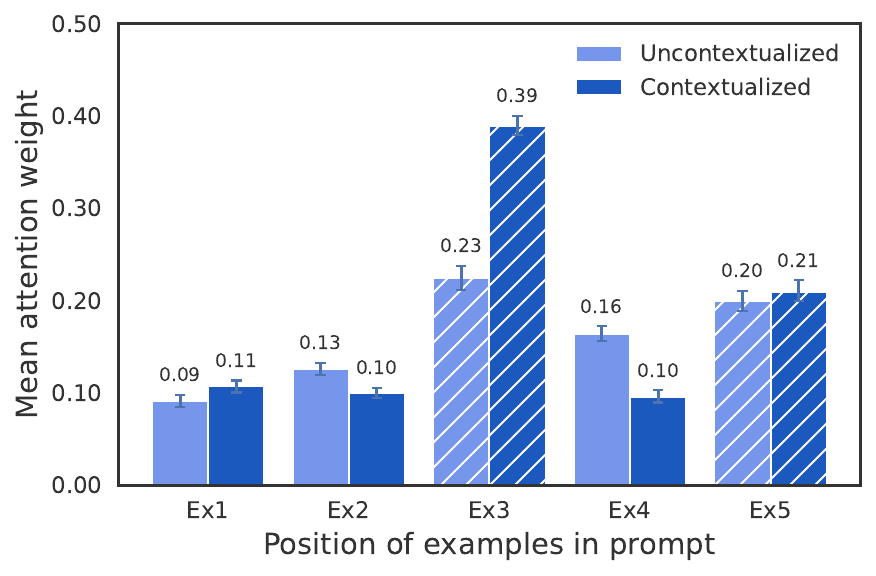} \hgap
    \centerimg{0.25}{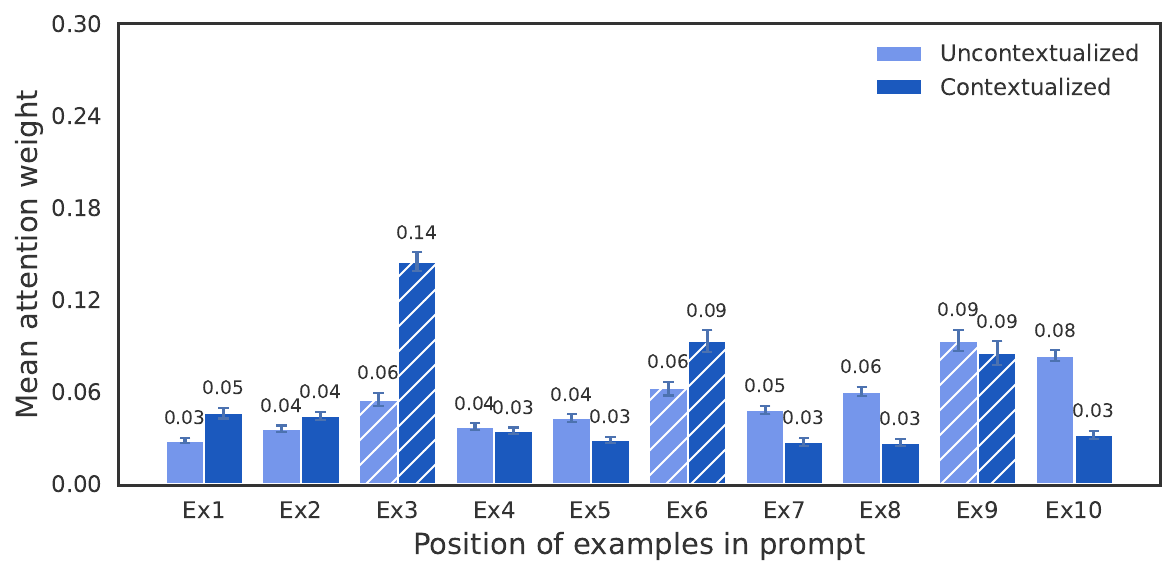}

    \caption{\textbf{Ambiguous Tasks:} \texttt{gemma-2-27b} Influence of contextualization on mean attention weights of individual examples on FV heads. Each row displays the mean attention for a normal task across 3-shot, 5-shot, and 10-shot settings before and after contextualization. This is an extension of Main Paper Fig.~\ref{fig:qk_alignment}.}
    \label{fig:Gemma-2-27b_ambiguous_qk_alignment}
\end{figure}

\begin{figure}[h!]
    \centering
    \footnotesize

    \newcommand{\tasklabel}[1]{%
        \begin{tikzpicture}[baseline=(node.center)]
            \node[anchor=west, text width=3.7cm, font=\small\scshape\bfseries, inner sep=0pt] (node) {#1};
        \end{tikzpicture}%
    }
    \newcommand{\centerimg}[2]{%
        $\vcenter{\hbox{\includegraphics[width=#1\linewidth]{#2}}}$%
    }
    \newcommand{\hgap}{\hfill}

    \tasklabel{} \hgap
    \makebox[0.17\linewidth][c]{\textbf{3-Shot}} \hgap
    \makebox[0.17\linewidth][c]{\textbf{5-Shot}} \hgap
    \makebox[0.25\linewidth][c]{\textbf{10-Shot}} \vspace{1.5mm} \\

    \tasklabel{Present-Past Ambiguous} \hgap
    \centerimg{0.17}{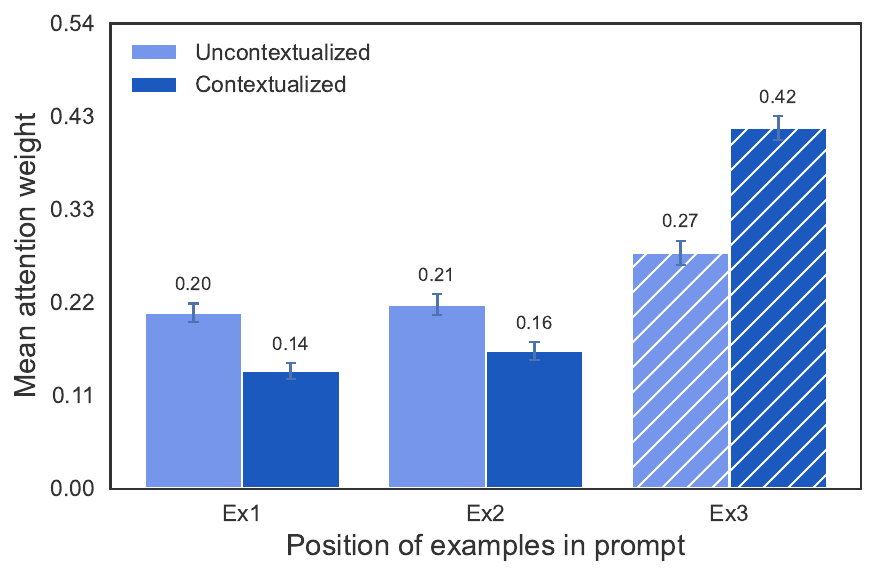} \hgap
    \centerimg{0.17}{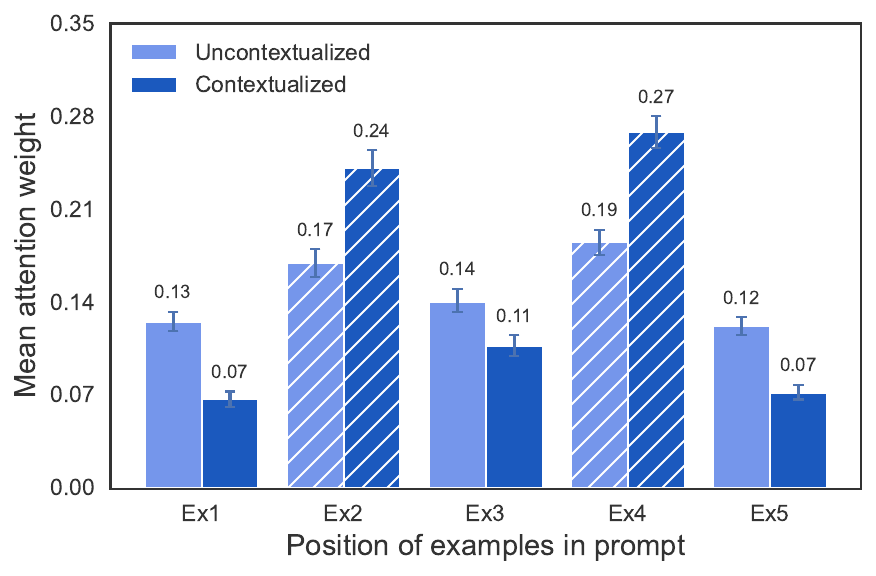} \hgap
    \centerimg{0.25}{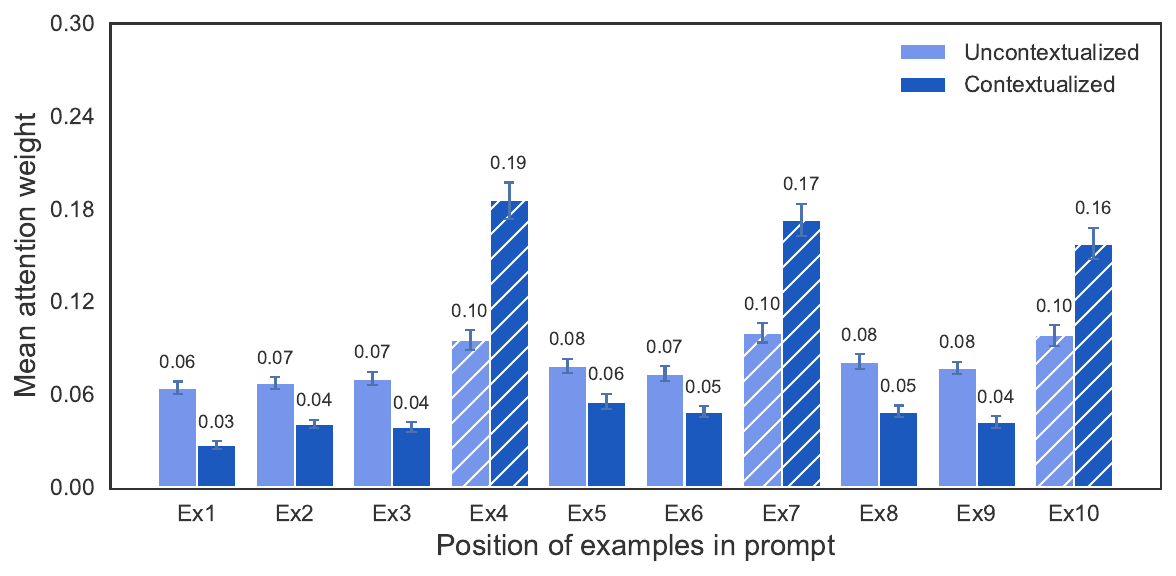} \\[0.3ex]

    \tasklabel{Present-Past Ambiguous-2} \hgap
    \centerimg{0.17}{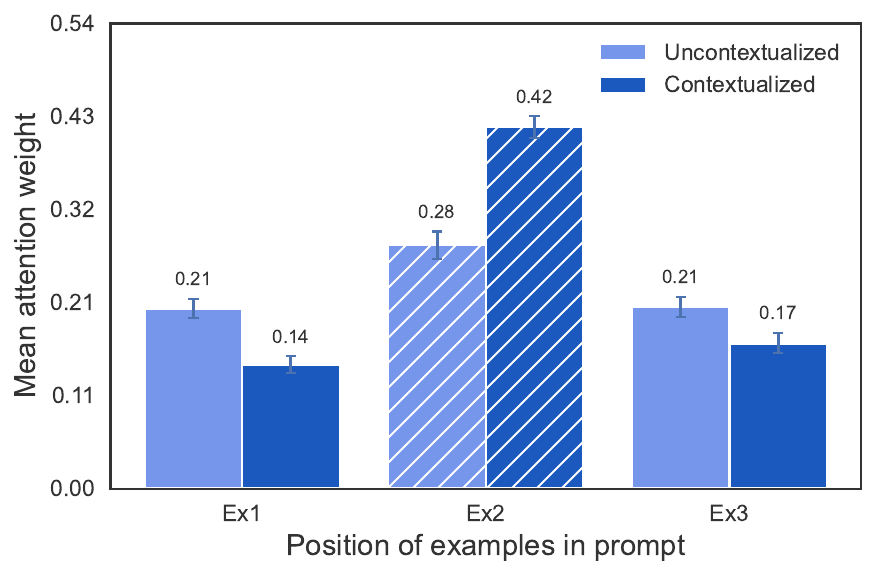} \hgap
    \centerimg{0.17}{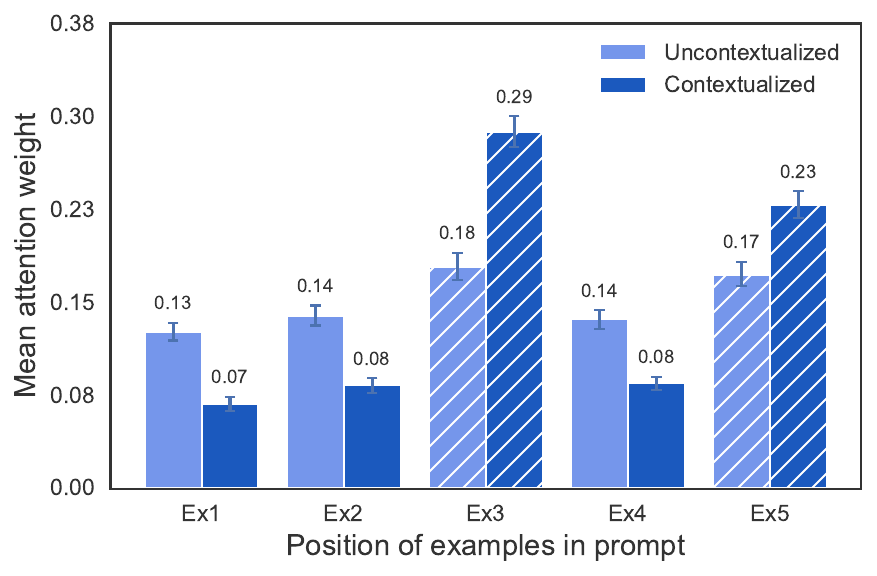} \hgap
    \centerimg{0.25}{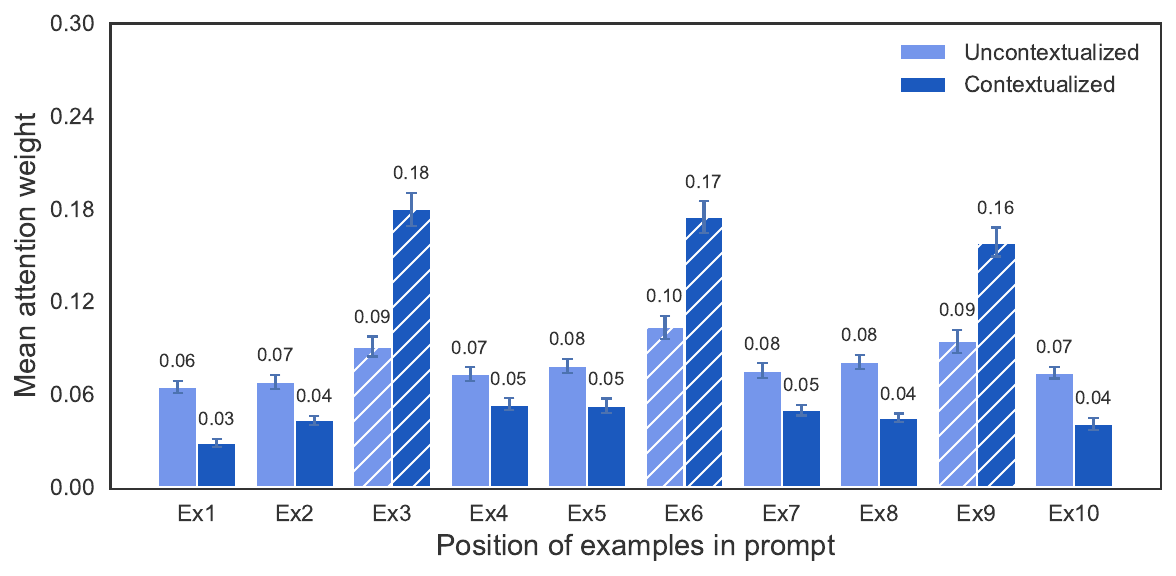} \\[0.3ex]

    \tasklabel{Chinese Ambiguous} \hgap
    \centerimg{0.17}{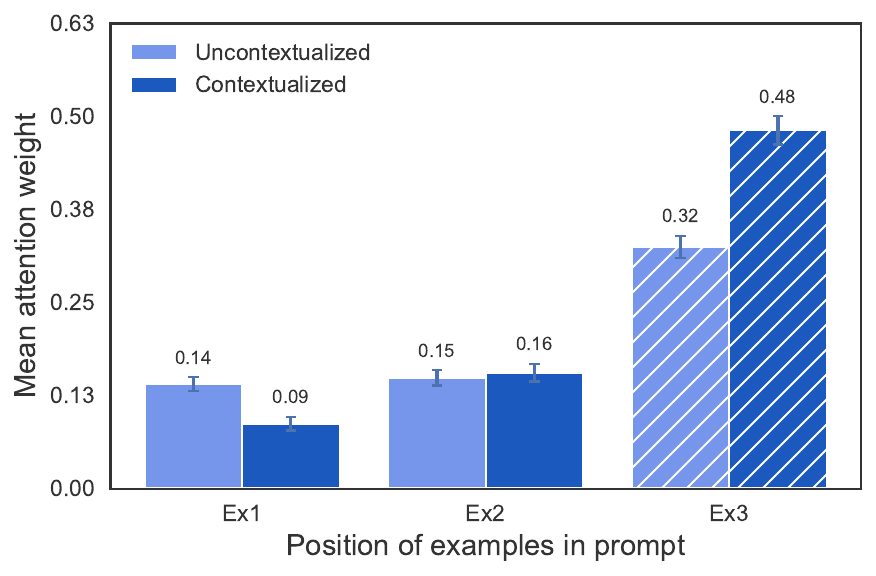} \hgap
    \centerimg{0.17}{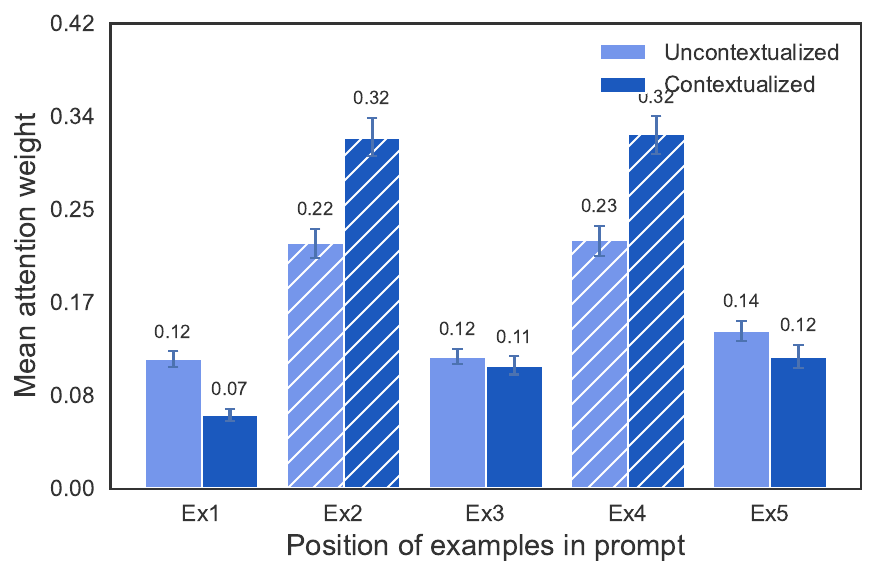} \hgap
    \centerimg{0.25}{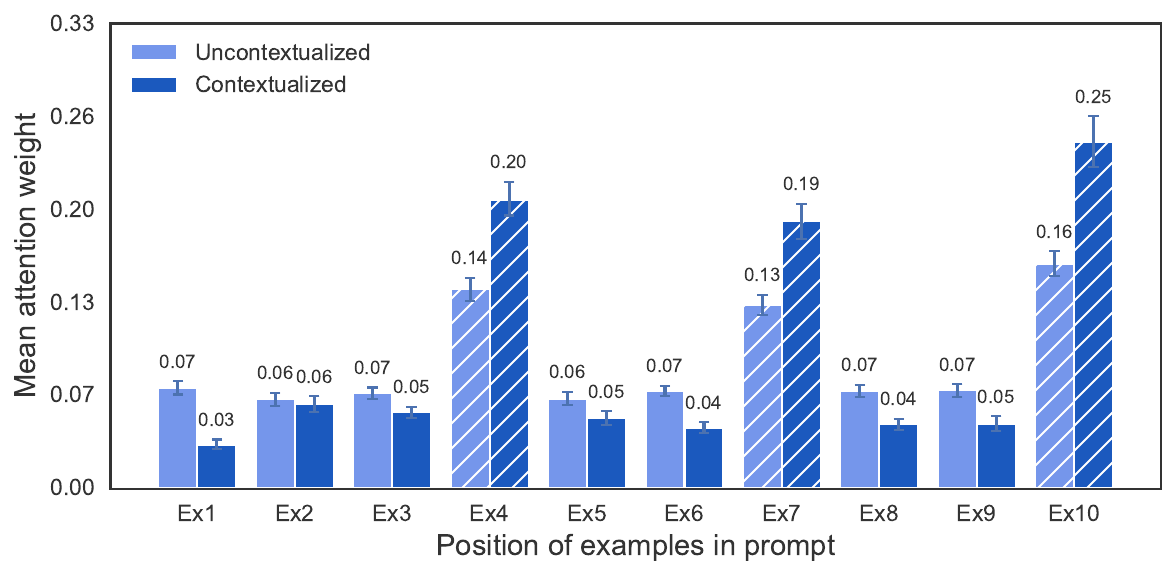} \\[0.3ex]

    \tasklabel{Chinese Ambiguous-2} \hgap
    \centerimg{0.17}{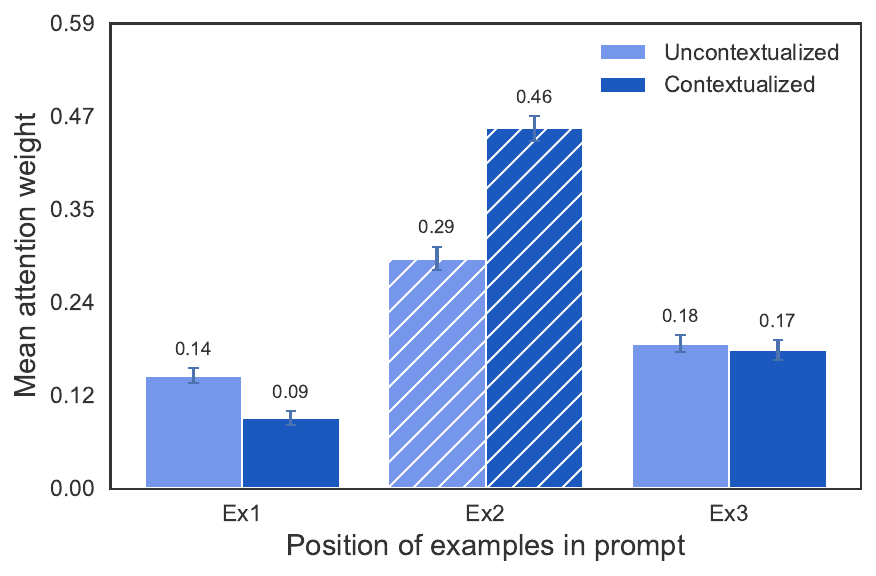} \hgap
    \centerimg{0.17}{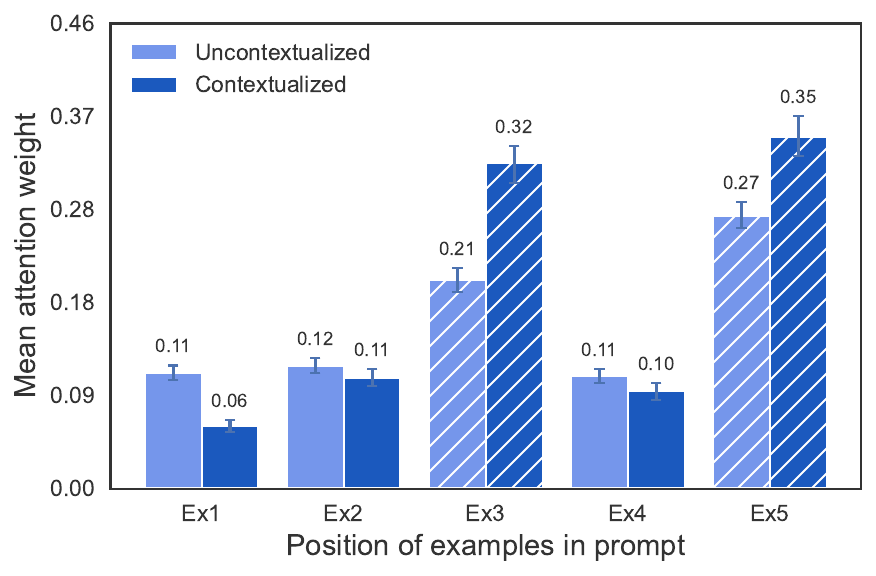} \hgap
    \centerimg{0.25}{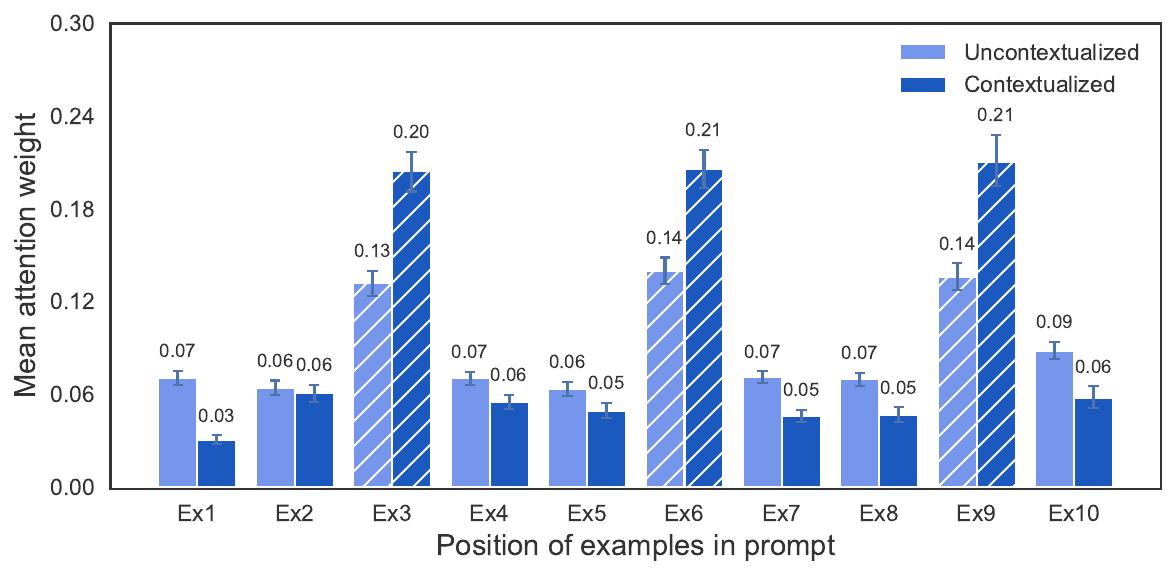} \\[0.3ex]

    \tasklabel{Japanese Ambiguous} \hgap
    \centerimg{0.17}{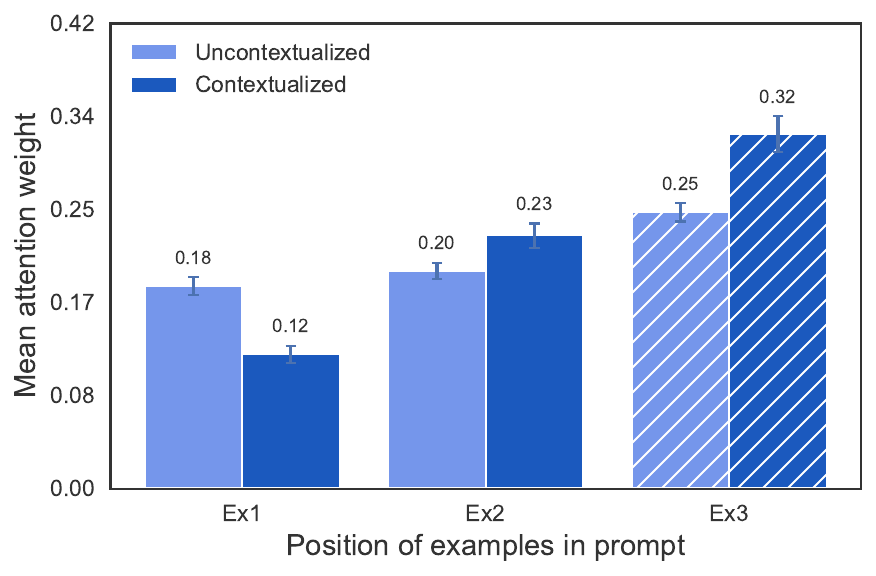} \hgap
    \centerimg{0.17}{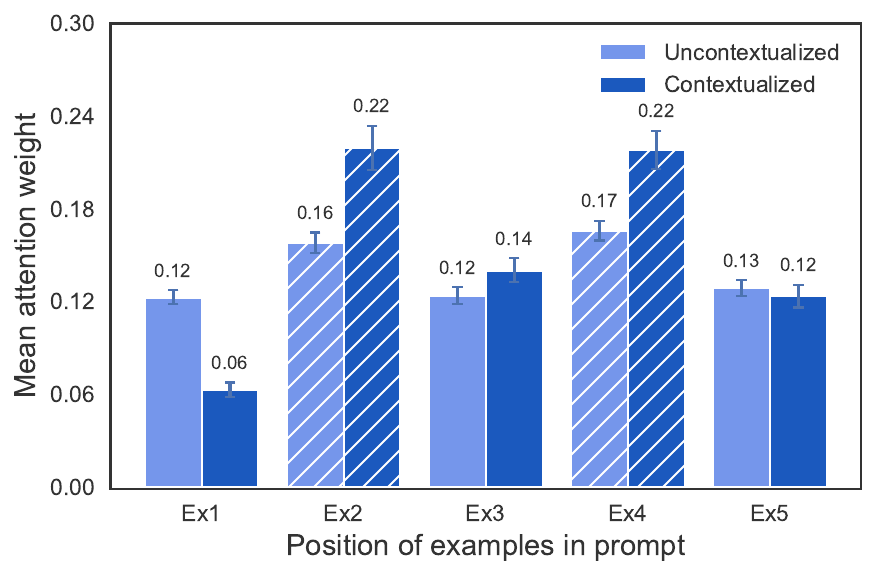} \hgap
    \centerimg{0.25}{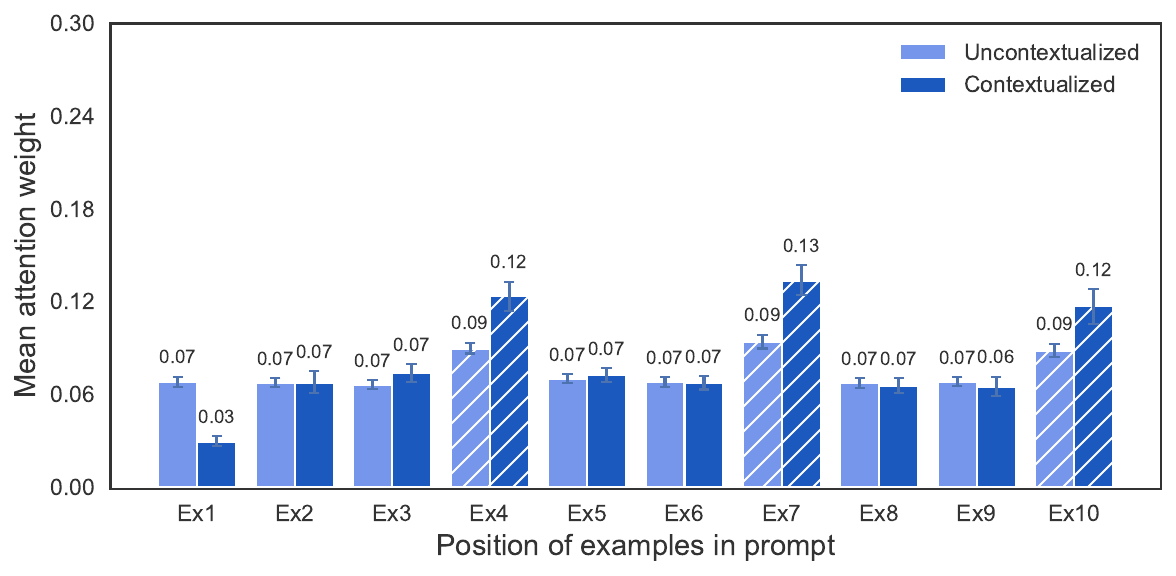} \\[0.3ex]

    \tasklabel{Japanese Ambiguous-2} \hgap
    \centerimg{0.17}{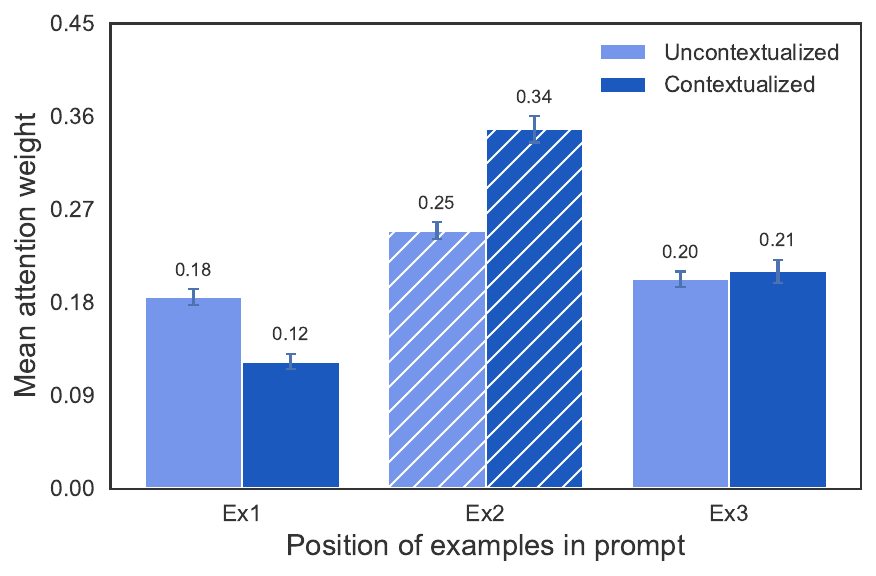} \hgap
    \centerimg{0.17}{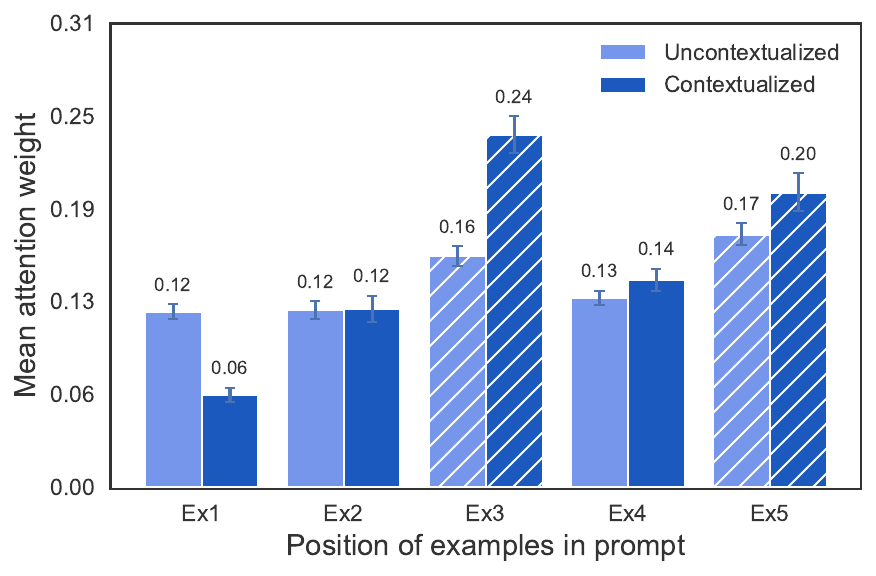} \hgap
    \centerimg{0.25}{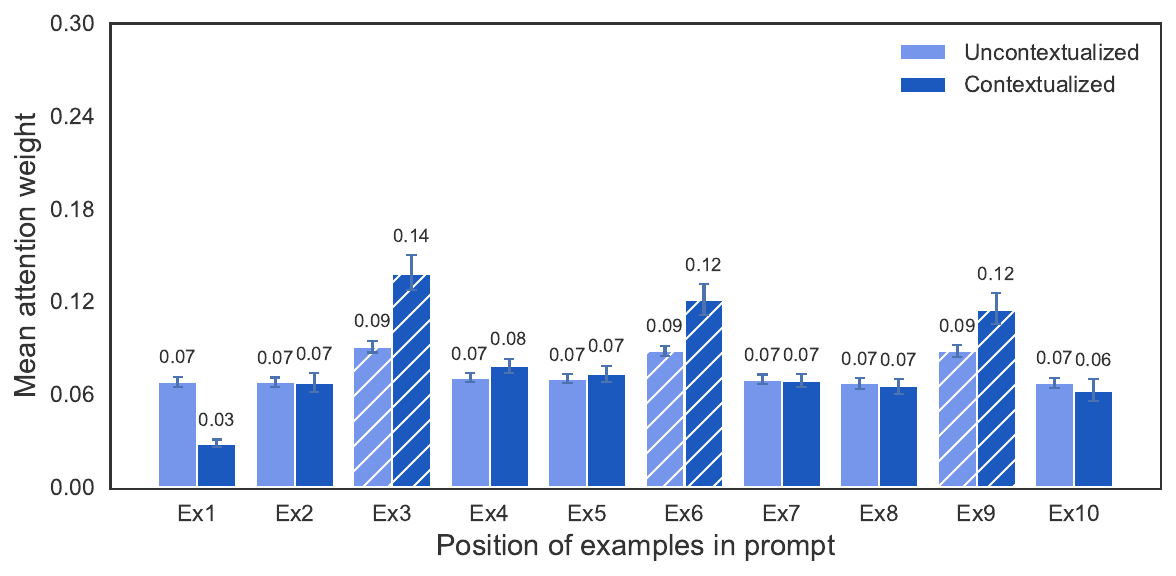} \\[0.3ex]

    \tasklabel{French Ambiguous} \hgap
    \centerimg{0.17}{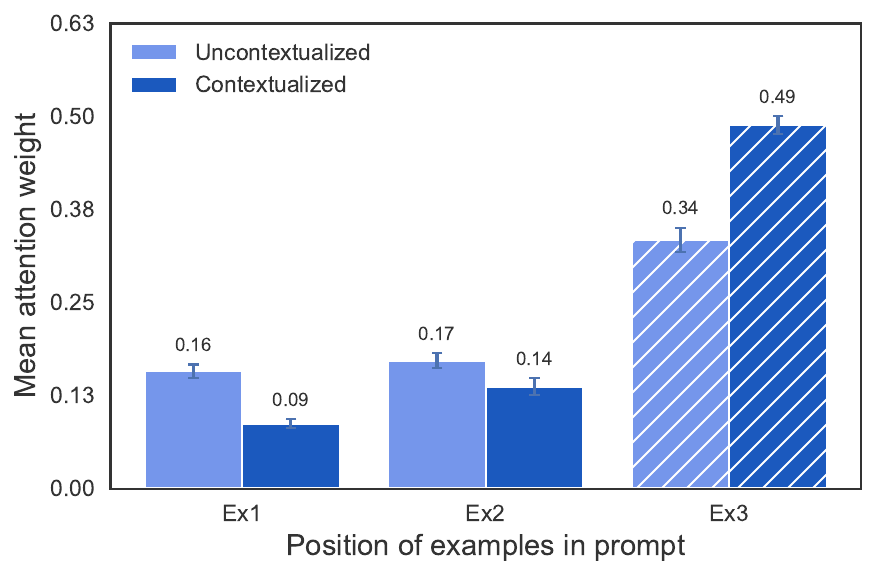} \hgap
    \centerimg{0.17}{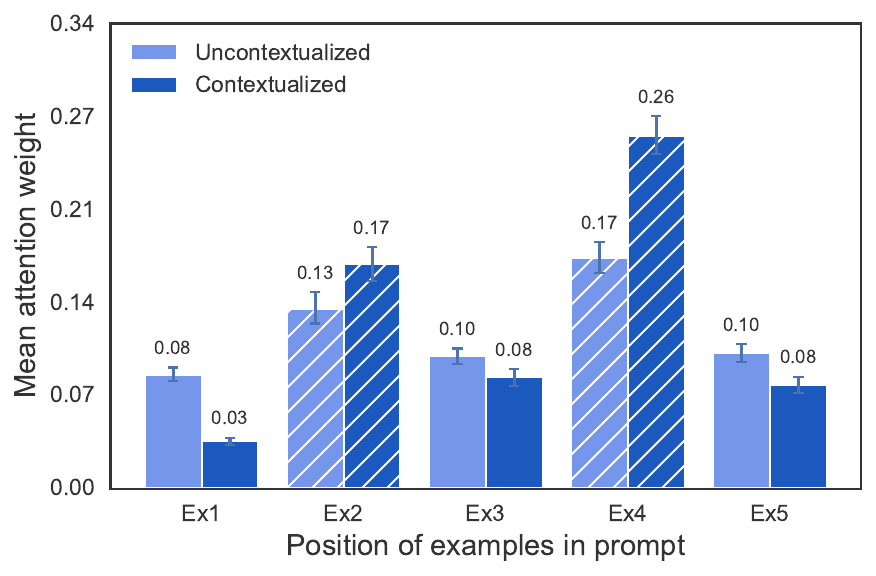} \hgap
    \centerimg{0.25}{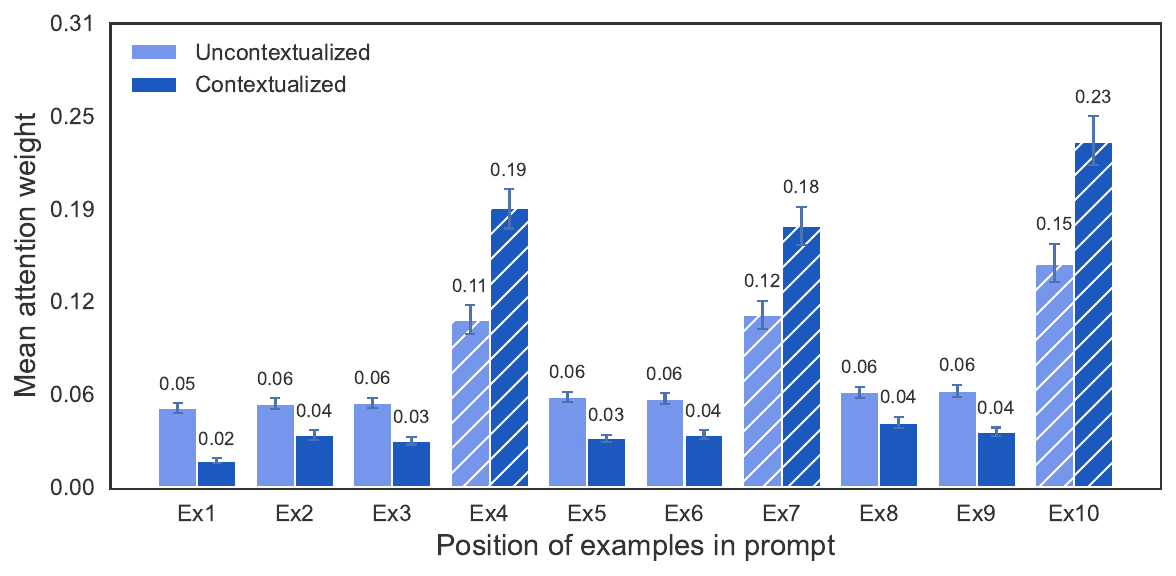} \\[0.3ex]

    \tasklabel{French Ambiguous-2} \hgap
    \centerimg{0.17}{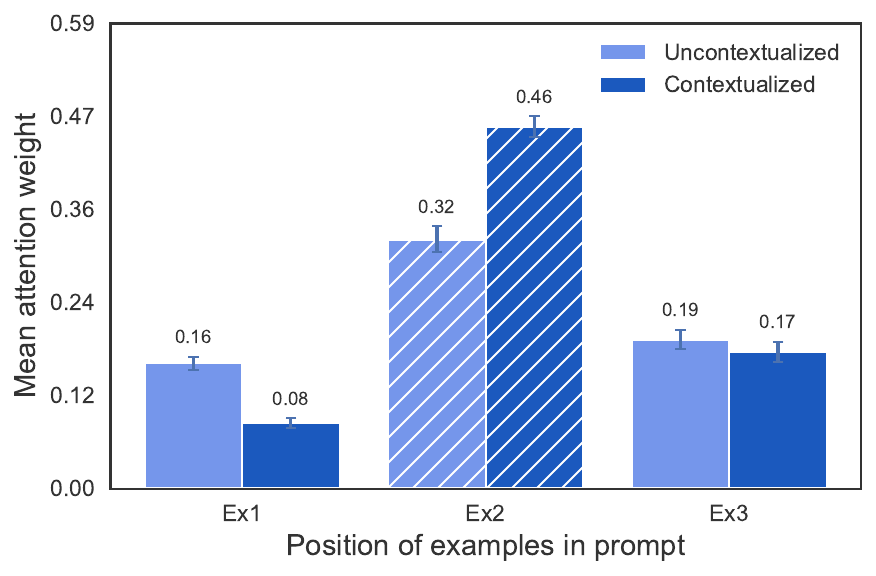} \hgap
    \centerimg{0.17}{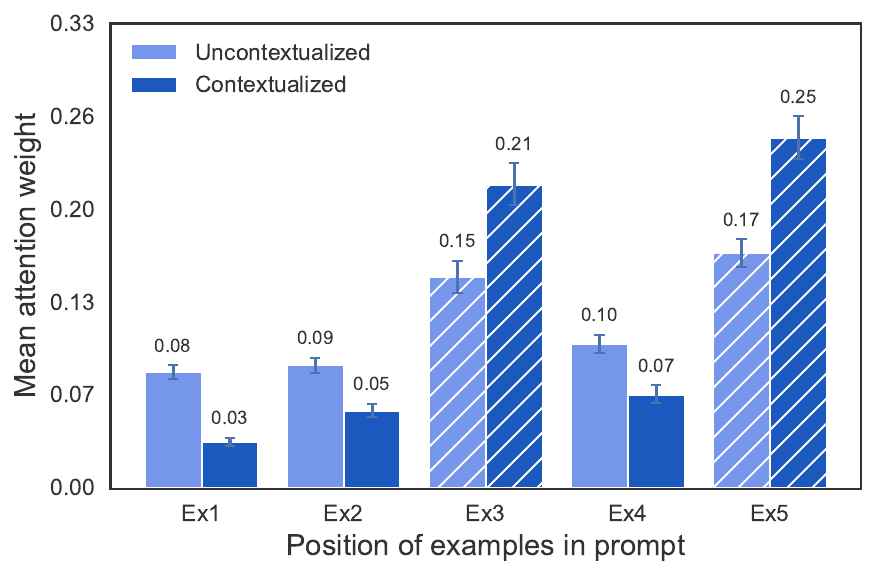} \hgap
    \centerimg{0.25}{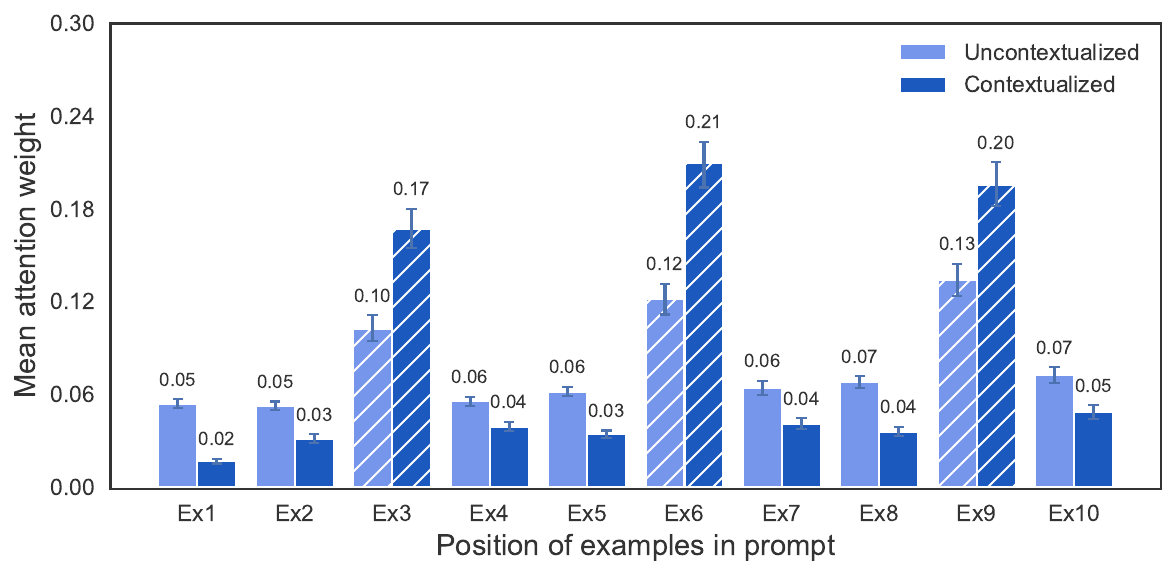} \\[0.3ex]

    \tasklabel{Capitalize Ambiguous} \hgap
    \centerimg{0.17}{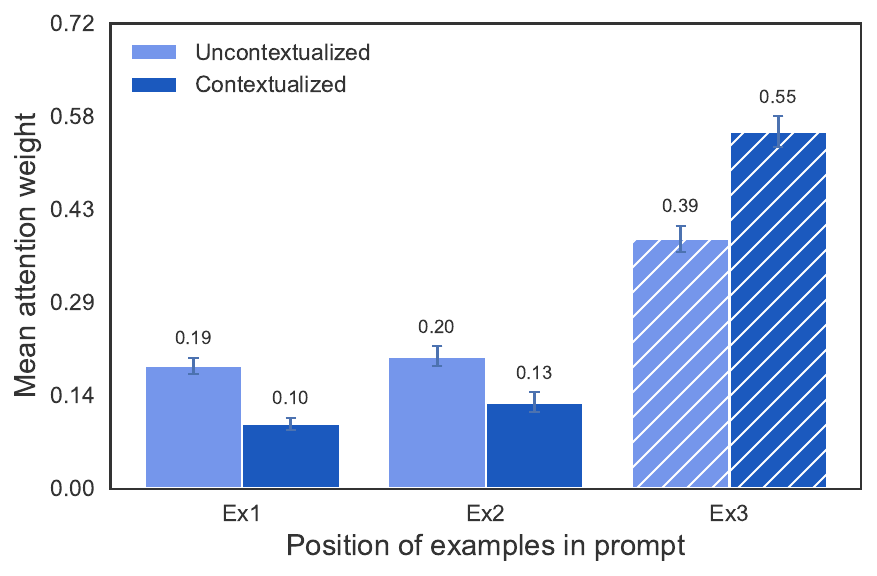} \hgap
    \centerimg{0.17}{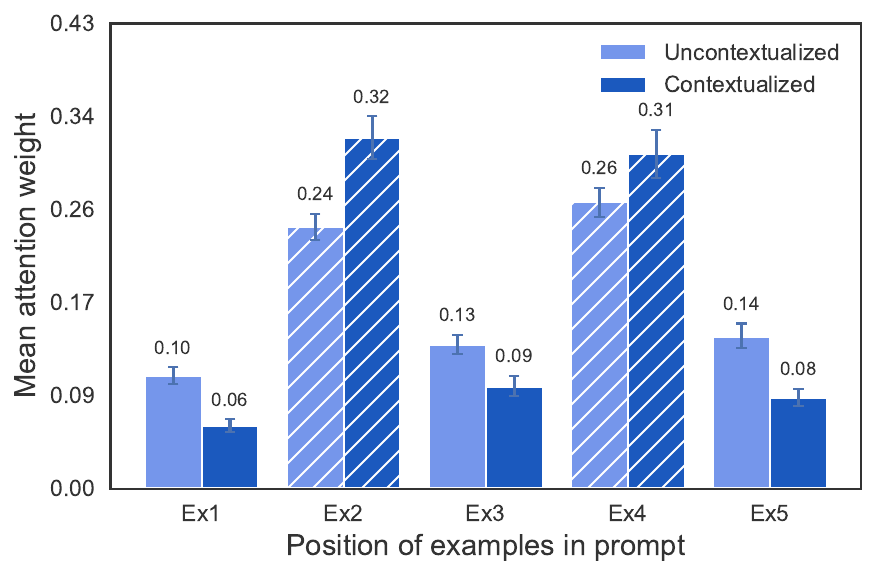} \hgap
    \centerimg{0.25}{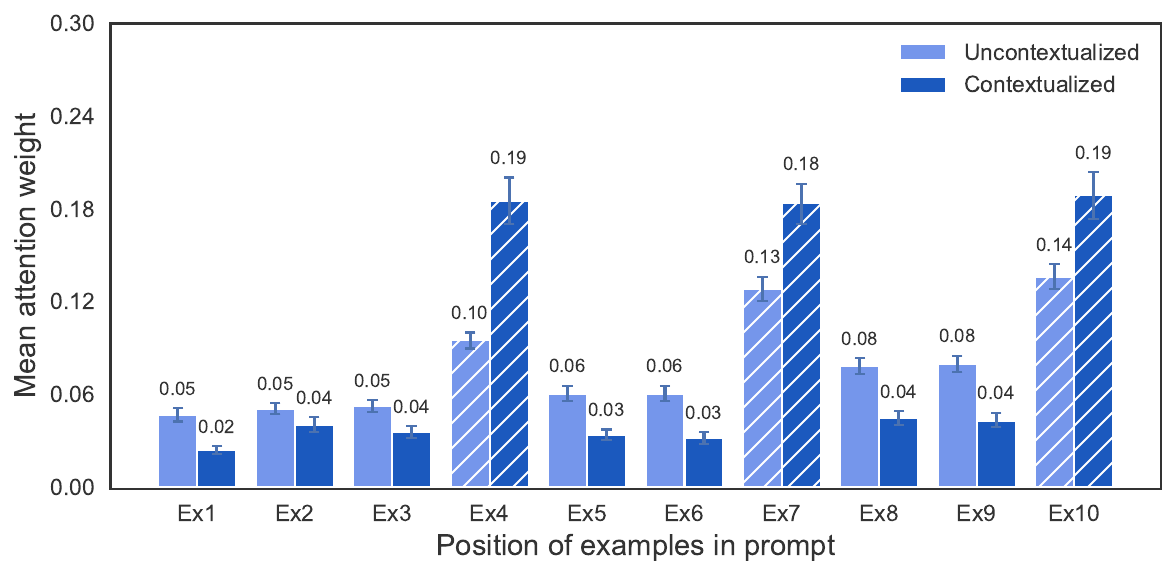} \\[0.3ex]

    \tasklabel{Capitalize Ambiguous-2} \hgap
    \centerimg{0.17}{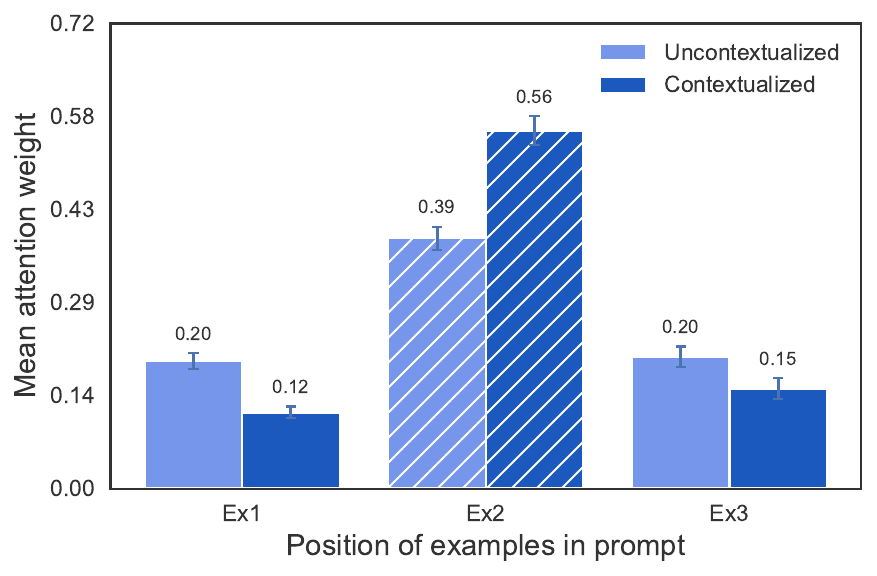} \hgap
    \centerimg{0.17}{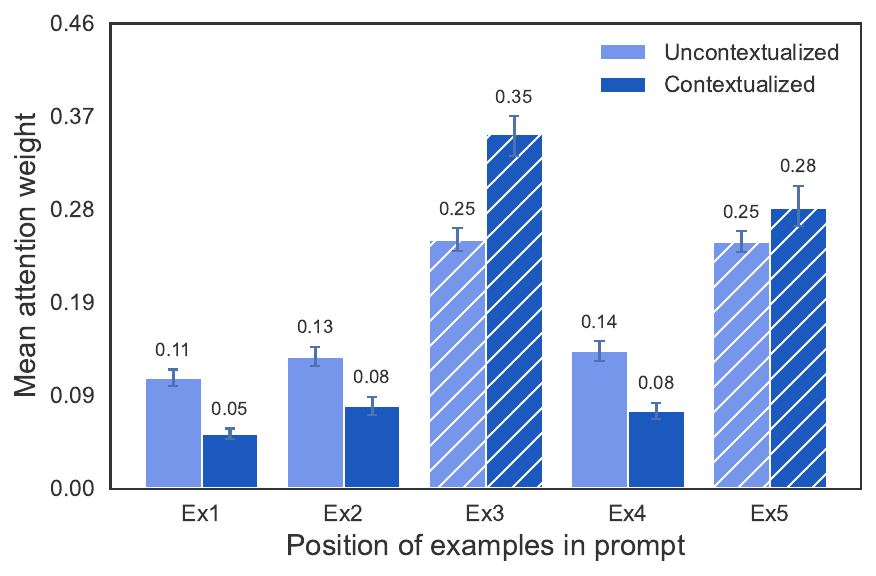} \hgap
    \centerimg{0.25}{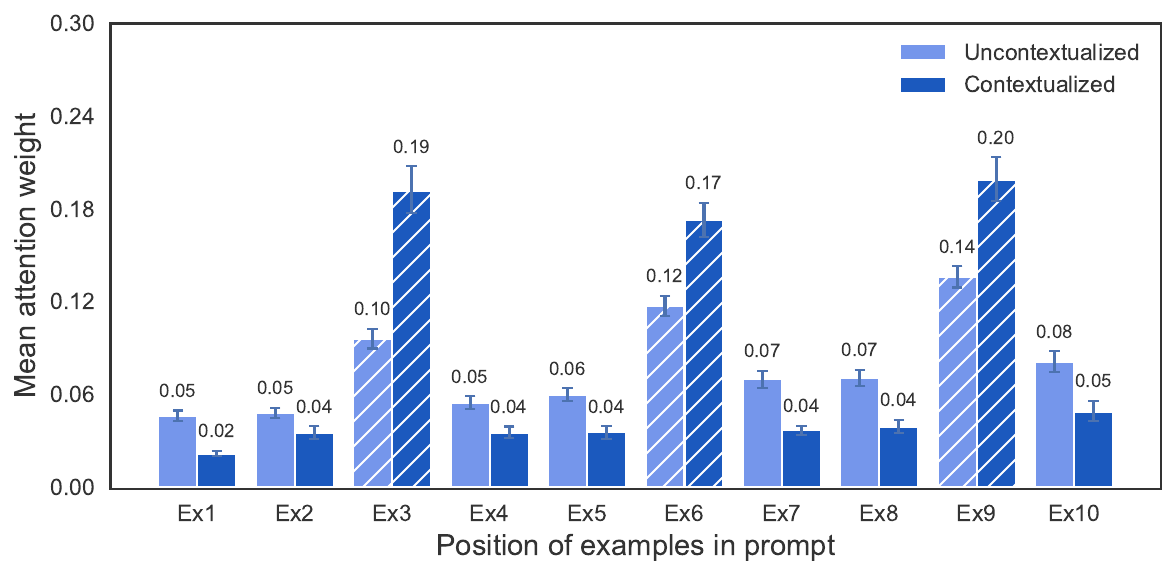}

    \caption{\textbf{Ambiguous Tasks:} \texttt{Llama-3.2-1B} Influence of contextualization on mean attention weights of individual examples on FV heads. Each row displays the mean attention for a normal task across 3-shot, 5-shot, and 10-shot settings before and after contextualization. This is an extension of Main Paper Fig.~\ref{fig:qk_alignment}.}
    \label{fig:Llama-3.2-1B_ambiguous_qk_alignment}
\end{figure}

\begin{figure}[p]
  \centering
  \includegraphics[width=0.90\linewidth]{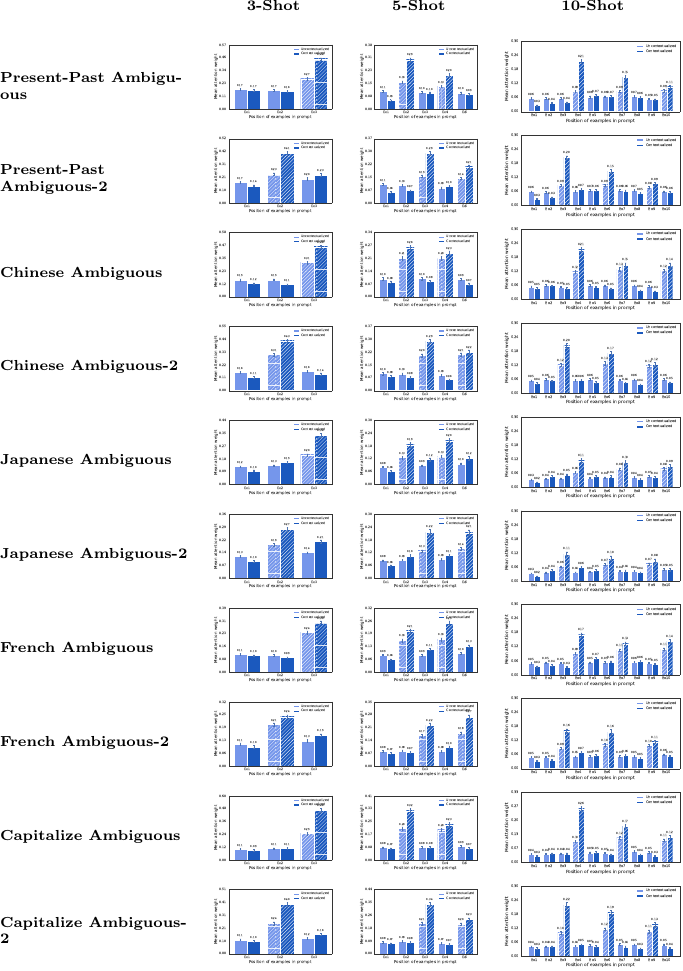}
  \caption{\textbf{Ambiguous Tasks:} \texttt{Llama-3.2-3B} Influence of contextualization on mean attention weights of individual examples on FV heads. Each row displays the mean attention for a normal task across 3-shot, 5-shot, and 10-shot settings before and after contextualization. This is an extension of Main Paper Fig.~\ref{fig:qk_alignment}.}
  \label{fig:Llama-3.2-3B_ambiguous_qk_alignment}
\end{figure}

\begin{figure}[h!]
    \centering
    \footnotesize

    \newcommand{\tasklabel}[1]{%
        \begin{tikzpicture}[baseline=(node.center)]
            \node[anchor=west, text width=3.7cm, font=\small\scshape\bfseries, inner sep=0pt] (node) {#1};
        \end{tikzpicture}%
    }
    \newcommand{\centerimg}[2]{%
        $\vcenter{\hbox{\includegraphics[width=#1\linewidth]{#2}}}$%
    }
    \newcommand{\hgap}{\hfill}

    \tasklabel{} \hgap
    \makebox[0.17\linewidth][c]{\textbf{3-Shot}} \hgap
    \makebox[0.17\linewidth][c]{\textbf{5-Shot}} \hgap
    \makebox[0.25\linewidth][c]{\textbf{10-Shot}} \vspace{1.5mm} \\

    \tasklabel{Present-Past Ambiguous} \hgap
    \centerimg{0.17}{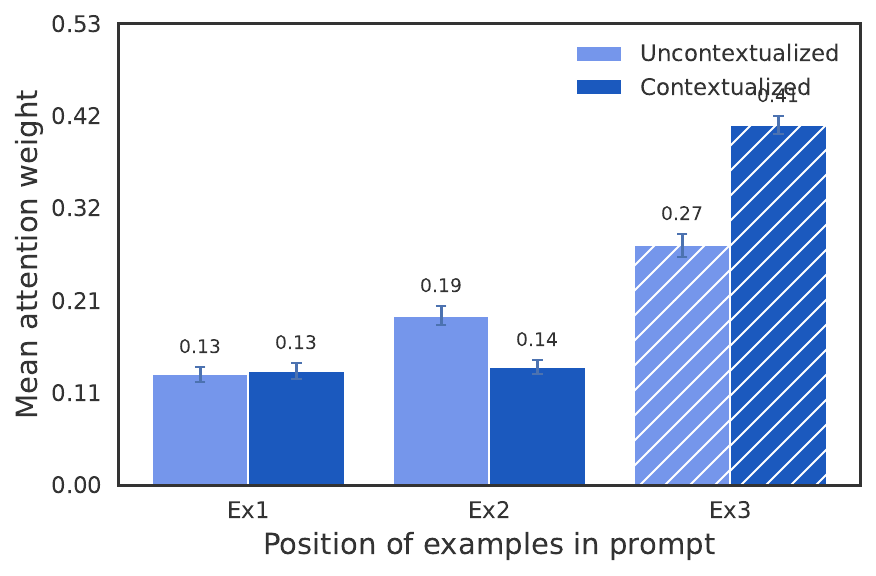} \hgap
    \centerimg{0.17}{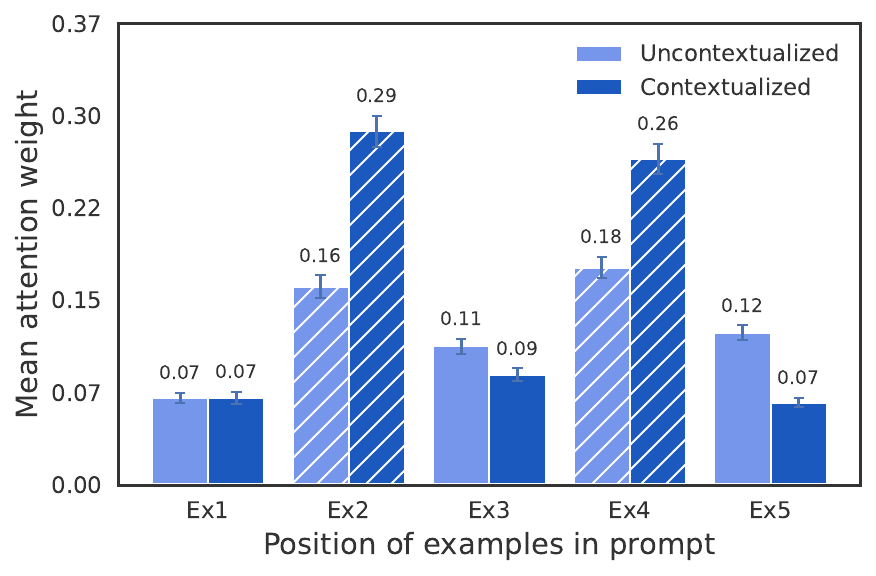} \hgap
    \centerimg{0.25}{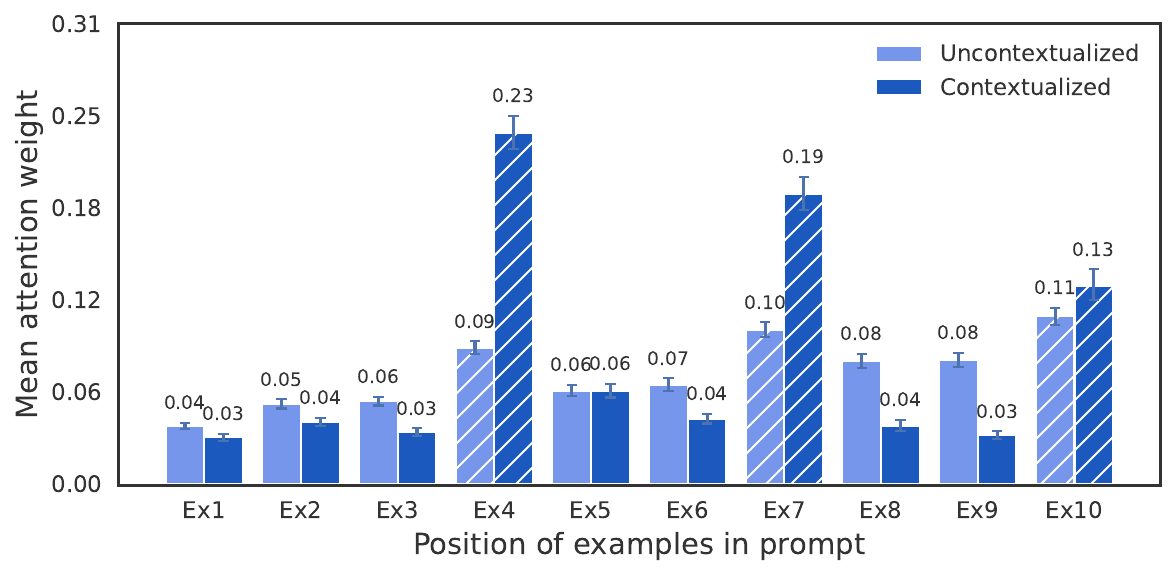} \\[0.3ex]

    \tasklabel{Present-Past Ambiguous-2} \hgap
    \centerimg{0.17}{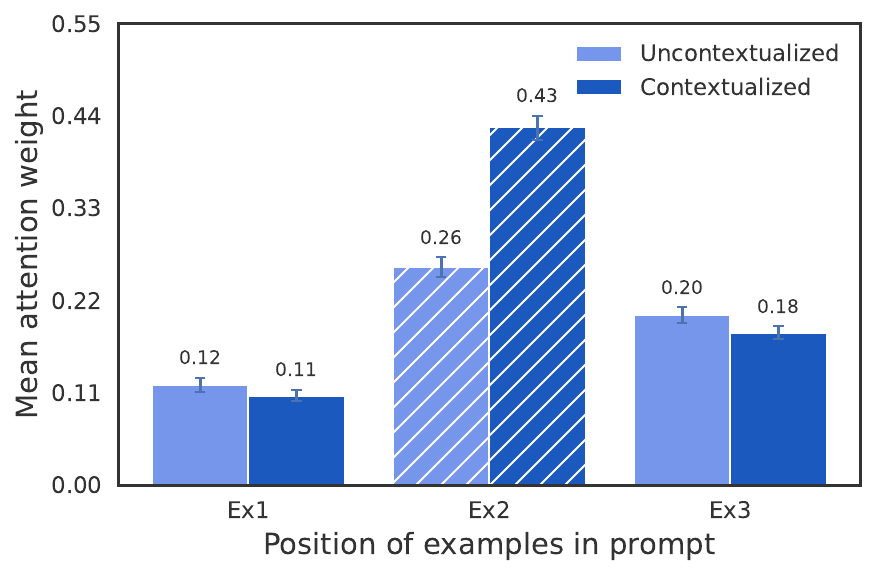} \hgap
    \centerimg{0.17}{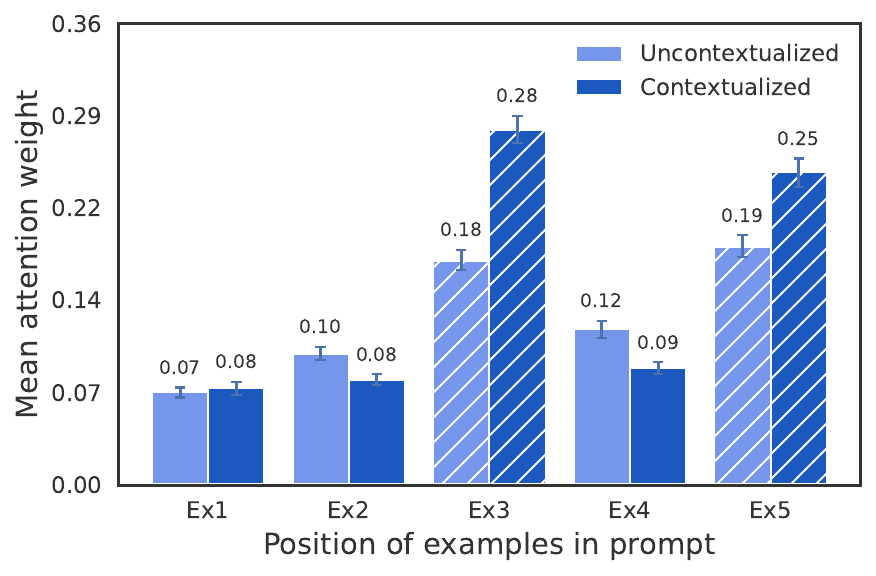} \hgap
    \centerimg{0.25}{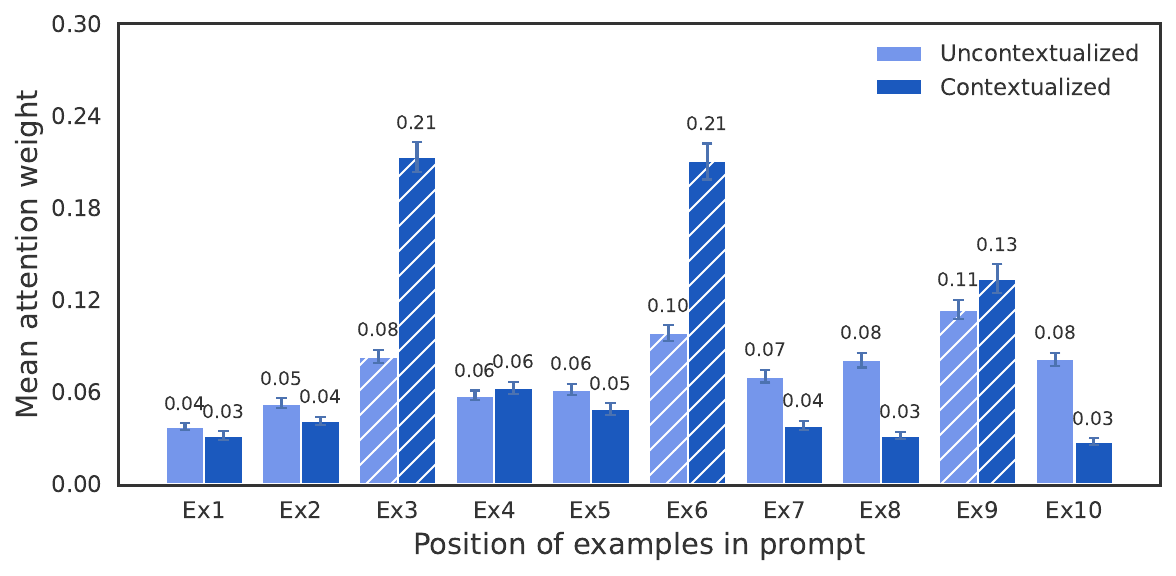} \\[0.3ex]

    \tasklabel{Chinese Ambiguous} \hgap
    \centerimg{0.17}{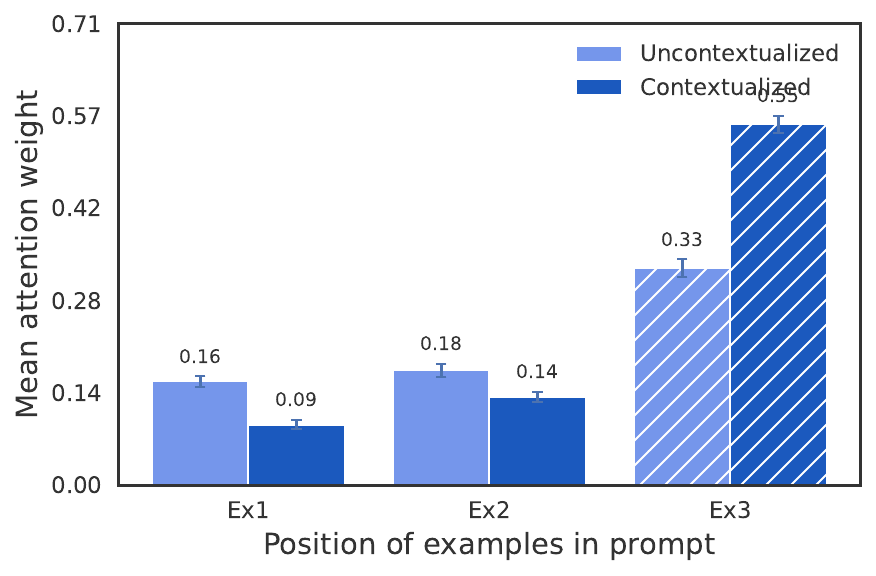} \hgap
    \centerimg{0.17}{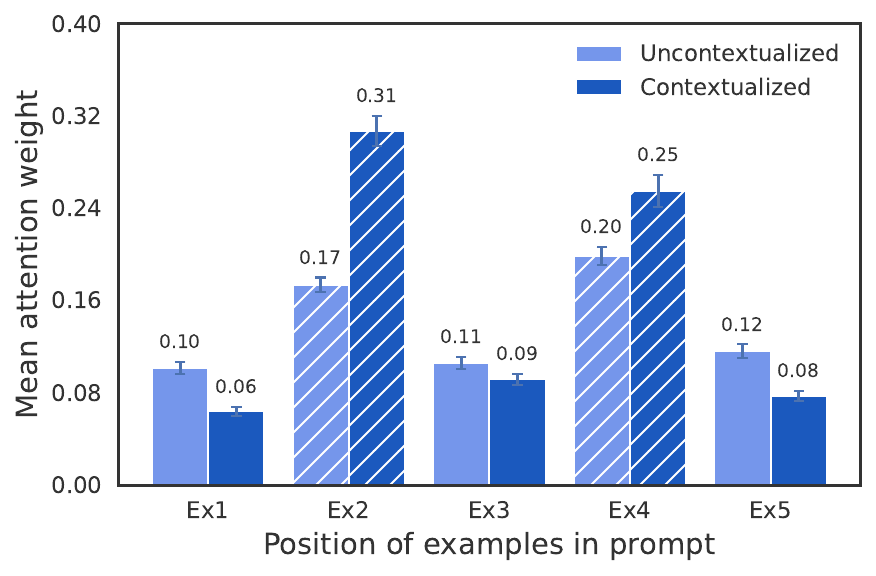} \hgap
    \centerimg{0.25}{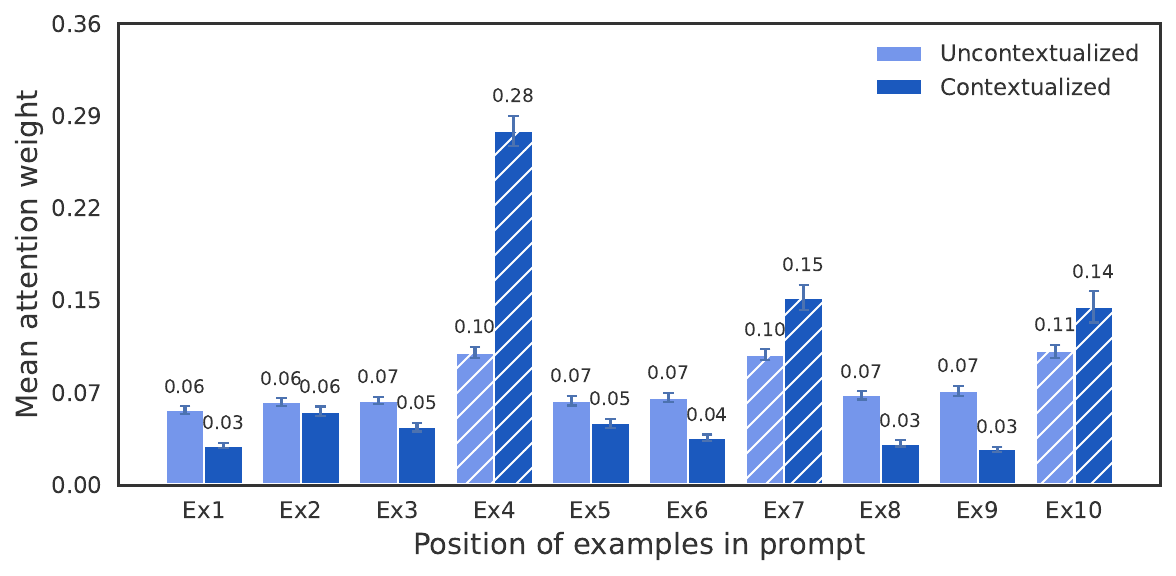} \\[0.3ex]

    \tasklabel{Chinese Ambiguous-2} \hgap
    \centerimg{0.17}{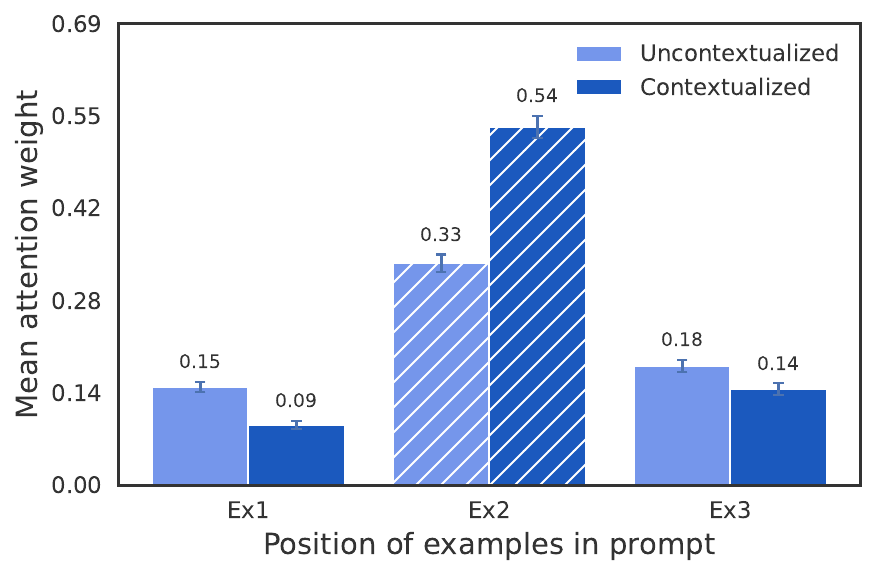} \hgap
    \centerimg{0.17}{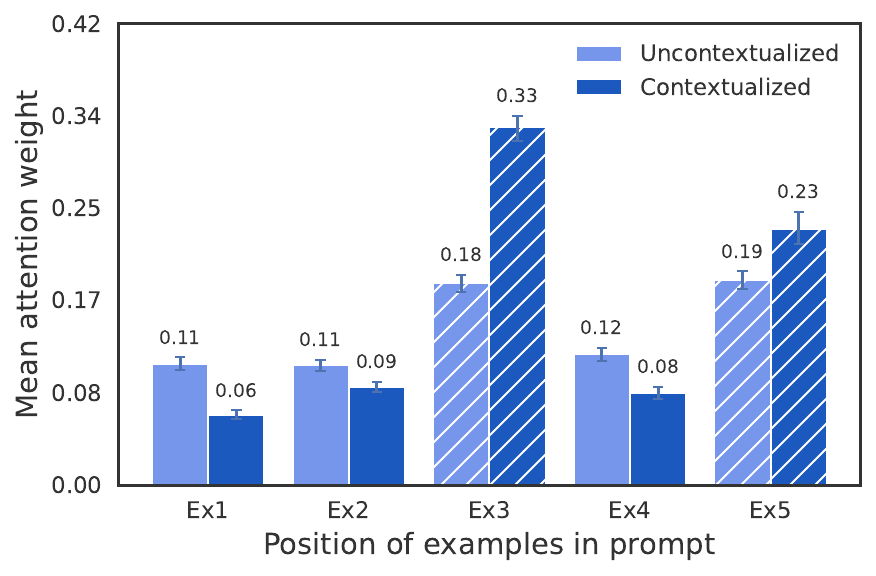} \hgap
    \centerimg{0.25}{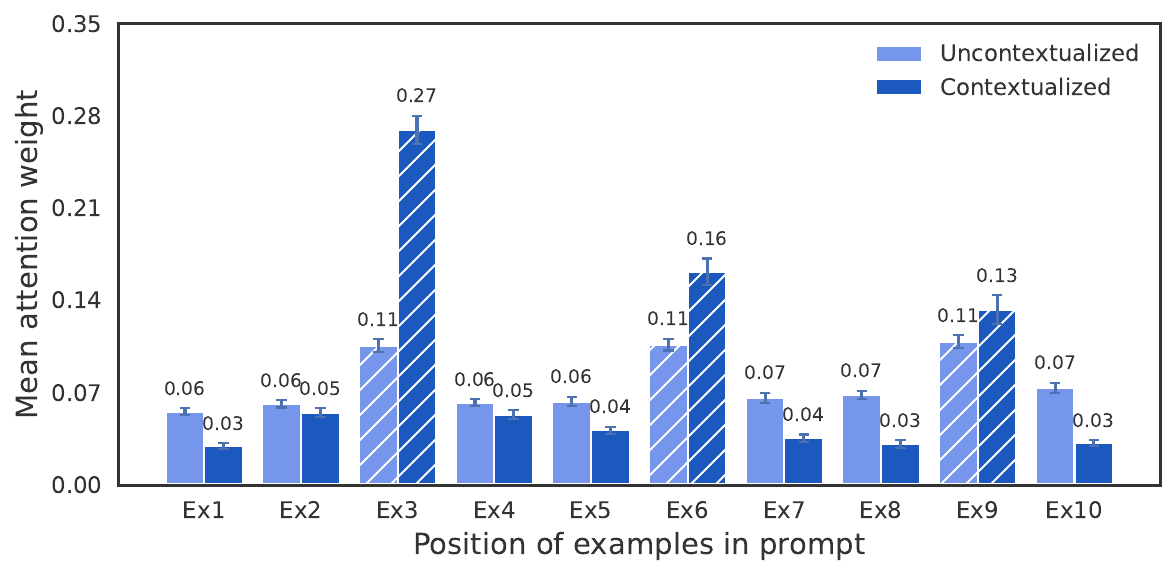} \\[0.3ex]

    \tasklabel{Japanese Ambiguous} \hgap
    \centerimg{0.17}{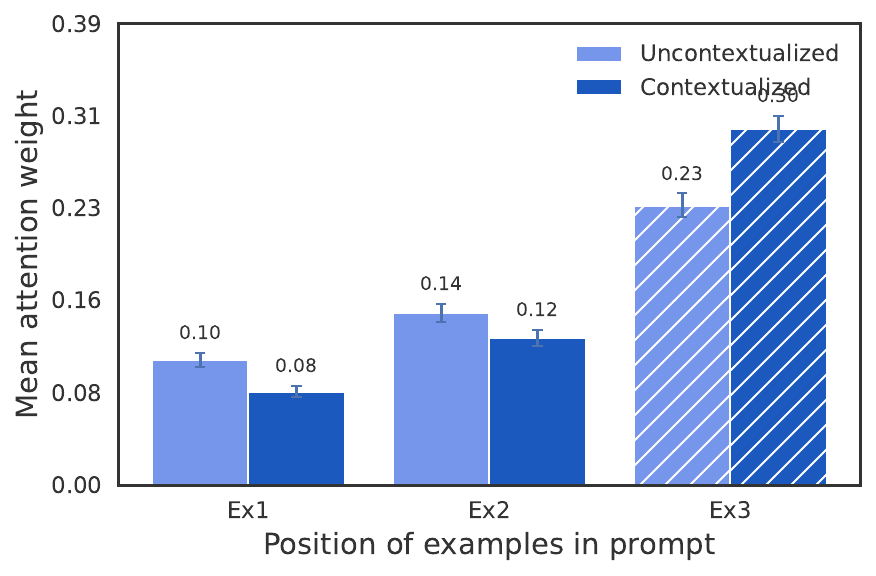} \hgap
    \centerimg{0.17}{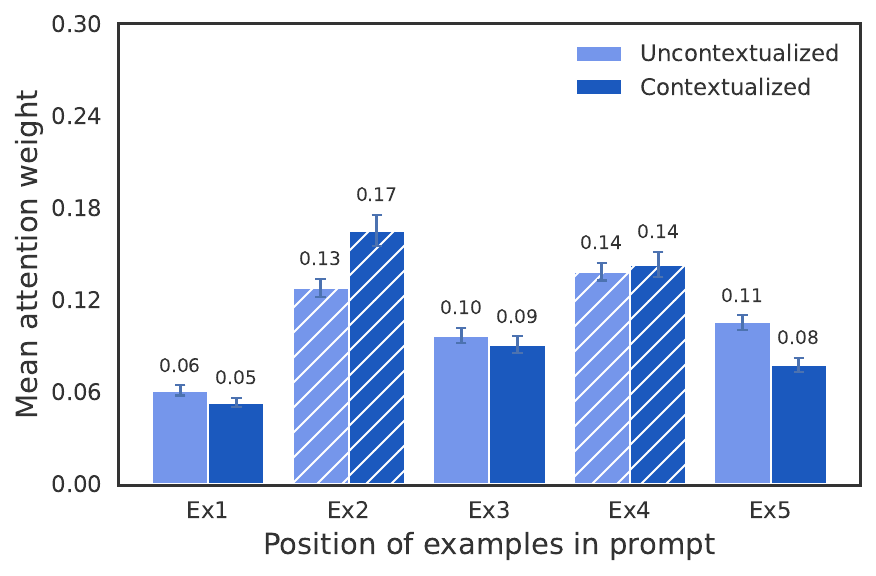} \hgap
    \centerimg{0.25}{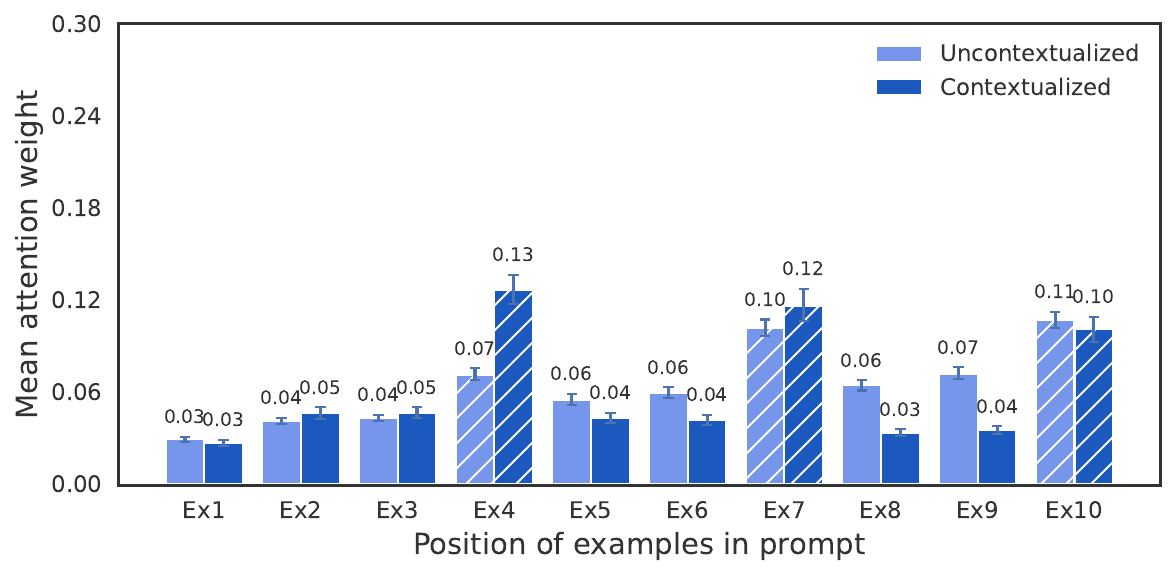} \\[0.3ex]

    \tasklabel{Japanese Ambiguous-2} \hgap
    \centerimg{0.17}{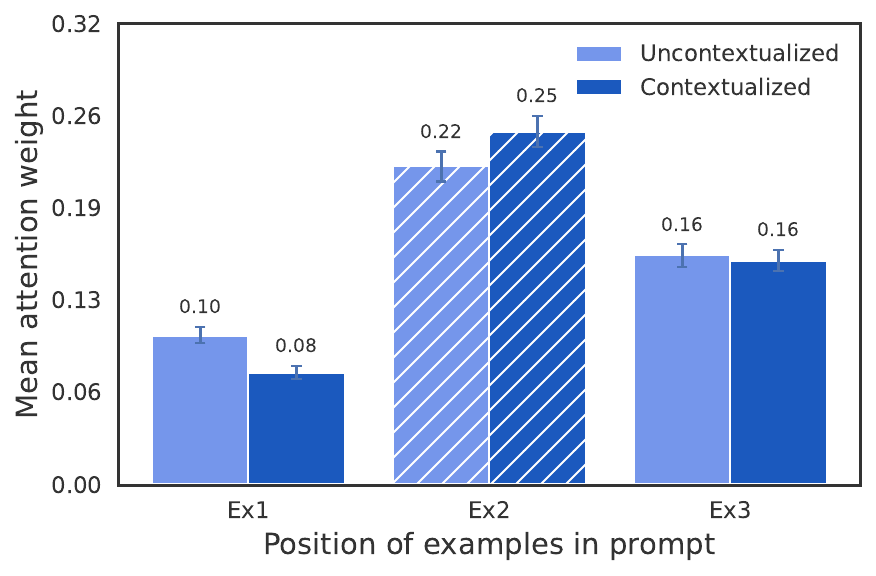} \hgap
    \centerimg{0.17}{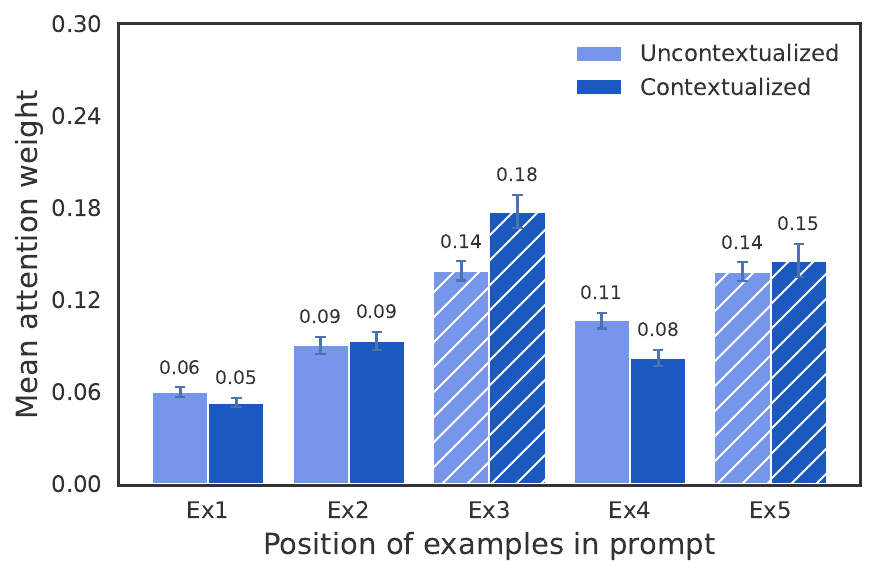} \hgap
    \centerimg{0.25}{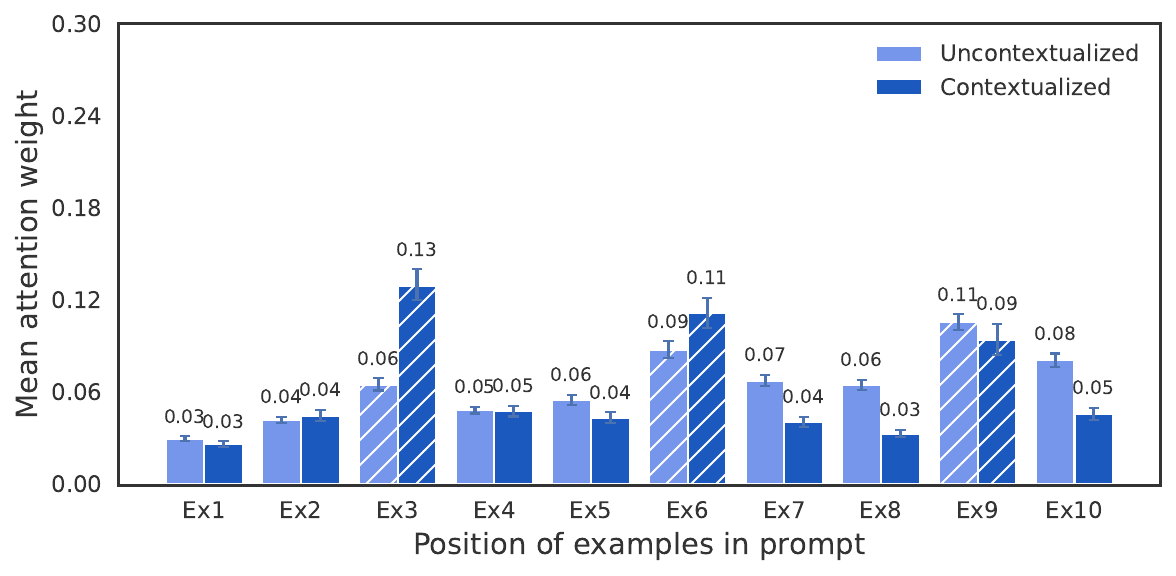} \\[0.3ex]

    \tasklabel{French Ambiguous} \hgap
    \centerimg{0.17}{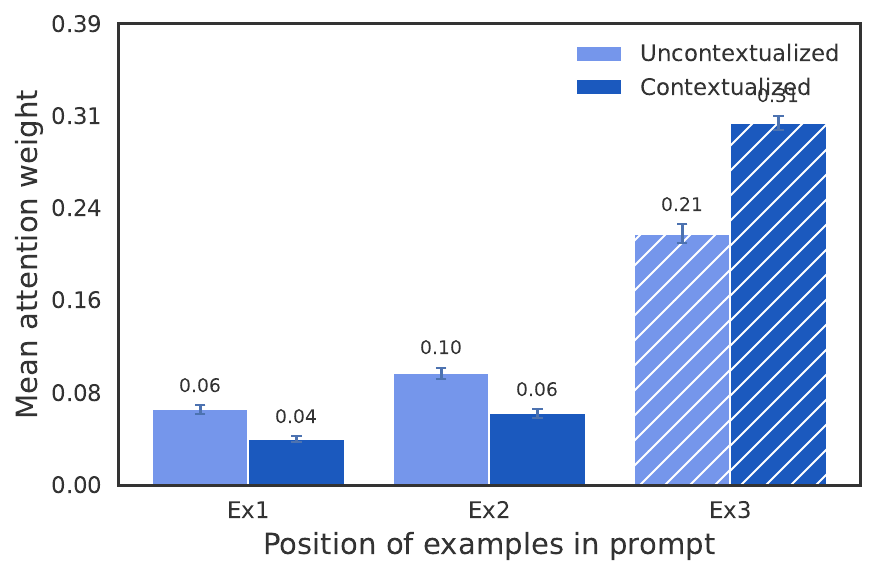} \hgap
    \centerimg{0.17}{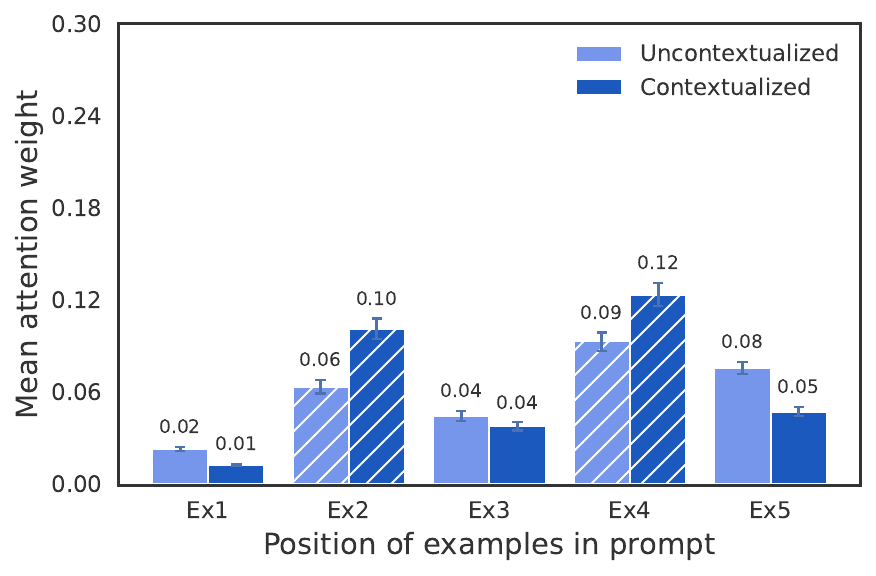} \hgap
    \centerimg{0.25}{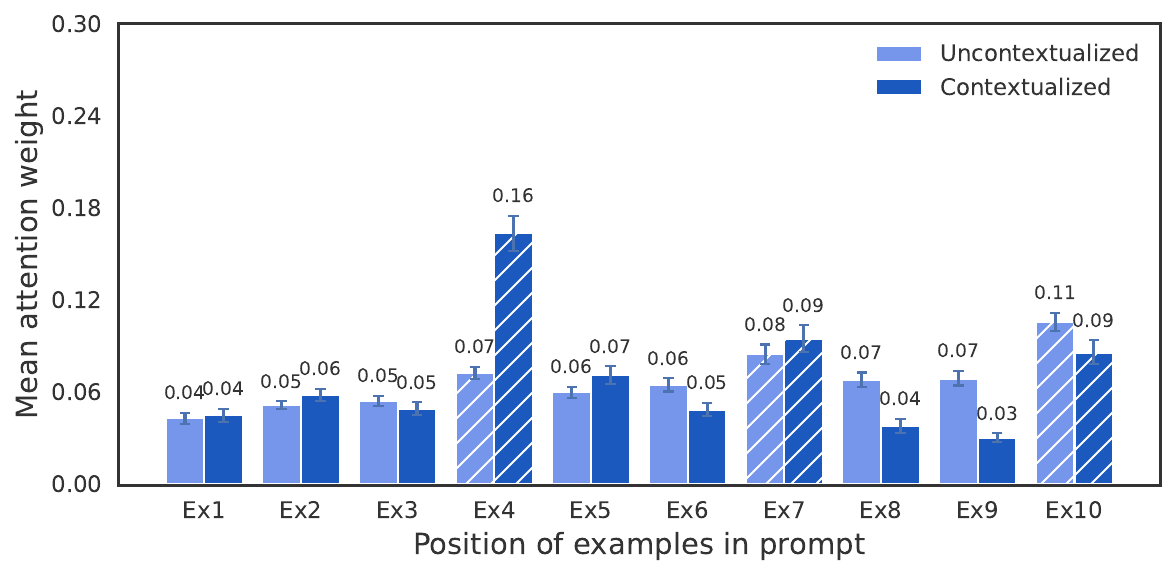} \\[0.3ex]

    \tasklabel{French Ambiguous-2} \hgap
    \centerimg{0.17}{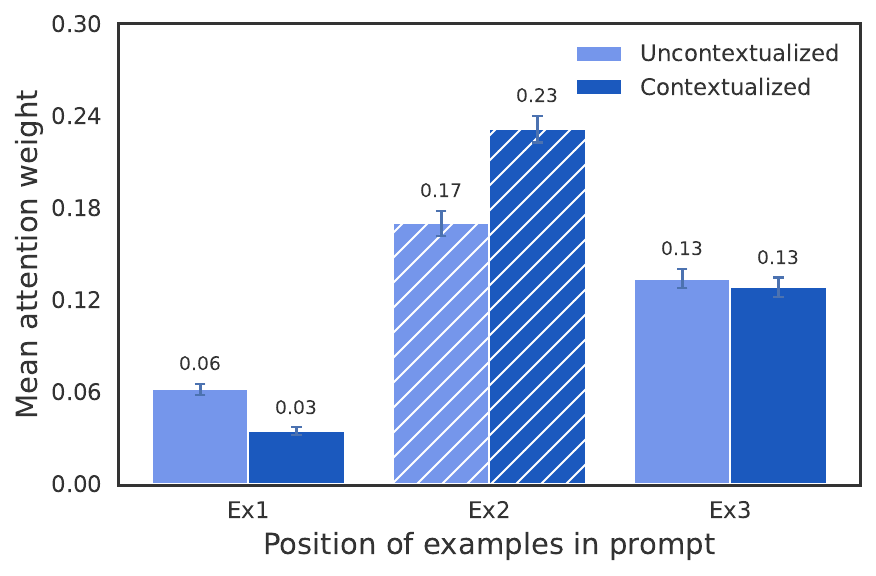} \hgap
    \centerimg{0.17}{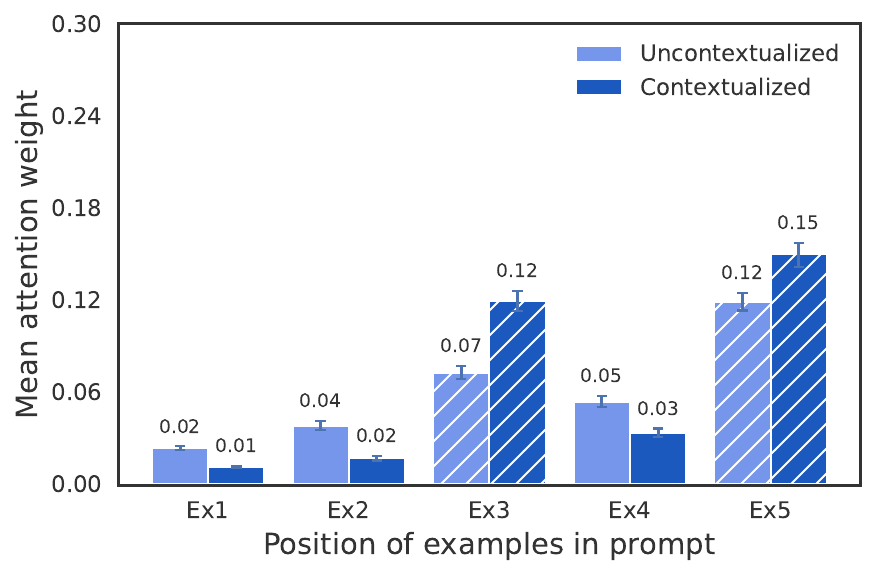} \hgap
    \centerimg{0.25}{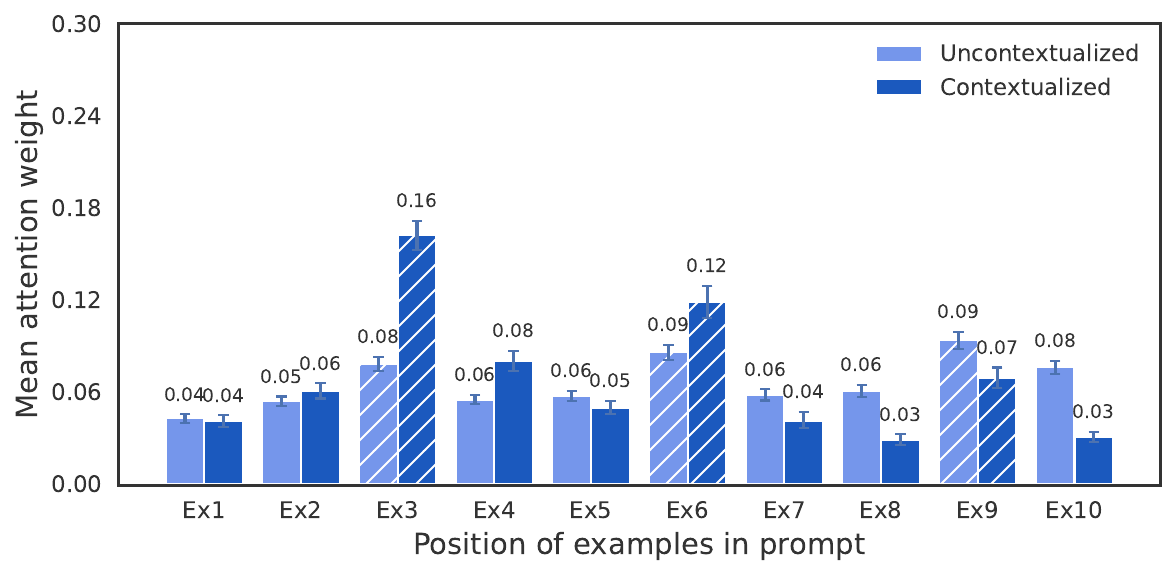} \\[0.3ex]

    \tasklabel{Capitalize Ambiguous} \hgap
    \centerimg{0.17}{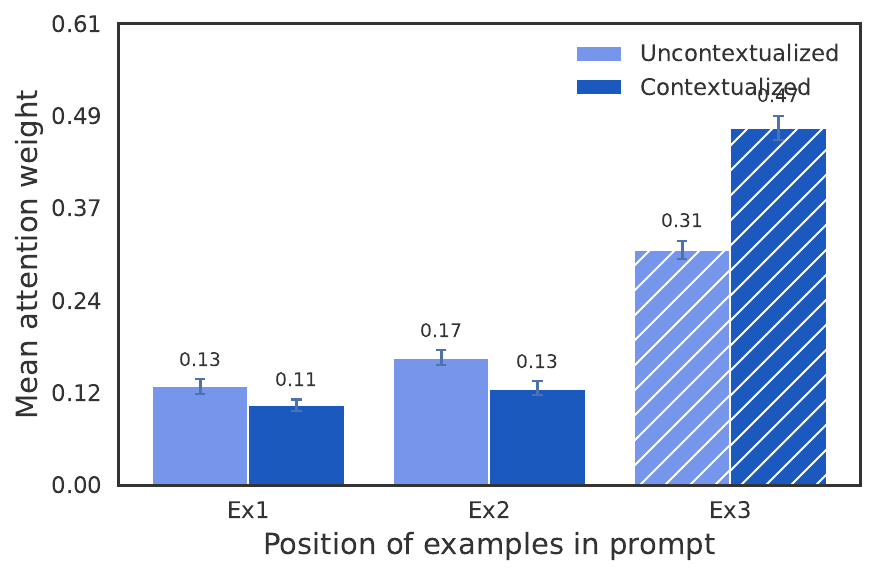} \hgap
    \centerimg{0.17}{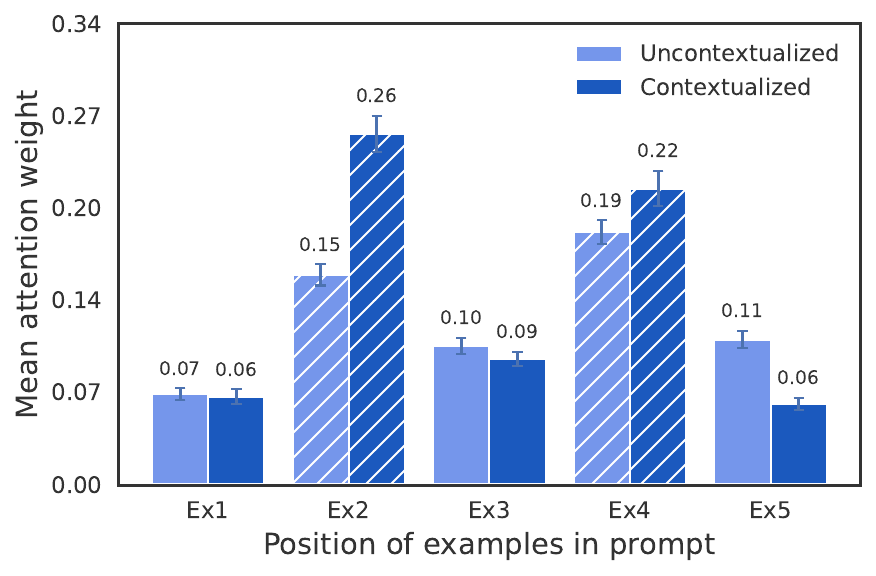} \hgap
    \centerimg{0.25}{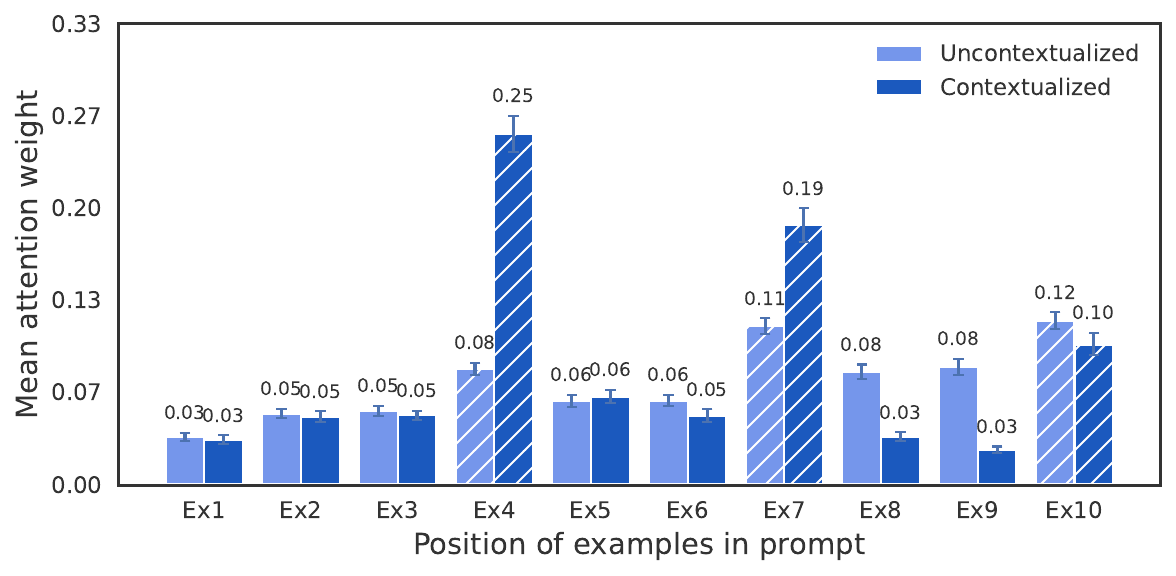} \\[0.3ex]

    \tasklabel{Capitalize Ambiguous-2} \hgap
    \centerimg{0.17}{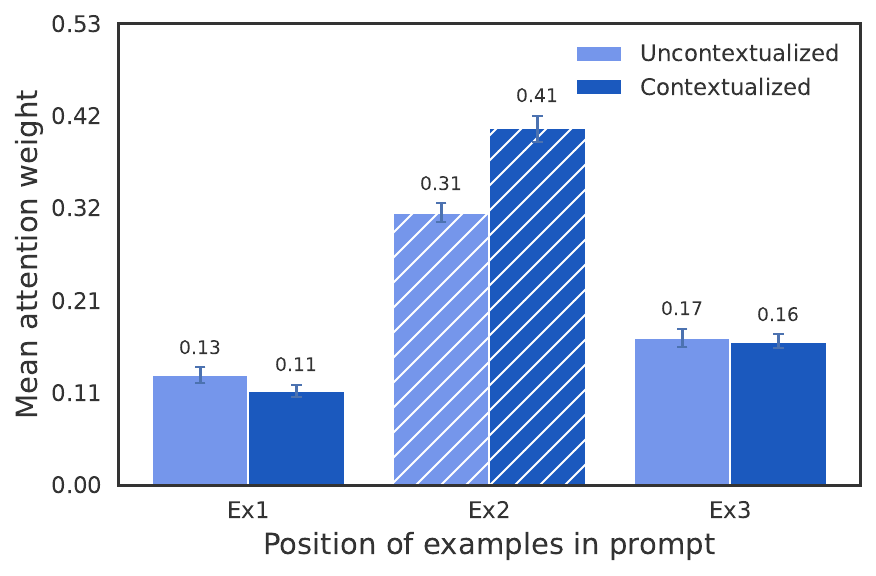} \hgap
    \centerimg{0.17}{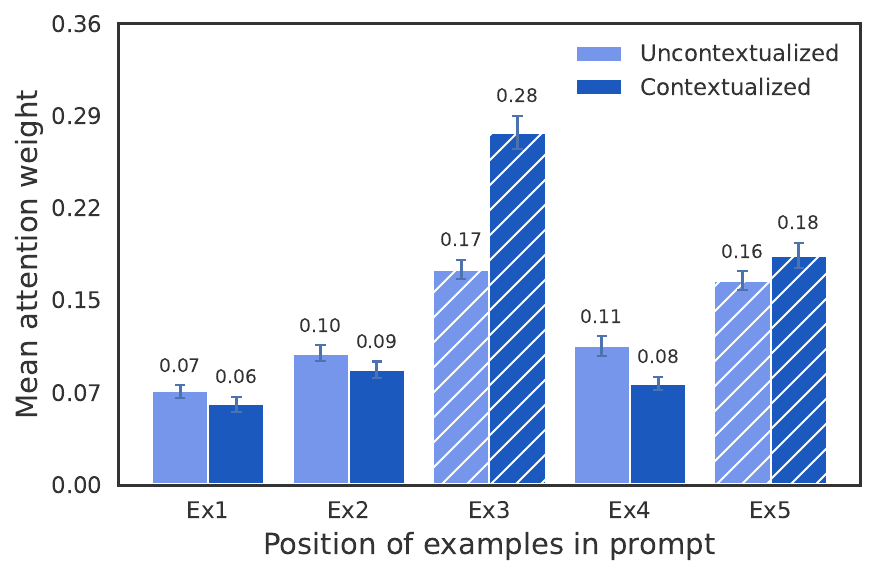} \hgap
    \centerimg{0.25}{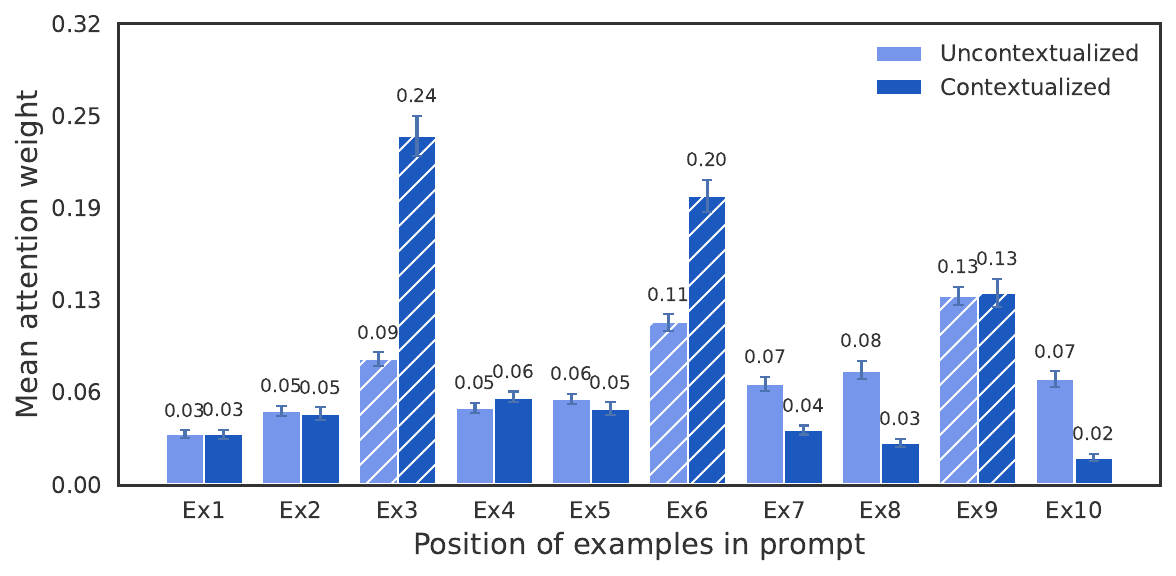}

    \caption{\textbf{Ambiguous Tasks:} \texttt{Llama-3.1-8B-Instruct} Influence of contextualization on mean attention weights of individual examples on FV heads. Each row displays the mean attention for a normal task across 3-shot, 5-shot, and 10-shot settings before and after contextualization. This is an extension of Main Paper Fig.~\ref{fig:qk_alignment}.}
    \label{fig:Llama-3.1-8B-Instruct_ambiguous_qk_alignment}
\end{figure}

\begin{figure}[h]
    \centering
    \small
    \newcommand{\taskname}[1]{{\scriptsize \textsc{#1}}\\[0.2ex]}

    \begin{subfigure}
        \centering

        \makebox[0pt][r]{\raisebox{1.2ex}{\textbf{(a)}}\hspace{3mm}}%
        \begin{minipage}{0.31\linewidth} \centering \taskname{Country--Capital}
            \includegraphics[width=\linewidth]{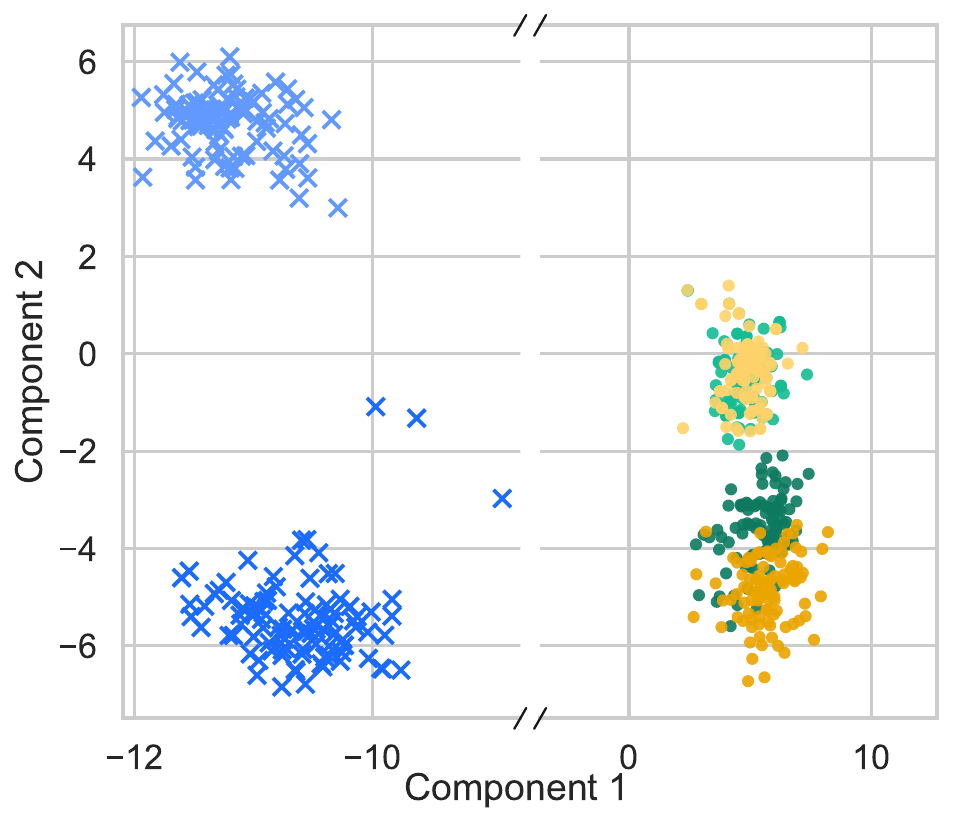}
        \end{minipage} \hfill
        \begin{minipage}{0.31\linewidth} \centering \taskname{Person--Sport}
            \includegraphics[width=\linewidth]{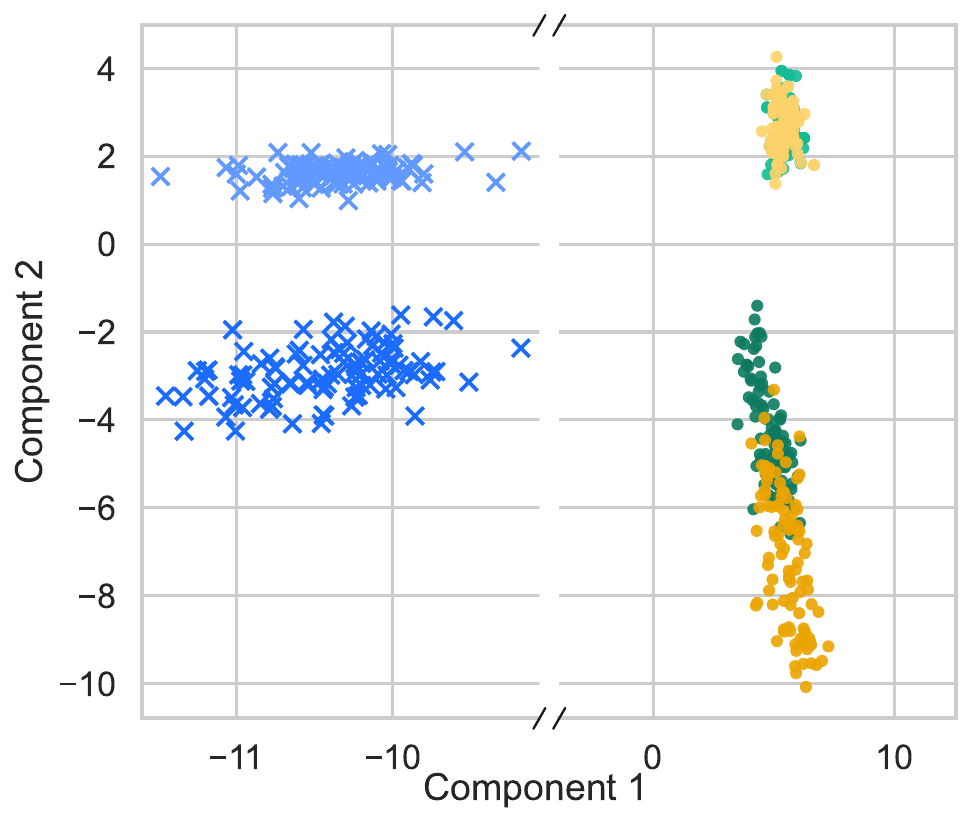}
        \end{minipage} \hfill
        \begin{minipage}{0.31\linewidth} \centering \taskname{Park--Country}
            \includegraphics[width=\linewidth]{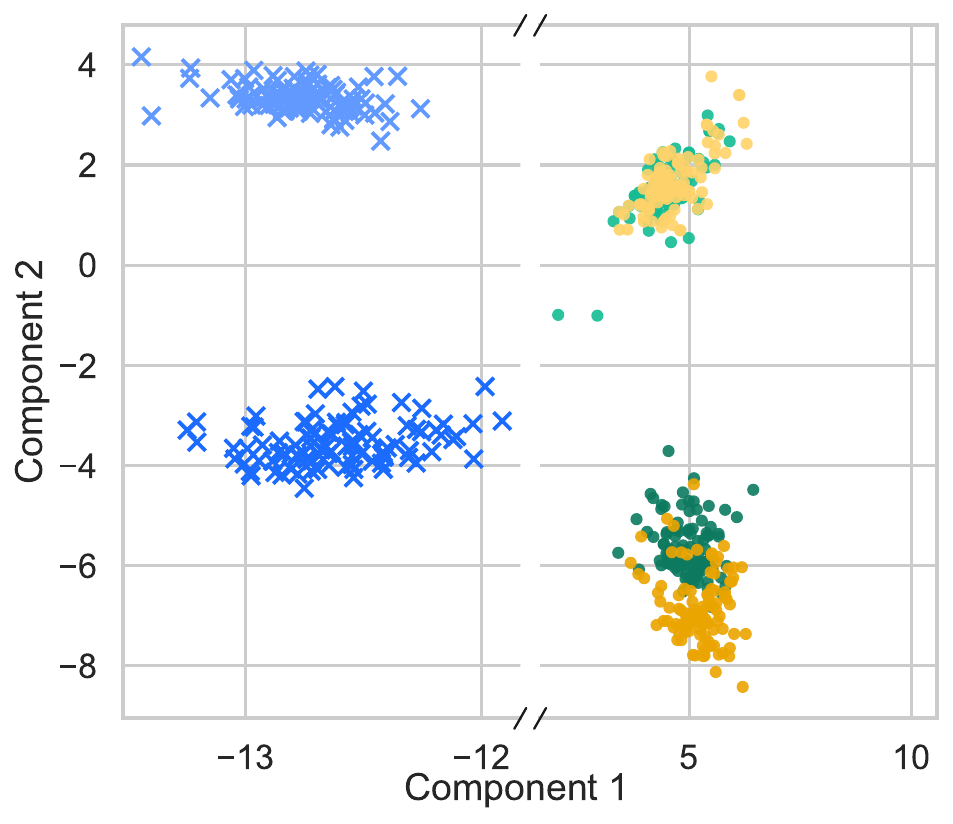}
        \end{minipage} \\[0.4ex]

        \begin{minipage}{0.31\linewidth} \centering \taskname{Present--Past}
            \includegraphics[width=\linewidth]{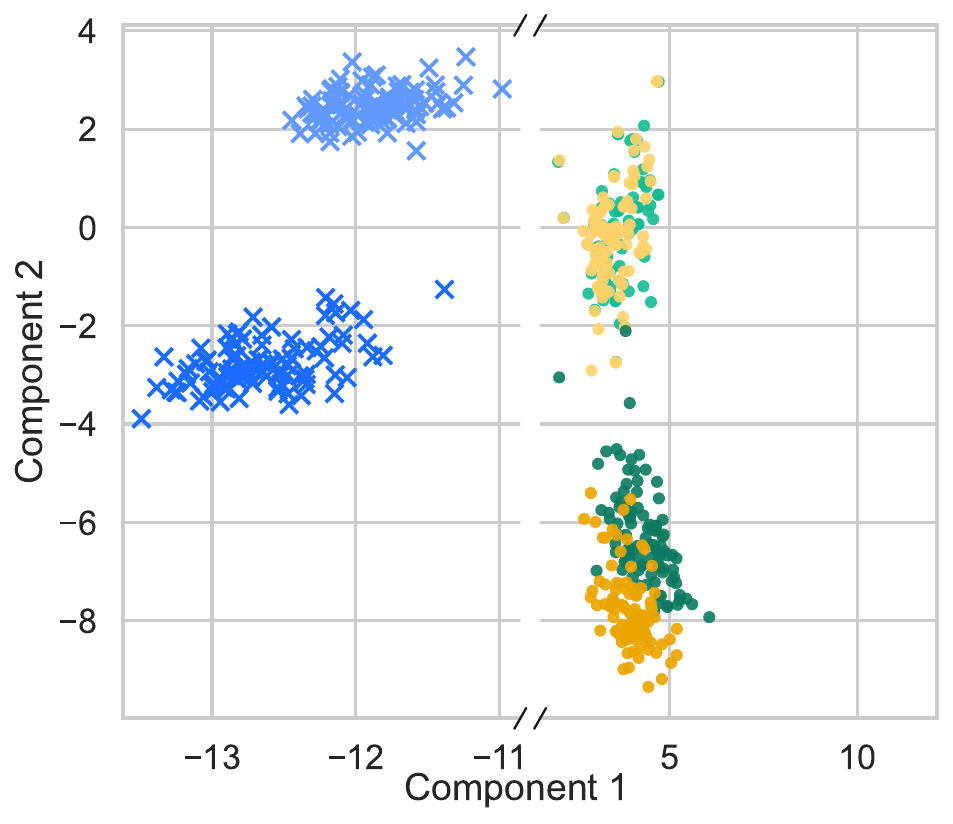}
        \end{minipage} \hspace{0.4em}
        \begin{minipage}{0.31\linewidth} \centering \taskname{English--French}
            \includegraphics[width=\linewidth]{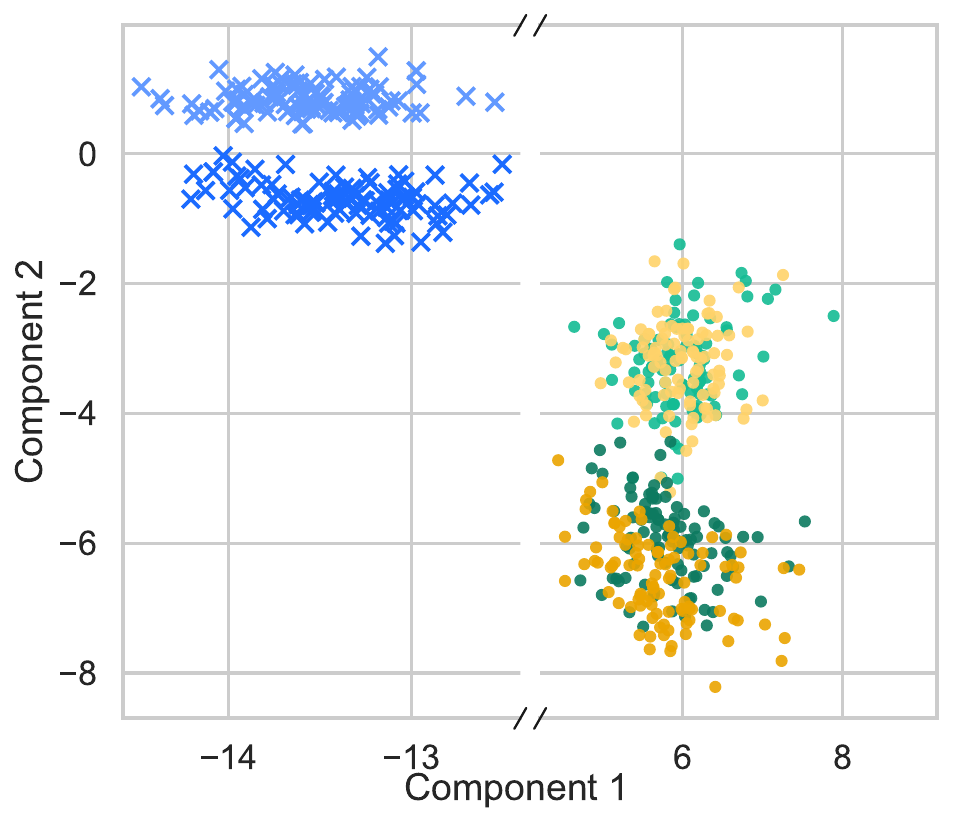}
        \end{minipage} \\[0.6ex]

        \vspace{-2.5mm}
        \includegraphics[width=0.55\linewidth]{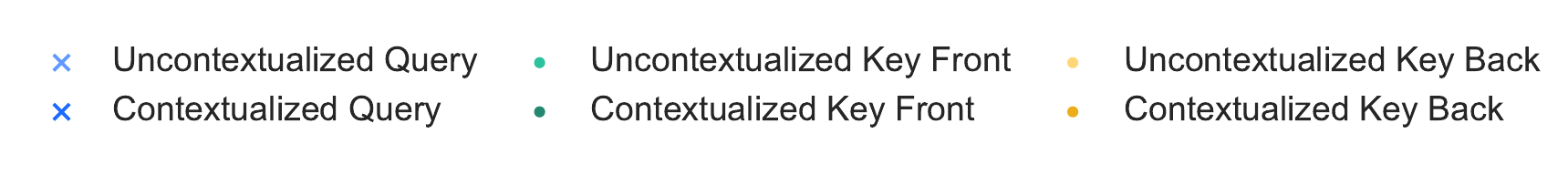}
    \end{subfigure}
    \hspace{1.5em} %
    \begin{subfigure}
        \centering

        \makebox[0pt][r]{\raisebox{1.2ex}{\textbf{(b)}}\hspace{3mm}}%
        \begin{minipage}{0.31\linewidth} \centering 
        \taskname{Present--Past Ambiguous}
            \includegraphics[width=\linewidth]{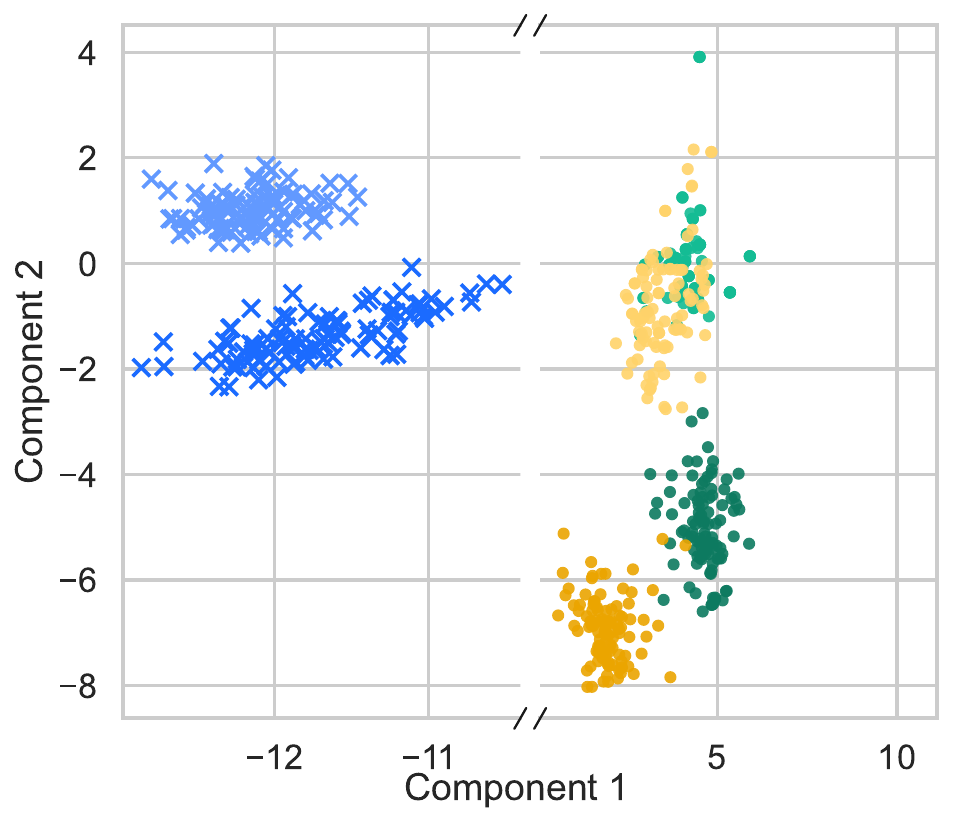}
        \end{minipage} \hfill
        \begin{minipage}{0.31\linewidth} \centering
        \taskname{Chinese Ambiguous}
        \includegraphics[width=\linewidth]{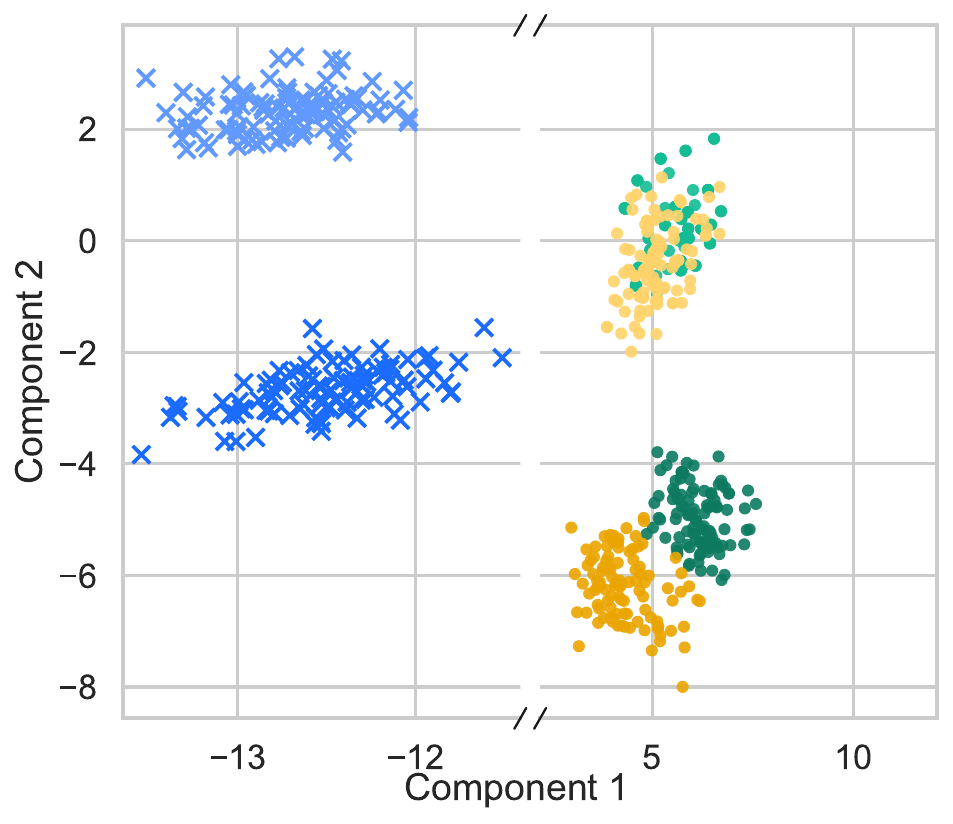}
            
        \end{minipage} \hfill
        \begin{minipage}{0.31\linewidth} \centering
        \taskname{Japanese Ambiguous}
        \includegraphics[width=\linewidth]{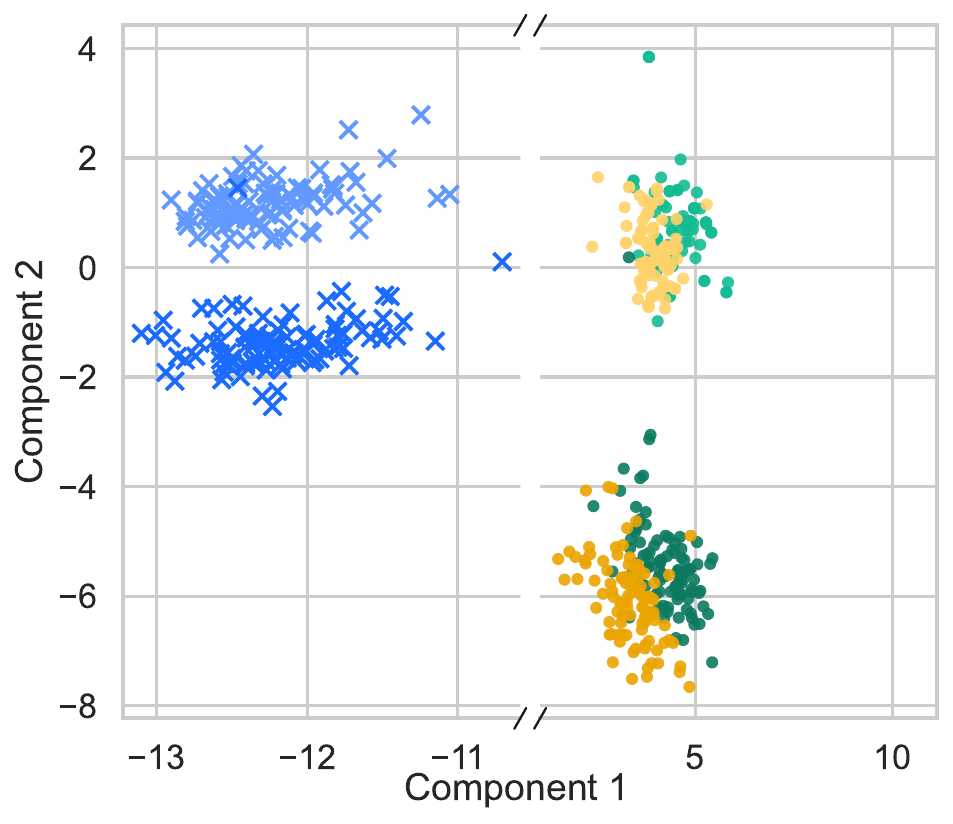}
        \end{minipage} \\[0.4ex]

        \begin{minipage}{0.31\linewidth} \centering \taskname{French Ambiguous}
            \includegraphics[width=\linewidth]{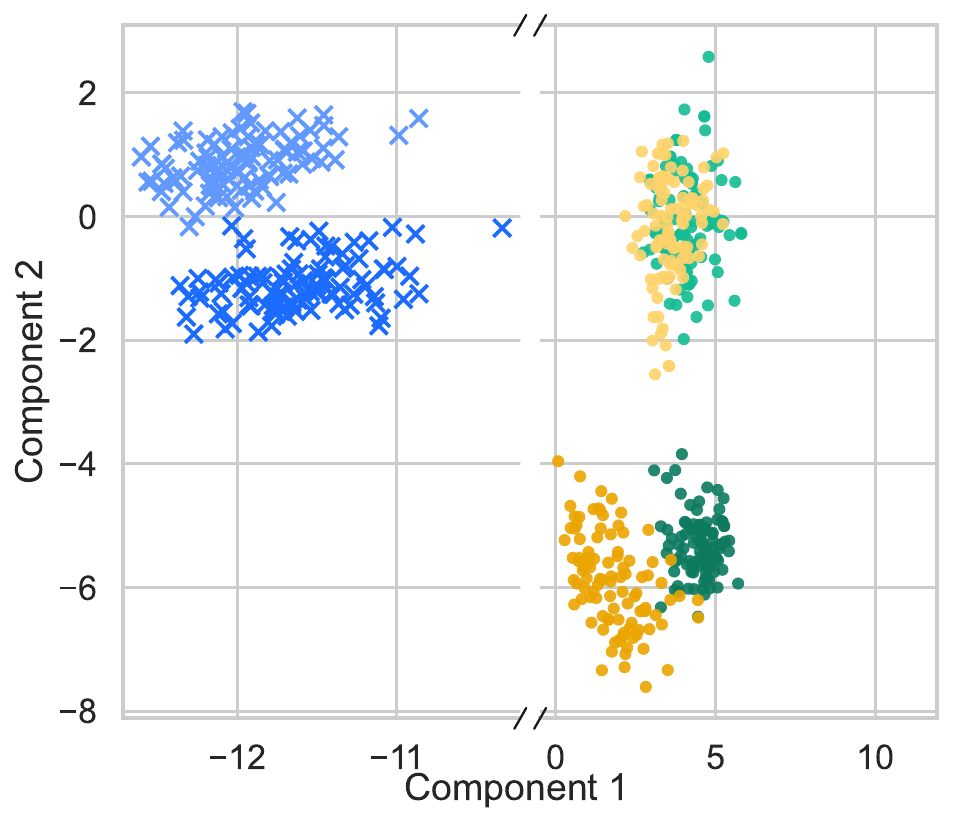}
        \end{minipage} \hspace{0.4em}
        \begin{minipage}{0.31\linewidth} \centering \taskname{Capitalize Ambiguous}
            \includegraphics[width=\linewidth]{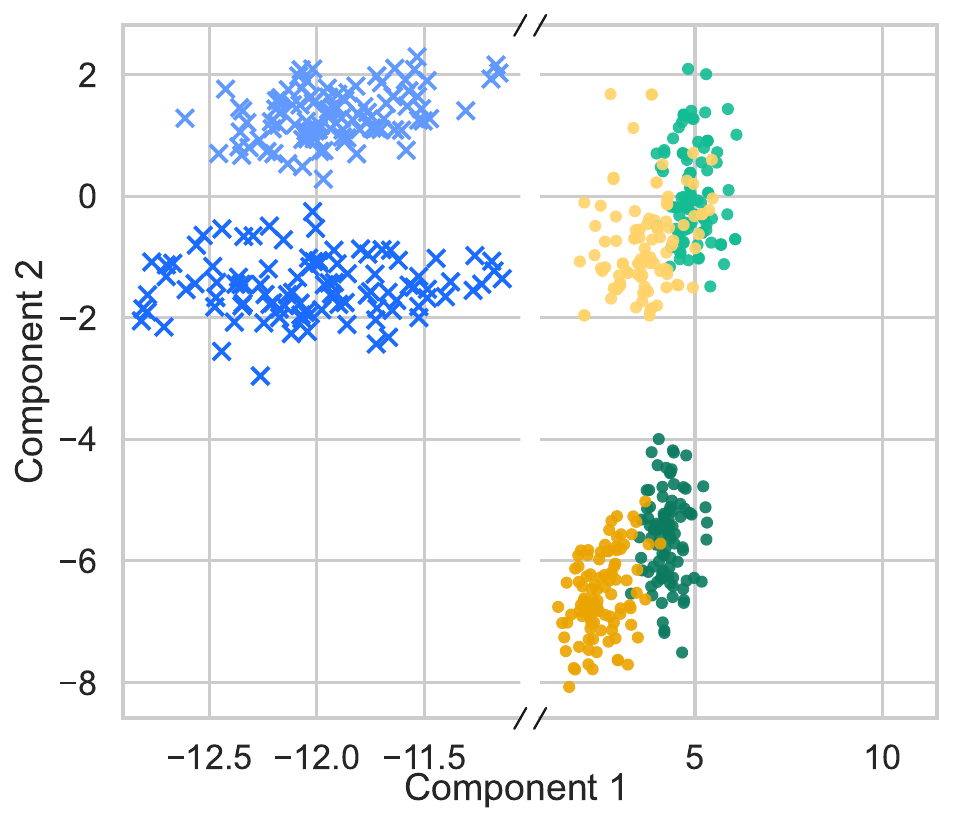}
        \end{minipage} \\[0.6ex]

        \vspace{-2.5mm}
        \includegraphics[width=0.55\linewidth]{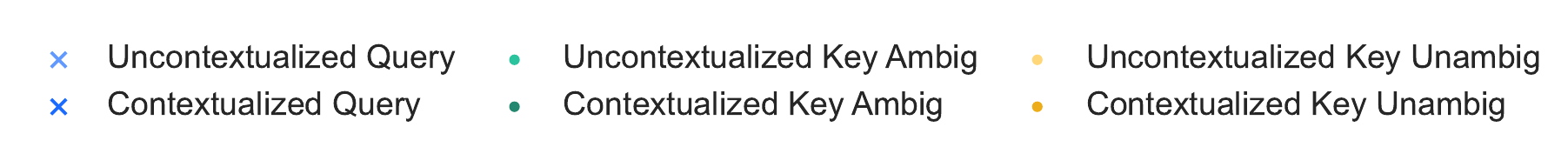}
    \end{subfigure}

    \vspace{-3mm}
    \caption{Shared PCA visualization of Query and Key vectors extracted from the Top-1 FV head on \texttt{gemma-2-2b}. (a) Normal tasks. (b) Ambiguous tasks. For each panel, each category of Q/K vectors is randomly sampled from 100 different prompts. The Q vectors correspond to the Query of the final (prediction) token in the prompt. The K vectors correspond to the Key of the \emph{single token} within each example that receives the highest attention from this last-token Query (i.e., the Top-1 attended token within that example). For normal tasks, we additionally label whether the selected Top-1 token lies in the \textbf{Front} (first half) or \textbf{Back} (second half) of the example token sequence, reflecting the dominant positional bias.  In ambiguous tasks (b), before contextualization, ambiguous Keys and unambiguous Keys largely overlap and appear mixed within the same cluster. After contextualization, the Key distributions reorganize and exhibit a clear separation: unambiguous Keys form a distinct cluster that lies noticeably closer to the corresponding Query vectors on Component 1 axis than ambiguous Keys. }
    \label{fig:qk_geometry}
\end{figure}
\clearpage

\section{Supplemental Evidence for Section 5: QK-mediated attention routing drives FV quality improvements}
\label{appendix:qkv_regimes}

\begin{figure}[h!]
    \centering
    \begin{tikzpicture}
        \node[anchor=south west,inner sep=0] (image) at (0,0) {\includegraphics[width=1.0\linewidth]{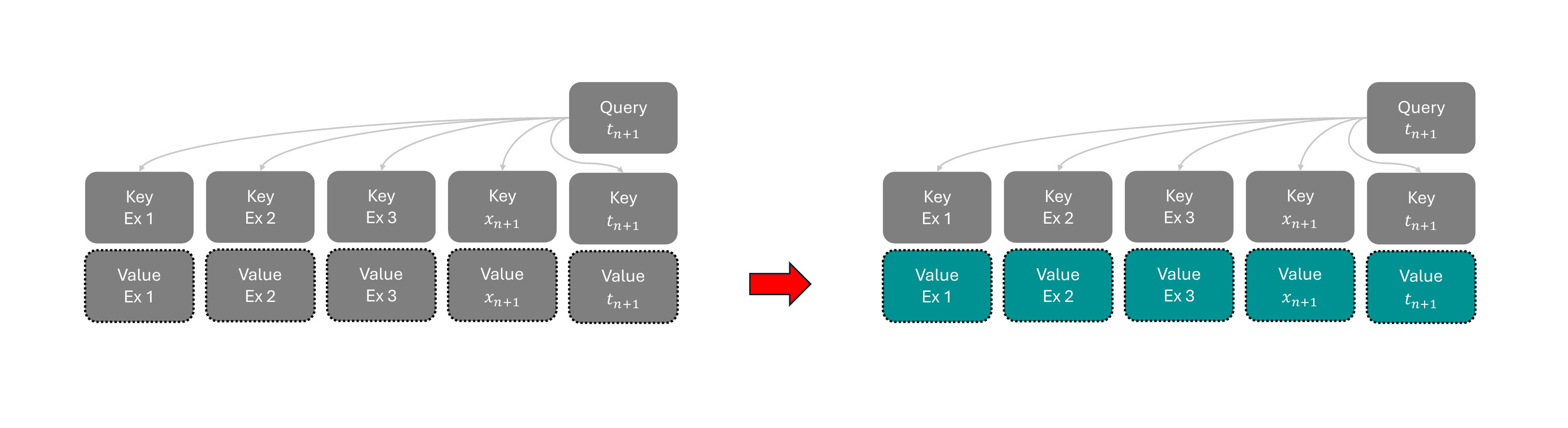}};
        \node[anchor=north west] at (image.north west) {(a)};
    \end{tikzpicture}

    \vspace{-2em}
    \begin{tikzpicture}
        \node[anchor=south west,inner sep=0] (image) at (0,0) {\includegraphics[width=1.0\linewidth]{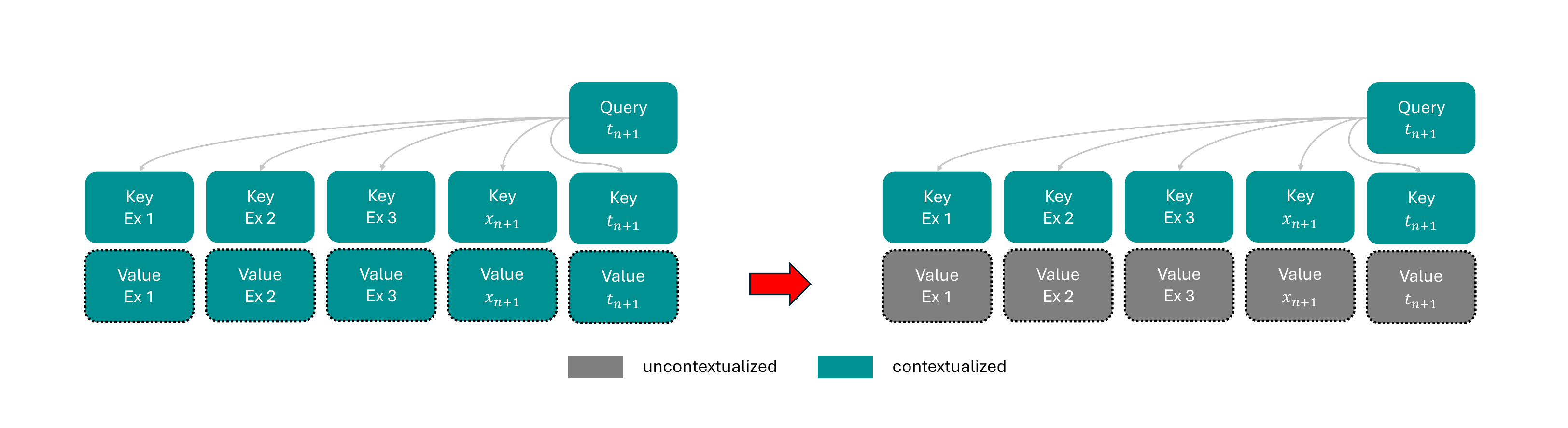}};
        \node[anchor=north west] at (image.north west) {(b)};
    \end{tikzpicture}
    \caption{Diagram of the Value patching process in Section \ref{section5}. (a) Patching V under uncontextualized ablation, replacing uncontextualized V with contextualized V, keeping Q/K fixed. (b) Patching V under contextualized ablation, replacing contextualized V with uncontextualized V, keeping Q/K fixed. Both patching experiments are done on all attention heads.}
    \label{fig:v_patching_diagram}
\end{figure}

In the main text (Sec.~\ref{section5}), we report aggregated Shapley trends showing that contextualization gains in FV quality are most consistently mediated by the QK pathway, while V acts as a less ubiquitous, often complementary contributor. Here we provide the full \emph{configuration-level} statistics underlying those summaries by enumerating all \emph{model $\times$ dataset $\times$ $n$-shot $\times$ positional-setting} combinations and assigning each to a causal regime (QK-only, V-only, QK+V, Others) using the same activity threshold $\phi \ge 0.05$. This detailed breakdown makes transparent how the aggregate counts arise and verifies that the qualitative conclusion holds across individual models and task categories rather than being driven by a small subset of settings (Table.~\ref{tab:qkv_regimes}). Under this criterion, contextualization is rarely negligible on ambiguous tasks (inactive in only 31/180 configurations,  i.e., $\phi_{QK} < 0.05$ and $\phi_V < 0.05$ )
where 180 counts all \emph{models $\times$ datasets $\times$ n-shot $\times$ positional settings} combinations on ambiguous prompts,
and QK is active in the vast majority of cases (\textbf{146/180}); consequently, purely V-driven improvements are rare
(\textbf{V-only: 3/180}). V is active in 56/180 configurations and most often provides additional gains on top of QK rather
than dominating it. In contrast, on normal tasks contextualization is frequently weak or inconsistent (inactive in 48/90
configurations; among the remaining settings where
contextualization has a non-negligible effect, QK is active in most cases (37/42), whereas V is active in fewer (17/42),
with V-only behavior again uncommon (5/90). 

\begin{table*}[h!]
\centering
\begin{tabular}{llccccc}
\toprule
\multirow{2}{*}{Model} & \multirow{2}{*}{Task Category} & \multicolumn{4}{c}{Contextualization Causal Regime} & \multirow{2}{*}{Total} \\
\cmidrule(lr){3-6}
 &  & QK-only & V-only & QK+V & $\text{Others}^*$ & \\
\midrule
\multirow{2}{*}{\texttt{gemma-2-2b}} & Normal  & 2 & 4 & 2 & 7 & 15 \\
                            & Ambiguous  & 13 & 1 & 12 & 4 & 30 \\
\addlinespace
\multirow{2}{*}{\texttt{gemma-2-9b}} & Normal   & 3 & 0 & 3 & 9 & 15 \\
                            & Ambiguous  & 8 & 2 & 20 & 0 & 30 \\
\addlinespace
\multirow{2}{*}{\texttt{gemma-2-27b}} & Normal   & 7 & 0 & 0 & 8 & 15 \\
                             & Ambiguous  & 24 & 0 & 3 & 3 & 30 \\
\midrule
\multirow{2}{*}{\texttt{Llama-3.2-1B}}  & Normal   & 3 & 0 & 4 & 8 & 15 \\
                            & Ambiguous  & 10 & 0 & 9 & 11 & 30 \\
\multirow{2}{*}{\texttt{Llama-3.2-3B}}  & Normal   & 3 & 1 & 3 & 8 & 15 \\
                            & Ambiguous  & 15 & 0 & 8 & 7 & 30 \\
\multirow{2}{*}{\texttt{Llama-3.1-8B-Instruct}}  & Normal   & 7 & 0 & 0 & 8 & 15 \\
                            & Ambiguous  & 23 & 0 & 1 & 6 & 30 \\
\midrule
\multirow{2}{*}{Total}      & Normal  & 25 & 5 & 12 & 48 & 90 \\
                            & Ambiguous & 93 & 3 & 53 & 31 & 180 \\
\bottomrule
\end{tabular}
\caption{\textbf{Distribution of Causal Regimes across Models and Task Categories.} 
We categorize each experimental configuration (defined by a unique model, $n$-shot, and task type) into one of four regimes based on the Shapley value ($\phi$) of the QK and V circuits. This is the detailed statistics of Main Paper Sec.\ref{section5}.}
\label{tab:qkv_regimes}
\end{table*}

\begin{figure}[h!]
    \centering
    \small

    \newcommand{\tasklabel}[1]{%
        \begin{tikzpicture}[baseline=(node.center)]
            \node[anchor=west, text width=2.6cm, font=\scriptsize\scshape\bfseries, inner sep=0pt] (node) {#1};
        \end{tikzpicture}%
    }
    \newcommand{\centerimg}[1]{%
        $\vcenter{\hbox{\includegraphics[width=0.105\linewidth]{#1}}}$%
    }
    \newcommand{\shotgroup}[2]{%
        \centerimg{figures/gemma-2-2b/qkv_accuracy/qkv_accuracy_abs_#1-#2shot.pdf}%
        \hspace{1pt}%
        \centerimg{figures/gemma-2-2b/qkv_accuracy/qkv_accuracy_delta_#1-#2shot.pdf}%
    }
    \newcommand{\colgap}{\hfill}

    \tasklabel{} \colgap
    \makebox[0.21\linewidth][c]{\textbf{3-Shot (Abs \& $\Delta$)}} \colgap
    \makebox[0.21\linewidth][c]{\textbf{5-Shot (Abs \& $\Delta$)}} \colgap
    \makebox[0.21\linewidth][c]{\textbf{10-Shot (Abs \& $\Delta$)}} \vspace{1.5mm} \\

    \tasklabel{Country--Capital} \colgap \shotgroup{country-capital}{3} \colgap \shotgroup{country-capital}{5} \colgap \shotgroup{country-capital}{10} \\[0.5ex]
    
    \tasklabel{Person--Sport} \colgap \shotgroup{person-sport}{3} \colgap \shotgroup{person-sport}{5} \colgap \shotgroup{person-sport}{10} \\[0.5ex]
    
    \tasklabel{Park--Country} \colgap \shotgroup{park-country}{3} \colgap \shotgroup{park-country}{5} \colgap \shotgroup{park-country}{10} \\[0.5ex]
    
    \tasklabel{Present--Past} \colgap \shotgroup{present-past}{3} \colgap \shotgroup{present-past}{5} \colgap \shotgroup{present-past}{10} \\[0.5ex]

    \tasklabel{English--French} \colgap \shotgroup{english-french}{3} \colgap \shotgroup{english-french}{5} \colgap \shotgroup{english-french}{10}

    \vspace{-2mm}
\caption{Causal Decomposition of Contextualization Gains on \texttt{gemma-2-2b}. (Normal Tasks) 
For each task and shot count, the \textbf{left plot} shows absolute FV injection accuracy $F(\text{QK}, \text{V})$ across four factorial intervention settings: 
(1) \textit{Uncontextualized} $F(0,0)$; 
(2) \textit{QK-contextualized} $F(1,0)$; 
(3) \textit{V-contextualized} $F(0,1)$; 
(4) \textit{Full contextualized} $F(1,1)$. 
We plot the marginal effects: $\mathrm{QK}@\mathrm{V}_{unc} := F(1, 0) - F(0,0)$, $\mathrm{QK}@\mathrm{V}_{ctx} := F(1, 1) - F(0,1)$,
$\mathrm{V}@\mathrm{QK}_{unc} := F(0, 1) - F(0,0)$,
$\mathrm{V}@\mathrm{QK}_{ctx} := F(1, 1) - F(1,0)$.
The Shapley values are then
$\phi_{QK} := \frac{1}{2}(\mathrm{QK}@\mathrm{V}_{unc} + \mathrm{QK}@\mathrm{V}_{ctx})$,
$\phi_{V} := \frac{1}{2}(\mathrm{V}@\mathrm{QK}_{unc} + \mathrm{V}@\mathrm{QK}_{ctx})$.
The \textbf{right plot} displays the marginal gains ($\Delta$) used in the Shapley decomposition: the first two bars show the effect of contextualizing the QK pathway ($\phi_{\text{QK}}$ components) under different V settings, while the latter two show the effect of contextualizing the V pathway ($\phi_{\text{V}}$ components) under different QK settings. This is the detailed result of Main Paper Fig.~\ref{fig:shapley_value}.}
    \label{fig:Gemma-2-2b_qkv_acc_normal}
\end{figure}

\begin{figure}[h!]
    \centering
    \small

    \newcommand{\tasklabel}[1]{%
        \begin{tikzpicture}[baseline=(node.center)]
            \node[anchor=west, text width=2.6cm, font=\scriptsize\scshape\bfseries, inner sep=0pt] (node) {#1};
        \end{tikzpicture}%
    }
    \newcommand{\centerimg}[1]{%
        $\vcenter{\hbox{\includegraphics[width=0.105\linewidth]{#1}}}$%
    }
    \newcommand{\shotgroup}[2]{%
        \centerimg{figures/gemma-2-9b/qkv_accuracy/qkv_accuracy_abs_#1-#2shot.pdf}%
        \hspace{1pt}%
        \centerimg{figures/gemma-2-9b/qkv_accuracy/qkv_accuracy_delta_#1-#2shot.pdf}%
    }
    \newcommand{\colgap}{\hfill}

    \tasklabel{} \colgap
    \makebox[0.21\linewidth][c]{\textbf{3-Shot (Abs \& $\Delta$)}} \colgap
    \makebox[0.21\linewidth][c]{\textbf{5-Shot (Abs \& $\Delta$)}} \colgap
    \makebox[0.21\linewidth][c]{\textbf{10-Shot (Abs \& $\Delta$)}} \vspace{1.5mm} \\

    \tasklabel{Country--Capital} \colgap \shotgroup{country-capital}{3} \colgap \shotgroup{country-capital}{5} \colgap \shotgroup{country-capital}{10} \\[0.5ex]
    
    \tasklabel{Person--Sport} \colgap \shotgroup{person-sport}{3} \colgap \shotgroup{person-sport}{5} \colgap \shotgroup{person-sport}{10} \\[0.5ex]
    
    \tasklabel{Park--Country} \colgap \shotgroup{park-country}{3} \colgap \shotgroup{park-country}{5} \colgap \shotgroup{park-country}{10} \\[0.5ex]
    
    \tasklabel{Present--Past} \colgap \shotgroup{present-past}{3} \colgap \shotgroup{present-past}{5} \colgap \shotgroup{present-past}{10} \\[0.5ex]

    \tasklabel{English--French} \colgap \shotgroup{english-french}{3} \colgap \shotgroup{english-french}{5} \colgap \shotgroup{english-french}{10}

    \vspace{-2mm}
\caption{Causal Decomposition of Contextualization Gains on \texttt{gemma-2-9b}. (Normal Tasks) This is the detailed result of Main Paper Fig.~\ref{fig:shapley_value}.}
    \label{fig:Gemma-2-9b_qkv_acc_normal}
\end{figure}

\begin{figure}[h!]
    \centering
    \small

    \newcommand{\tasklabel}[1]{%
        \begin{tikzpicture}[baseline=(node.center)]
            \node[anchor=west, text width=2.6cm, font=\scriptsize\scshape\bfseries, inner sep=0pt] (node) {#1};
        \end{tikzpicture}%
    }
    \newcommand{\centerimg}[1]{%
        $\vcenter{\hbox{\includegraphics[width=0.105\linewidth]{#1}}}$%
    }
    \newcommand{\shotgroup}[2]{%
        \centerimg{figures/gemma-2-27b/qkv_accuracy/qkv_accuracy_abs_#1-#2shot.pdf}%
        \hspace{1pt}%
        \centerimg{figures/gemma-2-27b/qkv_accuracy/qkv_accuracy_delta_#1-#2shot.pdf}%
    }
    \newcommand{\colgap}{\hfill}

    \tasklabel{} \colgap
    \makebox[0.21\linewidth][c]{\textbf{3-Shot (Abs \& $\Delta$)}} \colgap
    \makebox[0.21\linewidth][c]{\textbf{5-Shot (Abs \& $\Delta$)}} \colgap
    \makebox[0.21\linewidth][c]{\textbf{10-Shot (Abs \& $\Delta$)}} \vspace{1.5mm} \\

    \tasklabel{Country--Capital} \colgap \shotgroup{country-capital}{3} \colgap \shotgroup{country-capital}{5} \colgap \shotgroup{country-capital}{10} \\[0.5ex]
    
    \tasklabel{Person--Sport} \colgap \shotgroup{person-sport}{3} \colgap \shotgroup{person-sport}{5} \colgap \shotgroup{person-sport}{10} \\[0.5ex]
    
    \tasklabel{Park--Country} \colgap \shotgroup{park-country}{3} \colgap \shotgroup{park-country}{5} \colgap \shotgroup{park-country}{10} \\[0.5ex]
    
    \tasklabel{Present--Past} \colgap \shotgroup{present-past}{3} \colgap \shotgroup{present-past}{5} \colgap \shotgroup{present-past}{10} \\[0.5ex]

    \tasklabel{English--French} \colgap \shotgroup{english-french}{3} \colgap \shotgroup{english-french}{5} \colgap \shotgroup{english-french}{10}

    \vspace{-2mm}
\caption{Causal Decomposition of Contextualization Gains on \texttt{gemma-2-27b}. (Normal Tasks) This is the detailed result of Main Paper Fig.~\ref{fig:shapley_value}.}
    \label{fig:Gemma-2-27b_qkv_acc_normal}
\end{figure}

\begin{figure}[h!]
    \centering
    \small

    \newcommand{\tasklabel}[1]{%
        \begin{tikzpicture}[baseline=(node.center)]
            \node[anchor=west, text width=2.6cm, font=\scriptsize\scshape\bfseries, inner sep=0pt] (node) {#1};
        \end{tikzpicture}%
    }
    \newcommand{\centerimg}[1]{%
        $\vcenter{\hbox{\includegraphics[width=0.105\linewidth]{#1}}}$%
    }
    \newcommand{\shotgroup}[2]{%
        \centerimg{figures/Llama-3.2-1B/qkv_accuracy/qkv_accuracy_abs_#1-#2shot.pdf}%
        \hspace{1pt}%
        \centerimg{figures/Llama-3.2-1B/qkv_accuracy/qkv_accuracy_delta_#1-#2shot.pdf}%
    }
    \newcommand{\colgap}{\hfill}

    \tasklabel{} \colgap
    \makebox[0.21\linewidth][c]{\textbf{3-Shot (Abs \& $\Delta$)}} \colgap
    \makebox[0.21\linewidth][c]{\textbf{5-Shot (Abs \& $\Delta$)}} \colgap
    \makebox[0.21\linewidth][c]{\textbf{10-Shot (Abs \& $\Delta$)}} \vspace{1.5mm} \\

    \tasklabel{Country--Capital} \colgap \shotgroup{country-capital}{3} \colgap \shotgroup{country-capital}{5} \colgap \shotgroup{country-capital}{10} \\[0.5ex]
    
    \tasklabel{Person--Sport} \colgap \shotgroup{person-sport}{3} \colgap \shotgroup{person-sport}{5} \colgap \shotgroup{person-sport}{10} \\[0.5ex]
    
    \tasklabel{Park--Country} \colgap \shotgroup{park-country}{3} \colgap \shotgroup{park-country}{5} \colgap \shotgroup{park-country}{10} \\[0.5ex]
    
    \tasklabel{Present--Past} \colgap \shotgroup{present-past}{3} \colgap \shotgroup{present-past}{5} \colgap \shotgroup{present-past}{10} \\[0.5ex]

    \tasklabel{English--French} \colgap \shotgroup{english-french}{3} \colgap \shotgroup{english-french}{5} \colgap \shotgroup{english-french}{10}

    \vspace{-2mm}
\caption{Causal Decomposition of Contextualization Gains on \texttt{Llama-3.2-1B}. (Normal Tasks) This is the detailed result of Main Paper Fig.~\ref{fig:shapley_value}.}
    \label{fig:Llama-3.2-1B_qkv_acc_normal}
\end{figure}

\begin{figure}[h!]
    \centering
    \small

    \newcommand{\tasklabel}[1]{%
        \begin{tikzpicture}[baseline=(node.center)]
            \node[anchor=west, text width=2.6cm, font=\scriptsize\scshape\bfseries, inner sep=0pt] (node) {#1};
        \end{tikzpicture}%
    }
    \newcommand{\centerimg}[1]{%
        $\vcenter{\hbox{\includegraphics[width=0.105\linewidth]{#1}}}$%
    }
    \newcommand{\shotgroup}[2]{%
        \centerimg{figures/Llama-3.2-3B/qkv_accuracy/qkv_accuracy_abs_#1-#2shot.pdf}%
        \hspace{1pt}%
        \centerimg{figures/Llama-3.2-3B/qkv_accuracy/qkv_accuracy_delta_#1-#2shot.pdf}%
    }
    \newcommand{\colgap}{\hfill}

    \tasklabel{} \colgap
    \makebox[0.21\linewidth][c]{\textbf{3-Shot (Abs \& $\Delta$)}} \colgap
    \makebox[0.21\linewidth][c]{\textbf{5-Shot (Abs \& $\Delta$)}} \colgap
    \makebox[0.21\linewidth][c]{\textbf{10-Shot (Abs \& $\Delta$)}} \vspace{1.5mm} \\

    \tasklabel{Country--Capital} \colgap \shotgroup{country-capital}{3} \colgap \shotgroup{country-capital}{5} \colgap \shotgroup{country-capital}{10} \\[0.5ex]
    
    \tasklabel{Person--Sport} \colgap \shotgroup{person-sport}{3} \colgap \shotgroup{person-sport}{5} \colgap \shotgroup{person-sport}{10} \\[0.5ex]
    
    \tasklabel{Park--Country} \colgap \shotgroup{park-country}{3} \colgap \shotgroup{park-country}{5} \colgap \shotgroup{park-country}{10} \\[0.5ex]
    
    \tasklabel{Present--Past} \colgap \shotgroup{present-past}{3} \colgap \shotgroup{present-past}{5} \colgap \shotgroup{present-past}{10} \\[0.5ex]

    \tasklabel{English--French} \colgap \shotgroup{english-french}{3} \colgap \shotgroup{english-french}{5} \colgap \shotgroup{english-french}{10}

    \vspace{-2mm}
\caption{Causal Decomposition of Contextualization Gains on \texttt{Llama-3.2-3B}. (Normal Tasks) This is the detailed result of Main Paper Fig.~\ref{fig:shapley_value}.}
    \label{fig:Llama-3.2-3B_qkv_acc_normal}
\end{figure}

\begin{figure}[h!]
    \centering
    \small

    \newcommand{\tasklabel}[1]{%
        \begin{tikzpicture}[baseline=(node.center)]
            \node[anchor=west, text width=2.6cm, font=\scriptsize\scshape\bfseries, inner sep=0pt] (node) {#1};
        \end{tikzpicture}%
    }
    \newcommand{\centerimg}[1]{%
        $\vcenter{\hbox{\includegraphics[width=0.105\linewidth]{#1}}}$%
    }
    \newcommand{\shotgroup}[2]{%
        \centerimg{figures/Llama-3.1-8B-Instruct/qkv_accuracy/qkv_accuracy_abs_#1-#2shot.pdf}%
        \hspace{1pt}%
        \centerimg{figures/Llama-3.1-8B-Instruct/qkv_accuracy/qkv_accuracy_delta_#1-#2shot.pdf}%
    }
    \newcommand{\colgap}{\hfill}

    \tasklabel{} \colgap
    \makebox[0.21\linewidth][c]{\textbf{3-Shot (Abs \& $\Delta$)}} \colgap
    \makebox[0.21\linewidth][c]{\textbf{5-Shot (Abs \& $\Delta$)}} \colgap
    \makebox[0.21\linewidth][c]{\textbf{10-Shot (Abs \& $\Delta$)}} \vspace{1.5mm} \\

    \tasklabel{Country--Capital} \colgap \shotgroup{country-capital}{3} \colgap \shotgroup{country-capital}{5} \colgap \shotgroup{country-capital}{10} \\[0.5ex]
    
    \tasklabel{Person--Sport} \colgap \shotgroup{person-sport}{3} \colgap \shotgroup{person-sport}{5} \colgap \shotgroup{person-sport}{10} \\[0.5ex]
    
    \tasklabel{Park--Country} \colgap \shotgroup{park-country}{3} \colgap \shotgroup{park-country}{5} \colgap \shotgroup{park-country}{10} \\[0.5ex]
    
    \tasklabel{Present--Past} \colgap \shotgroup{present-past}{3} \colgap \shotgroup{present-past}{5} \colgap \shotgroup{present-past}{10} \\[0.5ex]

    \tasklabel{English--French} \colgap \shotgroup{english-french}{3} \colgap \shotgroup{english-french}{5} \colgap \shotgroup{english-french}{10}

    \vspace{-2mm}
\caption{Causal Decomposition of Contextualization Gains on \texttt{Llama-3.1-8B-Instruct}. (Normal Tasks) This is the detailed result of Main Paper Fig.~\ref{fig:shapley_value}.}
    \label{fig:Llama-3.1-8B-Instruct_qkv_acc_normal}
\end{figure}

\clearpage

\begin{figure}[t!]
    \centering
    \small

    \newcommand{\tasklabel}[1]{%
        \begin{tikzpicture}[baseline=(node.center)]
            \node[anchor=west, text width=2.6cm, font=\scriptsize\scshape\bfseries, inner sep=0pt] (node) {#1};
        \end{tikzpicture}%
    }
    \newcommand{\centerimg}[1]{%
        $\vcenter{\hbox{\includegraphics[width=0.105\linewidth]{#1}}}$%
    }
    \newcommand{\shotgroup}[2]{%
        \centerimg{figures/gemma-2-2b/qkv_accuracy/qkv_accuracy_abs_#1-#2shot.pdf}%
        \hspace{1pt}%
        \centerimg{figures/gemma-2-2b/qkv_accuracy/qkv_accuracy_delta_#1-#2shot.pdf}%
    }
    \newcommand{\colgap}{\hfill}

    \tasklabel{} \colgap
    \makebox[0.21\linewidth][c]{\textbf{3-Shot (Abs \& $\Delta$)}} \colgap
    \makebox[0.21\linewidth][c]{\textbf{5-Shot (Abs \& $\Delta$)}} \colgap
    \makebox[0.21\linewidth][c]{\textbf{10-Shot (Abs \& $\Delta$)}} \vspace{1.5mm} \\

    \tasklabel{Present--Past Ambiguous} \colgap \shotgroup{present-past-ambiguous}{3} \colgap \shotgroup{present-past-ambiguous}{5} \colgap \shotgroup{present-past-ambiguous}{10} \\[0.2ex]

    \tasklabel{Present--Past Ambiguous-2} \colgap \shotgroup{present-past-ambiguous-2}{3} \colgap \shotgroup{present-past-ambiguous-2}{5} \colgap \shotgroup{present-past-ambiguous-2}{10} \\[0.2ex]

    \tasklabel{Chinese Ambiguous} \colgap \shotgroup{chinese-ambiguous}{3} \colgap \shotgroup{chinese-ambiguous}{5} \colgap \shotgroup{chinese-ambiguous}{10} \\[0.2ex]

    \tasklabel{Chinese Ambiguous-2} \colgap \shotgroup{chinese-ambiguous-2}{3} \colgap \shotgroup{chinese-ambiguous-2}{5} \colgap \shotgroup{chinese-ambiguous-2}{10} \\[0.2ex]

    \tasklabel{Japanese Ambiguous} \colgap \shotgroup{japanese-ambiguous}{3} \colgap \shotgroup{japanese-ambiguous}{5} \colgap \shotgroup{japanese-ambiguous}{10} \\[0.2ex]

    \tasklabel{Japanese Ambiguous-2} \colgap \shotgroup{japanese-ambiguous-2}{3} \colgap \shotgroup{japanese-ambiguous-2}{5} \colgap \shotgroup{japanese-ambiguous-2}{10} \\[0.2ex]

    \tasklabel{French Ambiguous} \colgap \shotgroup{french-ambiguous}{3} \colgap \shotgroup{french-ambiguous}{5} \colgap \shotgroup{french-ambiguous}{10} \\[0.2ex]

    \tasklabel{French Ambiguous-2} \colgap \shotgroup{french-ambiguous-2}{3} \colgap \shotgroup{french-ambiguous-2}{5} \colgap \shotgroup{french-ambiguous-2}{10} \\[0.2ex]

    \tasklabel{Capitalize Ambiguous} \colgap \shotgroup{capitalize-ambiguous}{3} \colgap \shotgroup{capitalize-ambiguous}{5} \colgap \shotgroup{capitalize-ambiguous}{10} \\[0.2ex]

    \tasklabel{Capitalize Ambiguous-2} \colgap \shotgroup{capitalize-ambiguous-2}{3} \colgap \shotgroup{capitalize-ambiguous-2}{5} \colgap \shotgroup{capitalize-ambiguous-2}{10}

    \vspace{-2mm}
    \caption{Causal Decomposition of Contextualization Gains on \texttt{gemma-2-2b}. (Ambiguous Tasks) For each task and shot count, the \textbf{left plot} shows absolute FV injection accuracy $F(\text{QK}, \text{V})$ across four factorial intervention settings: 
(1) \textit{Uncontextualized} $F(0,0)$; 
(2) \textit{QK-contextualized} $F(1,0)$; 
(3) \textit{V-contextualized} $F(0,1)$; 
(4) \textit{Full contextualized} $F(1,1)$. We plot the marginal effects: $\mathrm{QK}@\mathrm{V}_{unc} := F(1, 0) - F(0,0)$, $\mathrm{QK}@\mathrm{V}_{ctx} := F(1, 1) - F(0,1)$,
$\mathrm{V}@\mathrm{QK}_{unc} := F(0, 1) - F(0,0)$,
$\mathrm{V}@\mathrm{QK}_{ctx} := F(1, 1) - F(1,0)$.
The Shapley values are then
$\phi_{QK} := \frac{1}{2}(\mathrm{QK}@\mathrm{V}_{unc} + \mathrm{QK}@\mathrm{V}_{ctx})$,
$\phi_{V} := \frac{1}{2}(\mathrm{V}@\mathrm{QK}_{unc} + \mathrm{V}@\mathrm{QK}_{ctx})$.
The \textbf{right plot} displays the marginal gains ($\Delta$) used in the Shapley decomposition: the first two bars show the effect of contextualizing the QK pathway ($\phi_{\text{QK}}$ components) under different V settings, while the latter two show the effect of contextualizing the V pathway ($\phi_{\text{V}}$ components) under different QK settings. This is the detailed result of Main Paper Fig.~\ref{fig:shapley_value}.}
    \label{fig:Gemma-2-2b_qkv_acc_ambiguous}
\end{figure}

\clearpage

\begin{figure}[t!]
    \centering
    \small

    \newcommand{\tasklabel}[1]{%
        \begin{tikzpicture}[baseline=(node.center)]
            \node[anchor=west, text width=2.6cm, font=\scriptsize\scshape\bfseries, inner sep=0pt] (node) {#1};
        \end{tikzpicture}%
    }
    \newcommand{\centerimg}[1]{%
        $\vcenter{\hbox{\includegraphics[width=0.105\linewidth]{#1}}}$%
    }
    \newcommand{\shotgroup}[2]{%
        \centerimg{figures/gemma-2-9b/qkv_accuracy/qkv_accuracy_abs_#1-#2shot.pdf}%
        \hspace{1pt}%
        \centerimg{figures/gemma-2-9b/qkv_accuracy/qkv_accuracy_delta_#1-#2shot.pdf}%
    }
    \newcommand{\colgap}{\hfill}

    \tasklabel{} \colgap
    \makebox[0.21\linewidth][c]{\textbf{3-Shot (Abs \& $\Delta$)}} \colgap
    \makebox[0.21\linewidth][c]{\textbf{5-Shot (Abs \& $\Delta$)}} \colgap
    \makebox[0.21\linewidth][c]{\textbf{10-Shot (Abs \& $\Delta$)}} \vspace{1.5mm} \\

    \tasklabel{Present--Past Ambiguous} \colgap \shotgroup{present-past-ambiguous}{3} \colgap \shotgroup{present-past-ambiguous}{5} \colgap \shotgroup{present-past-ambiguous}{10} \\[0.2ex]

    \tasklabel{Present--Past Ambiguous-2} \colgap \shotgroup{present-past-ambiguous-2}{3} \colgap \shotgroup{present-past-ambiguous-2}{5} \colgap \shotgroup{present-past-ambiguous-2}{10} \\[0.2ex]

    \tasklabel{Chinese Ambiguous} \colgap \shotgroup{chinese-ambiguous}{3} \colgap \shotgroup{chinese-ambiguous}{5} \colgap \shotgroup{chinese-ambiguous}{10} \\[0.2ex]

    \tasklabel{Chinese Ambiguous-2} \colgap \shotgroup{chinese-ambiguous-2}{3} \colgap \shotgroup{chinese-ambiguous-2}{5} \colgap \shotgroup{chinese-ambiguous-2}{10} \\[0.2ex]

    \tasklabel{Japanese Ambiguous} \colgap \shotgroup{japanese-ambiguous}{3} \colgap \shotgroup{japanese-ambiguous}{5} \colgap \shotgroup{japanese-ambiguous}{10} \\[0.2ex]

    \tasklabel{Japanese Ambiguous-2} \colgap \shotgroup{japanese-ambiguous-2}{3} \colgap \shotgroup{japanese-ambiguous-2}{5} \colgap \shotgroup{japanese-ambiguous-2}{10} \\[0.2ex]

    \tasklabel{French Ambiguous} \colgap \shotgroup{french-ambiguous}{3} \colgap \shotgroup{french-ambiguous}{5} \colgap \shotgroup{french-ambiguous}{10} \\[0.2ex]

    \tasklabel{French Ambiguous-2} \colgap \shotgroup{french-ambiguous-2}{3} \colgap \shotgroup{french-ambiguous-2}{5} \colgap \shotgroup{french-ambiguous-2}{10} \\[0.2ex]

    \tasklabel{Capitalize Ambiguous} \colgap \shotgroup{capitalize-ambiguous}{3} \colgap \shotgroup{capitalize-ambiguous}{5} \colgap \shotgroup{capitalize-ambiguous}{10} \\[0.2ex]

    \tasklabel{Capitalize Ambiguous-2} \colgap \shotgroup{capitalize-ambiguous-2}{3} \colgap \shotgroup{capitalize-ambiguous-2}{5} \colgap \shotgroup{capitalize-ambiguous-2}{10}

    \vspace{-2mm}
    \caption{Causal Decomposition of Contextualization Gains on \texttt{gemma-2-9b}. (Ambiguous Tasks) This is the detailed result of Main Paper Fig.~\ref{fig:shapley_value}.}
    \label{fig:Gemma-2-9b_qkv_acc_ambiguous}
\end{figure}

\clearpage

\begin{figure}[t!]
    \centering
    \small

    \newcommand{\tasklabel}[1]{%
        \begin{tikzpicture}[baseline=(node.center)]
            \node[anchor=west, text width=2.6cm, font=\scriptsize\scshape\bfseries, inner sep=0pt] (node) {#1};
        \end{tikzpicture}%
    }
    \newcommand{\centerimg}[1]{%
        $\vcenter{\hbox{\includegraphics[width=0.105\linewidth]{#1}}}$%
    }
    \newcommand{\shotgroup}[2]{%
        \centerimg{figures/gemma-2-27b/qkv_accuracy/qkv_accuracy_abs_#1-#2shot.pdf}%
        \hspace{1pt}%
        \centerimg{figures/gemma-2-27b/qkv_accuracy/qkv_accuracy_delta_#1-#2shot.pdf}%
    }
    \newcommand{\colgap}{\hfill}

    \tasklabel{} \colgap
    \makebox[0.21\linewidth][c]{\textbf{3-Shot (Abs \& $\Delta$)}} \colgap
    \makebox[0.21\linewidth][c]{\textbf{5-Shot (Abs \& $\Delta$)}} \colgap
    \makebox[0.21\linewidth][c]{\textbf{10-Shot (Abs \& $\Delta$)}} \vspace{1.5mm} \\

    \tasklabel{Present--Past Ambiguous} \colgap \shotgroup{present-past-ambiguous}{3} \colgap \shotgroup{present-past-ambiguous}{5} \colgap \shotgroup{present-past-ambiguous}{10} \\[0.2ex]

    \tasklabel{Present--Past Ambiguous-2} \colgap \shotgroup{present-past-ambiguous-2}{3} \colgap \shotgroup{present-past-ambiguous-2}{5} \colgap \shotgroup{present-past-ambiguous-2}{10} \\[0.2ex]

    \tasklabel{Chinese Ambiguous} \colgap \shotgroup{chinese-ambiguous}{3} \colgap \shotgroup{chinese-ambiguous}{5} \colgap \shotgroup{chinese-ambiguous}{10} \\[0.2ex]

    \tasklabel{Chinese Ambiguous-2} \colgap \shotgroup{chinese-ambiguous-2}{3} \colgap \shotgroup{chinese-ambiguous-2}{5} \colgap \shotgroup{chinese-ambiguous-2}{10} \\[0.2ex]

    \tasklabel{Japanese Ambiguous} \colgap \shotgroup{japanese-ambiguous}{3} \colgap \shotgroup{japanese-ambiguous}{5} \colgap \shotgroup{japanese-ambiguous}{10} \\[0.2ex]

    \tasklabel{Japanese Ambiguous-2} \colgap \shotgroup{japanese-ambiguous-2}{3} \colgap \shotgroup{japanese-ambiguous-2}{5} \colgap \shotgroup{japanese-ambiguous-2}{10} \\[0.2ex]

    \tasklabel{French Ambiguous} \colgap \shotgroup{french-ambiguous}{3} \colgap \shotgroup{french-ambiguous}{5} \colgap \shotgroup{french-ambiguous}{10} \\[0.2ex]

    \tasklabel{French Ambiguous-2} \colgap \shotgroup{french-ambiguous-2}{3} \colgap \shotgroup{french-ambiguous-2}{5} \colgap \shotgroup{french-ambiguous-2}{10} \\[0.2ex]

    \tasklabel{Capitalize Ambiguous} \colgap \shotgroup{capitalize-ambiguous}{3} \colgap \shotgroup{capitalize-ambiguous}{5} \colgap \shotgroup{capitalize-ambiguous}{10} \\[0.2ex]

    \tasklabel{Capitalize Ambiguous-2} \colgap \shotgroup{capitalize-ambiguous-2}{3} \colgap \shotgroup{capitalize-ambiguous-2}{5} \colgap \shotgroup{capitalize-ambiguous-2}{10}

    \vspace{-2mm}
    \caption{Causal Decomposition of Contextualization Gains on \texttt{gemma-2-27b}. (Ambiguous Tasks) This is the detailed result of Main Paper Fig.~\ref{fig:shapley_value}.}
    \label{fig:Gemma-2-27b_qkv_acc_ambiguous}
\end{figure}

\begin{figure}[t!]
    \centering
    \small

    \newcommand{\tasklabel}[1]{%
        \begin{tikzpicture}[baseline=(node.center)]
            \node[anchor=west, text width=2.6cm, font=\scriptsize\scshape\bfseries, inner sep=0pt] (node) {#1};
        \end{tikzpicture}%
    }
    \newcommand{\centerimg}[1]{%
        $\vcenter{\hbox{\includegraphics[width=0.105\linewidth]{#1}}}$%
    }
    \newcommand{\shotgroup}[2]{%
        \centerimg{figures/Llama-3.2-1B/qkv_accuracy/qkv_accuracy_abs_#1-#2shot.pdf}%
        \hspace{1pt}%
        \centerimg{figures/Llama-3.2-1B/qkv_accuracy/qkv_accuracy_delta_#1-#2shot.pdf}%
    }
    \newcommand{\colgap}{\hfill}

    \tasklabel{} \colgap
    \makebox[0.21\linewidth][c]{\textbf{3-Shot (Abs \& $\Delta$)}} \colgap
    \makebox[0.21\linewidth][c]{\textbf{5-Shot (Abs \& $\Delta$)}} \colgap
    \makebox[0.21\linewidth][c]{\textbf{10-Shot (Abs \& $\Delta$)}} \vspace{1.5mm} \\

    \tasklabel{Present--Past Ambiguous} \colgap \shotgroup{present-past-ambiguous}{3} \colgap \shotgroup{present-past-ambiguous}{5} \colgap \shotgroup{present-past-ambiguous}{10} \\[0.2ex]

    \tasklabel{Present--Past Ambiguous-2} \colgap \shotgroup{present-past-ambiguous-2}{3} \colgap \shotgroup{present-past-ambiguous-2}{5} \colgap \shotgroup{present-past-ambiguous-2}{10} \\[0.2ex]

    \tasklabel{Chinese Ambiguous} \colgap \shotgroup{chinese-ambiguous}{3} \colgap \shotgroup{chinese-ambiguous}{5} \colgap \shotgroup{chinese-ambiguous}{10} \\[0.2ex]

    \tasklabel{Chinese Ambiguous-2} \colgap \shotgroup{chinese-ambiguous-2}{3} \colgap \shotgroup{chinese-ambiguous-2}{5} \colgap \shotgroup{chinese-ambiguous-2}{10} \\[0.2ex]

    \tasklabel{Japanese Ambiguous} \colgap \shotgroup{japanese-ambiguous}{3} \colgap \shotgroup{japanese-ambiguous}{5} \colgap \shotgroup{japanese-ambiguous}{10} \\[0.2ex]

    \tasklabel{Japanese Ambiguous-2} \colgap \shotgroup{japanese-ambiguous-2}{3} \colgap \shotgroup{japanese-ambiguous-2}{5} \colgap \shotgroup{japanese-ambiguous-2}{10} \\[0.2ex]

    \tasklabel{French Ambiguous} \colgap \shotgroup{french-ambiguous}{3} \colgap \shotgroup{french-ambiguous}{5} \colgap \shotgroup{french-ambiguous}{10} \\[0.2ex]

    \tasklabel{French Ambiguous-2} \colgap \shotgroup{french-ambiguous-2}{3} \colgap \shotgroup{french-ambiguous-2}{5} \colgap \shotgroup{french-ambiguous-2}{10} \\[0.2ex]

    \tasklabel{Capitalize Ambiguous} \colgap \shotgroup{capitalize-ambiguous}{3} \colgap \shotgroup{capitalize-ambiguous}{5} \colgap \shotgroup{capitalize-ambiguous}{10} \\[0.2ex]

    \tasklabel{Capitalize Ambiguous-2} \colgap \shotgroup{capitalize-ambiguous-2}{3} \colgap \shotgroup{capitalize-ambiguous-2}{5} \colgap \shotgroup{capitalize-ambiguous-2}{10}

    \vspace{-2mm}
    \caption{Causal Decomposition of Contextualization Gains on \texttt{Llama-3.2-1B}. (Ambiguous Tasks) This is the detailed result of Main Paper Fig.~\ref{fig:shapley_value}.}
    \label{fig:Llama-3.2-1B_qkv_acc_ambiguous}
\end{figure}

\begin{figure}[t!]
    \centering
    \small

    \newcommand{\tasklabel}[1]{%
        \begin{tikzpicture}[baseline=(node.center)]
            \node[anchor=west, text width=2.6cm, font=\scriptsize\scshape\bfseries, inner sep=0pt] (node) {#1};
        \end{tikzpicture}%
    }
    \newcommand{\centerimg}[1]{%
        $\vcenter{\hbox{\includegraphics[width=0.105\linewidth]{#1}}}$%
    }
    \newcommand{\shotgroup}[2]{%
        \centerimg{figures/Llama-3.2-3B/qkv_accuracy/qkv_accuracy_abs_#1-#2shot.pdf}%
        \hspace{1pt}%
        \centerimg{figures/Llama-3.2-3B/qkv_accuracy/qkv_accuracy_delta_#1-#2shot.pdf}%
    }
    \newcommand{\colgap}{\hfill}

    \tasklabel{} \colgap
    \makebox[0.21\linewidth][c]{\textbf{3-Shot (Abs \& $\Delta$)}} \colgap
    \makebox[0.21\linewidth][c]{\textbf{5-Shot (Abs \& $\Delta$)}} \colgap
    \makebox[0.21\linewidth][c]{\textbf{10-Shot (Abs \& $\Delta$)}} \vspace{1.5mm} \\

    \tasklabel{Present--Past Ambiguous} \colgap \shotgroup{present-past-ambiguous}{3} \colgap \shotgroup{present-past-ambiguous}{5} \colgap \shotgroup{present-past-ambiguous}{10} \\[0.2ex]

    \tasklabel{Present--Past Ambiguous-2} \colgap \shotgroup{present-past-ambiguous-2}{3} \colgap \shotgroup{present-past-ambiguous-2}{5} \colgap \shotgroup{present-past-ambiguous-2}{10} \\[0.2ex]

    \tasklabel{Chinese Ambiguous} \colgap \shotgroup{chinese-ambiguous}{3} \colgap \shotgroup{chinese-ambiguous}{5} \colgap \shotgroup{chinese-ambiguous}{10} \\[0.2ex]

    \tasklabel{Chinese Ambiguous-2} \colgap \shotgroup{chinese-ambiguous-2}{3} \colgap \shotgroup{chinese-ambiguous-2}{5} \colgap \shotgroup{chinese-ambiguous-2}{10} \\[0.2ex]

    \tasklabel{Japanese Ambiguous} \colgap \shotgroup{japanese-ambiguous}{3} \colgap \shotgroup{japanese-ambiguous}{5} \colgap \shotgroup{japanese-ambiguous}{10} \\[0.2ex]

    \tasklabel{Japanese Ambiguous-2} \colgap \shotgroup{japanese-ambiguous-2}{3} \colgap \shotgroup{japanese-ambiguous-2}{5} \colgap \shotgroup{japanese-ambiguous-2}{10} \\[0.2ex]

    \tasklabel{French Ambiguous} \colgap \shotgroup{french-ambiguous}{3} \colgap \shotgroup{french-ambiguous}{5} \colgap \shotgroup{french-ambiguous}{10} \\[0.2ex]

    \tasklabel{French Ambiguous-2} \colgap \shotgroup{french-ambiguous-2}{3} \colgap \shotgroup{french-ambiguous-2}{5} \colgap \shotgroup{french-ambiguous-2}{10} \\[0.2ex]

    \tasklabel{Capitalize Ambiguous} \colgap \shotgroup{capitalize-ambiguous}{3} \colgap \shotgroup{capitalize-ambiguous}{5} \colgap \shotgroup{capitalize-ambiguous}{10} \\[0.2ex]

    \tasklabel{Capitalize Ambiguous-2} \colgap \shotgroup{capitalize-ambiguous-2}{3} \colgap \shotgroup{capitalize-ambiguous-2}{5} \colgap \shotgroup{capitalize-ambiguous-2}{10}

    \vspace{-2mm}
    \caption{Causal Decomposition of Contextualization Gains on \texttt{Llama-3.2-3B}. (Ambiguous Tasks) This is the detailed result of Main Paper Fig.~\ref{fig:shapley_value}.}
    \label{fig:Llama-3.2-3B_qkv_acc_ambiguous}
\end{figure}

\clearpage

\begin{figure}[t!]
    \centering
    \small

    \newcommand{\tasklabel}[1]{%
        \begin{tikzpicture}[baseline=(node.center)]
            \node[anchor=west, text width=2.6cm, font=\scriptsize\scshape\bfseries, inner sep=0pt] (node) {#1};
        \end{tikzpicture}%
    }
    \newcommand{\centerimg}[1]{%
        $\vcenter{\hbox{\includegraphics[width=0.105\linewidth]{#1}}}$%
    }
    \newcommand{\shotgroup}[2]{%
        \centerimg{figures/Llama-3.1-8B-Instruct/qkv_accuracy/qkv_accuracy_abs_#1-#2shot.pdf}%
        \hspace{1pt}%
        \centerimg{figures/Llama-3.1-8B-Instruct/qkv_accuracy/qkv_accuracy_delta_#1-#2shot.pdf}%
    }
    \newcommand{\colgap}{\hfill}

    \tasklabel{} \colgap
    \makebox[0.21\linewidth][c]{\textbf{3-Shot (Abs \& $\Delta$)}} \colgap
    \makebox[0.21\linewidth][c]{\textbf{5-Shot (Abs \& $\Delta$)}} \colgap
    \makebox[0.21\linewidth][c]{\textbf{10-Shot (Abs \& $\Delta$)}} \vspace{1.5mm} \\

    \tasklabel{Present--Past Ambiguous} \colgap \shotgroup{present-past-ambiguous}{3} \colgap \shotgroup{present-past-ambiguous}{5} \colgap \shotgroup{present-past-ambiguous}{10} \\[0.2ex]

    \tasklabel{Present--Past Ambiguous-2} \colgap \shotgroup{present-past-ambiguous-2}{3} \colgap \shotgroup{present-past-ambiguous-2}{5} \colgap \shotgroup{present-past-ambiguous-2}{10} \\[0.2ex]

    \tasklabel{Chinese Ambiguous} \colgap \shotgroup{chinese-ambiguous}{3} \colgap \shotgroup{chinese-ambiguous}{5} \colgap \shotgroup{chinese-ambiguous}{10} \\[0.2ex]

    \tasklabel{Chinese Ambiguous-2} \colgap \shotgroup{chinese-ambiguous-2}{3} \colgap \shotgroup{chinese-ambiguous-2}{5} \colgap \shotgroup{chinese-ambiguous-2}{10} \\[0.2ex]

    \tasklabel{Japanese Ambiguous} \colgap \shotgroup{japanese-ambiguous}{3} \colgap \shotgroup{japanese-ambiguous}{5} \colgap \shotgroup{japanese-ambiguous}{10} \\[0.2ex]

    \tasklabel{Japanese Ambiguous-2} \colgap \shotgroup{japanese-ambiguous-2}{3} \colgap \shotgroup{japanese-ambiguous-2}{5} \colgap \shotgroup{japanese-ambiguous-2}{10} \\[0.2ex]

    \tasklabel{French Ambiguous} \colgap \shotgroup{french-ambiguous}{3} \colgap \shotgroup{french-ambiguous}{5} \colgap \shotgroup{french-ambiguous}{10} \\[0.2ex]

    \tasklabel{French Ambiguous-2} \colgap \shotgroup{french-ambiguous-2}{3} \colgap \shotgroup{french-ambiguous-2}{5} \colgap \shotgroup{french-ambiguous-2}{10} \\[0.2ex]

    \tasklabel{Capitalize Ambiguous} \colgap \shotgroup{capitalize-ambiguous}{3} \colgap \shotgroup{capitalize-ambiguous}{5} \colgap \shotgroup{capitalize-ambiguous}{10} \\[0.2ex]

    \tasklabel{Capitalize Ambiguous-2} \colgap \shotgroup{capitalize-ambiguous-2}{3} \colgap \shotgroup{capitalize-ambiguous-2}{5} \colgap \shotgroup{capitalize-ambiguous-2}{10}

    \vspace{-2mm}
    \caption{Causal Decomposition of Contextualization Gains on \texttt{Llama-3.1-8B-Instruct}. (Ambiguous Tasks) This is the detailed result of Main Paper Fig.~\ref{fig:shapley_value}.}
    \label{fig:Llama-3.1-8B-Instruct_qkv_acc_ambiguous}
\end{figure}

\clearpage

\section{Supplemental Evidence for Section 6: dissecting the origins of Query–Key alignment within ICL prompt components}
\label{appendix:q_information}

\begin{figure}[ht!]
    \centering
    \small

    \newcommand{\tasklabel}[1]{%
        \begin{tikzpicture}[baseline=(node.center)]
            \node[anchor=west, text width=2.8cm, font=\scriptsize\scshape\bfseries, inner sep=0pt] (node) {#1};
        \end{tikzpicture}%
    }
    \newcommand{\centerimg}[1]{%
        $\vcenter{\hbox{\includegraphics[width=0.105\linewidth]{#1}}}$%
    }
    \newcommand{\shotgroup}[2]{%
        \centerimg{figures/gemma-2-2b/q_information/q_information_abs_#1-#2shot.pdf}%
        \hspace{1pt}%
        \centerimg{figures/gemma-2-2b/q_information/attention_proportion_#1-#2shot.pdf}%
    }
    \newcommand{\colgap}{\hfill}

    \tasklabel{} \colgap
    \makebox[0.21\linewidth][c]{\textbf{3-Shot ($\mathrm{Acc}_Q$ \& Attn)}} \colgap
    \makebox[0.21\linewidth][c]{\textbf{5-Shot ($\mathrm{Acc}_Q$ \& Attn)}} \colgap
    \makebox[0.21\linewidth][c]{\textbf{10-Shot ($\mathrm{Acc}_Q$ \& Attn)}} \vspace{1.5mm} \\

    \tasklabel{Present--Past Ambiguous} \colgap \shotgroup{present-past-ambiguous}{3} \colgap \shotgroup{present-past-ambiguous}{5} \colgap \shotgroup{present-past-ambiguous}{10} \\[0.2ex]
    \tasklabel{Present--Past Ambiguous-2} \colgap \shotgroup{present-past-ambiguous-2}{3} \colgap \shotgroup{present-past-ambiguous-2}{5} \colgap \shotgroup{present-past-ambiguous-2}{10} \\[0.2ex]
    \tasklabel{Chinese Ambiguous} \colgap \shotgroup{chinese-ambiguous}{3} \colgap \shotgroup{chinese-ambiguous}{5} \colgap \shotgroup{chinese-ambiguous}{10} \\[0.2ex]
    \tasklabel{Chinese Ambiguous-2} \colgap \shotgroup{chinese-ambiguous-2}{3} \colgap \shotgroup{chinese-ambiguous-2}{5} \colgap \shotgroup{chinese-ambiguous-2}{10} \\[0.2ex]
    \tasklabel{Japanese Ambiguous} \colgap \shotgroup{japanese-ambiguous}{3} \colgap \shotgroup{japanese-ambiguous}{5} \colgap \shotgroup{japanese-ambiguous}{10} \\[0.2ex]
    \tasklabel{Japanese Ambiguous-2} \colgap \shotgroup{japanese-ambiguous-2}{3} \colgap \shotgroup{japanese-ambiguous-2}{5} \colgap \shotgroup{japanese-ambiguous-2}{10} \\[0.2ex]
    \tasklabel{French Ambiguous} \colgap \shotgroup{french-ambiguous}{3} \colgap \shotgroup{french-ambiguous}{5} \colgap \shotgroup{french-ambiguous}{10} \\[0.2ex]
    \tasklabel{French Ambiguous-2} \colgap \shotgroup{french-ambiguous-2}{3} \colgap \shotgroup{french-ambiguous-2}{5} \colgap \shotgroup{french-ambiguous-2}{10} \\[0.2ex]
    \tasklabel{Capitalize Ambiguous} \colgap \shotgroup{capitalize-ambiguous}{3} \colgap \shotgroup{capitalize-ambiguous}{5} \colgap \shotgroup{capitalize-ambiguous}{10} \\[0.2ex]
    \tasklabel{Capitalize Ambiguous-2} \colgap \shotgroup{capitalize-ambiguous-2}{3} \colgap \shotgroup{capitalize-ambiguous-2}{5} \colgap \shotgroup{capitalize-ambiguous-2}{10}

    \vspace{-2mm}
    \caption{Query-patching FV Injection Accuracy ($\mathrm{Acc}_Q$) and Attention Proportion for Ambiguous Tasks on \texttt{gemma-2-2b}. Each shot compares the Query-patching FV injection accuracy (left) with the attention mass allocation (right). This is an extension of Main Paper Fig.~\ref{fig:q_composition}.}
    \label{fig:Gemma-2-2b_q_info_attn_ambiguous}
\end{figure}

\begin{figure}[ht!]
    \centering
    \small

    \newcommand{\tasklabel}[1]{%
        \begin{tikzpicture}[baseline=(node.center)]
            \node[anchor=west, text width=2.8cm, font=\scriptsize\scshape\bfseries, inner sep=0pt] (node) {#1};
        \end{tikzpicture}%
    }
    \newcommand{\centerimg}[1]{%
        $\vcenter{\hbox{\includegraphics[width=0.105\linewidth]{#1}}}$%
    }
    \newcommand{\shotgroup}[2]{%
        \centerimg{figures/gemma-2-9b/q_information/q_information_abs_#1-#2shot.pdf}%
        \hspace{1pt}%
        \centerimg{figures/gemma-2-9b/q_information/attention_proportion_#1-#2shot.pdf}%
    }
    \newcommand{\colgap}{\hfill}

    \tasklabel{} \colgap
    \makebox[0.21\linewidth][c]{\textbf{3-Shot ($\mathrm{Acc}_Q$ \& Attn)}} \colgap
    \makebox[0.21\linewidth][c]{\textbf{5-Shot ($\mathrm{Acc}_Q$ \& Attn)}} \colgap
    \makebox[0.21\linewidth][c]{\textbf{10-Shot ($\mathrm{Acc}_Q$ \& Attn)}} \vspace{1.5mm} \\

    \tasklabel{Present--Past Ambiguous} \colgap \shotgroup{present-past-ambiguous}{3} \colgap \shotgroup{present-past-ambiguous}{5} \colgap \shotgroup{present-past-ambiguous}{10} \\[0.2ex]
    \tasklabel{Present--Past Ambiguous-2} \colgap \shotgroup{present-past-ambiguous-2}{3} \colgap \shotgroup{present-past-ambiguous-2}{5} \colgap \shotgroup{present-past-ambiguous-2}{10} \\[0.2ex]
    \tasklabel{Chinese Ambiguous} \colgap \shotgroup{chinese-ambiguous}{3} \colgap \shotgroup{chinese-ambiguous}{5} \colgap \shotgroup{chinese-ambiguous}{10} \\[0.2ex]
    \tasklabel{Chinese Ambiguous-2} \colgap \shotgroup{chinese-ambiguous-2}{3} \colgap \shotgroup{chinese-ambiguous-2}{5} \colgap \shotgroup{chinese-ambiguous-2}{10} \\[0.2ex]
    \tasklabel{Japanese Ambiguous} \colgap \shotgroup{japanese-ambiguous}{3} \colgap \shotgroup{japanese-ambiguous}{5} \colgap \shotgroup{japanese-ambiguous}{10} \\[0.2ex]
    \tasklabel{Japanese Ambiguous-2} \colgap \shotgroup{japanese-ambiguous-2}{3} \colgap \shotgroup{japanese-ambiguous-2}{5} \colgap \shotgroup{japanese-ambiguous-2}{10} \\[0.2ex]
    \tasklabel{French Ambiguous} \colgap \shotgroup{french-ambiguous}{3} \colgap \shotgroup{french-ambiguous}{5} \colgap \shotgroup{french-ambiguous}{10} \\[0.2ex]
    \tasklabel{French Ambiguous-2} \colgap \shotgroup{french-ambiguous-2}{3} \colgap \shotgroup{french-ambiguous-2}{5} \colgap \shotgroup{french-ambiguous-2}{10} \\[0.2ex]
    \tasklabel{Capitalize Ambiguous} \colgap \shotgroup{capitalize-ambiguous}{3} \colgap \shotgroup{capitalize-ambiguous}{5} \colgap \shotgroup{capitalize-ambiguous}{10} \\[0.2ex]
    \tasklabel{Capitalize Ambiguous-2} \colgap \shotgroup{capitalize-ambiguous-2}{3} \colgap \shotgroup{capitalize-ambiguous-2}{5} \colgap \shotgroup{capitalize-ambiguous-2}{10}

    \vspace{-2mm}
    \caption{Query-patching FV Injection Accuracy ($\mathrm{Acc}_Q$) and Attention Proportion for Ambiguous Tasks on \texttt{gemma-2-9b}. Each shot compares the Query-patching FV injection accuracy (left) with the attention mass allocation (right). This is an extension of Main Paper Fig.~\ref{fig:q_composition}.}
    \label{fig:Gemma-2-9b_q_info_attn_ambiguous}
\end{figure}

\begin{figure}[ht!]
    \centering
    \small

    \newcommand{\tasklabel}[1]{%
        \begin{tikzpicture}[baseline=(node.center)]
            \node[anchor=west, text width=2.8cm, font=\scriptsize\scshape\bfseries, inner sep=0pt] (node) {#1};
        \end{tikzpicture}%
    }
    \newcommand{\centerimg}[1]{%
        $\vcenter{\hbox{\includegraphics[width=0.105\linewidth]{#1}}}$%
    }
    \newcommand{\shotgroup}[2]{%
        \centerimg{figures/gemma-2-27b/q_information/q_information_abs_#1-#2shot.pdf}%
        \hspace{1pt}%
        \centerimg{figures/gemma-2-27b/q_information/attention_proportion_#1-#2shot.pdf}%
    }
    \newcommand{\colgap}{\hfill}

    \tasklabel{} \colgap
    \makebox[0.21\linewidth][c]{\textbf{3-Shot ($\mathrm{Acc}_Q$ \& Attn)}} \colgap
    \makebox[0.21\linewidth][c]{\textbf{5-Shot ($\mathrm{Acc}_Q$ \& Attn)}} \colgap
    \makebox[0.21\linewidth][c]{\textbf{10-Shot ($\mathrm{Acc}_Q$ \& Attn)}} \vspace{1.5mm} \\

    \tasklabel{Present--Past Ambiguous} \colgap \shotgroup{present-past-ambiguous}{3} \colgap \shotgroup{present-past-ambiguous}{5} \colgap \shotgroup{present-past-ambiguous}{10} \\[0.2ex]
    \tasklabel{Present--Past Ambiguous-2} \colgap \shotgroup{present-past-ambiguous-2}{3} \colgap \shotgroup{present-past-ambiguous-2}{5} \colgap \shotgroup{present-past-ambiguous-2}{10} \\[0.2ex]
    \tasklabel{Chinese Ambiguous} \colgap \shotgroup{chinese-ambiguous}{3} \colgap \shotgroup{chinese-ambiguous}{5} \colgap \shotgroup{chinese-ambiguous}{10} \\[0.2ex]
    \tasklabel{Chinese Ambiguous-2} \colgap \shotgroup{chinese-ambiguous-2}{3} \colgap \shotgroup{chinese-ambiguous-2}{5} \colgap \shotgroup{chinese-ambiguous-2}{10} \\[0.2ex]
    \tasklabel{Japanese Ambiguous} \colgap \shotgroup{japanese-ambiguous}{3} \colgap \shotgroup{japanese-ambiguous}{5} \colgap \shotgroup{japanese-ambiguous}{10} \\[0.2ex]
    \tasklabel{Japanese Ambiguous-2} \colgap \shotgroup{japanese-ambiguous-2}{3} \colgap \shotgroup{japanese-ambiguous-2}{5} \colgap \shotgroup{japanese-ambiguous-2}{10} \\[0.2ex]
    \tasklabel{French Ambiguous} \colgap \shotgroup{french-ambiguous}{3} \colgap \shotgroup{french-ambiguous}{5} \colgap \shotgroup{french-ambiguous}{10} \\[0.2ex]
    \tasklabel{French Ambiguous-2} \colgap \shotgroup{french-ambiguous-2}{3} \colgap \shotgroup{french-ambiguous-2}{5} \colgap \shotgroup{french-ambiguous-2}{10} \\[0.2ex]
    \tasklabel{Capitalize Ambiguous} \colgap \shotgroup{capitalize-ambiguous}{3} \colgap \shotgroup{capitalize-ambiguous}{5} \colgap \shotgroup{capitalize-ambiguous}{10} \\[0.2ex]
    \tasklabel{Capitalize Ambiguous-2} \colgap \shotgroup{capitalize-ambiguous-2}{3} \colgap \shotgroup{capitalize-ambiguous-2}{5} \colgap \shotgroup{capitalize-ambiguous-2}{10}

    \vspace{-2mm}
    \caption{Query-patching FV Injection Accuracy ($\mathrm{Acc}_Q$) and Attention Proportion for Ambiguous Tasks on \texttt{gemma-2-27b}. Each shot compares the Query-patching FV injection accuracy (left) with the attention mass allocation (right). This is an extension of Main Paper Fig.~\ref{fig:q_composition}.}
    \label{fig:Gemma-2-27b_q_info_attn_ambiguous}
\end{figure}

\begin{figure}[ht!]
    \centering
    \small

    \newcommand{\tasklabel}[1]{%
        \begin{tikzpicture}[baseline=(node.center)]
            \node[anchor=west, text width=2.8cm, font=\scriptsize\scshape\bfseries, inner sep=0pt] (node) {#1};
        \end{tikzpicture}%
    }
    \newcommand{\centerimg}[1]{%
        $\vcenter{\hbox{\includegraphics[width=0.105\linewidth]{#1}}}$%
    }
    \newcommand{\shotgroup}[2]{%
        \centerimg{figures/Llama-3.2-1B/q_information/q_information_abs_#1-#2shot.pdf}%
        \hspace{1pt}%
        \centerimg{figures/Llama-3.2-1B/q_information/attention_proportion_#1-#2shot.pdf}%
    }
    \newcommand{\colgap}{\hfill}

    \tasklabel{} \colgap
    \makebox[0.21\linewidth][c]{\textbf{3-Shot ($\mathrm{Acc}_Q$ \& Attn)}} \colgap
    \makebox[0.21\linewidth][c]{\textbf{5-Shot ($\mathrm{Acc}_Q$ \& Attn)}} \colgap
    \makebox[0.21\linewidth][c]{\textbf{10-Shot ($\mathrm{Acc}_Q$ \& Attn)}} \vspace{1.5mm} \\

    \tasklabel{Present--Past Ambiguous} \colgap \shotgroup{present-past-ambiguous}{3} \colgap \shotgroup{present-past-ambiguous}{5} \colgap \shotgroup{present-past-ambiguous}{10} \\[0.2ex]
    \tasklabel{Present--Past Ambiguous-2} \colgap \shotgroup{present-past-ambiguous-2}{3} \colgap \shotgroup{present-past-ambiguous-2}{5} \colgap \shotgroup{present-past-ambiguous-2}{10} \\[0.2ex]
    \tasklabel{Chinese Ambiguous} \colgap \shotgroup{chinese-ambiguous}{3} \colgap \shotgroup{chinese-ambiguous}{5} \colgap \shotgroup{chinese-ambiguous}{10} \\[0.2ex]
    \tasklabel{Chinese Ambiguous-2} \colgap \shotgroup{chinese-ambiguous-2}{3} \colgap \shotgroup{chinese-ambiguous-2}{5} \colgap \shotgroup{chinese-ambiguous-2}{10} \\[0.2ex]
    \tasklabel{Japanese Ambiguous} \colgap \shotgroup{japanese-ambiguous}{3} \colgap \shotgroup{japanese-ambiguous}{5} \colgap \shotgroup{japanese-ambiguous}{10} \\[0.2ex]
    \tasklabel{Japanese Ambiguous-2} \colgap \shotgroup{japanese-ambiguous-2}{3} \colgap \shotgroup{japanese-ambiguous-2}{5} \colgap \shotgroup{japanese-ambiguous-2}{10} \\[0.2ex]
    \tasklabel{French Ambiguous} \colgap \shotgroup{french-ambiguous}{3} \colgap \shotgroup{french-ambiguous}{5} \colgap \shotgroup{french-ambiguous}{10} \\[0.2ex]
    \tasklabel{French Ambiguous-2} \colgap \shotgroup{french-ambiguous-2}{3} \colgap \shotgroup{french-ambiguous-2}{5} \colgap \shotgroup{french-ambiguous-2}{10} \\[0.2ex]
    \tasklabel{Capitalize Ambiguous} \colgap \shotgroup{capitalize-ambiguous}{3} \colgap \shotgroup{capitalize-ambiguous}{5} \colgap \shotgroup{capitalize-ambiguous}{10} \\[0.2ex]
    \tasklabel{Capitalize Ambiguous-2} \colgap \shotgroup{capitalize-ambiguous-2}{3} \colgap \shotgroup{capitalize-ambiguous-2}{5} \colgap \shotgroup{capitalize-ambiguous-2}{10}

    \vspace{-2mm}
    \caption{Query-patching FV Injection Accuracy ($\mathrm{Acc}_Q$) and Attention Proportion for Ambiguous Tasks on \texttt{Llama-3.2-1B}. Each shot compares the Query-patching FV injection accuracy (left) with the attention mass allocation (right). This is an extension of Main Paper Fig.~\ref{fig:q_composition}.}
    \label{fig:Llama-3.2-1B_q_info_attn_ambiguous}
\end{figure}

\begin{figure}[ht!]
  \centering
  \includegraphics[width=\linewidth]{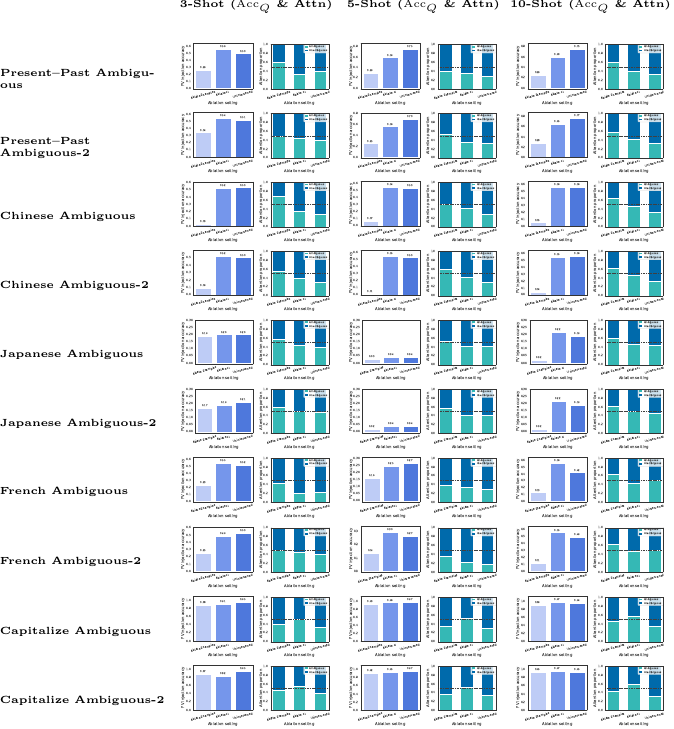}
  \caption{Query-patching FV Injection Accuracy ($\mathrm{Acc}_Q$) and Attention Proportion for Ambiguous Tasks on \texttt{Llama-3.2-3B}. Each shot compares the Query-patching FV injection accuracy (left) with the attention mass allocation (right). This is an extension of Main Paper Fig.~\ref{fig:q_composition}.}
  \label{fig:Llama-3.2-3B_q_info_attn_ambiguous}
\end{figure}

\begin{figure}[ht!]
  \centering
  \includegraphics[width=\linewidth]{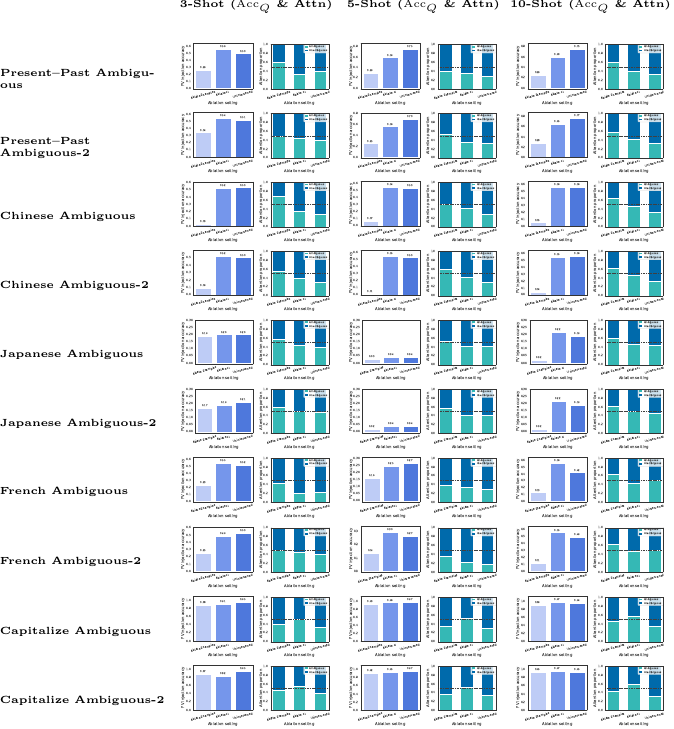}
  \caption{Query-patching FV Injection Accuracy ($\mathrm{Acc}_Q$) and Attention Proportion for Ambiguous Tasks on \texttt{Llama-3.1-8B-Instruct}. Each shot compares the Query-patching FV injection accuracy (left) with the attention mass allocation (right). This is an extension of Main Paper Fig.~\ref{fig:q_composition}.}
\label{fig:Llama-3.1-8B-Instruct_q_info_attn_ambiguous}
\end{figure}

\begin{figure}[ht!]
    \centering
    \small

    \newcommand{\tasklabel}[1]{%
        \begin{tikzpicture}[baseline=(node.center)]
            \node[anchor=west, text width=2.8cm, font=\scriptsize\scshape\bfseries, inner sep=0pt] (node) {#1};
        \end{tikzpicture}%
    }
    \newcommand{\centerimg}[1]{%
        $\vcenter{\hbox{\includegraphics[width=0.13\linewidth]{#1}}}$%
    }
    \newcommand{\shotgroup}[2]{%
        \centerimg{figures/gemma-2-2b/q_information/q_information_abs_#1-#2shot.pdf}%
    }
    \newcommand{\colgap}{\hspace{4pt}}

    \tasklabel{} \colgap
    \makebox[0.13\linewidth][c]{\textbf{3-Shot}} \colgap
    \makebox[0.13\linewidth][c]{\textbf{5-Shot}} \colgap
    \makebox[0.13\linewidth][c]{\textbf{10-Shot}} \vspace{1.5mm} \\

    \tasklabel{Country--Capital} \colgap \shotgroup{country-capital}{3} \colgap \shotgroup{country-capital}{5} \colgap \shotgroup{country-capital}{10} \\[0.2ex]
    \tasklabel{Person--Sport} \colgap \shotgroup{person-sport}{3} \colgap \shotgroup{person-sport}{5} \colgap \shotgroup{person-sport}{10} \\[0.2ex]
    \tasklabel{Park--Country} \colgap \shotgroup{park-country}{3} \colgap \shotgroup{park-country}{5} \colgap \shotgroup{park-country}{10} \\[0.2ex]
    \tasklabel{Present--Past} \colgap \shotgroup{present-past}{3} \colgap \shotgroup{present-past}{5} \colgap \shotgroup{present-past}{10} \\[0.2ex]
    \tasklabel{English--French} \colgap \shotgroup{english-french}{3} \colgap \shotgroup{english-french}{5} \colgap \shotgroup{english-french}{10}

    \vspace{-2mm}
    \caption{Query-patching FV Injection Accuracy ($\mathrm{Acc}_Q$) for Normal Tasks on \texttt{gemma-2-2b} across few-shot settings. This is an extension of Main Paper Fig.~\ref{fig:q_composition}.}
    \label{fig:Gemma-2-2b_q_info_normal}
\end{figure}

\begin{figure}[ht!]
    \centering
    \small

    \newcommand{\tasklabel}[1]{%
        \begin{tikzpicture}[baseline=(node.center)]
            \node[anchor=west, text width=2.8cm, font=\scriptsize\scshape\bfseries, inner sep=0pt] (node) {#1};
        \end{tikzpicture}%
    }
    \newcommand{\centerimg}[1]{%
        $\vcenter{\hbox{\includegraphics[width=0.13\linewidth]{#1}}}$%
    }
    \newcommand{\shotgroup}[2]{%
        \centerimg{figures/gemma-2-9b/q_information/q_information_abs_#1-#2shot.pdf}%
    }
    \newcommand{\colgap}{\hspace{4pt}}

    \tasklabel{} \colgap
    \makebox[0.13\linewidth][c]{\textbf{3-Shot}} \colgap
    \makebox[0.13\linewidth][c]{\textbf{5-Shot}} \colgap
    \makebox[0.13\linewidth][c]{\textbf{10-Shot}} \vspace{1.5mm} \\

    \tasklabel{Country--Capital} \colgap \shotgroup{country-capital}{3} \colgap \shotgroup{country-capital}{5} \colgap \shotgroup{country-capital}{10} \\[0.2ex]
    \tasklabel{Person--Sport} \colgap \shotgroup{person-sport}{3} \colgap \shotgroup{person-sport}{5} \colgap \shotgroup{person-sport}{10} \\[0.2ex]
    \tasklabel{Park--Country} \colgap \shotgroup{park-country}{3} \colgap \shotgroup{park-country}{5} \colgap \shotgroup{park-country}{10} \\[0.2ex]
    \tasklabel{Present--Past} \colgap \shotgroup{present-past}{3} \colgap \shotgroup{present-past}{5} \colgap \shotgroup{present-past}{10} \\[0.2ex]
    \tasklabel{English--French} \colgap \shotgroup{english-french}{3} \colgap \shotgroup{english-french}{5} \colgap \shotgroup{english-french}{10}

    \vspace{-2mm}
    \caption{Query-patching FV Injection Accuracy ($\mathrm{Acc}_Q$) for Normal Tasks on \texttt{gemma-2-9b} across few-shot settings. This is an extension of Main Paper Fig.~\ref{fig:q_composition}.}
    \label{fig:Gemma-2-9b_q_info_normal}
\end{figure}

\begin{figure}[ht!]
    \centering
    \small

    \newcommand{\tasklabel}[1]{%
        \begin{tikzpicture}[baseline=(node.center)]
            \node[anchor=west, text width=2.8cm, font=\scriptsize\scshape\bfseries, inner sep=0pt] (node) {#1};
        \end{tikzpicture}%
    }
    \newcommand{\centerimg}[1]{%
        $\vcenter{\hbox{\includegraphics[width=0.13\linewidth]{#1}}}$%
    }
    \newcommand{\shotgroup}[2]{%
        \centerimg{figures/gemma-2-27b/q_information/q_information_abs_#1-#2shot.pdf}%
    }
    \newcommand{\colgap}{\hspace{4pt}}

    \tasklabel{} \colgap
    \makebox[0.13\linewidth][c]{\textbf{3-Shot}} \colgap
    \makebox[0.13\linewidth][c]{\textbf{5-Shot}} \colgap
    \makebox[0.13\linewidth][c]{\textbf{10-Shot}} \vspace{1.5mm} \\

    \tasklabel{Country--Capital} \colgap \shotgroup{country-capital}{3} \colgap \shotgroup{country-capital}{5} \colgap \shotgroup{country-capital}{10} \\[0.2ex]
    \tasklabel{Person--Sport} \colgap \shotgroup{person-sport}{3} \colgap \shotgroup{person-sport}{5} \colgap \shotgroup{person-sport}{10} \\[0.2ex]
    \tasklabel{Park--Country} \colgap \shotgroup{park-country}{3} \colgap \shotgroup{park-country}{5} \colgap \shotgroup{park-country}{10} \\[0.2ex]
    \tasklabel{Present--Past} \colgap \shotgroup{present-past}{3} \colgap \shotgroup{present-past}{5} \colgap \shotgroup{present-past}{10} \\[0.2ex]
    \tasklabel{English--French} \colgap \shotgroup{english-french}{3} \colgap \shotgroup{english-french}{5} \colgap \shotgroup{english-french}{10}

    \vspace{-2mm}
    \caption{Query-patching FV Injection Accuracy ($\mathrm{Acc}_Q$) for Normal Tasks on \texttt{gemma-2-27b} across few-shot settings. This is an extension of Main Paper Fig.~\ref{fig:q_composition}.}
    \label{fig:Gemma-2-27b_q_info_normal}
\end{figure}

\begin{figure}[ht!]
    \centering
    \small

    \newcommand{\tasklabel}[1]{%
        \begin{tikzpicture}[baseline=(node.center)]
            \node[anchor=west, text width=2.8cm, font=\scriptsize\scshape\bfseries, inner sep=0pt] (node) {#1};
        \end{tikzpicture}%
    }
    \newcommand{\centerimg}[1]{%
        $\vcenter{\hbox{\includegraphics[width=0.13\linewidth]{#1}}}$%
    }
    \newcommand{\shotgroup}[2]{%
        \centerimg{figures/Llama-3.2-1B/q_information/q_information_abs_#1-#2shot.pdf}%
    }
    \newcommand{\colgap}{\hspace{4pt}}

    \tasklabel{} \colgap
    \makebox[0.13\linewidth][c]{\textbf{3-Shot}} \colgap
    \makebox[0.13\linewidth][c]{\textbf{5-Shot}} \colgap
    \makebox[0.13\linewidth][c]{\textbf{10-Shot}} \vspace{1.5mm} \\

    \tasklabel{Country--Capital} \colgap \shotgroup{country-capital}{3} \colgap \shotgroup{country-capital}{5} \colgap \shotgroup{country-capital}{10} \\[0.2ex]
    \tasklabel{Person--Sport} \colgap \shotgroup{person-sport}{3} \colgap \shotgroup{person-sport}{5} \colgap \shotgroup{person-sport}{10} \\[0.2ex]
    \tasklabel{Park--Country} \colgap \shotgroup{park-country}{3} \colgap \shotgroup{park-country}{5} \colgap \shotgroup{park-country}{10} \\[0.2ex]
    \tasklabel{Present--Past} \colgap \shotgroup{present-past}{3} \colgap \shotgroup{present-past}{5} \colgap \shotgroup{present-past}{10} \\[0.2ex]
    \tasklabel{English--French} \colgap \shotgroup{english-french}{3} \colgap \shotgroup{english-french}{5} \colgap \shotgroup{english-french}{10}

    \vspace{-2mm}
    \caption{Query-patching FV Injection Accuracy ($\mathrm{Acc}_Q$) for Normal Tasks on \texttt{Llama-3.2-1B} across few-shot settings. This is an extension of Main Paper Fig.~\ref{fig:q_composition}.}
    \label{fig:Llama-3.2-1B_q_info_normal}
\end{figure}

\begin{figure}[ht!]
    \centering
    \small

    \newcommand{\tasklabel}[1]{%
        \begin{tikzpicture}[baseline=(node.center)]
            \node[anchor=west, text width=2.8cm, font=\scriptsize\scshape\bfseries, inner sep=0pt] (node) {#1};
        \end{tikzpicture}%
    }
    \newcommand{\centerimg}[1]{%
        $\vcenter{\hbox{\includegraphics[width=0.13\linewidth]{#1}}}$%
    }
    \newcommand{\shotgroup}[2]{%
        \centerimg{figures/Llama-3.2-3B/q_information/q_information_abs_#1-#2shot.pdf}%
    }
    \newcommand{\colgap}{\hspace{4pt}}

    \tasklabel{} \colgap
    \makebox[0.13\linewidth][c]{\textbf{3-Shot}} \colgap
    \makebox[0.13\linewidth][c]{\textbf{5-Shot}} \colgap
    \makebox[0.13\linewidth][c]{\textbf{10-Shot}} \vspace{1.5mm} \\

    \tasklabel{Country--Capital} \colgap \shotgroup{country-capital}{3} \colgap \shotgroup{country-capital}{5} \colgap \shotgroup{country-capital}{10} \\[0.2ex]
    \tasklabel{Person--Sport} \colgap \shotgroup{person-sport}{3} \colgap \shotgroup{person-sport}{5} \colgap \shotgroup{person-sport}{10} \\[0.2ex]
    \tasklabel{Park--Country} \colgap \shotgroup{park-country}{3} \colgap \shotgroup{park-country}{5} \colgap \shotgroup{park-country}{10} \\[0.2ex]
    \tasklabel{Present--Past} \colgap \shotgroup{present-past}{3} \colgap \shotgroup{present-past}{5} \colgap \shotgroup{present-past}{10} \\[0.2ex]
    \tasklabel{English--French} \colgap \shotgroup{english-french}{3} \colgap \shotgroup{english-french}{5} \colgap \shotgroup{english-french}{10}

    \vspace{-2mm}
    \caption{Query-patching FV Injection Accuracy ($\mathrm{Acc}_Q$) for Normal Tasks on \texttt{Llama-3.2-3B} across few-shot settings. This is an extension of Main Paper Fig.~\ref{fig:q_composition}.}
    \label{fig:Llama-3.2-3B_q_info_normal}
\end{figure}

\begin{figure}[ht!]
    \centering
    \small

    \newcommand{\tasklabel}[1]{%
        \begin{tikzpicture}[baseline=(node.center)]
            \node[anchor=west, text width=2.8cm, font=\scriptsize\scshape\bfseries, inner sep=0pt] (node) {#1};
        \end{tikzpicture}%
    }
    \newcommand{\centerimg}[1]{%
        $\vcenter{\hbox{\includegraphics[width=0.13\linewidth]{#1}}}$%
    }
    \newcommand{\shotgroup}[2]{%
        \centerimg{figures/Llama-3.1-8B-Instruct/q_information/q_information_abs_#1-#2shot.pdf}%
    }
    \newcommand{\colgap}{\hspace{4pt}}

    \tasklabel{} \colgap
    \makebox[0.13\linewidth][c]{\textbf{3-Shot}} \colgap
    \makebox[0.13\linewidth][c]{\textbf{5-Shot}} \colgap
    \makebox[0.13\linewidth][c]{\textbf{10-Shot}} \vspace{1.5mm} \\

    \tasklabel{Country--Capital} \colgap \shotgroup{country-capital}{3} \colgap \shotgroup{country-capital}{5} \colgap \shotgroup{country-capital}{10} \\[0.2ex]
    \tasklabel{Person--Sport} \colgap \shotgroup{person-sport}{3} \colgap \shotgroup{person-sport}{5} \colgap \shotgroup{person-sport}{10} \\[0.2ex]
    \tasklabel{Park--Country} \colgap \shotgroup{park-country}{3} \colgap \shotgroup{park-country}{5} \colgap \shotgroup{park-country}{10} \\[0.2ex]
    \tasklabel{Present--Past} \colgap \shotgroup{present-past}{3} \colgap \shotgroup{present-past}{5} \colgap \shotgroup{present-past}{10} \\[0.2ex]
    \tasklabel{English--French} \colgap \shotgroup{english-french}{3} \colgap \shotgroup{english-french}{5} \colgap \shotgroup{english-french}{10}

    \vspace{-2mm}
    \caption{Query-patching FV Injection Accuracy ($\mathrm{Acc}_Q$) for Normal Tasks on \texttt{Llama-3.1-8B-Instruct} across few-shot settings. This is an extension of Main Paper Fig.~\ref{fig:q_composition}.}
    \label{fig:Llama-3.1-8B-Instruct_q_info_normal}
\end{figure}

\clearpage

\section{Beyond task identity: Semantic subspace-consistent generalization of Query–Key alignment}
\label{appendix:q_elements}

The results of Sec.6 showed that, on ambiguous datasets, FV quality and injection accuracy are highly correlated to how attention is apportioned between ambiguous and unambiguous examples, and that this sensitivity is largely mediated by Query--Key alignment. In contrast, many normal datasets either show strong redundancy between the query token $x_{n+1}$ and the examples or are almost insensitive to example corruption. Motivated by this asymmetry, our initial explorations here attempt to probe a more fine-grained question: \emph{what aspects of the example \textbf{content}} are potentially necessary to induce Query states that align with the FV circuit? Specifically, we investigate whether examples must instantiate the exact input–output mapping of the nominal task, or if it might suffice for them to position the Query within a broader, appropriate semantic subspace.

In this preliminary study, we fix the original ICL query $x_{n}$ and systematically vary only the mapping implemented by the examples. For each dataset and $n$-shot setting, we construct several synthetic example sets by remapping the input–output pairs in controlled ways, including:
\begin{itemize}
    \item preserving the original mapping ($x \to y$),
    \item input-only or output-only mappings ($x \to x$, $y \to y$),
    \item corrupted-mappings ($y$ is still in the correct output space but does not correspond to $x$, i.e., corrupted $x \to y$),
    \item cross-mappings ($y \to x$, $x \to \text{other task}\ y$, $\text{other task}\ x \to y$),
    \item examples drawn from completely different tasks (``other task'')
\end{itemize}
All variants keep the same prompt format, while only the semantic relation between example inputs and outputs is altered to observe potential shifts in behavior. We extracted the Queries from these synthetic prompts and applied \textbf{Q patching} on the original prompt.

Across the majority of datasets and shot counts, we observe that several synthetic mappings that do \emph{not} implement the original task appear to still induce Query–Key alignment and FV injection accuracy comparable to the original task. In many cases, mappings such as $x \to x$, $y \to y$, or $y \to x$ example sets yield attention allocations over examples and FV accuracy that are within a few points of the original $x \to y$ prompt, and occasionally even slightly higher. One possible interpretation of these observations is that the model may not strictly require a unique, task-identifying Query state. Instead, it is plausible that as long as examples steer the Query into a relevant region of the representation space, the FV heads can potentially recover a high-quality FV. Functionally, the model seems to exhibit signs of pattern matching in a shared semantic manifold rather than necessarily identifying a discrete task label.

The exceptions may also provide further nuance. For instance, in the \textsc{Capitalization Ambiguous} task, even some cross-task or cross-mapping constructions perform well, which could hint that the underlying manifold is structured and anisotropic.  These asymmetries, while appearing sporadically, point toward a tentative picture where the model exploits a continuous semantic space with preferred directions, rather than a single discrete task representation.

Collectively, these exploratory results suggest the possibility that Query–Key alignment in ICL might not strictly depend on a unique, task-specific Query representation. Instead, a potentially wide family of example-induced Query states--even those only loosely related to the nominal task--could be sufficient to trigger the formation of FV and support injection accuracy. We leave a more rigorous characterization of this manifold for future work.

\begin{figure*}[p] 
    \centering
    \begin{tikzpicture}
        \newcommand{\subfigwidth}{0.25\textwidth} 
        \newcommand{\qkwidth}{0.30\textwidth} 
        \newcommand{\hgap}{0.25\textwidth}
        \newcommand{\vpitch}{4.0cm}

        \tikzset{
            imgnode/.style={anchor=north, inner sep=0},
            rowtitle/.style={anchor=south, font=\small\scshape\bfseries, yshift=8pt},
            shotlbl/.style={anchor=south, font=\footnotesize\bfseries, yshift=30pt}
        }

        \newcommand{\taskrow}[3]{
            \node[rowtitle] at (0, #1) {#3};

            \node[imgnode] at (-1.55*\hgap, #1) {\includegraphics[width=\subfigwidth]{figures/gemma-2-2b/q_elements/q_elements_abs_#2-3shot.pdf}};
            \node[imgnode] at (-0.55*\hgap, #1) {\includegraphics[width=\subfigwidth]{figures/gemma-2-2b/q_elements/q_elements_abs_#2-5shot.pdf}};
            \node[imgnode] at (0.45*\hgap, #1) {\includegraphics[width=\subfigwidth]{figures/gemma-2-2b/q_elements/q_elements_abs_#2-10shot.pdf}};

            \node[imgnode] at (1.6*\hgap, #1) {\includegraphics[width=\qkwidth]{figures/gemma-2-2b/q_elements/qk_inner_product_#2.pdf}};
        }

        \node[shotlbl] at (-1.55*\hgap, 0) {3-Shot};
        \node[shotlbl] at (-0.55*\hgap, 0) {5-Shot};
        \node[shotlbl] at (0.45*\hgap, 0) {10-Shot};
        \node[shotlbl] at (1.6*\hgap, 0) {QK Innerproduct};

        \taskrow{0}{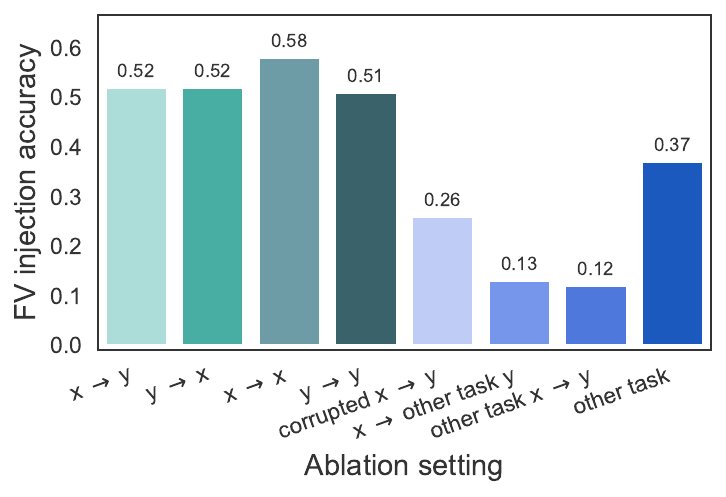}{Present--Past Ambiguous}
        \taskrow{-\vpitch}{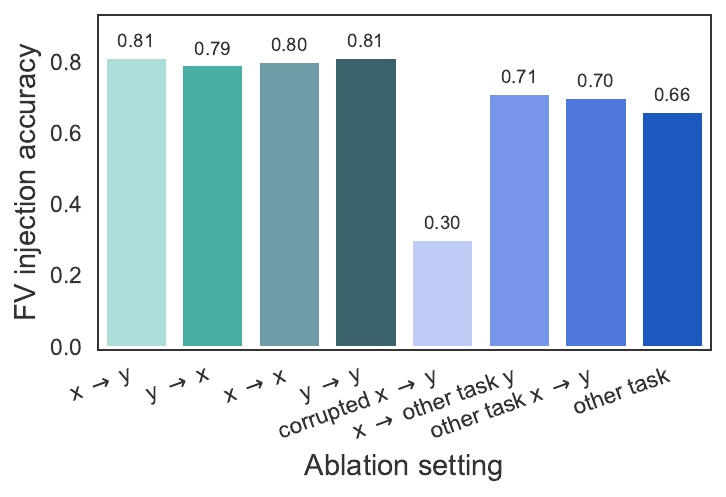}{Chinese Ambiguous}
        \taskrow{-2*\vpitch}{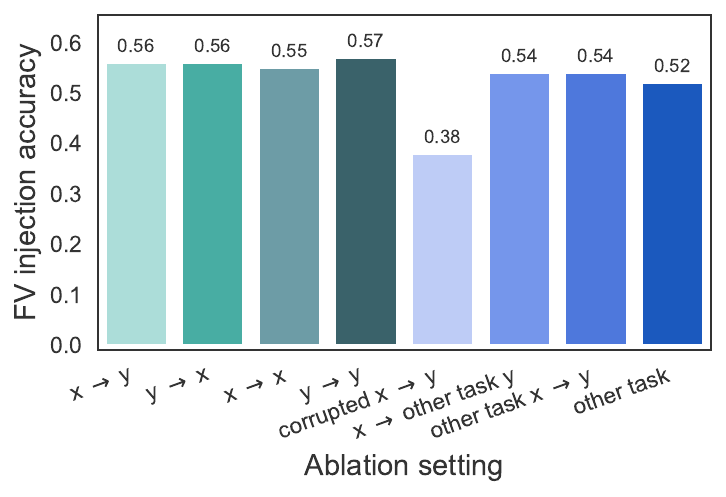}{Japanese Ambiguous}
        \taskrow{-3*\vpitch}{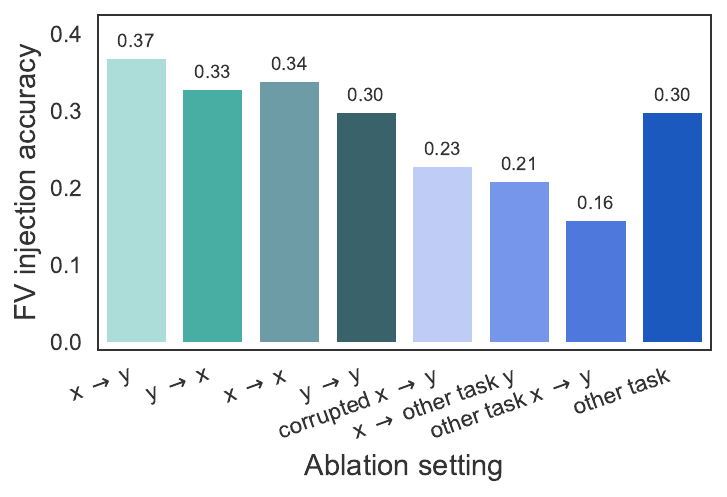}{French Ambiguous}
        \taskrow{-4*\vpitch}{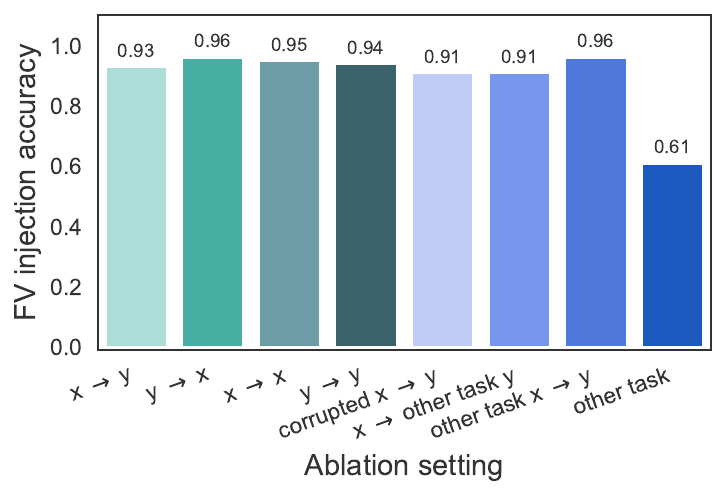}{Capitalization Ambiguous}

    \end{tikzpicture}
    \vspace{-1em}
\caption{\textbf{Functional robustness and geometric stability of Query–Key alignment under semantic intervention.} 
Rows correspond to the five ambiguous task families detailed in Appendix A. 
\textbf{Columns 1--3} report downstream $FV$ injection accuracy after $Q$ patching across 3, 5, and 10-shot contexts. \textbf{Column 4} presents the mean inner product between $Q$ and $K$ vectors on the Top-1 $FV$-head, averaged over 100 trials. 
The matrix visualizes the directional alignment of Queries derived from prompts with diverse semantics against Keys from randomly sampled ambiguous and unambiguous examples under uncontextualized(\textbf{unc})/contextualized(\textbf{ctx}) ablation. These results are acquired from \texttt{gemma-2-2b}.}
    \label{fig:qk_alignment_scaling_compact}
\end{figure*}

\clearpage

\section{How does contextualization reshape Value representations?}
\label{appendix:v_transformation}

In the main paper and Appendix~\ref{appendix:qkv_regimes}, we noted that contextualization primarily enhances FV quality through the Query--Key pathway, while the effects of the Value pathway appear more task-dependent and could be a double-edged sword. To tentatively probe this complexity, this section offers a preliminary look at the Value pathway from a purely geometric perspective: given a fixed, contextualized attention routing, how does contextualization shift the function vector in the activation space?

We adopt a framework at the dataset level: for each task, we aggregate per-prompt FVs into two dataset-level summaries
\begin{align*}
    \mathrm{FV}_{\mathrm{unc}} \;&=\; \frac{1}{|P|} \sum_{p \in P} \mathrm{FV}\big(p; QK_{\mathrm{ctx}}, V_{\mathrm{unc}}\big)\\
    \mathrm{FV}_{\mathrm{ctx}} \;&=\; \frac{1}{|P|} \sum_{p \in P} \mathrm{FV}\big(p; QK_{\mathrm{ctx}}, V_{\mathrm{ctx}}\big),
\end{align*}

where $P$ is the set of prompts for the task, and $\mathrm{FV}_{\mathrm{unc}}$ and $\mathrm{FV}_{\mathrm{ctx}}$ denote FVs obtained under uncontextualized and contextualized conditions, respectively. Working with these dataset-level averages smooths over per-prompt idiosyncrasies and lets us characterize how Value rewriting changes the \emph{typical} task direction systematically.

Crucially, we keep the Query--Key pathway fixed to its fully contextualized state, $QK_{\mathrm{ctx}}$. Earlier results showed that contextualized QK routing systematically attends more to informative, unambiguous examples and mitigates positional bias, yielding higher FV injection accuracy overall. Fixing $QK_{\mathrm{ctx}}$ therefore serves two purposes: it uses the empirically better attention pattern as a stable baseline, and it cleanly separates the effect of Value variations on the content of the FV from changes in \emph{which} examples are attended to. To interpret these changes, we compare $\mathrm{FV}_{\mathrm{unc}}$ with $\mathrm{FV}_{\mathrm{ctx}}$ directly. Additionally, for ambiguous tasks, we exploit the natural split into all-ambiguous and all-unambiguous prompts and define $\mathrm{FV}_{Amb}$ and $\mathrm{FV}_{Unamb}$ as the average FVs over these two subsets. We intentionally avoid defining a per-prompt ``golden'' direction (e.g., by picking whichever variant attains higher accuracy), as that would entangle geometric analysis with the very causal effects we aim to keep separate.

Across all models, tasks, and shot counts, the contextualized and uncontextualized FVs remain strongly aligned: the cosine similarity
$\cos(\mathrm{FV}_{\mathrm{ctx}}, \mathrm{FV}_{\mathrm{unc}})$ lies between roughly $0.75$ and even $0.98$. Thus contextualization produces a \emph{noticeable but not drastic} variation of the FV rather than flipping the FV into an orthogonal or opposite direction. In other words, Value contextualization refines the task vector within its existing subspace instead of rewriting it into a qualitatively different representation.

On normal datasets, we observe a clear dependence on the number of shots. For 3-shot prompts, contextualization moves the FV only mildly: the cosine similarity between $\mathrm{FV}_{\mathrm{ctx}}$ and $\mathrm{FV}_{\mathrm{unc}}$ is typically above $0.9$. As we increase to 5-shot and 10-shot prompts, the cosine similarity drops systematically, indicating that Value activations undergo progressively larger adjustments when more demonstrations are available. Intuitively, longer prompts provide more opportunities for cross-example contextualization, and the Value pathway uses this extra context to reshape the aggregated task representation more strongly. For ambiguous datasets, we gain a more structured picture by comparing FVs to $\mathrm{FV}_{Amb}$ and $\mathrm{FV}_{Unamb}$. First, $\mathrm{FV}_{Amb}$ and $\mathrm{FV}_{Unamb}$ themselves are highly aligned: their cosine similarity typically lies well above $0.6$, and often above $0.8$, indicating that ambiguous and unambiguous prompts largely inhabit a shared task subspace, differing only by a modest offset that reflects the presence or absence of example ambiguity. Within this shared subspace, we find that contextualization tends to move the contextualized FV closer to \emph{both} prototypes ($\mathrm{FV}_{Amb}$ and $\mathrm{FV}_{Unamb}$) in terms of cosine similarity, but with a greater and more consistent improvement toward the all-unambiguous prototype. Concretely, for most datasets and shot counts we observe that
\[
\cos(\mathrm{FV}_{Unamb}, \mathrm{FV}_{\mathrm{ctx}}) \gtrsim \cos(\mathrm{FV}_{Amb}, \mathrm{FV}_{\mathrm{ctx}})
\]

We stress that these geometric trends are consistent with, but not by itself sufficient to explain, the causal shifts in FV injection performance. A trend observed in dataset-averaged FVs may not uniformly translate to individual prompts, where an ``ambiguous offset" might still play a functional role. In summary, these early-stage geometric analyses hint at a coherent but nuanced picture: Value contextualization appears to perform a moderate rotation of the task vector--further investigation is needed to determine how these geometric shifts directly translate into the ``help vs harm" dual effects noted in our main analysis.

\begin{figure*}[p] %
    \centering
    \begin{tikzpicture}
        \newcommand{\figwidth}{0.225\textwidth} 
        \newcommand{\hgap}{0.225\textwidth}
        \newcommand{\vpitch}{3.635cm}

        \tikzset{
            imgnode/.style={anchor=north, inner sep=0},
            lblnode/.style={anchor=south, font=\ttfamily\footnotesize, yshift=1pt}
        }

        \newcommand{\modelrow}[3]{
            \node[lblnode] (title) at (0, #1) {#3};

            \node[imgnode] at ($ (0, #1) + (-1.5*\hgap, 0) $) 
                {\includegraphics[width=\figwidth]{figures/#2/v_transformation_cos_sim.pdf}};

            \node[imgnode] at ($ (0, #1) + (-0.5*\hgap, 0) $) 
                {\includegraphics[width=\figwidth]{figures/#2/v_transformation_ambig_cos_sim_3shot.pdf}};

            \node[imgnode] at ($ (0, #1) + (0.5*\hgap, 0) $) 
                {\includegraphics[width=\figwidth]{figures/#2/v_transformation_ambig_cos_sim_5shot.pdf}};

            \node[imgnode] at ($ (0, #1) + (1.5*\hgap, 0) $) 
                {\includegraphics[width=\figwidth]{figures/#2/v_transformation_ambig_cos_sim_10shot.pdf}};
        }

        \modelrow{0}{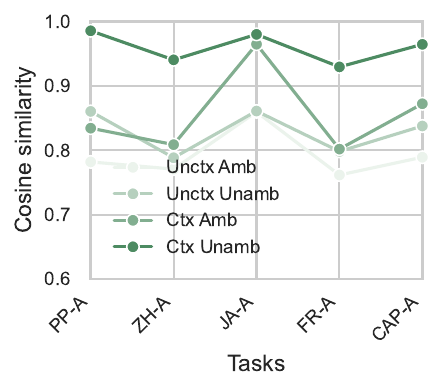}{google/gemma-2-2b}
        \modelrow{-\vpitch}{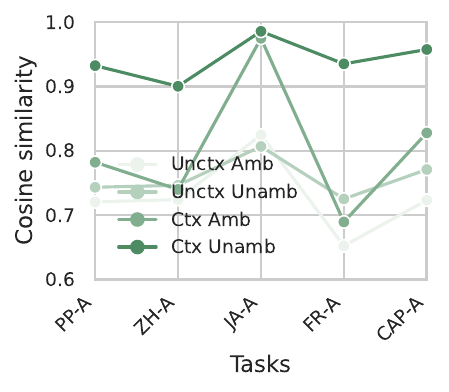}{google/gemma-2-9b}
        \modelrow{-2*\vpitch}{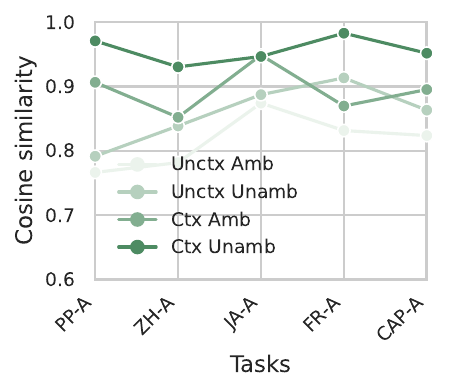}{google/gemma-2-27b}
        \modelrow{-3*\vpitch}{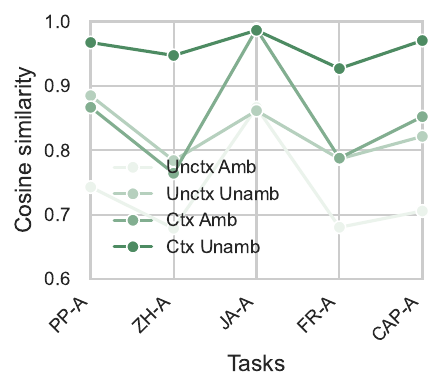}{meta-llama/Llama-3.2-1B}
        \modelrow{-4*\vpitch}{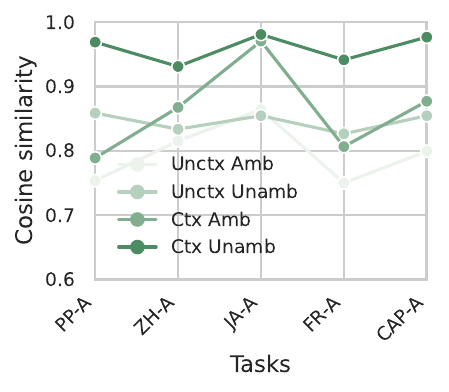}{meta-llama/Llama-3.2-3B}
        \modelrow{-5*\vpitch}{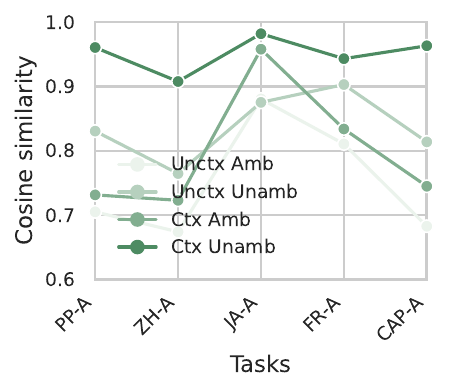}{meta-llama/Llama-3.1-8B-Instruct}

    \end{tikzpicture}
    \vspace{-1.3em} %
    \caption{
        Each row represents a model, with columns showing: 
        \textbf{(1)} The cosine similarity between QK fixed uncontextualized and contextualized FVs ($\cos(FV_{unc}, FV_{ctx})$), demonstrating that Value contextualization refines the task vector within its existing subspace; 
        \textbf{(2--4)} present $2 \times 2$ cosine similarity matrices for ambiguous tasks (at 3, 5, and 10 shots), illustrating the cross-alignment between the $\{FV_{unc}, FV_{ctx}\}$ pair and the prototype FVs $\{FV_{Amb}, FV_{Unamb}\}$. $FV_{ctx}$ and $FV_{Unamb}$ have the highest alignment.
    }
    \label{fig:v_transformation_full_compact}
\end{figure*}

\clearpage
\section{Implications for Theoretical Analysis of ICL}\label{sec:theory-related-discussion}\label{sec:theory-results}

\paragraph{Data and Input Format}
We assume two tasks $F_A, F_B : \mathcal{X} \rightarrow \mathcal{Y}$, where $\mathcal{X}$ is a finite discrete space and $\mathcal{Y} := \{-1,1\}$. We assume each few-shot example is provided as $(x,y) \in \mathcal{X} \times \mathcal{Y}$ provided to the transformer in a single position. An example $(x,y)$ is \emph{ambiguous} if and only if $F_A(x)=F_B(x)=y$.
An input prompt $p$ consists of a sequence $(x_1, y_1) \dots (x_{\promptlength}, y_{\promptlength}) x_{\promptlength+1}$, where $F_t(x_i) = y_i$ where $t$ is either $A$ or $B$ (jointly across all $i$).
In-context learning is modeled as identifying the task (either $F_A$ or $F_B$) from examples, and applying it to a query $x_{\promptlength+1}$ to output $F_t(x_{\promptlength+1})$.

We note that the choice of a discrete set of tasks (here, two tasks) on a discrete input space is deliberately different from what is most commonly studied in the theoretical literature on ICL, which typically considers \emph{continuous} task families such as linear regression.
We believe that a discrete model is especially relevant to the tasks in our experiments, and to real-world ICL more broadly.

\paragraph{Model}
We consider a one-layer one-head softmax model.
There is one MLP before attention (applying to each individual example), and one MLP after attention (applying to the residual connection from $x_{\promptlength+1}$ and the output of the attention head):
\begin{equation}\label{eq:theory-output}
    \phi\left(x_{\promptlength+1}, \sum_{i=1}^{\promptlength} a_i \cdot \psi(x_i, y_i)\right)
\end{equation}
where $\psi : \mathcal{X} \times \mathcal{Y} \rightarrow \mathbb{R}^d$, $\phi : \mathcal{X} \times \mathbb{R}^d \rightarrow \mathbb{R}$ are MLPs, which we model as nonparametric $C^1$-continuous functions for tractability; and
\begin{equation}
    a_i = \frac{\exp(\alpha(x_{\promptlength+1}, x_i, y_i))}{\sum_{j=1}^{\promptlength} \exp(\alpha(x_{\promptlength+1}, x_j, y_j))}
\end{equation}
are the attention weights, where $\alpha : \mathcal{X} \times \mathcal{X} \times \mathcal{Y} \rightarrow \mathbb{R} \cup \{-\infty, \infty\}$ describes the attention logits.\footnote{Allowing infinite attention logits is an idealizing assumption also made in some other work \cite{zhou2024algorithms}; it ensures the existence of an actual global optimum. An alternative would be to restrict to finite values and consider sequences approximating the optimum.}
MLPs and attention weights are modeled as general functions $\phi, \psi, \alpha$, which will be theoretically more tractable than neural parameterizations. %

Importantly, unlike much theoretical work on ICL that relies on linear transformers \citep[e.g.][]{von2023transformers, max2024linear, mahankali2024one}, our model maintains the nonlinearity of transformers, with both nonlinear MLPs and general softmax attention, not linear attention.

\paragraph{Objective Function}
The loss function is mean-squared error between (\ref{eq:theory-output}) and the task output, with regularizers for the norm of $\psi$ and the smoothness of $\phi$: 
\begin{equation}\label{eq:loss-appendix}
    \mathcal{L} := \underbrace{\mathbb{E}[(\phi-F(x_{\promptlength+1}))^2]}_{Prediction\ Error} + \tau \underbrace{\max_{x,y \in \mathcal{X} \times \mathcal{Y}} \|\psi(x,y)\|_2^2}_{Norm\ of\ \psi} + \eta \max_{x_{\promptlength+1} \in \mathcal{X}} \sup_{x \in \mathbb{R}^d} \underbrace{\|\nabla_2 \phi(x_{\promptlength+1},x)\|_{2}}_{Smoothness\ of\ \phi} 
\end{equation}
where $\tau, \eta > 0$ can be arbitrarily chosen, and the expectation is over the data distribution.
We do not regularize smoothness in arguments from $\mathcal{X}, \mathcal{Y}$ as we take these to be discrete. Hence, there is no smoothness regularization for $\psi$. %

\paragraph{Result}
For this model, we will argue the following structural properties for solutions:
\begin{theorem}\label{theory-result}
Assume some nonzero number of $x \in \mathcal{X}$ is unambiguous.
Take $\promptlength > 1$, and any $\tau, \eta > 0$.
Consider the uniform distribution over $k$-shot prompts for all $k=1, \dots, \promptlength$.
    Any global optimum $(\psi, \phi, \alpha)$ of the loss $\mathcal{L}$ has, assuming that it achieves better-than-chance performance, the following structure:
    \begin{enumerate}
    \item \textcolor{black}{For unambiguous examples, $\psi(x,y)$ only depends on the task. The attention output $\sum_{i=1}^\promptlength a_i \cdot \psi(x_i, y_i)$ can be viewed as a FV, in that $\phi(x_{\promptlength+1}, \cdot)$ evaluates the task for other queries $x_{\promptlength+1}$ as well.}
        \item Whenever a prompt has both ambiguous and unambiguous examples, attention is lower on the ambiguous examples than the unambiguous examples.
    \end{enumerate}
\end{theorem}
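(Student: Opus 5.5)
The plan is to characterize global optima by a lower bound on the loss $\mathcal{L}$ that is matched by an explicit construction, and then to read off structure from the equality conditions. First I will build a candidate solution. Fix two vectors $u_A, u_B \in \mathbb{R}^d$ of norm $r$ (to be optimized). Set $\psi(x,y)=u_t$ when $(x,y)$ is unambiguous and consistent with task $t$, and leave $\psi$ on ambiguous pairs arbitrary, say $0$. Take $\alpha=+\infty$ on unambiguous examples and finite on ambiguous ones. Let $\phi(x_{n+1},\cdot)$ interpolate between $F_A(x_{n+1})$ near $u_A$ and $F_B(x_{n+1})$ near $u_B$, with Lipschitz constant of order $\|u_A-u_B\|^{-1}$ whenever $F_A(x_{n+1})\neq F_B(x_{n+1})$. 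Whenever at least one unambiguous example is present, the attention output is exactly $u_t$, so the prediction is exact. This gives an upper bound on the optimal loss, namely the irreducible error on all-ambiguous prompts plus $\tau r^2 + \eta\cdot O(1/r)$, optimized over $r$.

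Second, I will prove claim~1 by an exchange argument. Suppose some unambiguous $\psi(x,y)$ differs from another unambiguous value for the same task. Because prompts of every length $k\ge1$ occur with positive probability, the one-shot prompts $(x,y)\,x_{n+1}$ force $\phi(x_{n+1},\psi(x,y))$ to be correct for every query on which the tasks disagree. By the smoothness penalty, each such point must then lie at distance at least about $2/\mathrm{Lip}(\phi)$ from every point representing the other task. I would argue that collapsing all task-$t$ unambiguous codes to a single optimally placed point $u_t$ (for instance a suitable extreme point or average) does not increase the prediction error. It also weakly decreases the two regularizers, and strictly decreases them unless the codes already coincide. This strictness step is where I expect the main difficulty: it needs convexity of $\max\|\psi\|^2$ and a careful handling of the sup-gradient penalty on $\phi$. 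The FV reading of claim~1 then follows: the attention output on unambiguous prompts equals $u_t$ independently of $x_{n+1}$, and $\phi(x_{n+1},u_t)=F_t(x_{n+1})$ for all queries, using the better-than-chance assumption to rule out the degenerate solution $u_A=u_B$.

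Third, for claim~2, take a mixed prompt. Ambiguous codes $\psi(x,y)$ are prompt-task-agnostic: the same pair appears under both tasks. Any attention mass $a_j>0$ on them therefore pulls the attention output from $u_t$ toward a task-independent point. I will show this strictly increases the prediction error on mixed prompts, or else forces larger $\|u_A-u_B\|$ (more norm cost) or a steeper $\phi$ (more smoothness cost). Hence optimality requires $\alpha$ on ambiguous examples to be strictly below $\alpha$ on unambiguous ones. With infinite logits allowed, the gap is attained by the construction, and any optimum must respect the same strict inequality; otherwise reweighting the logits yields a strict improvement. The subtle point is again strictness: I need that an ambiguous code cannot coincide with $u_t$ for both tasks simultaneously. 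This holds because $u_A\neq u_B$, which is guaranteed by better-than-chance performance.
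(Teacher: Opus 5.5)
There is a genuine gap, and it sits at the heart of your argument for claim~1. Your opening plan (a lower bound matched by a construction, then reading structure off the equality cases) is the paper's plan, but you never write the lower bound down. Instead you assume that at a global optimum the one-shot prompts force $\phi(x_{n+1},\psi(x,y))=F_t(x_{n+1})$ exactly. Both the separation ``at least $2/\mathrm{Lip}(\phi)$'' and your FV conclusion $\phi(x_{n+1},u_t)=F_t(x_{n+1})$ rest on this premise.

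With $\tau,\eta>0$, exact fitting is never optimal. Shrinking $\phi(x_{n+1},\cdot)$ by a factor $1-\epsilon$ on unambiguous queries lowers the smoothness term by $\eta\epsilon L$ at first order, while the squared error of an exactly fitting model rises only at order $\epsilon^2$. Indeed the paper's optimum has $L\sqrt S<1$, so predictions are $\pm L\sqrt S$, not $\pm 1$. For the same reason your construction is only an upper bound, not a candidate optimizer.

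The collapse/exchange step then has nothing to stand on, for three reasons. The best collapse point may depend on $x_{n+1}$. Multi-shot prompts produce convex combinations that are not code points. And if the task-$t$ codes already all have norm $\sqrt S$, collapsing leaves $\max\|\psi\|^2$ unchanged and need not lower $L$. So the strictness you are after cannot come from the regularizers; it has to come from the prediction error at a \emph{fixed} budget $(S,L)$.

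The missing idea is the paper's bound in terms of $S=\max\|\psi\|_2^2$ and $L=\max_{x_{n+1}}\sup\|\nabla_2\phi\|_2$. It is built in three steps:
\begin{enumerate}
\item Jensen over prompts of a fixed task and query replaces $\phi$ by its average $\phi_t(x_{n+1})$.
\item The chain $|\phi_A-\phi_B|\le L\,\mathbb{E}\|\psi_{p_A}-\psi_{p_B}\|\le L\,\mathbb{E}(\|\psi_{p_A}\|+\|\psi_{p_B}\|)\le 2L\sqrt S$ bounds the gap between the two tasks.
\item The identity $\tfrac12[(u-1)^2+(v+1)^2]=m^2+(s-1)^2$ turns this into $\mathcal{L}\ge\delta(1-L\sqrt S)_+^2+\tau S+\eta L$.
\end{enumerate}
An antipodal construction attains this bound for every $(S,L)$: take $\psi_B=-\psi_A$, $\phi$ linear with slope $L$, and ambiguous codes equal to $0$ receiving zero attention next to unambiguous ones. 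Hence every optimum must make each inequality an equality, and the equality cases do what your exchange was meant to do. Equality in the triangle step makes $\psi_{p_A},\psi_{p_B}$ negatively collinear. Equality in the norm step makes $\|\psi_{p_t}\|_2=\sqrt S$. Together these force the aggregate to be a single vector $u_t$ for all prompts of task $t$, mixed ones included, and one-shot prompts pass this to the individual codes.

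Your claim~2 argument also has a slip. You need an ambiguous code to equal \emph{neither} $u_A$ nor $u_B$: if it equalled $u_A$, attention on it in task-$A$ mixed prompts would cost nothing. The fact $u_A\neq u_B$ only rules out equality with both at once. The paper obtains the needed fact from one-shot fully ambiguous prompts. There, optimality forces $\phi(x_{n+1},\psi(x,y))$ to be the midpoint $0$ on queries where the tasks disagree, whereas $\phi(x_{n+1},u_t)=\pm L\sqrt S\neq 0$. Prompt-invariance of the aggregate then forces zero attention on every ambiguous example in a mixed prompt.
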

Here, by ``better than chance performance'' we rule out solutions where regularization is so strong that the optimal solution provides a constant answer $\phi(\cdot)$.

Note that we are not arguing that the setting, assumptions, and proof strategy suggested here are necessarily the ideal theoretical model of ICL.
Rather, the purpose of this discussion is to outline potential directions for future theoretical work on ICL inspired by our mechanistic findings.

\begin{proof}
We now argue Theorem~\ref{theory-result}.
While $\mathcal{L}$ is not convex due to the unconstrained nature of $\phi$, we will use a convexity-based argument.
\textcolor{black}{The basic idea is that balancing the norm of $\psi$ and the smoothness of $\phi$ is best achieved if $\psi$ provides opposing vectors for the two tasks and $\phi$ interpolates linearly between them. This forces unambiguous examples to map to task-specific vectors; ambiguous examples are then shown not to realize those task vectors, and prompt-invariance forces zero attention on them whenever unambiguous examples are also present.}

We now outline this reasoning more formally. We use the variables $p_A$ for prompts for task $A$ (with at least one unambiguous example); $p_B$ for prompts for task $B$ (with at least one unambiguous example); $p_U$ for fully ambiguous prompts.

For a prompt $p = x_1 y_1 \dots x_{\promptlength} y_\promptlength x_{\promptlength+1}$, %
\begin{equation}
    \psi_{p_t} := \sum_{i=1}^{\promptlength} a_i \cdot \psi(x_i, y_i)
\end{equation}
For each $t \in \{A, B\}$, we define $\phi_t(x_{\promptlength+1})$ to be the average of all outputs of $\phi$:
\begin{align*}
    \phi_t(x_{\promptlength+1}) &:= \mathbb{E}_{p_t} \left[\phi(x_{\promptlength+1}, \psi_{p_t})\right] %
\end{align*}
where the expectation runs over prompts for the task $t$ with at least one unambiguous example.
\textcolor{black}{We similarly define $\phi_{AB}$ in terms of all-ambiguous prompts.}

We now define a lower bound on the loss which will turn out to be tight if and only the model satisfies the desired structural properties.

First, we have for each $x_{\promptlength+1}$:
\begin{equation}\label{eq:jensen-error}
    \mathbb{E}_{p_t}[(\phi(x_{\promptlength+1}, \psi_{p_t}) - F_t(x_{\promptlength+1}))^2] \geq (\mathbb{E}_{p_t}[\phi(x_{\promptlength+1}, \psi_{p_t})] - F_t(x_{\promptlength+1}))^2 = (\phi_t(x_{\promptlength+1}) - F_t(x_{\promptlength+1}))^2
\end{equation}
due to Jensen's inequality.
This bound is tight if and only if $\phi(x_{\promptlength+1},\psi_{p_t})$ is independent of the prompt $p_t$ when fixing $x_{\promptlength+1}$ and the task $t$.

Take
\begin{equation}
    L := \max_{x_{\promptlength+1} \in \mathcal{X}} \sup_{x \in \mathbb{R}^d} \underbrace{\|\nabla_2 \phi(x_{\promptlength+1},x)\|_{2}}_{Smoothness\ of\ \phi} 
\end{equation}

Furthermore, by averaging over all pairs of unambiguous prompts with matching ICL queries, 
\begin{align*}
    |\phi_A(x_{\promptlength+1})-\phi_B(x_{\promptlength+1})| =&  |\mathbb{E}_{p_A, p_B}(\phi(x_{\promptlength+1},\psi_{p_A})-\phi(x_{\promptlength+1},\psi_{p_B}))| \\
    \leq & L \cdot \mathbb{E}_{p_A, p_B} \|\psi_{p_A} - \psi_{p_B}\| \\
    \leq & L \cdot \mathbb{E}_{p_A, p_B} \left(\|\psi_{p_A}\|_2 + \|\psi_{p_B}\|_2\right) \\
    = & L \cdot \left(\mathbb{E}_{p_A}  \|\psi_{p_A}\|_2 + \mathbb{E}_{p_B} \|\psi_{p_B}\|_2\right) \\
    \leq & 2 \cdot L \cdot \left( \max_{x,y} \|\psi(x,y)\|_2 \right) 
    \end{align*}
As a consequence,
\begin{equation}\label{eq:jensen-reg}
    \frac{|\phi_A(x_{\promptlength+1})-\phi_B(x_{\promptlength+1})|}{2} \leq \max_{x,y} \|\psi(x,y)\|_2 \cdot L
\end{equation}
for each $x_{\promptlength+1}$.

Define $S := \max_{x,y} \|\psi(x,y)\|_2^2$.
\textcolor{black}{For any unambiguous ICL query $x_{\promptlength+1}$, we have $\{F_A(x_{\promptlength+1}),F_B(x_{\promptlength+1})\}=\{-1,1\}$. Write
\[
u:=\phi_A(x_{\promptlength+1}), \ \ \ \ v:=\phi_B(x_{\promptlength+1}),
\]
and, without loss of generality, assume $F_A(x_{\promptlength+1})=1$ and $F_B(x_{\promptlength+1})=-1$. Define
\[
\; m:=\frac{u+v}{2}, \ \ \ \ s:=\frac{u-v}{2}.
\]
Then
\[
\frac{(u-1)^2+(v+1)^2}{2} = m^2 + (s-1)^2.
\]
By (\ref{eq:jensen-reg}), we have $|s| \leq L\sqrt{S}$. Therefore,
\[
\frac{(u-1)^2+(v+1)^2}{2} \geq \inf_{|s|\leq L\sqrt{S}} \left( m^2 + (s-1)^2 \right) = (1-L\sqrt{S})_+^2.
\]
Indeed, the infimum is attained at $m=0$ and at the feasible value of $s$ closest to $1$. Hence, for every unambiguous ICL query,
\[
\frac{(\phi_A(x_{\promptlength+1})-F_A(x_{\promptlength+1}))^2+(\phi_B(x_{\promptlength+1})-F_B(x_{\promptlength+1}))^2}{2}
\geq (1-L\sqrt{S})_+^2.
\]
Averaging over queries over the event $F_A(x_{\promptlength+1}) \neq F_B(x_{\promptlength+1})$, whose probability is $\delta>0$, gives the following lower bound when $L \neq 0$:}
\begin{equation}\label{eq:lower-bound}
\begin{aligned}
    \mathcal{L} &= \mathbb{E}_{p_t}[(\phi(x_{\promptlength+1}, \psi_{p_t})-F_t(x_{\promptlength+1}))^2] + \tau S + \eta L \\
    &\geq \mathbb{E}[(\phi_t(x_{\promptlength+1}) - F_t(x_{\promptlength+1}))^2] + \tau S + \eta L \\
    &\geq \delta (1-L \sqrt{S})_+^2 + \tau S + \eta L \\
    \end{aligned}
\end{equation}
\textcolor{black}{where $\delta > 0$ is the probability, under the data distribution, that $F_A(x_{\promptlength+1}) \neq F_B(x_{\promptlength+1})$.}

We thus have a lower-bound on the loss function in the $L \neq 0$ case, which depends only on $L, S$.
\textcolor{black}{We note that, if $L=0$, then the model output is independent of the second argument of $\phi$ and therefore cannot use the few-shot examples. This case is excluded by the assumption that we are considering a global optimum with above-chance performance.}

Assuming $L \neq 0$, we are looking for the following structural features in the model:
\begin{enumerate}
\item \textcolor{black}{for unambiguous examples, $\psi(x_i,y_i)$ depends only on the task}
    \item $\psi_{p_A} = -\psi_{p_B}$
    \item $\phi$ is linear on the line between $\psi_{p_A}$ and $\psi_{p_B}$ 
    \item \textcolor{black}{in prompts containing both ambiguous and unambiguous examples, attention is zero on the ambiguous examples}
    \end{enumerate}
    We want to show that any global optimum that has above-chance performance must satisfy these features.
    
First, \textcolor{black}{for any choice of $S := \max_{x,y} \|\psi(x,y)\|_2^2$ and $L$, there is a model satisfying 1--4 that makes (\ref{eq:jensen-error}), (\ref{eq:jensen-reg}), and (\ref{eq:lower-bound}) tight.}
Take $\psi_A$ to be any vector of magnitude $\sqrt{S}$; $\psi_B := -\psi_A$; $\psi_{U} = 0$.
Let $\phi(x_{\promptlength+1}, \psi)$ be linear in $\psi$ with slope $\pm L$ when $F_A(x_{\promptlength+1}) \neq F_B(x_{\promptlength+1})$, and constant equal to $F_A(x_{\promptlength+1})$ when it equals $F_B(x_{\promptlength+1})$. 
Make attention uniformly nonzero on the unambiguous examples, and zero on ambiguous examples. %
 \textcolor{black}{This construction turns Ineqs.~\ref{eq:jensen-error}, \ref{eq:jensen-reg}, \ref{eq:lower-bound} into equalities.}
As a consequence, any model for which the bound (\ref{eq:lower-bound}) is not an equality cannot be a global optimizer of the loss, because there would be a model with the same $S, L$ and a lower loss.

Second, consider any global optimum.
By the previous considerations, this configuration must make the bound tight, i.e., turn both inequalities (\ref{eq:jensen-error}, \ref{eq:jensen-reg}) into equalities.
First, to make (\ref{eq:jensen-error}) tight, $\phi(x_{\promptlength+1},\psi_{p_t})$ must be independent of the examples beyond the task $t$ and the ICL query $x_{\promptlength+1}$.
Second, we need to make the chain of inequalities leading to (\ref{eq:jensen-reg}) tight for each $x_{\promptlength+1}$.
One possibility is that $L = 0$, that is, $\phi(x_{\promptlength+1}, \psi_{p_t})$ is independent of the task for any $x_{\promptlength+1}$.
The other possibility is that $L \neq 0$. \textcolor{black}{We now spell out why tightness forces $\psi_{p_A}$ and $\psi_{p_B}$ to be prompt-independent. Write
\[
Z_{p_A,p_B}:=\phi(x_{\promptlength+1},\psi_{p_A})-\phi(x_{\promptlength+1},\psi_{p_B})
\]
and $M := \max_{x,y}\|\psi(x,y)\|_2$. Since prompts are sampled uniformly from a finite set, equality in the chain of inequalities leading to (\ref{eq:jensen-reg}) implies
\[
| \mathbb{E} Z_{p_A,p_B} | = \mathbb{E}|Z_{p_A,p_B}| = L \cdot \mathbb{E}_{p_A, p_B} \|\psi_{p_A} - \psi_{p_B}\| = L \cdot \mathbb{E}_{p_A, p_B} \left(\|\psi_{p_A}\|_2 + \|\psi_{p_B}\|_2\right) = 2LM.
\]
Equality in the first step implies that $Z_{p_A,p_B}$ has a constant sign for every pair $(p_A,p_B)$ in the support. For the remaining steps, the slacks are pointwise nonnegative, so equality of expectations implies pointwise equality for every pair in the support. In particular,
\[
\|\psi_{p_A}-\psi_{p_B}\|_2=\|\psi_{p_A}\|_2+\|\psi_{p_B}\|_2
\]
for every such pair, so $\psi_{p_A}$ and $\psi_{p_B}$ are negatively collinear. Also, because $\|\psi_{p_A}\|_2\leq M$ and $\|\psi_{p_B}\|_2\leq M$, equality in
\[
\mathbb{E}_{p_A}\|\psi_{p_A}\|_2+\mathbb{E}_{p_B}\|\psi_{p_B}\|_2 \leq 2M
\]
implies $\|\psi_{p_A}\|_2=M$ for every $p_A$ and $\|\psi_{p_B}\|_2=M$ for every $p_B$. Now fix one prompt $p_B^\star$. For any two prompts $p_A,p_A'$, both $\psi_{p_A}$ and $\psi_{p_A'}$ are negatively collinear with $\psi_{p_B^\star}$, hence positively collinear with each other; since both have norm $M$, they must be equal. Thus $\psi_{p_A}$ is the same vector for all $p_A$. The same argument, fixing one prompt $p_A^\star$, shows that $\psi_{p_B}$ is the same vector for all $p_B$. Therefore, for each fixed ICL query $x_{\promptlength+1}$, the aggregate vectors $\psi_{p_A}$ and $\psi_{p_B}$ depend only on the task and not otherwise on the prompt.} %
Furthermore, to make the Lipschitz bound tight,
\begin{equation}
    \textcolor{black}{|\phi(x_{\promptlength+1},\psi_{p_A})-\phi(x_{\promptlength+1},\psi_{p_B})| = L \cdot \|\psi_{p_A} - \psi_{p_B}\|}
\end{equation}
which means that $\phi(x_{\promptlength+1},\cdot)$ must be linear between $\psi_{p_A}$ and $\psi_{p_B}$.

So far, we have found that $\psi_{p_A}$ and $\psi_{p_B}$ depend on the task but not the prompt beyond $x_{\promptlength+1}$.
By considering one-shot prompts, this immediately entails that all unambiguous examples $(x_i, y_i)$ have $\psi(x_i, y_i)$ that only depends on the task.
\textcolor{black}{Thus, when all examples are unambiguous, $\psi_{p_t}$ depends only on the task. Let $u_A$ and $u_B$ be the corresponding aggregate vectors for tasks $A$ and $B$ for a fixed ICL query $x_{\promptlength+1}$. By the argument above, these are opposite nonzero vectors, and $\phi(x_{\promptlength+1},\cdot)$ is linear on the segment between them.}

\textcolor{black}{Now consider a fully ambiguous prompt. Conditioned on such a prompt, the underlying task is equally likely to be $A$ or $B$, so the Bayes-optimal prediction is the midpoint}
\[
\textcolor{black}{\frac{F_A(x_{\promptlength+1})+F_B(x_{\promptlength+1})}{2}.}
\]
\textcolor{black}{By the linearity of $\phi(x_{\promptlength+1},\cdot)$ on the segment between $u_A$ and $u_B$, this midpoint output is attained at the midpoint of that representation segment, namely at}
\[
\textcolor{black}{\frac{u_A+u_B}{2}=0.}
\]
\textcolor{black}{Because the prompt distribution includes $1$-shot prompts, we may take a prompt consisting of a single ambiguous example $(x,y)$ together with the ICL query. The sole example then receives attention weight $1$, so the aggregate is exactly $\psi(x,y)$. Therefore, for an unambiguous ICL query $x_{\promptlength+1}$,}
\[
\textcolor{black}{\phi(x_{\promptlength+1},\psi(x,y))=\frac{F_A(x_{\promptlength+1})+F_B(x_{\promptlength+1})}{2}.}
\]
\textcolor{black}{In particular, when $F_A(x_{\promptlength+1})\neq F_B(x_{\promptlength+1})$, this value is $0$. Hence an ambiguous example cannot satisfy $\psi(x,y)=u_A$ or $\psi(x,y)=u_B$, because those vectors map to the task-specific outputs rather than their midpoint.}

\textcolor{black}{For theorem item~2, start from the prompt obtained by deleting all ambiguous examples while keeping the same task and ICL query $x_{\promptlength+1}$. By prompt-independence, this reduced prompt has the same aggregate as the original mixed prompt. Since every remaining unambiguous example has vector $u_t$, the reduced-prompt aggregate is exactly $u_t$. Now add the ambiguous examples back one by one. After each addition the aggregate must still equal $u_t$, but the new ambiguous example has $\psi(x,y)\neq u_t$. Under a softmax convex combination, this is only possible if the new example receives attention weight zero. Repeating this for each ambiguous example shows that every ambiguous example in a mixed prompt receives zero attention. Hence attention is strictly lower on ambiguous examples than on unambiguous examples in any mixed prompt.}

\textcolor{black}{Overall, we have shown the structural features listed above must hold for any global optimum that has above-chance performance.}

We further note that, for any choice of $\tau, \eta$ there must be a minimum at finite $\psi$ (and no minimum at infinite $\psi$) because the loss increases when taking $\psi$ to infinity.

Taken together, we conclude Theorem~\ref{theory-result}.
\end{proof}

\end{document}